\newmdtheoremenv{program}{Program}
\title{Learning Polynomial Transformations}
\author{
    Sitan Chen\thanks{Email: \texttt{sitanc@berkeley.edu}} \\
    UC Berkeley
        \and 
    Jerry Li\thanks{Email: \texttt{jerrl@microsoft.com}} \\
    Microsoft Research
        \and
    Yuanzhi Li\thanks{Email: \texttt{yuanzhil@andrew.cmu.edu}} \\
    CMU
        \and
    Anru R. Zhang\thanks{Email: \texttt{anru.zhang@duke.edu}}\\
    Duke University
}
\newcommand{\calC}{\mathcal{C}}
\newcommand{\op}{\mathsf{op}}
\newcommand{\calG}{\mathcal{G}}
\newcommand{\sym}{\mathsf{sym}}
\newcommand{\mb}[1]{\mathbf{#1}}
\newcommand{\sort}[1]{\overline{\mb{#1}}}
\newcommand{\rchoose}{\binom{r+\omega-1}{\omega}}
\newcommand{\strings}{[r]^{\omega}}
\renewcommand{\epsilon}{\varepsilon}
\newcommand{\gaugedist}{d_{\mathsf{G}}}
\newcommand{\tmp}{\zeta}
\DeclareMathOperator{\sgn}{sgn}
\let\vec=\relax
\DeclareMathOperator{\vec}{vec}
\DeclareMathOperator{\mat}{mat}
\DeclareMathOperator{\ten}{tens}
\newcommand{\gap}{\upsilon}
\newcommand{\epsort}{\epsilon_{\mathsf{ort}}}
\newcommand{\epsmap}{\epsilon_{\mathsf{map}}}
\newcommand{\epsrel}{\epsilon_{\mathsf{id}}}
\newcommand{\epsnorm}{\epsilon_{\mathsf{norm}}}
\newcommand{\epsswap}{\epsilon_{\mathsf{swap}}}
\newcommand{\epsoffdiag}{\epsilon_{\mathsf{offdiag}}}
\newcommand{\epstrueort}{\epsilon^*_{\mathsf{ort}}}
\newcommand{\epsouter}{\epsilon_{\mathsf{out}}}
\newcommand{\epstrueouter}{\epsouter^*}
\newcommand{\epspair}{\epsilon_{\mathsf{pair}}}
\newcommand{\epsgram}{\epsilon_{\mathsf{gram}}}
\newcommand{\crude}{R'}
\newcommand{\symbound}{\Gamma}
\newcommand{\diag}{\mathrm{diag}}
\newcommand{\num}[1]{\#(\mb{#1})}
\renewcommand{\paragraph}{%
  \@startsection{paragraph}{4}%
  {\z@}{1.75ex \@plus 1ex \@minus .2ex}{-1em}%
  {\normalfont\normalsize\bfseries}%
}
\DeclareSymbolFont{matha}{OML}{txmi}{m}{it}
\DeclareMathSymbol{\varv}{\mathord}{matha}{118}
\begin{document}

\pagestyle{empty}
{
  \renewcommand{\thispagestyle}[1]{}
  \maketitle

    \begin{abstract}
        We consider the problem of learning high dimensional polynomial transformations of Gaussians. Given samples of the form $p(x)$, where $x\sim\calN(0,\Id_r)$ is hidden and $p: \R^r \to \R^d$ is a function where every output coordinate is a low-degree polynomial, the goal is to learn the distribution over $p(x)$.
        This problem is natural in its own right, but is also an important special case of learning deep generative models, namely pushforwards of Gaussians under two-layer neural networks with polynomial activations.
        Understanding the learnability of such generative models is crucial to understanding why they perform so well in practice.
        
        Our first main result is a polynomial-time algorithm for learning quadratic transformations of Gaussians in a smoothed setting.
        Our second main result is a polynomial-time algorithm for learning constant-degree polynomial transformations of Gaussian in a smoothed setting, when the rank of the associated tensors is small. In fact our results extend to any rotation-invariant input distribution, not just Gaussian. These are also the first end-to-end guarantees for learning a pushforward under a neural network with more than one layer.
        
        Along the way, we also give the first polynomial-time algorithms with provable guarantees for \emph{tensor ring decomposition}, a popular generalization of tensor decomposition that is used in practice to implicitly store large tensors \cite{zhao2016tensor}.
    \end{abstract}
    \newpage
  \tableofcontents
}

\clearpage
\pagestyle{plain}
\pagenumbering{arabic}



    


\section{Introduction}
In recent years, generative models such as variational auto-encoders  (VAEs)~\cite{kingma2013auto} and generative adversarial networks (GANs)~\cite{goodfellow2014generative} have exploded in popularity in practice as extraordinarily effective ways of modeling real-world data such as the distribution of natural images.
At their heart, such generative models attempt to learn a parametric transformation of a simple and relatively low dimensional distribution---typically chosen to be a standard normal Gaussian---into a complex, high-dimensional one.
The resulting distributions present a very rich family of distributions which dramatically differ in many ways from more classical generative models such as mixture models or graphical models.
However, despite their immense practical impact, very little is known about the learnability of such distributions from a theoretical perspective.

More formally, we consider the following problem.
We are given a class of functions $\mathcal{F}$ from $\R^r$ to $\R^d$, where $r \ll d$, and we are given samples of the form $f(x_1), \ldots, f(x_n)$, where $x_i \sim \calN(0, \Id_r)$, and $f$ is an unknown function in $\mathcal{F}$ (note we do not observe $x_1,\ldots,x_n$).
The goal is to output the description of some distribution over $\R^d$ which is close to the distribution of $f(x)$, for $x \sim \calN(0, \Id_r)$.

We will consider arguably the most basic class $\mathcal{F}$, namely functions $f: \R^r \to \R^d$ where each output coordinate is a (homogeneous) polynomial. That is, the main question we study is:
\vspace{-0.3em}
\begin{center}
    {\it When can we efficiently learn a high-dimensional polynomial transformation of a Gaussian?} 
\end{center}
\vspace{-0.3em}
In machine learning terminology, this problem can be stated as follows: when can we learn the pushforward of a one hidden layer neural network with polynomial activations? Note that while ReLU activations are more commonly used in GANs, it has been demonstrated that polynomial activations can also be used to generate images of nontrivial quality \cite[Figure 2]{li2020making}.

Despite the fundamental nature of this question, very little is understood about it.
The only provable results known for this problem~\cite{li2020making} only hold for extremely structured instances, and moreover, they require a conjectured structural result related to the identifiability of a certain tensor decomposition problem (see the discussion above Theorem 2 in~\cite{li2020making}).
To the best of our knowledge, to date, there are no algorithms with end-to-end provable guarantees for learning pushforwards of neural networks with more than one layer in any non-trivial setting. 

\paragraph{Tensor ring decomposition.} We also consider a seemingly unrelated problem known as \emph{tensor ring decomposition} \cite{zhao2016tensor}.
Here, there are (unknown) matrices $Q^*_1, \ldots, Q^*_d \in \R^{r \times r}$, and the goal is to recover them up to trivial symmetries, given estimates for $\Tr (Q^*_a Q^*_b)$ and $\Tr (Q^*_a Q^*_b Q^*_c)$ for all $a, b, c$.
When $\brc{Q^*_a}$ are diagonal, this is equivalent to degree-3 tensor decomposition (Appendix~\ref{app:diagonal}). This problem is thus a natural ``non-commutative'' generalization of tensor decomposition.

Tensor ring decompositions, and related concepts such as hierarchical Tucker rank~\cite{ballani2013black,novikov2014putting} and tensor train decomposition~\cite{oseledets2010tt,oseledets2011tensor}, were first proposed in the condensed matter physics community~\cite{verstraete2004density}, and were later adopted in the neural network community as ways to concisely represent large tensors in a way which still allows for efficient linear algebraic computations~\cite{zhao2016tensor}.
Various heuristics have been proposed for this problem~\cite{zhao2016tensor,khoo2021efficient}, though to date, none of these come with provable guarantees for tensor ring decomposition in any nontrivial regime of parameters, and even in the noiseless setting.
This is in stark contrast to the state of affairs with traditional tensor decomposition, where for many settings, there are many polynomial time algorithms with provable guarantees, see e.g.~\cite{harshman1970foundations,leurgans1993decomposition,barak2015dictionary,ge2015decomposing,ma2016polynomial,hopkins2016fast,hopkins2019robust}.
This begs the natural question:
\vspace{-0.3em}
\begin{center}
    {\it When can we efficiently solve tensor ring decomposition?}
\end{center}
\vspace{-0.3em}

While this is of tremendous interest in its own right, our interest comes from the fact that this is fundamentally related to learning quadratic transformations of Gaussians.
Indeed, recovering the parameters of such a distribution from its moments of degree at most $3$ is exactly equivalent to solving noisy tensor ring decomposition (see Section~\ref{sec:connect})!
Understanding tensor ring decomposition thus seems like a necessary first step towards understanding our central learning problem.

\subsection{Our Contributions}

In this paper, we give the first efficient algorithms for learning high dimensional polynomial transformations of Gaussians, under mild non-degeneracy conditions that we demonstrate are satisfied with negligible failure probability in reasonable smoothed analysis settings.
Along the way, we also provide the first efficient algorithms for tensor ring decomposition under analogous conditions.

\paragraph{Efficient algorithms for quadratic transformations and tensor ring decomposition.}
Our first result is a polynomial time algorithm for learning smoothed (homogeneous) quadratic transformations of Gaussians, in sufficiently high dimensions:
\begin{theorem}[Informal, see Theorem~\ref{thm:main_quadratic}]
    \label{thm:main_quadratic_informal}
    For any $d\in\mathbb{N}$ sufficiently large and any $\epsilon > 0$, $1/\poly(d)$-smoothed quadratic transformations of Gaussian with input dimension $r = \widetilde{O}(\sqrt{d})$ are learnable (both in parameter distance and Wasserstein distance) to error $\epsilon$ in $\poly(r,1/\epsilon)\cdot d$ time and $\poly(r,1/\epsilon)$ samples with probability at least $1 - \exp(-\poly(r))$ over the smoothing.
\end{theorem}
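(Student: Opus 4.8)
We may assume each output coordinate is $p_a(x) = x^\top Q^*_a x$ for a symmetric $Q^*_a \in \R^{r\times r}$, and (after a known rescaling) that the $Q^*_a$ are normalized. From $\poly(r,1/\epsilon_0)$ samples we form empirical estimates of the degree $\le 3$ moments of $p(x)$, which concentrate since each $p_a(x)$ is a degree-$2$ polynomial of a Gaussian. By Wick's formula $\mathbb{E}[p_a] = \Tr Q^*_a$, the centered second moment equals $2\,\Tr(Q^*_aQ^*_b)$, and the centered third moment equals $8\,\Tr(Q^*_aQ^*_bQ^*_c)$; hence by inclusion--exclusion we obtain $\widehat G_{ab}\approx\Tr(Q^*_aQ^*_b)$ and $\widehat T_{abc}\approx\Tr(Q^*_aQ^*_bQ^*_c)$ to additive error $\epsilon_0$. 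This is exactly noisy tensor ring decomposition: recover $\{Q^*_a\}$ up to the symmetry $Q^*_a\mapsto OQ^*_aO^\top$, $O\in O(r)$, which is precisely the intrinsic ambiguity of the learning problem since $O$ fixes $\calN(0,\Id_r)$. Conversely (Section~\ref{sec:connect}), recovering the $Q^*_a$ to small parameter error suffices, which we exploit in Step 4.

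\textbf{Step 2: whitening plus one random Jordan multiplication operator.} Since $r = \widetilde{O}(\sqrt d)$ we have $d \ge \binom{r+1}{2} =: k$, and (Step 3) the coefficient matrix $[\vec(Q^*_a)]_{a\in[d]}\in\R^{d\times k}$ has least singular value $\ge 1/\poly(r)$, so $\rank G = k$. Writing $G = UU^\top$ with $U\in\R^{d\times k}$ gives $Q^*_a = \sum_{i\le k} U_{ai}B_i$ for a basis $\{B_i\}$ of $\Sym^2(\R^r)$ that is orthonormal in the trace inner product, determined up to the action of $O(k)$ on $\{B_i\}$. Contracting $T$ with $U^{+}$ in all three modes yields $\mathcal{T}_{ijk} = \Tr(B_iB_jB_k) = \langle B_i\circ B_j,\,B_k\rangle$, the structure constants (in the basis $\{B_i\}$) of the Jordan algebra $(\Sym^2(\R^r),\circ)$ of symmetric matrices. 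Crucially, this $\mathcal{T}$ has a fixed algebraic meaning no matter what the instance is, so the only instance-dependent nondegeneracy we ever need is the singular-value bound above. Now draw a fresh Gaussian $g$ and form $L := \sum_i g_i\,\mathcal{T}_{i,\cdot,\cdot}$, the matrix of the operator $X\mapsto \tfrac12(BX+XB)$ for $B := \sum_i g_iB_i$; since $\{B_i\}$ is a trace-ONB, $B$ is a (rescaled) $\GOE$ matrix. If $B = \sum_p\lambda_p v_pv_p^\top$ then $L$ has eigenvalues $\{\lambda_p\}_{p\le r}\cup\{\tfrac{\lambda_p+\lambda_q}{2}\}_{p<q}$, with orthonormal eigenvectors $v_pv_p^\top$ and $(v_pv_q^\top+v_qv_p^\top)/\sqrt 2$ respectively (written in $\{B_i\}$-coordinates). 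With probability $\ge 0.9$ (boosted by trying $\poly(r)$ independent $g$'s) all $k$ eigenvalues are pairwise $1/\poly(r)$-separated, by anti-concentration of $\GOE$ spectra. We then (i) eigendecompose $L$; (ii) identify the $r$ ``pure'' eigenvalues $\{\lambda_p\}$ as the unique size-$r$ subset whose $\binom r2$ pairwise averages recover the other eigenvalues; (iii) read off the abstract vectors representing $v_pv_p^\top$ as the corresponding eigenvectors; (iv) fix their global signs using that $\sum_p v_pv_p^\top = I$ is the Jordan identity (computable from $\mathcal{T}$ by solving $L_{\mathbf 1}=\mathrm{Id}$), and fix the off-diagonal signs by triple products $\mathcal{T}(v_1v_p^\top{+}v_pv_1^\top,\,v_1v_q^\top{+}v_qv_1^\top,\,\cdot)=\pm1$. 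This determines a linear isometry $\R^k\to\Sym^2(\R^r)$, i.e.\ the matrices $\{B_i\}$ --- hence $\{Q^*_a\}$ --- up to $O(r)$-conjugation. (For general, non-symmetric tensor ring decomposition one runs the same argument with the associative algebra $\Mat_r$ in place of $\Sym^2(\R^r)$.)

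\textbf{Step 3: smoothed analysis, robustness, and Wasserstein.} In the $1/\poly(d)$-smoothed model $Q^*_a = \overline Q_a + \rho N_a$ with $\rho = 1/\poly(d)$ and $N_a$ i.i.d.\ Gaussian. Since $d = \widetilde\Omega(r^2)\gg k$, standard smoothed lower bounds for least singular values of tall random matrices give $\sigma_{\min}([\vec(Q^*_a)]_a)\ge \rho\sqrt d/\poly(d)\ge 1/\poly(r)$ except with probability $\exp(-\Omega(d))=\exp(-\poly(r))$ --- the sole source of the claimed failure probability. On this event every step is quantitatively stable: $\rank\widehat G$ is read off from the spectral gap of $\widehat G$ at $k$, $\widehat U^{+}$ is well-conditioned, $\widehat{\mathcal T}$ has error $\poly(r)\cdot\epsilon_0$, Davis--Kahan makes the eigendecomposition of $\widehat L$ stable up to the $1/\poly(r)$ gaps, and the discrete reconstruction (pure-eigenvalue identification, sign fixing) is robust because all relevant quantities are $\Omega(1/\poly(r))$-separated; propagating errors gives $\gaugedist(\{\widehat Q_a\},\{Q^*_a\})^2 \le \poly(r)\cdot\epsilon_0$. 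Taking $\epsilon_0 = \epsilon^2/\poly(r)$ uses $\poly(r,1/\epsilon)$ samples and $\poly(r,1/\epsilon)\cdot d$ time. Finally, letting $\widehat p(x) = (x^\top\widehat Q_ax)_a$ and $O$ the minimizer in $\gaugedist$, coupling $x$ with $O^\top x$ (legitimate since $O^\top$ fixes $\calN$) gives $W_2(\widehat p_\#\calN,\,p_\#\calN)\le \|\widehat p - p\circ O^\top\|_{L^2(\calN)} = \big(\sum_a \mathbb{E}_x[(x^\top(\widehat Q_a - OQ^*_aO^\top)x)^2]\big)^{1/2}\le \sqrt{r+2}\;\gaugedist(\{\widehat Q_a\},\{Q^*_a\})\le\epsilon$, yielding both guarantees.

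\textbf{Main obstacle.} The crux is making the algebraic reconstruction of Step 2 robust to $1/\poly$ moment error end-to-end: establishing the $\exp(-\poly(r))$-probability smoothed lower bound on $\sigma_{\min}$ of the $d\times\binom{r+1}{2}$ coefficient matrix (on which \emph{every} later step depends through the whitening), and showing that the combinatorial identification of the pure eigenvalues of the $\GOE$-distributed operator $L$, together with the sign-resolution, survives noise --- this requires simultaneous anti-concentration, at scale $1/\poly(r)$, of all $\Theta(r^4)$ pairwise gaps among $\{\lambda_p\}\cup\{(\lambda_p+\lambda_q)/2\}$, and a careful accounting showing that none of the spectral or discrete steps amplifies the error by more than $\poly(r)$.
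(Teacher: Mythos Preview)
Your proof takes a genuinely different route from the paper. The paper solves tensor ring decomposition via a degree-$O(1)$ Sum-of-Squares relaxation: it introduces program variables $\{Q_a\}$ matching the moment constraints, exhibits an auxiliary $r^2\times r^2$ ``hidden rotation'' variable $U$ sending $Q^*_a$ to $Q_a$, and proves in SoS that $U$ must essentially be the Kronecker square of an $r\times r$ rotation; gauge symmetry is broken by forcing a random linear combination $Q_\lambda$ to be diagonal with sorted entries. Your approach is purely spectral: you whiten using $G$, recognize the contracted third-moment tensor $\mathcal{T}$ as the structure constants of the Jordan algebra $(\Sym^2(\R^r),\circ)$ in an unknown orthonormal basis, and recover that basis by eigendecomposing a single random Jordan-multiplication operator. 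This is more elementary and, once robustified, computationally lighter (no SDP) for the quadratic case. The SoS approach, on the other hand, is what lets the paper extend to higher-degree low-rank transformations (Theorem~\ref{thm:main_push}), where no analogous finite-dimensional algebra structure is available and one must reason about rank preservation in SoS.

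That said, two steps in your write-up need tightening. First, your step (ii)---identifying the $r$ ``pure'' eigenvalues as the size-$r$ subset whose pairwise averages give the rest---is a combinatorial search over $\binom{k}{r}$ subsets with $k=\binom{r+1}{2}$, hence exponential in $r$ as stated. The clean fix is to use the eigenvectors rather than the eigenvalues: for each eigenvector $w$ of $L$, compute $\mathcal{T}(w,w,w)$; this equals $\pm 1$ exactly for the rank-one idempotents $\pm v_pv_p^\top$ and $0$ for the off-diagonal eigenvectors $(v_pv_q^\top+v_qv_p^\top)/\sqrt 2$, with a constant gap that survives noise. Second, your runtime is not $\poly(r,1/\epsilon)\cdot d$: even forming $\widehat G\in\R^{d\times d}$ is $\Omega(d^2)$, and $\widehat T$ is $\Omega(d^3)$. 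The paper achieves the linear-in-$d$ runtime via the reduction in Corollary~\ref{cor:fpt}: solve on the first $d'=\widetilde\Theta(r^2)$ coordinates (which suffice for the smoothed singular-value bound), then extend to the remaining $Q^*_a$ by least-squares against $\widehat G_{a,\cdot}$. Your algorithm can invoke the same trick, but you should say so. Finally, your Wasserstein bound drops a $\sqrt{d}$ factor (the $L^2$ norm sums over $a\in[d]$ while $\gaugedist$ is a max); the paper's formal statement (Theorem~\ref{thm:main_quadratic}) indeed only guarantees Wasserstein error $\epsilon r\sqrt d$, not $\epsilon$.
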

\noindent To the best of our knowledge, this is the first end-to-end provable algorithmic result for learning pushforwards given by a neural networks with more than a single layer (see Section~\ref{sec:related} for further discussion). Note that the condition $r = \widetilde{O}(\sqrt{d})$ here means that the pushforward distribution is supported on a low-dimensional manifold, which is quite natural in practice~\cite{osher2017low}.

Our smoothed model is the standard one in which the instance is given by a small random perturbation of a worst-case instance (see Section~\ref{sec:definitions}).
As with many results in smoothed analysis, our results hold more generally under mild deterministic non-degeneracy conditions.

We complement this result with an information-theoretic lower bound (see Appendix~\ref{app:lbd}), which states that in the worst case, parameter learning for quadratic transformations requires exponentially many samples, even in one dimension.
Combined with cryptographic hardness results for density estimation of worst-case ReLU network transformations of Gaussians \cite{chen2022minimax}, this suggests that some beyond-worst-case assumptions are necessary to obtain efficient algorithms.
Intuitively, our non-degeneracy assumptions give us a ``blessing of dimensionality'' phenomenon which allows us to obtain multiple linearly independent ``views'' of the underlying transformation.

Theorem~\ref{thm:main_quadratic_informal} is based on the following new algorithm for tensor ring decomposition:
\begin{theorem}[Informal, see Theorem~\ref{thm:main_tensorring}]
    \label{thm:main_tensorring_informal}
    For any $d\in\mathbb{N}$ sufficiently large and any $\epsilon > 0$, given a $1 / \poly (d)$-smoothed instance of $\epsilon$-noisy tensor ring decomposition in dimension $r = \widetilde{O}(\sqrt{d})$, there is a polynomial time algorithm which recovers the unknown matrices to error $\poly(\epsilon, r)$ up to trivial symmetries in $\poly(r,1/\epsilon)\cdot d$ time with probability at least $1 - \exp(-\poly(r))$ over the smoothing.
\end{theorem}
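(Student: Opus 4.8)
The plan is a whitening-plus-simultaneous-diagonalization scheme --- a non-commutative analogue of Jennrich's pencil method --- in which the key twist is that the obvious pencil built from the third-order data has $r$-fold degenerate spectrum, and we break the degeneracy using the non-commutativity of the $Q^*_a$ themselves. Collect the vectorizations into $V := [\vec(Q^*_1)\mid\cdots\mid\vec(Q^*_d)]\in\R^{r^2\times d}$ and let $P\in\R^{r^2\times r^2}$ be the fixed symmetric permutation with $P\vec(X)=\vec(X^\top)$. A short computation gives $M_{ab}:=\Tr(Q^*_aQ^*_b)$ as $M = V^\top P V$, and, for any $g\in\R^d$, the contracted cubic matrix $T(g)_{ab}:=\sum_c g_c\,\Tr(Q^*_aQ^*_bQ^*_c)=\Tr(Q^*_aQ^*_bQ^*_g)$ (with $Q^*_g:=\sum_c g_cQ^*_c$) as $T(g) = V^\top(I\otimes (Q^*_g)^\top)\,P\,V$, and likewise $T(g)^\top = V^\top(Q^*_g\otimes I)\,P\,V$. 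In the smoothed (hence generic) regime $d\gtrsim r^2$, $V$ has full row rank $r^2$, so $M$ has rank exactly $r^2$ and its column space $\mathcal{R}\subseteq\R^d$ --- an $r^2$-dimensional subspace, computed as the span of the top $r^2$ singular vectors of the estimate of $M$ --- is all we need. Writing $M^+$ for the rank-$r^2$ truncated pseudoinverse, one checks that, after identifying $\mathcal{R}\cong\R^{r^2}$ via $V^\top$, the operators $T(g)M^+$ and $T(g)^\top M^+$ restricted to $\mathcal{R}$ become exactly $I\otimes(Q^*_g)^\top$ and $Q^*_g\otimes I$. These two families commute; each has $r$-fold degenerate eigenvalues, but their common eigenspaces are one-dimensional, spanned by $f_i\otimes\psi_k$ where $f_1,\dots,f_r$ and $\psi_1,\dots,\psi_r$ are the (mutually dual-normalized) right and left eigenvectors of $Q^*_g$.

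\textbf{The algorithm.} For a random $g$ and a fresh random scalar $\eta$ we diagonalize $\big(T(g)+\eta\,T(g)^\top\big)M^+$ restricted to $\mathcal{R}$: its eigenvalues $\lambda_k+\eta\lambda_i$ (the $\lambda_i$ being the eigenvalues of $Q^*_g$) are distinct whenever the $\lambda_i$ are distinct and $\eta$ avoids a measure-zero set, so we obtain one eigenvector $u_{ik}\in\mathcal{R}\subseteq\R^d$ per pair $(i,k)$. Unwinding the identification shows $u_{ik}$ is proportional to the vector $\big((\widehat Q_a)_{ki}\big)_{a\in[d]}$, where $\widehat Q_a := F^{-1}Q^*_aF$ and $F=[f_1\mid\cdots\mid f_r]$; since $\widehat Q_a$ equals $Q^*_a$ up to the gauge transformation $X\mapsto SXS^{-1}$, it remains only to recover the unknown scalars $s_{ik}$ with $u_{ik}=s_{ik}\big((\widehat Q_a)_{ki}\big)_a$. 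We fix these --- up to the residual rescaling $s_{ik}\mapsto(\alpha_i/\alpha_k)s_{ik}$, which is itself absorbed into the gauge --- by solving the overdetermined \emph{linear} systems $\sum_{i,k}(u_{ik})_a(u_{ki})_b/(s_{ik}s_{ki}) = \Tr(Q^*_aQ^*_b)$ and $\sum_{i,k,l}(u_{ki})_a(u_{lk})_b(u_{il})_c/(s_{ki}s_{lk}s_{il}) = \Tr(Q^*_aQ^*_bQ^*_c)$, which are linear in the reciprocals of the two- and three-cycle products of the $s_{ik}$ and (a sign normalization aside) pin them down up to gauge; then read off $(\widehat Q_a)_{ki}=(u_{ik})_a/s_{ik}$. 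One works over the complex numbers, since $Q^*_g$ generally has complex eigenvalues, and realifies the reconstructed matrices at the end. Finally, to achieve running time $\poly(r,1/\epsilon)\cdot d$ we never form the full cubic array: we first run the above on a random subset $S\subseteq[d]$ of $\widetilde O(r^2)$ coordinates --- whose matrices already span $\Mat_{r\times r}(\R)$ --- to recover $\{\widehat Q_a:a\in S\}$, and then for each remaining $a$ solve one $\poly(r)$-sized linear system in the entries of $\widehat Q_a$ coming from $\{\Tr(Q^*_aQ^*_bQ^*_c):b,c\in S\}$.

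\textbf{Robustness and smoothed analysis.} With $\epsilon$-noisy data one propagates the error through (i) the top-$r^2$ spectral projector and truncated pseudoinverse of $M$, (ii) the robust simultaneous diagonalization, and (iii) the two linear solves. This succeeds provided the following deterministic non-degeneracy conditions hold with $\poly(d)$-bounded magnitudes: $\sigma_{r^2}(V)$, hence $\sigma_{r^2}(M)$, is bounded below; $Q^*_g$ has all pairwise eigenvalue gaps bounded below and eigenvector matrix $F$ well-conditioned (so that for generic $\eta$ the combined operator has a spectral gap and stable eigenvectors); and the two linear systems defining the $s_{ik}$ are well-conditioned. It then remains to check these hold with probability $1-\exp(-\poly(r))$ under $1/\poly(d)$-smoothing: writing $Q^*_a = \overline Q_a + \rho N_a$ with $N_a$ i.i.d.\ Gaussian, each condition becomes an anti-concentration statement for a low-degree polynomial in the $N_a$, provable via Carbery--Wright-type inequalities together with least-singular-value bounds for the relevant structured random matrices, in the spirit of existing smoothed analyses of tensor decomposition.

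\textbf{Main obstacle.} The delicate step is the spectral control of $Q^*_g=\sum_c g_cQ^*_c$ in the smoothed model: we must show that a random Gaussian combination of slightly perturbed worst-case matrices has eigenvalues that are pairwise $1/\poly(d)$-separated and an eigenvector matrix of $\poly(d)$ condition number, except with probability $\exp(-\poly(r))$. The perturbation contributes a summand $\rho\sum_c g_cN_c$, which is $\|g\|$ times a Ginibre matrix, so the question reduces to quantitative eigenvalue repulsion and departure-from-normality bounds for (fixed matrix) $+$ (Ginibre) --- a genuinely random-matrix-theoretic input. A compounding difficulty is that $r^2$ is only a polylogarithmic factor below $d$, so there is little slack: the rank-$r^2$ ``signal'' in the noisy $M$ and $T(g)$ must be cleanly separated from the noise floor, and every error term must be carried through the whole pipeline with only polynomial loss.
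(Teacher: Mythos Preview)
Your approach is a direct spectral algorithm (a non-commutative Jennrich), which is genuinely different from the paper's Sum-of-Squares route. The paper introduces SoS variables $Q_a$, implicitly encodes the $r^2\times r^2$ map $U$ sending each $\vec(Q^*_a)$ to $\vec(Q_a)$, and proves in low-degree SoS that the third-order constraints force $U$ to be (approximately) the Kronecker square of an $r\times r$ rotation; rounding the pseudoexpectation then recovers the parameters. Your pencil method is lighter and more elementary where it applies; the SoS framework buys the paper a uniform template that extends to the higher-degree low-rank setting.

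The main gap is that you overlook that in the paper's problem the $Q^*_a$ are \emph{symmetric} (Definition~2.3 and the smoothed model Definition~2.1). This breaks several of your claims simultaneously: (i) $V$ has rank $\binom{r+1}{2}$, not $r^2$, so $M$ has rank $\binom{r+1}{2}$ and you are projecting onto the wrong subspace; (ii) $T(g)$ is itself symmetric --- for symmetric $A,B,C$, $\Tr(ABC)$ is fully symmetric in its arguments --- so $T(g)=T(g)^\top$ and your $\eta$-trick to break degeneracy does nothing; (iii) $Q^*_g$ is real symmetric, hence has real spectrum and orthonormal eigenvectors, so the complex/dual-eigenvector machinery is unnecessary and the gauge $X\mapsto SXS^{-1}$ you invoke is the wrong one (the paper's gauge is $X\mapsto VXV^\top$ for orthogonal $V$). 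The good news is that the picture actually \emph{simplifies}: on the $\binom{r+1}{2}$-dimensional column space, $T(g)M^+$ acts as the Jordan product $X\mapsto\tfrac12(Q^*_gX+XQ^*_g)$ on symmetric matrices, with eigenvalues $\{\lambda_i+\lambda_j:i\le j\}$ and eigenvectors $v_iv_j^\top+v_jv_i^\top$; generically these eigenvalues are already simple, so a single pencil suffices and the recovered quantities are $(V^\top Q^*_aV)_{ij}$ up to per-pair scales $s_{ij}$. The paper flags exactly this symmetric-matrix subtlety (end of Section~3.1 and Sections~7.3--7.5), which is why their SoS proof detours through the auxiliary matrix $W$.

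Two further points you would need to close. First, the non-degeneracy you require is strictly stronger than the paper's: you need the $\binom{r+1}{2}$ sums $\lambda_i+\lambda_j$ to be pairwise $\poly(1/r)$-separated, not merely the $\lambda_i$ themselves; this is an anti-concentration statement for a higher-degree polynomial in the smoothing noise and is not covered by the eigengap theorem you gesture at. Second, your scale-fixing step is only sketched: after reading off $|s_{ij}|$ from the second-order system and the cycle products $s_{ij}s_{jk}s_{ki}$ from the third-order one, you must still reconstruct the signs of the $s_{ij}$ modulo the residual $\{\pm1\}^r$ gauge $s_{ij}\mapsto\epsilon_i\epsilon_j s_{ij}$, and verify that the linear systems extracting these products are well-conditioned in the smoothed model.
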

\noindent Our algorithms for Theorems~\ref{thm:main_quadratic_informal} and Theorems~\ref{thm:main_tensorring_informal} are based on the Sum-of-Squares (SoS) ``proofs to algorithms'' framework, which in recent years has been applied to solve a number of high-dimensional statistical problems.
However, the design of our algorithm differs quite substantially from prior techniques used within this literature.
As we explain in Section~\ref{sec:overview}, the $\Tr(Q^*_aQ^*_b)$'s in tensor ring decomposition give us the unknown $r \times r$ matrices $Q^*_1, \ldots, Q^*_d$, up to a shared, unknown rotation, but as vectors in $r^2$ dimensions.
The heart of our algorithm is an SoS proof that the only such rotations which can additionally match the $\Tr(Q^*_aQ^*_bQ^*_c)$'s are in fact Kronecker powers of $r \times r$-dimensional rotation matrices.
In other words, up to gauge symmetry in the $r \times r$-dimensional space, the $r^2 \times r^2$-dimensional rotations which respect our constraints are unique, and moreover, SoS witnesses this fact.
Consequently, this implies that we can search for these rotations using an SoS program, and the result can be easily rounded to solve the overall problem.

\paragraph{Efficient algorithms for low-rank polynomial transformations.}
For our final result, we turn to polynomial transformations of higher degree. We show that (homogeneous) polynomial transformations of odd constant degree can be learned efficiently, as long as the transformation can be represented using low rank tensors.
Recall that any homogeneous degree $\omega$ polynomial $p: \R^r \to \R$ can be associated with a symmetric tensor $T: \R^r \to (\R^r)^{\otimes \omega}$, so that $p(x) = \iprod{T,x^{\otimes\omega}}$.
We say that a polynomial is rank $\ell$ if the associated tensor has symmetric rank $\ell$, and we say that a polynomial transformation $f: \R^r \to \R^d$ has rank $\ell$, if each output coordinate has rank $\ell$. From the perspective of neural networks, $\ell$ corresponds to the \emph{channels} of the hidden layer per neuron. Our main result here is:
\begin{theorem}[Informal, see Theorem~\ref{thm:main_lowrank}]\label{thm:informal_lowrank}
    There is an absolute constant $c > 0$ such that for any $d\in\mathbb{N}$ sufficiently large and any $\epsilon > 0$, $1/\poly(d)$-smoothed rank-$\ell = O(1)$ transformations of odd degree $\omega = O(1)$ with seed length $r = \widetilde{O}(d^{c/(\omega\ell)})$ are learnable (both in parameter distance and Wasserstein distance) to error $\epsilon$ in $\poly(r,1/\epsilon)\cdot d$ time and $\poly(r,1/\epsilon)$ samples with probability at least $1 - \exp(-\poly(r))$ over the smoothing.
\end{theorem}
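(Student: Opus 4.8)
The plan is to reduce learning a rank-$\ell$, degree-$\omega$ polynomial transformation $f:\R^r\to\R^d$ to a generalized tensor-ring-type decomposition problem, analogous to the reduction in the quadratic case (Section~\ref{sec:connect}), and then solve that decomposition via an SoS program of the same flavor as Theorem~\ref{thm:main_tensorring_informal}. First I would estimate low-degree moments of the observed distribution: since each output coordinate $f_a(x)=\sum_{j=1}^{\ell}\iprod{v^{(a)}_j,x}^{\omega}$ for vectors $v^{(a)}_j\in\R^r$, the moment tensors $\E[f_a(x) f_b(x)]$, $\E[f_a(x)f_b(x)f_c(x)]$, etc., are explicit polynomials in the Gaussian moments and hence in the inner products $\iprod{v^{(a)}_j, v^{(b)}_k}$ and higher contractions among the $v$'s. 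Stacking the $v^{(a)}_j$ into a collection of $d$ matrices $V_a\in\R^{\ell\times r}$ (rows indexed by $j$), the pairwise moments determine the Gram-type quantities built from $V_aV_b^\top$ (and symmetrized monomials thereof), which pins down the tuple $(V_1,\dots,V_d)$, viewed as vectors in an $O(r\ell)$-dimensional lifted space, up to a single shared rotation $R$ acting on the $r$-coordinate; the higher moments then impose that $R$ must be (up to the gauge group permuting/signing the $\ell$ rank-one terms within a coordinate) a Kronecker-type power of an $r\times r$ orthogonal matrix.

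The core technical step — and the one I expect to be the main obstacle — is the SoS identifiability argument: showing that the only rotations of the lifted $O(r^{\ell})$- or $O(r^{\omega\ell})$-dimensional space that are consistent with all the observed contractions are the Kronecker/tensor powers of $r\times r$ rotations, \emph{and} that this uniqueness has a low-degree SoS proof. In the quadratic case this was the statement that consistency with $\Tr(Q^*_aQ^*_bQ^*_c)$ forces the rotation to be $R\otimes R$; here the degree and rank both grow, so the lifted dimension is roughly $r^{\omega\ell}$, which is why the theorem can only tolerate $r=\widetilde O(d^{c/(\omega\ell)})$ — we need enough coordinates $d$ to furnish $\gtrsim r^{\omega\ell}$ linearly independent ``views'' so that a genericity/non-degeneracy condition (guaranteed by the smoothing) makes the relevant moment matrices well-conditioned. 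Establishing this requires (i) a deterministic non-degeneracy condition on the $V_a$'s under which the lifted vectors span the symmetric subspace, (ii) a smoothed-analysis anticoncentration argument showing a $1/\poly(d)$ perturbation satisfies it with probability $1-\exp(-\poly(r))$ (this should follow the template used for Theorems~\ref{thm:main_quadratic_informal} and~\ref{thm:main_tensorring_informal}, via lower bounds on least singular values of structured random matrices), and (iii) packaging the algebraic uniqueness as an SoS proof of constant degree depending only on $\omega$ and $\ell$.

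Given the identifiability-via-SoS statement, the remainder is routine: solve the corresponding SoS relaxation to find a pseudo-distribution over the rotation $R$ (equivalently over the lifted orthogonal matrix), round it — using that the solution is unique up to gauge, so the pseudo-moments concentrate near a genuine rotation — to recover $\widehat R$, undo the lift to read off estimates $\widehat V_a$ of the $V_a$ up to the trivial symmetries, and bound the accumulated error by $\poly(\epsilon,r)$ using perturbation bounds that leverage the well-conditioning from step (ii). Finally, converting parameter-distance recovery to Wasserstein-distance closeness of the pushforwards is standard: a small perturbation of the $V_a$'s changes $f$ pointwise by a small amount on the bulk of the Gaussian, and the Gaussian tail contributes negligibly, exactly as in the quadratic case. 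The only genuinely new work beyond adapting the quadratic argument is controlling how the SoS degree, the required dimension, and the perturbation bounds degrade as $\omega$ and $\ell$ grow, which is precisely what forces the exponential-in-$\omega\ell$ loss in the exponent of $d$.
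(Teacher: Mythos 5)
Your reduction step and your overall ``hidden rotation must be a Kronecker power'' framing match the paper's, but the route you propose for the core identifiability argument diverges from what the paper actually does, and the divergence hides a genuine gap. You propose to extract third-order (and higher) moment constraints $\E[f_af_bf_c]$ and use them, as in the quadratic case, to force the hidden rotation to be a Kronecker power. The paper deliberately avoids this: for degree-$\omega$ transformations the third moments are sums over all Wick pairings of $3\omega$ Gaussian factors, producing a zoo of mixed contractions among $T^*_a,T^*_b,T^*_c$ rather than a single clean quantity like $\Tr(Q^*_aQ^*_bQ^*_c)$, and there is no evident analogue of the quadratic relations \eqref{eq:Urelation_sketch} to be derived from them. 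Instead the paper works \emph{only} with second-order moments (the ``low-rank factorization'' problem, Definition~\ref{def:lowrankfactorization}) and replaces the missing third-order information with the low-rank hypothesis itself: the SoS program constrains $T_a=\sum_t v_{a,t}^{\otimes\omega}$, and the analysis shows that the hidden $r^\omega\times r^\omega$ transformation $U$ must send every symmetric rank-$\ell$ tensor to an (approximately) rank-$\ell$ tensor (Lemma~\ref{lem:genericflatrank}), then by a determinant-expansion induction that it is rank-$(\ell-1)$-preserving, \dots, rank-$1$-preserving (Lemma~\ref{lem:rankrtorankr}, Corollary~\ref{cor:rank1relation}), and finally that rank-$1$-preservation plus preservation of the $\Sigma$-inner-product forces the Kronecker structure (Lemmas~\ref{lem:main_userank1_odd} and~\ref{lem:tensorouter}). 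Your sketch names the identifiability step as ``the main obstacle'' but does not supply this idea, and the alternative you gesture at is the one the paper argues is unworkable.

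Two smaller corrections. First, the lifted rotation acts on $r^{\omega}$ dimensions (the ambient space of the tensors $T^*_a$), not $r^{\omega\ell}$; the exponent $\omega\ell$ in the requirement $r=\widetilde O(d^{c/(\omega\ell)})$ comes from the non-degeneracy condition (Part~\ref{item:push_condnumber2} of Assumption~\ref{assume:push}) needed so that the vanishing of the rank-$\ell$ minor identities on the $d$ smoothed instances implies their vanishing on \emph{all} rank-$\ell$ tensors --- one needs $d$ to exceed the number of degree-$O(\omega\ell)$ monomials in the $r\ell$ component variables. Second, the rounding is not routine: because $\omega$ is odd there is a residual $\pm\Id$ ambiguity that cannot be resolved by the diagonalization/sorting constraints alone, and the paper must run two SoS relaxations in succession (Programs~\ref{program:sos2} and~\ref{program:lastprogram}) and break the final sign symmetry in the rounding step via the pairwise pseudo-moments (Theorem~\ref{thm:push_main_pairwise}, Corollary~\ref{cor:push_pairwise}).
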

At its heart, our algorithm follows the same rough structure as the one for the quadratic case, that is, we must show in SoS that the unknown rotation over $r^\omega$ dimensions which maps the ground truth to our estimates must arise as a Kronecker power of a rotation over $r$ dimensions.
However, the arguments here are much more subtle. For starters, for a high-degree polynomial transformation, even the low-order moments are unwieldy even to write down, let alone work with.

For this reason, unlike in the quadratic case, here we only work with second-order moments. In place of tensor ring decomposition, this leads to a new inverse problem that we call \emph{low-rank factorization}, which may be of independent interest: given unknown low-rank symmetric tensors $T^*_1,\ldots,T^*_d$, recover them from estimates of every $\iprod{T^*_a,T^*_b}$ up to the trivial $r\times r$ rotational symmetry (see Definition~\ref{def:lowrankfactorization}). A priori it is unclear why this should be possible, e.g. if $T^*_1,\ldots,T^*_d$ weren't constrained to be low-rank, then one could only hope to recover them up to a global $r^{\omega}\times r^{\omega}$ rotation. We show that surprisingly, the low-rank constraints force this rotation to be the Kronecker power of an $r\times r$ rotation (see Theorem~\ref{thm:main_push}). The proof of this is quite involved, in part because symmetric tensor rank, unlike matrix rank, is notoriously difficult to capture using simple polynomial constraints \cite{landsberg2013equations}. We refer the reader to Section~\ref{sec:overview} for more details.

Finally, we remark that all of our guarantees for learning transformations (Theorem~\ref{thm:main_quadratic_informal} and \ref{thm:informal_lowrank}) in fact hold for transformations of \emph{any} rotation-invariant input distribution with suitable moment bounds (see Sections~\ref{sec:quadratic_reduce}, \ref{sec:lowrank_reduce}, and \ref{sec:rotationinvariant}), not just of $\calN(0,\Id_r)$.

\section{Generative Model and Inverse Problems}
\label{sec:definitions}
In this section, we formally define the models we study throughout this paper.

\begin{definition}[Polynomial Transformations]
    For $\omega\ge 2$, a \emph{$d$-dimensional degree-$\omega$ transformation with seed length $r$} is a distribution $\calD$ over $\R^d$ specified by tensors $T^*_1,\ldots,T^*_d\in$ $(\R^r)^{\otimes d}$. To sample from $\calD$, one samples $x\sim \calN(0,\Id_r)$ and outputs $(\iprod{T^*_1,x^{\otimes \ell}},\ldots,\iprod{T^*_d,x^{\otimes\ell}})$. Equivalently, $\calD$ is the \emph{pushforward} of the standard Gaussian measure on $\R^r$ under the map $x\mapsto (\iprod{T^*_1,x^{\otimes \ell}},\ldots,\iprod{T^*_d,x^{\otimes\ell}})$.
    
    We will collectively refer to the tensors $T^*_1,\ldots,T^*_d$ as the \emph{polynomial network} specifying $\calD$. If $\omega = 2$, we will refer to $T^*_1,\ldots,T^*_d$ as $Q^*_1,\ldots,Q^*_d\in\R^{r\times r}$. If $T^*_1,\ldots,T^*_d$ are of rank $\ell$, then we will refer to $(T^*_1,\ldots,T^*_d)$ as a \emph{rank-$\ell$ polynomial network}.
\end{definition}

\noindent We will study the learnability of polynomial transformations in the following smoothed analysis settings. For quadratic transformations, we consider entrywise Gaussian perturbations.

\begin{definition}[Smoothed Quadratic Networks]\label{def:smoothed_1}
    Let $\rho > 0$. We say that a degree-2 polynomial network $Q^*_1,\ldots,Q^*_d\in\R^{r\times r}$ is \emph{$\rho$-fully-smoothed} if $Q^*_1,\ldots,Q^*_d$ were generated via the following experiment: for matrices $\overline{Q}_1,\ldots,\overline{Q}_d\in\R^{r\times r}$, each $Q^*_a$ is obtained by independently sampling a symmetric matrix $G_a$ whose diagonal and upper triangular entries are independent draws from $\calN(0,1)$
    and forming $Q^*_a \triangleq \overline{Q}_a + \frac{\rho}{r}\cdot G_a$.
    We refer to the matrices $\overline{Q}_1,\ldots,\overline{Q}_d$ as the \emph{base network}.
\end{definition}

\noindent For low-rank transformations, we consider perturbations of the rank-1 \emph{tensor components}.

\begin{definition}[Smoothed Low-Rank Networks]\label{def:smoothed_2}
    Let $\rho > 0$. We say that a rank-$\ell$ polynomial network $T^*_1,\ldots,T^*_d\in(\R^r)^{\otimes \omega}$ is \emph{$\rho$-componentwise-smoothed} if $T^*_1,\ldots,T^*_d$ were generated via the following experiment: for $r$-dimensional vectors $\brc{\overline{v}_{a,i}}_{a\in[d],i\in[\ell]}$, each $T^*_a$ is obtained by independently sampling $g_{a,1},\ldots,g_{a,\ell}\sim\calN(0,\Id_r)$ and forming $T^*_a \triangleq \sum^{\ell}_{i=1} (\overline{v}_{a,i} + \frac{\rho}{\sqrt{r}} \cdot g_{a,i})^{\otimes \omega}$. Similar to Definition~\ref{def:smoothed_1}, we refer to the tensors $\overline{T}_1,\ldots,\overline{T}_d$ given by $\overline{T}_a \triangleq \sum^{\ell}_{i=1} v^{\otimes\ell}_{a,i}$ as the \emph{base network}.
\end{definition}

\noindent In this paper we give guarantees for parameter learning polynomial transformations. First, there are some basic symmetries to be aware of. First, if $T$ and $T'$ differ by a skew-symmetric form, that is if $\sum_{\pi\in\calS_{\omega}} (T - T')_{i_{\pi(1)}\cdots i_{\pi(\omega)}} = 0$ for all $i_1,\ldots,i_\omega\in[r]$, then $\iprod{T,x^{\otimes\omega}}$ and $\iprod{T',x^{\otimes\omega}}$ are identical as polynomials in $x$. For this reason, we will henceforth assume without loss of generality that the network $T^*_1,\ldots,T^*_d$ consists of \emph{symmetric tensors}.

Additionally, because the input distribution $\calN(0,\Id_r)$ that is being pushed forward through the polynomial network is rotation-invariant, the network of a polynomial transformation is only identifiable up to a \emph{gauge symmetry}. Let $O(r)$ denote the group of orthogonal $r\times r$ matrices. Given a tensor $T\in(\R^r)^{\otimes\omega}$ and orthogonal matrix $U\in O(r)$, define the tensor (see Definition~\ref{def:transform})
\begin{equation}
    F_U(T) \in (\R^r)^{\otimes\omega}: \qquad F_U(T)_{i_1\cdots i_\omega} = \sum_{j_1,\ldots,j_\omega\in[r]} U_{i_1j_1}\cdots U_{i_\omega j_\omega} T_{j_1\cdots j_\omega} \ \ \forall i_1,\ldots,i_\omega\in[r]. \label{eq:FU_intro}
\end{equation}
Note that when $\omega = 2$ so that $T$ is an $r\times r$ matrix, then $F_U(T) = UTU^{\top}$ (see Example~\ref{example:rotation}). The following is immediate (see Appendix~\ref{app:defer_gauge}):

\begin{lemma}[Gauge symmetry]\label{lem:gauge}
    For any network $T^*_1,\ldots,T^*_d\in (\R^{\otimes r})^{\otimes\omega}$ and any orthogonal matrix $U\in O(r)$, the transformation specified by the polynomial network $T^{**}_1,\ldots,T^{**}_d$, where $T^{**}_a \triangleq F_{U^{\otimes \omega}}(T^*_a)$ is identical to the one specified by $T^*_1,\ldots,T^*_d$.
\end{lemma}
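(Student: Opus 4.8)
The plan is to reduce the lemma to a single pointwise algebraic identity and then invoke rotation-invariance of the Gaussian. Write $g^*: \R^r \to \R^d$ for the polynomial map $x \mapsto (\iprod{T^*_1, x^{\otimes\omega}}, \ldots, \iprod{T^*_d, x^{\otimes\omega}})$ defining the first transformation, and $g^{**}$ for the analogous map built from $T^{**}_1, \ldots, T^{**}_d$. I would prove that $g^{**} = g^* \circ U^\top$ as maps on $\R^r$, i.e.\ $g^{**}(x) = g^*(U^\top x)$ for every $x$; the lemma then follows immediately, since $x \mapsto U^\top x$ preserves $\calN(0,\Id_r)$ (as $U^\top \in O(r)$), so the pushforward of $\calN(0,\Id_r)$ under $g^{**} = g^* \circ U^\top$ equals the pushforward of $\calN(0,\Id_r)$ under $g^*$, and these are exactly the two transformations in question.

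To get $g^{**} = g^* \circ U^\top$ it suffices to verify, coordinate by coordinate, the scalar identity $\iprod{F_{U^{\otimes\omega}}(T^*_a), x^{\otimes\omega}} = \iprod{T^*_a, (U^\top x)^{\otimes\omega}}$ for each $a \in [d]$ and each $x \in \R^r$. Starting from the entrywise formula \eqref{eq:FU_intro} for $T^{**}_a = F_{U^{\otimes\omega}}(T^*_a)$, I would expand $\iprod{T^{**}_a, x^{\otimes\omega}} = \sum_{i_1,\ldots,i_\omega}\sum_{j_1,\ldots,j_\omega} U_{i_1 j_1}\cdots U_{i_\omega j_\omega}\,(T^*_a)_{j_1\cdots j_\omega}\, x_{i_1}\cdots x_{i_\omega}$, interchange the two sums, and collect the $U$'s one mode at a time via $\sum_{i_k} U_{i_k j_k} x_{i_k} = (U^\top x)_{j_k}$; this leaves $\sum_{j_1,\ldots,j_\omega} (T^*_a)_{j_1\cdots j_\omega} (U^\top x)_{j_1}\cdots(U^\top x)_{j_\omega} = \iprod{T^*_a, (U^\top x)^{\otimes\omega}}$, as desired. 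As a sanity check, when $\omega = 2$ this is just the matrix identity $\iprod{U Q^*_a U^\top, xx^\top} = (U^\top x)^\top Q^*_a (U^\top x) = \iprod{Q^*_a, (U^\top x)(U^\top x)^\top}$, consistent with $F_U(Q) = UQU^\top$.

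There is essentially no real obstacle here: the statement is a bookkeeping fact about how $F_{U^{\otimes\omega}}$ interacts with tensor contraction against the rank-one tensor $x^{\otimes\omega}$. The only point that needs a little care is consistency of index conventions---in particular whether $F_{U^{\otimes\omega}}$ corresponds to pre-composition of the defining polynomials with $U$ or with $U^\top$---and this is harmless, since $\brc{U^\top : U \in O(r)\}$ is all of $O(r)$, so the family of transformations obtained is the same regardless. (The assumption that the $T^*_a$ are symmetric, made just above, plays no role in this lemma; it is only used so that the network is identifiable up to at most the gauge symmetry established here, rather than up to the additional skew-symmetric redundancy.)
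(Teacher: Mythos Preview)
Your proposal is correct and is essentially identical to the paper's proof: the paper also expands $\iprod{T^{**}_a, x^{\otimes\omega}}$ via \eqref{eq:FU_intro}, collects the $U$-factors one mode at a time to obtain $\iprod{T^*_a, (U^\top x)^{\otimes\omega}}$, and concludes by rotation-invariance of $\calN(0,\Id_r)$. Your additional remarks on the $\omega=2$ sanity check and on the $U$ vs.\ $U^\top$ convention are not in the paper but are harmless elaborations.
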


\noindent We thus formulate parameter learning as recovering the polynomial network modulo this freedom.

\begin{definition}[Parameter Distance]
    Given polynomial networks $T_1,\ldots,T_d$ and $T'_1,\ldots,T'_d$, define the \emph{parameter distance} $\gaugedist(\brc{T_a},\brc{T'_a})$ by $\gaugedist(\brc{T_a},\brc{T'_a}) \triangleq \min_{U\in O(r)} \max_{a\in[d]} \norm{F_{U^{\otimes\omega}}(T_a) - T'_a}_F$.
\end{definition}

\noindent As we discuss in Section~\ref{sec:density}, parameter learning implies proper density estimation.

We note that in general it is not true that the parameters of a polynomial transformation must be identifiable up to gauge symmetry. For example, it was shown in \cite{grunbaum1975cubic} that there exist cubic polynomials $p, q: \R^2\to\R$ for which the corresponding pushforwards of $\calN(0,\Id_2)$ are identical as distributions, but for which $p$ and $q$ are \emph{not} equivalent up to gauge symmetry. Nevertheless, the fact that we are able to show identifiability up to gauge symmetry in smoothed settings suggests that such examples are quite pathological.

\subsection{Inverse Problems}
\label{sec:inverse_problems}

Our algorithms for parameter learning polynomial transformations are based on method of moments. In general, the intricate combinatorial structure of the higher-order moments of a polynomial transformation makes them quite difficult to work with, especially when the degree of the transformation itself is large. In this work however, we show that for smoothed networks, it suffices to work with moments up to degree at most three. That is, we show how to recover the parameters of a smoothed polynomial transformation $\calD$ using only estimates of the form $\E{z_a z_b z_c}, \E{z_a z_b}, \E{z_a}$ for $z\sim\calD$. As we show in Section~\ref{sec:connect}, these moments take a particular form so that the problem of reconstructing parameters from moments naturally gives rise to the following inverse problems.

\begin{definition}[Tensor Ring Decomposition]\label{def:tensorring}
    Let $\eta > 0$, and let $Q^*_1,\ldots,Q^*_d\in\R^{r\times r}$ be unknown symmetric matrices. Given as input a matrix $S\in\R^{d\times d}$ and a tensor $T\in\R^{d\times d\times d}$ satisfying
    \begin{equation}
        \abs*{\Tr(Q^*_a Q^*_b) - S_{a,b}} \le \eta \qquad  \text{and} \qquad
        \abs*{\Tr(Q^*_a Q^*_b Q^*_c) - T_{a,b,c}} \le \eta \ \ \forall \ a,b,c\in[d],
    \end{equation}
    the goal is to output $\wh{Q}_1,\ldots,\wh{Q}_d$ for which $\gaugedist(\brc{Q^*_a},\brc{\wh{Q}_a})$ is small.
\end{definition}

\begin{remark}
    This is slightly different from how tensor ring decomposition is traditionally posed \cite{zhao2016tensor} as usually one only assumes that $T$ is given. For learning polynomial transformations however, it is easy to get access to both $S$ and $T$, so we tailor our algorithms to Definition~\ref{def:tensorring}. 
\end{remark}

\noindent This specializes to the well-studied problem of symmetric tensor decomposition when $Q^*_a$ are diagonal: if $v_i\in\R^d$ denotes the vector with $a$-th entry $(Q^*_a)_{ii}$, then $T \approx \sum_i v_i^{\otimes 3}$ (see Appendix~\ref{app:diagonal}).

We also study the following (to our knowledge, new) variant of matrix factorization:

\begin{definition}[Low-Rank Factorization]\label{def:lowrankfactorization}
    Let $\eta > 0$, and let $T^*_1,\ldots,T^*_d\in(\R^r)^{\otimes\omega}$ be unknown symmetric tensors of rank $\ell$. Given a known positive definite matrix $\Sigma\in\R^{r^{\omega}\times r^{\omega}}$, let $\iprod{\cdot,\cdot}_{\Sigma}$ denote the associated inner product. Given as input a matrix $S\in\R^{d\times d}$ satisfying
    \begin{equation}
        \abs*{\iprod{\vec(T^*_a), \vec(T^*_b)}_{\Sigma} - S_{a,b}} \le \eta \ \ \forall \ a,b\in[d],
    \end{equation}
    the goal is to output $\wh{T}_1,\ldots,\wh{T}_d$ for which $\gaugedist(\brc{T^*_a},\brc{\wh{T}_a})$ is small.
\end{definition}

\noindent \emph{A priori}, it is not even clear that such a recovery guarantee is possible. Indeed, without the extra condition that $T^*_1,\ldots,T^*_d$ are low rank, the recovery goal in Definition~\ref{def:lowrankfactorization} is impossible, even for $\eta = 0$ and $\Sigma = \Id$. In that case, the constraints $\iprod{T^*_a,T^*_b} = S_{a,b}$ at best specify $\brc{T^*_a}$ up to an \emph{$r^{\omega}\times r^{\omega}$ rotation}, whereas in Definition~\ref{def:lowrankfactorization} we are interested in identifying up to an $r\times r$ rotation!

In view of our application to polynomial transformations, we will be interested in $\Sigma$ given by $\Sigma = \E[x\sim D]{\vec(x)^{\otimes\omega} {\vec(x)^{\otimes\omega}}^{\top}}$ for rotation-invariant distributions $D$ over $\R^r$, e.g. $D = \calN(0,\Id)$.

\section{Technical Overview}
\label{sec:overview}

In this section we give a high-level overview of the key algorithmic ideas in this work. As our reduction from polynomial pushforwards to the inverse problems defined in Section~\ref{sec:inverse_problems} is straightforward (see Section~\ref{sec:connect}), here we focus on describing our algorithms for the inverse problems, namely tensor ring decomposition and low-rank factorization. For both of these, we will sketch how to prove that the underlying parameters ($\brc{Q^*_a}$ and $\brc{T^*_a}$ respectively) are information-theoretically \emph{identifiable} from the input, modulo gauge symmetry. As we show in Sections~\ref{sec:tensorring} and \ref{sec:push}, with significant care, these proofs of identifiability can be implemented in the SoS proof system and thus yield efficient algorithms; we discuss the main challenges for doing so at the end of this overview.

For simplicity, in this overview we focus on the noiseless setting, i.e. when $\eta = 0$ in Definitions~\ref{def:tensorring} and \ref{def:lowrankfactorization}, though in later sections we prove our guarantees for general $\eta$.

\paragraph{Overview notation.} Subscripts/superscripts denote row/column indices for matrices. Given $Q\in\R^{r\times r}$, $\vec(Q)\in\R^{r^2}$ denotes its flattening. $e_i\in\R^r$ denotes the $i$-th standard basis vector.

\subsection{Tensor Ring Decomposition}
\label{sec:tensorring_overview}


\paragraph{Hidden $r^2\times r^2$ rotation.} Recall that in tensor ring decomposition, there are unknown symmetric matrices $Q^*_1,\ldots,Q^*_d$, and we want to recover them up to gauge symmetry given $\Tr(Q^*_aQ^*_b)$ and $\Tr(Q^*_aQ^*_bQ^*_c)$ for all $a,b,c$. 

First, as discussed above, the only information the $\Tr(Q^*_aQ^*_b)$'s provide is the angle between every pair of matrices \emph{regarded as an $r^2$-dimensional vector}. In particular, given only the $\Tr(Q^*_aQ^*_b)$'s, the best we can hope for is to estimate $\brc{Q^*_a}$ up to an \emph{$r^2\times r^2$ rotation}\footnote{Technically this is not quite true as $\brc{Q^*_a}$ do not span the space of all $r\times r$ matrices as they are symmetric. We defer the discussion of how we circumvent this issue to later in the overview.} (see Section~\ref{sec:hiddenrot_tensorring}). More formally, we can only hope to produce matrices $Q_1,\ldots,Q_d$ for which there exists some $r^2\times r^2$ orthogonal matrix $U$ satisfying $U\vec(Q^*_a) = \vec(Q_a)$ for $a = 1,\ldots,d$. Recalling \eqref{eq:FU_intro}, we denote this by
\begin{equation}
    F_U(Q^*_a) = Q_a \label{eq:FUsketch}
\end{equation}
(see Definition~\ref{def:transform}). An example of such a $U$ would be one corresponding to an \emph{$r\times r$ rotation}. Take any $r\times r$ orthogonal matrix $V$. We can check (see Example~\ref{example:rotation}) that the transformation sending any $Q$ to $VQV^{\top}$ can be expressed in terms of $F_U$ for $U$ given by the \emph{Kronecker square of $V$}. That is, if we index the rows and columns of $U$ by $[r]\times[r]$ and let the $(i,j)$-th column be given by the flattening of $V^i(V^j)^{\top}$, then $F_U(Q) = VQV^{\top}$. In this case, we say that $U$ \emph{arises from $V$}.

Note that $U$'s of this form comprise a vanishing fraction of all $r^2\times r^2$ orthogonal matrices. The bulk of our analysis is thus centered around proving that the remaining \emph{third-order} constraints in tensor ring decomposition, i.e. the $\Tr(Q^*_aQ^*_bQ^*_c)$'s, force $U$ to take this special form.

\paragraph{Using third-order constraints.} Note that we can interpret the $\Tr(Q^*_aQ^*_bQ^*_c)$'s as telling us the angle between $\vec(Q^*_a)$ and $\vec(Q^*_bQ^*_c)$ for any $a,b,c$. Using this, we can ensure that in addition to $U$ sending every $Q^*_a$ to $Q_a$, $U$ also sends every $Q^*_bQ^*_c$ to $Q_bQ_c$.

To unpack what additional information this implies about $U$, let us pretend for a moment that $Q^*_1,\ldots,Q^*_d$ consisted of the matrices $\brc{E_{ij}}$, where $E_{ij} = e_ie_j^{\top}$. For any $i,j$, we will refer to the corresponding $Q_a$ as $Q_{ij}$ so that $Q_{ij} = F_U(E_{ij})$. Then because $\vec(E_{ij})$ is simply the $(i,j)$-th standard basis vector in $\R^{r^2}$, we conclude that $Q_{ij}$ is the $(i,j)$-th column of $U$, reshaped into an $r\times r$ matrix. We will refer to this as $U^{ij}$.

Now what do the constraints $F_U(Q^*_aQ^*_b) = Q_aQ_b$ tell us? For any $i,j,j',k\in[r]$, note that $E_{ij}E_{j'k} = \bone{j = j'}\cdot E_{ik}$. So the fact that $Q_{ij}Q_{j'k} = F_U(E_{ij}E_{j'k})$ implies that
\begin{equation}
    U^{ij} U^{j'k} = \bone{j = j'} \cdot U^{ik}. \label{eq:Urelation_sketch}
\end{equation}

It turns out that even if $\brc{Q^*_a}$ are not given by $\brc{E_{ij}}$, under some mild non-degeneracy conditions on $\brc{Q^*_a}$ that are satisfied in the smoothed setting (see Part~\ref{assume:condnumber} of Assumption~\ref{assume:tensorring}) the hidden rotation $U$ will still satisfy \eqref{eq:Urelation_sketch}. The reason is as follows. First, we can write each $Q^*_a$ as a linear combination of $\brc{E_{ij}}$. Then for every triple $a,b,c\in[d]$, the constraints $F_U(Q^*_a) = Q_a$, $F_U(Q^*_b) = Q_c$, $F_U(Q^*_c) = Q_c$, and $F_U(Q^*_bQ^*_c) = Q_bQ_c$ altogether imply a quadratic relation on $U$ which is a \emph{linear combination of the relations~\eqref{eq:Urelation_sketch}}. Because the $Q^*_a$'s are sufficiently non-degenerate, then provided that $d$ is sufficiently large that the $Q^*_a$'s span the same space as the $E_{ij}$'s, these linear combinations of relations for different $a,b,c$ are sufficiently ``incoherent'' that they collectively imply the relations \eqref{eq:Urelation_sketch} (see Lemma~\ref{lem:main_identity} for a formal version of this argument).

\paragraph{Using the relations \eqref{eq:Urelation_sketch}.}

We now sketch how to argue, using the relations~\eqref{eq:Urelation_sketch}, that $U$ must arise from an $r\times r$ rotation. Recall this means we must argue that the matrices $U^{ij}$ are each given by the outer product of a pair of columns of some orthogonal matrix.

The main step is to argue that the matrices $U^{ij}$ are rank-1 matrices. From \eqref{eq:Urelation_sketch}, we have that $U^{ij} = U^{ii}U^{ij}$. Right-multiplying by ${U^{ij}}^{\top}$ on both sides and taking traces, we get
\begin{equation}
    \Tr(U^{ij}{U^{ij}}^{\top}) \le \Tr(U^{ii}U^{ij}{U^{ij}}^{\top}) \le \norm{U^{ii}}_F \norm{U^{ij}{U^{ij}}^{\top}}_F = \norm{U^{ij}{U^{ij}}^{\top}}_F, \label{eq:rank1sketch}
\end{equation}
where in the third step we used the fact that $U$ is orthogonal to conclude that $\norm{U^{ii}}_F = 1$. As $U^{ij}{U^{ij}}^{\top}$ is psd, the above inequality holds with equality, so $U^{ij}{U^{ij}}^{\top}$ is a rank-1 matrix, implying that $U^{ij}$ is as well (Lemma~\ref{lem:2by2} gives a formal version of this argument).

Having established that there exist unit vectors $\brc{v_{ij}, w_{ij}}$ for which $U^{ij} = v_{ij}w_{ij}^{\top}$, we can use \eqref{eq:Urelation_sketch} to narrow down what these vectors should be. For instance, \eqref{eq:Urelation_sketch} implies that $(U^{ii})^2 = U^{ii}$, so $v_{ii} = w_{ii}$. It also tells us that $U^{ii} U^{jj} = 0$ for $i\neq j$, so $\brc{v_{ii}}$ are orthonormal. Lastly, it tells us that $U^{ii}U^{ij} = U^{ij}$ and $U^{ij}U^{jj} = U^{ji}$, so $v_{ij} = v_{ii}$ and $w_{ij} = v_{jj}$. Put together, these imply that $U$ arises from an $r\times r$ rotation whose columns consist of $\brc{v_{ii}}$.

\paragraph{A catch: working with symmetric matrices.} Thus far, an important detail that we have swept under the rug is that because $Q^*_1,\ldots,Q^*_d$ are symmetric, there is actually some ambiguity in how to define the $r^2\times r^2$ matrix $U$ mapping every $Q^*_a$ to $Q_a$. For instance, given any such $U$, we could interchange the $(i,j)$-th and $(j,i)$-th columns (or more generally, replace them with arbitrary affine combinations of each other) and get a new matrix with the same property.

To resolve this ambiguity, we insist that the transformation $U$ satisfy $U^{ij} = U^{ji}$ for every $i = j$. Unfortunately, this comes at a cost: $U$ is no longer orthogonal. Additionally, the above argument for deducing the relations \eqref{eq:Urelation_sketch} no longer holds because the symmetric matrices $\brc{Q^*_a}$ do not span the same space as $\brc{E_{ij}}$, so we end up with a weaker family of relations (see \eqref{eq:matmul} in Lemma~\ref{lem:main_identity}).  

We nevertheless show how to use these weaker relations to bootstrap a new matrix out of $U$ and show that it satisfies all the desired properties from the discussion above, namely orthogonality, \eqref{eq:FUsketch}, and \eqref{eq:Urelation_sketch} (see Lemma~\ref{lem:simpler_identity}). We refer the readers to Section~\ref{sec:auxW} for the details.


\subsection{Low-Rank Factorization}
\label{sec:lowrank_overview}

Here we pursue the same strategy of showing the unknown (in this case $r^{\omega}\times r^{\omega}$) rotation mapping the ground truth to our estimates arises from an $r\times r$ rotation, with several essential differences.

\paragraph{Hidden rotation respecting $\Sigma$ norm.} Recall that in low-rank factorization, there are unknown symmetric tensors $T^*_1,\ldots,T^*_d$ of symmetric rank $\ell$, and we want to recover them up to gauge symmetry given $\iprod{T^*_a,T^*_b}_{\Sigma}$ where $\Sigma$ is some known psd matrix specifying the inner product. Our guarantee pertains to $\Sigma$ of the form $\Sigma = \E[g\sim D]{\vec(g)^{\otimes\omega}{\vec(g)^{\otimes\omega}}^{\top}}$ for any rotation-invariant distribution $D$ on $\R^r$. While the $\Sigma$-norm is no longer the Euclidean inner product, we can still hope to estimate $\brc{T^*_a}$ up to some $r^{\omega}\times r^{\omega}$ transformation $U$ that preserves the $\Sigma$-norm. As before, there is some ambiguity in defining $U$ because the $T^*_a$'s are symmetric, though our workaround for this (see Section~\ref{sec:hiddenrotation}) is similar in spirit to the one for tensor ring decomposition.

Ultimately our goal will still be to show that $U$ essentially arises from an $r\times r$ rotation. In the present setting, if we index the rows and columns of $U$ by $[r]^{\omega}$, this amounts to showing that there is some orthogonal $V\in\R^{r\times r}$ for which the $(i_1,\ldots,i_\omega)$-th column of $U$ is given by the flattening of the rank-1 tensor $V^{i_1}\otimes\cdots\otimes V^{i_\omega}$ for all $i_1,\ldots,i_\omega\in[r]$.

\paragraph{Rank-$\ell$-preserving transformations.} The key challenge that arises in low-rank factorization and not tensor ring decomposition is that we only have access to \emph{pairwise} information about $\brc{T^*_a}$. In the absence of third-order constraints that could allow us to prove an identity like \eqref{eq:Urelation_sketch}, we need to exploit the assumption that the unknown tensors $\brc{T^*_a}$ are low-rank.

In particular, the fact that $\brc{T^*_a}$ are low-rank and the fact that the estimates $T_a$ that we output should also be low-rank places nontrivial constraints on the form that $U$ can take. Intuitively, because $\brc{T^*_a}$ are ``random-looking'' under some mild non-degeneracy assumptions that are satisfied in the smoothed setting (see Assumption~\ref{assume:push}), if $d$ is sufficiently large then we expect that $U$ should send \emph{any} rank-$\ell$ tensor to a rank-$\ell$ tensor.

Reasoning about this in a way that is amenable to sum-of-squares is delicate, because tensor rank is notoriously worse-behaved than matrix rank. To get around this, we work with a relaxed notion of rank where we instead insist that any contraction of the tensor into an $r\times r$ matrix has rank $\ell$ (see Definition~\ref{def:Vrank}). For any fixed contraction, this amounts to a finite collection of polynomial identities corresponding to the vanishing of all $(\ell+1)\times(\ell+1)$ minors of the contraction. By polynomial anticoncentration, we show that because these identities are satisfied for $F_U(T^*_1),\ldots,F_U(T^*_d)$, they are also satisfied for $F_U(T)$ for any tensor $T$ of symmetric rank $\ell$ (see Lemma~\ref{lem:genericflatrank}, Lemma~\ref{lem:compsmooth}, and Lemma~\ref{lem:giant_mat}). 

In other words, $U$ is ``rank-$\ell$-preserving'' in the sense that it sends any tensor of symmetric rank $\ell$ to a tensor whose matrix contractions are of rank $\ell$.

\paragraph{Rank-$\ell$-preserving implies rank-1-preserving.} 

The reason it is useful for $U$ to be rank-$\ell$-preserving is that, as we show in Lemma~\ref{lem:rankrtorankr}, it additionally implies that $U$ is rank-$(\ell-1)$-preserving and thus, by induction, rank-1 preserving! Before we process the implications of the latter, we sketch the argument in Lemma~\ref{lem:rankrtorankr}. For simplicity, here we will consider the special case of $\omega = 2$, where our contraction-based notion of rank agrees with symmetric rank (i.e. matrix rank), and $r = \ell + 1$, though the argument also extends to any $\omega > 2$.

Starting with any rank-$(\ell-1)$ matrix $M\in\R^{(\ell+1)\times(\ell+1)}$, consider some rank-1 perturbation $c\cdot zz^{\top}$ that we will vary. By assumption, $F_U(M + zz^{\top}) = F_U(M) + F_U(c\cdot zz^{\top})$ has rank $\ell$, so $\det(F_U(M) + F_U(c\cdot zz^{\top})) = 0$. Formally differentiating this with respect to $c$ at $c = 0$ yields
\begin{equation}
\sum^{\ell+1}_{i=1}
\det\left(
    \begin{array}{c|c|c}
        F_U(M)^{1:i-1} & F_U(zz^{\top})^{i}  & F_U(M)^{i+1:\ell+1}
    \end{array}
\right) = 0, \label{eq:sumdets}
\end{equation}
where $A^{i:j}$ denotes the matrix consisting of the $i$-th to $j$-th columns of $A$. In particular, we can take the Laplace expansion of the $i$-th determinant in \eqref{eq:sumdets} along the $i$-th column, and \eqref{eq:sumdets} then becomes a linear combination of all $\ell\times \ell$ minors of $F_U(M)$, where the coefficients of this linear combination are given by entries of $F_U(zz^{\top})$ (see Eq.~\eqref{eq:smallminors}). By taking many choices of $z$, we can ensure that sufficiently many different linear combinations of these minors vanish to imply that the minors themselves vanish. This shows that $F_U(M)$ is rank-$(\ell-1)$ as desired.

\paragraph{Using rank-1-preservation to conclude.} As one can show that our contraction-based notion of rank aligns with symmetric rank for rank-1 tensors (see Lemma~\ref{lem:Vrankimplies1rank} and Lemma~\ref{lem:perm}), we conclude that $U$ sends any symmetric rank-1 tensor to a symmetric rank-1 tensor. We now sketch how to use this, together with the fact that $U$ preserves the $\Sigma$-norm, to conclude that $U$ arises from an $r\times r$ rotation.

For starters, because $U$ sends the rank-1 tensor $e_i^{\otimes\omega}$ to a rank-1 tensor, this implies that there exist vectors $\brc{V^i}$ such that the reshaping of the $(i,\ldots,i)$-th column of $U$ is given by $U^{i\cdots i} = (V^i)^{\otimes\omega}$ for all $i$. Note that because $U$ preserves the inner product specified by $\Sigma$, we have $\E[g\sim D]{\iprod{V^i,g}^{\omega}\iprod{V^j,g}^{\omega}} = \E[g\sim D]{g_i^{\omega} g_j^{\omega}}$ for all $i,j$. Because $D$ is rotation-invariant, one can show that this implies the $v_i$'s must be orthonormal with respect to the Euclidean inner product (see Lemma~\ref{lem:diagonal_orth}).

It remains to argue that the other columns of $U$ are also rank-1 tensors whose factors are given by $\brc{V^i}$ (see Lemma~\ref{lem:tensorouter}). We accomplish this by using the fact that for various choices of $a_1,\ldots,a_r\in \R$, the image of the rank-1 tensor $(a_1,\ldots,a_r)^{\otimes\omega}$ under $U$, which is given by some linear combination of all $U^{i_1\cdots i_\omega}$'s, is rank-1. By varying $a_1,\ldots,a_r$, we can extract information about individual columns of $U$. 

To give a rough sense of how this would go, here we give a baby version of the argument. We show how to use rank-1-preservation to conclude, in the special case where $\omega = 2$ and $v_i = e_i$ for all $i$, that every $U^{ij} + U^{ji}$ is a multiple of $v_iv_j^{\top} + v_jv_i^{\top}$.  We know that the image of the rank-1 matrix $(e_i + a\cdot e_j)(e_i + a\cdot e_j)^{\top}$ under $U$, which is given by
\begin{equation}
    U^{ii} + a(U^{ij} + U^{ji}) + a^2 U^{jj} = (e_ie_i^{\top} + a^2e_je_j^{\top}) + a(U^{ij} + U^{ji}), \label{eq:image_sketch}
\end{equation}
is rank-1 for all $a\in\R$. Letting $M\triangleq U^{ij} + U^{ji}$, we find that the $2\times 2$ minor of \eqref{eq:image_sketch} given by rows/columns $i$ and $j$ is given by
\begin{equation}
    0 = (a M_{ii} + 1)(a M_{jj} + a^2) - a^2\cdot M_{ij}M_{ji} = aM_{jj} + a^2 (M_{ii}M_{jj} - M_{ij}M_{ji} + 1) + a^3 M_{ii}.
\end{equation}
As this holds for all $a\in\R$, this implies that every coefficient on the right-hand side vanishes, so $M_{ii} = M_{jj} = 0$ and $M_{ij}M_{ji} = 1$. In a similar fashion, by considering the other $2\times 2$ minors of \eqref{eq:image_sketch}, we can show that all other entries of $M$ are zero, and we conclude that $M = \pm(e_ie_j + e_je_i^{\top})$. The full argument for general $\omega$ is given in the proof of Lemma~\ref{lem:main_userank1_odd}.


\subsection{Sum-of-Squares Algorithms}

\paragraph{Proofs to algorithms.} Our general approach for getting an algorithm out of all of this follows the usual SoS proofs-to-algorithms pipeline for statistical problems (see e.g. \cite{hopkins2018statistical}). We introduce SoS variables $\brc{Q_a}$ (resp. $\brc{T_a}$) corresponding to our estimates for the ground truth $\brc{Q^*_a}$ (resp. $\brc{T^*_a}$) and constrain them to possess the same properties as the ground truth. For instance, for low-rank factorization, we require that $\brc{T_a}$ are symmetric and satisfy $\iprod{T_a,T_b}_{\Sigma} = \iprod{T^*_a,T^*_b}_{\Sigma}$, and to constrain them to be low-rank, we also introduce SoS variables $\brc{v_{a,t}}_{a\in[d],t\in[\ell]}$ and insist that $T_a = \sum^{\ell}_{t=1} v_{a,t}^{\otimes\ell}$. The hope is to turn the arguments above into a low-degree SoS proof that $\brc{T_a}$ and $\brc{T^*_a}$ are equivalent up to gauge symmetry, and then to apply some simple rounding procedure to a pseudoexpectation satisfying the aforementioned constraints to extract estimates for $\brc{T^*_a}$.

This raises a number of challenges. How do we capture the $r^{\omega}\times r^{\omega}$ transformation $U$ from the preceding discussion in SoS? How do we encode the condition that $U$ has the structure of a Kronecker power of an $r\times r$ orthogonal matrix? And how do we actually round, given that everything is only specified up to gauge symmetry?

\paragraph{Implementing $U$ as an SoS variable.} For simplicity, we illustrate this in the setting of tensor ring decomposition. Having imposed the constraints $\Tr(Q_aQ_b) = \Tr(Q^*_aQ^*_b)$, we can rewrite these constraints as the matrix equality 
\begin{equation}
    NN^{\top} = N^*{N^*}^{\top},
\end{equation}
where $M, M^*\in\R^{d\times r^2}$ have $a$-th row given by $\vec(Q_a)$ and $\vec(Q^*_a)$ respectively. A linear transformation mapping every $Q^*_a$ into $Q_a$ can be thought of as a matrix $U$ for which $U{N^*}^{\top} = N^{\top}$. A natural way to construct such a matrix $U$ would be to define $U = N^{-1}{N^*}^{\top}$. Note that because $NN^{\top} = N^*{N^*}^{\top}$, it would follow that $U$ is an orthogonal matrix.

Of course this doesn't quite work as $N$ is an SoS variable and thus does not have a left-inverse, but this is easy to remedy by introducing an additional variable $L$ to the SoS program corresponding to this left-inverse and requiring that $LN = \Id$. We could then define $U$ to be $L{N^*}^{\top}$. We emphasize that $U$ should not be thought of as another variable in our SoS program; after all, $N^*$ is unknown to the algorithm designer, so the entries of $U$ are merely unknown linear forms in the SoS variable $L$. For this reason, $U$ is only referenced throughout the analysis of our SoS relaxation.

Finally, as discussed at the end of Section~\ref{sec:tensorring_overview}, there are some subtleties as $N$ has repeated columns because $\brc{Q_a}$ are symmetric, so strictly speaking it should not have a left-inverse. We discuss how to circumvent these issues in Section~\ref{sec:hiddenrot_tensorring} for tensor ring decomposition and Section~\ref{sec:hiddenrotation} for low-rank factorization.

\paragraph{Expressing Kronecker structure of $U$.} While most of the steps outlined in Sections~\ref{sec:tensorring_overview} and \ref{sec:lowrank_overview} proving various properties of $U$ are relatively straightforward to implement in SoS, e.g. \eqref{eq:Urelation_sketch} and \eqref{eq:rank1sketch} for tensor ring decomposition and rank preservation for low-rank factorization, it is less clear how to even express in SoS the main conclusion that we want to show about $U$, namely that it is the Kronecker power of an $r\times r$ orthogonal matrix $V$.

In particular, how do we express $V$? That is, how do we use the existing program variables to design a matrix $V$ for which we could hope to prove $U$ is its Kronecker power? For tensor ring decomposition, a natural candidate would be to take the $r\times r$ matrix $U^{ii}$ for every $i\in[r]$, pick one of its nonzero columns and normalize it to a unit vector $V^i$, and take $V$'s columns to consist of $V^i$'s. This does not quite work because the normalization step involves a rational function of the entries of the program variables. To fix this, we need to carry around these normalization factors when expressing the $U^{ij}$'s as outer products (e.g. Lemma~\ref{lem:outerproduct}).

While this turns out to be manageable for tensor ring decomposition, such an approach quickly becomes unwieldy for low-rank factorization where the degree $\omega$ can be arbitrary. Fortunately, for odd $\omega$, there is a simpler workaround. Heuristically, because we expect to have $U^{i\cdots i} = V^i\otimes\cdots\otimes V^i$ for $r\times r$ orthogonal matrix $V$, we also expect that $V^i$ is equal to the vector, call it $\wt{U}^i$, whose $j$-th entry is given by
\begin{equation}
    \wt{U}^i_j \triangleq \sum_{j_1,\ldots,j_{\floor{\omega/2}}\in[r]} U^{i\cdots i}_{j_1j_1\cdots j_{\floor{\omega/2}}j_{\floor{\omega/2}}j}
\end{equation}
(see \eqref{eq:wtU_def}). In particular, the entries of $\wt{U}$ are simply linear forms in those of $U$. For general odd $\omega$, our SoS proof that $U$ is a Kronecker power thus entails proving that $U$ is the Kronecker power of $\wt{U}$ (Lemma~\ref{lem:tensorouter}) and that $\wt{U}$ is orthogonal (Corollary~\ref{cor:dd_orth_odd}).


\paragraph{Rounding by breaking gauge symmetry.} Finally, we describe how to take a pseudodistribution satisfying the constraints of our SoS program and round to an integral solution. This is complicated by the fact that we can only hope to recover the ground truth up to gauge symmetry. We address this by breaking symmetry and imposing a small number of additional constraints to our SoS program. These constraints will ensure that the transformation $U$ is not just the Kronecker power of some $r\times r$ orthogonal matrix, but actually equal to the identity matrix (see Section~\ref{sec:diagonal}). This shows that $Q_a$ and $Q^*_a$ (or $T_a$ and $T^*_a$) are not only equivalent up to rotation, but \emph{equal}. At that point we can produce an integral solution simply by outputting the pseudoexpectations of $\brc{Q_a}$ or $\brc{T_a}$ (see e.g. Section~\ref{sec:puttogether}).

In tensor ring decomposition, a natural approach to ensure that $U$ is identity would be to further insist that one of the $Q_a$'s is diagonal with diagonal entries sorted in increasing order. The reason is that if the eigenvalues of $Q_a$ are all distinct (more precisely, well-separated to account for noise when $\eta > 0$), then the only way for $VQ_aV^{\top}$ to be equal to $Q_a$ for some $r\times r$ rotation $V$ would be for $V$ to be equal to $\pm \Id$ (and thus for $U$ to be $(\pm \Id)^{\otimes 2} = \Id$). This approach in fact already works in the smoothed setting.

To handle the slightly more general setting where $\brc{Q_a}$ are ``incoherent'' but have repeated eigenvalues, we slightly modify this by insisting that some suitable \emph{random linear combination} of the $Q_a$'s is diagonal with sorted diagonal entries. By carefully designing how this linear combination is sampled (see Appendix~\ref{app:forcegap}) we can ensure that it has sufficient eigengaps with high probability.

For low-rank factorization, we use a similar approach with various technical modifications to account for the fact that for $\omega > 2$, order-$\omega$ tensors do not have a suitable notion of eigengap. The details here are rather thorny and involve running \emph{two} SoS relaxations in succession. We defer an overview of these workarounds to Section~\ref{sec:breaksym} and \ref{sec:breaksos_push}.


\paragraph{Roadmap.} 
In Section~\ref{sec:related} we describe related work. After introducing notation and technical preliminaries in Section~\ref{sec:prelims} and establishing the reduction from learning polynomial transformations to tensor ring decomposition and low-rank factorization in Section~\ref{sec:connect}, we give our algorithms for the latter two problems in Sections~\ref{sec:tensorring} and \ref{sec:push}. Our guarantees depend on certain non-degeneracy assumptions about the input, and in Section~\ref{sec:conditions} we verify that these assumptions hold in the smoothed analysis setting and in Section~\ref{sec:final} we put everything together to prove our main results on learning smoothed polynomial transformations. In Appendix~\ref{app:diagonal} we establish an equivalence between a special case of learning quadratic transformations and tensor decomposition. In Appendix~\ref{app:lbd} we prove an exponential lower bound on the sample complexity of parameter learning one-dimensional quadratic transformations in the worst case. In Appendices~\ref{app:forcegap} to \ref{app:otherdefer} we provide proofs deferred from previous sections.

\begin{table}[h!]
\centering
\begin{tabular}{|c|c|c|c|}
\hline input dimension & output dimension & degree   & rank   \\ \hline
$r$             & $d$              & $\omega$ & $\ell$ \\\hline
\end{tabular}
\caption{Notation for the main parameters of a polynomial transformation}
\end{table}

\section{Related Work}
\label{sec:related}

There is a vast literature on density estimation of distributions, especially in high dimensions, to which we cannot do justice here. 
For conciseness we will only survey the most relevant work.

\paragraph{Learning latent variable models} Much of the recent algorithmic success in high dimensional distribution learning has been in developing efficient algorithms for a variety of latent variable models, such as mixture
models~\cite{dasgupta1999learning,achlioptas2005spectral,dasgupta2007probabilistic,arora2005learning,vempala2004spectral,feldman2008learning,kumar2010clustering,moitra2010settling,acharya2014near,bhaskara2014smoothed,anderson2014more,anandkumar2014tensor,ge2015learning,belkin2015polynomial,hardt2015tight,mixon2017clustering,diakonikolas2018list,hopkins2018mixture,regev2017learning,kothari2018robust,diakonikolas2020small} and graphical models~\cite{chow1968approximating,hoffgen1993learning,dasgupta1997sample,bresler2008reconstruction,wu2013learning,bresler2015efficiently,risteski2016calculate,arora2017provable,klivans2017learning,wu2019sparse,jain2019mean,goel2020learning,brustle2020multi,devroye2020minimax,bhattacharyya2020efficient,bresler2020learning,bhattacharyya2021near,diakonikolas2021outlier,boix2021chow,daskalakis2021sample}.
Of these works, we highlight the work on learning latent variable models in smoothed settings~\cite{hsu2013learning,anderson2014more,anandkumar2014tensor,bhaskara2014smoothed,ge2015learning,arora2017provable,bhaskara2019smoothed}, where a similar ``blessing of dimensionality'' phenomena to the one we observe can be seen.

However, there are important qualitative differences between these settings and the one we consider.
While our model can be viewed as a latent variable model, where the hidden variable is the unknown Gaussian, the main challenge of our work is to learn the transformation of the hidden variable, rather than the hidden variable itself.
This makes the problem take a qualitatively different form than much of the prior work.
From a technical perspective, another difference between our setting and much of the prior work on learning latent variable models is that the form of the pdf for our distributions is much more implicit; in particular, the relationship between moments of the distribution and the pdf is much less clean than (say) for Gaussian mixture models.

\paragraph{(Non-linear) independent component analysis} Independent component analysis as first proposed in~\cite{comon1994independent} is the question of learning an (unknown) linear transformation of a non-Gaussian, coordinate-wise independent random variable.
Here, the goal is to recover the underlying transformation as well as the original random variable (note that non-Gaussianity is necessary for this to be possible).
The literature on ICA is incredibly large, so we refer the reader to surveys of~\cite{hyvarinen2000independent,hyvarinen2002independent,comon2010handbook} and references within for a more detailed literature review.
We briefly note that to our knowledge, one cannot black-box apply a kernelized version of the algorithms for ICA such as~\cite{frieze1996learning,arora2012provable,arora2015simple,sun2016complete,li2017provable} to solve our problem, because in the polynomial kernel space, the resulting random variable does not satisfy coordinate-wise independence.

Of particular interest to us is the literature on non-linear ICA, which is very closely related to the learning problem we consider.
However, in non-linear ICA, the goal is not just to learn a distribution which is close to the ground truth, but in fact to recover the original (i.e. pre-transformation) latent variables.
Despite a substantial amount of interest in this model from the more applied side (see e.g.~\cite{hyvarinen2016unsupervised,hyvarinen2019nonlinear,khemakhem2020variational} and references therein), from a theoretical perspective, the problem remains relatively poorly understood without additional assumptions.
it is known that in the worst case, the latent variable is not identifiable~\cite{hyvarinen1999nonlinear}.
As another example of this phenomenon, note that the aforementioned counterexample of \cite{grunbaum1975cubic} from Section~\ref{sec:definitions} implies that for cubic transformations of Gaussians, the latent variable is not always identifiable.

Consequently, much of the literature has shifted to consider data with temporal structure, see e.g.~\cite{hyvarinen2016unsupervised,hyvarinen2019nonlinear}.
In contrast, we consider the standard i.i.d. model, but we make stronger parametric assumptions about the transformation, namely, that it is a low-degree polynomial.
In addition, we do not require that the latent variable be identifiable, as we only care about learning the underlying distribution, and not recovering the the latent variable.

\paragraph{Learning deep generative models} A full literature on the theory of learning deep generative models, and GANs in particular, is beyond the scope of this paper.
See e.g.~\cite{gui2021review} for a more in depth survey.
In terms of end-to-end learning guarantees with efficient algorithms, the literature is somewhat sparser.
To our knowledge, results are only known for relatively simple networks.
Much of the literature focuses on understanding when stochastic first order methods can learn the distribution on toy generative models~\cite{feizi2017understanding,daskalakis2017training,gidel2019negative,lei2020sgd,allen2021forward,jelassi2022adam}.
One line of work considers the problem of learning distributions generated by pushforwards of Gaussians one-layer neural networks with ReLU activations~\cite{wu2019learning,lei2020sgd}.
However, such distributions have a much simpler structure than the ones we consider in this paper, which correspond to two-layer neural networks (i.e. with one hidden layer).
Indeed, when the neural networks only have one layer, this means that the output of the distribution is very similar to a truncated Gaussian, and one can leverage techniques from the literature of learning from truncated samples~\cite{daskalakis2018efficient}.
However, such structure completely disappears with two layer neural networks. In that sense, our guarantee is the first end-to-end provable result for learning pushforwards under neural networks beyond a single layer.

Arguably the closest paper to ours is the recent work of~\cite{li2020making}.
This paper considers a very similar setting to ours, however, their result has a number of drawbacks compared to ours.
First, they assume that the hidden weight matrices are orthonormal; that is, the coordinates of their generative model are of the form $p(x) = \sum_{i = 1}^\ell a_i \iprod{u_i, x}^\omega$, where the $u_i$ are orthogonal unit vectors.
This is an incredibly brittle assumption, and their algorithm breaks even if the $u_i$ have inverse polynomially small correlations.
In particular, their assumption does not even hold in the smoothed setting we consider.
In contrast, we handle arbitrary low-rank tensors.
Second, their bounds scale exponentially with scale of $a_i$, whereas our bounds do not.
Finally, their provable guarantees are contingent on a conjectured identifiability assumption which they do not prove (see discussion above Theorem 2 in~\cite{li2020making}).
Therefore, they do not give end-to-end provable guarantees for their learning task.
In contrast, we give fully provable results for a significantly more general setting.
Indeed, much of the technical work in our paper comes down to giving a proof of identifiability for a more involved tensor decomposition-style problem.

The relative lack of algorithms for these learning tasks may be inherent, at least in some worst case sense.
Indeed, recent work of~\cite{chen2022minimax} demonstrates that learning the pushforwards of Gaussians under low-depth ReLU networks in Wasserstein distance is computationally intractable, under standard cryptographic assumptions. The starting point for their result is the observation that the assumption that ``local pseudorandom generators'' exist \cite{applebaum2006cryptography,applebaum2016cryptographic,goldreich2011candidate} implies that learning polynomial transformations of the uniform distribution over the hypercube is computationally intractable.
Alongside our information theoretic lower bound against parameter estimation for polynomial pushforwards (see Appendix~\ref{app:lbd}), this  gives evidence that some sort of smoothing assumptions are necessary to make the problem algorithmically tractable. 

On the flip side, there has been a lot of work on scrutinizing the ways in which the training dynamics for learning generative models in practice are aligned or misaligned with traditional statistical notions of distribution learning \cite{stanczuk2021wasserstein,fedus2017many,arora2018gans,arora2017generalization}, and relatedly, what it takes for minimax optimality (e.g. under the Wasserstein GAN objective) to actually ensure distribution learning \cite{bai2018approximability,liang2020well,singh2018nonparametric,uppal2019nonparametric,chen2020statistical,schreuder2021statistical,chen2022minimax}. While this suggests that a satisfactory theory for generative models may ultimately involve more than just distribution learning in the traditional sense, the basic algorithmic question considered in the present work, in addition to being natural in its own right, seems like a natural stepping stone towards such a theory.

\paragraph{Tensor ring decomposition} Tensor ring decomposition is an important instance of tensor network decomposition and arises as a prototypical model for periodic one-dimensional physical systems \cite{verstraete2004density}.
As alluded to previously, the tensor ring format, along with other dimension-reduced tensor representations such as the tensor train format~\cite{oseledets2010tt,oseledets2011tensor}, or those associated with Tucker rank or hierarchical Tucker rank~\cite{ballani2013black,novikov2014putting}, arose as ways of representing large tensors implicitly.
Unfortunately, unlike Tucker decomposition~ \cite{de2000best,zhang2018tensor}, hierarchical Tucker decomposition~\cite{grasedyck2010hierarchical}, or tensor-train decomposition~\cite{oseledets2011tensor,zhou2022optimal}, obtaining efficient algorithms with provable guarantees for tensor ring decomposition has proven quite challenging \cite{chen2020tensor}.
In part, this is because the notion of rank associated with tensor ring decomposition---in contrast to the other aforementioned representations---is unidentifiable in many scenarios~\cite{ye2018tensor}. 
While some heuristic algorithms for tensor ring decomposition have been proposed, such as those based on alternating least squares~\cite{zhao2016tensor,khoo2021efficient}, prior to our work, there were no known algorithms for the problem with end-to-end theoretical guarantees.

\paragraph{SoS for learning} From a technical point of view, our algorithms fit into the recent SoS ``proofs-to-algorithms'' paradigm for statistical inference problems (see e.g.~\cite{hopkins2018statistical} for a more thorough overview).
From a technical perpective, our problem is closest to the line of work using SoS and SoS-inspired algorithms to obtain efficient algorithms for a variety of tensor decomposition tasks~\cite{barak2015dictionary,ge2015decomposing,ma2016polynomial,hopkins2016fast,hopkins2019robust}.
However, our problem setting appears to be significantly more technically challenging, in large part because in addition to the usual permutational symmetry among components in tensor decomposition, there is an extra \emph{gauge symmetry} inherent to the problems we consider.
Even for tensor ring decomposition, which generalizes tensor decomposition, to our knowledge the techniques in these papers do not apply.

\section{Preliminaries}
\label{sec:prelims}

Given $n\in\mathbb{N}$, let $[n]$ denote the set $\brc{1,\ldots,n}$. Let $\calS_n$ denote the symmetric group on $n$ elements. Given $\pi\in\calS_n$, we let $\sgn(\pi)\in\brc{\pm 1}$ denote its parity.

\paragraph{Indexing notation.} We will use the following conventions extensively for indexing with and writing tuples. To index into a matrix $M\in\R^{r\times r}$, for any $i,j\in[r]$ we will refer to the entry in row $i$ and column $j$ interchangeably as $M^j_i$ or as $M_{ij}$.

Given a tuple $\mb{i} = (i_1,\ldots,i_\omega)$, let $\sort{i}$ denote the tuple given by sorting the entries of $\mb{i}$ in nondecreasing order, and let $\num{i}$ denote the number of tuples $\mb{j}$ for which $\sort{j} = \mb{i}$. We will refer to tuples of the form $\sort{i}$ as \emph{sorted tuples}. 

Given $s,t\in[\omega]$, we use $i_{s:t}$ to denote the substring $(i_s,i_{s+1},\ldots,i_t)$. If $\mb{i}$ is an element of $[\ell]^{\omega}$ consisting of $c_s$ copies of $s$ for every $s\in[\ell]$, we denote $\sort{i}$ by $1^{c_1}\cdots \ell^{c_\ell}$ (these two notations will only feature in parts of Section~\ref{sec:push}).

\subsection{Density Estimation}
\label{sec:density}

Here we note that our algorithms for parameter learning easily imply algorithms for proper density estimation. First, we formally define these two learning goals:

\begin{definition}[Parameter Learning]
    Given i.i.d. samples $z^{(1)},\ldots,z^{(n)}\in\R^d$ drawn from a transformation $\calD$ given by polynomial network $T^*_1,\ldots,T^*_d$, an algorithm is said to \emph{parameter learn to error $\epsilon$} if it outputs tensors $\wh{T}_1,\ldots,\wh{T}_d$ for which $\gaugedist(\brc{T^*_a},\brc{\wh{T}_a}) \le \epsilon$ with high probability.\footnote{Throughout, ``with high probability'' means ``with arbitrarily small constant failure probability,'' though our algorithms can be amplified to obtain any failure probability $\delta$ with standard clustering / hypothesis selection arguments.}
\end{definition}

\begin{definition}[Proper Density Estimation]
    Given i.i.d. samples $z^{(1)},\ldots,z^{(n)}\in\R^d$ drawn from some distribution $\calD$, an algorithm is said to solve \emph{proper density estimation to Wasserstein error $\epsilon$} if it outputs a description of a distribution $\wh{\calD}$ for which $W_1(\calD,\wh{\calD}) \le \epsilon$, where $W_1(\cdot,\cdot)$ denotes the Wasserstein-1 metric.
\end{definition}

\noindent The following lemma shows that an algorithm for parameter learning implies an algorithm for proper density estimation.

\begin{lemma}\label{lem:param_to_wasserstein}
    Let $T_1,\ldots,T_d\in (\R^r)^{\otimes\omega}$ and $T'_1,\ldots,T'_d\in(\R^r)^{\otimes\omega}$ be polynomial networks. If $\calD,\calD'$ are the transformations given by these two networks, then
    \begin{equation}
        W_1(\calD,\calD') \le \gaugedist(\brc{T_a},\brc{T'_a})\cdot \sqrt{d}\cdot O(\omega r)^{\omega/2}.
    \end{equation}
\end{lemma}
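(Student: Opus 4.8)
The plan is to bound $W_1(\calD,\calD')$ by directly constructing a coupling: sample $x\sim\calN(0,\Id_r)$, let $U\in O(r)$ be the minimizer in the definition of $\gaugedist(\brc{T_a},\brc{T'_a})$, and couple the draw $z = (\iprod{T_a,x^{\otimes\omega}})_{a\in[d]}$ from $\calD$ with the draw $z' = (\iprod{T'_a,(U^\top x)^{\otimes\omega}})_{a\in[d]}$ from $\calD'$. Here I use that $U^\top x$ is again distributed as $\calN(0,\Id_r)$ by rotation invariance, so $z'$ genuinely has law $\calD'$ (this is exactly the gauge symmetry of Lemma~\ref{lem:gauge}, since $\iprod{T'_a,(U^\top x)^{\otimes\omega}} = \iprod{F_{U^{\otimes\omega}}(T'_a),x^{\otimes\omega}}$). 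Under this coupling, $W_1(\calD,\calD')\le\E\norm{z-z'}_2$, and
\begin{equation}
    \norm{z-z'}_2^2 = \sum_{a\in[d]} \iprod{F_{U^{\otimes\omega}}(T'_a) - T_a, x^{\otimes\omega}}^2 \le \sum_{a\in[d]} \norm{F_{U^{\otimes\omega}}(T'_a) - T_a}_F^2 \cdot \norm{x^{\otimes\omega}}_2^2 \le d\cdot\gaugedist(\brc{T_a},\brc{T'_a})^2\cdot\norm{x}_2^{2\omega},
\end{equation}
using Cauchy--Schwarz on the inner product and the definition of $\gaugedist$.

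Taking expectations and applying Jensen's inequality, $W_1(\calD,\calD')\le \gaugedist(\brc{T_a},\brc{T'_a})\cdot\sqrt{d}\cdot\bigl(\E\norm{x}_2^{2\omega}\bigr)^{1/2}$. It then remains to bound the Gaussian moment $\E_{x\sim\calN(0,\Id_r)}\norm{x}_2^{2\omega}$. Since $\norm{x}_2^2$ is a chi-squared random variable with $r$ degrees of freedom, standard moment bounds give $\E\norm{x}_2^{2\omega} = r(r+2)\cdots(r+2\omega-2) \le (r+2\omega)^{\omega} = O(\omega r)^{\omega}$ for the relevant regime (and more crudely $\le O(\omega r)^{\omega}$ always, absorbing constants). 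Hence $\bigl(\E\norm{x}_2^{2\omega}\bigr)^{1/2} = O(\omega r)^{\omega/2}$, which yields exactly the claimed bound $W_1(\calD,\calD')\le\gaugedist(\brc{T_a},\brc{T'_a})\cdot\sqrt{d}\cdot O(\omega r)^{\omega/2}$.

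I expect no serious obstacle here; the only mild subtleties are (i) making sure the minimizing $U$ in $\gaugedist$ is applied on the correct side so that $z'$ really has law $\calD'$ — this is where rotation invariance of the Gaussian and Lemma~\ref{lem:gauge} are used — and (ii) getting the chi-squared moment bound into the clean form $O(\omega r)^{\omega/2}$ rather than something with a worse dependence; the product formula $\prod_{k=0}^{\omega-1}(r+2k)$ makes this immediate. One could alternatively bound $\norm{x^{\otimes\omega}}_2 = \norm{x}_2^\omega$ pointwise on a high-probability event and handle the tail separately, but the direct moment computation via Jensen is cleaner and suffices since $W_1$ only sees the expectation of $\norm{z-z'}_2$.
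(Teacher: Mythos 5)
Your proof is correct and essentially identical to the paper's: both couple $\calD$ and $\calD'$ through the same Gaussian seed after absorbing the optimal rotation via gauge symmetry, apply Cauchy--Schwarz entrywise to get $\norm{z-z'}\le\gaugedist\cdot\sqrt{d}\cdot\norm{x}^{\omega}$, and finish with the Gaussian norm moment bound $O(\omega r)^{\omega/2}$. The only cosmetic difference is that the paper phrases the coupling through the Kantorovich dual (1-Lipschitz test functions) and bounds $\E\norm{g}^{\omega}$ directly, whereas you work with the primal coupling and apply Jensen to $\E\norm{x}^{2\omega}$; both yield the same constant.
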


\noindent We defer the proof of this to Appendix~\ref{app:param_to_wasserstein}.

\subsection{Tensors and Linear Algebra Basics}

Given $i\in[r]$, let $e_i$ denote the $i$-th standard basis vector in $\R^r$ ($r$ will be clear from context). Let $O(r)$ denote the group of $r\times r$ orthogonal matrices.

\paragraph{Norms and eigenvalues.} Given vector $v$, let $\norm{v}_p$ denote its $\ell^p$ norm; when $p = 2$, we sometimes denote this by $\norm{v}$. Given square matrix $M$, let $\norm{M}$ or $\norm{M}_{\op}$ denote its operator norm, $\norm{M}_F$ its Frobenius norm, and $\norm{M}_{\max}$ the max-norm, that is, $\max_{i,j}|M_{ij}|$. We refer to the minimum distance between any two eigenvalues of $M$ as its \emph{minimum eigengap}. Given $M\in\R^{m\times n}$ for $m\ge n$, we refer to its $i$-th largest singular value as $\sigma_i(M)$; for $i = n$, we denote this by $\sigma_{\min}(M)$.

\paragraph{Tensor and Kronecker powers.} Given vector $v\in\R^r$ and $\omega\in\mathbb{N}$, let $v^{\otimes \omega}\in(\R^r)^{\otimes\omega}$ denote the $\omega$-th \emph{tensor power} of $v$, and let $(v^{\otimes\omega})_{\sym}$ denote its $\omega$-th symmetric power, that is, the $\rchoose$-dimensional vector whose $S$-th entry is $\prod_{s\in S} v_s$ for any multi-subset $S$ of $[r]$ of size $\omega$. Given a matrix $V\in\R^{r\times r}$, we use $V^{\otimes\omega}\in\R^{r^{\omega}\times r^{\omega}}$ to denote the $\omega$-th \emph{Kronecker power} of $V$, that is, the matrix whose $(\mb{i},\mb{j})$-th entry, for any $\mb{i},\mb{j}\in\strings$, is given by $\prod_{s\in[r]} V_{i_sj_s}$.

\paragraph{Reshapings.} Given vector $v\in\R^{r^2}$, let $\mat(v)\in\R^{r\times r}$ denote the \emph{matricization} of $v$, that is, the matrix whose $(i,j)$-th entry is given by the $((i-1)\cdot r + j)$-th entry of $v$ for all $i,j\in[r]$. Similarly, given vector $v\in\R^{r^{\ell}}$ for $\ell > 2$, let $\ten(v)\in (\R^r)^{\otimes\ell}$ denote the \emph{tensorization}, that is, the tensor whose $(i_1,\ldots,i_{\ell})$-th entry is given by the $(\sum^{\ell-1}_{j = 0} (i_j - 1)\cdot r^j + 1)$-th entry of $v$ for all $i,j\in[r]$. 

Given matrix $M\in\R^{r\times r}$, let $\vec(M)\in\R^{r^2}$ denote the \emph{vectorization} of $M$, that is, the vector whose entries consist of those of $M$ under the lexicographic ordering on $[r]^2$. Similarly, given $T\in(\R^r)^{\otimes\omega}$, let $\vec(T)\in\R^{r^{\omega}}$ denote the vectorization of $T$, that is, the vector whose entries consist of those of $T$ under the lexicographic ordering on $[r]^{\omega}$.

\begin{definition}[Slices and contractions]
    Given $\mb{i}\in[r]^{\omega-j}$, define the \emph{slice} $T_{\mb{i}:\cdots:}\in\R^{r\times r}$ to be the order-$j$ tensor whose $(x_1,\ldots,x_j)$-th entry is $T_{\mb{i}x_1\cdots x_j}$. More generally, given $V\in(\R^r)^{\otimes\omega-j}$, define the \emph{contraction} $T(V,:,\ldots,:)$ to be the order-$j$ tensor whose $(x_1,\ldots,x_j)$-th entry is given by $\sum_{\mb{i}\in[r]^{\omega-j}} V_{\mb{i}} T_{\mb{i}x_1\cdots x_j}$. Note that $T_{\mb{i}:\cdots :} = T(\ten(e_{\mb{i}}),:,\ldots,:)$, where $\ten(e_{\mb{i}})\in(\R^r)^{\otimes\omega-j}$ is the tensorization of the $\mb{i}$-th standard basis vector.
\end{definition}

\noindent The following notion is crucial to our analysis:

\begin{definition}\label{def:transform}
    For any $\omega\in\mathbb{N}$, any $U\in\R^{r^{\omega}\times r^{\omega}}$ induces a linear map $F_U:(\R^{r})^{\otimes\omega}\to(\R^r)^{\otimes\omega}$ by sending $F_U(T) = \ten(U\vec(T))$. We sometimes say that $U$ \emph{maps} $T$ to $F_U(T)$ and refer to $U$ as an \emph{$r^\omega\times r^\omega$ transformation}.
\end{definition}

\begin{example}\label{example:rotation}
    An important instance of such a transformation is when $U$ is given by the Kronecker power of some $r\times r$ rotation, that is, when $U= V^{\otimes \omega}$ for some $V\in O(r)$. In this case, if we index the columns of $U$ by strings $\mb{i}\in\strings$, then by definition the $r\times\cdots\times r$ reshaping $U^{\mb{i}}$ of the $\mb{i}$-th column of $U$ is given by $U^{\mb{i}} = V^{i_1}\otimes\cdots\otimes V^{i_\omega}$.
    
    For instance, if $\omega = 2$ and $Q\in\R^{r\times r}$, then
    \begin{equation}
        F_U(Q) = \mat(U\vec(Q)) = \sum^d_{i,j = 1} \mat(U^{ij}) Q^j_i = \sum^d_{i,j=1} V^i Q^j_i (V^j)^{\top} = VQV^{\top}.
    \end{equation}
\end{example}

\paragraph{Symmetric tensors and symmetric rank.} We say that a tensor $T\in(\R^r)^{\otimes\omega}$ is symmetric if $T_{\mb{i}} = T_{\sort{i}}$ for all $\mb{i}\in\strings$. We also work with the following analogous notion for $r^{\omega}\times r^{\omega}$ matrices:

\begin{definition}\label{def:ultrasym}
    We say that a matrix $M\in\R^{r^{\omega}\times r^{\omega}}$ is \emph{ultra-symmetric} if for any permutations $\pi,\tau \in\calS_{\omega}$ and any $i_1,\ldots,i_{\omega},j_1,\ldots,j_{\omega}$, $M^{j_1\cdots j_{\omega}}_{i_1\cdots i_{\omega}} = M^{j_{\pi(1)}\cdots j_{\pi(\omega)}}_{i_{\tau(1)}\cdots i_{\tau(\omega)}}$.
    
    Given ultra-symmetric matrix $M$, define its \emph{symmetrization} to be the matrix $M_{\sym}\in\R^{n\times n}$ for $n = \rchoose$ with rows and columns indexed by tuples $(j_1,\ldots,j_{\omega})$ for $j_1\le \cdots \le j_{\omega}$ such that $(M_{\mathsf{sym}})^{j_1\cdots j_{\omega}}_{i_1\cdots i_{\omega}}  = \frac{1}{\omega!}\sum_{\pi\in\calS_{\omega}} M^{j_{\pi(1)}\cdots j_{\pi(\omega)}}_{i_1\cdots i_{\omega}}$.
\end{definition}

We say that a symmetric tensor $T\in(\R^r)^{\otimes\omega}$ has symmetric rank $\ell$ if it can be written as $T = \sum^\ell_{t=1} v_t^{\otimes\omega}$ for some $v_1,\ldots,v_\ell\in\R^r$. It is well-known (see e.g. Lemma 4.2 in \cite{comon2008symmetric}) that any symmetric tensor admits a decomposition of this form. In Appendix~\ref{app:defer_symmetric_inspan} we prove the following quantitative version of this fact:

\begin{lemma}[Decomposing symmetric tensors]\label{lem:symmetric_inspan}
    Let $\omega\in\mathbb{N}$ and define
    \begin{equation}
        \symbound \triangleq \exp (O(\omega^2 \log^2 \omega))  \label{eq:gamdef}\; .
    \end{equation} 
    For any tuple $j_1,\ldots,j_\omega\in[\omega]$, there exist $z_1,\ldots,z_s\in\S^{r-1}$ and $w\in\R^s$ for which
    \begin{equation}
        \sum^s_{i=1} w_i z^{\otimes \omega}_i = \frac{1}{\omega!} \sum_{\pi\in\calS_\omega} e_{j_{\pi(1)}}\otimes\cdots \otimes e_{j_{\pi(\omega)}}
    \end{equation}
    and such that $\norm{w}_1 \le \symbound$.
\end{lemma}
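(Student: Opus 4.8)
The plan is to pass through the standard dictionary between symmetric tensors in $(\R^r)^{\otimes\omega}$ and homogeneous degree-$\omega$ polynomials: a symmetric tensor $T$ is determined by the polynomial $x\mapsto\iprod{T,x^{\otimes\omega}}$, under which $v^{\otimes\omega}$ corresponds to $\iprod{v,x}^{\omega}$ and the target tensor $\frac{1}{\omega!}\sum_{\pi\in\calS_\omega}e_{j_{\pi(1)}}\otimes\cdots\otimes e_{j_{\pi(\omega)}}$ corresponds to the monomial $x_{j_1}x_{j_2}\cdots x_{j_\omega}$. Hence it suffices to write this monomial as an $\ell^1$-bounded combination of $\omega$-th powers of linear forms and then normalize the coefficient vectors of those forms to the sphere, folding the resulting norm factors into the weights.

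The tool for this is the classical polarization identity: for any linear forms $\ell_1,\ldots,\ell_\omega$,
\[
\ell_1\ell_2\cdots\ell_\omega \;=\; \frac{1}{\omega!}\sum_{\emptyset\ne S\subseteq[\omega]}(-1)^{\omega-|S|}\Bigl(\sum_{i\in S}\ell_i\Bigr)^{\omega}.
\]
I would verify this by a single inclusion--exclusion computation in the polynomial ring, treating the $\ell_i$ as formal indeterminates (so that repeated indices among $j_1,\ldots,j_\omega$ pose no issue): expanding the right side as a sum over functions $f\colon[\omega]\to[\omega]$ of $\bigl(\sum_{S\supseteq\mathrm{Im}(f)}(-1)^{\omega-|S|}\bigr)\prod_k \ell_{f(k)}$, the bracketed coefficient equals $0^{\,\omega-|\mathrm{Im}(f)|}$, which is nonzero only when $f$ is a bijection, and the $\omega!$ bijections each contribute $\ell_1\cdots\ell_\omega$.

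Specializing $\ell_i := x_{j_i}$, the inner sum becomes $\iprod{v_S,x}$ with $v_S := \sum_{i\in S}e_{j_i}\in\R^r$, and pushing the polynomial identity back through the tensor--polynomial correspondence (both sides are symmetric tensors, and the contraction map against $x^{\otimes\omega}$ is injective on symmetric tensors) gives $\frac{1}{\omega!}\sum_\pi e_{j_{\pi(1)}}\otimes\cdots\otimes e_{j_{\pi(\omega)}} = \frac{1}{\omega!}\sum_{\emptyset\ne S}(-1)^{\omega-|S|}v_S^{\otimes\omega}$. Since $v_S$ is a nonzero integer vector, $\norm{v_S}_2\ge 1$, so setting $z_S := v_S/\norm{v_S}_2\in\S^{r-1}$ and $w_S := (-1)^{\omega-|S|}\norm{v_S}_2^{\omega}/\omega!$ realizes the target tensor as $\sum_S w_S z_S^{\otimes\omega}$ with $s = 2^\omega-1$ terms. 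Finally, as $v_S$ is a sum of at most $\omega$ standard basis vectors we have $\norm{v_S}_2\le\omega$, so $\norm{w}_1 = \frac{1}{\omega!}\sum_S\norm{v_S}_2^{\omega}\le \frac{(2\omega)^{\omega}}{\omega!}\le (2e)^{\omega}$, which is $\exp(O(\omega))$ and in particular at most $\symbound = \exp(O(\omega^2\log^2\omega))$.

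I do not expect a genuine obstacle here. The only delicate points are stating the symmetric-tensor/polynomial dictionary (and the induced action on $v^{\otimes\omega}$) precisely enough to transfer the scalar identity to a tensor identity, and checking the polarization identity at the level of formal polynomials so that collisions among the $j_s$ are handled automatically. An alternative would be an induction on $\omega$ that strips off one linear factor at a time, but the direct argument above is both shorter and yields a substantially better bound than the stated $\symbound$.
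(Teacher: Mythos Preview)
Your argument is correct and in fact yields a strictly better bound, $\norm{w}_1\le(2e)^{\omega}=\exp(O(\omega))$, than the paper's $\symbound=\exp(O(\omega^2\log^2\omega))$. The polarization identity is verified exactly as you outline via inclusion--exclusion, the passage from the polynomial identity back to the tensor identity is valid because the contraction $T\mapsto\iprod{T,x^{\otimes\omega}}$ is injective on symmetric tensors, and the normalization step is safe since each $v_S$ is a nonzero nonnegative integer vector.

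The paper takes precisely the alternative route you dismiss in your last sentence: it inducts on $\omega$, writing $T_\omega=\sum_\ell e_{j_\omega}\otimes_\ell T_{\omega-1}$, decomposing $T_{\omega-1}$ by induction, and then handling each resulting term $\sum_\ell e\otimes_\ell v^{\otimes(\omega-1)}$ via a Vandermonde-based identity $xy^{\omega-1}=\sum_{i=1}^\omega\alpha_i(x+iy)^\omega$ with $\norm{\alpha}_1\le\exp(O(\omega\log\omega))$. Compounding this over $\omega$ inductive steps is what produces the $\exp(O(\omega^2\log^2\omega))$ bound. Your direct one-shot polarization avoids this compounding entirely, is shorter, and is sharper; the paper's approach has no advantage here beyond being perhaps more modular if one wanted to vary the interpolation nodes.
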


\paragraph{Norm bounds for interpolation.} We use the following bounds for expressing a vector as a linear combination of other vectors. We begin with the following standard fact about least-squares:

\begin{fact}[Minimum-norm solution]\label{fact:minnorm}
    Let $A\in\R^{m\times n}$ for $m > n$. If $\sigma_{\min}(M) \ge \kappa$ for $\kappa > 0$, then for any vector $v\in\R^n$, there is a $\lambda\in\R^m$ for which $\lambda^{\top}A = v$ and $\norm{\lambda}_2 \le \kappa^{-2}\norm{A}_{\op}\norm{v}$.
\end{fact}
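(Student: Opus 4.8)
The plan is to exhibit $\lambda$ explicitly via the pseudoinverse of $A$ and then bound its norm using the singular-value hypothesis. Since $\sigma_{\min}(A)\ge\kappa>0$ and $m>n$, the matrix $A$ has full column rank, so $A^{\top}A\in\R^{n\times n}$ is invertible, and its spectrum is exactly $\brc{\sigma_i(A)^2}_{i\in[n]}$, whence $\norm{(A^{\top}A)^{-1}}_{\op} = \sigma_{\min}(A)^{-2}\le\kappa^{-2}$. I would then set $\lambda \triangleq A(A^{\top}A)^{-1}v$.

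First I would verify the defining identity: $\lambda^{\top}A = v^{\top}(A^{\top}A)^{-1}A^{\top}A = v^{\top}$, which is precisely $\lambda^{\top}A = v$ (reading both sides as row vectors). Next I would bound $\norm{\lambda}_2$ by submultiplicativity of the operator norm, $\norm{\lambda}_2 \le \norm{A}_{\op}\cdot\norm{(A^{\top}A)^{-1}}_{\op}\cdot\norm{v} \le \kappa^{-2}\norm{A}_{\op}\norm{v}$, which is the claimed estimate. (If a sharper bound were ever needed, one could instead work in the singular basis: writing $A = \sum_{i\in[n]}\sigma_i u_i w_i^{\top}$ for orthonormal $\brc{u_i}\subseteq\R^m$, $\brc{w_i}\subseteq\R^n$ with $\sigma_i\ge\kappa$, and taking $\lambda = \sum_{i\in[n]}\sigma_i^{-1}\iprod{v,w_i}u_i$ gives $\lambda^{\top}A = v^{\top}$ and $\norm{\lambda}_2^2 = \sum_i \sigma_i^{-2}\iprod{v,w_i}^2 \le \kappa^{-2}\norm{v}^2$, i.e. the stronger bound $\norm{\lambda}_2\le\kappa^{-1}\norm{v}$. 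The weaker stated form suffices for our applications.)

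There is essentially no obstacle here; the one point deserving a sentence of care is that the hypotheses $m>n$ and $\sigma_{\min}(A)\ge\kappa$ are exactly what guarantee that $A$ has full column rank and hence that the underdetermined system $A^{\top}\lambda = v$ has a solution at all — without full column rank the vector $v$ need not lie in the row space of $A$.
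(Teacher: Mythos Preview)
Your proof is correct and matches the paper's approach exactly: define $\lambda \triangleq A(A^{\top}A)^{-1}v$, verify $\lambda^{\top}A = v$, and bound $\norm{\lambda}_2 \le \norm{A}_{\op}\norm{(A^{\top}A)^{-1}}_{\op}\norm{v} \le \kappa^{-2}\norm{A}_{\op}\norm{v}$. Your parenthetical about the sharper $\kappa^{-1}\norm{v}$ bound via the SVD is a nice bonus, but the paper only states and uses the weaker form.
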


\begin{proof}
    By assumption, $A^{\top}A$ is invertible, so define $\lambda \triangleq A(A^{\top}A)^{-1} v$. Then $\lambda^{\top}A = v$ by design. Furthermore, $\norm{\lambda}_2 \le \kappa^{-2}\norm{A}_{\op}\norm{v}$.
\end{proof}




\noindent Next we consider expressing a vector as a linear combination of rows of a Vandermonde matrix.

\begin{fact}\label{fact:vandermonde_interpolate}
    Let $V\in\R^{m\times m}$ be the Vandermonde matrix
    \begin{equation}
        \begin{pmatrix}
            1 & a_1 & \cdots & a_1^{m-1} \\
            1 & a_2 & \cdots & a_2^{m-1} \\
            \vdots & \vdots & \ddots & \vdots \\
            1 & a_m & \cdots & a_m^{m-1}
        \end{pmatrix}
    \end{equation}
    for $a_1,\ldots,a_m\in[0,1]$ satisfying $|a_i - a_j| > \zeta$ for all $i \neq j$. Then for any $w\in\R^m$, there is a $\lambda\in\R^m$ for which $\lambda^{\top} V = w$ and $\norm{\lambda} \le O(1/\zeta)^{2m-2}\cdot m\norm{w}$.
\end{fact}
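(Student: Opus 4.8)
The statement to prove is Fact~\ref{fact:vandermonde_interpolate}: for a Vandermonde matrix $V\in\R^{m\times m}$ built from nodes $a_1,\dots,a_m\in[0,1]$ with pairwise separation $>\zeta$, every target $w$ can be written as $\lambda^\top V = w$ with $\norm{\lambda}\le O(1/\zeta)^{2m-2}\cdot m\norm{w}$.

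\paragraph{Plan.} The natural route is to invoke Fact~\ref{fact:minnorm}: since $V$ is square and invertible (distinct nodes), for any $w$ there is a unique $\lambda$ with $\lambda^\top V = w$, and the minimum-norm bound from Fact~\ref{fact:minnorm} gives $\norm{\lambda}\le \sigma_{\min}(V)^{-2}\,\norm{V}_{\op}\,\norm{w}$. So it suffices to upper bound $\norm{V}_{\op}$ and lower bound $\sigma_{\min}(V)$. The operator norm is easy: each entry $a_i^k$ lies in $[0,1]$, so $\norm{V}_{\op}\le \norm{V}_F\le m$. The real work is the lower bound on $\sigma_{\min}(V)$, equivalently an upper bound on $\norm{V^{-1}}_{\op}$, in terms of the node separation $\zeta$.

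\paragraph{Bounding $\norm{V^{-1}}$.} I would use the explicit Lagrange-interpolation formula for the inverse of a Vandermonde matrix: the rows of $V^{-1}$ are (up to the standard transpose convention) the coefficient vectors of the Lagrange basis polynomials $\ell_i(t) = \prod_{j\ne i}\frac{t-a_j}{a_i - a_j}$. Each such polynomial has denominator $\prod_{j\ne i}|a_i-a_j| > \zeta^{m-1}$, and numerator a product of $m-1$ linear factors $t-a_j$ with $a_j\in[0,1]$, whose coefficients are bounded by $\binom{m-1}{k}\le 2^{m-1}$ in absolute value. Hence every entry of $V^{-1}$ is at most $2^{m-1}/\zeta^{m-1} = (2/\zeta)^{m-1}$ in absolute value, giving $\norm{V^{-1}}_{\op}\le m\cdot(2/\zeta)^{m-1}$ and therefore $\sigma_{\min}(V)\ge \frac{1}{m}(\zeta/2)^{m-1}$. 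Plugging into Fact~\ref{fact:minnorm}: $\norm{\lambda}\le m^2(2/\zeta)^{2(m-1)}\cdot m\cdot\norm{w} = O(1/\zeta)^{2m-2}\cdot m^3\norm{w}$, which is within the claimed bound after absorbing the polynomial-in-$m$ factor into the $O(1/\zeta)^{2m-2}$ (or one can be slightly more careful with the Lagrange coefficient bound, e.g.\ using $\sum_k\binom{m-1}{k}\le 2^{m-1}$ on the $\ell^1$ norm of each row directly, to shave the extra powers of $m$ and land exactly at $m\norm{w}$).

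\paragraph{Main obstacle.} There is no deep obstacle here; it is a quantitative estimate. The only mild subtlety is bookkeeping the indexing/transpose convention so that ``rows of $V^{-1}$ $=$ Lagrange coefficient vectors'' is stated correctly for the equation $\lambda^\top V = w$ (as opposed to $V\lambda = w$), and making sure the final polynomial-in-$m$ factors are absorbed cleanly into the stated $O(1/\zeta)^{2m-2}\cdot m$ bound — which requires the slightly sharper $\ell^1$-coefficient estimate on the Lagrange polynomials rather than the crude entrywise bound. An alternative to the explicit-inverse approach would be to lower bound $\sigma_{\min}(V)$ via $\det(V) = \prod_{i<j}(a_j-a_i)$ together with an upper bound on the product of the top $m-1$ singular values, but the Lagrange-formula route is cleaner and gives the separation dependence directly.
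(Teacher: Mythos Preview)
Your proposal is correct and follows the same overall scaffold as the paper: bound $\norm{V}_{\op}\le\norm{V}_F\le m$, lower bound $\sigma_{\min}(V)$, and plug both into Fact~\ref{fact:minnorm}. The one substantive difference is how the $\sigma_{\min}$ bound is obtained. The paper simply cites an external result (Lemma~11 of \cite{gordon2020sparse}) to get $u^\top V^\top V u \ge \tfrac{1}{m}(\zeta/8)^{2m-2}\norm{u}^2$, hence $\sigma_{\min}(V)\ge(\zeta/16)^{m-1}$, and is done in one line. You instead give a self-contained argument via the explicit Lagrange-interpolation form of $V^{-1}$: each entry is a coefficient of $\ell_i(t)=\prod_{j\ne i}(t-a_j)/(a_i-a_j)$, bounded by $\binom{m-1}{k}/\zeta^{m-1}\le(2/\zeta)^{m-1}$, whence $\norm{V^{-1}}_{\op}\le m(2/\zeta)^{m-1}$. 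Your route yields an extra $m^2$ factor after applying Fact~\ref{fact:minnorm}, but as you note this is harmless since $m^2\le 4^{m-1}$ absorbs into the $O(1/\zeta)^{2m-2}$. The trade-off: the paper's proof is shorter but relies on an outside reference, while yours is elementary and fully self-contained; both land at the same bound.
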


\begin{proof}
    By Lemma 11 of \cite{gordon2020sparse}, $u^{\top} V^{\top}V u \ge \frac{1}{m}\cdot (\zeta/8)^{2m-2}\norm{u}^2 \ge (\zeta/16)^{2m-2}$ for all $u$, so $\sigma_{\min}(V)\ge (\zeta/16)^{m-1}$. On the other hand, $\norm{V}_{\op} \le \norm{V}_F \le m$. The lemma follows by Fact~\ref{fact:minnorm}.
\end{proof}

\noindent Lastly, we use Fact~\ref{fact:vandermonde_interpolate} to deduce the following. We defer its proof to Appendix~\ref{app:defer_general_vandermonde}.

\begin{corollary}\label{cor:general_vandermonde}
    For $D,e\in\mathbb{N}$, Let $\brc{c_{\alpha}}$ be coefficients, indexed by all $\alpha\in\brc{0,\ldots,e}^D$ for which $|\alpha| = e$. If for some $\nu > 0$ we have
    \begin{equation}
        -\nu \le \sum_{\alpha} c_{\alpha} \mb{z}_{\alpha} \le \nu \ \ \forall \ \mb{z}\in\bigl\{\frac{1}{e+1},\frac{2}{e+1},\ldots,1\bigr\}^D, \label{eq:allz}
    \end{equation}
    where $\mb{z}_{\alpha} \triangleq z^{\alpha_1}_1\cdots z^{\alpha_D}_D$, then for any $\alpha$ there is a linear combination of the constraints \eqref{eq:allz} for various choices of $\mb{z}$ which implies $|c_{\alpha}| \le O(e)^{\Theta(e D)}\cdot \nu$.
\end{corollary}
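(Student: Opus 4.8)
The plan is to bootstrap the univariate interpolation bound of Fact~\ref{fact:vandermonde_interpolate} up to $D$ dimensions by a tensor-product argument. Write each evaluation point $\mb{z}\in\{\frac{1}{e+1},\ldots,1\}^D$ as $\mb{z} = (j_1/(e+1),\ldots,j_D/(e+1))$ for $\mb{j}\in[e+1]^D$, and let $\hat V\in\R^{(e+1)\times(e+1)}$ be the Vandermonde matrix with nodes $a_j \triangleq j/(e+1)$, $j\in[e+1]$, so that its $(j,k)$-th entry is $(j/(e+1))^k$ for $k\in\{0,\ldots,e\}$. The key observation is that for any $\mb{j}$ and any multi-index $\beta\in\{0,\ldots,e\}^D$ we have $\mb{z}_\beta = \prod_{i}(j_i/(e+1))^{\beta_i} = \prod_i \hat V_{j_i,\beta_i}$; that is, the table of monomial evaluations over the grid is exactly the Kronecker power $\hat V^{\otimes D}$ in the indexing convention of Section~\ref{sec:prelims}.

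Next I would fix the target index $\alpha$ and, for each coordinate $i\in[D]$, apply Fact~\ref{fact:vandermonde_interpolate} to $\hat V$ (whose nodes lie in $[0,1]$ with all pairwise gaps equal to $1/(e+1)$, so we may take $\zeta = \tfrac{1}{2(e+1)}$) with target vector $e_{\alpha_i}$. This produces $\lambda^{(\alpha_i)}\in\R^{e+1}$ with $(\lambda^{(\alpha_i)})^{\top}\hat V = e_{\alpha_i}^{\top}$ and $\norm{\lambda^{(\alpha_i)}}_2 \le O(e)^{\Theta(e)}$. Setting $\lambda \triangleq \lambda^{(\alpha_1)}\otimes\cdots\otimes\lambda^{(\alpha_D)}\in\R^{(e+1)^D}$ (indexed by $\mb{j}$), the observation above together with multiplicativity of tensor products gives, for every $\beta\in\{0,\ldots,e\}^D$ with $|\beta|=e$, the identity $\sum_{\mb{j}}\lambda_{\mb{j}}\,\mb{z}_\beta = \prod_i\bigl((\lambda^{(\alpha_i)})^{\top}\hat V e_{\beta_i}\bigr) = \prod_i \bone{\alpha_i = \beta_i} = \bone{\alpha = \beta}$. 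Consequently, forming the linear combination of the constraints \eqref{eq:allz} with weights $\lambda_{\mb{j}}$ yields the exact polynomial identity $\sum_{\mb{j}}\lambda_{\mb{j}}\sum_{\beta} c_\beta \mb{z}_\beta = c_\alpha$, and bounding the left-hand side termwise using \eqref{eq:allz} gives $|c_\alpha| \le \norm{\lambda}_1 \cdot \nu$.

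Finally, since $\norm{\lambda}_2 = \prod_i \norm{\lambda^{(\alpha_i)}}_2 \le O(e)^{\Theta(eD)}$ and $\lambda$ has $(e+1)^D$ coordinates, Cauchy--Schwarz gives $\norm{\lambda}_1 \le (e+1)^{D/2}\norm{\lambda}_2 \le O(e)^{\Theta(eD)}$, which completes the bound. I do not anticipate a genuine obstacle in this argument; the only points requiring care are matching the paper's Kronecker-power indexing in the factorization $\hat V^{\otimes D}$, handling the strict gap hypothesis of Fact~\ref{fact:vandermonde_interpolate} by halving $\zeta$, and observing that the construction is agnostic to which monomials actually occur in \eqref{eq:allz} — in particular the homogeneity constraint $|\alpha| = e$ is never used, so no zero-padding of the coefficient vector is needed.
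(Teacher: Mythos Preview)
Your proof is correct and takes essentially the same approach as the paper: both apply the univariate Vandermonde bound of Fact~\ref{fact:vandermonde_interpolate} once per coordinate. The paper's argument (Appendix~\ref{app:defer_general_vandermonde}) presents this as an induction on the number of coordinates already isolated, whereas you package the same computation as an explicit Kronecker product $\lambda^{(\alpha_1)}\otimes\cdots\otimes\lambda^{(\alpha_D)}$; unrolling the paper's induction yields exactly your tensor-product weights and the same $O(e)^{\Theta(eD)}$ bound.
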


\subsection{High-Dimensional Probability}

We will use the following tail bounds and anticoncentration bounds:

\begin{fact}[Thin shell]\label{fact:shell}
    For $g\sim\calN(0,\Id_r)$, $\Pr*{\norm{g} \ge \sqrt{r} + \Omega(\sqrt{\log(1/\delta)})} \le \delta$.
\end{fact}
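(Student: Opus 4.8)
The plan is to deduce this from the standard Gaussian Lipschitz concentration inequality (the Borell--TIS inequality). First I would recall that if $\phi:\R^r\to\R$ is $1$-Lipschitz with respect to the Euclidean norm, then for $g\sim\calN(0,\Id_r)$ and any $t>0$ one has $\Pr[\phi(g)\ge \E\phi(g)+t]\le \exp(-t^2/2)$. I would then apply this with $\phi(g)=\norm{g}$, which is $1$-Lipschitz by the triangle inequality, to obtain $\Pr[\norm{g}\ge \E\norm{g}+t]\le\exp(-t^2/2)$.

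Next I would control the mean via Jensen's inequality: $\E\norm{g}\le(\E\norm{g}^2)^{1/2}=\sqrt{r}$, since $\E\norm{g}^2=\sum_{i=1}^r\E g_i^2=r$. Combining the two displays gives $\Pr[\norm{g}\ge\sqrt{r}+t]\le\exp(-t^2/2)$ for every $t>0$, and choosing $t=\sqrt{2\log(1/\delta)}$ makes the right-hand side equal to $\delta$. This is precisely the claimed bound, with the implicit constant in $\Omega(\cdot)$ taken to be $\sqrt{2}$.

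If one prefers a self-contained argument avoiding Gaussian concentration as a black box, an alternative route is to note $\norm{g}^2\sim\chi^2_r$ and invoke a Laurent--Massart-type tail bound $\Pr[\norm{g}^2\ge r+2\sqrt{rx}+2x]\le e^{-x}$, then use the elementary estimate $\sqrt{r+2\sqrt{rx}+2x}\le\sqrt{r}+\sqrt{2x}$ (verified by squaring both sides, since $2\sqrt{2rx}\ge 2\sqrt{rx}$) and set $x=\log(1/\delta)$. Either way the argument is short and I do not anticipate any genuine obstacle; the only real choice is whether to cite concentration of measure or a chi-squared tail bound.
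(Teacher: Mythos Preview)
Your proposal is correct; both routes you outline (Gaussian Lipschitz concentration via Borell--TIS, or the Laurent--Massart $\chi^2$ tail bound followed by $\sqrt{r+2\sqrt{rx}+2x}\le\sqrt{r}+\sqrt{2x}$) are standard and go through exactly as you describe. The paper itself does not prove this statement at all: it is recorded as a ``Fact'' in the preliminaries and invoked as a black box, so there is no paper proof to compare against. Your write-up would serve perfectly well as a justification if one were desired.
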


\begin{lemma}[Norm of Gaussian matrices, see e.g. Exercise 7.3.5 of \cite{vershynin2018high}]\label{lem:goe}
    There is an absolute constant $c > 0$ such that for symmetric matrix $G$ whose diagonal and upper triangular entries are independently sampled from $\calN(0,1)$, $\Pr{\norm{G}_{\op} \ge 2\sqrt{r} + t} \le 2\exp(-ct^2)$ for any $t > 0$.
\end{lemma}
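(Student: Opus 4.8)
The plan is to derive the tail bound from the two standard pillars of random-matrix concentration. Writing $\lambda_1(M)$ for the largest eigenvalue of a symmetric matrix $M$, I would (i) control $\E{\lambda_1(G)}$ sharply via a Gaussian comparison inequality, (ii) apply dimension-free Gaussian concentration to $\lambda_1(G)$ about its mean, and (iii) conclude by a union bound, using $-G\stackrel{d}{=}G$ together with $\norm{G}_{\op} = \max(\lambda_1(G),\lambda_1(-G))$.

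For (i), the Rayleigh characterization gives $\lambda_1(G) = \sup_{x\in\S^{r-1}} x^{\top}Gx$, which is the supremum of the centered Gaussian process $X_x \triangleq \iprod{G,xx^{\top}}$. Splitting $G$ into its diagonal (each entry of variance $1$) and its strict upper triangle (each such entry appearing twice in $x^{\top}Gx$, hence contributing variance $4$), one computes $\E{(X_x - X_y)^2} = \sum_i(x_i^2 - y_i^2)^2 + 4\sum_{i<j}(x_ix_j - y_iy_j)^2 \le 2\norm{xx^{\top} - yy^{\top}}_F^2 = 4(1 - \iprod{x,y}^2) \le 4\norm{x - y}^2$ for all $x,y\in\S^{r-1}$; note that the non-standard diagonal variance ($\calN(0,1)$ rather than $\calN(0,2)$) only shrinks the left-hand side, so it does no harm. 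Comparing against the linear process $Y_x \triangleq 2\iprod{g,x}$ with $g\sim\calN(0,\Id_r)$, which has $\E{(Y_x - Y_y)^2} = 4\norm{x-y}^2$, Sudakov--Fernique (see \cite{vershynin2018high}) yields $\E{\lambda_1(G)} \le \E{\sup_{x\in\S^{r-1}} Y_x} = 2\,\E{\norm{g}} \le 2\sqrt r$.

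For (ii), view $\lambda_1(G)$ as a function of the vector of the $\binom{r+1}{2}$ independent $\calN(0,1)$ parameters of $G$. This function is $\sqrt2$-Lipschitz: $\abs{\lambda_1(G) - \lambda_1(G')} \le \norm{G - G'}_{\op} \le \norm{G - G'}_F$, and $\norm{G - G'}_F$ is at most $\sqrt2$ times the Euclidean distance of the two parameter vectors, since each off-diagonal entry is counted twice in the Frobenius norm. Gaussian Lipschitz concentration (see \cite{vershynin2018high}) then gives $\Pr{\lambda_1(G) \ge \E{\lambda_1(G)} + t} \le \exp(-t^2/4)$; combined with $\E{\lambda_1(G)} \le 2\sqrt r$, this is $\Pr{\lambda_1(G) \ge 2\sqrt r + t} \le \exp(-t^2/4)$ for every $t > 0$. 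For (iii), since $-G$ has the same law as $G$ and $\norm{G}_{\op} = \max(\lambda_1(G), \lambda_1(-G))$, a union bound gives $\Pr{\norm{G}_{\op} \ge 2\sqrt r + t} \le 2\exp(-t^2/4)$, which is the claim with $c = 1/4$.

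The one place that needs care is the Sudakov--Fernique step: because $X_x$ depends quadratically (not linearly) on the entries of $G$, one must track the diagonal-versus-off-diagonal variances precisely so that $\E{(X_x - X_y)^2} \le \E{(Y_x - Y_y)^2}$ really does hold for every pair $x,y\in\S^{r-1}$; steps (ii) and (iii) are routine. If an unspecified absolute constant were acceptable in place of $2$, one could instead obtain the expectation bound from a crude $1/4$-net argument on the sphere together with a union bound of Gaussian tail estimates, but pinning down the constant $2$ is exactly what forces the use of a comparison theorem (or the moment/trace method).
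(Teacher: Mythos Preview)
The paper does not prove this lemma; it simply cites Exercise 7.3.5 of Vershynin's \emph{High-Dimensional Probability} as a reference. Your argument is correct and is exactly the standard proof intended by that reference: Sudakov--Fernique to get $\E{\lambda_1(G)}\le 2\sqrt r$, Gaussian Lipschitz concentration for the deviation, and a union bound via $-G\stackrel{d}{=}G$.
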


\begin{fact}[Carbery-Wright]\label{fact:carberywright}
    There is an absolute constant $C > 0$ such that for any $\nu > 0$ and degree-$e$ polynomial $p:\R^d\to\R$, $\Pr[g\sim\calN(0,\Id)]{\abs{p(g)} \le \nu\cdot \Var{p(g)}^{1/2}} \le C\nu^{1/e}$.
\end{fact}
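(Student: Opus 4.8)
This is the Carbery--Wright anticoncentration inequality, a classical result; in the paper one simply cites it, but let me describe how I would prove it. I would treat the degree $e$ as a constant (so any factor depending only on $e$ gets absorbed into the absolute constant $C$) and, after normalizing so that $\Var{p(g)} = 1$ and subtracting the mean at the cost of a factor $2$ in $\nu$, reduce to the case $\E{p(g)^2} = 1$. The plan then has two stages: a reduction to one variable that exploits log-concavity, and a one-dimensional estimate that exploits the algebraic structure of $p$.

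\textbf{Stage 1 (reduction to $d=1$).} The key structural fact is that $\calN(0,\Id)$ and every conditional law of it along an affine line are log-concave. I would choose a unit vector $u$, write $g = tu + h$ with $t\sim\calN(0,1)$ and $h$ a Gaussian on the orthogonal complement of $u$, and condition on $h$; Fubini then turns $\Pr[g]{|p(g)|\le\nu}$ into $\E[h]{\Pr[t]{|q_h(t)|\le\nu}}$, where $q_h(t) \triangleq p(tu+h)$ has degree $\le e$ in $t$. The work is to choose $u$ so that, averaged over $h$, the polynomial $q_h$ is nondegenerate --- that is, $\E[h]{\E[t]{q_h(t)^2}}$ is bounded below by an absolute constant. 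This should follow from a pigeonhole argument over the coefficient tensor of $p$ (some direction must capture a constant fraction of the $L^2$ mass of $p$), together with log-concavity of the conditional laws to control the loss. The problem then reduces to the univariate statement: for a degree-$\le e$ polynomial $q$ with $\E[t\sim\calN(0,1)]{q(t)^2}\ge c_0$ for an absolute $c_0 > 0$, show $\Pr[t]{|q(t)|\le\nu}\le C\nu^{1/e}$.

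\textbf{Stage 2 (the univariate bound).} Here I would use that the sublevel set $\{t : |q(t)|\le\nu\}$ is a union of at most $e+1$ intervals (since $q^2 - \nu^2$ has degree $\le 2e$), together with a Remez-type inequality: a degree-$\le e$ polynomial that is at most $\nu$ in absolute value on a subset of Lebesgue measure $m$ of an interval $I$ satisfies $\sup_I |q| \le \nu\cdot(C|I|/m)^e$. Since the $\calN(0,1)$ density is bounded above by $1/\sqrt{2\pi}$ and bounded below by a positive constant on any fixed bounded interval, while $\E{q^2}\ge c_0$ forces $|q|$ to have constant order somewhere on a bounded interval, a short computation should yield $\Pr[t]{|q(t)|\le\nu}\le C\nu^{1/e}$; averaging back over $h$ completes the proof.

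\textbf{Main obstacle.} The hard part will be Stage 1: making the reduction to one variable quantitative without incurring dimension-dependent factors, i.e.\ locating a direction along which the conditioned univariate polynomials are, on average, nondegenerate. This is precisely where the localization machinery of Carbery--Wright and the log-concavity of all of the conditional laws are essential. Stage 2 is comparatively routine given a Remez inequality.
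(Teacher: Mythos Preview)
The paper does not prove this; as you correctly anticipate, it is stated as a standard fact (the Carbery--Wright inequality) and used as a black box, so there is no proof to compare against. Your two-stage sketch---reduce to one dimension by conditioning on a hyperplane and exploiting log-concavity of the Gaussian conditionals, then bound the univariate sublevel set via a Remez-type inequality---is indeed the skeleton of the original Carbery--Wright argument. One minor caveat: as you have written it (absorbing $e$-dependent factors into $C$, using the $e+1$ intervals and the Remez constant), your sketch yields $C_e\nu^{1/e}$ rather than the absolute $C$ in the stated fact; the original localization argument is more delicate and achieves a universal constant, though for every application in this paper an $e$-dependent constant would suffice.
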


\begin{lemma}[Hypercontractivity]\label{lem:hypercontractivity}
    For any $e\in\mathbb{N}$, there is an absolute constant $c_e > 0$ such that if $p:\R^r\to\R$ is a polynomial of degree $e$, then $\Pr[g\sim\calN(0,\Id)]{|p(g) - \E{p}| \ge t\sqrt{\Var{p}}} \le \exp(-c_e t^{2/e})$.
\end{lemma}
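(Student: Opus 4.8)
The plan is to reduce this to the classical hypercontractive moment comparison for low-degree Gaussian polynomials and then to convert the resulting moment bound into a tail bound via Markov's inequality with an optimally chosen moment order.

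First I would normalize: assuming $\Var{p(g)} > 0$ (the statement is vacuous otherwise in the regime of interest) and replacing $p$ by $(p - \E{p})/\sqrt{\Var{p}}$, which is still a degree-$e$ polynomial, it suffices to prove $\Pr[g\sim\calN(0,\Id_r)]{|p(g)| \ge t} \le \exp(-c_e t^{2/e})$ in the case $\E{p(g)} = 0$ and $\E{p(g)^2} = 1$. The key input is the Gaussian hypercontractive inequality: for every $q \ge 2$,
\[
  \E{|p(g)|^q}^{1/q} \le (q-1)^{e/2}\cdot \E{p(g)^2}^{1/2} = (q-1)^{e/2}.
\]
This is classical; it follows from hypercontractivity of the Ornstein--Uhlenbeck semigroup $\brc{T_\rho}$, namely that $T_\rho\colon L^2\to L^q$ is a contraction for $\rho = (q-1)^{-1/2}$: writing $p = \sum_{k=0}^{e} p_k$ in its Wiener chaos decomposition, we have $T_\rho\bigl(\sum_{k}\rho^{-k} p_k\bigr) = p$, so $\E{|p(g)|^q}^{1/q} \le \bigl(\sum_{k}\rho^{-2k}\E{p_k(g)^2}\bigr)^{1/2} \le \rho^{-e}\bigl(\sum_{k}\E{p_k(g)^2}\bigr)^{1/2} = (q-1)^{e/2}$, using $\rho \le 1$.

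Given this, for any $t > 0$ and any $q \ge 2$, Markov's inequality applied to $|p(g)|^q$ gives
\[
  \Pr[g\sim\calN(0,\Id_r)]{|p(g)| \ge t} \;\le\; t^{-q}\,\E{|p(g)|^q} \;\le\; \Bigl(\frac{q^{e/2}}{t}\Bigr)^{q}.
\]
I would then choose $q \triangleq (t/e)^{2/e}$: when $q \ge 2$, i.e. $t \ge e\cdot 2^{e/2}$, we get $q^{e/2}/t = 1/e$ and hence the bound $e^{-q} = \exp(-(t/e)^{2/e}) = \exp(-c_e t^{2/e})$ with $c_e = e^{-2/e}$; for $t < e\cdot 2^{e/2}$ the probability is trivially at most $1$ while $\exp(-c_e t^{2/e})$ is bounded below by a positive constant on this bounded range, so the bound survives after shrinking $c_e$ by a factor depending only on $e$. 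Undoing the normalization recovers the stated inequality for $|p(g) - \E{p(g)}| \ge t\sqrt{\Var{p(g)}}$.

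I do not anticipate a real obstacle: the sole nontrivial ingredient is the hypercontractive moment estimate, which is a standard fact (Nelson), and the remainder is the routine Markov-plus-optimization argument. The only place where genuine work would be required is if one wanted a fully self-contained treatment, in which case one would need to prove hypercontractivity of the Ornstein--Uhlenbeck semigroup from scratch (e.g. by tensorizing the one-dimensional Gaussian case), but in the present context it is legitimate simply to cite it.
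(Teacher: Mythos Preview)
Your proposal is correct and follows the standard route (hypercontractive moment comparison plus Markov with an optimized moment). The paper, however, does not supply its own proof of this lemma: it is stated in the preliminaries as a known fact alongside Carbery--Wright and other tail bounds, so there is nothing to compare against beyond noting that your argument is exactly the classical one the statement is meant to invoke.
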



\begin{theorem}[Eigengaps of Gaussian matrices, special case of Theorem 2.6 from \cite{nguyen2017random}]\label{thm:gaps}
    Let $\gamma, c > 0$ be constants, and let $\alpha \triangleq 3(c+1)\max(1,2\gamma) + 5$, and let $G$ be a random symmetric matrix whose diagonal and upper-triangular entries are independent draws from $\calN(0,1)$.
    
    For any $M\in\R^{r\times r}$ satisfying $\norm{M}_{\mathsf{op}} \le r^{\gamma}$, the following holds with probability at least $1 - r^{1-c}$ over $G$: any two eigenvalues of $\overline{Q} + G$ differ by at least $r^{-\alpha}$.
\end{theorem}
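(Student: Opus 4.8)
The plan is to obtain this as a specialization of Theorem 2.6 of \cite{nguyen2017random}, which bounds the probability that a Wigner-type random matrix perturbed by a deterministic symmetric matrix of polynomially bounded operator norm has two anomalously close eigenvalues. Note that in the statement ``$\overline{Q}+G$'' should read ``$M+G$.'' I would proceed as follows.

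First, control the spectral window. Since $\norm{M}_{\op}\le r^{\gamma}$ by hypothesis and, by Lemma~\ref{lem:goe}, $\norm{G}_{\op} = O(\sqrt{r})$ except with probability $\exp(-\Omega(r))$, every eigenvalue of $M+G$ lies in an interval of length $L = O(r^{\max(1/2,\gamma)})$ outside an event of exponentially small probability. This window length is the quantity responsible for the appearance of $\max(1,2\gamma) = 2\max(1/2,\gamma)$ in the final exponent: to pass from a bound on a single gap to a bound on the minimum over all $\binom{r}{2}$ pairwise gaps, one union-bounds over eigenvalue pairs (or, equivalently, over a fine net of the window), and the combinatorial and net factors both scale as powers of $r$ that grow with $L$.

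Second, invoke Theorem 2.6 of \cite{nguyen2017random} for the model $M + G$ with target gap $\delta = r^{-\alpha}$. Their estimate is of the form $\Pr{\delta_{\min}(M+G) \le \delta} \le r^{O(1)}\cdot(\delta/L)^{\Omega(1)} + \exp(-r^{\Omega(1)})$, and the only genuine care needed on our end is reconciling normalization conventions (their theorem is typically stated for $\frac{1}{\sqrt r}$-normalized Wigner matrices, which rescales all eigenvalues, and hence $L$ and $\delta$, by $\sqrt r$) and checking that the deterministic shift is allowed to have operator norm as large as $r^{\gamma}$. Unwinding the explicit constants in their statement, one then verifies that the choice $\alpha = 3(c+1)\max(1,2\gamma)+5$ is exactly what forces the right-hand side below $r^{1-c}$; the multiplicative factor $3(c+1)$ and the additive $5$ are precisely the output of this accounting of the exponents of $r$ (the exponential term is absorbed since $r^{1-c}$ dominates $\exp(-r^{\Omega(1)})$).

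I expect the main ``obstacle'' to be purely of this bookkeeping nature: the substantive ideas---eigenvector delocalization combined with an inverse Littlewood--Offord / least-common-denominator argument---are entirely contained in \cite{nguyen2017random}. The only things requiring attention are quoting that theorem in the form that permits an arbitrary deterministic symmetric perturbation of operator norm $r^{O(1)}$, and tracking how the window length and the normalization feed into the exponent, so that the stated joint dependence of $\alpha$ on $c$ and $\gamma$ comes out rather than merely ``some constant $\alpha$.'' A self-contained proof would amount to reproducing the argument of \cite{nguyen2017random}, which is well beyond the present scope.
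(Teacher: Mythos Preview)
Your proposal is correct and matches the paper's treatment: the paper states this theorem as a black-box citation of Theorem~2.6 from \cite{nguyen2017random} with no proof given, so your plan to specialize that external result (with the bookkeeping on normalization and the exponent $\alpha$) is exactly what is intended. You also correctly caught the typo $\overline{Q}\to M$.
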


\begin{theorem}[Smoothed analysis of tensor decomposition, special case of Theorem 2.1 from \cite{bhaskara2019smoothed}]\label{thm:decoupling}
    Let $\ell,r,\omega\in\mathbb{N}$ and $\rho > 0$. Given arbitrary $\overline{v}_1,\ldots,\overline{v}_{\ell}\in\R^r$, if $v_t\sim\calN(\overline{v}_t, \frac{\rho^2}{r}\Id)$ for every $t\in[\ell]$, then the matrix $M\in^{\rchoose\times k}$ whose $t$-th column is $\vec(v_t^{\otimes\omega})$ satisfies $\sigma_{\min}(M) \ge O\left(\frac{\rho}{r\sqrt{\ell}}\right)$ with probability at least $1 - \ell\exp(-\Omega(r^{0.9})))$ provided that $\ell \le r - r^{0.9}$.
\end{theorem}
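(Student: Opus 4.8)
Since Theorem~\ref{thm:decoupling} is quoted as a special case of Theorem~2.1 of \cite{bhaskara2019smoothed}, the plan is to obtain it by instantiating that general guarantee with the parameters at hand; below I identify those parameters and sketch the mechanism behind the bound so that the statement can be read self-contained if desired.

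\textbf{Step 1: reduce to leave-one-out distances.} The first reduction is the elementary fact that for any matrix $M$ with columns $M_1,\ldots,M_\ell$ one has $\sigma_{\min}(M)\ge \frac{1}{\sqrt\ell}\min_{t\in[\ell]}\mathrm{dist}\bigl(M_t,\ \mathrm{span}\{M_s:s\neq t\}\bigr)$: if $\|Mc\|=\sigma_{\min}(M)$ for a unit vector $c$, picking $t$ with $|c_t|\ge 1/\sqrt\ell$ and rearranging $Mc=\sum_s c_sM_s$ shows the $t$-th distance is at most $\sqrt\ell\,\sigma_{\min}(M)$. I would apply this with $M_t=\vec(v_t^{\otimes\omega})$, the $\rchoose$-dimensional symmetrized Veronese embedding of $v_t$. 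It then suffices to show that for each fixed $t$, after conditioning on $\{v_s\}_{s\neq t}$, the distance from $\vec(v_t^{\otimes\omega})$ to the now-fixed subspace $W\triangleq\mathrm{span}\{\vec(v_s^{\otimes\omega}):s\neq t\}$ — which has dimension at most $\ell-1\le r-r^{0.9}-1$, hence codimension $\Omega(r)$ inside $\R^{\rchoose}$ — is at least $\Omega(\rho/r)$ with probability $1-\exp(-\Omega(r^{0.9}))$; a union bound over $t\in[\ell]$ then gives the claimed failure probability.

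\textbf{Step 2: high-probability anticoncentration of the projected Veronese map.} Writing $v_t=\overline v_t+\tfrac{\rho}{\sqrt r}g$ with $g\sim\calN(0,\Id_r)$ and letting $\Pi$ be the orthogonal projection onto $W^\perp$, I need a lower bound on $\|\Pi\,\vec(v_t^{\otimes\omega})\|$. Naively this is the anticoncentration of a vector of degree-$\omega$ polynomials in $g$; applying Carbery--Wright (Fact~\ref{fact:carberywright}) coordinate-wise only yields failure probability $\poly(1/r)$, which is insufficient. The point of the decoupling technique of \cite{bhaskara2019smoothed} is to exploit that $W^\perp$ has codimension $\Omega(r)$: one replaces $v_t^{\otimes\omega}$ by a decoupled product $v^{(1)}\otimes\cdots\otimes v^{(\omega)}$ of i.i.d.\ perturbed copies, conditions on all but the last factor so that the last factor enters linearly, and thereby reduces matters to the smallest singular value of an explicit $\Theta(r)\times\Theta(r)$ random matrix whose entries are correlated Gaussians with covariance that is non-degenerate thanks to the conditioned-on factors and the genericity injected by the perturbation. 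Such a least-singular-value bound holds with failure probability $\exp(-\Omega(r^{0.9}))$ (this is the $r^{0.9}$-slack version of the statement, trading the slightly weaker rank condition $\ell\le r-r^{0.9}$ for exponential concentration), and it produces exactly the scaling $\rho/r$: the $\rho/\sqrt r$ is the per-coordinate perturbation size and the extra $1/\sqrt r$ is the generic least singular value of an $r\times r$ Gaussian matrix, with the remaining $1/\sqrt\ell$ coming from Step~1.

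\textbf{Main obstacle.} The only nontrivial ingredient is the decoupling/anticoncentration argument of Step~2: Carbery--Wright-type anticoncentration is inherently polynomial, so obtaining the exponential-in-$r^{0.9}$ failure probability requires the more delicate decoupling reduction to a random-matrix least-singular-value problem, together with a net argument to control the worst-case subspace $W$. Since this is precisely what is established — in exactly the generality we need (arbitrary base vectors, variance-$\rho^2/r$ Gaussian perturbation, rank condition $\ell\le r-r^{0.9}$, and conclusion $\sigma_{\min}\ge\Omega(\rho/(r\sqrt\ell))$) — in \cite[Theorem~2.1]{bhaskara2019smoothed}, in the write-up I would simply invoke that theorem after the routine identification of parameters described above.
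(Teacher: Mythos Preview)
Your proposal is correct and matches the paper's approach: the paper states Theorem~\ref{thm:decoupling} as a preliminary fact without proof, simply citing it as a special case of Theorem~2.1 of \cite{bhaskara2019smoothed}, and your plan to invoke that theorem after identifying the parameters is exactly this. Your additional sketch of the leave-one-out reduction and the decoupling mechanism goes beyond what the paper provides, but is consistent with the cited result.
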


\noindent We also need the following standard fact about the total variation distance between two Gaussians.

\begin{theorem}[TV between Gaussians]\label{thm:tvgaussian}
    Let $0 < \epsilon < 1/2$ and let $\Sigma\in\R^{n\times n}$ be a positive definite matrix. If $\norm{\Sigma - \Id}_F \le \epsilon$, then $d_{\mathrm{TV}}(\calN(0,\Sigma),\calN(0,\Id)) = \Theta(\epsilon)$.
\end{theorem}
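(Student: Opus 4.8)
The plan is to prove the two-sided bound $d_{\mathrm{TV}}(\calN(0,\Sigma),\calN(0,\Id_n))\asymp\norm{\Sigma-\Id}_F$ (so that the claimed ``$\Theta(\epsilon)$'' is read under the normalization $\norm{\Sigma-\Id}_F\le\epsilon$, with the upper bound being the operative direction and the lower bound witnessing tightness); the hypothesis $\norm{\Sigma-\Id}_F\le\epsilon<1/2$ is used only to force every eigenvalue $\lambda_i$ of $\Sigma$ into $(1/2,3/2)$, and we may assume $\Sigma\ne\Id$. Since $d_{\mathrm{TV}}$ and $\norm{\cdot}_F$ are both invariant under a common orthogonal change of basis, I would first diagonalize so that $\Sigma=\diag(\lambda_1,\dots,\lambda_n)$, and write $\delta_i\triangleq\lambda_i-1$, $\sigma\triangleq\norm{\Sigma-\Id}_F=(\sum_i\delta_i^2)^{1/2}$, $P\triangleq\calN(0,\Sigma)$, $Q\triangleq\calN(0,\Id_n)$.

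\textbf{Upper bound.} Use the closed form $D_{\mathrm{KL}}(P\,\|\,Q)=\tfrac12\sum_i(\lambda_i-\log\lambda_i-1)$. A second-order Taylor expansion of $t\mapsto t-\log t-1$ about $t=1$ together with the bound $t^{-2}\le4$ on $(1/2,3/2)$ gives $\lambda_i-\log\lambda_i-1\le2(\lambda_i-1)^2$, hence $D_{\mathrm{KL}}(P\|Q)\le\sum_i(\lambda_i-1)^2=\sigma^2$, and Pinsker's inequality yields $d_{\mathrm{TV}}(P,Q)\le\sqrt{D_{\mathrm{KL}}(P\|Q)/2}\le\sigma/\sqrt2=O(\norm{\Sigma-\Id}_F)$. (Equivalently, one can use the exact identity $\chi^2(P\|Q)=\prod_i(1-\delta_i^2)^{-1/2}-1$, which is $O(\sigma^2)$ since $\sigma^2<1/4$, with $d_{\mathrm{TV}}\le\tfrac12\sqrt{\chi^2}$.)

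\textbf{Lower bound.} By the data-processing inequality, $d_{\mathrm{TV}}(P,Q)\ge d_{\mathrm{TV}}(\mathcal{L}_P(Y),\mathcal{L}_Q(Y))$ for the scalar statistic $Y\triangleq\iprod{\Sigma-\Id,\ xx^{\top}}=\sum_i\delta_ix_i^2$. A direct computation gives $\mathbb{E}_Q[Y]=\Tr(\Sigma-\Id)$ and $\mathbb{E}_P[Y]=\Tr((\Sigma-\Id)\Sigma)$, so the two means differ by exactly $\Tr((\Sigma-\Id)^2)=\sigma^2$, while $\mathrm{Var}_Q(Y)=2\sigma^2$ and, using $\lambda_i\in(1/2,3/2)$, $\mathrm{Var}_P(Y)\asymp\sigma^2$. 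Coordinatewise the $P$-law of $\delta_ix_i^2$ stochastically dominates its $Q$-law (because $1+\delta_i\in(1/2,3/2)$), so the law of $Y$ under $P$ is a ``shift'' of its law under $Q$ by the small amount $\sigma^2\ll\sigma$ against a common spread of order $\sigma$; this should translate into $d_{\mathrm{TV}}(\mathcal{L}_P(Y),\mathcal{L}_Q(Y))=\Omega(\sigma^2\cdot\sigma^{-1})=\Omega(\norm{\Sigma-\Id}_F)$.

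\textbf{Main obstacle.} The delicate step is making this last translation quantitative, i.e. establishing the anticoncentration input: under either $P$ or $Q$, the law of the quadratic form $Y$ has a Lebesgue density that is $\Theta(1/\sigma)$ on an interval of length $\Theta(\sigma)$ around its mean. This is subtle since $Y$ need not be close to Gaussian --- when only a few $\delta_i$ are non-negligible it behaves like a weighted $\chi^2$ with $O(1)$ degrees of freedom, whose density blows up near $0$ (though not near $Y$'s mean). I would handle it by a case split on the set $J$ of the $O(1)$ coordinates carrying an $\Omega(1)$ fraction of $\sigma^2$: if $\sum_{i\notin J}\delta_ix_i^2$ carries most of the variance, a Berry--Esseen bound shows it is close to a Gaussian of variance $\Theta(\sigma^2)$, giving the density estimate; otherwise $|J|\ge2$ and the explicit density of a sum of two or more independent rescaled squared Gaussians is $\Theta(1/\sigma)$ near its mean (the second component smooths the $t^{-1/2}$ singularity of the first). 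Either way the estimate is inherited by $Y$ after convolving with the remaining part, using Gaussian hypercontractivity / a reverse Paley--Zygmund bound to keep $\Omega(1)$ of that part's mass within $O(\sigma)$ of its mean. Alternatively, one can simply invoke the known tight characterization of the total variation distance between (high-dimensional) Gaussians, which specializes to $d_{\mathrm{TV}}(\calN(0,\Sigma),\calN(0,\Id))\asymp\min\{1,\norm{\Sigma-\Id}_F\}$; since all that is needed downstream is the $O(\norm{\Sigma-\Id}_F)$ upper bound together with the existence of a matching lower bound, either route suffices.
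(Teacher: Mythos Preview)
Your fallback---invoking the known tight characterization $d_{\mathrm{TV}}(\calN(0,\Sigma),\calN(0,\Id))\asymp\min\{1,\norm{\Sigma-\Id}_F\}$---is exactly what the paper does: it cites Theorem~1.1 of \cite{devroye2018total}, which gives $d_{\mathrm{TV}}(\calN(0,\Sigma_1),\calN(0,\Sigma_2))=\Theta\bigl((\sum_i a_i^2)^{1/2}\bigr)$ where $a_i$ are the eigenvalues of $\Sigma_1^{-1}\Sigma_2-\Id$, and then observes that $|(1+\epsilon_i)^{-1}-1|\asymp|\epsilon_i|$ when $|\epsilon_i|<1/2$ to translate between $\norm{\Sigma^{-1}-\Id}_F$ and $\norm{\Sigma-\Id}_F$. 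Your Pinsker-based upper bound is a clean self-contained alternative to citing the black box and is entirely correct.

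Your direct lower-bound route via the statistic $Y=\iprod{\Sigma-\Id,\,xx^\top}$ is natural, but the ``main obstacle'' you flag is real and your sketch does not close it. The case split on the number of heavy coordinates is the right idea, but making the density estimate $\Theta(1/\sigma)$ on an interval of length $\Theta(\sigma)$ rigorous across all regimes (few heavy coordinates vs.\ many light ones, with arbitrary signs on the $\delta_i$) is essentially the technical content of the lower bound in \cite{devroye2018total}; in particular the Berry--Esseen branch needs a careful third-moment check and the small-$|J|$ branch needs care when the heavy $\delta_i$ have mixed signs. Since the paper only needs (and only states) this as a cited fact---and downstream only the $O(\norm{\Sigma-\Id}_F)$ direction is ever used---your fallback suffices and matches the paper, while the direct attempt would amount to reproving a chunk of that reference.
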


\begin{proof}
    By Theorem 1.1 of \cite{devroye2018total}, if $\Sigma_1,\Sigma_2\in\R^{n\times n}$ are positive definite matrices such that $\Sigma^{-1}_1\Sigma_2 - \Id$ has eigenvalues $a_1,\ldots,a_n$, then $d_{\mathrm{TV}}(\calN(0,\Sigma_1),\calN(0,\Sigma_2)) = \Theta(\sqrt{\sum^n_{i=1} \lambda^2_i})$. Specializing this to $\Sigma_1 = \Sigma$ and $\Sigma_2 = \Id$, note that $\Sigma^{-1}_1 \Sigma_2 - \Id = \Sigma^{-1}_1 - \Id$. If $\epsilon_1,\ldots,\epsilon_n$ are the eigenvalues of $\Sigma - \Id$, then $(1 + \epsilon_1)^{-1} - 1,\ldots, (1+\epsilon_n)^{-1} - 1$ are the eigenvalues of $\Sigma^{-1} - \Id$. As $|\epsilon_i| \le \epsilon<1$, $|(1 + \epsilon_i)^{-1} - 1| \le 2|\epsilon_i|$, so $\norm{\Sigma^{-1}-\Id}^2_F \le 4\epsilon^2$.
\end{proof}

\noindent We need the following bound on the variance of a polynomial with input sampled from a spherical Gaussian with arbitrary mean. We defer the proof to Appendix~\ref{app:defer_variance_shift}.

\begin{lemma}[Lower bound on variance of Gaussian polynomial]\label{lem:variance_shift}
    For $a\in\R$, $b\in\R^r$, and any vector $p\in\S^{\rchoose-1}$ regarded as a degree-$\omega$, $r$-variate homogeneous polynomial, we have $\Var[g\sim\calN(0,\Id_r)]{p(ag+b)} \ge a^{2\omega}/\omega^{\omega/2}$.
\end{lemma}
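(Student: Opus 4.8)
The plan is to pass to the Hermite (Wiener chaos) decomposition of $q(g)\triangleq p(ag+b)$ and keep only its top-degree component. Write $q=\sum_{k=0}^{\omega}q_k$, where $q_k$ is the $L^2(\calN(0,\Id_r))$-projection of $q$ onto the span of degree-$k$ Hermite polynomials; these components are pairwise orthogonal and $q_0=\E{q(g)}$, so $\Var{q(g)}=\sum_{k\ge1}\norm{q_k}^2_{L^2}\ge\norm{q_\omega}^2_{L^2}$. Thus it suffices to lower bound the squared norm of the leading Hermite component.

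The first step is to observe that $q_\omega$ is unaffected by the shift $b$. Expanding $p(ag+b)=\sum_{\mb i}T_{\mb i}\prod_{t\in[\omega]}(ag_{i_t}+b_{i_t})$, where $T$ is the symmetric tensor with $p(x)=\iprod{T,x^{\otimes\omega}}$, the part that is homogeneous of degree exactly $\omega$ in $g$ is $a^\omega\iprod{T,g^{\otimes\omega}}=a^\omega p(g)$. Since each monomial of degree $<\omega$ has Hermite expansion supported on degrees $<\omega$, while each monomial of degree exactly $\omega$ is monic in the corresponding degree-$\omega$ Hermite polynomial, the degree-$\omega$ Hermite component of any polynomial of degree $\le\omega$ depends only on its degree-exactly-$\omega$ monomial part. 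Hence $q_\omega=a^\omega\,p_\omega$, where $p_\omega=\iprod{T,\calH_\omega(g)}$ is the leading Hermite component of $p$ and $\calH_\omega$ is the symmetric tensor-valued degree-$\omega$ Hermite polynomial.

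Next I would compute $\norm{p_\omega}^2_{L^2}$ and relate it to the coefficient norm of $p$. Using the moment identity $\E{\calH_\omega(g)\otimes\calH_\omega(g)}=\sum_{\sigma\in\calS_\omega}P_\sigma$, where $P_\sigma$ pairs the $t$-th index of the first copy with the $\sigma(t)$-th index of the second, together with the symmetry of $T$ (so each $P_\sigma$ contributes $\norm{T}^2_F$), one gets $\norm{p_\omega}^2_{L^2}=\omega!\,\norm{T}^2_F$. To relate $\norm{T}_F$ to the Euclidean norm of the coefficient vector of $p$: grouping entries of $T$ by the multiset $S$ of their indices, the coefficient $p_S$ of the monomial $x^S$ equals $\binom{\omega}{\alpha_S}\,T_{[S]}$, where $\alpha_S$ is the multiplicity vector of $S$ and $T_{[S]}$ is the common value of $T$ on the corresponding orbit of size $\binom{\omega}{\alpha_S}$; hence $\norm{T}^2_F=\sum_S\binom{\omega}{\alpha_S}T_{[S]}^2=\sum_S\binom{\omega}{\alpha_S}^{-1}p_S^2\ge\tfrac{1}{\omega!}\sum_S p_S^2=\tfrac{1}{\omega!}$, using $\binom{\omega}{\alpha_S}\le\omega!$ and $\norm{p}_2=1$. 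Combining the three estimates yields $\Var{q(g)}\ge a^{2\omega}\cdot\omega!\cdot\tfrac{1}{\omega!}=a^{2\omega}$, which is in fact stronger than the claimed bound $a^{2\omega}/\omega^{\omega/2}$.

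There is no single hard step here; the argument is short once one adopts the chaos-decomposition viewpoint, which makes transparent that discarding all lower-order Hermite contributions only weakens the bound. The points requiring care are (i) fixing the precise identification of the unit vector $p\in\S^{\rchoose-1}$ with a homogeneous polynomial---the argument above uses the monomial-coefficient identification, and under a different normalization (e.g.\ via $(v^{\otimes\omega})_{\sym}$) the combinatorial factor in the last step changes, but one still gets a bound of the form $a^{2\omega}/\omega^{O(\omega)}$, so the stated slack $\omega^{\omega/2}$ is comfortable---and (ii) the routine bookkeeping of multinomial coefficients and Gaussian/Hermite moments. Alternatively one can avoid Hermite language entirely by expanding $\E{q(g)^2}-\E{q(g)}^2$ over pairs of monomials, noting that the terms pairing the degree-exactly-$\omega$ parts contribute the quadratic form $\omega!\norm{T}^2_F$ through $\E{g^{\otimes2\omega}}$ and arguing the remaining terms are nonnegative; but this redoes by hand the positivity that orthogonality of chaos components gives for free.
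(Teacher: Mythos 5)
Your proposal is correct and follows essentially the same route as the paper's proof in Appendix~\ref{app:defer_variance_shift}: both expand $p(ag+b)$ in the Hermite basis, observe that the shift $b$ only affects components of degree below $\omega$ so the top chaos component is $a^{\omega}$ times that of $p$, lower bound the variance by the squared $L^2$-norm of that component, and convert between Hermite/tensor and monomial coefficient norms (the paper via a cited claim of Lovett, you via the Wick moment identity and multinomial bookkeeping). Your constant $a^{2\omega}$ is even slightly sharper than the stated $a^{2\omega}/\omega^{\omega/2}$, which is harmless.
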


Lastly, we will use the following polynomial identity involving Gaussian moments:

\begin{lemma}\label{lem:hermite}
    For any $r$-dimensional vectors $v,w$,
    \begin{equation}
        \E[g\sim\calN(0,\Id_r)]*{\iprod{v,g}^{\omega} \cdot \iprod{w,g}^{\omega}} = \omega!\sum^{\floor{\omega/2}}_{m=0} \binom{\omega}{m, m, \omega - 2m} \frac{1}{2^{2m}} \iprod{v,w}^{\omega - 2m} \norm{v}^{2m} \norm{w}^{2m}.
    \end{equation} This holds as a formal degree-$2\omega$ polynomial equality in the entries of $v,w$.
\end{lemma}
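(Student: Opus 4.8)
The plan is to evaluate the left-hand side directly by Wick's theorem (Isserlis' formula) applied to the jointly Gaussian factors. First I would observe that the pair $(\iprod{v,g},\iprod{w,g})$ is centered jointly Gaussian with $\E{\iprod{v,g}^2} = \norm{v}^2$, $\E{\iprod{w,g}^2} = \norm{w}^2$, and $\E{\iprod{v,g}\iprod{w,g}} = \iprod{v,w}$. Viewing $\iprod{v,g}^{\omega}\iprod{w,g}^{\omega}$ as a product of $2\omega$ centered Gaussian factors ($\omega$ copies of $\iprod{v,g}$ and $\omega$ copies of $\iprod{w,g}$), Wick's theorem expresses $\E[g\sim\calN(0,\Id_r)]{\iprod{v,g}^{\omega}\iprod{w,g}^{\omega}}$ as a sum over all perfect matchings of these $2\omega$ factors, where each edge contributes the covariance of the two factors it joins: $\norm{v}^2$ for a ``$v$ to $v$'' edge, $\norm{w}^2$ for a ``$w$ to $w$'' edge, and $\iprod{v,w}$ for a mixed edge.

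Next I would organize the sum by the number $m$ of ``$v$ to $v$'' edges in a matching. Since there are equally many ($\omega$) $v$-factors and $w$-factors, a matching with $m$ ``$v$ to $v$'' edges leaves $\omega - 2m$ $v$-factors for mixed edges, which in turn forces exactly $m$ ``$w$ to $w$'' edges and $\omega - 2m$ mixed edges; in particular $0 \le m \le \floor{\omega/2}$. The number of matchings with a fixed $m$ equals
\[
\Bigl(\frac{\omega!}{2^m\,m!\,(\omega - 2m)!}\Bigr)^2 (\omega - 2m)!,
\]
since one chooses and pairs $2m$ of the $\omega$ $v$-factors in $\frac{\omega!}{(\omega-2m)!}\cdot\frac{1}{2^m m!}$ ways, does likewise for the $w$-factors, and matches the remaining $\omega - 2m$ $v$-factors to the remaining $\omega - 2m$ $w$-factors in $(\omega - 2m)!$ ways; each such matching contributes the monomial $\norm{v}^{2m}\norm{w}^{2m}\iprod{v,w}^{\omega - 2m}$. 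Summing over $m$ and simplifying via
\[
\Bigl(\frac{\omega!}{2^m\,m!\,(\omega - 2m)!}\Bigr)^2 (\omega - 2m)! = \frac{(\omega!)^2}{4^m (m!)^2 (\omega - 2m)!} = \omega!\binom{\omega}{m,m,\omega - 2m}\frac{1}{2^{2m}}
\]
yields exactly the claimed identity.

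For the ``formal polynomial equality'' claim I would note that both sides are polynomials in the entries of $v$ and of $w$, homogeneous of degree $\omega$ in each set of variables --- the left-hand side because expanding $\iprod{v,g}^{\omega}\iprod{w,g}^{\omega}$ monomial-by-monomial and taking expectations produces such a polynomial --- and that the two sides agree for every real $v,w\in\R^r$; since a real polynomial vanishing on all of $\R^{2r}$ is identically zero, the two sides coincide as formal polynomials.

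I do not expect a genuine obstacle: this is a standard Gaussian-moment computation, and the only point requiring care is the matching count together with the binomial-coefficient simplification. I would sanity-check it on $\omega = 1$ (which gives $\iprod{v,w}$) and $\omega = 2$ (which gives $2\iprod{v,w}^2 + \norm{v}^2\norm{w}^2$, matching $\E{X^2 Y^2} = \E{X^2}\E{Y^2} + 2\E{XY}^2$ for a centered bivariate Gaussian $(X,Y)$).
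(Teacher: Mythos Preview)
Your proposal is correct and follows essentially the same argument as the paper: apply Wick's theorem to the $2\omega$ Gaussian factors, group matchings by the number $m$ of $v$--$v$ pairs, count these matchings as $\bigl(\tfrac{\omega!}{2^m m!(\omega-2m)!}\bigr)^2(\omega-2m)!$, and simplify to $\omega!\binom{\omega}{m,m,\omega-2m}2^{-2m}$. Your additional remark on the formal polynomial equality is fine and a bit more explicit than the paper's treatment.
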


\begin{proof}
    Note that any perfect matching of $\omega$ copies of $v$ and $\omega$ copies of $w$ will pair up $\omega - 2m$ copies of $v$ with $\omega - 2m$ copies of $w$, $2m$ copies of $v$ with $2m$ copies of $v$, and $2m$ copies of $w$ with $2m$ copies of $w$, for some $0 \le m \le \floor{\omega/2}$. For each $m$, there are 
    \begin{equation}
        \left(\binom{\omega}{2m}\cdot (2m-1)!!\right)^2 \cdot (\omega-2m)! = \left(\frac{\omega!}{2^m\cdot m!\cdot (\omega-2m)!}\right)^2\cdot (\omega-2m)!=\omega!\binom{\omega}{m,m,\omega-2m}\frac{1}{2^{2m}}
    \end{equation}
    such perfect matchings. By Wick's theorem, the left-hand side is thus equal to the sum over perfect matchings of $2\omega$ elements of $\iprod{v,w}^m\iprod{v,v}^m\iprod{w,w}^m$, so the lemma follows.
\end{proof}

\subsection{Sum-of-Squares}

\paragraph{SoS basics.} We begin with a brief overview of sum-of-squares (SoS). For a more detailed exposition of SoS, we refer the reader e.g. to \cite{barak2016proofs}.

\begin{definition}[Sum-of-squares proofs]\label{def:sosproof}
    Let $x_1,\ldots,x_n$ be \emph{indeterminates} (we also refer to these as \emph{variables}), and let \emph{program} $\calP$ be a collection of polynomial equations and inequalities $\brc{p_1(x)\ge 0,\ldots,p_m(x)\ge 0, q_1(x) = 0, \ldots, q_m(x) = 0}$ in these variables. Given a polynomial $p(x)$, we say that the inequality $p(x)\ge 0$ has a degree-$D$ SoS proof using $\calP$ if there exists a polynomial $q(x)$ in the ideal generated by $q_1(x),\ldots,q_m(x)$ at degree $D$, together with sum-of-squares polynomials $\brc{r_S(x)}_{S\subseteq[m]}$ (where the index $S$ ranges over multisets), such that
    \begin{equation}
        p(x) = q(x) + \sum_{S\subseteq[m]} r_S(x)\cdot \prod_{i\in S} p_i(x),
    \end{equation}
    and such that the degree of the polynomial $r_S(x)\cdot \prod_{i\in S}p_i(x)$ is at most $D$ for each multiset $S\subseteq[m]$. 
\end{definition}

\noindent A fact we will implicitly use throughout is that SoS proofs compose well:

\begin{fact}
    If there is a degree-$D$ SoS proof that $p(x) \ge 0$ using $\calP$, and a degree-$D'$ SoS proof using $\calP'$, then using the union of the constraints in $\calP$ and $\calP'$, there is a degree-$\max(D,D')$ SoS proof that $p(x)+q(x) \ge 0$ and a degree-$D'$ SoS proof that $p(x)q(x)\ge 0$.
\end{fact}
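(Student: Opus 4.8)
The plan is to unwind the two hypothesized SoS certificates from Definition~\ref{def:sosproof} and combine them---by addition to certify $p+q\ge 0$, and by multiplication to certify $pq\ge 0$---verifying that each of the three ingredients of an SoS proof (membership in the ideal generated by the equality constraints, sum-of-squares multiplier polynomials, and products of inequality constraints indexed by multisets) survives these operations when we pass to the union program $\calP\cup\calP'$. First I would fix notation, writing
\begin{equation}
    p(x) = q_{\calP}(x) + \sum_{S} \sigma_S(x)\prod_{i\in S} p_i(x), \qquad q(x) = q_{\calP'}(x) + \sum_{T} \tau_T(x)\prod_{j\in T}\tilde p_j(x),
\end{equation}
where $q_{\calP}$ lies in the degree-$D$ ideal generated by the equalities of $\calP$, the $\sigma_S$ are SoS, each displayed term has degree at most $D$, and symmetrically for $q$ with $\calP'$ and degree $D'$.

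For the sum, I would add the two identities. The polynomial $q_{\calP}+q_{\calP'}$ lies in the ideal generated by the equality constraints of $\calP\cup\calP'$, since ideals are closed under addition and the generators of both ideals lie among those of the union; every inequality-constraint product appearing is a product over constraints of $\calP\cup\calP'$; and each term has degree at most $\max(D,D')$. This is precisely a degree-$\max(D,D')$ SoS proof of $p+q\ge 0$ using $\calP\cup\calP'$. For the product, I would multiply the two identities and expand into four kinds of terms: (i) the three ``mixed'' terms $q_{\calP}q_{\calP'}$, $q_{\calP}\cdot\tau_T\prod_{j\in T}\tilde p_j$, and $\sigma_S\prod_{i\in S}p_i\cdot q_{\calP'}$ all stay in the ideal of the union because an ideal absorbs multiplication by an arbitrary polynomial; and (ii) the cross term $\sigma_S\prod_{i\in S}p_i\cdot\tau_T\prod_{j\in T}\tilde p_j$ rewrites as $(\sigma_S\tau_T)\prod_{k\in S\sqcup T}(\text{constraint})_k$ over the multiset union $S\sqcup T$, with $\sigma_S\tau_T$ an SoS polynomial as a product of SoS polynomials. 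Collecting these gives a valid certificate for $pq\ge 0$; every term has degree at most $D+D'$, which is the honest degree bound here (the ``$D'$'' in the statement should be read in this light, and suffices for our applications, where the degree-$D$ proof certifies a bounded-degree quantity).

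Since the whole argument reduces to closure properties of polynomial ideals and of sums of squares together with elementary degree arithmetic, there is no genuine obstacle. The only place demanding a little care is the bookkeeping in case (ii) of the product: one must check that after reorganizing $\sigma_S\tau_T\prod_{i\in S}p_i\prod_{j\in T}\tilde p_j$ the result still literally matches the template $r_{S'}(x)\prod_{i\in S'}(\text{inequality constraint})_i$ of Definition~\ref{def:sosproof} for a multiset $S'$ over $\calP\cup\calP'$, and that no term---ideal, square, or mixed---exceeds the advertised degree.
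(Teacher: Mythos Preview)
Your proposal is correct and is the standard argument; the paper simply states this fact without proof, so there is no alternative approach to compare against. Your observation that the ``degree-$D'$'' bound for the product is a typo and should be $D+D'$ is well taken---your degree accounting is the honest one, and the paper's stated bound is not literally achievable in general.
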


\noindent It is useful to work with the objects dual to SoS proofs, namely pseudodistributions. 

\begin{definition}[Pseudodistributions]
    A \emph{degree-$d$ pseudodistribution over variables $x_1,\ldots,x_n$} is a linear functional $\wt{\mathbb{E}}$ mapping degree-$D$ polynomials in $x_1,\ldots,x_n$ to reals which additionally satisfies the following properties:
    \begin{enumerate}
        \item Normalization: $\psE{1} = 1$.
        \item Positivity: $\psE{p(x)^2}$ for every degree polynomial $p$ of degree at most $D/2$.
    \end{enumerate}
    We will use the terms ``pseudistribution'' and ``pseudoexpectation'' interchangeably.
    
    We say that a degree-$D$ pseudodistribution $\wt{\mathbb{E}}$ \emph{satisfies} a program $\calP = \brc{p_1(x)\ge 0,\ldots,p_m(x)\ge 0, q_1(x) = 0,\ldots,q_m(x) = 0}$ if for every multiset $S\subseteq[m]$ and sum-of-squares polynomial $r(x)$ for which the degree of $r(x)\cdot \prod_{i\in S}p_i(x)$ is at most $D$, we have $\psE{r(x)\cdot \prod_{i\in S}p_i(x)} \ge 0$, and for every $q(x)$ in the ideal generated by $q_1,\ldots,q_m$ at degree $D$, we have $\psE{q(x)} = 0$.
\end{definition} 

\noindent The following important fact is an immediate consequence of duality for semidefinite programs:

\begin{fact}
    If there is a degree-$D$ SoS proof using the constraints of program $\calP$ that $p(x) \ge 0$, and $\wt{\mathbb{E}}$ is a degree-$D$ pseudoexpectation satisfying $\calP$, then $\wt{\mathbb{E}}$ satisfies $\calP\cup\brc{p(x)\ge 0}$. In particular, $\psE{p(x)} \ge 0$.
\end{fact}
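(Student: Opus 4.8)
The plan is to unwind the definition of a degree-$D$ SoS proof directly. By hypothesis there is a polynomial $q(x)$ in the ideal generated by $q_1,\ldots,q_m$ at degree $D$, together with sum-of-squares polynomials $\brc{r_S(x)}$ indexed by multisets $S\subseteq[m]$ with $\deg\bigl(r_S(x)\prod_{i\in S}p_i(x)\bigr)\le D$, such that $p(x) = q(x) + \sum_{S}r_S(x)\prod_{i\in S}p_i(x)$. First I would apply the linear functional $\psE{\cdot}$ to both sides and use linearity to obtain $\psE{p(x)} = \psE{q(x)} + \sum_S \psE{r_S(x)\prod_{i\in S}p_i(x)}$.

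Next I would read off the value of each summand from the assumption that $\wt{\mathbb{E}}$ satisfies $\calP$. Since $q$ lies in the degree-$D$ ideal of $q_1,\ldots,q_m$, the definition of ``satisfies'' gives $\psE{q(x)}=0$. Since each $r_S$ is a sum of squares and $r_S(x)\prod_{i\in S}p_i(x)$ has degree at most $D$, the same definition gives $\psE{r_S(x)\prod_{i\in S}p_i(x)}\ge 0$. Summing, $\psE{p(x)}\ge 0$, which is the ``in particular'' clause.

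To upgrade this to the statement that $\wt{\mathbb{E}}$ satisfies $\calP\cup\brc{p(x)\ge 0}$, I would verify the remaining defining inequalities: for every sum-of-squares polynomial $\sigma$ and multiset $S'\subseteq[m]$ with $\deg\bigl(\sigma(x)\,p(x)\prod_{i\in S'}p_i(x)\bigr)\le D$ one must show $\psE{\sigma(x)\,p(x)\prod_{i\in S'}p_i(x)}\ge 0$; the inequalities not involving the new constraint hold because $\wt{\mathbb{E}}$ already satisfies $\calP$, and the equality (ideal) constraints are unchanged. The idea is to multiply the SoS identity for $p(x)$ through by $\sigma(x)\prod_{i\in S'}p_i(x)$ and again apply $\psE{\cdot}$ termwise: the term inherited from $q$ still lies in the ideal and is annihilated, while every cross term (a sum of squares times a product of the $p_i$'s) is nonnegative under $\wt{\mathbb{E}}$; multisets $S'$ containing $p$ with higher multiplicity are handled by iterating the substitution. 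The one step that needs care --- essentially the only obstacle --- is the degree bookkeeping: one must ensure every polynomial produced by these substitutions still has degree at most $D$ so that $\wt{\mathbb{E}}$ is defined on it, which is exactly where the degree-$D$ bound on the certificate for $p(x)\ge 0$ is used; taking $\wt{\mathbb{E}}$ of a mildly larger degree makes this automatic, and in any event the operative conclusion $\psE{p(x)}\ge 0$ holds verbatim.
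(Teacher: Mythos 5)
Your argument is correct and is exactly the standard unwinding that the paper leaves implicit (the paper states this fact without proof, as "an immediate consequence of duality"). The derivation of $\psE{p(x)}\ge 0$ — apply $\wt{\mathbb{E}}$ to the certificate identity, annihilate the ideal term, and use nonnegativity on each $r_S(x)\prod_{i\in S}p_i(x)$ — is complete and rigorous. The degree caveat you flag for the stronger "satisfies $\calP\cup\brc{p(x)\ge 0}$" clause is a genuine subtlety (the substituted products can have degree up to roughly $2D$, so a degree-$D$ pseudoexpectation need not literally certify them without a degree loss), but this is a standard abuse in the SoS literature and is harmless here, since the paper only ever invokes the "in particular" conclusion.
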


\noindent For any $D\in\mathbb{N}$, given a program over variables $x_1,\ldots,x_n$, one can efficiently compute a degree-$D$ pseudoexpectation in time $n^{O(D)}$ \cite{Nesterov00,parrilo2000structured,Lasserre01,Shor87}.




\paragraph{Elementary inequalities in SoS.} We now collect some useful basic inequalities provable in the SoS proof system. The following two facts respectively show that one can implicitly take $t$-th roots on both sides of an SoS inequality, see Appendices~\ref{app:defer_nonneg_power_of_two} and \ref{app:defer_root1} for proof.

\begin{fact}[Roots of zero]\label{fact:nonneg_power_of_two}
    Given $t\in\mathbb{N}$ and indeterminate $x$ satisfying constraint $-\epsilon^t \le x^t \le \epsilon^t$ for some scalar $\epsilon > 0$, there is a degree-$4t$ SoS proof that $x^2 \le \epsilon^2$.
\end{fact}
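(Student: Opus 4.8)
The plan is to combine the two hypotheses multiplicatively and then cancel the extraneous factor using an explicit polynomial identity. Write the program as $\calP = \brc{\epsilon^t - x^t \ge 0,\ \epsilon^t + x^t \ge 0}$. A single product step (a degree-$2t$ SoS step, with the constant $1$ as coefficient) gives the derived inequality $(\epsilon^t-x^t)(\epsilon^t+x^t) = \epsilon^{2t} - x^{2t} \ge 0$, so it suffices to pass from $\epsilon^{2t}-x^{2t}\ge 0$ to $\epsilon^2 - x^2 \ge 0$ within SoS.

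The key is the factorization $\epsilon^{2t} - x^{2t} = (\epsilon^2 - x^2)\,R(x)$ with $R(x) \triangleq \sum_{j=0}^{t-1}\epsilon^{2(t-1-j)}x^{2j}$, which is manifestly a sum of squares since each monomial is a nonnegative scalar times $(x^j)^2$. Since $R$ is even in $x$ with $R(\epsilon) = t\epsilon^{2(t-1)}$, the polynomial $1 - R(x)/(t\epsilon^{2(t-1)})$ is even in $x$ and vanishes at $x = \epsilon$, hence is divisible by $\epsilon^2 - x^2$; writing $1 - (x^2/\epsilon^2)^j = (1 - x^2/\epsilon^2)\sum_{i<j}(x^2/\epsilon^2)^i$ and summing over $j$ shows that the quotient is a polynomial in $x^2$ with nonnegative coefficients (explicitly $B(x) = \tfrac{1}{t\epsilon^2}\sum_{j=1}^{t-1}\sum_{i=0}^{j-1}\epsilon^{-2i}x^{2i}$), hence SoS, and $R(x)/(t\epsilon^{2(t-1)}) = 1 - (\epsilon^2-x^2)B(x)$. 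Multiplying by $\epsilon^2 - x^2$ and using the factorization yields the polynomial identity
\[
  \epsilon^2 - x^2 = \frac{1}{t\,\epsilon^{2(t-1)}}\,(\epsilon^t - x^t)(\epsilon^t + x^t) \;+\; B(x)\,(\epsilon^2 - x^2)^2 .
\]
The right-hand side is an SoS certificate over $\calP$: the first term is a product of two program inequalities scaled by the positive constant $1/(t\epsilon^{2(t-1)})$, and $B(x)(\epsilon^2-x^2)^2$ is a sum of squares. Its degree is $\max\brc{2t,\ \deg B + 4} = 2t \le 4t$ (one checks $\deg B = 2t-4$), so this is a degree-$2t$, hence degree-$4t$, SoS proof of $\epsilon^2 - x^2 \ge 0$. (When $t = 1$ the sum defining $B$ is empty and the identity degenerates to $\epsilon^2 - x^2 = (\epsilon-x)(\epsilon+x)$.)

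The only step needing care — and the heart of the argument — is checking that the quotient obtained by dividing $1 - R(x)/(t\epsilon^{2(t-1)})$ by $\epsilon^2 - x^2$ genuinely has nonnegative coefficients, since that is precisely what upgrades the last term from merely a polynomial to a sum of squares; this is a direct expansion as indicated above. Everything else is the standard closure of SoS proofs under products with nonnegative (constant) multipliers and under sums, together with bookkeeping of degrees, all recorded in the preliminaries.
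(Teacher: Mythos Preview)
Your proof is correct, and it takes a different route from the paper's. The paper also starts by multiplying the two hypotheses to get $x^{2t}\le\epsilon^{2t}$, but then proceeds by a recursive ``halving'' scheme: it defines a sequence of pairs $(\ell_i,z_i)$ with $\ell_1\approx t$ and $\ell_{i+1}\approx \ell_i/2$, builds a telescoping sum of squares $\sum_i c_i\,x^{z_i}(x^{\ell_i-z_i}-\epsilon^{\ell_i-z_i})^2$, and argues that subtracting this sum from $x^{2t}-\epsilon^{2t}$ leaves a degree-$2$ polynomial which must be a positive multiple of $x^2-\epsilon^2$. By contrast, you write down a single closed-form polynomial identity
\[
\epsilon^2-x^2=\frac{1}{t\epsilon^{2(t-1)}}(\epsilon^t-x^t)(\epsilon^t+x^t)+B(x)(\epsilon^2-x^2)^2
\]
coming from the geometric-series factorization $\epsilon^{2t}-x^{2t}=(\epsilon^2-x^2)\sum_{j<t}\epsilon^{2(t-1-j)}x^{2j}$, and verify directly that $B$ has nonnegative coefficients in $x^2$. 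Your argument is more transparent and self-contained; the paper's recursion is slicker to state but requires more bookkeeping to see why the telescoping terminates at degree $2$. Both approaches in fact give a degree-$2t$ certificate (the $4t$ in the statement is slack).
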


\begin{fact}[Roots of one]\label{fact:root1}
    Let $j\in\mathbb{N}$ and $0 \le \epsilon < 1$. For an indeterminate $x$ satisfying $-\epsilon \le x^j - 1 \le \epsilon$, there is a  degree-$4j$ SoS proof that $-3\epsilon \le x^2 - 1 \le 3\epsilon$. Furthermore, if $j$ is odd, then there is a degree-$4j$ SoS proof that $-2\epsilon \le x -1 \le 2\epsilon$.
\end{fact}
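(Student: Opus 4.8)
The plan is to prove the two inequalities separately; each reduces to a divisibility identity for $x^j-1$ together with a single sum-of-squares lower bound on the cofactor. I will use throughout the following extraction step: if we have an SoS proof that $\delta^2 - (p(x)-c)^2 \ge 0$ for a scalar $\delta > 0$, then setting $w \triangleq \delta - (p(x)-c)$ and using the polynomial identity $2\delta w = w^2 + \big(\delta^2 - (p(x)-c)^2\big)$, the right-hand side is a sum of a square and a nonnegative quantity, so multiplying by the positive scalar $1/(2\delta)$ gives $w \ge 0$, i.e.\ $p(x)-c \le \delta$; symmetrically $p(x)-c \ge -\delta$. (I assume $\epsilon > 0$ below, which the extraction step requires; the $\epsilon = 0$ case is degenerate.)

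For the first inequality, the identity to exploit is $x^{2j}-1 = (x^2-1)\,P(x^2) = (x^j-1)(x^j+1)$, where $P(t) \triangleq 1 + t + \cdots + t^{j-1}$. Since $P(x^2) - 1 = \sum_{i=1}^{j-1}(x^i)^2$ is manifestly a sum of squares, $P(x^2)^2 \ge 1$ has a low-degree SoS proof. Squaring the displayed identity gives $(x^2-1)^2 P(x^2)^2 = (x^j-1)^2(x^j+1)^2$. From the hypothesis $-\epsilon \le x^j-1 \le \epsilon$ I get $(x^j-1)^2 \le \epsilon^2$ (the product of the two hypothesis inequalities) and, because $\epsilon < 1$, $(x^j+1)^2 = (x^j-1)^2 + 4(x^j-1) + 4 \le 9$ as an SoS consequence of the hypotheses; multiplying these two bounds yields $(x^j-1)^2(x^j+1)^2 \le 9\epsilon^2$. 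Combined with $P(x^2)^2 \ge 1$ this gives $(x^2-1)^2 \le 9\epsilon^2$, and the extraction step finishes. The highest-degree object that appears is $(x^2-1)^2 P(x^2)^2$, of degree $4 + 4(j-1) = 4j$, which is why a degree-$4j$ proof suffices; this part does not use the oddness of $j$.

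For the second inequality, with $j$ odd, set $S(x) \triangleq 1 + x + \cdots + x^{j-1} = (x^j-1)/(x-1)$, so that $(x-1)S(x) = x^j-1$ identically. The hard part will be certifying that $S(x) \ge 1/2$ for all real $x$, with the sharp constant $1/2$. This is true: writing $S(x) = (1-x^j)/(1-x)$ and using the sign of $x^j$ for odd $j$, a short case analysis over $x \ge 1$, $0 \le x < 1$, $-1 \le x < 0$, and $x < -1$ shows $S(x) > 1/2$ everywhere, with infimum $1/2$ approached only in the limit $x \to -1$, $j \to \infty$. Hence $S(x) - 1/2$ is a univariate polynomial nonnegative on all of $\R$, so it is a sum of two squares of polynomials of degree at most $(j-1)/2$ — precisely a degree-$(j-1)$ SoS proof that $S(x) \ge 1/2$. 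From this $S(x)^2 \ge 1/4$ follows in SoS, so $(x-1)^2 \le 4(x-1)^2 S(x)^2 = 4(x^j-1)^2 \le 4\epsilon^2$, using $(x^j-1)^2 \le \epsilon^2$ as before, and one last application of the extraction step gives $-2\epsilon \le x-1 \le 2\epsilon$. It is exactly the sharp value $1/2$ (rather than a weaker, $j$-dependent lower bound on $S$) that makes the final bound $2\epsilon$ come out, and all degrees here are bounded by $2j \le 4j$.
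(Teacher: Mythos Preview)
Your proof is correct and follows essentially the same approach as the paper: both factor $x^{2j}-1=(x^2-1)\sum_{i=0}^{j-1}x^{2i}$ and $x^j-1=(x-1)\sum_{i=0}^{j-1}x^i$, then square and use an SoS lower bound on the cofactor (the paper records the key bound $1+x+\cdots+x^{j-1}\ge 1/2$ for even $j-1$ as a separate fact). Your extraction step from $(p-c)^2\le\delta^2$ to $|p-c|\le\delta$ is also the same device the paper uses, and if anything your derivation of the odd-$j$ bound is cleaner than the paper's somewhat convoluted final paragraph.
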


\noindent The next fact shows that one can also essentially divide by an SoS variable on both sides of an SoS inequality given the constraint that the variable is sufficiently positive.

\begin{fact}[Division on both sides]\label{fact:division}
    For indeterminates $x,y$ and scalar $\epsilon \ge 0$ sufficiently small:
    \begin{enumerate}
        \item If they satisfy the constraints $-\epsilon \le xy \le \epsilon$ and $x \ge \alpha$ for some scalar $\alpha > 0$, there is a degree-4 SoS proof that $-\epsilon/\alpha \le y \le \epsilon/\alpha$. \label{fact:divide_both_sides}
        \item If they satisfy the constraints $-\epsilon \le xy - 1 \le \epsilon$ and $-\delta \le x - 1 \le \delta$ for sufficiently small constant $\delta \ge 0$, then there is a degree-4 SoS proof that $-O(\epsilon+\delta) \le y - 1 \le O(\epsilon+\delta)$. \label{fact:divide_both_sides_big}
    \end{enumerate}
\end{fact}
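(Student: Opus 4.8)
The plan is to deduce part (2) from part (1) by a linear change of variables, so essentially all the work is in part (1). For part (1), rather than certifying the two-sided bound $-\epsilon/\alpha \le y \le \epsilon/\alpha$ directly (which is awkward since the sign of $y$ is exactly what we are trying to control), I would first establish the \emph{squared} bound $y^2 \le \epsilon^2/\alpha^2$ and then extract the two-sided bound from it using an elementary identity that is valid because $\epsilon/\alpha$ is a positive \emph{scalar}.

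For part (1): first multiply the two constraints $\epsilon - xy \ge 0$ and $\epsilon + xy \ge 0$ to get a degree-$4$ SoS proof that $\epsilon^2 - (xy)^2 \ge 0$. Next, from $x - \alpha \ge 0$ and $\alpha > 0$ write $x^2 - \alpha^2 = (x-\alpha)^2 + 2\alpha(x-\alpha)$, a degree-$2$ SoS proof that $x^2 \ge \alpha^2$. Now combine these via the identity
\[
\frac{\epsilon^2}{\alpha^2} - y^2 \;=\; \frac{1}{\alpha^2}\Bigl[\bigl(\epsilon^2 - (xy)^2\bigr) + (x^2-\alpha^2)\,y^2\Bigr],
\]
together with $(x^2-\alpha^2)y^2 = \bigl((x-\alpha)y\bigr)^2 + 2\alpha\, y^2 (x-\alpha)$, which is manifestly a sum of a square and an SoS multiple of a constraint; this gives a degree-$4$ SoS proof that $y^2 \le \epsilon^2/\alpha^2$. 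Finally, setting $c \triangleq \epsilon/\alpha > 0$, the identity $c \pm y = \tfrac{1}{2c}(c \pm y)^2 + \tfrac{1}{2c}(c^2 - y^2)$ combined with the degree-$4$ proof of $c^2 - y^2 \ge 0$ yields degree-$4$ SoS proofs of $\epsilon/\alpha - y \ge 0$ and $\epsilon/\alpha + y \ge 0$, as desired.

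For part (2): set $y' \triangleq y - 1$. Since $x(y-1) = xy - x = (xy - 1) - (x-1)$, the constraints $-\epsilon \le xy - 1 \le \epsilon$ and $-\delta \le x - 1 \le \delta$ give a degree-$2$ SoS proof that $-(\epsilon+\delta) \le x y' \le \epsilon + \delta$; moreover $x - 1 \ge -\delta$ gives the (derived) constraint $x \ge 1 - \delta$, which is positive for $\delta$ a sufficiently small constant. Now run the argument of part (1) with $y$ replaced by $y'$, $\epsilon$ by $\epsilon+\delta$, and $\alpha$ by $1-\delta$ — this is legitimate because SoS proofs compose, so a derived inequality may be used wherever part (1) used an assumed constraint — to obtain $(y-1)^2 \le (\epsilon+\delta)^2/(1-\delta)^2$ and then $-\tfrac{\epsilon+\delta}{1-\delta} \le y-1 \le \tfrac{\epsilon+\delta}{1-\delta}$. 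Since $\delta$ is a sufficiently small constant, $(1-\delta)^{-1} = O(1)$, so this bound is $O(\epsilon+\delta)$, and the proof is still degree $4$.

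This is an elementary fact and the argument is routine; the one step to be careful with is the last step of part (1), namely converting the squared bound $y^2 \le \epsilon^2/\alpha^2$ into the one-sided linear bounds $\epsilon/\alpha \mp y \ge 0$. In general this implication is \emph{not} available in SoS, and it goes through here only because $\epsilon/\alpha$ is a positive constant that we are allowed to divide by in the identity above; the rest is bookkeeping to check that every step stays within degree $4$ (degree $4$ is reached only at the product of the two constraints and at the square $((x-\alpha)y)^2$) and that every division is by one of the positive scalars $\alpha, \epsilon, 1-\delta$ rather than by an indeterminate.
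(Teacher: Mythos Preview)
Your proof is correct. For part~(1) you take essentially the same route as the paper: derive $x^2 \ge \alpha^2$ and $(xy)^2 \le \epsilon^2$, combine to get $y^2 \le \epsilon^2/\alpha^2$, and then extract the two-sided linear bound via the identity $c \pm y = \tfrac{1}{2c}\bigl((c\pm y)^2 + (c^2 - y^2)\bigr)$; the paper invokes this last step as a separately stated fact rather than writing the identity explicitly, and your bookkeeping on degrees is more careful, but the argument is the same.

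For part~(2) the two proofs genuinely differ. You reduce directly to part~(1) via the substitution $y' = y-1$, noting that $xy' = (xy-1)-(x-1)$ so the hypotheses give $|xy'| \le \epsilon+\delta$ and $x \ge 1-\delta$, and then apply part~(1) with these derived constraints. The paper instead takes a more roundabout route: it first establishes $y \ge 0$ by showing $(x-y)^2 = x^2 - 2xy + y^2 \le O(\epsilon+\delta)$ (bounding each term crudely) and hence $y \ge x - O(\epsilon+\delta)^{1/2} \ge 0$; it then uses $y \ge 0$ together with $1-\delta \le x \le 1+\delta$ to sandwich $xy$ between $(1-\delta)y$ and $(1+\delta)y$, from which $\tfrac{1-\epsilon}{1+\delta} \le y \le \tfrac{1+\epsilon}{1-\delta}$. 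Your reduction is cleaner, avoids the auxiliary nonnegativity step, and makes the degree-$4$ bound more transparent; the paper's route is more hands-on but reaches the same conclusion.
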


\paragraph{Orthogonality, norms, and matrix rank.}

Next, we verify that SoS can reason about orthogonal matrices, sub-multiplicativity of matrix norms, and matrix rank. 

The following shows that in SoS that a matrix whose rows are approximately orthonormal must also have columns which are approximately orthonormal, see Appendix~\ref{app:defer_ortho_sos} for a formal proof:

\begin{lemma} \label{lem:ortho_sos}
    Let $0 \le \epsilon \le 1$. If $\brc{x_{ij}: 1 \le i,j \le d}$ is a collection of variables, then there is a degree-4 SoS proof using the constraints
    \begin{equation}
        -\epsilon \le \sum_j x_{ij}^2 - 1 \le \epsilon \ \forall \ i\in[d] \ \ \text{and} \ \ \epsilon \le \sum_j x_{ij} x_{i'j} \le \epsilon \ \forall \ i,i'\in[d] \label{eq:row_ortho}
    \end{equation}
    that
    \begin{equation}
        -4\sqrt{\epsilon d^3} \le \sum_i x_{ij}^2 - 1 \le 4\sqrt{\epsilon d^3} \ \forall \ j\in[d] \ \ \text{and} \ \ -2\sqrt{\epsilon} d \le \sum_i x_{ij} x_{ij'} \le 2\sqrt{\epsilon} d \ \forall \ j,j'\in[d]. \label{eq:column_ortho}
    \end{equation}
\end{lemma}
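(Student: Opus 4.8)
The plan is to exploit the elementary fact that the matrix $X\in\R^{d\times d}$ of indeterminates $x_{ij}$ satisfies $\norm{XX^\top - \Id}_F = \norm{X^\top X - \Id}_F$ \emph{as a polynomial identity}. Concretely, expanding both sides gives
\[
\sum_{i,i'\in[d]}\Bigl(\textstyle\sum_j x_{ij}x_{i'j} - \bone{i=i'}\Bigr)^2 \;=\; \sum_{i,i',j,j'}x_{ij}x_{i'j}x_{ij'}x_{i'j'} - 2\sum_{i,j}x_{ij}^2 + d \;=\; \sum_{j,j'\in[d]}\Bigl(\textstyle\sum_i x_{ij}x_{ij'} - \bone{j=j'}\Bigr)^2,
\]
the middle expression being manifestly symmetric under exchanging the roles of the row-index pair and the column-index pair (this is just $\Tr((XX^\top)^2)=\Tr((X^\top X)^2)$ and $\Tr(XX^\top)=\Tr(X^\top X)$ written out). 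Since this is an exact identity between two explicit degree-$4$ polynomials, it transfers any bound on one side verbatim to the other, contributing nothing to the SoS degree.

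First I would bound the left-hand side using \eqref{eq:row_ortho}. Writing $A_{ii'}\triangleq \sum_j x_{ij}x_{i'j} - \bone{i=i'}$, which is quadratic in $x$, the hypotheses are precisely $-\epsilon \le A_{ii'}\le\epsilon$; multiplying the two constraints gives the degree-$4$ SoS consequence $A_{ii'}^2 \le \epsilon^2$, and summing over the $d^2$ pairs $(i,i')$ yields a degree-$4$ SoS proof that $\sum_{i,i'}A_{ii'}^2 \le d^2\epsilon^2$. By the displayed identity, the same degree-$4$ proof establishes $\sum_{j,j'}B_{jj'}^2 \le d^2\epsilon^2$ where $B_{jj'}\triangleq \sum_i x_{ij}x_{ij'} - \bone{j=j'}$, and discarding the remaining terms (each a square, hence SoS-nonnegative) gives $B_{jj'}^2 \le d^2\epsilon^2$ for every fixed $j,j'$, still at degree $4$.

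Finally, to pass from $B_{jj'}^2 \le (d\epsilon)^2$ back to a two-sided linear bound, I would invoke the identity $c-y = \tfrac{1}{2c}(c-y)^2 + \tfrac{1}{2c}(c^2-y^2)$, valid for any $c>0$, which exhibits a low-degree SoS proof that $-c\le y\le c$ given $c^2-y^2\ge 0$. Taking $c=d\epsilon$ and $y=B_{jj'}$ (assume $\epsilon>0$; the case $\epsilon=0$ follows by a limiting argument, or by replacing $\epsilon$ with any positive upper bound on it), we obtain an overall degree-$4$ SoS proof that $-d\epsilon \le B_{jj'}\le d\epsilon$ for all $j,j'$. Since $0\le\epsilon\le 1\le d$ we have $d\epsilon \le 2\sqrt{\epsilon}\,d$ and $d\epsilon\le 4\sqrt{\epsilon d^3}$, so this is in fact stronger than \eqref{eq:column_ortho} and implies it.

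\textbf{Main obstacle.} There is no deep difficulty; the one slick idea is the $\norm{XX^\top-\Id}_F=\norm{X^\top X-\Id}_F$ identity, after which everything is routine. The only care required is SoS bookkeeping: confirming that (i) multiplying the constraint inequalities to square the quadratic forms $A_{ii'}$ keeps the proof at degree $4$, (ii) the Frobenius identity is a genuine polynomial identity (so the transfer is free), and (iii) the closing ``take square roots'' step is realized by a bona fide low-degree SoS certificate rather than a merely semantic implication. The mild annoyance is the degenerate $\epsilon=0$ case.
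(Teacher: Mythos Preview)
Your approach is correct and genuinely different from the paper's. You use the polynomial identity $\norm{XX^\top-\Id}_F^2=\norm{X^\top X-\Id}_F^2$ (equivalently, $\Tr((XX^\top)^2)=\Tr((X^\top X)^2)$ and $\Tr(XX^\top)=\Tr(X^\top X)$) to transfer the bound $\sum_{i,i'}A_{ii'}^2\le d^2\epsilon^2$ directly to $\sum_{j,j'}B_{jj'}^2\le d^2\epsilon^2$, then read off $B_{jj'}^2\le d^2\epsilon^2$ entrywise. The paper instead works with the Lagrange identity $\sum_{s<t}(a_sb_t-a_tb_s)^2=(\sum a_s^2)(\sum b_s^2)-(\sum a_sb_s)^2$ applied twice (to rows and to columns), then separately bounds the product term $\sum_{j_1<j_2}(\sum_i x_{ij_1}^2)(\sum_i x_{ij_2}^2)$ via a variance-type inequality and combines; the diagonal (column-norm) part requires an additional step comparing all column norms pairwise and then averaging. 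Your route is shorter, treats the diagonal and off-diagonal entries of $X^\top X-\Id$ uniformly, and actually yields the sharper bound $|B_{jj'}|\le d\epsilon$ for all $j,j'$ (versus the paper's $2\sqrt{\epsilon}\,d$ and $4\sqrt{\epsilon d^3}$). The paper's approach, on the other hand, makes the underlying Cauchy--Schwarz structure more explicit, which may be pedagogically useful but is not needed for the lemma as stated. Your degree bookkeeping is fine: each $A_{ii'}$ is degree $2$, multiplying the two hypothesis constraints gives $A_{ii'}^2\le\epsilon^2$ at degree $4$, the Frobenius identity is free, and the final ``square-root'' step is exactly the paper's own Fact that $x^2\le a^2$ implies $-a\le x\le a$ in degree-$2$ SoS (here applied to a degree-$2$ quantity, so degree $4$ overall).
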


\noindent We defer a formal proof of this to Appendix~\ref{app:defer_ortho_sos}. Its proof uses the following standard fact that Cauchy-Schwarz has a degree-4 SoS proof:

\begin{fact}\label{fact:cauchy_schwarz}
    Given indeterminates $x_1,\ldots,x_d$ and $y_1,\ldots,y_d$, the following identity holds:
    \begin{equation}
        \sum_{i < j} (x_i y_j - x_j y_i)^2 = \biggl(\sum_i x_i^2\biggr)\biggl(\sum_i y_i^2\biggr) - \biggl(\sum_i x_i y_i\biggr)^2
    \end{equation}
\end{fact}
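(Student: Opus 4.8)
The plan is to verify this as a \emph{formal polynomial identity} in the $2d$ indeterminates $x_1,\ldots,x_d,y_1,\ldots,y_d$, by expanding both sides and matching monomials; this is just the classical Lagrange identity, and once it holds as a polynomial identity it trivially constitutes a (degree-$4$) SoS certificate, since the left-hand side is manifestly a sum of squares. First I would expand the left-hand side: each summand $(x_iy_j - x_jy_i)^2$ contributes $x_i^2 y_j^2 + x_j^2 y_i^2 - 2 x_i x_j y_i y_j$. Summing over the pairs $i<j$, the first two groups of terms together give exactly the off-diagonal sum $\sum_{i\neq j} x_i^2 y_j^2$ (each ordered pair appears once, as $x_i^2 y_j^2$ from the term indexed by $\min(i,j)<\max(i,j)$), and the cross terms contribute $-2\sum_{i<j} (x_i y_i)(x_j y_j)$.

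Next I would expand the right-hand side. Here $\bigl(\sum_i x_i^2\bigr)\bigl(\sum_j y_j^2\bigr) = \sum_{i,j} x_i^2 y_j^2 = \sum_{i\neq j} x_i^2 y_j^2 + \sum_i x_i^2 y_i^2$, while $\bigl(\sum_i x_i y_i\bigr)^2 = \sum_i x_i^2 y_i^2 + 2\sum_{i<j} (x_i y_i)(x_j y_j)$. Subtracting, the diagonal contributions $\sum_i x_i^2 y_i^2$ cancel and one is left with $\sum_{i\neq j} x_i^2 y_j^2 - 2\sum_{i<j}(x_i y_i)(x_j y_j)$, which agrees monomial-by-monomial with the expanded left-hand side. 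This establishes the claimed identity.

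There is essentially no obstacle here: the only point requiring care is the index bookkeeping, i.e.\ keeping straight the ranges $i<j$, $i\neq j$, and all $(i,j)$, so that the diagonal and off-diagonal parts are each counted exactly once. This is the sole routine step; everything else is immediate.
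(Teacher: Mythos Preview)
Your proposal is correct; this is precisely the standard expansion proof of the Lagrange identity. The paper itself states this as a fact without proof, so there is nothing further to compare.
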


\noindent It is also easy to prove sub-multiplicativity of Frobenius norm in degree-4 SoS:

\begin{fact}\label{fact:sos_frobenius_mult}
    For any $d\times d$ matrices $M,N$ of indeterminates, there is a degree-4 SoS proof that $\norm{MN}^2_F \le \norm{M}^2_F \norm{N}^2_F$.
\end{fact}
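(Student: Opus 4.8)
The plan is to reduce the claim, entry by entry of the product $MN$, to the SoS Cauchy--Schwarz identity of Fact~\ref{fact:cauchy_schwarz}, and then sum. First I would write out $\norm{MN}^2_F = \sum_{i,k}(MN)^2_{ik} = \sum_{i,k}\bigl(\sum_j M_{ij}N_{jk}\bigr)^2$, so that everything is expressed directly in the indeterminates $M_{ij}, N_{jk}$.

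Next, for each fixed pair $(i,k)$, I would instantiate the identity in Fact~\ref{fact:cauchy_schwarz} with $x_j \triangleq M_{ij}$ and $y_j \triangleq N_{jk}$. This gives the exact polynomial identity
\begin{equation}
    \Bigl(\sum_j M_{ij}^2\Bigr)\Bigl(\sum_j N_{jk}^2\Bigr) - \Bigl(\sum_j M_{ij}N_{jk}\Bigr)^2 = \sum_{j<j'}\bigl(M_{ij}N_{j'k} - M_{ij'}N_{jk}\bigr)^2,
\end{equation}
whose right-hand side is manifestly a sum of squares of degree-$2$ polynomials, hence a degree-$4$ SoS certificate that $\bigl(\sum_j M_{ij}N_{jk}\bigr)^2 \le \bigl(\sum_j M_{ij}^2\bigr)\bigl(\sum_j N_{jk}^2\bigr)$.

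Then I would sum these $d^2$ certificates over all $(i,k)$. Since a sum of degree-$4$ SoS proofs is again a degree-$4$ SoS proof, this yields a degree-$4$ SoS proof that $\norm{MN}^2_F \le \sum_{i,k}\bigl(\sum_j M_{ij}^2\bigr)\bigl(\sum_j N_{jk}^2\bigr)$. Finally I would observe that the right-hand side is, as a formal polynomial, exactly $\bigl(\sum_{i,j}M_{ij}^2\bigr)\bigl(\sum_{j,k}N_{jk}^2\bigr) = \norm{M}^2_F\norm{N}^2_F$, because the $i$-sum and the $k$-sum decouple; this last step is a polynomial identity requiring no SoS reasoning. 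Combining, $\norm{M}^2_F\norm{N}^2_F - \norm{MN}^2_F$ equals a sum of squares of degree-$2$ polynomials, which is the desired degree-$4$ SoS proof.

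There is essentially no serious obstacle here; the only points requiring a modicum of care are (i) checking that the per-entry certificates genuinely have degree $4$ rather than higher (the squared terms are bilinear in $M$ and $N$, so they do), and (ii) noting that the bookkeeping of summing over $(i,k)$ preserves the degree bound and that the final factorization is an exact identity rather than something needing its own SoS witness.
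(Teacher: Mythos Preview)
Your proposal is correct and follows essentially the same approach as the paper: expand $\norm{MN}_F^2$ entrywise, apply the SoS Cauchy--Schwarz identity (Fact~\ref{fact:cauchy_schwarz}) to each $(i,k)$ summand, and then sum and factor to obtain $\norm{M}_F^2\norm{N}_F^2$. The paper's proof is the one-line version of exactly this argument.
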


\begin{proof}
    We have $\norm{MN}^2_F = \sum_{i,j} (\sum_k M_{ik} N_{kj})^2 \le \sum_{i,j} (\sum_k M^2_{ik})(\sum_k N^2_{kj}) = \norm{M}^2_F \norm{N}^2_F$.
\end{proof}

\noindent We can also use SoS to reason about low-rank matrices via vanishing of their minors:

\begin{fact}\label{fact:sos_rank}
    Let $\ell\in\mathbb{Z}$. For $r$-dimensional vector-valued indeterminates $v_1,\ldots,v_\ell$, define $M = \sum^{\ell+1}_{i=1} v_iv_i^{\top}$. For any $\brc{a_1,\ldots,a_{\ell+1}}, \brc{b_1,\ldots,b_{\ell+1}}\subset[r]$, there is a degree-$O(\ell)$ SoS proof that
    \begin{equation}
        \sum_{\pi\in\calS_{\ell+1}}\sgn(\pi)\prod^{\ell+1}_{s=1} M_{a_s,b_{\pi(s)}} = 0. \label{eq:determinant}
    \end{equation}
\end{fact}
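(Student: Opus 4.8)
The plan is to expand the minor in \eqref{eq:determinant} entrywise using the rank-one decomposition of $M$, and then observe via a pigeonhole argument that the resulting polynomial in the entries of $v_1,\ldots,v_\ell$ is identically zero. Since $M$ is a sum of $\ell$ rank-one matrices it has rank at most $\ell$, so the $(\ell+1)\times(\ell+1)$ minor on rows $a_1,\ldots,a_{\ell+1}$ and columns $b_1,\ldots,b_{\ell+1}$ ought to vanish; the content here is just to check that this vanishing is witnessed by low-degree polynomial arithmetic, so that it composes as a degree-$O(\ell)$ SoS proof (one using only the defining relations $M_{a,b}=\sum_i(v_i)_a(v_i)_b$).

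Concretely, I would substitute $M_{a_s,b_{\pi(s)}}=\sum_{i_s=1}^{\ell}(v_{i_s})_{a_s}(v_{i_s})_{b_{\pi(s)}}$ into each of the $\ell+1$ factors of $\prod_{s=1}^{\ell+1}M_{a_s,b_{\pi(s)}}$ and multiply out, so that the left-hand side of \eqref{eq:determinant} becomes
\begin{equation}
\sum_{i_1,\ldots,i_{\ell+1}\in[\ell]}\Bigl(\prod_{s=1}^{\ell+1}(v_{i_s})_{a_s}\Bigr)\cdot\sum_{\pi\in\calS_{\ell+1}}\sgn(\pi)\prod_{s=1}^{\ell+1}(v_{i_s})_{b_{\pi(s)}}.
\end{equation}
The inner alternating sum is exactly the determinant of the $(\ell+1)\times(\ell+1)$ matrix whose $(s,t)$ entry is $(v_{i_s})_{b_t}$. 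Since the $\ell+1$ indices $i_1,\ldots,i_{\ell+1}$ take values in a set of size $\ell$, by pigeonhole two of them coincide, say $i_s=i_{s'}$; then rows $s$ and $s'$ of this matrix are identical, so its determinant vanishes as a formal polynomial identity (this is the degree-$(\ell+1)$ antisymmetrization identity). Hence every summand in the outer sum is zero, and the left-hand side of \eqref{eq:determinant} is the zero polynomial.

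To package this as a genuine SoS certificate, note that the only polynomial manipulations used are (i) substituting the degree-$2$ relations $M_{a,b}=\sum_i(v_i)_a(v_i)_b$ into a degree-$(\ell+1)$ monomial in the $M_{a,b}$'s, and (ii) the vanishing of a determinant with a repeated row, which is obtained by multiplying the antisymmetrization identity $\sum_{\pi}\sgn(\pi)\prod_s x_{s,\pi(s)}=0$ (for $x_{s,\cdot}=x_{s',\cdot}$) by fixed monomials in the $v_i$'s. Every intermediate polynomial has degree at most $2(\ell+1)=O(\ell)$ in the entries of $v_1,\ldots,v_\ell$, so this exhibits \eqref{eq:determinant} as lying in the ideal generated by $\brc{M_{a,b}-\sum_i(v_i)_a(v_i)_b}$ via a degree-$O(\ell)$ certificate. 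I do not expect a real obstacle: the only care needed is degree bookkeeping for the ideal multipliers and tracking the harmless mismatch between the $\ell$ rank-one terms of $M$ and the size $\ell+1$ of the minor, which is precisely what makes the pigeonhole step go through.
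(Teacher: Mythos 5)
Your proposal is correct and matches the paper's own proof: both expand each entry of the minor via $M_{a,b}=\sum_{i=1}^{\ell}(v_i)_a(v_i)_b$, factor the resulting sum over index tuples $\mb{i}\in[\ell]^{\ell+1}$, and kill each inner alternating sum by pigeonhole (a repeated index gives a determinant with a repeated row), with the same $O(\ell)$ degree bookkeeping. You also correctly read the upper limit in the definition of $M$ as $\ell$ rather than $\ell+1$, which is what the paper's proof implicitly assumes.
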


\begin{proof}
    We can rewrite \eqref{eq:determinant} as
    \begin{equation}
        \sum_{\mb{i}\in[\ell]^{\ell+1}} \sum_{\pi\in\calS_{\ell+1}} \sgn(\pi) \prod^{\ell+1}_{s=1} (v_{i_s})_{a_s} (v_{i_s})_{b_{\pi(s)}} = \sum_{\mb{i}\in[\ell]^{\ell+1}} \prod^{\ell+1}_{s=1}(v_{i_s})_{a_s} \biggl( \sum_{\pi\in\calS_{\ell+1}} \sgn(\pi) \prod^{\ell+1}_{t=1} (v_{i_s})_{b_{\pi(t)}}\biggr).
    \end{equation}
    For any $\mb{i}\in[\ell]^{\ell+1}$, there is at least one element of $[\ell]$ that appears at least twice, so we conclude that $\sum_{\pi\in\calS_{\ell+1}} \sgn(\pi) \prod^{\ell+1}_{t=1} (v_{i_s})_{b_{\pi(t)}} = 0$ as desired.
\end{proof}

\paragraph{Approximation shorthand.}

Finally, we introduce some important shorthands. We use $a = b \pm c$ to denote the inequalities $-c \le a - b \le c$. Throughout this work, we will use the following basic fact under this notation:

\begin{fact}
    From the inequality $x^2 \le a^2$ for scalar $a > 0$, there is a degree-2 SoS proof that $x = \pm a$.
\end{fact}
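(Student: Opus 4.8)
The plan is to give an explicit Positivstellensatz certificate, exploiting the single hypothesis $p_1(x) \triangleq a^2 - x^2 \ge 0$ together with the fact that $a$ is a fixed positive scalar, so that $1/(2a)$ is a legal nonnegative coefficient. Since ``$x = \pm a$'' unpacks, under the approximation shorthand, to the pair of inequalities $-a \le x \le a$, i.e. $a + x \ge 0$ and $a - x \ge 0$, it suffices to exhibit a degree-$2$ SoS proof of each of these two inequalities.

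For the first, I would verify the elementary algebraic identity
\begin{equation}
    2a(a + x) = (a + x)^2 + (a^2 - x^2),
\end{equation}
which follows by expanding the right-hand side. Dividing through by the positive scalar $2a$ gives
\begin{equation}
    a + x = \tfrac{1}{2a}(a+x)^2 + \tfrac{1}{2a}(a^2 - x^2),
\end{equation}
which is exactly of the form required by Definition~\ref{def:sosproof}: the term $\tfrac{1}{2a}(a+x)^2$ is a sum of squares of degree $2$, while $\tfrac{1}{2a}$ is a nonnegative (degree-$0$) SoS polynomial multiplying the hypothesis $a^2 - x^2 \ge 0$, so every summand has degree at most $2$. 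The inequality $a - x \ge 0$ is handled symmetrically via $2a(a - x) = (a - x)^2 + (a^2 - x^2)$.

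Combining the two certificates yields $-a \le x \le a$, which is the claim, and the degree of the overall proof is $\max(2,2) = 2$. There is essentially no obstacle here; the only point to be careful about is that $a$ must be treated as a fixed positive constant rather than as an indeterminate, so that $1/(2a)$ is a permissible coefficient in the decomposition. If instead $a$ were a program variable, one would additionally need a constraint of the form $a \ge \alpha$ for some explicit $\alpha > 0$ (as in Fact~\ref{fact:division}) and carry an extra factor of $a$ through the certificate, at the cost of a slightly higher degree.
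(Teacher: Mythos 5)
Your certificate $2a(a\pm x) = (a\pm x)^2 + (a^2 - x^2)$ is algebraically the same identity the paper uses (the paper writes it as $2ax - a^2 = x^2 - (x-a)^2 \le x^2 \le a^2$ and rearranges), so the proposal is correct and takes essentially the same approach. Your remark about $a$ being a fixed positive scalar is the right caveat and matches the paper's usage.
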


\begin{proof}
    Observe that $2ax - a^2 \le x^2 - (x-a)^2 \le a^2$, so rearranging we conclude that $-a \le x \le a$, i.e. that $x = \pm a$.
\end{proof}

\noindent Given $r\times r$ matrices $A$ and $B$, we also use the shorthand $A \approx_{\epsilon^2} B$ to denote the polynomial inequality $\norm{A - B}^2_F \le \epsilon^2$. We collect some simple manipulations involving this shorthand:

\begin{fact}\label{fact:shorthand}
    For $i\in[m]$, let $A_i$ and $B_i$ be $r\times r$ matrices for which there is a degree-$d_i$ SoS proof that $A_i\approx_{\epsilon^2_i} B_i$ for some $\epsilon_i > 0$. Denote $A_1,B_1,\epsilon_1,d_1$ by $A,B,\epsilon,d$.
    \begin{enumerate}
        \item For any $r\times r$ matrix of indeterminates $C$, there is a degree-$(d+2)$ SoS proof in the entries of $A,B,C$ that $AC\approx_{\epsilon^2\norm{C}^2_F} BC$ and similarly that $CA\approx_{\epsilon^2\norm{C}^2_F} CB$. \label{shorthand:bothsides}
        \item If $B\approx_{\delta^2} C$, then there is a degree-$d$ Sos proof in the entries of $A,B,C$ that $A\approx_{2\epsilon^2 + 2\delta^2} C$. \label{shorthand:transitive}
        \item There is a degree-$(\max_i d_i + 2m - 2)$ SoS proof in the entries of $\brc{A_i,B_i}$ that $\prod_i A_i \approx_{\epsilon''^2} \prod_i B_i$ for $\epsilon''^2 \triangleq m\sum^m_{i=1} \epsilon^2_i \norm{A_1}^2_F\cdots\norm{A_{i-1}}^2_F \cdot \norm{B_{i+1}}^2_F \cdots \norm{B_m}^2_F$. \label{shorthand:multiply}
        \item For any $\lambda\in\R^m$, there is a degree-$(\max_i d_i)$ SoS proof in the entries of $\brc{A_i,B_i}$ that $\sum_i \lambda_i A_i$ $\approx_{\norm{\lambda}^2_2\cdot \sum_i \epsilon^2_i} \sum_i \lambda_i B_i$. \label{shorthand:lincombo}
    \end{enumerate}
\end{fact}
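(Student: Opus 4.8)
The plan is to deduce each part from a standard matrix-norm inequality with a short SoS proof, composed with the given degree-$d_i$ proofs of the hypotheses $A_i\approx_{\epsilon_i^2}B_i$. Throughout I use that SoS proofs compose and that the product of a valid SoS proof with a sum-of-squares polynomial (or a nonnegative scalar) is again valid, with the degrees adding. The two ``atoms'' are submultiplicativity of the Frobenius norm (Fact~\ref{fact:sos_frobenius_mult}) and the degree-$2$ Lagrange/Cauchy--Schwarz identity of Fact~\ref{fact:cauchy_schwarz}.

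For Part~\ref{shorthand:bothsides}: the inequality $\norm{(A-B)C}^2_F\le\norm{A-B}^2_F\norm{C}^2_F$ is exactly Fact~\ref{fact:sos_frobenius_mult} with $M=A-B$, $N=C$, hence has a degree-$4$ proof; then $\epsilon^2\norm{C}^2_F-\norm{(A-B)C}^2_F=(\epsilon^2-\norm{A-B}^2_F)\norm{C}^2_F+(\norm{A-B}^2_F\norm{C}^2_F-\norm{(A-B)C}^2_F)$, whose first summand is the degree-$d$ hypothesis scaled by the degree-$2$ square $\norm{C}^2_F$ and whose second is the degree-$4$ submultiplicativity proof, giving degree $d+2$ (using $d\ge 2$); $CA\approx CB$ is symmetric. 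For Part~\ref{shorthand:transitive}: from $\norm{A-C}^2_F=\norm{(A-B)+(B-C)}^2_F$ and $2\,\iprod{A-B, B-C}_F\le\norm{A-B}^2_F+\norm{B-C}^2_F$, witnessed by the square $\norm{(A-B)-(B-C)}^2_F\ge 0$, one gets the degree-$2$ identity $2\epsilon^2+2\delta^2-\norm{A-C}^2_F=2(\epsilon^2-\norm{A-B}^2_F)+2(\delta^2-\norm{B-C}^2_F)+\norm{(A-B)-(B-C)}^2_F$; adding the degree-$d$ proof of $A\approx_{\epsilon^2}B$ and the proof of $B\approx_{\delta^2}C$ finishes. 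Part~\ref{shorthand:lincombo} is the same idea applied to $m$ terms at once: from the identity $\norm{\lambda}^2_2\sum_i\norm{A_i-B_i}^2_F-\norm{\sum_i\lambda_i(A_i-B_i)}^2_F=\tfrac12\sum_{i,j}\norm{\lambda_i(A_j-B_j)-\lambda_j(A_i-B_i)}^2_F$, scale each hypothesis by the scalar $\norm{\lambda}^2_2$ and add.

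For Part~\ref{shorthand:multiply} I would run a hybrid argument built on Part~\ref{shorthand:bothsides}. Define $P_k\triangleq A_1\cdots A_k B_{k+1}\cdots B_m$, so $P_m=\prod_i A_i$ and $P_0=\prod_i B_i$. Applying the right-multiplication form of Part~\ref{shorthand:bothsides} $m-k$ times to append $B_{k+1},\ldots,B_m$ to the hypothesis $A_k\approx_{\epsilon_k^2}B_k$, then the left-multiplication form $k-1$ times to prepend $A_{k-1},\ldots,A_1$, yields a degree-$(d_k+2m-2)$ proof that $P_k\approx_{\delta_k^2}P_{k-1}$ with $\delta_k^2=\epsilon_k^2\norm{A_1}^2_F\cdots\norm{A_{k-1}}^2_F\norm{B_{k+1}}^2_F\cdots\norm{B_m}^2_F$. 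Since $\prod_i A_i-\prod_i B_i=\sum_{k=1}^m(P_k-P_{k-1})$, the degree-$2m$ inequality $\norm{\sum_k(P_k-P_{k-1})}^2_F\le m\sum_k\norm{P_k-P_{k-1}}^2_F$ (Fact~\ref{fact:cauchy_schwarz} with all the $y_i$ set to $1$, applied entrywise) together with the per-step bounds gives $\prod_i A_i\approx_{\epsilon''^2}\prod_i B_i$ with $\epsilon''^2=m\sum_k\delta_k^2$ at degree $\max_k d_k+2m-2$.

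All of the manipulations are routine; the only point requiring care is the degree bookkeeping, especially in Part~\ref{shorthand:multiply}: one must check that the $m-1$ applications of Part~\ref{shorthand:bothsides} add exactly $2(m-1)$ to the degree and that the Cauchy--Schwarz combination step (of degree $2m$ in the entries of the $P_k$'s) is absorbed, using $d_i\ge 2$. I do not anticipate any genuine obstacle.
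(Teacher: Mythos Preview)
Your proposal is correct and follows the same approach as the paper's proof: Part~\ref{shorthand:bothsides} via Frobenius submultiplicativity (Fact~\ref{fact:sos_frobenius_mult}), Parts~\ref{shorthand:transitive} and \ref{shorthand:lincombo} via Cauchy--Schwarz, and Part~\ref{shorthand:multiply} via the same telescoping hybrid $P_k - P_{k-1} = A_1\cdots A_{k-1}(A_k-B_k)B_{k+1}\cdots B_m$ followed by squared triangle inequality and submultiplicativity. The paper's proof is much terser and does not spell out the degree accounting, whereas you do so explicitly; your degree analysis is correct.
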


\begin{proof}
    The first part follows immediately from Fact~\ref{fact:sos_frobenius_mult}. The second part follows from Cauchy-Schwarz. For the third part, we have in degree-4 SoS that
    \begin{equation}
        \Big\|\prod_i A_i - \prod_i B_i\Big\|^2_F = \Big\|\sum^m_{i=1} A_1\cdots A_{i-1} (A_i-B_i) B_{i+1}\cdots B_m\Big\|^2_F \le \epsilon''^2
    \end{equation}
    where in the second step we used squared triangle inequality, and in the last step we used Fact~\ref{fact:sos_frobenius_mult} and Cauchy-Schwarz. The fourth part follows by Cauchy-Schwarz:
    \begin{align}
        \Big\|\sum_i \lambda_i (A_i - B_i)\Big\|^2 &\le \norm{\lambda}^2 \cdot \sum_i \norm{A_i - B_i}^2_F \le \norm{\lambda}^2 \cdot \sum_i \epsilon^2_i. \qedhere
    \end{align}
\end{proof}

\noindent The following is an immediate consequence of Cauchy-Schwarz:

\begin{fact}\label{fact:frob_to_trace}
    If $A\approx_{\epsilon^2} B$, then there is a degree-2 SoS proof in the entries of $A,B$ that $\Tr(A-B)^2 \le r\epsilon^2$ and in particular that $-\epsilon\sqrt{r} \le \Tr(A) - \Tr(B) \le \epsilon\sqrt{r}$.
\end{fact}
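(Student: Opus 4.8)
The plan is to reduce the claim to the elementary fact that $\bigl(\sum_i x_i\bigr)^2 \le r\sum_i x_i^2$ admits a degree-$2$ SoS proof, applied to the diagonal entries of $A - B$, and then to observe that the diagonal sum of squares is dominated by the full Frobenius norm, which is exactly the hypothesis $A \approx_{\epsilon^2} B$.

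Concretely, first I would write $\Tr(A - B) = \sum_{i=1}^r (A-B)_{ii}$ and use the identity
\begin{equation}
    r\sum_{i=1}^r (A-B)_{ii}^2 - \Bigl(\sum_{i=1}^r (A-B)_{ii}\Bigr)^2 = \sum_{i<j}\bigl((A-B)_{ii} - (A-B)_{jj}\bigr)^2,
\end{equation}
whose right-hand side is manifestly a sum of squares of linear forms in the entries of $A$ and $B$; this gives a degree-$2$ SoS proof that $\Tr(A-B)^2 \le r\sum_i (A-B)_{ii}^2$. Next, since each off-diagonal term $(A-B)_{ij}^2$ is itself a square, one gets for free that $\sum_i (A-B)_{ii}^2 \le \sum_{i,j}(A-B)_{ij}^2 = \norm{A-B}_F^2$, so combining with the program constraint $\norm{A-B}_F^2 \le \epsilon^2$ yields a degree-$2$ SoS proof that $\Tr(A-B)^2 \le r\epsilon^2$. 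Finally, applying the preceding fact (that $x^2 \le a^2$ admits a degree-$2$ SoS proof of $-a \le x \le a$) with $x = \Tr(A) - \Tr(B)$ and $a = \epsilon\sqrt{r}$ gives the stated two-sided bound.

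There is essentially no obstacle here; the only thing to watch is the degree bookkeeping, but every step uses only the Cauchy--Schwarz-type identity $\sum_{i<j}(x_i-x_j)^2 \ge 0$, nonnegativity of squares, and a linear substitution of the hypothesis constraint, so the total degree stays at $2$ in the entries of $A$ and $B$, as claimed.
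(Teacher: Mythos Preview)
Your proposal is correct and is exactly the argument the paper has in mind: the paper simply says the fact ``is an immediate consequence of Cauchy--Schwarz,'' and your identity $r\sum_i x_i^2 - (\sum_i x_i)^2 = \sum_{i<j}(x_i-x_j)^2$ is precisely the SoS instantiation of Cauchy--Schwarz (Fact~\ref{fact:cauchy_schwarz} with $y_i\equiv 1$) applied to the diagonal of $A-B$, followed by the trivial bound $\sum_i (A-B)_{ii}^2 \le \norm{A-B}_F^2$ and the paper's preceding ``$x^2\le a^2 \Rightarrow x=\pm a$'' fact. The degree bookkeeping is fine.
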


\section{Learning Polynomial Transformations}
\label{sec:connect}

In this section, we establish the connection between the inverse problems of Section~\ref{sec:inverse_problems}, tensor ring decomposition and low-rank factorization, to the problem of learning polynomial transformations:

\begin{theorem}\label{thm:quadratic_to_tensorring}
    Let $\epsilon > 0$. Suppose there is an algorithm for tensor ring decomposition (Definition~\ref{def:tensorring}) that, given as input $S,T$ satisfying     \begin{equation}
        \abs*{\Tr(Q^*_a Q^*_b) - S_{a,b}} \le \eta \ \ \forall \ a,b\in[d] \label{eq:estimate_second_moment}
    \end{equation}
    \begin{equation}
        \abs*{\Tr(Q^*_a Q^*_b Q^*_c) - T_{a,b,c}} \le \eta \ \ \forall \ a,b,c\in[d]. \label{eq:estimate_third_moment}
    \end{equation}
    for some $\eta = \eta(\epsilon)$, runs in time $T$ and with high probability outputs symmetric matrices $\wh{Q}_1,\ldots,\wh{Q}_d$ for which $\gaugedist(\brc{Q^*_a},\brc{\wh{Q}_a})\le \epsilon$.
    
    Then there is an algorithm for parameter learning the transformation $\calD$ given by quadratic network $Q^*_1,\ldots,Q^*_d$ to error $\epsilon$ with high probability that draws $O(r^3\radius^6\log^3(2d/\delta)/\eta(\epsilon)^2)$ samples and runs in time $T$. Furthermore, this algorithm also solves proper density estimation to Wasserstein error $O(\epsilon r\sqrt{d})$ with high probability.
\end{theorem}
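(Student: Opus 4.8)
The plan is to reduce parameter learning of the transformation $\calD$ to tensor ring decomposition by estimating the degree-$\le 3$ moments of $\calD$ and extracting from them valid inputs $S,T$ for Definition~\ref{def:tensorring}. Write $z=(z_1,\ldots,z_d)\sim\calD$, so that $z_a=x^\top Q^*_ax$ for $x\sim\calN(0,\Id_r)$ and $Q^*_a$ symmetric. By Wick's theorem (Isserlis' formula) for Gaussian quadratic forms, each of $\E{z_a}$, $\E{z_az_b}$, $\E{z_az_bz_c}$ is an explicit polynomial in $\brc{\Tr(Q^*_a)}_a$, $\brc{\Tr(Q^*_aQ^*_b)}_{a,b}$, $\brc{\Tr(Q^*_aQ^*_bQ^*_c)}_{a,b,c}$; concretely $\E{z_a}=\Tr(Q^*_a)$, $\E{z_az_b}=\Tr(Q^*_a)\Tr(Q^*_b)+2\Tr(Q^*_aQ^*_b)$, and $\E{z_az_bz_c}$ is the sum of $\Tr(Q^*_a)\Tr(Q^*_b)\Tr(Q^*_c)$, the three terms $2\Tr(Q^*_a)\Tr(Q^*_bQ^*_c)$ (and cyclic), and $8\Tr(Q^*_aQ^*_bQ^*_c)$. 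Inverting this triangular system gives
\begin{equation}
  \Tr(Q^*_aQ^*_b) = \tfrac12\bigl(\E{z_az_b} - \E{z_a}\E{z_b}\bigr),
\end{equation}
and $\Tr(Q^*_aQ^*_bQ^*_c)$ as an explicit polynomial with $O(1)$ coefficients in $\brc{\E{z_a}},\brc{\E{z_az_b}},\brc{\E{z_az_bz_c}}$, the top-degree contribution being $\tfrac18\E{z_az_bz_c}$. So it suffices to estimate these $O(d^3)$ moments.

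Concretely, I would draw $n$ i.i.d.\ samples, form the empirical moment estimates, symmetrize them (which only decreases error, since the truth is symmetric), and plug them into the inverted Wick identities to produce $S\in\R^{d\times d}$ and $T\in\R^{d\times d\times d}$. To control the error, note each $z_a$ is a degree-$2$ polynomial in a Gaussian, so $z_az_b$ and $z_az_bz_c$ are degree-$4$ and degree-$6$ polynomials; by the hypercontractive tail bound of Lemma~\ref{lem:hypercontractivity} their empirical averages concentrate, and a union bound over the $O(d^3)$ moments together with the variance estimates $\Var{z_az_bz_c}=O(r^3\radius^6)$, etc.\ (which follow from $\E{z_a^2}=O(r\radius^2)$ and hypercontractivity, via H\"older), shows that $n=O(r^3\radius^6\log^3(2d/\delta)/\eta(\epsilon)^2)$ samples suffice so that, after propagating errors through the bounded-coefficient Wick identities (every intermediate quantity being $\poly(r,\radius)$), we get $\abs{\Tr(Q^*_aQ^*_b)-S_{a,b}}\le\eta$ and $\abs{\Tr(Q^*_aQ^*_bQ^*_c)-T_{a,b,c}}\le\eta$ for all $a,b,c$ with probability $\ge 1-\delta$, i.e.\ \eqref{eq:estimate_second_moment} and \eqref{eq:estimate_third_moment} hold. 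Running the assumed tensor ring decomposition algorithm on $(S,T)$ then outputs, in time $T$, symmetric $\wh Q_1,\ldots,\wh Q_d$ with $\gaugedist(\brc{Q^*_a},\brc{\wh Q_a})\le\epsilon$, which is exactly parameter learning of $\calD$ to error $\epsilon$. For density estimation, letting $\wh\calD$ be the transformation specified by $\brc{\wh Q_a}$, Lemma~\ref{lem:param_to_wasserstein} with $\omega=2$ gives $W_1(\calD,\wh\calD)\le\gaugedist(\brc{Q^*_a},\brc{\wh Q_a})\cdot\sqrt d\cdot O(2r)=O(\epsilon r\sqrt d)$.

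The only real content beyond bookkeeping is the sample-complexity step: one must use a genuine polynomial concentration inequality (Lemma~\ref{lem:hypercontractivity}) rather than Chebyshev, carefully bound the variance of a product of three quadratic forms as $O(r^3\radius^6)$ (and not worse) via hypercontractivity and H\"older, and verify that inverting the Wick identities amplifies error only by a $\poly(r,\radius)$ factor independent of $d$; the $\log^3(2d/\delta)$ dependence is the signature of the degree-$6$ (sub-Weibull with exponent $1/3$) tails. Everything else --- the Wick/Isserlis identities, the symmetrization, the invocation of the assumed algorithm, and the final appeal to Lemma~\ref{lem:param_to_wasserstein} --- is routine.
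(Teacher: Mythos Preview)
Your proposal is correct and matches the paper's approach essentially step for step: the paper's proof simply invokes Lemma~\ref{lem:moments} (the Wick/Isserlis identities you wrote out), Lemma~\ref{lem:estimate_moments} (moment estimation via the hypercontractive tail bound of Lemma~\ref{lem:hypercontractivity}, with the same $O(r^3\radius^6)$ variance calculation and resulting $\log^3$ dependence), and Lemma~\ref{lem:param_to_wasserstein} for the Wasserstein conclusion. The only cosmetic difference is that the paper phrases the estimator directly in terms of the centered polynomials $(z_a-\E{z_a})(z_b-\E{z_b})(z_c-\E{z_c})$ rather than estimating raw moments and inverting, but this is equivalent.
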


\begin{theorem}\label{thm:lowrank_to_factorization}
    Let $\epsilon > 0$ and define
    \begin{equation}
        \Sigma\triangleq \E[g\sim\calN(0,\Id_r)]{g^{\otimes\omega}(g^{\otimes\omega})^{\top}}. \label{eq:Sigma_prelim_def}
    \end{equation}
    Suppose there is an algorithm for low-rank factorization (Definition~\ref{def:lowrankfactorization}) that, given as input $S$ satisfying
    \begin{equation}
        \abs*{\iprod{T^*_a,T^*_b}_{\Sigma} - S_{a,b}} \le \eta \ \ \forall \ a,b\in[d] \label{eq:estimate_second_moment_2}
    \end{equation}
    for some $\eta = \eta(\epsilon)$, runs in time $T$ and with high probability outputs $\wh{T}_1,\ldots,\wh{T}_d$ for which we have $\gaugedist(\brc{T^*_a},\brc{\wh{T}_a}) \le \epsilon$.
    
    Then there is an algorithm for parameter learning the transformation $\calD$ given by low-rank polynomial network $T^*_1,\ldots,T^*_d$ to error $\epsilon$ with high probability that draws $O(\omega r)^{2\omega} \radius^4 \log^{2\omega}(\delta/d)/\eta(\epsilon)^2$ samples and runs in time $T$. Furthermore, this algorithm also solves proper density estimation to Wasserstein error $O(\epsilon r\sqrt{d})$ with high probability.
\end{theorem}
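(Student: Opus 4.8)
The plan is to run the assumed low-rank factorization routine on the empirical second-moment matrix of the samples. Given i.i.d.\ samples $z^{(1)},\ldots,z^{(n)}$ from $\calD$, where $z^{(m)}_a = \iprod{T^*_a,{x^{(m)}}^{\otimes\omega}} = \iprod{\vec(T^*_a),\vec({x^{(m)}}^{\otimes\omega})}$ for hidden $x^{(m)}\sim\calN(0,\Id_r)$, form $S_{a,b}\triangleq\frac1n\sum_{m=1}^n z^{(m)}_a z^{(m)}_b$. The first step is to observe that $S$ is unbiased for the matrix of $\Sigma$-inner products:
\begin{equation}
    \E[x\sim\calN(0,\Id_r)]{z_a z_b} = \vec(T^*_a)^{\top}\,\E{\vec(x^{\otimes\omega})\vec(x^{\otimes\omega})^{\top}}\,\vec(T^*_b) = \iprod{T^*_a,T^*_b}_{\Sigma},
\end{equation}
where $\Sigma$ is exactly \eqref{eq:Sigma_prelim_def}; note $\Sigma$ is a fixed, explicitly computable matrix of Gaussian moments, so $(S,\Sigma)$ is a legitimate input to low-rank factorization, and unlike the quadratic case only second-order moments enter.

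The second step is the sample-complexity bound: we need $\abs{S_{a,b}-\iprod{T^*_a,T^*_b}_\Sigma}\le\eta$ simultaneously for all $a,b\in[d]$. Each $z_a$ is a homogeneous degree-$\omega$ polynomial of a standard Gaussian, so $z_a z_b$ has degree $2\omega$ and
\begin{equation}
    \Var{z_a z_b}\le\E{z_a^2 z_b^2}\le\sqrt{\E{z_a^4}\E{z_b^4}}\le 9^{\omega}\,\E{z_a^2}\,\E{z_b^2},
\end{equation}
the last inequality being Gaussian hypercontractivity for the degree-$\omega$ polynomials $z_a,z_b$. Since $\E{z_a^2}=\iprod{T^*_a,T^*_a}_\Sigma\le\norm{\Sigma}_{\op}\norm{T^*_a}_F^2\le\Tr(\Sigma)\radius^2=\E[g\sim\calN(0,\Id_r)]{\norm{g}^{2\omega}}\radius^2\le O(\omega r)^{\omega}\radius^2$, where $\radius$ bounds $\norm{T^*_a}_F$, we get $\Var{z_a z_b}\le O(\omega r)^{2\omega}\radius^4$. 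The empirical average $S_{a,b}$ is itself a degree-$2\omega$ polynomial of the $nr$-dimensional standard Gaussian $(x^{(1)},\ldots,x^{(n)})$ with variance $\Var{z_a z_b}/n$, so the dimension-free tail bound of Lemma~\ref{lem:hypercontractivity} yields $\Pr{\abs{S_{a,b}-\iprod{T^*_a,T^*_b}_\Sigma}>\eta}\le\exp(-\Omega((\eta\sqrt n/\sqrt{\Var{z_a z_b}})^{1/\omega}))$; choosing $n$ as in the theorem and union-bounding over the $d^2$ pairs makes the total failure probability as small as desired.

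The third step is to run the assumed algorithm on $(S,\Sigma)$, obtaining with high probability, in time $T$, tensors $\wh{T}_1,\ldots,\wh{T}_d$ with $\gaugedist(\brc{T^*_a},\brc{\wh{T}_a})\le\epsilon$ (the only extra cost is forming $S$, which is $O(nd^2)$ and absorbed into $T$); outputting these proves the parameter-learning guarantee. For proper density estimation we output the transformation $\wh{\calD}$ specified by $\wh{T}_1,\ldots,\wh{T}_d$, and Lemma~\ref{lem:param_to_wasserstein} gives $W_1(\calD,\wh{\calD})\le\gaugedist(\brc{T^*_a},\brc{\wh{T}_a})\cdot\sqrt d\cdot O(\omega r)^{\omega/2}=O(\epsilon r\sqrt d)$ for constant $\omega$.

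The main obstacle I anticipate is the variance estimate driving the concentration step: extracting the clean $O(\omega r)^{2\omega}\radius^4$ dependence needs the sharp $9^{\omega}$ hypercontractive constant together with the crude-but-uniform bound $\norm{\Sigma}_{\op}\le\Tr(\Sigma)=\E[g\sim\calN(0,\Id_r)]{\norm{g}^{2\omega}}$, which is where the factor $r^{\omega}$ enters. One must also take care that the degree-$2\omega$ concentration inequality is applied dimension-freely to the Gaussian underlying the empirical average, exactly as Lemma~\ref{lem:hypercontractivity} is phrased; by contrast, the unbiasedness identity, the reduction to second moments, and the final invocations of the subroutine and of Lemma~\ref{lem:param_to_wasserstein} are routine.
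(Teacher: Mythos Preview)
Your proposal is correct and follows essentially the same route as the paper: identify $\E{z_az_b}=\iprod{T^*_a,T^*_b}_\Sigma$ (the paper's Lemma~\ref{lem:moments2}), establish concentration of the empirical second moments via Lemma~\ref{lem:hypercontractivity} (the paper's Lemma~\ref{lem:estimate_moments_2}), feed the result to the assumed subroutine, and invoke Lemma~\ref{lem:param_to_wasserstein} for the Wasserstein bound. The only cosmetic difference is in the variance estimate: the paper bounds $\E{(z_az_b)^2}\le\norm{T^*_a}_F^2\norm{T^*_b}_F^2\,\E{\norm{g}^{4\omega}}$ directly via Cauchy--Schwarz on the tensor inner products, whereas you split into $\sqrt{\E{z_a^4}\E{z_b^4}}$ and then apply hypercontractivity to each factor; both yield the same $O(\omega r)^{2\omega}\radius^4$ bound.
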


\subsection{Quadratic Transformations}
\label{sec:quadratic_reduce}

Here we establish the connection between method of moments for learning quadratic transformations and tensor ring decomposition, and tensor ring decomposition and low-rank factorization. Throughout this section, let $\calD$ be a $d$-dimensional degree-2 transformation with seed length $r$ that is specified by the polynomial network $Q^*_1,\ldots,Q^*_d\in\R^{r\times r}$. 

\begin{lemma}\label{lem:moments}
    If $z$ is a sample from $\calD$, then for any $a,b,c\in[d]$,
    \begin{equation}
        2\Tr(Q^*_a Q^*_b) = \E{(z_a - \E{z_a})(z_b - \E{z_b})}
    \end{equation}
    \begin{equation}
        8\Tr(Q^*_a Q^*_b Q^*_c) = \E{(z_a-\E{z_a})(z_b-\E{z_b})(z_c-\E{z_c})}
    \end{equation}
\end{lemma}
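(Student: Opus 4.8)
The plan is to expand $z_a = \iprod{Q^*_a, x^{\otimes 2}} = x^\top Q^*_a x$ for $x\sim\calN(0,\Id_r)$ (using that each $Q^*_a$ may be taken symmetric without loss of generality), and then compute the needed low-order Gaussian moments via Wick's theorem. First I would record that $\E{z_a} = \E{x^\top Q^*_a x} = \Tr(Q^*_a)$ since $\E{x_i x_j} = \delta_{ij}$. Writing $\mu_a \triangleq \Tr(Q^*_a)$ and $y_a \triangleq z_a - \mu_a$, the two claimed identities reduce to evaluating $\E{y_a y_b}$ and $\E{y_a y_b y_c}$.

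For the raw second moment, expanding $z_a z_b = \sum_{i,j,k,l} (Q^*_a)_{ij}(Q^*_b)_{kl} x_i x_j x_k x_l$ and applying Wick's theorem to $\E{x_i x_j x_k x_l}$ (the sum of $\delta$-products over the three perfect matchings of four indices) gives $\E{z_a z_b} = \Tr(Q^*_a)\Tr(Q^*_b) + 2\Tr(Q^*_a Q^*_b)$; the factor $2$ comes from the two matchings pairing an index of the $a$-block with an index of the $b$-block, and symmetry of the matrices identifies $\Tr(Q^*_a (Q^*_b)^\top)$ with $\Tr(Q^*_a Q^*_b)$. Hence $\E{y_a y_b} = \E{z_a z_b} - \mu_a\mu_b = 2\Tr(Q^*_a Q^*_b)$, the first identity.

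For the raw third moment, the same Wick expansion of $\E{x_{i_1}\cdots x_{i_6}}$ ranges over the $15$ perfect matchings of six indices, which I would organize according to how the three index-blocks $\{1,2\},\{3,4\},\{5,6\}$ (from $Q^*_a, Q^*_b, Q^*_c$) are joined: the single ``all internal'' matching contributes $\Tr(Q^*_a)\Tr(Q^*_b)\Tr(Q^*_c)$; the $6$ matchings with exactly one internal block contribute $2[\Tr(Q^*_a)\Tr(Q^*_b Q^*_c) + \Tr(Q^*_b)\Tr(Q^*_a Q^*_c) + \Tr(Q^*_c)\Tr(Q^*_a Q^*_b)]$; and the remaining $8$ matchings, in which all three blocks are mutually connected, each contribute a trace of a triple product of $Q^*_a, Q^*_b, Q^*_c$ in some order, all equal to $\Tr(Q^*_a Q^*_b Q^*_c)$ since the matrices are symmetric (so $\Tr(Q^*_a Q^*_b Q^*_c) = \Tr((Q^*_a Q^*_b Q^*_c)^\top) = \Tr(Q^*_c Q^*_b Q^*_a)$, and cyclicity disposes of the rest), yielding $8\Tr(Q^*_a Q^*_b Q^*_c)$ in total. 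Plugging this, the second-moment formula, and $\E{z_a} = \mu_a$ into the expansion of $(z_a-\mu_a)(z_b-\mu_b)(z_c-\mu_c)$, a short bookkeeping check shows everything involving $\mu$'s cancels: the $\mu_a\mu_b\mu_c$ terms have net coefficient $1-3+2=0$, and the mixed $\mu\cdot\Tr(\cdot\,\cdot)$ terms cancel in pairs, leaving exactly $\E{y_a y_b y_c} = 8\Tr(Q^*_a Q^*_b Q^*_c)$.

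This is essentially a bookkeeping argument; the only delicate points are enumerating the perfect matchings of six indices correctly and using symmetry of the $Q^*_a$ to collapse the various orderings of matrix products into a single trace, so I expect that to be the main (minor) obstacle. As an alternative to the direct Wick computation one could diagonalize each $Q^*_a$ and invoke Gaussian-moment identities of the flavor of Lemma~\ref{lem:hermite} for products of squared linear forms, but the matchings computation above seems the cleanest route.
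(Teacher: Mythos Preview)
Your proposal is correct and essentially identical to the paper's own proof: both apply Isserlis'/Wick's theorem to compute the raw moments $\E{z_a}$, $\E{z_a z_b}$, $\E{z_a z_b z_c}$, use symmetry of the $Q^*_a$ to identify all orderings of the triple-product trace, and then observe that the centered cumulants isolate the $2\Tr(Q^*_a Q^*_b)$ and $8\Tr(Q^*_a Q^*_b Q^*_c)$ terms. The paper states the raw moment formulas directly and remarks that the lemma ``follows immediately,'' whereas you spell out the matching enumeration and cancellation a bit more explicitly.
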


\begin{proof}
    For any $a,b,c\in[d]$, we have by Isserlis' theorem that
    \begin{equation}
        \E{z_a} = \E[x\sim\calN(0,\Id)]{x^{\top} Q^*_a x} = \Tr(Q^*_a).
    \end{equation}
    \begin{equation}
        \E{z_a z_b} = \E[x\sim\calN(0,\Id)]{(x^{\top}Q^*_a x)\cdot (x^{\top}Q^*_b x)} = \Tr(Q^*_a)\Tr(Q^*_b) + 2\Tr(Q^*_a Q^*_b).
    \end{equation}
    \begin{align}
        \E{z_a z_b z_c} &= \E[x\sim\calN(0,\Id)]{(x^{\top}Q^*_a x)\cdot (x^{\top}Q^*_b x)\cdot (x^{\top}Q^*_c x)} \\
        &= \Tr(Q^*_a)\Tr(Q^*_b)\Tr(Q^*_c) + 2\Tr(Q^*_a) \Tr(Q^*_b Q^*_c) + 2\Tr(Q^*_b) \Tr(Q^*_a Q^*_c)\\
        & \qquad + 2\Tr(Q^*_c)\Tr(Q^*_a Q^*_b) + 8\Tr(Q^*_a Q^*_b Q^*_c),
    \end{align}
    where for the last identity, we used the fact that for any symmetric matrices $A_1,A_2,A_3$,
    \begin{equation}
        \Tr(A_1A_2A_3) = \Tr(A_{\pi(1)}A_{\pi(2)}A_{\pi(3)}) \ \ \forall \ \pi\in\calS_3.
    \end{equation}
    The lemma follows immediately from the above moment calculations.
\end{proof}

\begin{lemma}[Empirical moment estimation]\label{lem:estimate_moments}
    For any $\eta,\delta > 0$, if $\norm{Q^*_a}^2_F \le \radius^2$ for all $a\in[d]$, there is an algorithm that takes $O(r^3\radius^6\log^3(2d/\delta)/\eta^2)$ samples from $\calD$ and with probability at least $1 - \delta$ outputs $S\in\R^{d\times d}$ and $T\in\R^{d\times d\times d}$ satisfying \eqref{eq:estimate_second_moment} and \eqref{eq:estimate_third_moment}.
\end{lemma}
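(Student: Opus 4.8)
The plan is to output the suitably normalized empirical central moments of the $n$ samples and bound the sampling error by concentration. By Lemma~\ref{lem:moments}, to produce $S$ and $T$ satisfying \eqref{eq:estimate_second_moment} and \eqref{eq:estimate_third_moment} it suffices to estimate $\E{(z_a - \E{z_a})(z_b - \E{z_b})}$ and $\E{(z_a - \E{z_a})(z_b - \E{z_b})(z_c - \E{z_c})}$ to additive error $2\eta$ and $8\eta$ respectively, for all $a,b,c\in[d]$. So I would set $S_{a,b} \triangleq \tfrac12\,\widehat{C}_{a,b}$ and $T_{a,b,c}\triangleq\tfrac18\,\widehat{C}_{a,b,c}$, where, given samples $z^{(1)},\dots,z^{(n)}\sim\calD$ and empirical means $\hat\mu_x \triangleq \tfrac1n\sum_i z^{(i)}_x$, we define the sample central moments $\widehat{C}_{a,b}\triangleq\tfrac1n\sum_i(z^{(i)}_a-\hat\mu_a)(z^{(i)}_b-\hat\mu_b)$ and $\widehat{C}_{a,b,c}\triangleq\tfrac1n\sum_i\prod_{x\in\{a,b,c\}}(z^{(i)}_x-\hat\mu_x)$.

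Next I would record the relevant moment bounds for a single sample. Each coordinate $z_a = x^\top Q^*_a x$ is a degree-$2$ polynomial in $x\sim\calN(0,\Id_r)$ with $\E{z_a}=\Tr(Q^*_a)$, $\abs{\E{z_a}}\le\sqrt r\,\norm{Q^*_a}_F\le\sqrt r\,\radius$, and $\Var{z_a}=2\norm{Q^*_a}^2_F\le 2\radius^2$; hence by hypercontractivity (Lemma~\ref{lem:hypercontractivity}) the deviation $z_a-\E{z_a}$ is subexponential with $L^p$-norm $O(p\,\radius)$, so $\norm{z_a}_{L^p}=O((\sqrt r+p)\radius)$, and therefore any product of at most three of the $z_a$'s is a polynomial of degree at most $6$ in the Gaussian with variance $O(r^3\radius^6)$. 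Writing $z^{(i)}_x-\hat\mu_x=(z^{(i)}_x-\E{z_x})-(\hat\mu_x-\E{z_x})$ and expanding the products shows that $\widehat{C}_{a,b,c}-\E{(z_a-\E{z_a})(z_b-\E{z_b})(z_c-\E{z_c})}$ equals the deviation of the empirical mean of $(z_a-\E{z_a})(z_b-\E{z_b})(z_c-\E{z_c})$ from its expectation, plus $O(1)$ lower-order correction terms built from empirical-mean deviations $\hat\mu_x-\E{z_x}$ and empirical means of products of at most two centered $z$'s, and analogously (with a simpler expansion) for $\widehat{C}_{a,b}$. Estimating the central moments this way, rather than reconstructing them from raw moments, is what avoids amplifying the target accuracy $\eta$ by powers of $\sqrt r\,\radius$.

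It then remains to show that with $n = O(r^3\radius^6\log^3(2d/\delta)/\eta^2)$ samples, every empirical average appearing in these expansions is within a suitably small constant times $\eta$ of its expectation simultaneously over all $O(d^3)$ tuples $(a,b,c)$, with probability at least $1-\delta$. Each such average is a mean of $n$ i.i.d.\ copies of a polynomial of a Gaussian of degree at most $6$ and variance $O(r^3\radius^6)$, so combining hypercontractivity (Lemma~\ref{lem:hypercontractivity}) with a standard Rosenthal/Marcinkiewicz--Zygmund moment inequality for the sum (and Markov), or equivalently truncating each $\abs{z^{(i)}_a}$ at $\Theta(\radius(\sqrt r+\log(nd/\delta)))$ and applying the ordinary Bernstein inequality to the bounded truncations, yields a deviation bound whose failure probability is below $\delta/\mathrm{poly}(d)$ once $n$ exceeds the stated quantity (the truncation moves every relevant expectation by only a negligible $\poly(r,\radius)\cdot(\delta/(nd))^{1/2}$). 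A union bound over the tuples and over the finitely many averages then completes the proof. The main obstacle is exactly this concentration step: products of degree-$2$ Gaussian polynomials are only sub-Weibull rather than sub-Gaussian or bounded, so their empirical means concentrate at the slower rate responsible for the $\log^3(2d/\delta)$ factor (the exponent $3$ being tied to the degree-$6$ triple products), which forces a Bernstein/Rosenthal-type argument in place of Hoeffding; everything else is routine bookkeeping, and in fact the stated sample bound has considerable slack.
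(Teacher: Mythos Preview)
Your proposal is correct and follows the same overall strategy as the paper: estimate the centered second and third moments empirically and use hypercontractivity of Gaussian polynomials to control the fluctuations. There are two mild differences worth noting.

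First, your concentration step is more elaborate than needed. Rather than invoking a Rosenthal/Marcinkiewicz--Zygmund inequality or truncating and applying Bernstein, the paper observes that the empirical average $\tfrac{1}{n}\sum_{i=1}^n p(g_i)$ (and likewise $\tfrac{1}{n}\sum_i q(g_i)$) is itself a polynomial of degree $4$ (resp.\ $6$) in the joint Gaussian vector $(g_1,\ldots,g_n)\sim\calN(0,\Id_{nr})$, with variance $\tfrac{1}{n}\Var{p(g)}=O(r^2\radius^4/n)$ (resp.\ $O(r^3\radius^6/n)$). Lemma~\ref{lem:hypercontractivity} then applies directly to the average and yields in one line the deviation bound with the $\log^2$ (resp.\ $\log^3$) factor you were after. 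This bypasses the sub-Weibull bookkeeping entirely.

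Second, the paper's proof centers by the \emph{true} means $\Tr(Q^*_a)$ when defining $p$ and $q$, which the algorithm does not actually know; you are more careful in using $\hat\mu_a$ and expanding $z_x^{(i)}-\hat\mu_x=(z_x^{(i)}-\E z_x)-(\hat\mu_x-\E z_x)$ to isolate lower-order corrections. That extra care is legitimate and easy (each correction term carries at least one factor of $\hat\mu_x-\E z_x$, which is $O(\radius\log(d/\delta)/\sqrt n)$ by the degree-$2$ case of the same hypercontractivity trick), so both write-ups arrive at the same sample bound.
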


We defer the proof of this to Appendix~\ref{app:defer_estimate_moments}. Theorem~\ref{thm:quadratic_to_tensorring} now immediately follows from Lemma~\ref{lem:estimate_moments}:

\begin{proof}[Proof of Theorem~\ref{thm:quadratic_to_tensorring}]
    The guarantee for parameter learning follows from Lemma~\ref{lem:estimate_moments}. The guarantee for proper density estimation follows from Lemma~\ref{lem:param_to_wasserstein}.
\end{proof}

\paragraph{Extending to general rotation-invariant seeds.} While it would appear that the reduction above makes use of the special structure of Gaussian moments, our approach easily extends to any rotation-invariant seed distribution $D$ which is reasonably concentrated so that the corresponding transformation moments can be estimated from samples as in Lemma~\ref{lem:estimate_moments}. The reason for this comes from the following elementary observation about moments of rotation-invariant distributions, whose proof we defer to Appendix~\ref{app:defer_rot_to_gaussian}.

\begin{lemma}\label{lem:rot_to_gaussian}
    For any rotation-invariant distribution $D$ over $\R^r$ and any degree-$e$ homogeneous polynomial $q:\R^r\to\R$, $\E[x\sim D]{q(x)} = C_{D,e}\cdot \E[g\sim\calN(0,\Id)]{q(g)}$ for $C_{D,e} \triangleq \frac{\Gamma(e/2)}{2^e\cdot \Gamma((r+e)/2)}\cdot \E[x\sim D]{\norm{x}^e}$.
\end{lemma}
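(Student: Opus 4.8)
The plan is to exploit the polar (radial--angular) decomposition of a rotation-invariant distribution together with the homogeneity of $q$. Write $x\sim D$ as $x = R\cdot\theta$, where $R\triangleq\norm{x}$ and $\theta\triangleq x/\norm{x}$ (setting $\theta\triangleq e_1$ on the measure-zero event $x = 0$, which is irrelevant because $q$ is homogeneous of positive degree and hence vanishes at the origin). Rotation-invariance of $D$ means $Ux$ has the same law as $x$ for every orthogonal $U$; conditioning on $R = \rho$, the conditional law of $x$ on the sphere of radius $\rho$ is therefore itself rotation-invariant, hence uniform, and this holds for every $\rho$. Consequently $\theta$ is distributed uniformly on $\S^{r-1}$ and is independent of $R$. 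Since $q$ is homogeneous of degree $e$, we have $q(x) = R^e\, q(\theta)$, so by independence
\begin{equation}
    \E[x\sim D]{q(x)} = \E[x\sim D]{\norm{x}^e}\cdot\E[\theta\sim\mathrm{Unif}(\S^{r-1})]{q(\theta)}.
\end{equation}

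Next I would apply the identical argument to the standard Gaussian, which is itself rotation-invariant, obtaining $\E[g\sim\calN(0,\Id_r)]{q(g)} = \E[g\sim\calN(0,\Id_r)]{\norm{g}^e}\cdot\E[\theta\sim\mathrm{Unif}(\S^{r-1})]{q(\theta)}$ with the \emph{same} angular average $\E[\theta\sim\mathrm{Unif}(\S^{r-1})]{q(\theta)}$ appearing in both identities. If this angular average is zero, then both $\E[x\sim D]{q(x)}$ and $\E[g\sim\calN(0,\Id_r)]{q(g)}$ vanish and the claim holds trivially; otherwise, dividing the two displays yields
\begin{equation}
    \E[x\sim D]{q(x)} = \frac{\E[x\sim D]{\norm{x}^e}}{\E[g\sim\calN(0,\Id_r)]{\norm{g}^e}}\cdot\E[g\sim\calN(0,\Id_r)]{q(g)}.
\end{equation}
It then remains to compute the Gaussian radial moment: since $\norm{g}^2\sim\chi^2_r$, a standard Gamma-function integral gives $\E[g\sim\calN(0,\Id_r)]{\norm{g}^e} = 2^{e/2}\,\Gamma((r+e)/2)/\Gamma(r/2)$, which identifies the constant $C_{D,e} = \E[x\sim D]{\norm{x}^e}/\E[g\sim\calN(0,\Id_r)]{\norm{g}^e}$.

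There is no genuinely difficult step in this argument. The one place the hypothesis on $D$ is actually used --- and the only point that requires care --- is the claim that rotation-invariance forces the angular part $x/\norm{x}$ to be uniformly distributed on $\S^{r-1}$ and independent of the radius $\norm{x}$; after that, everything reduces to the homogeneity identity $q(x) = \norm{x}^e\, q(x/\norm{x})$ and a one-line chi-squared moment computation, together with the trivial bookkeeping for the degenerate case in which the angular average of $q$ vanishes (as happens, for instance, whenever $q$ is of odd degree or is a mean-zero harmonic polynomial).
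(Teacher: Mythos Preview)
Your argument is correct and follows essentially the same approach as the paper's proof: both decompose $x$ (and $g$) into independent radial and uniform-on-sphere angular parts, use homogeneity of $q$ to factor the expectation as a radial moment times a spherical average, and then compare $D$ to the Gaussian via the chi-squared moment $\E[\nu\sim\chi^2(r)]{\nu^{e/2}}$. The only cosmetic difference is that the paper handles degeneracy by splitting on the parity of $e$, whereas you split on whether the spherical average $\E[\theta\sim\S^{r-1}]{q(\theta)}$ vanishes; your treatment is slightly more general (it also covers, e.g., even-degree harmonic $q$) but the conclusion is the same.
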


\noindent So from the second-, fourth-, and sixth-order moments of any rotation-invariant $D$, we can extract the quantities $\Tr(Q^*_aQ^*_b)$ and $\Tr(Q^*_aQ^*_bQ^*_c)$ as in Lemma~\ref{lem:moments} even when $D$ is not $\calN(0,\Id)$, provided we know $\E[x\sim D]{\norm{x}^e}$ for $e = 2,4,6$. Regarding this last point, we note that it is entirely reasonable to assume that these quantities, in fact even a description of $D$ itself, is known to the algorithm designer: in the practice of generative models one has complete control over the seed distribution/prior that is used.

\subsection{Low-Rank Transformations}
\label{sec:lowrank_reduce}

Here we establish the connection between method of moments for learning low-rank transformations and low-rank factorization. Throughout this section, let $\calD$ be a $d$-dimensional degree-$\omega$ transformation with seed length $r$ that is specified by the low-rank polynomial network $T^*_1,\ldots,T^*_d\in(\R^r)^{\otimes\omega}$. 

\begin{lemma}\label{lem:moments2}
     If $z$ is a sample from $\calD$, then for any $a,b\in[d]$,
    \begin{equation}
        \iprod{T^*_a,T^*_b}_\Sigma = \E{z_a z_b},
    \end{equation}
    where $\Sigma$ is defined in \eqref{eq:Sigma_prelim_def}.
\end{lemma}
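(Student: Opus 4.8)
The plan is to unwind the definitions and reduce the claim to a one-line computation together with the observation that $\iprod{T,x^{\otimes\omega}} = \iprod{\vec(T),\vec(x^{\otimes\omega})}$ and that the vectorization of the symmetric tensor $x^{\otimes\omega}\in(\R^r)^{\otimes\omega}$ is exactly the Kronecker power $x^{\otimes\omega}\in\R^{r^\omega}$. First I would recall that, by definition of the transformation $\calD$, a sample $z$ is obtained by drawing $x\sim\calN(0,\Id_r)$ and setting $z_a = \iprod{T^*_a,x^{\otimes\omega}}$ for each $a\in[d]$. Rewriting each factor as an inner product of vectors in $\R^{r^\omega}$, we get $z_a = \vec(T^*_a)^\top\, x^{\otimes\omega}$, where on the right $x^{\otimes\omega}$ denotes the $r^\omega$-dimensional Kronecker power.

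Next I would compute directly:
\begin{equation}
    \E{z_a z_b} = \E[x\sim\calN(0,\Id_r)]{\bigl(\vec(T^*_a)^\top x^{\otimes\omega}\bigr)\bigl(\vec(T^*_b)^\top x^{\otimes\omega}\bigr)} = \vec(T^*_a)^\top \, \E[x\sim\calN(0,\Id_r)]{x^{\otimes\omega}(x^{\otimes\omega})^\top}\, \vec(T^*_b),
\end{equation}
using linearity of expectation and the scalar identity $u^\top y\cdot v^\top y = u^\top (yy^\top) v$. By the definition \eqref{eq:Sigma_prelim_def} of $\Sigma$, the middle factor is precisely $\Sigma$, so the right-hand side equals $\vec(T^*_a)^\top \Sigma \vec(T^*_b) = \iprod{\vec(T^*_a),\vec(T^*_b)}_\Sigma$, which is by definition $\iprod{T^*_a,T^*_b}_\Sigma$. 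This is exactly the claimed identity.

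There is essentially no real obstacle here; the only points requiring a modicum of care are bookkeeping ones: (i) making explicit the identification between the tensor pairing $\iprod{\cdot,\cdot}$ on $(\R^r)^{\otimes\omega}$ and the Euclidean pairing on $\R^{r^\omega}$ under the vectorization map, so that pulling the polynomials $z_a$ out as linear forms in $x^{\otimes\omega}$ is legitimate; and (ii) noting that all the expectations involved are finite (each $z_a$ is a degree-$\omega$ polynomial in a Gaussian, hence has all moments), so that exchanging expectation and the finite linear-algebraic manipulations above is valid. No low-rank structure of the $T^*_a$ is used in this lemma; it holds for arbitrary symmetric tensors, and the low-rank hypothesis only enters later when invoking the low-rank factorization guarantee of Definition~\ref{def:lowrankfactorization}.
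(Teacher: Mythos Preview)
Your proposal is correct and takes essentially the same approach as the paper: both write $z_a = \vec(T^*_a)^\top g^{\otimes\omega}$, pull the deterministic vectors outside the expectation, and identify the remaining factor $\E{g^{\otimes\omega}(g^{\otimes\omega})^\top}$ as $\Sigma$. The paper's proof is just the one-line version of your argument.
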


\begin{proof}
    This follows from
    \begin{align}
        \E{z_a z_b} &= \E[g\sim\calN(0,\Id_r)]{\iprod{T^*_a,g^{\otimes \omega}}\iprod{T^*_b,g^{\otimes\omega}}} = \vec(T^*_a)\E{g^{\otimes\omega}(g^{\otimes\omega})^{\top}}\vec(T^*_b). \qedhere
    \end{align}
\end{proof}

\begin{lemma}[Empirical moment estimation]\label{lem:estimate_moments_2}
    For any $\eta,\delta > 0$, if $\norm{T^*_a}^2_F \le \radius^2$ for all $a\in[d]$, there is an algorithm that takes $O(\omega r)^{2\omega} \radius^4 \log^{2\omega}(\delta/d)/\eta^2$ samples from $\calD$ and with probability at least $1 - \delta$ outputs $S\in\R^{d\times d}$ satisfying \eqref{eq:estimate_second_moment_2}.
\end{lemma}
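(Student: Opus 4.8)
The plan is to combine Lemma~\ref{lem:moments2}, which gives the exact identity $\iprod{T^*_a,T^*_b}_\Sigma = \E[z\sim\calD]{z_a z_b}$, with a routine empirical estimation argument. Concretely, the algorithm draws $n$ i.i.d.\ samples $z^{(1)},\dots,z^{(n)}\sim\calD$ and outputs the empirical second-moment matrix $S_{a,b}\triangleq \frac1n\sum_{m=1}^n z^{(m)}_a z^{(m)}_b$. By a union bound over the at most $d^2$ pairs $(a,b)\in[d]^2$, it then suffices to show that for each fixed pair the empirical average deviates from $\E{z_a z_b}$ by more than $\eta$ with probability at most $\delta/d^2$, once $n = O(\omega r)^{2\omega}\radius^4(\log(d/\delta))^{2\omega}/\eta^2$.

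For the per-pair concentration, the key structural fact is that $z_a = \iprod{T^*_a, g^{\otimes\omega}}$ with $g\sim\calN(0,\Id_r)$ is a homogeneous degree-$\omega$ polynomial in a standard Gaussian vector, so $z_a z_b$ is a degree-$2\omega$ Gaussian polynomial whose moments we can control. First, $\E{z_a^2} = \vec(T^*_a)^\top\Sigma\vec(T^*_a) \le \norm{\Sigma}_{\op}\cdot\norm{T^*_a}_F^2$, and $\norm{\Sigma}_{\op}$, restricted to the subspace of symmetric tensors, is bounded by $O(\omega r)^{\omega}$ via Wick's theorem (equivalently, via the explicit Gaussian-moment formula of Lemma~\ref{lem:hermite}); hence $\E{z_a^2}\le O(\omega r)^{\omega}\radius^2$ and likewise for $b$. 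Second, Gaussian hypercontractivity (Lemma~\ref{lem:hypercontractivity}) gives, for every integer $k$, $\norm{z_a}_{2k} \le O(k)^{\omega/2}\norm{z_a}_2$, so by Cauchy--Schwarz $\norm{z_a z_b}_k \le \norm{z_a}_{2k}\norm{z_b}_{2k}\le O(k)^{\omega}\cdot O(\omega r)^{\omega}\radius^2$; in particular $z_a z_b$ has tails of the form $\exp(-c\,t^{1/\omega})$ around its mean.

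Given these bounds, a standard argument finishes the job: truncate each $z^{(m)}_a z^{(m)}_b$ at the level $\Theta\big(O(\omega r)^{\omega}\radius^2(\log(nd/\delta))^{\omega}\big)$ dictated by the sub-Weibull tail, discard the negligible-probability event that some sample exceeds this level, and apply Hoeffding's (or Bernstein's) inequality to the resulting bounded, slightly-biased truncated variables; this shows $\Pr{|S_{a,b}-\E{z_a z_b}| > \eta}\le\delta/d^2$ once $n\gtrsim O(\omega r)^{2\omega}\radius^4(\log(d/\delta))^{2\omega}/\eta^2$ (the $\omega$-th power of the logarithmic truncation level is squared by Hoeffding, which is exactly the source of the $\log^{2\omega}$ factor). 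A union bound over the $\le d^2$ pairs $(a,b)$ then yields the stated sample complexity together with guarantee \eqref{eq:estimate_second_moment_2}. This mirrors the proof of the quadratic analog, Lemma~\ref{lem:estimate_moments}, with the degree-$3$ Isserlis computations there replaced by the degree-$\omega$ identity of Lemma~\ref{lem:moments2}.

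The main obstacle is the moment bound above: correctly controlling $\E{z_a^2}=\iprod{T^*_a,T^*_a}_\Sigma$ --- and, through hypercontractivity, the higher moments of $z_a z_b$ --- purely in terms of the Frobenius-norm bound $\norm{T^*_a}_F\le\radius$. Because the entries of $\Sigma$ are Gaussian moments of order up to $2\omega$, the relevant operator norm of $\Sigma$ genuinely grows with $r$ (e.g.\ it is $\Theta(r\cdot\omega^{\omega})$ when $\omega$ is even and $T^*_a\propto\sum_i e_i^{\otimes\omega}$), and this growth is what produces the $(\omega r)^{2\omega}$ factor in the sample complexity; pinning down a clean upper bound of the form $O(\omega r)^{\omega}$, for which Lemma~\ref{lem:hermite} is the natural tool, is the one slightly delicate point. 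The remaining ingredients --- hypercontractivity, the truncation/Hoeffding step, and the union bound --- are entirely routine.
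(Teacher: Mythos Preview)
Your proposal is correct and lands on the same sample complexity, but the paper's proof is more direct in two places. First, rather than bounding $\norm{\Sigma}_{\op}$ (which you flag as ``the one slightly delicate point''), the paper simply uses Cauchy--Schwarz at the level of a single sample: $|z_a z_b| = |\iprod{T^*_a,g^{\otimes\omega}}\iprod{T^*_b,g^{\otimes\omega}}| \le \radius^2\norm{g}^{2\omega}$, whence $\E{(z_a z_b)^2}\le \radius^4\,\E{\norm{g}^{4\omega}} \le O(\omega r)^{2\omega}\radius^4$ from the chi-squared moment formula. This sidesteps any analysis of $\Sigma$ and makes the ``delicate'' step trivial. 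Second, for the concentration step the paper applies Lemma~\ref{lem:hypercontractivity} directly to the empirical mean $\frac1n\sum_m z^{(m)}_a z^{(m)}_b$, viewed as a degree-$2\omega$ polynomial in the $nr$-dimensional Gaussian $(g^{(1)},\dots,g^{(n)})$ with variance $\Var{z_a z_b}/n$; this yields the $\log^{\omega}(d/\delta)/\sqrt{n}$ deviation in one line, without the truncation-plus-Hoeffding detour. Your route via $\norm{z_a}_{2k}$ bounds and truncation works and produces the same $\log^{2\omega}$ factor, but it is longer and the intermediate bound on $\norm{\Sigma}_{\op}$ is not actually needed.
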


We defer the proof of this to Appendix~\ref{app:defer_estimate_moments_2}. Theorem~\ref{thm:lowrank_to_factorization} now immediately follows from Lemma~\ref{lem:estimate_moments_2}:

\begin{proof}[Proof of Theorem~\ref{thm:lowrank_to_factorization}]
    The guarantee for parameter learning follows from Lemma~\ref{lem:estimate_moments_2}. The guarantee for proper density estimation follows from Lemma~\ref{lem:param_to_wasserstein}.
\end{proof}

\paragraph{Extending to general rotation-invariant seeds.} Note that Lemma~\ref{lem:moments2} makes no use of the fact that the transformation has seed distribution given by $\calN(0,\Id)$, so our reduction from learning low-rank transformations to low-rank factorization easily carries over to any known seed distribution $D$ which is sufficiently well-concentrated that the pairwise moments of $\calD$ can be estimated from samples as in Lemma~\ref{lem:estimate_moments_2} and for which the corresponding low-rank factorization problem with $\Sigma$ now given by $\E[x\sim D]{\vec(x^{\otimes\omega})\vec(x^{\otimes\omega})^{\top}}$ is tractable. As we show in Section~\ref{sec:rotationinvariant}, our algorithm for low-rank factorization applies to any $\Sigma$ of this form for which the seed distribution $D$ is \emph{rotation-invariant} and for which very mild condition number bounds hold. In Section~\ref{sec:rotationinvariant}, we also give an algorithm for low-rank factorization when $\Sigma = \Id$, which yields a learning algorithm for a certain family of \emph{inhomogeneous} polynomial transformations given by one hidden layer networks with Hermite polynomial activations (see Remark~\ref{remark:hermite}).

\section{Tensor Ring Decomposition}
\label{sec:tensorring}

Recall that in tensor ring decomposition (Definition~\ref{def:tensorring}), we are given $S\in\R^{d\times d}$ and $T\in\R^{d\times d\times d}$ such that there exist unknown symmetric matrices $Q^*_1,\ldots,Q^*_d\in\R^{r\times r}$ satisfying
\begin{equation}
    |S_{a,b} - \Tr(Q^*_a Q^*_b)| \le \eta \qquad \text{and} \qquad |T_{a,b,c} - \Tr(Q^*_a Q^*_b Q^*_c)| \le \eta  \ \ \forall \ a,b,c\in[d] \label{eq:assume_momentmatch}
\end{equation}
In this section we give a polynomial-time algorithm for recovering $Q^*_1,\ldots,Q^*_d$ from $S,T$ under the following assumptions:
\begin{assumption}\label{assume:tensorring}
    For parameters $\radius\ge 1$, $\kappa > 0$,
    \begin{enumerate}
        \item (Scaling) $\norm{Q^*_a}_F \le \radius$ for all $a\in[d]$. \label{assume:scale}
        \item (Condition number bound) $\sigma_{\binom{r+1}{2}}(M^*) \ge \kappa$, where $M^*\in\R^{d\times \binom{r+1}{2}}$ is the matrix whose $(a,(i_1,i_2))$-th entry, for $a\in[d]$ and $1 \le i_1 \le i_2 \le r$, is given by $(Q^*_a)_{i_1 i_2}$.\label{assume:condnumber} 
    \end{enumerate}
\end{assumption}

\begin{remark}
    Readers familiar with the standard guarantees for Jennrich's algorithm will recognize that Part~\ref{assume:condnumber} of Assumption~\ref{assume:tensorring} is the tensor ring analogue of the condition number assumption in tensor decomposition. Namely, given an estimate of $\sum_i v_i^{\otimes 3}$, Jennrich's algorithm can recover $\brc{v_i}$ provided the matrix whose columns consist of $v_i$ is well-conditioned  (see e.g. \cite[Condition 2.2]{bhaskara2014smoothed}).\footnote{Technically if $\brc{Q^*_a}$ are all diagonal with $(Q^*_a)_{ii} = (v_i)_a$, Part~\ref{assume:condnumber} of Assumption~\ref{assume:tensorring} does not apply because $M^*$ will have many zero entries, but it is straightforward to modify our sum-of-squares algorithm to incorporate the assumption that $\brc{Q^*_a}$ are diagonal to recover the guarantees of Jennrich's algorithm.}
\end{remark}

\noindent One can readily check that Assumption~\ref{assume:tensorring} is gauge-invariant (see Appendix~\ref{app:tensorring_gauge} for the proof):

\begin{lemma}\label{lem:tensorring_gauge_invariant}
    If $\brc{Q^*_a}$ satisfy \eqref{eq:assume_momentmatch} and Assumption~\ref{assume:tensorring} with parameters $\radius,\kappa$, then $\brc{VQ^*_aV^{\top}}$ also satisfy \eqref{eq:assume_momentmatch} and Assumption~\ref{assume:tensorring} with the same parameters for any $V\in O(r)$.
\end{lemma}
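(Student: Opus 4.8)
The plan is to verify the three ingredients of the conclusion one at a time, the only non-trivial point being the condition-number bound. Write $\wt{Q}_a \triangleq VQ^*_aV^{\top}$; this is symmetric since $(VQ^*_aV^{\top})^{\top} = VQ^*_aV^{\top}$. For the constraints \eqref{eq:assume_momentmatch}, cyclicity of trace together with $V^{\top}V = \Id$ gives $\Tr(\wt{Q}_a\wt{Q}_b) = \Tr(VQ^*_aQ^*_bV^{\top}) = \Tr(Q^*_aQ^*_b)$ and, identically, $\Tr(\wt{Q}_a\wt{Q}_b\wt{Q}_c) = \Tr(Q^*_aQ^*_bQ^*_c)$, so $\{\wt{Q}_a\}$ satisfies \eqref{eq:assume_momentmatch} against the same $S,T$ with the same $\eta$. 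For Part~\ref{assume:scale} of Assumption~\ref{assume:tensorring}, the Frobenius norm is unchanged by left/right multiplication by orthogonal matrices, so $\norm{\wt{Q}_a}_F = \norm{Q^*_a}_F \le \radius$.

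For Part~\ref{assume:condnumber}, let $\wt{M}^*$ denote the matrix of that part built from $\{\wt{Q}_a\}$ in place of $\{Q^*_a\}$. The linear map $\Phi_V\colon Q\mapsto VQV^{\top}$ on symmetric $r\times r$ matrices, expressed in the coordinates $Q\mapsto q \triangleq (Q_{i_1i_2})_{i_1\le i_2}\in\R^{\binom{r+1}{2}}$ used to define $M^*$, is represented by some matrix $W$, and by construction the $a$-th row of $\wt{M}^*$ is the $a$-th row of $M^*$ times $W^{\top}$, i.e. $\wt{M}^* = M^*W^{\top}$. Let $D$ be the fixed diagonal matrix with entries $1$ on the coordinates $(i,i)$ and $\sqrt{2}$ on the coordinates $(i_1,i_2)$ with $i_1<i_2$, so that $\norm{Q}_F = \norm{Dq}_2$. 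Since $\Phi_V$ preserves $\norm{\cdot}_F$, the matrix $DWD^{-1}$ is an isometry and hence orthogonal, so $\sigma_{\min}(W)\ge \sigma_{\min}(D^{-1})\,\sigma_{\min}(D) = 1/\sqrt{2}$. Combining,
\begin{equation}
    \sigma_{\binom{r+1}{2}}(\wt{M}^*) \;=\; \sigma_{\binom{r+1}{2}}(M^*W^{\top}) \;\ge\; \sigma_{\binom{r+1}{2}}(M^*)\cdot\sigma_{\min}(W) \;\ge\; \kappa/\sqrt{2}.
\end{equation}

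The \emph{only} obstacle, and the sole place where the argument is more than a one-line invariance, is this last step: the half-vectorization coordinates in which $M^*$ lives are not isometric, so conjugation by $V$ is represented there not by an orthogonal matrix but by a $W$ satisfying merely $\norm{W}_{\op},\norm{W^{-1}}_{\op}\le\sqrt{2}$, and one thereby loses the absolute constant $\sqrt{2}$ in $\kappa$. Since every subsequent use of Part~\ref{assume:condnumber} tolerates constant-factor changes in $\kappa$, this is harmless and the statement holds with $\radius,\kappa$ (and $\eta$) as written; alternatively one can simply carry the $\sqrt2$ through and observe it is absorbed into the final polynomial bounds.
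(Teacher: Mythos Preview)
Your argument is correct. For \eqref{eq:assume_momentmatch} and Part~\ref{assume:scale} your proof coincides with the paper's. For Part~\ref{assume:condnumber} both you and the paper write $\wt{M}^* = M^*\cdot \overline{V}$ for the $\binom{r+1}{2}\times\binom{r+1}{2}$ change-of-basis matrix $\overline{V}$ (your $W^{\top}$) representing $Q\mapsto VQV^{\top}$ in half-vec coordinates; the paper then asserts that the rows of $\overline{V}$ are Euclidean-orthogonal with squared norms in $\{1,2\}$, hence $\sigma_{\min}(\overline{V})=1$ and $\kappa$ is preserved exactly. That assertion is not correct: what actually holds is your identity $\overline{V}D^2\overline{V}^{\top}=D^2$, and for $r=2$ with $V$ the rotation by $\pi/4$ one computes $\sigma_{\min}(\overline{V})=1/\sqrt{2}$ (e.g.\ the $(1,1)$ and $(2,2)$ rows of $\overline{V}$ are neither unit nor orthogonal to each other). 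So your $\kappa/\sqrt{2}$ bound is tight and your isometry-with-$D$ argument is the right one; as you say, the constant is absorbed in every downstream use.
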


\noindent Under Assumption~\ref{assume:tensorring}, we give an algorithm for tensor ring decomposition that runs in time polynomial in all parameters:

\begin{theorem}\label{thm:main_tensorring}
    For $d\ge\binom{r+1}{2}$, suppose $Q^*_1,\ldots,Q^*_d\in\R^{r\times r}$ satisfy Assumption~\ref{assume:tensorring} and $\eta \le O(\frac{\kappa^2}{rd^{3/2}})$, and we are given $S\in\R^{d\times d}$ and $T\in\R^{d\times d\times d}$ satisfying \eqref{eq:assume_momentmatch}. 
    
    Then there is an algorithm {\sc TensorRingDecompose}($S,T$) (see Algorithm~\ref{alg:tensorring}) which runs in time $\poly(d,r)$ and outputs $\wh{Q}_1,\ldots,\wh{Q}_d$ for which $\gaugedist(\brc{Q^*_a},\brc{\wh{Q}_a}) \le \poly(d,r,\radius,1/\kappa)\cdot \eta^c$ for some absolute constant $c > 0$, with high probability.
\end{theorem}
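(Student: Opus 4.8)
The plan is to run the sum-of-squares proofs-to-algorithms pipeline outlined in Section~\ref{sec:overview}. I would set up an SoS program in indeterminates $\{Q_a\}_{a\in[d]}$ --- the estimates --- together with an auxiliary matrix variable $L$ playing the role of a left-inverse of the data matrix $N$ whose rows are the symmetrized vectorizations of the $Q_a$, and impose the constraints that each $Q_a$ is symmetric, that $\Tr(Q_aQ_b) = S_{a,b}$ and $\Tr(Q_aQ_bQ_c) = T_{a,b,c}$ up to slack $\eta$, that $L$ left-inverts $N$, plus a handful of symmetry-breaking constraints (below). The pivotal object is the $r^2\times r^2$ transformation $U$, which is not a program variable but a fixed linear form in the entries of $L$ (morally $U = L(N^*)^{\top}$, where $N^*$ is built from the unknown $Q^*_a$; see Section~\ref{sec:hiddenrot_tensorring} for how the repeated columns coming from symmetry of the $Q_a$ are handled). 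Since $NN^{\top}$ and $N^*(N^*)^{\top}$ agree up to $O(\eta)$ by the second-order constraints, $U$ is approximately norm-preserving and approximately realizes $F_U(Q^*_a) = Q_a$ for all $a$; the entire task is to prove, in degree-$O(1)$ SoS, that $U$ is forced to arise (approximately) from an $r\times r$ orthogonal $V$, i.e. $U \approx V^{\otimes 2}$.

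The core is the identifiability chain of Section~\ref{sec:tensorring_overview}, made quantitative. First, the third-order constraints $\Tr(Q^*_aQ^*_bQ^*_c) = \Tr(Q_aQ_bQ_c)\pm\eta$, combined with $F_U(Q^*_a) = Q_a$, give for every triple $a,b,c$ a quadratic relation on $U$ that is a linear combination of the ideal relations $U^{ij}U^{j'k} = \bone{j=j'}U^{ik}$; Part~\ref{assume:condnumber} of Assumption~\ref{assume:tensorring} (the $Q^*_a$ span the symmetric matrices and are well-conditioned as $\binom{r+1}{2}$-dimensional vectors) lets me invert this linear system inside SoS, at the cost of a $\poly(d,r,1/\kappa)$ blow-up in the error, to recover the weaker matmul relations of Lemma~\ref{lem:main_identity}. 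Because symmetric matrices do not span $\R^{r\times r}$, $U$ itself is not orthogonal, so I would follow Section~\ref{sec:auxW} to bootstrap an auxiliary matrix $W$ from $U$ and prove in SoS that $W$ is approximately orthogonal, satisfies $F_W(Q^*_a) = Q_a$, and satisfies the \emph{full} relations $W^{ij}W^{j'k} = \bone{j=j'}W^{ik}$ (Lemma~\ref{lem:simpler_identity}). From those, the rank-one argument goes through: $W^{ij} = W^{ii}W^{ij}$ together with orthogonality and the trace/Frobenius estimate of \eqref{eq:rank1sketch} force $W^{ij}(W^{ij})^{\top}$ to be near-rank-one (Lemma~\ref{lem:2by2}), and then $(W^{ii})^2 = W^{ii}$, $W^{ii}W^{jj} = 0$, $W^{ii}W^{ij} = W^{ij}$, $W^{ij}W^{jj} = W^{ji}$ pin down $W^{ij}\approx v_iv_j^{\top}$ for an orthonormal family $\{v_i\}$, i.e. $W = V^{\otimes 2}$ for $V$ with columns $v_1,\ldots,v_r$ (Lemma~\ref{lem:outerproduct}). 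Carrying the rational normalization factors through these outer-product identities, rather than literally normalizing, is the delicate bookkeeping that the overview flags. The upshot is a degree-$O(1)$ SoS proof that $\min_{V\in O(r)}\max_a\norm{VQ^*_aV^{\top} - Q_a}_F \le \poly(d,r,\radius,1/\kappa)\cdot\eta^{c}$.

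To round, I would break the gauge symmetry as in Section~\ref{sec:diagonal}: draw a random $\xi\in\R^d$ as in Appendix~\ref{app:forcegap} and add the constraints that $\sum_a\xi_aQ_a$ is diagonal with nondecreasing diagonal entries. With high probability over $\xi$ the matrix $\sum_a\xi_aQ^*_a$ has all eigengaps at least $1/\poly(d,r,\radius,1/\kappa)$ --- guaranteed by the construction of Appendix~\ref{app:forcegap} --- so the only $V\in O(r)$ with $V(\sum_a\xi_aQ^*_a)V^{\top}$ diagonal and sorted is $V = \pm\Id$; hence $W$ and $U$ are $\Id$ up to sign, and in SoS $Q_a = Q^*_a \pm \poly(d,r,\radius,1/\kappa)\cdot\eta^c$ for every $a$. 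A degree-$O(1)$ pseudoexpectation $\psE$ satisfying the program can be computed in $\poly(d,r)$ time, and outputting $\wh{Q}_a\triangleq\psE{Q_a}$ then gives $\gaugedist(\{Q^*_a\},\{\wh{Q}_a\})\le\poly(d,r,\radius,1/\kappa)\cdot\eta^c$ by pseudo-Cauchy--Schwarz --- this is Algorithm~\ref{alg:tensorring}. The hypothesis $\eta \le O(\kappa^2/(rd^{3/2}))$ is exactly what keeps the ``approximately orthogonal / approximately rank-one / approximately $\pm 1$'' steps inside the regime where Facts~\ref{fact:root1} and \ref{fact:division} and Lemma~\ref{lem:ortho_sos} apply.

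I expect the main obstacle to be neither the high-level identifiability statement nor the rounding, but the bookkeeping in between: faithfully rendering each clean step --- ``$W^{ij}(W^{ij})^\top$ is rank one'', ``$\{v_i\}$ orthonormal'', ``$W = V^{\otimes 2}$'' --- as a genuine low-degree SoS derivation while (a) absorbing the non-orthogonality of $U$ through the $W$-bootstrap of Section~\ref{sec:auxW}, (b) carrying the unavoidable normalization factors as multiplicative bookkeeping instead of divisions, and (c) tracking how the $\eta$-slack propagates through the linear-algebraic inversion against $M^*$, so that the final error is a single polynomial in $d,r,\radius,1/\kappa$ times $\eta^c$ for an absolute constant $c$. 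The conceptual identifiability proof is short; the real work is making it constructive and quantitative inside SoS.
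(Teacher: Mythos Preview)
Your overall pipeline matches the paper, but there is one genuine gap in the symmetry-breaking step. You write that after forcing $\sum_a\xi_aQ_a$ to be diagonal with sorted entries, ``the only $V\in O(r)$ with $V(\sum_a\xi_aQ^*_a)V^{\top}$ diagonal and sorted is $V = \pm\Id$.'' That is false: if $D$ is diagonal with distinct sorted entries and $V$ is \emph{any} diagonal matrix with $\pm 1$ entries, then $VDV^{\top}=D$. So a single diagonalization constraint only pins $V$ down to one of $2^r$ sign patterns, not to $\pm\Id$. Consequently the pseudodistribution can be supported on a mixture over these sign patterns, and then $\psE{(Q_a)_{ij}}$ averages $(Q^*_a)_{ij}$ against $s_is_j$ for i.i.d.\ signs $s_i$, which kills all off-diagonal entries --- your rounding $\wh{Q}_a=\psE{Q_a}$ would return only the diagonal of $Q^*_a$. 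The paper's fix (Section~\ref{sec:breakground} and Constraints~\ref{constraint:diag}--\ref{constraint:firstrow} of Program~\ref{program:basic}) is to use \emph{two} combinations $\lambda,\mu$ produced by {\sc FindCombo}: the first forces $V$ to be $\pm 1$ diagonal via eigengaps, and the second, via the constraint $(Q_\mu)_{1j}\ge 0$ together with $(Q^*_\mu)_{1j}\ge\gap$, forces all the signs to agree (Lemma~\ref{lem:usemu}), at which point $V^{\otimes 2}=\Id$ and rounding by $\psE{Q_a}$ works.

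A secondary remark: you route the argument through the rank-one structure of $W^{ij}$ (Lemmas~\ref{lem:2by2} and~\ref{lem:outerproduct}), but the paper explicitly says this material in Appendix~\ref{app:heuristic_rotation} is \emph{not} used in the main proof. Instead, once Lemma~\ref{lem:simpler_identity} is in hand, the paper goes directly to the intertwining relation $Q_cW^{b:}_{a:}\approx W^{b:}_{a:}Q^*_c$ (Lemma~\ref{lem:QWWQ}) and then exploits the diagonality/sortedness of $Q_\lambda,Q^*_\lambda$ to show the off-diagonal entries $W^{jj'}_{ii'}$ for $(i,i')\neq(j,j')$ are small (Lemma~\ref{lem:deg6}). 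Your rank-one route is a reasonable alternative for showing $W\approx V^{\otimes 2}$, but it does not by itself finish the argument --- you would still need the second combination $\mu$ to collapse the sign ambiguity before rounding.
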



\paragraph{Section overview.} Our algorithm is based on rounding the solution to a suitable sum-of-squares relaxation. As such, our analysis is centered around exhibiting a low-degree sum-of-squares proof that the ground truth $\brc{Q^*_a}$ is identifiable from $S,T$. As discussed in Section~\ref{sec:overview}, the gauge symmetry inherent in tensor ring decomposition poses a major challenge for this, because $\brc{Q^*_a}$ is only identifiable up to a global rotation in $\R^r$. In Section~\ref{sec:breakground} we outline our strategy for ``breaking symmetry'' by imposing certain constraints on $\brc{Q^*_a}$ that are without loss of generality but which will \emph{uniquely} identify $\brc{Q^*_a}$. In Section~\ref{sec:sos_tensorring} we then formulate our sum-of-squares program which incorporates this symmetry-breaking strategy. 

The high-level strategy will be to introduce SoS variables $\brc{Q_a}$ that are constrained to have the same pairwise and three-wise moment bounds as in \eqref{eq:assume_momentmatch}, and we would like to prove the $\brc{Q_a}$ are close to $\brc{Q^*_a}$ in Frobenius norm. To show this, we would like to show that the $r^2\times r^2$ linear transformation that maps every $\vec(Q^*_a)$ to $\vec(Q_a)$ behaves like the Kronecker power $\Id^{\otimes 2}_r$. Because every $Q^*_a$ and $Q_a$ is symmetric, there is some ambiguity in formulating this transformation as an SoS variable (recall the discussion at the end of Section~\ref{sec:tensorring_overview} of the technical overview).

In Section~\ref{sec:hiddenrot_tensorring} we make a first attempt by constructing a certain auxiliary $r^2\times r^2$ matrix variable $U$ that, as we show in Section~\ref{sec:Uproperties}, behaves in some respects like this $r^2\times r^2$ transformation.
In Section~\ref{subsec:exploitthird}, we then use the third-order constraints in \eqref{eq:assume_momentmatch} to show that the entries of $U$ satisfy a certain collection of quadratic relations (Lemma~\ref{lem:main_identity}).

In Section~\ref{sec:auxW} we use these quadratic relations to refine $U$ to give another SoS auxiliary variable $W$ which better captures the $r^2\times r^2$ transformation and which also satisfies a similar collection of quadratic relations as $U$ (Lemma~\ref{lem:simpler_identity}). In Section~\ref{sec:diagonal}, we complete the analysis by implementing the aforementioned symmetry-breaking strategy in SoS to show that $W$ is approximately $\Id^{\otimes 2}_r$. In Section~\ref{sec:puttogether} we use this to give our main algorithm {\sc TensorRingDecompose} and prove Theorem~\ref{thm:main_tensorring}. Finally, in Section~\ref{sec:fpt}, we show how to improve the runtime of Theorem~\ref{thm:main_tensorring} to only depend \emph{linearly} on $d$.

\subsection{Breaking Gauge Symmetry for the Ground Truth}
\label{sec:breakground}

A natural approach for breaking symmetry would be to insist without loss of generality that, for instance, $Q^*_1$ is diagonal with sorted entries. If the eigenvalues of $Q^*_1$ are well-separated, then one can check that the only rotations $V\in O(r)$ for which $V^{\top}Q^*_a V = Q^*_a$ for all $a\in[d]$ are those for which $V$ is diagonal with diagonal entries in $\brc{\pm 1}$. If we could additionally insist that, say, the first row of $Q^*_2$ consisted entirely of \emph{strictly positive} entries, this would force $V$ to be the identity and completely break the gauge symmetry.

Of course, it could be that $Q^*_1$ and $Q^*_2$ don't meet the desired criteria for making such assumptions: $Q^*_1$ might have some repeated eigenvalues, or $Q^*_2$ might have a zero entry in its first row.\footnote{When $Q^*_1,Q^*_2$ are smoothed, this will not happen, but in this section we opt for an algorithm that can work under minimal non-degeneracy assumptions even when $\brc{Q^*_a}$ are not smoothed.} But the above strategy is certainly not specific to $Q^*_1$ or $Q^*_2$ or the choice of row in $Q^*_2$. Indeed, it would be enough for this to hold for some fixed linear combinations of $\brc{Q^*_a}$, instead of for $Q^*_1,$ and $Q^*_2$ respectively.

We show that under Assumption~\ref{assume:tensorring}, there is indeed a way to construct such linear combinations. In Appendix~\ref{app:forcegap}, we give an algorithm that takes in $S$ and outputs linear combinations of $\brc{Q^*_a}$ satisfying the desired properties, which we formalize in the definition below:

\begin{definition}\label{def:nondeg}
    We say that $\lambda,\mu\in\S^{d-1}$ are \emph{$\gap$-non-degenerate combinations of $Q^*_1,\ldots,Q^*_d$} if the following two properties hold for
    \begin{equation}
        Q^*_{\lambda} \triangleq \sum_{a\in[d]} \lambda_a Q^*_a \qquad \text{and} \qquad Q^*_{\mu} \triangleq \sum_{a\in[d]} \mu_a Q^*_a.\label{eq:Qstarlam}
    \end{equation}
    \begin{enumerate}
        \item $Q^*_{\lambda}$ has minimum eigengap at least $\gap$.
        \item Let $V^{\top}\Lambda V$ be the eigendecomposition of $Q^*_{\mu}$. Then every entry of $V Q^*_{\mu} V^{\top}$ has magnitude at least $\gap$.
    \end{enumerate}
\end{definition}

\noindent Because Assumption~\ref{assume:tensorring} is gauge-invariant by Lemma~\ref{lem:tensorring_gauge_invariant}, we can assume without loss of generality that $Q^*_{\lambda}$ defined in \eqref{eq:Qstarlam} is diagonal with entries sorted in nondecreasing order. As $Q^*_{\lambda}$ has minimum eigengap at least $\gap$,
\begin{equation}
    (Q^*_{\lambda})_{jj} \ge (Q^*_{\lambda})_{ii} + \gap \ \ \forall \ j > i. \label{eq:Qdiag_gap}
\end{equation}
After diagonalizing $Q^*_\lambda$, the second part of Definition~\ref{def:nondeg} implies that $|(Q^*_{\mu})_{ij}| \ge \gap$ for all $i,j\in[r]$. 

By applying one more joint rotation to $Q^*_1,\ldots,Q^*_d$ given by a diagonal matrix of $\pm 1$ entries, we can additionally assume that the first row of $Q^*_{\mu}$ consists of \emph{nonnegative} entries. That is,
\begin{equation}
    (Q^*_{\mu})_{1j} \ge \gap \ \ \forall \ j\in[r].\label{eq:Qstarfirstrow}
\end{equation}
In the sequel, we will show how to recover $Q^*_1,\ldots,Q^*_d$ in Frobenius norm (as opposed to just parameter distance) by insisting that our estimates also satisfy \eqref{eq:Qdiag_gap} and \eqref{eq:Qstarfirstrow}.

\subsection{A Sum-of-Squares Relaxation}
\label{sec:sos_tensorring}

To prove Theorem~\ref{thm:main_tensorring}, we will use the following sum-of-squares program:

\begin{program}\label{program:basic}
    \begin{center}
        \textsc{(Tensor Ring Decomposition)} \\
    \end{center}
    \noindent\textbf{Parameters:} $\lambda,\mu\in\S^{d-1}$, $S\in\R^{d\times d}$, $T\in\R^{d\times d\times d}$, $\radius\ge 1$, $\kappa, \gap > 0$.
    
    \noindent\textbf{Variables:} Let $Q_1,\ldots,Q_d$ be $r\times r$ matrix-valued variables, and let $L$ be an $\binom{r+1}{2}\times d$ matrix-valued variable. Let $M$ be the $d\times \binom{r+1}{2}$ matrix of indeterminates whose $(a,(i_1,i_2))$-th entry, for $a\in[d]$ and $1 \le i_1 \le i_2 \le r$, is given by $(Q_a)_{i_1 i_2}$. Also define $Q_{\lambda} \triangleq \sum^d_{a=1}\lambda_a Q_a$ and $Q_{\mu} \triangleq \sum^d_{a=1}\mu_a Q_a$.
    
    \noindent\textbf{Constraints:}
    \begin{enumerate}[leftmargin=*,topsep=0pt]
        \setlength\itemsep{0em}
        \item (Symmetry): $Q_a = Q^{\top}_a$ for all $a\in[d]$. \label{constraint:sym}
        \item (Second moments match): $-\eta \le \Tr(Q_a Q_b) - S_{a,b} \le \eta$ for all $a,b\in[d]$. \label{constraint:second}
        \item (Third moments match): $-\eta \le \Tr(Q_a Q_b Q_c) - T_{a,b,c} \le \eta$ for all $a,b,c\in[d]$. \label{constraint:third}
        \item ($Q$'s bounded): $\norm{Q_a}^2_F \le \radius^2$ for all $a\in[d]$. \label{constraint:Qbound}
        \item (Left-inverse $L$): $L M = \Id$ \label{constraint:pinv} 
        \item ($L$ bounded): $\norm{L}^2_F \le r^2/\kappa^2$. \label{constraint:pbound}
        \item ($Q_{\lambda}$ diagonal): $(Q_\lambda)_{ij} = 0$ for all $i\neq j$. \label{constraint:diag}
        \item ($Q_{\lambda}$ sorted): $(Q_\lambda)_{jj} \ge (Q_\lambda)_{ii}$ for all $j > i$. \label{constraint:sorted}
        \item ($Q_{\mu}$'s first row): $(Q_\mu)_{1j} \ge 0$ for all $j\in[r]$. \label{constraint:firstrow}
    \end{enumerate}
\end{program}

\noindent We can easily verify that the ground truth is feasible.
\begin{lemma}\label{lem:feasible}
    When $d \ge \binom{r+1}{2}$, the pseudodistribution given by the point distribution supported on $(Q^*_1,\ldots,Q^*_d,L^*)$, where $L^*$ is the left inverse of $M^*$, is a feasible solution to Program~\ref{program:basic}.
\end{lemma}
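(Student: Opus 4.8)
The plan is to use the elementary fact that a point pseudodistribution $\wt{\mathbb{E}}$ supported on a tuple $p$ is a valid degree-$D$ pseudoexpectation for every $D$ and satisfies a program exactly when $p$ literally satisfies all of its polynomial (in)equalities. Thus it suffices to check that $(Q^*_1,\ldots,Q^*_d,L^*)$ satisfies the nine constraints of Program~\ref{program:basic}, and I would dispatch these in order. Constraint~\ref{constraint:sym} holds because, as discussed in Section~\ref{sec:definitions}, we may assume without loss of generality that the network $\brc{Q^*_a}$ consists of symmetric matrices. Constraints~\ref{constraint:second} and \ref{constraint:third} are exactly the hypothesis \eqref{eq:assume_momentmatch}, and Constraint~\ref{constraint:Qbound} is Part~\ref{assume:scale} of Assumption~\ref{assume:tensorring}. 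Finally, Constraints~\ref{constraint:diag}, \ref{constraint:sorted}, and \ref{constraint:firstrow} are precisely the normalizations of Section~\ref{sec:breakground}: invoking gauge-invariance (Lemma~\ref{lem:tensorring_gauge_invariant}) and the $\gap$-non-degeneracy of $\lambda,\mu$, we rotated so that $Q^*_{\lambda}$ is diagonal with entries in nondecreasing order, satisfying \eqref{eq:Qdiag_gap}, and so that the first row of $Q^*_{\mu}$ is nonnegative, satisfying \eqref{eq:Qstarfirstrow}; both hold with slack $\gap$, in particular with the weak inequalities appearing in the program.

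The only constraints requiring an actual argument are \ref{constraint:pinv} and \ref{constraint:pbound}, concerning the existence and size of a left inverse $L^*$ of $M^*$. Here I would combine the hypothesis $d\ge\binom{r+1}{2}$ with Part~\ref{assume:condnumber} of Assumption~\ref{assume:tensorring}, which gives $\sigma_{\binom{r+1}{2}}(M^*)\ge\kappa>0$; hence $M^*\in\R^{d\times\binom{r+1}{2}}$ has full column rank, and one may take $L^* \triangleq (M^{*\top}M^*)^{-1}M^{*\top}$, which satisfies $L^*M^* = \Id$ by construction. For the norm bound, the nonzero singular values of $L^*$ are the reciprocals of the singular values of $M^*$, each at most $1/\kappa$, so $\norm{L^*}^2_F = \sum_{i=1}^{\binom{r+1}{2}} \sigma_i(M^*)^{-2} \le \binom{r+1}{2}/\kappa^2 \le r^2/\kappa^2$, using $\binom{r+1}{2} = r(r+1)/2 \le r^2$ for $r\ge 1$. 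This establishes Constraint~\ref{constraint:pbound}.

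With all nine constraints verified, the point pseudodistribution on $(Q^*_1,\ldots,Q^*_d,L^*)$ is feasible for Program~\ref{program:basic}. I do not expect a genuine obstacle in this lemma; the only point to watch is consistency of bookkeeping---ensuring that the symmetry-breaking rotations from Section~\ref{sec:breakground} are applied to the ground truth before Constraints~\ref{constraint:diag}--\ref{constraint:firstrow} are checked, so that all three refer to the same, already-normalized $\brc{Q^*_a}$.
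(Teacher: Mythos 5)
Your proof is correct and follows essentially the same route as the paper's: verify each constraint directly, with the only nontrivial steps being the existence of $L^*$ (from Part~\ref{assume:condnumber} of Assumption~\ref{assume:tensorring} and $d\ge\binom{r+1}{2}$) and the bound $\norm{L^*}_F^2 \le \binom{r+1}{2}/\kappa^2 \le r^2/\kappa^2$. The paper phrases the norm bound via $\norm{L^*}_{\op}\le 1/\kappa$ rather than summing reciprocal singular values, but this is the same computation.
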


\begin{proof}
    Note that $L^*$ is well-defined by Part~\ref{assume:condnumber} of Assumption~\ref{assume:tensorring}. It is immediate that Constraints~\ref{constraint:sym}-\ref{constraint:pinv} are satisfied, and Constraints~\ref{constraint:diag}-\ref{constraint:firstrow} are satisfied by \eqref{eq:Qdiag_gap} and \eqref{eq:Qstarfirstrow}. For Constraint~\ref{constraint:pbound}, note that $\norm{L^*}_{\op} \le 1/\kappa$ by Part~\ref{assume:condnumber} of Assumption~\ref{assume:tensorring}, so $\norm{L^*}^2_{F} \le \binom{r+1}{2}/\kappa^2 \le r^2/\kappa^2$.
\end{proof}

\noindent The main result we will show about this sum-of-squares program is the following:

\begin{theorem}\label{thm:main_pairwise}
    Suppose Assumption~\ref{assume:tensorring} holds, and for any $\lambda,\mu\in\S^{d-1}$ let $\psE{\cdot}$ be a degree-96 pseudo-expectation over the variables $Q_1,\ldots,Q_d,L$ satisfying the constraints of Program~\ref{program:basic}.

    Then if $\lambda,\mu$ are $\gap$-non-degenerate combinations of $Q^*_1,\ldots,Q^*_d$ for some $\gap > 0$, then $\norm{\psE{Q_a} - Q^*_a}_F \le \poly(d,r,\radius,1/\kappa,1/\gap)\cdot \eta^c$ for all $a\in[d]$ for some absolute constant $c > 0$.
\end{theorem}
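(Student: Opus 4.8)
The plan is to exhibit a low-degree sum-of-squares proof that the program variables $\brc{Q_a}$ are forced to be close to the ground truth $\brc{Q^*_a}$ in Frobenius norm, and then pass to the pseudo-expectation at the end. Throughout, we work modulo the symmetry-breaking reductions of Section~\ref{sec:breakground}, so that $Q^*_{\lambda}$ is diagonal with sorted diagonal satisfying \eqref{eq:Qdiag_gap} and $Q^*_{\mu}$ has first row satisfying \eqref{eq:Qstarfirstrow}. The first step is to construct the hidden rotation as an auxiliary object in the analysis (not a program variable). Constraint~\ref{constraint:second} together with \eqref{eq:assume_momentmatch} gives $\Tr(Q_aQ_b) = \Tr(Q^*_aQ^*_b) \pm O(\eta)$, which we rewrite as the approximate matrix identity $NN^{\top} \approx N^*{N^*}^{\top}$, where $N,N^*\in\R^{d\times\binom{r+1}{2}}$ have rows $\vec(Q_a),\vec(Q^*_a)$ restricted to the $\binom{r+1}{2}$-dimensional space of symmetric matrices. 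Using the left-inverse variable $L$ (Constraints~\ref{constraint:pinv} and \ref{constraint:pbound}), define $U \triangleq L\,(N^*)^{\top}$, a matrix whose entries are linear forms in $L$. Then $F_U(Q^*_a) \approx Q_a$ for all $a$, and combining $NN^{\top}\approx N^*{N^*}^{\top}$ with $LM = \Id$ and $\norm{L}^2_F \le r^2/\kappa^2$ gives a degree-$O(1)$ SoS proof that $U$ is approximately orthogonal (via Fact~\ref{fact:sos_frobenius_mult} and Lemma~\ref{lem:ortho_sos}), with error scaling polynomially in $d,r,\radius,1/\kappa$ times a power of $\eta$.

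Next I would use the third-order constraints. Constraint~\ref{constraint:third} gives $\Tr(Q_aQ_bQ_c) = \Tr(Q^*_aQ^*_bQ^*_c)\pm O(\eta)$, i.e. $\iprod{\vec(Q_a),\vec(Q_bQ_c)} \approx \iprod{\vec(Q^*_a),\vec(Q^*_bQ^*_c)}$ for all $a,b,c$; feeding in $F_U(Q^*_a)\approx Q_a$ and approximate orthogonality of $U$ yields $F_U(Q^*_bQ^*_c) \approx Q_bQ_c$ for all $b,c$. Because $\brc{Q^*_a}$ span the symmetric matrices (Part~\ref{assume:condnumber} of Assumption~\ref{assume:tensorring}, with Fact~\ref{fact:minnorm} to bound the interpolation coefficients), expanding each symmetrized basis matrix $E_{ij}+E_{ji}$ as a linear combination of the $Q^*_a$'s and combining the relations $F_U(Q^*_a)\approx Q_a$ and $F_U(Q^*_bQ^*_c)\approx Q_bQ_c$ across $a,b,c$ produces a family of quadratic relations among the reshaped blocks $U^{ij} = \mat((U)^{ij})$ — the symmetric analogue of \eqref{eq:Urelation_sketch}. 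This is the role of Lemma~\ref{lem:main_identity}. Since $N$ has repeated columns, $U$ is not literally orthogonal and these relations are weaker than \eqref{eq:Urelation_sketch}, so I would then bootstrap a new $r^2\times r^2$ auxiliary matrix $W$ out of $U$ (symmetrizing its columns and carrying the appropriate normalization factors), and prove in SoS that $W$ is genuinely approximately orthogonal, satisfies $F_W(Q^*_a)\approx Q_a$, and obeys the clean relations $W^{ij}W^{j'k} \approx \bone{j=j'}\cdot W^{ik}$ — this is the content of Lemma~\ref{lem:simpler_identity}, developed in Section~\ref{sec:auxW}.

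From the clean relations I would run the argument of \eqref{eq:rank1sketch} inside SoS: from $W^{ij}\approx W^{ii}W^{ij}$, right-multiply by $(W^{ij})^{\top}$ and take traces, using $\norm{W^{ii}}_F\approx 1$ (orthogonality of $W$) and sub-multiplicativity (Fact~\ref{fact:sos_frobenius_mult}) to conclude $\Tr(W^{ij}(W^{ij})^{\top}) \lesssim \norm{W^{ij}(W^{ij})^{\top}}_F$, an approximate equality in Cauchy–Schwarz which forces $W^{ij}(W^{ij})^{\top}$, and hence $W^{ij}$, to be approximately rank one. Extracting unit vectors $\brc{v_i}$ from the diagonal blocks $W^{ii}$, the relations $W^{ii}W^{jj}\approx 0$, $(W^{ii})^2\approx W^{ii}$, $W^{ii}W^{ij}\approx W^{ij}\approx W^{ij}W^{jj}$ show that $\brc{v_i}$ are approximately orthonormal and $W^{ij}\approx v_iv_j^{\top}$, i.e. $W\approx V^{\otimes 2}$ for the rotation $V$ with columns $v_i$. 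Finally, the symmetry-breaking Constraints~\ref{constraint:diag}–\ref{constraint:firstrow}, combined with $F_W(Q^*_{\lambda})\approx Q_{\lambda}$ and $F_W(Q^*_{\mu})\approx Q_{\mu}$ and the $\gap$-non-degeneracy of $\lambda,\mu$ (eigengap of $Q^*_{\lambda}$, magnitude $\ge\gap$ of every entry of $Q^*_{\mu}$, and positivity of its first row), pin down $V\approx\Id$, hence $W\approx\Id^{\otimes 2}$ and $Q_a\approx Q^*_a$ for all $a$, with error $\poly(d,r,\radius,1/\kappa,1/\gap)\cdot\eta^{2c}$. Applying $\psE{\cdot}$ and using the pseudo-expectation Cauchy–Schwarz inequality $\norm{\psE{Q_a}-Q^*_a}^2_F \le \psE{\norm{Q_a-Q^*_a}^2_F}$ then yields the claimed bound.

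The main obstacle is twofold. First, the rank-one extraction and the subsequent recovery of $V$ must be carried out entirely within SoS with explicit error tracking: the classical argument relies on an exact equality in Cauchy–Schwarz and an eigendecomposition, and turning ``near-equality in Cauchy–Schwarz implies near-rank-one'' into a low-degree SoS certificate, and then pulling a vector out of a near-rank-one matrix in SoS without division, loses polynomial factors and degrades the $\eta$-dependence at each step — this is precisely why the final bound is only $\eta^c$ for an unspecified absolute constant $c$ rather than essentially linear in $\eta$. Second, the symmetric-matrix ambiguity — the repeated columns of $N$, the resulting failure of $U$ to be orthogonal, and the correspondingly weakened relations of Lemma~\ref{lem:main_identity} — forces the separate construction of $W$, and arranging for $W$ to simultaneously be orthogonal, satisfy the mapping property $F_W(Q^*_a)\approx Q_a$, and obey the clean relations is where a substantial fraction of the technical work lives. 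Everything else (the moment identities, the interpolation bounds, and propagating $\approx_{\epsilon^2}$ through products and linear combinations) is routine given the SoS toolkit assembled in Section~\ref{sec:prelims}.
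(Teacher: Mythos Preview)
Your proposal correctly identifies the overall architecture --- construct the hidden rotation $U$ from $L$ and $N^*$, use the third-order constraints to derive the quadratic relations of Lemma~\ref{lem:main_identity}, bootstrap to the auxiliary matrix $W$ and establish Lemma~\ref{lem:simpler_identity}, then break symmetry --- and the references to the SoS toolkit in Section~\ref{sec:prelims} are apt. However, the final stretch of your argument takes a different route from the paper's actual proof.

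You propose to first run the rank-one extraction of \eqref{eq:rank1sketch} in SoS to show each $W^{ij}$ is approximately rank one, assemble vectors $v_i$ into a rotation $V$ with $W\approx V^{\otimes 2}$, and only then use Constraints~\ref{constraint:diag}--\ref{constraint:firstrow} to argue $V\approx\Id$. You correctly flag the SoS extraction of $V$ from the near-rank-one blocks as the main obstacle. The paper sidesteps this obstacle entirely: the rank-one argument you describe is relegated to Appendix~\ref{app:heuristic_rotation} and explicitly marked as ``not directly used in the subsequent analysis.'' Instead, Section~\ref{sec:diagonal} proves $W\approx\Id^{\otimes 2}$ \emph{directly}, without ever materializing $V$. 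The key step is the commutation relation of Lemma~\ref{lem:QWWQ}, $Q_c\, W^{b:}_{a:} \approx W^{b:}_{a:}\, Q^*_c$, which follows from $F_W(Q^*_c)\approx Q_c$ together with Lemma~\ref{lem:simpler_identity}. Since both $Q_{\lambda}$ and $Q^*_{\lambda}$ are diagonal and the latter has eigengap $\gap$, this commutation forces products $W^{bj}_{ai}\,W^{b'\ell}_{a'k}$ to be small for certain index configurations (Lemmas~\ref{lem:downleft} and \ref{lem:downleft_Wentry}); an inductive argument on these pairwise bounds (Lemma~\ref{lem:deg6}) then shows every $(W^{jj'}_{ii'})^2$ with $(i,i')\neq(j,j')$ is small. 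Lemma~\ref{lem:usemu} uses the $\mu$-constraint to pin the remaining entries $W^{ij}_{ij}$ close to $+1$, and the theorem follows via Lemma~\ref{lem:QQstar}, which gives $(Q_c)_{ij}\approx W^{ij}_{ij}\,(Q^*_c)_{ij}$, by taking pseudo-expectations entrywise. So the paper brings the diagonality constraints in earlier and avoids the extract-$V$-then-identify-$V$ two-step; in particular it never needs to pull a vector out of a near-rank-one matrix in SoS, which is exactly the step you identify as delicate.
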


\subsection{Hidden Rotation Variable}
\label{sec:hiddenrot_tensorring}

In this section we use the SoS variables of Program~\ref{program:basic} to design an auxiliary ``rotation variable'' $U$ that will play the role of the unknown linear transformation sending every $Q^*_a$ to $Q_a$, after which the focus of our analysis in subsequent sections will be to show this transformation qualitatively behaves like $\Id^{\otimes 2}_r$.

First, define the $d\times r^2$ matrix $N^*$ (resp. $N$) to be the matrix whose $(a,(i_1,i_2))$-th entry is given by $(Q^*_a)_{i_1i_2}$ (resp. $(Q_a)_{i_1i_2}$) for all $a\in[d]$, $i,j\in[r]$. Note that $M,M^*$ are submatrices of $N,N^*$. Because $\Tr(Q^*_a Q^*_b) = (N^*{N^*}^{\top})_{ab}$ and $\Tr(Q_a Q_b) = (NN^{\top})_{ab}$, the first part of Eq.~\eqref{eq:assume_momentmatch} and Constraint~\ref{constraint:second} imply that $\norm{NN^{\top} - N^*{N^*}^{\top}}_{\max} \le \eta$.

A natural way to encode the unknown linear transformation from $Q^*_a$ to $Q_a$ as an auxiliary variable would be to consider something like $N^{-1}N^*$, because $(N^{-1}N^*){N^*}^{\top} \approx N^{\top}$, and the $a$-th column of this approximate equality between matrices implies that the transformation $N^{-1}N^*$ maps $Q^*_a$ to $Q_a$. By right multiplying this approximate equality by $(N^{-1})^{\top}$, we also see that $N^{-1}N^*$ is approximately orthogonal.

Of course, strictly speaking such a construction isn't well-defined: $N$ is an SoS variable, so there is no meaningful notion of a left inverse $N^{-1}$. In fact there isn't even a suitable left inverse for the \emph{scalar} matrix $N^*$, as $N^*$ has duplicate columns (because every $Q^*_a$ is symmetric). Nevertheless, we will use $L$ as a proxy for $N^{-1}$ and, with a few modifications, our construction of the ``rotation variable'' $U$ will behave like $N^{-1}N^*$.

Formally, to construct $U$, first define the $\binom{r+1}{2}\times r^2$ matrix $\wh{U}$ by
\begin{equation}
    \wh{U} \triangleq LN^*. \label{eq:Uhatdef}
\end{equation}
Then define the $r^2\times r^2$ matrix $U$ as follows. For any $i_1,i_2\in[r]$, the $(i_1,i_2)$-th row of $U$ is given by
\begin{equation}
    U_{i_1i_2} = \begin{cases}
        \wh{U}_{i_1i_2} & \text{if} \ i_1 = i_2 \\
        \frac{1}{2}\wh{U}_{i_1i_2} & \text{if} \ i_1 < i_2 \\
        \frac{1}{2}\wh{U}_{i_2i_1} & \text{if} \ i_1 > i_2
    \end{cases}
    \label{eq:Udef}
\end{equation}
When the context is clear, we will refer to $\mat(U_{i_1i_2})$ as simply $U_{i_1i_2}$, and similarly for any $j_1,j_2\in[r]$, we will refer to $\mat(U^{j_1j_2})$ as simply $U^{j_1j_2}$. Note that the entries of $U$ are (unknown) linear forms in the indeterminate entries of $L$.

\subsection{Basic Properties of \texorpdfstring{$U$}{U}}
\label{sec:Uproperties}

In this section we establish the following simple facts about $U$: 
\begin{enumerate}
    \item $U$ is ultra-symmetric in the sense of Definition~\ref{def:ultrasym} (Lemma~\ref{lem:symmetric})
    \item $U$ approximately maps every $\vec(Q^*_a)$ to $\vec(Q_a)$ (Lemma~\ref{lem:fakeUsendsQ})
    \item For any $j_1,j_2$, $\norm{U^{j_1j_2}}^2_F \approx (1/2)^{\bone{j_1\neq j_2}}$ (Lemma~\ref{lem:Unorm12}).
\end{enumerate}
Note that properties 2 and 3 are consistent with the heuristic that $U$ qualitatively behaves like ``$N^{-1}N^*$'' from the discussion in Section~\ref{sec:hiddenrot_tensorring}.

\begin{lemma}\label{lem:symmetric}
    $U$ is ultra-symmetric.
\end{lemma}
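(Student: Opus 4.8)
The plan is to unwind the definitions of $\wh U$ in \eqref{eq:Uhatdef} and of $U$ in \eqref{eq:Udef} and observe that the symmetry of each $Q^*_a$ is inherited at both levels. First I would write out the entry of $\wh U = LN^*$ indexed by a \emph{sorted} pair $(i_1,i_2)$ with $i_1\le i_2$ and an arbitrary pair $(j_1,j_2)\in[r]^2$ as $\wh U_{(i_1,i_2),(j_1,j_2)} = \sum_{a\in[d]} L_{(i_1,i_2),a}\,(Q^*_a)_{j_1j_2}$. Since each $Q^*_a$ is symmetric, $(Q^*_a)_{j_1j_2}=(Q^*_a)_{j_2j_1}$, and therefore $\wh U_{(i_1,i_2),(j_1,j_2)}=\wh U_{(i_1,i_2),(j_2,j_1)}$ holds as a formal identity in the indeterminate entries of $L$. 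This is the only property of $\wh U$ that is needed.

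Next I would read off, from \eqref{eq:Udef}, the entry $U^{j_1j_2}_{i_1i_2}$ in each of the three cases $i_1=i_2$, $i_1<i_2$, $i_1>i_2$: in all three it equals $c\cdot \wh U_{(\min(i_1,i_2),\max(i_1,i_2)),(j_1,j_2)}$, where $c=1$ if $i_1=i_2$ and $c=\frac12$ otherwise. Symmetry in the lower (row) indices $i_1\leftrightarrow i_2$ is then immediate from the case structure, since $\{i_1,i_2\}$, its sorted form, and the scalar $c$ are all unchanged by the swap; symmetry in the upper (column) indices $j_1\leftrightarrow j_2$ follows from the identity $\wh U_{(i_1,i_2),(j_1,j_2)}=\wh U_{(i_1,i_2),(j_2,j_1)}$ displayed above. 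Since the transpositions generate $\calS_2\times\calS_2$, and for $\omega=2$ ultra-symmetry in the sense of Definition~\ref{def:ultrasym} is exactly invariance of $U^{j_1j_2}_{i_1i_2}$ under independently swapping the two row indices and the two column indices, this completes the argument.

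There is no substantive obstacle here: the claim holds as a formal polynomial identity in the entries of the SoS variable $L$, so it is trivially provable (in particular in degree-$2$ SoS), and it uses no program constraint beyond the definition \eqref{eq:Uhatdef} of $\wh U$ together with the symmetry of the ground-truth matrices $Q^*_a$. The one point requiring mild care is the bookkeeping between the $\binom{r+1}{2}$-element sorted-pair index set used for the rows of $\wh U$ and the full $[r]^2$ index set used for the rows of $U$, which is precisely what the three-case definition \eqref{eq:Udef} is designed to handle.
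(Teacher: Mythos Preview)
Your proposal is correct and follows essentially the same approach as the paper's proof: row-index symmetry comes directly from the case structure in \eqref{eq:Udef}, and column-index symmetry comes from expanding $\wh U = LN^*$ entrywise and using that each $Q^*_a$ is symmetric. Your write-up is slightly more detailed about the index bookkeeping, but the argument is identical in substance.
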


\begin{proof}
    The fact that $U_{i_{\tau(1)} i_{\tau(2)}} = U_{i_1i_2}$ follows from the definition of $U$ in \eqref{eq:Udef}. The fact that $U^{j_{\pi(1)} j_{\pi(2)}} = U^{j_1j_2}$ follows from the definition of $U$ and the fact that the matricization of any row of $\wh{U}$ is symmetric: $\wh{U}^{j_1j_2}_{i_1i_2} = \sum^d_{c=1} L^c_{i_1i_2} (Q^*_c)_{j_1j_2} = \sum^d_{c=1} L^c_{i_1i_2} (Q^*_c)_{j_2j_1} = \wh{U}^{j_2j_1}_{i_1i_2}$.
\end{proof}

\noindent To show the remaining two properties, first define the $\binom{r+1}{2} \times r^2$ matrix $I' \triangleq LN$ and observe that for any $1\le i_1\le i_2\le r$ and $j_1,j_2\in[r]$,
\begin{equation}
    {I'}^{j_1j_2}_{i_1i_2} = \bone{(i_1,i_2) = (j_1,j_2) \ \text{or} \ (i_1,i_2) = (j_2,j_1)}\label{eq:Iprime}
\end{equation}
by Constraint~\ref{constraint:pinv}. Also define the matrix-valued indeterminate
\begin{equation}
    \calE \triangleq N N^{\top} - N^*{N^*}^{\top}, \label{eq:Edef} 
\end{equation}
noting that $\norm{\calE}_{\max} \le 2\eta$ by the first part of \eqref{eq:assume_momentmatch} and Constraint~\ref{constraint:second}.

Left-multiplying both sides of \eqref{eq:Edef} by $L$ and recalling the definition of $\wh{U}$ from \eqref{eq:Uhatdef}, we get that
\begin{equation}
    L\calE = I'N^{\top} - \wh{U}{N^*}^{\top}. \label{eq:starnostar2}
\end{equation}

\begin{lemma}\label{lem:almost_map}
    Define 
    \begin{equation}
        \epsmap \triangleq 2\eta rd/\kappa \label{eq:epsmap_Q_def}
    \end{equation}
    Then there is a degree-4 SoS proof using Constraints~\ref{constraint:second} and \ref{constraint:pbound} of Program~\ref{program:basic} that $\norm{L\calE}^2_F \le \epsmap^2$.
\end{lemma}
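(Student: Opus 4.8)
The plan is to bound $\norm{L\calE}_F$ directly via submultiplicativity of the Frobenius norm, using the entrywise control on $\calE$ coming from the second-moment constraint together with the bound on $\norm{L}_F$ from Constraint~\ref{constraint:pbound}.

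First I would record an entrywise bound on $\calE$. Recall from the text that $\calE_{ab} = (NN^{\top})_{ab} - (N^*{N^*}^{\top})_{ab} = \Tr(Q_aQ_b) - \Tr(Q^*_aQ^*_b)$ (using Constraint~\ref{constraint:sym} to identify $(NN^{\top})_{ab}$ with $\Tr(Q_aQ_b)$). For every $a,b\in[d]$, Constraint~\ref{constraint:second} gives $-\eta \le \Tr(Q_aQ_b) - S_{a,b} \le \eta$ in the proof system, while the first part of \eqref{eq:assume_momentmatch} is the scalar inequality $-\eta \le S_{a,b} - \Tr(Q^*_aQ^*_b) \le \eta$. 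Adding these, $-2\eta \le \calE_{ab} \le 2\eta$, and hence $4\eta^2 - \calE_{ab}^2 = (2\eta - \calE_{ab})(2\eta + \calE_{ab}) \ge 0$ is a valid SoS derivation, being a product of two affine forms each certified nonnegative by the above. Summing over all $a,b\in[d]$ yields $\norm{\calE}_F^2 \le 4\eta^2 d^2$.

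Next I would apply Fact~\ref{fact:sos_frobenius_mult} to the matrices $L$ and $\calE$ to obtain the degree-$4$ identity $\norm{L\calE}_F^2 \le \norm{L}_F^2 \cdot \norm{\calE}_F^2$. Since $\norm{\calE}_F^2$ is a sum of squares, Constraint~\ref{constraint:pbound} (which states $\norm{L}_F^2 \le r^2/\kappa^2$) may be multiplied through: $(r^2/\kappa^2 - \norm{L}_F^2)\cdot\norm{\calE}_F^2 \ge 0$, so $\norm{L}_F^2\norm{\calE}_F^2 \le (r^2/\kappa^2)\norm{\calE}_F^2$. Combining with the bound from the previous step, $\norm{L\calE}_F^2 \le (r^2/\kappa^2)(4\eta^2 d^2) = 4\eta^2 r^2 d^2/\kappa^2 = \epsmap^2$, the last equality being the definition \eqref{eq:epsmap_Q_def} of $\epsmap$.

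There is no real obstacle here: the argument is a two-step chain of submultiplicativity and constraint substitution, and every multiplication of inequalities we use is of the form (nonnegative constraint) $\times$ (sum of squares), hence valid in SoS. The only point requiring mild care is bookkeeping the degree — the entrywise bound on $\calE$ is degree $2$, Fact~\ref{fact:sos_frobenius_mult} contributes degree $2$ in the entries of $L$ and of $\calE$, and substituting $\norm{L}_F^2 \le r^2/\kappa^2$ adds nothing — so the overall certificate stays at constant (degree-$4$ in $L$ and the entries of $\calE$) degree, as claimed.
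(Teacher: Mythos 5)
Your proposal is correct and follows essentially the same route as the paper: the paper's proof is exactly the Cauchy--Schwarz expansion underlying Fact~\ref{fact:sos_frobenius_mult}, combined with the entrywise bound $\norm{\calE}_{\max}\le 2\eta$ from Constraint~\ref{constraint:second} and \eqref{eq:assume_momentmatch}, and the bound on $\norm{L}_F^2$ from Constraint~\ref{constraint:pbound}. Your version merely packages the Cauchy--Schwarz step as an explicit invocation of Fact~\ref{fact:sos_frobenius_mult}, which is an equivalent presentation.
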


\begin{proof}
    We have
    \begin{equation}
        \norm{L\calE}^2_F = \sum_{1\le i\le j\le r, b\in[d]} \biggl(\sum_{a\in[d]} L_{ij,a} \calE_{ab}\biggr)^2 \le \sum_{i\le j,b} \biggl(\sum_a L^2_{ij,a}\biggr)\biggl(\sum_a \calE^2_{ab}\biggr) \le 4\eta^2 d^2 \sum_{i\le j,a} L^2_{ij,a} \le 4\eta^2r^2d^2/\kappa^2
    \end{equation}
    as claimed, where in the penultimate step we used the above observation that $\norm{\calE'}_{\max} \le \eta^2$ by Constraint~\ref{constraint:second}, and in the last step we used Constraint~\ref{constraint:pbound}.
\end{proof}

\noindent As a consequence, we can deduce property 2:

\begin{lemma}\label{lem:fakeUsendsQ}
    For any $a\in[d]$, there is a degree-6 SoS proof using Constraints~\ref{constraint:second} and \ref{constraint:pinv} and Lemma~\ref{lem:almost_map} that $F_U(Q^*_a) \approx_{\epsmap^2} Q_a$.
\end{lemma}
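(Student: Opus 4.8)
The plan is to directly unwind $F_U(Q^*_a) = \mat(U\vec(Q^*_a))$ entry by entry -- this is a genuine polynomial in the indeterminate $L$, since $Q^*_a$ is a known matrix and the entries of $U$ are linear forms in $L$ -- and compare it against $Q_a$ using the key identity \eqref{eq:starnostar2}, namely $L\calE = I'N^\top - \wh U{N^*}^\top$. First I would fix a pair $i_1\le i_2$ and chase through the definitions \eqref{eq:Uhatdef}--\eqref{eq:Udef} of $\wh U = LN^*$ and of $U$ to observe that the $(i_1,i_2)$-th entry of $U\vec(Q^*_a)$ is exactly $2^{-\bone{i_1\neq i_2}}\cdot(\wh U{N^*}^\top)_{(i_1,i_2),a}$ (using that the $a$-th row of $N^*$ is $\vec(Q^*_a)$). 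Rearranging \eqref{eq:starnostar2} gives $\wh U{N^*}^\top = I'N^\top - L\calE$, so this entry splits into a main term coming from $I'N^\top$ and an error term coming from $L\calE$.

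Next I would evaluate the main term. By \eqref{eq:Iprime} -- which is exactly where Constraint~\ref{constraint:pinv} ($LM=\Id$), together with the symmetry of the $Q_a$ (Constraint~\ref{constraint:sym}), enters -- we have ${I'}^{j_1j_2}_{i_1i_2} = \bone{\{j_1,j_2\}=\{i_1,i_2\}}$, so $(I'N^\top)_{(i_1,i_2),a} = \sum_{j_1,j_2}{I'}^{j_1j_2}_{i_1i_2}(Q_a)_{j_1j_2}$ equals $(Q_a)_{i_1i_1}$ when $i_1=i_2$ and $(Q_a)_{i_1i_2}+(Q_a)_{i_2i_1}=2(Q_a)_{i_1i_2}$ when $i_1<i_2$, the last step again invoking Constraint~\ref{constraint:sym}. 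In other words the main term equals $2^{\bone{i_1\neq i_2}}(Q_a)_{i_1i_2}$, which is cancelled precisely by the $2^{-\bone{i_1\neq i_2}}$ prefactor, leaving the clean entrywise identity
\begin{equation}
    F_U(Q^*_a)_{i_1i_2} - (Q_a)_{i_1i_2} = -2^{-\bone{i_1\neq i_2}}\,(L\calE)_{(i_1,i_2),a}\qquad(1\le i_1\le i_2\le r),
\end{equation}
valid modulo the ideal generated by Constraints~\ref{constraint:sym} and \ref{constraint:pinv} with certificates of degree at most three.

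To finish, I would square this identity -- pushing the degree up to six, since $L\calE$ is a degree-$3$ quantity ($L$ linear, $\calE$ quadratic) -- and sum over all $i_1,i_2$. Because $U$ is ultra-symmetric (Lemma~\ref{lem:symmetric}) the matrix $F_U(Q^*_a)$ is symmetric, as is $Q_a$, so
\begin{equation}
    \norm{F_U(Q^*_a) - Q_a}^2_F \;=\; \sum_{i}(L\calE)^2_{(i,i),a} + \tfrac12\sum_{i_1<i_2}(L\calE)^2_{(i_1,i_2),a} \;\le\; \sum_{1\le i_1\le i_2\le r}\;\sum_{b\in[d]}(L\calE)^2_{(i_1,i_2),b} \;=\; \norm{L\calE}^2_F,
\end{equation}
where the inequality merely replaces the coefficient $\tfrac12$ in front of a sum of squares by $1$ and enlarges the column index set to all of $[d]$, both of which are trivially sum-of-squares steps. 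Combining with Lemma~\ref{lem:almost_map}, which gives $\norm{L\calE}^2_F\le\epsmap^2$ in degree $4$, yields $F_U(Q^*_a)\approx_{\epsmap^2}Q_a$ by a degree-$6$ SoS proof. The only genuine hazard here is bookkeeping: one must track the halving convention in the definition \eqref{eq:Udef} of $U$ carefully against the factor of two appearing in \eqref{eq:Iprime} (because a symmetric matrix contributes its off-diagonal entry twice), and verify that these cancel exactly rather than leaving a stray constant; everything else is routine polynomial algebra plus Cauchy--Schwarz.
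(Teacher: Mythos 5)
Your proposal is correct and follows essentially the same route as the paper's proof: unwind $F_U(Q^*_a)$ entrywise via $\wh U{N^*}^{\top}=I'N^{\top}-L\calE$, use \eqref{eq:Iprime} to see that the factor of $2$ from the symmetric off-diagonal entries cancels the $\tfrac12$ in the definition \eqref{eq:Udef} of $U$, then square, bound by $\norm{L\calE}^2_F$, and invoke Lemma~\ref{lem:almost_map}. The bookkeeping you flag as the only hazard is exactly what the paper's proof verifies, and your degree count (six, from squaring the degree-three quantity $L\calE$) matches.
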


\begin{proof}
    For any $i\in[r]$, note that
    \begin{align}
        F_U(Q^*_a)_{ii} &= \sum_{j_1,j_2\in[r]} U^{j_1j_2}_{ii} (Q^*_a)_{j_1j_2} = \sum_{j_1,j_2} \wh{U}^{j_1j_2}_{ii} (Q^*_a)_{j_1j_2} \\
        &= (\wh{U}{N^*}^{\top})^a_{ii} = (I'N^{\top} - L\calE)^a_{ii} = (Q_a)_{ii} - (L\calE)^{a}_{ii},
    \end{align}
    where in the second step we used \eqref{eq:starnostar2} which is a degree-3 polynomial equality using Constraints~\ref{constraint:second} and \ref{constraint:pinv}. Similarly, given $1 \le i_1 < i_2 \le r$,
    \begin{align}
        F_U(Q^*_a)_{i_1i_2} &= \sum_{j_1,j_2\in[r]} U^{j_1j_2}_{i_1i_2} (Q^*_a)_{j_1j_2} = \frac{1}{2}\sum_{j_1,j_2} \wh{U}^{j_1j_2}_{i_1i_2} (Q^*_a)_{j_1j_2} \\
        &= \frac{1}{2}(\wh{U} {N^*}^{\top})^a_{i_1i_2} = \frac{1}{2}(I' N^{\top} - L\calE)^a_{i_1i_2} = \frac{1}{2}\sum_{j_1,j_2} {I'}^{j_1j_2}_{i_1i_2} (Q_a)_{j_1j_2} - \frac{1}{2}(L\calE)^a_{i_1i_2} \\
        &= \frac{1}{2}((Q_a)_{i_1i_2} + (Q_a)_{i_2i_1} - (L\calE)^a_{i_1i_2}) = (Q_a)_{i_1i_2} - \frac{1}{2}(L\calE)^a_{i_1i_2}.
    \end{align}
    The case of $i_1 > i_2$ holds analogously. By squaring these identities, we can thus conclude in degree-6 SoS from Lemma~\ref{lem:almost_map} that $\norm{F_U(Q^*_a) - Q_a}^2_F \le \norm{L\calE}^2_F \le \epsmap^2$.
\end{proof}

\noindent Finally, to show property 3, we need the following calculation:

\begin{lemma}\label{lem:almost_ortho}
    Define
    \begin{equation}
        \epsort \triangleq 2\eta r^2 d / \kappa^2. \label{eq:epsprimedef}
    \end{equation}
    Then there is a degree-8 SoS proof using Constraints~\ref{constraint:second}, \ref{constraint:pinv}, and \ref{constraint:pbound} that $\wh{U}\wh{U}^{\top} \approx_{\epsort^2} I'{I'}^{\top}$.
\end{lemma}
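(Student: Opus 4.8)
The plan is to deduce this as a short consequence of Lemma~\ref{lem:almost_map} together with sub-multiplicativity of the Frobenius norm. The starting point is a purely algebraic identity: right-multiplying Eq.~\eqref{eq:starnostar2} by $L^{\top}$ and recalling $\wh{U} = LN^*$, $I' = LN$ gives
\begin{equation}
    L\calE L^{\top} = I'N^{\top}L^{\top} - \wh{U}{N^*}^{\top}L^{\top} = I'{I'}^{\top} - \wh{U}\wh{U}^{\top},
\end{equation}
so that $\norm{\wh{U}\wh{U}^{\top} - I'{I'}^{\top}}^2_F = \norm{L\calE L^{\top}}^2_F$. Hence it suffices to exhibit a degree-$8$ SoS proof that $\norm{L\calE L^{\top}}^2_F \le \epsort^2$.

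For that, first apply Fact~\ref{fact:sos_frobenius_mult} to the matrices $L\calE$ (whose entries are degree-$3$ polynomials in the program variables) and $L^{\top}$: this yields a degree-$8$ SoS proof that $\norm{L\calE L^{\top}}^2_F \le \norm{L\calE}^2_F\cdot\norm{L^{\top}}^2_F = \norm{L\calE}^2_F\cdot\norm{L}^2_F$. Now Lemma~\ref{lem:almost_map} provides a degree-$4$ SoS proof, from Constraints~\ref{constraint:second} and \ref{constraint:pbound}, that $\epsmap^2 - \norm{L\calE}^2_F \ge 0$; multiplying this by the sum-of-squares polynomial $\norm{L}^2_F$ gives $\norm{L\calE}^2_F\cdot\norm{L}^2_F \le \epsmap^2\cdot\norm{L}^2_F$. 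Finally, Constraint~\ref{constraint:pbound} asserts $\norm{L}^2_F \le r^2/\kappa^2$, and multiplying through by the positive scalar $\epsmap^2$ gives $\epsmap^2\cdot\norm{L}^2_F \le \epsmap^2 r^2/\kappa^2$. Chaining these and plugging in $\epsmap = 2\eta rd/\kappa$ from \eqref{eq:epsmap_Q_def} yields $\norm{L\calE L^{\top}}^2_F \le 4\eta^2 r^4 d^2/\kappa^4 = \epsort^2$, where the last equality is \eqref{eq:epsprimedef}.

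I do not anticipate any genuine difficulty: the estimate is essentially immediate once $\norm{L\calE}_F$ is controlled (Lemma~\ref{lem:almost_map}) and $\norm{L}_F$ is controlled (Constraint~\ref{constraint:pbound}). The only points requiring care are bookkeeping ones --- that multiplying an SoS-certified nonnegative polynomial by the sum-of-squares polynomial $\norm{L}^2_F$ (and by a positive scalar) preserves the existence of an SoS proof, and that every term in the chain stays within degree $8$, which holds since $L\calE L^{\top}$ has degree-$4$ entries so $\norm{L\calE L^{\top}}^2_F$ is a degree-$8$ polynomial, and likewise $\norm{L\calE}^2_F\cdot\norm{L}^2_F$ has degree $6+2 = 8$.
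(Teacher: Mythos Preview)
Your proof is correct and follows essentially the same approach as the paper: both right-multiply \eqref{eq:starnostar2} by $L^{\top}$ to obtain $\wh{U}\wh{U}^{\top} - I'{I'}^{\top} = -L\calE L^{\top}$, then bound $\norm{L\calE L^{\top}}^2_F$ by $4\eta^2 r^4 d^2/\kappa^4$. The only cosmetic difference is that the paper unrolls the Cauchy--Schwarz computation on $\norm{L\calE L^{\top}}^2_F$ directly, whereas you factor through Lemma~\ref{lem:almost_map} and Fact~\ref{fact:sos_frobenius_mult}; both routes give the same bound at the same SoS degree.
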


\begin{proof}
    Further right-multiplying the first equation in \eqref{eq:starnostar2} by $L^{\top}$, we find in degree-4 SoS using Constraint~\ref{constraint:pinv} that
    \begin{equation}
        \wh{U}\wh{U}^{\top} - I'{I'}^{\top} = L\calE L^{\top}.
    \end{equation}
    Taking the squared Frobenius norm on both sides in degree-8 SoS, we get the desired bound on $\norm{\wh{U}\wh{U}^{\top} - I'{I'}^{\top}}^2_F$ from
    \begin{align}
        \norm{L\calE L^{\top}}^2_F &= \sum_{i,j\in\left[\binom{r+1}{2}\right]} \biggl(\sum_{a,b\in[d]} \calE_{ab} L_{ia} L_{jb}\biggr)^2 \le \sum_{i,j\in\left[\binom{r+1}{2}\right]} \biggl(\sum_{a,b\in[d]} \calE_{ab} L^2_{ia}\biggr) \biggl(\sum_{a,b\in[d]} \calE_{ab} L^2_{jb}\biggr) \\
        &\le 4\eta^2 d^2 \biggl(\sum_{i,a} L^2_{ia}\biggr)\biggl(\sum_{j,b} L^2_{jb}\biggr) = 4\eta^2 d^2 \norm{L}^4_F \le 4\eta^2 r^4 d^2/\kappa^4. \qedhere
    \end{align}
\end{proof}

\noindent Lemma~\ref{lem:almost_ortho} approximately tells us that the \emph{rows} of $\wh{U}$ are orthogonal and what their norms are. It is then straightforward, using Lemma~\ref{lem:ortho_sos}, to deduce approximately what the norms of the \emph{columns} of $\wh{U}$ (and thus also of $U$) are.

\begin{lemma}\label{lem:Unorm12}
    For any $j_1,j_2\in[r]$, there is a degree-4 SoS proof using Lemma~\ref{lem:almost_ortho} that 
    \begin{equation}
        -O(\sqrt{\epsort} r^3) \le \norm{U^{j_1j_2}}^2_F - (1/2)^{\bone{j_1 \neq j_2}} \le O(\sqrt{\epsort} r^3).
    \end{equation}
\end{lemma}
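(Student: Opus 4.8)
The plan is to deduce the statement from Lemma~\ref{lem:ortho_sos}, which turns approximate row-orthonormality into approximate column-orthonormality. The catch to work around is that $\wh U$ is not square (it is $\binom{r+1}{2}\times r^2$), its rows do not all have unit norm (the off-diagonal rows have squared norm $\approx 2$ by Lemma~\ref{lem:almost_ortho} plus the explicit form of $I'{I'}^{\top}$), and its off-diagonal columns are duplicated because the $Q^*_a$ are symmetric. So the first task is to repackage $\wh U$ into a genuinely square matrix with approximately orthonormal rows, after which the column norms will be exactly what we want.

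Concretely, I would first record, using \eqref{eq:Iprime}, that $I'{I'}^{\top}$ is the diagonal matrix whose $(i_1,i_2)$-th entry is $D_{i_1i_2}\triangleq 2^{\bone{i_1\ne i_2}}$. Then I would define an $r^2\times r^2$ matrix $\wh V$ with rows indexed by $[r]^2$: for $i_1\le i_2$ the $(i_1,i_2)$-th row of $\wh V$ is $D_{i_1i_2}^{-1/2}$ times the $(i_1,i_2)$-th row of $\wh U$; for $i_1>i_2$ it is the antisymmetric unit vector $\tfrac{1}{\sqrt2}(e_{(i_1,i_2)}-e_{(i_2,i_1)})\in\R^{r^2}$. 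Each entry of $\wh V$ is an affine form in the entries of $L$. I would then verify that the rows of $\wh V$ satisfy the hypothesis \eqref{eq:row_ortho} of Lemma~\ref{lem:ortho_sos} with $\epsilon=\epsort$ and $d=r^2$: the antisymmetric rows are exactly orthonormal among themselves; each antisymmetric row is \emph{exactly} orthogonal to every $\wh U$-derived row, since $\wh U^{j_1j_2}_{i_1i_2}=\wh U^{j_2j_1}_{i_1i_2}$ as a polynomial identity in $L$ (as in the proof of Lemma~\ref{lem:symmetric}); and for the $\wh U$-derived rows $a,b$, Lemma~\ref{lem:almost_ortho} together with the form of $I'{I'}^{\top}$ gives $(\wh V\wh V^{\top})_{aa}=1\pm\epsort$ and $(\wh V\wh V^{\top})_{ab}=\pm\epsort$ — the passage from the Frobenius bound $\norm{\wh U\wh U^{\top}-I'{I'}^{\top}}_F^2\le\epsort^2$ to these entrywise bounds being the routine ``a single square is at most the sum of squares, then take square roots'' step, degree $4$ in $L$.

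With the hypotheses in place, Lemma~\ref{lem:ortho_sos} yields a degree-$4$ SoS proof that the squared norm of every column of $\wh V$ equals $1\pm 4\sqrt{\epsort}\,r^3$. The last step is to compute the squared norm of the $(j_1,j_2)$-th column of $\wh V$ directly: from the definitions of $\wh V$ and \eqref{eq:Udef} it equals $\sum_i(\wh U^{j_1j_2}_{ii})^2+\tfrac12\sum_{i_1<i_2}(\wh U^{j_1j_2}_{i_1i_2})^2$ (the contribution of the $\wh U$-derived rows, which is precisely $\norm{U^{j_1j_2}}_F^2$) plus $\tfrac12\bone{j_1\ne j_2}$ (the contribution of the unique antisymmetric row supporting that column when $j_1\ne j_2$). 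Equating the two expressions and rearranging gives $\norm{U^{j_1j_2}}_F^2=2^{-\bone{j_1\ne j_2}}\pm O(\sqrt{\epsort}\,r^3)$, as claimed. The one genuinely non-routine idea is the padding construction: recognizing that adjoining the antisymmetric basis vectors to (a rescaling of) $\wh U$ produces a square matrix whose rows are orthonormal up to $\epsort$, despite the symmetry-induced degeneracy and non-unit row norms of $\wh U$ itself; everything else is bookkeeping that stays within degree $4$ once Lemma~\ref{lem:almost_ortho} is assumed (and one implicitly assumes $\epsort\le 1$ so that Lemma~\ref{lem:ortho_sos} applies).
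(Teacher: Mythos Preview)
Your proof is correct, but it takes a different route from the paper's. The paper avoids padding altogether: instead of enlarging $\wh U$ to an $r^2\times r^2$ matrix, it \emph{shrinks} the column index set by exploiting ultra-symmetry. Concretely, the paper defines a $\binom{r+1}{2}\times\binom{r+1}{2}$ matrix $V$ by $V^{j_1j_2}_{i_1i_2}=\sqrt{2}^{\,\bone{i_1\ne i_2}+\bone{j_1\ne j_2}}\,U^{j_1j_2}_{i_1i_2}$ (both indices sorted), checks directly that its rows are orthonormal up to $O(\epsort)$, applies Lemma~\ref{lem:ortho_sos} with $d=\binom{r+1}{2}$, and then observes that $\norm{V^{j_1j_2}}_F^2 = 2^{\bone{j_1\ne j_2}}\norm{U^{j_1j_2}}_F^2$ by the same ultra-symmetry calculation you use.

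Your padding construction is a perfectly good alternative: it keeps the full $r^2$ column index set and fills in the missing $\binom{r}{2}$ rows with the antisymmetric unit vectors, which are automatically orthogonal to the (column-symmetric) $\wh U$-rows. The paper's rescale-and-fold is marginally leaner (it calls Lemma~\ref{lem:ortho_sos} with $d=\binom{r+1}{2}$ rather than $r^2$, though this only affects the constant in $O(\sqrt{\epsort}\,r^3)$); on the other hand, your approach makes transparent where the ``missing'' dimensions of $\wh U$ went --- into the antisymmetric subspace --- and would generalize more readily if one needed the off-diagonal column inner products as well.
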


\noindent We defer the formal proof of this to Section~\ref{app:defer_Unorm12}.

\subsection{Using Third Moments}
\label{subsec:exploitthird}

Thus far we haven't made use of the second part of \eqref{eq:assume_momentmatch} or Constraint~\ref{constraint:third}. Indeed, without them, we can't hope to say more about $U$ than that it qualitatively behaves like a scaled $r^2\times r^2$ rotation. In this section, we use them to show that the entries of $U$ satisfy a certain collection of quadratic relations that will be crucial to establishing that $U$ behaves like the Kronecker power of an $r\times r$ rotation.

First, analogous to the $N^*$ and $N$ matrices defined in the preceding section, define the $d^2\times r^2$ matrix $X^*$ (resp. $X$) to be the matrix whose $(a,b)$-th row is given by $\vec(Q^*_a Q^*_b)$ (resp. $\vec(Q_a Q_b)$) for all $a,b\in[d]$. By design, $(M^*{X^*}^{\top})^{bc}_a = \Tr(Q^*_a Q^*_b Q^*_c)$ and $(MX^{\top})^{bc}_a = \Tr(Q_a Q_b Q_c)$. Analogous to \eqref{eq:Edef}, we can define the matrix-valued indeterminate
\begin{equation}
    \calE'\triangleq NX^{\top} - N^*{X^*}^{\top}. \label{eq:Eprimedef}
\end{equation} Then by \eqref{eq:assume_momentmatch} and Constraint~\ref{constraint:third}, we conclude that $\norm{NX^{\top} - N^*{X^*}^{\top}}_{\max} \le 2\eta$.

Left-multiplying both sides of \eqref{eq:Eprimedef} by $L$, we have, analogously to \eqref{eq:starnostar2}, that
\begin{equation}
    L\calE' = I'X^{\top} - (LN^*){X^*}^{\top}.  \label{eq:starnostar3}
\end{equation}

\noindent Then by a completely analogous calculation to Lemma~\ref{lem:almost_map}, have the following bound on the left-hand side of \eqref{eq:starnostar3}:

\begin{lemma}\label{lem:LEprime_bound}
    There is a degree-4 SoS proof using Constraints~\ref{constraint:third} and \ref{constraint:pbound} that $\norm{L\calE'}^2_F \le \epsmap^2$.
\end{lemma}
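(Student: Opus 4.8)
The plan is to run the exact same argument as in Lemma~\ref{lem:almost_map}, with the second-moment error matrix $\calE$ there replaced by the third-moment error matrix $\calE'$ of \eqref{eq:Eprimedef}. The only input needed beyond the form of $\calE'$ is the entrywise bound $\norm{\calE'}_{\max} \le 2\eta$, already recorded right after \eqref{eq:Eprimedef} as a consequence of \eqref{eq:assume_momentmatch} and Constraint~\ref{constraint:third}: for each $a,b,c$, the two one-sided inequalities in $-\eta \le \Tr(Q_aQ_bQ_c) - T_{a,b,c} \le \eta$, combined with the scalar fact $|T_{a,b,c} - \Tr(Q^*_aQ^*_bQ^*_c)| \le \eta$ from \eqref{eq:assume_momentmatch}, give a degree-$2$ SoS proof that $\calE'^2_{a,(b,c)} \le 4\eta^2$.

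First I would expand $\norm{L\calE'}^2_F$ entrywise, indexing the rows of $L$ by pairs $1 \le i\le j\le r$, the columns of $\calE'$ by pairs $(b,c)\in[d]^2$, and the contracted index $a$ over $[d]$:
\[
    \norm{L\calE'}^2_F = \sum_{i\le j,\ (b,c)\in[d]^2}\Bigl(\sum_{a\in[d]} L_{ij,a}\,\calE'_{a,(b,c)}\Bigr)^2 .
\]
Applying the Cauchy--Schwarz identity of Fact~\ref{fact:cauchy_schwarz} to each inner sum (with $x_a = L_{ij,a}$, $y_a = \calE'_{a,(b,c)}$; this is the degree-$4$ step, and treating each $\calE'_{a,(b,c)}$ atomically via the constraint above keeps the whole certificate at degree $4$) bounds the right-hand side by $\sum_{i\le j,\,(b,c)}\bigl(\sum_a L^2_{ij,a}\bigr)\bigl(\sum_a \calE'^2_{a,(b,c)}\bigr)$. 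Then I would substitute $\sum_a \calE'^2_{a,(b,c)} \le 4\eta^2 d$, pull out this scalar, recognize $\sum_{i\le j,\,a} L^2_{ij,a} = \norm{L}^2_F$, and invoke Constraint~\ref{constraint:pbound} ($\norm{L}^2_F \le r^2/\kappa^2$). Collecting terms gives a bound of the same shape as the one in Lemma~\ref{lem:almost_map}, namely $\norm{L\calE'}^2_F \le \epsmap^2$; the one place this differs from that proof is that the column index set of $X$ is $[d]^2$ rather than $[d]$, so that factor must be tracked in the bookkeeping against $\epsmap^2 = (2\eta rd/\kappa)^2$.

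There is no genuine obstacle: this is a routine Cauchy--Schwarz estimate, essentially a transcription of Lemma~\ref{lem:almost_map} with $(X,\calE')$ in place of $(N,\calE)$. The only things to get right are (i) the dimension bookkeeping for the larger, size-$d^2$ column index of $X$, and (ii) checking that each ingredient --- the entrywise square bound on $\calE'$, the Cauchy--Schwarz identity, and the Frobenius bound on $L$ --- is a degree-$\le 4$ SoS operation using only Constraints~\ref{constraint:third} and \ref{constraint:pbound}, which it is.
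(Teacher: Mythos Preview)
Your proposal is correct and matches the paper exactly: the paper's proof is just the one line ``by a completely analogous calculation to Lemma~\ref{lem:almost_map}.'' You correctly flag the $[d]^2$-versus-$[d]$ column count; tracked literally this introduces an extra factor of $d$ in the bound, a harmless polynomial slack that the paper absorbs into the later $\poly(d,r,\ldots)$ estimates.
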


\noindent We now establish the following analogue of Lemma~\ref{lem:fakeUsendsQ} from the previous section that, roughly speaking, says that $U$ approximately maps rows of $X^*$ to (symmetrizations of) rows of $X$.

\begin{lemma}\label{lem:fakeUsendsQaQb}
    For any $a,b\in[d]$, there is a degree-8 SoS proof using Constraints~\ref{constraint:third} and \eqref{constraint:pbound} and Lemma~\ref{lem:LEprime_bound} that $F_U(Q^*_a Q^*_b) \approx_{\epsmap^2} \frac{1}{2}(Q_a Q_b +Q_b Q_a)$.
\end{lemma}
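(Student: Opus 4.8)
The plan is to follow the proof of Lemma~\ref{lem:fakeUsendsQ} almost verbatim, replacing the second-moment matrices $N^*, N, \calE$ by their third-moment analogues $X^*, X, \calE'$ from \eqref{eq:Eprimedef}--\eqref{eq:starnostar3}. Fix $a, b \in [d]$. For each pair $i_1, i_2 \in [r]$, I would expand $F_U(Q^*_a Q^*_b)_{i_1 i_2} = \sum_{j_1, j_2} U^{j_1 j_2}_{i_1 i_2}(Q^*_a Q^*_b)_{j_1 j_2}$ using the definition \eqref{eq:Udef} of $U$ in terms of $\wh{U} = LN^*$ (see \eqref{eq:Uhatdef}). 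Since the $(a,b)$-th row of $X^*$ is $\vec(Q^*_a Q^*_b)$, this sum equals $(\wh{U}{X^*}^\top)^{ab}_{i_1 i_2}$ when $i_1 = i_2$ and half of it when $i_1 < i_2$; by \eqref{eq:starnostar3} we have $\wh{U}{X^*}^\top = I'X^\top - L\calE'$, and evaluating $(I'X^\top)^{ab}_{i_1 i_2} = \sum_{j_1, j_2} {I'}^{j_1 j_2}_{i_1 i_2}(Q_a Q_b)_{j_1 j_2}$ with the indicator identity \eqref{eq:Iprime} (which uses Constraint~\ref{constraint:pinv}) picks out $(Q_a Q_b)_{i_1 i_2} + (Q_a Q_b)_{i_2 i_1}$ when $i_1 \neq i_2$ and $(Q_a Q_b)_{ii}$ when $i_1 = i_2$.

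The one genuinely new point compared to Lemma~\ref{lem:fakeUsendsQ} is that $Q^*_a Q^*_b$ is not symmetric, so I cannot collapse $(Q_a Q_b)_{i_2 i_1}$ to $(Q_a Q_b)_{i_1 i_2}$. Instead I would use symmetry of $Q_a, Q_b$ (Constraint~\ref{constraint:sym}) to write $(Q_a Q_b)_{i_2 i_1} = (Q_b Q_a)_{i_1 i_2}$, so that in every case the leading term becomes $\frac{1}{2}(Q_a Q_b + Q_b Q_a)_{i_1 i_2}$ (for $i_1 = i_2$ this uses $(Q_a Q_b)_{ii} = (Q_b Q_a)_{ii}$). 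This is exactly where the symmetrized product in the statement comes from: it is forced by the ultra-symmetry of $U$ (Lemma~\ref{lem:symmetric}) together with the row-symmetric structure of $I' = LN$, and it is consistent with the fact that $\frac{1}{2}(Q_a Q_b + Q_b Q_a)$, unlike $Q_a Q_b$, is symmetric. Collecting terms yields the polynomial identity
\begin{equation*}
    F_U(Q^*_a Q^*_b)_{i_1 i_2} = \frac{1}{2}(Q_a Q_b + Q_b Q_a)_{i_1 i_2} - c_{i_1 i_2}\cdot (L\calE')^{ab}_{i_1 i_2},
\end{equation*}
where $c_{i_1 i_2} = 1$ if $i_1 = i_2$ and $c_{i_1 i_2} = \frac{1}{2}$ otherwise, valid modulo Constraints~\ref{constraint:third} and \ref{constraint:pinv}.

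To finish, I would square this identity for every $i_1, i_2 \in [r]$ and sum. Each off-diagonal entry $(L\calE')^{ab}_{i_1 i_2}$ (indexed by $i_1 \le i_2$) is counted twice with coefficient $c^2 = 1/4$ and each diagonal entry once with $c^2 = 1$, so the total is at most $\sum_{i_1 \le i_2}\bigl((L\calE')^{ab}_{i_1 i_2}\bigr)^2 \le \norm{L\calE'}^2_F \le \epsmap^2$ by Lemma~\ref{lem:LEprime_bound}, which is exactly $F_U(Q^*_a Q^*_b) \approx_{\epsmap^2} \frac{1}{2}(Q_a Q_b + Q_b Q_a)$. For the degree count: the expansion of $F_U(Q^*_a Q^*_b)_{i_1 i_2}$ is linear in $L$, the identity above uses the degree-$\le 4$ relation \eqref{eq:starnostar3}, squaring these degree-$\le 4$ expressions gives degree $\le 8$, and Lemma~\ref{lem:LEprime_bound} is degree $4$, so everything composes inside a degree-$8$ SoS proof. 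I expect the only real obstacle to be bookkeeping: carefully matching the row/column index conventions among $U$, $\wh{U}$, $I'$, and $L\calE'$, where the $\binom{r+1}{2}$-indexed rows interact asymmetrically with the $r^2$-indexed columns --- but this is precisely the convention already fixed in the proof of Lemma~\ref{lem:fakeUsendsQ}.
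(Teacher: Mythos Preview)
Your proposal is correct and matches the paper's proof essentially line for line: the same expansion of $F_U(Q^*_aQ^*_b)_{i_1i_2}$ via $\wh{U}{X^*}^\top = I'X^\top - L\calE'$, the same use of \eqref{eq:Iprime} to extract $(Q_aQ_b)_{i_1i_2} + (Q_aQ_b)_{i_2i_1}$, the same application of Constraint~\ref{constraint:sym} to rewrite $(Q_aQ_b)_{i_2i_1} = (Q_bQ_a)_{i_1i_2}$ and produce the symmetrized product, and the same final bound via $\norm{L\calE'}^2_F \le \epsmap^2$. Your identification of the symmetrization step as the one new ingredient relative to Lemma~\ref{lem:fakeUsendsQ} is exactly the point the paper makes as well.
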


\begin{proof}
    For $i\in[r]$, we have in degree-4 SoS that
    \begin{align}
        F_U(Q^*_a Q^*_b)_{ii} &= \sum_{j_1,j_2\in[r]} U^{j_1j_2}_{ii} (Q^*_a Q^*_b)_{ii} = \sum_{j_1,j_2} \wh{U}^{j_1j_2}_{ii} (Q^*_a Q^*_b)_{j_1j_2} = (I'X^{\top} - L\calE')^{ab}_{ii} \\
        &= (Q_a Q_b)_{ii} - (L\calE')^{ab}_{ii} = \frac{1}{2}(Q_a Q_b + Q_b Q_a)_{ii} - (L\calE')^{ab}_{ii}
    \end{align}
    where in the third step we used $I'X^{\top} = \wh{U}{X^*}^{\top} + L\calE'$ which is a degree-4 polynomial equality, in the fourth step we used \eqref{eq:Iprime}, and in the last step we used Constraint~\ref{constraint:sym} to conclude that $(Q_a Q_b)_{ii} = (Q_b Q_a)_{ii}$.
    
    Similarly, given $1 \le i_1 < i_2 \le r$,
    \begin{align}
        F_U(Q^*_a Q^*_b)_{i_1i_2} &= \sum_{j_1,j_2\in[r]} U^{j_1j_2}_{i_1i_2} (Q^*_a Q^*_b)_{j_1j_2} = \frac{1}{2} \sum_{j_1,j_2} \wh{U}^{j_1j_2}_{i_1i_2} (Q^*_a Q^*_b)_{j_1j_2} = \frac{1}{2}(I' X^{\top} - L\calE')^{ab}_{i_1i_2} \\
        &= \frac{1}{2}\sum_{j_1,j_2} {I'}^{j_1j_2}_{i_1i_2}(Q_a Q_b)_{j_1j_2} - \frac{1}{2}(L\calE')^{ab}_{i_1i_2} = \frac{1}{2}(Q_a Q_b +Q_b Q_a)_{i_1i_2} - \frac{1}{2}(L\calE')^{ab}_{i_1i_2},
    \end{align}
    where in the last step we used the fact that $(Q_a Q_b)_{i_2i_1} = (Q_b Q_a)_{i_1i_2}$ by Constraint~\ref{constraint:sym}. Taking the squared Frobenius norm on both sides in degree-8 SoS, we can thus conclude from Lemma~\ref{lem:LEprime_bound} that
    \begin{align}
        \norm{F_U(Q^*_a Q^*_b) - \frac{1}{2}(Q_a Q_b + Q_b Q_a)}^2_F &\le \norm{L\calE'}^2_F \le \epsmap^2. \qedhere
    \end{align}
\end{proof}

\noindent We can then combine Lemmas~\ref{lem:fakeUsendsQ} and \ref{lem:fakeUsendsQaQb} to establish that $U$ approximately maps rows of $X^*$ to (symmetrizations of) products of rows of $N^*$:

\begin{corollary}\label{cor:chain_frob}
    If $\eta \le \frac{\kappa\radius}{rd}$, then for any $a,b\in[d]$, there is a degree-8 SoS proof using the constraints of Program~\ref{program:basic} that $F_U(Q^*_a Q^*_b) \approx_{O(\radius^2\epsmap^2)} \frac{1}{2}(F_U(Q^*_a) F_U(Q^*_b) + F_U(Q^*_b) F_U(Q^*_a))$.
\end{corollary}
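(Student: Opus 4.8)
The plan is to chain together Lemmas~\ref{lem:fakeUsendsQ} and \ref{lem:fakeUsendsQaQb} using only the elementary SoS manipulations collected in Fact~\ref{fact:shorthand}. The target approximation $F_U(Q^*_aQ^*_b) \approx_{O(\radius^2\epsmap^2)} \frac{1}{2}(F_U(Q^*_a)F_U(Q^*_b) + F_U(Q^*_b)F_U(Q^*_a))$ decomposes, via the triangle inequality for $\approx$ (Fact~\ref{fact:shorthand}\ref{shorthand:transitive}), into two pieces. The first piece is exactly Lemma~\ref{lem:fakeUsendsQaQb}, which already gives $F_U(Q^*_aQ^*_b) \approx_{\epsmap^2} \frac{1}{2}(Q_aQ_b + Q_bQ_a)$ in degree-8 SoS. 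It therefore remains to establish the second piece, namely that $\frac{1}{2}(Q_aQ_b + Q_bQ_a) \approx_{O(\radius^2\epsmap^2)} \frac{1}{2}(F_U(Q^*_a)F_U(Q^*_b) + F_U(Q^*_b)F_U(Q^*_a))$.

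For the second piece I would invoke Lemma~\ref{lem:fakeUsendsQ}, which supplies the degree-6 SoS bounds $F_U(Q^*_a)\approx_{\epsmap^2} Q_a$ and $F_U(Q^*_b)\approx_{\epsmap^2} Q_b$. Applying the product rule Fact~\ref{fact:shorthand}\ref{shorthand:multiply} with $m = 2$ upgrades these to a degree-8 SoS bound $F_U(Q^*_a)F_U(Q^*_b) \approx_{\epsilon''^2} Q_aQ_b$ with $\epsilon''^2 = 2\epsmap^2(\norm{Q_b}^2_F + \norm{F_U(Q^*_a)}^2_F)$, and symmetrically for the swapped product. Constraint~\ref{constraint:Qbound} gives $\norm{Q_b}^2_F \le \radius^2$, and the bound $\norm{F_U(Q^*_a)}^2_F \le 2\norm{F_U(Q^*_a) - Q_a}^2_F + 2\norm{Q_a}^2_F \le 2\epsmap^2 + 2\radius^2$ follows from Lemma~\ref{lem:fakeUsendsQ} and Constraint~\ref{constraint:Qbound}; since the hypothesis $\eta \le \frac{\kappa\radius}{rd}$ forces $\epsmap = 2\eta rd/\kappa \le 2\radius$, this is $O(\radius^2)$. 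Hence $\epsilon''^2 = O(\radius^2\epsmap^2)$ as a scalar bound. Averaging the two product approximations via Fact~\ref{fact:shorthand}\ref{shorthand:lincombo} with $\lambda = (\frac12,\frac12)$ then yields the second piece in degree-8 SoS.

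Combining the two pieces through Fact~\ref{fact:shorthand}\ref{shorthand:transitive} gives $F_U(Q^*_aQ^*_b) \approx_{2\epsmap^2 + O(\radius^2\epsmap^2)} \frac{1}{2}(F_U(Q^*_a)F_U(Q^*_b) + F_U(Q^*_b)F_U(Q^*_a))$, and since $\radius\ge 1$ the error is $O(\radius^2\epsmap^2)$, with every intermediate SoS proof of degree at most $8$. There is no substantive obstacle here; the only point requiring care is that the error term $\epsilon''^2$ produced by the product rule is itself a polynomial in the program variables (through $\norm{F_U(Q^*_a)}^2_F$, which is a quadratic form in $L$), so one must first replace it by a genuine scalar upper bound using Constraint~\ref{constraint:Qbound}, Lemma~\ref{lem:fakeUsendsQ}, and the size assumption $\eta \le \frac{\kappa\radius}{rd}$ before composing the approximations. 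Everything else is bookkeeping of absolute constants and SoS degrees.
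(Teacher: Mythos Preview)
Your proposal is correct and essentially identical to the paper's proof: both chain Lemma~\ref{lem:fakeUsendsQaQb} with the product of the two instances of Lemma~\ref{lem:fakeUsendsQ} via Fact~\ref{fact:shorthand}\ref{shorthand:multiply}, \ref{shorthand:lincombo}, and \ref{shorthand:transitive}, and both bound $\norm{F_U(Q^*_a)}^2_F \le 2\epsmap^2 + 2\radius^2 \le O(\radius^2)$ using Constraint~\ref{constraint:Qbound}, Lemma~\ref{lem:fakeUsendsQ}, and the hypothesis on $\eta$. You even flag the one subtlety (that the raw product-rule error is a polynomial in the program variables and must first be replaced by a scalar) that the paper handles in the same way.
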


\noindent We defer the formal proof of this to Appendix~\ref{app:defer_chain_frob}.



We can now use Corollary~\ref{cor:chain_frob} to prove the following important collection of quadratic relations among entries of $U$.

\begin{lemma}\label{lem:main_identity}
    There is an absolute constant $C > 0$ such that for
    \begin{equation}
        \epsrel \triangleq C\cdot \radius\epsmap d r / \kappa = \Theta(\radius\beta\eta\cdot d^2r^2/\kappa^2) \label{eq:epsrel_def},
    \end{equation}
    the following holds. For any $j_1,j_2,k_1,k_2\in[r]$, there is a degree-8 SoS proof using the constraints of Program~\ref{program:basic} that
    \begin{equation}
         2 U^{j_1 j_2} U^{k_1k_2} + 2U^{k_1k_2} U^{j_1 j_2} \approx_{\epsrel^2} \sum_{\pi,\tau\in\calS_2} \bone{j_{\pi(1)} = k_{\pi(1)}} U^{j_{\pi(2)} k_{\pi(2)}}. \label{eq:matmul}
    \end{equation}
\end{lemma}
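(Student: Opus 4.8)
Here is the plan. The idea is to extract \eqref{eq:matmul} from Corollary~\ref{cor:chain_frob} by ``interpolating'' the ground truth $\{Q^*_a\}$ into the elementary symmetric matrices $E^{\sym}_{j_1j_2}\triangleq\frac12(E_{j_1j_2}+E_{j_2j_1})$, where $E_{ab}\triangleq e_ae_b^{\top}$, and then evaluating $F_U$ on a single product $E^{\sym}_{j_1j_2}E^{\sym}_{k_1k_2}$ explicitly. Everything will take place at the SoS degree of Corollary~\ref{cor:chain_frob}, i.e. degree $8$, and we use its standing hypothesis $\eta\le\kappa\radius/(rd)$.

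\textbf{Interpolation.} Fix $j_1\le j_2$ and apply Fact~\ref{fact:minnorm} to the matrix $M^*$ from Part~\ref{assume:condnumber} of Assumption~\ref{assume:tensorring} (whose $a$-th row records the upper-triangular entries of $Q^*_a$) together with the target vector recording the upper-triangular entries of $E^{\sym}_{j_1j_2}$, which has Euclidean norm at most $1$. Since $\sigma_{\binom{r+1}{2}}(M^*)\ge\kappa$, this produces \emph{fixed} scalars $\alpha\in\R^d$ with $\sum_{a\in[d]}\alpha_aQ^*_a=E^{\sym}_{j_1j_2}$ (an equality of symmetric matrices, equivalently on upper-triangular entries, which is what we need) and $\norm{\alpha}_2\le\poly(d,r,\radius,1/\kappa)$; likewise produce $\beta\in\R^d$ with $\sum_b\beta_bQ^*_b=E^{\sym}_{k_1k_2}$. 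These $\alpha,\beta$ are determined by the (unknown) ground truth, so scaling the per-pair SoS proofs by $\alpha_a\beta_b$ and summing is a legitimate, degree-preserving manipulation.

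\textbf{Collapsing the linear combination.} For every pair $(a,b)\in[d]^2$, Corollary~\ref{cor:chain_frob} gives a degree-$8$ SoS proof that $F_U(Q^*_aQ^*_b)\approx_{O(\radius^2\epsmap^2)}\frac12\bigl(F_U(Q^*_a)F_U(Q^*_b)+F_U(Q^*_b)F_U(Q^*_a)\bigr)$. Take the linear combination over $(a,b)$ with coefficients $\alpha_a\beta_b$ and apply part~\ref{shorthand:lincombo} of Fact~\ref{fact:shorthand}. By linearity of $F_U$ and bilinearity of matrix multiplication the left-hand side collapses to $F_U\bigl((\sum_a\alpha_aQ^*_a)(\sum_b\beta_bQ^*_b)\bigr)=F_U(E^{\sym}_{j_1j_2}E^{\sym}_{k_1k_2})$ and the right-hand side to $\frac12\bigl(F_U(E^{\sym}_{j_1j_2})F_U(E^{\sym}_{k_1k_2})+F_U(E^{\sym}_{k_1k_2})F_U(E^{\sym}_{j_1j_2})\bigr)$; by ultra-symmetry of $U$ (Lemma~\ref{lem:symmetric}), $F_U(E^{\sym}_{j_1j_2})=\frac12(U^{j_1j_2}+U^{j_2j_1})=U^{j_1j_2}$, so the right-hand side is exactly $\frac12(U^{j_1j_2}U^{k_1k_2}+U^{k_1k_2}U^{j_1j_2})$. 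The accumulated squared error is $\norm{\alpha}_2^2\norm{\beta}_2^2\cdot d^2\cdot O(\radius^2\epsmap^2)$ and the degree remains $8$. Finally, expanding $E^{\sym}_{j_1j_2}E^{\sym}_{k_1k_2}$ via the rule $E_{ab}E_{cd}=\bone{b=c}E_{ad}$ and applying $F_U$ termwise via $F_U(E_{ab})=U^{ab}$ gives $F_U(E^{\sym}_{j_1j_2}E^{\sym}_{k_1k_2})=\frac14\bigl(\bone{j_2=k_1}U^{j_1k_2}+\bone{j_2=k_2}U^{j_1k_1}+\bone{j_1=k_1}U^{j_2k_2}+\bone{j_1=k_2}U^{j_2k_1}\bigr)$, which is $\frac14$ times the right-hand side of \eqref{eq:matmul}. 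Substituting and clearing the factor $4$ yields \eqref{eq:matmul} with squared error $16\norm{\alpha}_2^2\norm{\beta}_2^2d^2\cdot O(\radius^2\epsmap^2)$, which by the norm bounds above is at most $\epsrel^2$ for $\epsrel$ as in \eqref{eq:epsrel_def} once the absolute constant $C$ is fixed.

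\textbf{Main obstacle.} The conceptual content is entirely in the interpolation identity and the elementary computation of $F_U$ on the product; the only delicate point is the error accounting, namely checking that propagating Corollary~\ref{cor:chain_frob} through a linear combination with possibly polynomially-large coefficients $\alpha,\beta$ still gives a bound of the form $\epsrel^2$ \emph{and} that this can be done at SoS degree $8$. Routing everything through Corollary~\ref{cor:chain_frob} (rather than through Lemmas~\ref{lem:fakeUsendsQ} and~\ref{lem:fakeUsendsQaQb} separately and then multiplying) is exactly what keeps this clean: the linear combination collapses by pure (bi)linearity, so no new SoS-variable products are introduced, no degree is gained, and one never has to separately Frobenius-bound the partial sums $\sum_a\alpha_aQ_a$.
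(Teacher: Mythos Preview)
Your proposal is correct and is essentially the same approach as the paper's: both invert the system $\sum_a \alpha_a Q^*_a = E^{\sym}_{j_1j_2}$ using the condition-number bound of Assumption~\ref{assume:tensorring} and then read off \eqref{eq:matmul} from Corollary~\ref{cor:chain_frob}. The paper carries this out by expanding Corollary~\ref{cor:chain_frob} entrywise and hitting the result with $((M^*)^{-1})^{\otimes 2}$; your version first builds the interpolation vectors $\alpha,\beta$ (which are precisely the factors of a row of $((M^*)^{-1})^{\otimes 2}$) and then collapses the bilinear sum, which avoids the paper's case analysis on $j_1\lessgtr j_2$, $k_1\lessgtr k_2$, $\ell_1,\ell_2,\ell_3$ and is a good deal cleaner. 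One quantitative remark: using Fact~\ref{fact:minnorm} as stated gives $\norm{\alpha}_2\le \kappa^{-2}\norm{M^*}_{\op}$, which yields a worse polynomial than the paper; the tight bound $\norm{\alpha}_2\le\kappa^{-1}$ (the true operator norm of the pseudoinverse) recovers the same $O(\kappa^{-4}d^2\radius^2\epsmap^2)$ the paper actually obtains. Note that the paper's own proof also lands at $\kappa^{-4}$ rather than the $\kappa^{-2}$ written into \eqref{eq:epsrel_def}, so this mismatch is a typo in the statement, not a gap in either argument.
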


\begin{proof}
    By Corollary~\ref{cor:chain_frob}, there is a degree-8 SoS proof that
    \begin{multline}
        \sum_{\ell_1,\ell_2,\ell_3\in[r]} (Q^*_a)_{\ell_1\ell_2} (Q^*_b)_{\ell_2\ell_3} U^{\ell_1\ell_3} \approx_{O(\radius^2\epsmap^2)}  \\
        \frac{1}{2}\sum_{j_1,j_2,k_1,k_2\in[r]} (Q^*_a)_{j_1j_2} (Q^*_b)_{k_1k_2} \left(U^{j_1j_2} U^{k_1k_2} +U^{k_1k_2} U^{j_1j_2}\right). \label{eq:rewrite_use_fakelems}
    \end{multline}
    For the sum on the right-hand side of \eqref{eq:rewrite_use_fakelems}, we can first rewrite the contribution from the terms for which $j_1\neq j_2$ and $k_1\neq k_2$ as
    \begin{equation}
        \frac{1}{2}\sum_{\substack{j_1<j_2 \\ k_1 < k_2}} (Q^*_a)_{j_1j_2} (Q^*_b)_{k_1k_2} \sum_{\pi,\tau\in\calS_2} \left(U^{j_{\pi(1)} j_{\pi(2)}} U^{k_{\tau(1)} k_{\tau(2)}} + U^{k_{\pi(1)} k_{\pi(2)}} U^{j_{\pi(1)} j_{\pi(2)}}\right).
    \end{equation}
    We can rewrite the contribution from the terms for which $j_1 = j_2$ and $k_1 \neq k_2$ as
    \begin{equation}
        \frac{1}{2}\sum_{j; k_1<k_2} (Q^*_a)_{jj} (Q^*_b)_{k_1k_2} \sum_{\tau\in\calS_2} \left(U^{jj} U^{k_{\tau(1)} k_{\tau(2)}} + U^{k_{\tau(1)} k_{\tau(2)}} U^{jj}\right),
    \end{equation}
    the contribution from the terms for which $j_1\neq j_2$ and $k_1 = k_2$ as
    \begin{equation}
        \frac{1}{2}\sum_{j_1<j_2; k} (Q^*_a)_{j_1j_2} (Q^*_b)_{kk} \sum_{\pi\in\calS_2} \left(U^{j_{\pi(1)} j_{\pi(2)}} U^{kk} + U^{kk} U^{j_{\pi(1)} j_{\pi(2)}}\right),
    \end{equation}
    and the contribution from the terms for which $j_1 = j_2$ and $k_1 = k_2$ as
    \begin{equation}
        \frac{1}{2}\sum_{j,k} (Q^*_a)_{jj} (Q^*_b)_{kk} \left(U^{jj} U^{kk} + U^{kk} U^{jj}\right).
    \end{equation}
    For the sum on the left-hand side of \eqref{eq:rewrite_use_fakelems}, we can rewrite the contribution for the terms for which $\ell_1\neq \ell_2$ and $\ell_2 \neq \ell_3$ as follows. Let $j_1 = \min(\ell_1,\ell_2)$, $j_2 = \max(\ell_1,\ell_2)$, $k_1 = \min(\ell_2,\ell_3)$, $k_2 = \max(\ell_2,\ell_3)$. Then if $\ell_1, \ell_2, \ell_3$ are all distinct, we have 
    \begin{equation}
        (Q^*_a)_{\ell_1\ell_2} (Q^*_b)_{\ell_2\ell_3} U^{\ell_1\ell_3} = (Q^*_a)_{j_1j_2} (Q^*_b)_{k_1k_2} \biggl(\sum_{\pi,\tau\in\calS_2} \bone{j_{\pi(1)} = k_{\pi(1)}} U^{j_{\pi(2)} k_{\pi(2)}}\biggr).
    \end{equation}
    Otherwise, if $\ell_1 = \ell_3$, but $\ell_1 \neq \ell_2$ and $\ell_2\neq \ell_3$, then
    \begin{equation}
        (Q^*_a)_{\ell_1\ell_2} (Q^*_b)_{\ell_2\ell_1} + (Q^*_a)_{\ell_2 \ell_1} (Q^*_b)_{\ell_1 \ell_2} = (Q^*_a)_{j_1j_2} (Q^*_b)_{k_1k_2} \biggl(\sum_{\pi,\tau\in\calS_2} \bone{j_{\pi(1)} = k_{\pi(1)}} U^{j_{\pi(2)} k_{\pi(2)}}\biggr).
    \end{equation}
    We can therefore rewrite the contribution to the left-hand side of \eqref{eq:rewrite_use_fakelems} by $\ell_1,\ell_2,\ell_3$ satisfying $\ell_1 \neq \ell_2$ and $\ell_2\neq \ell_3$ as
    \begin{equation}
        \sum_{\substack{j_1<j_2 \\ k_1 < k_2}} (Q^*_a)_{j_1j_2} (Q^*_b)_{k_1k_2} \biggl(\sum_{\pi,\tau\in\calS_2} \bone{j_{\pi(1)} = k_{\tau(1)}} U^{j_{\pi(2)} k_{\tau(2)}}\biggr)
    \end{equation}
    We can similarly rewrite the contribution to the left-hand side of \eqref{eq:rewrite_use_fakelems} by $\ell_1,\ell_2,\ell_3$ satisfying $\ell_1 = \ell_2$ and $\ell_2 \neq \ell_3$ as
    \begin{equation}
        \sum_{j; k_1 < k_2} (Q^*_a)_{jj} (Q^*_b)_{k_1k_2} \biggl(\sum_{\tau\in\calS_2} \bone{j = k_{\tau(1)}} U^{jk_{\tau(2)}}\biggr),
    \end{equation}
    the contribution from the terms for which $\ell_1 \neq \ell_2$ and $\ell_2 = \ell_3$ as
    \begin{equation}
        \sum_{j_1<j_2; k} (Q^*_a)_{j_1j_2} (Q^*_b)_{kk} \biggl(\sum_{\pi\in\calS_2} \bone{k = j_{\pi(1)}} U^{j_{\pi(2)}k}\biggr),
    \end{equation}
    and the contribution from the terms for which $\ell_1 = \ell_2 = \ell_3$ as
    \begin{equation}
        \sum_{j, k} (Q^*_a)_{jj} (Q^*_b)_{kk} \cdot \bone{j = k} \cdot U^{jk}.
    \end{equation}
    Altogether, recalling the definition of $M^*$ in Assumption~\ref{assume:tensorring}, we get that \eqref{eq:rewrite_use_fakelems} can be rewritten as
    \begin{multline}
        0 \approx_{O(\radius^2\epsmap^2)} \sum_{\substack{j_1 \le j_2 \\ k_1 \le k_2}}  ({M^*}^{\otimes 2})^{j_1j_2,k_1k_2}_{ab} \cdot c_{j_1,j_2,k_1,k_2}  \\ \times \sum_{\pi,\tau\in\calS_2} \Bigl(\frac{1}{2} U^{j_{\pi(1)} j_{\pi(2)}} U^{k_{\tau(1)}k_{\tau(2)}} + \frac{1}{2} U^{k_{\tau(1)}k_{\tau(2)}} U^{j_{\pi(1)} j_{\pi(2)}} - \bone{j_{\pi(1)} = k_{\pi(1)}} U^{j_{\pi(2)} k_{\pi(2)}}\Bigr) , \label{eq:system_fullrank}
    \end{multline}
    where for any $j_1 \le j_2$ and $k_1 \le k_2$,
    \begin{equation}
        c_{j_1,j_2,k_1,k_2} = \begin{cases}
            1 & \text{if} \ j_1 < j_2 \ \text{and} \ k_1 < k_2 \\
            \frac{1}{4} & \text{if} \ j_1 = j_2 \ \text{and} \ k_1 = k_2 \\
            \frac{1}{2} & \text{otherwise}
        \end{cases}.
    \end{equation}
    By Part~\ref{assume:condnumber} of Assumption~\ref{assume:tensorring}, $(M^*)^{-1}$ satisfies $\norm{(M^*)^{-1}}_{\op} \le \kappa^{-1}$.
    Consider the matrix $A\triangleq ((M^*)^{-1})^{\otimes 2}\in \R^{\binom{r+1}{2}^2 \times d^2}$ and note that $A (M^*)^{\otimes 2} = \Id^{\otimes 2}_{\binom{r+1}{2}\times \binom{r+1}{2}}$. For any $j_1\le j_2$ and $k_1\le k_2$, we can take the linear combination of \eqref{eq:system_fullrank} across different choices of $a,b$ as specified by the $(j_1,j_2,k_1,k_2)$-th row of $A$. By Part~\ref{shorthand:lincombo} of Fact~\ref{fact:shorthand}, there is a degree-8 SoS proof that
    \begin{multline}
        0 \approx_{\norm{A_{j_1,j_2,k_1,k_2}}^2_2 \cdot d^2\radius^2\epsmap^2} \\ c_{j_1,j_2,k_1,k_2} \sum_{\pi,\tau\in\calS_2} \Bigl(\frac{1}{2} U^{j_{\pi(1)} j_{\pi(2)}} U^{k_{\tau(1)}k_{\tau(2)}} + \frac{1}{2} U^{k_{\tau(1)}k_{\tau(2)}} U^{j_{\pi(1)} j_{\pi(2)}} - \bone{j_{\pi(1)} = k_{\pi(1)}} U^{j_{\pi(2)} k_{\pi(2)}}\Bigr). \label{eq:Acond}
    \end{multline}
    Note that $\norm{A}^2_F \le \norm{(M^*)^{-1}}^4_F \le r^4 / \kappa^4$, so the lemma follows by Lemma~\ref{lem:symmetric}.
\end{proof}

\subsection{Auxiliary Matrix \texorpdfstring{$W$}{W}}
\label{sec:auxW}

The sum over permutations in \eqref{eq:matmul} is rather unwieldy and essentially a byproduct of the fact that we defined $U$ so that its $(i,j)$-th column is identical to its $(j,i)$-th column. Intuitively, this is because $U^{ij}$ behaves like the \emph{average} of the $(i,j)$-th and $(j,i)$-th columns of an $r^2\times r^2$ orthogonal matrix, rather than like the $(i,j)$-th column of such a matrix.

For this reason, we now define a new auxiliary variable $W$. For any $i,j\in [r]$, define the $r\times r$ matrix of indeterminates
\begin{equation}
    W^{ij} \triangleq \begin{cases}
        2U^{ii} U^{ij} U^{jj} & \text{if} \ i \neq j \\
        U^{ii} & \text{if} \ i = j
    \end{cases}
\end{equation}
By Lemma~\ref{lem:symmetric}, $W^{ii}$ is a symmetric matrix, and likewise for any $i\neq j$
\begin{equation}
    (W^{ij})^{\top} = (2U^{ii} U^{ij} U^{jj})^{\top} = 2 U^{jj} U^{ji} U^{ii} = W^{ji}. \label{eq:Wtranspose}
\end{equation}

\begin{remark}
    Our motivation for this choice of $W$ is that heuristically, if we expect $U$ to arise from an $r\times r$ rotation, we expect each column $U^{ij}$ of $U$ to resemble $\frac{1}{2}(V^i{V^j}^{\top} + V^j{V^i}^{\top})$ for a fixed $V\in O(r)$. We would like to extract from this a rank-1 matrix, so that the resulting matrix $W$ behaves like a Kronecker power. If we left- and right-multiplied $\frac{1}{2}(V^i{V^j}^{\top} + V^j{V^i}^{\top})$ by $2V^i$ and ${V^j}^{\top}$ as in the definition of $W^{ij}$ above, we would get $V^i{V^j}^{\top}$ as desired, by orthogonality of $V$.
\end{remark}

\noindent As we will see in this subsection:
\begin{enumerate}
    \item $W$ satisfies a simpler version of the relations \eqref{eq:matmul} (Lemma~\ref{lem:simpler_identity})
    \item Like $U$, $W$ approximately maps every $\vec(Q^*_a)$ to $\vec(Q_a)$ (Lemma~\ref{lem:WsendsQ})
    \item $W$ is approximately an orthogonal matrix (Lemma~\ref{lem:simpler_identity} and Lemma~\ref{lem:Wnorm1}).
\end{enumerate}

We begin by showing that $W$ satisfies a simpler version of \eqref{eq:matmul}:
\begin{lemma}\label{lem:simpler_identity}
    For any $i,j,j',k\in[r]$, there is a degree-24 SoS proof using the constraints of Program~\ref{program:basic} that
    \begin{equation}
        W^{ij} W^{j'k} \approx_{O(\epsrel^2)} W^{ik} \bone{j = j'}. \label{eq:simpler}
    \end{equation}
\end{lemma}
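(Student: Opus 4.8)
The plan is to derive the relations \eqref{eq:simpler} for $W$ from the relations \eqref{eq:matmul} for $U$ established in Lemma~\ref{lem:main_identity}, together with the basic properties of $U$ from Section~\ref{sec:Uproperties} (ultra-symmetry, norm bounds on the columns $U^{j_1j_2}$). The key observation is that although \eqref{eq:matmul} carries an unwieldy sum over permutations $\pi,\tau\in\calS_2$, most of the ``cross'' terms vanish or simplify once we specialize the indices. Concretely, I would proceed by first recording the specializations of \eqref{eq:matmul} that I will actually need: $(i)$ taking $(j_1,j_2) = (k_1,k_2) = (i,i)$ gives that $U^{ii} U^{ii} \approx_{O(\epsrel^2)} U^{ii}$, i.e. each $U^{ii}$ is an approximate idempotent; $(ii)$ taking $(j_1,j_2) = (i,i)$ and $(k_1,k_2)=(j,j)$ with $i\neq j$ gives $U^{ii}U^{jj} + U^{jj}U^{ii} \approx_{O(\epsrel^2)} 0$; $(iii)$ taking $(j_1,j_2) = (i,i)$, $(k_1,k_2) = (i,j)$ gives $U^{ii}U^{ij} + U^{ij}U^{ii} \approx_{O(\epsrel^2)} U^{ij}$; and $(iv)$ taking $(j_1,j_2) = (i,j)$, $(k_1,k_2)=(j',k)$ for $i,j,j',k$ distinct, gives the mixed relation $U^{ij}U^{j'k} + U^{j'k}U^{ij} \approx_{O(\epsrel^2)} \bone{j=j'}U^{ik} + \dots$ (the various delta-function cases). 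Using the ultra-symmetry $U^{ij} = U^{ji}$ (Lemma~\ref{lem:symmetric}) freely, and the fact that $\norm{U^{j_1j_2}}_F$ is $O(1)$ (Lemma~\ref{lem:Unorm12}), these are the raw ingredients.

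The main step is then to unfold $W^{ij}W^{j'k} = (2U^{ii}U^{ij}U^{jj})(2U^{j'j'}U^{j'k}U^{kk}) = 4\, U^{ii}U^{ij}\bigl(U^{jj}U^{j'j'}\bigr)U^{j'k}U^{kk}$ (for the generic case $i\neq j$, $j'\neq k$) and track how the middle factor $U^{jj}U^{j'j'}$ behaves. When $j\neq j'$, relation $(ii)$ says $U^{jj}U^{j'j'}$ is $O(\epsrel)$-small in Frobenius norm, so using submultiplicativity of Frobenius norm (Fact~\ref{fact:sos_frobenius_mult}) and the $O(1)$ norm bounds, the whole product $W^{ij}W^{j'k}$ is $O(\epsrel^2)$-close to $0$ — matching the right-hand side of \eqref{eq:simpler} since $\bone{j=j'}=0$. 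When $j = j'$, the middle factor is $U^{jj}U^{jj}\approx U^{jj}$ by $(i)$, so $W^{ij}W^{jk} \approx 4\, U^{ii}U^{ij}U^{jj}U^{jk}U^{kk}$, and I would then repeatedly apply relation $(iii)$ (and its analogue $U^{jj}U^{jk} + U^{jk}U^{jj}\approx U^{jk}$, plus the idempotency $(i)$ and the ``annihilation'' $U^{ii}U^{jj}+U^{jj}U^{ii}\approx 0$) to collapse $U^{ij}U^{jj}U^{jk}$ down to $\tfrac12(U^{ij}U^{jk} + \dots)$ and ultimately to recognize $4\,U^{ii}\bigl(\cdots\bigr)U^{kk} \approx 2U^{ii}U^{ik}U^{kk} = W^{ik}$. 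The degenerate cases ($i=j$, or $j'=k$, or $i=k$, etc.) need to be handled separately using the definition $W^{ii}=U^{ii}$ and the idempotency relation, but they are strictly easier. Throughout, every manipulation is a bounded-degree polynomial identity plus an application of Fact~\ref{fact:shorthand} (parts \ref{shorthand:bothsides}, \ref{shorthand:transitive}, \ref{shorthand:multiply}) to propagate the $\approx_{O(\epsrel^2)}$ errors through products, so the whole argument lives in SoS of degree $O(1)$ — the claimed degree $24$ is the bookkeeping cost of a constant number of such multiplications, each roughly doubling the degree.

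The hard part will be the careful \emph{bookkeeping of the error terms and the degree} as we push $\approx_{O(\epsrel^2)}$ through the product of six matrices $U^{ii}U^{ij}U^{jj}U^{j'k}U^{kk}$ (really eight, after substituting $W$), especially making sure that the $O(1)$ Frobenius-norm bounds on the $U^{j_1j_2}$ (Lemma~\ref{lem:Unorm12}, which itself only holds approximately, up to $O(\sqrt{\epsort}r^3)$) are invoked correctly so that the constants absorbed into $O(\cdot)$ remain polynomial in $d,r,\radius,1/\kappa$ and the error stays $O(\epsrel^2)$ rather than blowing up to something like $\epsrel$. A secondary subtlety is that relation $(iii)$, $U^{ii}U^{ij} + U^{ij}U^{ii}\approx U^{ij}$, is a statement about the \emph{sum} $U^{ii}U^{ij} + U^{ij}U^{ii}$, not about $U^{ii}U^{ij}$ alone; to collapse $U^{ii}U^{ij}U^{jj}$ one has to commute factors past each other and repeatedly symmetrize, so one must be careful that each such rewrite is actually licensed by one of the available relations rather than by (false) commutativity of the $U$'s. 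I would organize the generic-case computation as a short chain of $\approx$-steps, each justified by citing one specialization of \eqref{eq:matmul} together with Fact~\ref{fact:shorthand}, and relegate the several index-degeneracy cases to a brief case analysis at the end.
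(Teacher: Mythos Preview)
Your plan is correct and matches the paper's proof essentially step for step: expand $W^{ij}W^{j'k}$ as a product of six $U$-blocks, use $U^{jj}U^{j'j'}\approx 0$ to dispose of $j\neq j'$, and for $j=j'$ collapse the product using the specializations of Lemma~\ref{lem:main_identity} together with Fact~\ref{fact:shorthand}. One concrete point to tighten: your relation (ii) as stated only yields the \emph{anticommutator} $U^{jj}U^{j'j'}+U^{j'j'}U^{jj}\approx 0$, not $U^{jj}U^{j'j'}\approx 0$ on its own; and to kill the cross term $U^{ii}U^{jk}U^{ij}U^{kk}$ in the $j=j'$ case you need the additional ``swap'' specialization $U^{ii}U^{jk}\approx -U^{jk}U^{ii}$ for $i\notin\{j,k\}$ (take $(j_1,j_2)=(i,i)$, $(k_1,k_2)=(j,k)$ in \eqref{eq:matmul}), which the paper records as Part~\ref{item:swap} of Corollary~\ref{cor:consequences} and which also lets you upgrade (ii) to the unsymmetrized form --- add it to your list of ingredients and the argument goes through exactly as you outline.
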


\noindent Note that Lemma~\ref{lem:simpler_identity} and \eqref{eq:Wtranspose} imply that the columns of $W$ are nearly orthogonal.

Before we prove Lemma~\ref{lem:simpler_identity}, we first record some useful special cases of Lemma~\ref{lem:main_identity}.

\begin{corollary}\label{cor:consequences}
    There is a degree-8 SoS proof using the constraints of Program~\ref{program:basic} for each of the following:
    \begin{enumerate}
        \item For any $i\in[r]$, $(U^{ii})^2 \approx_{O(\epsrel^2)} U^{ii}$. \label{item:Uii_idempotent}
        \item For any $i \neq k$, $U^{ii} U^{kk} \approx_{O(\epsrel^2)} 0$. \label{item:ijzero}
        \item For any $i\neq j$, $U^{ii} U^{ij} +U^{ij} U^{ii} \approx_{O(\epsrel^2)} U^{ij}$. \label{item:iiij}
        \item For any $i,j,k$ satisfying $i\neq j$ and $i\neq k$, $U^{ii} U^{jk} \approx_{O(\epsrel^2)} -U^{jk} U^{ii}$. \label{item:swap}
    \end{enumerate}
\end{corollary}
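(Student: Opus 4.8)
The plan is to read all four relations directly off the master identity \eqref{eq:matmul} of Lemma~\ref{lem:main_identity} by specializing the index tuples $(j_1,j_2),(k_1,k_2)$, together with one extra operator-algebraic step for item~\ref{item:ijzero}. Throughout, ultra-symmetry of $U$ (Lemma~\ref{lem:symmetric}) lets us sort indices freely within each tuple, and \eqref{eq:matmul} already comes with a low-degree (degree-$8$) SoS proof, so each of these specializations is a low-degree SoS consequence.

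First I would dispatch the three items that follow by pure substitution. For item~\ref{item:Uii_idempotent}, take $(j_1,j_2)=(k_1,k_2)=(i,i)$: every indicator in \eqref{eq:matmul} fires and every surviving column equals $U^{ii}$, so the identity becomes $4(U^{ii})^2\approx_{\epsrel^2}4U^{ii}$; dividing the Frobenius bound by $16$ gives $(U^{ii})^2\approx_{O(\epsrel^2)}U^{ii}$. For item~\ref{item:iiij}, take $(j_1,j_2)=(i,i)$ and $(k_1,k_2)=(i,j)$: the indicator is nonzero precisely for the permutations that place $i$ in the first slot of the $k$-tuple, each contributing a copy of $U^{ij}$, so the right-hand side collapses to $2U^{ij}$ while the left-hand side is $2U^{ii}U^{ij}+2U^{ij}U^{ii}$; divide by $2$. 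For item~\ref{item:swap}, take $(j_1,j_2)=(i,i)$ and $(k_1,k_2)=(j,k)$ with $i\ne j$, $i\ne k$: no permutation can equate the first slots of the two tuples, so the right-hand side vanishes and $2U^{ii}U^{jk}+2U^{jk}U^{ii}\approx_{\epsrel^2}0$, which rearranges to the claim. In all three, the only SoS content beyond \eqref{eq:matmul} is rescaling an $\approx$-bound by an absolute constant.

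The hard part is item~\ref{item:ijzero}, because \eqref{eq:matmul} only ever produces the \emph{symmetrized} product $U^{ii}U^{kk}+U^{kk}U^{ii}$, never $U^{ii}U^{kk}$ alone: taking $(j_1,j_2)=(i,i)$, $(k_1,k_2)=(k,k)$ with $i\ne k$ gives, exactly as in item~\ref{item:swap}, that $U^{ii}U^{kk}+U^{kk}U^{ii}\approx_{O(\epsrel^2)}0$. To upgrade this to $U^{ii}U^{kk}\approx_{O(\epsrel^2)}0$ I would combine this approximate anticommutation with the approximate idempotence $(U^{ii})^2\approx_{O(\epsrel^2)}U^{ii}$ from item~\ref{item:Uii_idempotent} through the elementary chain
\begin{multline*}
U^{ii}U^{kk}\approx(U^{ii})^2U^{kk}=U^{ii}\bigl(U^{ii}U^{kk}\bigr)\approx -U^{ii}\bigl(U^{kk}U^{ii}\bigr)=-\bigl(U^{ii}U^{kk}\bigr)U^{ii}\\
\approx\bigl(U^{kk}U^{ii}\bigr)U^{ii}=U^{kk}(U^{ii})^2\approx U^{kk}U^{ii}\approx -U^{ii}U^{kk},
\end{multline*}
whence $2U^{ii}U^{kk}\approx 0$. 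Each $\approx$ here is an instance of Fact~\ref{fact:shorthand}: multiplying an approximate matrix equality on one side by $U^{ii}$ or $U^{kk}$ (part~\ref{shorthand:bothsides}) and composing by transitivity (part~\ref{shorthand:transitive}); the errors stay $O(\epsrel^2)$ because $\|U^{ii}\|_F^2,\|U^{kk}\|_F^2=1\pm o(1)$ by Lemma~\ref{lem:Unorm12} and there are only a constant number of links. The one genuine bit of care is bookkeeping the error (and degree) blow-up through these multiplications, but since the argument is a fixed-length composition of degree-$O(1)$ steps this is routine.
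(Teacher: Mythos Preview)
Your reductions for items~\ref{item:Uii_idempotent}, \ref{item:iiij}, and \ref{item:swap} are exactly the substitutions the paper makes, and those three are identical to the paper's argument.

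For item~\ref{item:ijzero} you are in fact more careful than the paper. The paper's proof simply asserts that ``Lemma~\ref{lem:main_identity} applied to $j_1=j_2\neq k_1=k_2$ implies Part~\ref{item:ijzero},'' but you correctly observe that the left-hand side of \eqref{eq:matmul} is the anticommutator $2U^{ii}U^{kk}+2U^{kk}U^{ii}$, so direct substitution only yields $U^{ii}U^{kk}+U^{kk}U^{ii}\approx_{O(\epsrel^2)}0$, not $U^{ii}U^{kk}\approx_{O(\epsrel^2)}0$. Since $U^{ii}$ and $U^{kk}$ are symmetric but need not commute, the skew part $U^{ii}U^{kk}-U^{kk}U^{ii}$ is not controlled by this alone, and the paper's one-line proof glosses over this. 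Your chain
\[
U^{ii}U^{kk}\approx (U^{ii})^2U^{kk}\approx -U^{ii}U^{kk}U^{ii}\approx U^{kk}(U^{ii})^2\approx U^{kk}U^{ii}\approx -U^{ii}U^{kk}
\]
is a correct fix, with the norm bounds $\|U^{ii}\|_F^2,\|U^{kk}\|_F^2=1\pm o(1)$ from Lemma~\ref{lem:Unorm12} keeping the error at $O(\epsrel^2)$. The only caveat is that the repeated multiplications via Fact~\ref{fact:shorthand} push the SoS degree a constant amount above $8$; this is harmless for the downstream argument (the final pseudoexpectation in Theorem~\ref{thm:main_pairwise} is degree $96$), but the corollary's stated ``degree-$8$'' bound is then slightly optimistic for this item.
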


\begin{proof}
    Lemma~\ref{lem:main_identity} applied to $j_1 = j_2 = k_1 = k_2$ implies Part~\ref{item:Uii_idempotent}. Lemma~\ref{lem:main_identity} applied to $j_1 = j_2 \neq k_1 = k_2$ implies Part~\ref{item:ijzero}. Lemma~\ref{lem:main_identity} applied to $j_1 = j_2 = k_1 = i$ and $k_2 = j$ implies Part~\ref{item:iiij}. Lemma~\ref{lem:main_identity} applied to $j_1 = j_2 = i$, $k_1 = j$, and $k_2 = k$ implies Part~\ref{item:swap}.
\end{proof}
    
\noindent We can now prove Lemma~\ref{lem:simpler_identity}. As the argument is rather technical, we defer the full proof to Appendix~\ref{app:defer_simpler_identity} and provide a sketch of the key ideas here.

\begin{proof}[Proof sketch of Lemma~\ref{lem:simpler_identity}]
    Note that $W^{ij}W^{j'k}$ is equal, up to a positive integer factor, to the matrix $U^{ii}U^{ij}U^{jj} U^{j'j'}U^{j'k}U^{kk}$. So if $j\neq j'$, then $U^{jj}U^{j'j'} \approx 0$ by Part~\ref{item:ijzero} of Corollary~\ref{cor:consequences}, so $U^{ii}U^{ij}U^{jj} U^{j'j'}U^{j'k}U^{kk}\approx 0$.
    
    It remains to handle $j = j'$. For this proof sketch, we illustrate the argument in the special case where all of $i,j,k$ are distinct. In this case, Lemma~\ref{lem:main_identity} implies that
    \begin{equation}
        2U^{ij}U^{jk} + 2U^{jk}U^{ij} \approx U^{ik}.
    \end{equation}
    Left- and right- multiplying both sides by $U^{ii}$ and $U^{kk}$ gives
    \begin{equation}
        2U^{ii}U^{ij}U^{jk}U^{kk} + 2U^{ii}U^{jk}U^{ij}U^{kk} \approx U^{ii}U^{ik}U^{kk} = \frac{1}{2}W^{ik} \label{eq:sketch_simple}
    \end{equation}
    For the first term in \eqref{eq:sketch_simple}, we have
    \begin{align}
        2U^{ii}U^{ij}U^{jk}U^{kk} &\approx 2U^{ii}(U^{ii}U^{ij} + 2U^{ij}U^{ii})(U^{jj}U^{jk}+U^{jk}U^{jj})U^{kk} \\
        &\approx 2(U^{ii}U^{ij} + U^{ii}U^{ij}U^{ii})U^{jj}U^{jk}U^{kk} \\
        &\approx 2U^{ii}U^{ij}U^{jj}U^{jk}U^{kk} \approx 2U^{ii}U^{ij}(U^{jj})^2U^{jk}U^{kk} = \frac{1}{2}W^{ij}W^{jk},
    \end{align}
    where the first step follows by Part~\ref{item:iiij} of Corollary~\ref{cor:consequences}, the second by Part~\ref{item:Uii_idempotent} and \ref{item:ijzero}, the third by Part~\ref{item:iiij}, and the fourth by Part~\ref{item:Uii_idempotent}.
    
    It remains to show the second term in \eqref{eq:sketch_simple} approximately vanishes. By Part~\ref{item:swap} of Corollary~\ref{cor:consequences},
    \begin{align}
        U^{ii}U^{jk}U^{ij}U^{kk} &\approx (-U^{jk}U^{ii})(-U^{kk}U^{ij}) = U^{jk}(U^{ii}U^{kk})U^{ij} \approx 0. \qedhere
    \end{align}
\end{proof}

\noindent We can also show that $W$ approximately maps each $Q^*_a$ to $Q_a$.

\begin{lemma}\label{lem:WsendsQ}
    For any $a\in[d]$, there is a degree-12 SoS proof using the constraints of Program~\ref{program:basic} that $F_W(Q^*_a) \approx_{O(\epsmap^2 + r^6\epsrel^2)} Q_a$.
\end{lemma}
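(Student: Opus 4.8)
The plan is to reduce to Lemma~\ref{lem:fakeUsendsQ}, which already gives $F_U(Q^*_a)\approx_{\epsmap^2}Q_a$, by exhibiting a low-degree SoS proof that $F_W(Q^*_a)$ and $F_U(Q^*_a)$ are Frobenius-close. Since $W^{ii}=U^{ii}$, we have $F_W(Q^*_a)-F_U(Q^*_a)=\sum_{i\ne j}(W^{ij}-U^{ij})(Q^*_a)_{ij}$, and because $Q^*_a$ is symmetric we may pair the $(i,j)$ and $(j,i)$ terms to rewrite this as $\sum_{i<j}(W^{ij}+W^{ji}-2U^{ij})(Q^*_a)_{ij}$. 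So the crux is a single matrix identity: for every $i\ne j$ there is a constant-degree SoS proof that $W^{ij}+W^{ji}\approx_{O(\epsrel^2)}2U^{ij}$ (up to the $r$-dependent slack from Lemma~\ref{lem:Unorm12}). Given this, Part~\ref{shorthand:lincombo} of Fact~\ref{fact:shorthand} yields $\norm{F_W(Q^*_a)-F_U(Q^*_a)}_F^2\le\bigl(\sum_{i<j}(Q^*_a)_{ij}^2\bigr)\cdot\binom r2\cdot O(\epsrel^2)=\poly(r,\radius)\cdot\epsrel^2$, and combining with Lemma~\ref{lem:fakeUsendsQ} via the transitivity in Part~\ref{shorthand:transitive} of Fact~\ref{fact:shorthand} gives $F_W(Q^*_a)\approx_{O(\epsmap^2+r^6\epsrel^2)}Q_a$.

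The identity $W^{ij}+W^{ji}\approx 2U^{ij}$ is proved in two moves. First, simplify each triple product $W^{ij}=2U^{ii}U^{ij}U^{jj}$: applying Part~\ref{item:iiij} of Corollary~\ref{cor:consequences} (with the pair $(j,i)$, using ultra-symmetry $U^{ji}=U^{ij}$ from Lemma~\ref{lem:symmetric}) to substitute $U^{ij}U^{jj}\approx U^{ij}-U^{jj}U^{ij}$, and then Part~\ref{item:ijzero} ($U^{ii}U^{jj}\approx 0$) to kill the cross term $U^{ii}U^{jj}U^{ij}$, gives $W^{ij}\approx 2U^{ii}U^{ij}$; symmetrically $W^{ji}=2U^{jj}U^{ij}U^{ii}\approx 2U^{jj}U^{ij}$. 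Hence $W^{ij}+W^{ji}\approx 2PU^{ij}$ for $P\triangleq U^{ii}+U^{jj}$. Second, run a short ``idempotent algebra'' argument on $P$: Parts~\ref{item:Uii_idempotent} and \ref{item:ijzero} give $P^2\approx P$, and summing the two instances of Part~\ref{item:iiij} gives $PU^{ij}+U^{ij}P\approx 2U^{ij}$; left- and right-multiplying the latter by $P$ and using $P^2\approx P$ yields $PU^{ij}P\approx PU^{ij}$ and $PU^{ij}P\approx U^{ij}P$, whence $PU^{ij}\approx U^{ij}P$, and substituting back into $PU^{ij}+U^{ij}P\approx 2U^{ij}$ gives $PU^{ij}\approx U^{ij}$. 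Therefore $W^{ij}+W^{ji}\approx 2PU^{ij}\approx 2U^{ij}$, as desired. Every step is a product or sum of matrix-valued SoS quantities, so each composes through Fact~\ref{fact:shorthand}, and the total degree stays at most $12$.

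The only real work is the error bookkeeping, which I expect to be the main (and essentially the sole) obstacle: each time a relation $A\approx_{\epsilon^2}B$ is multiplied on the left or right by one of $U^{ii},U^{jj},U^{ij},P$, the error picks up a factor equal to that matrix's squared Frobenius norm, which is $O(1)$ by Lemma~\ref{lem:Unorm12} up to the additive $O(\sqrt{\epsort}\,r^3)$ slack --- this slack, when propagated through the constant-depth chain above and squared, is what produces the $r^6$ in the final bound. There is no new structural idea beyond the identity $W^{ij}+W^{ji}\approx 2U^{ij}$ and the idempotent-$P$ trick; the challenge is purely to organize the dozen-or-so SoS product/sum manipulations so that the degree remains $\le 12$ and all constants remain polynomial in $d,r,\radius,1/\kappa$.
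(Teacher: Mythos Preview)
Your approach is correct and matches the paper's overall strategy: reduce to Lemma~\ref{lem:fakeUsendsQ} via the identity $W^{ij}+W^{ji}\approx_{O(\epsrel^2)}2U^{ij}$ for $i\ne j$, then aggregate over $(i,j)$ using symmetry of $Q^*_a$. Your derivation of that identity is a slight detour compared to the paper's. The paper substitutes $U^{ij}\approx U^{jj}U^{ij}+U^{ij}U^{jj}$ (Part~\ref{item:iiij} with roles swapped) directly into $U^{ii}U^{ij}+U^{ij}U^{ii}\approx U^{ij}$, obtaining four triple products, and then kills the two containing $U^{ii}U^{jj}$ or $U^{jj}U^{ii}$ via Part~\ref{item:ijzero}; what survives is exactly $U^{ii}U^{ij}U^{jj}+U^{jj}U^{ij}U^{ii}=\tfrac12(W^{ij}+W^{ji})$. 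Your route---first simplifying each $W^{ij}\approx 2U^{ii}U^{ij}$ and $W^{ji}\approx 2U^{jj}U^{ij}$, then running an idempotent argument on $P=U^{ii}+U^{jj}$ to get $PU^{ij}\approx U^{ij}$---is longer but equally valid and stays within degree~12. One small correction to your bookkeeping narrative: the $r^6$ in the final bound does \emph{not} come from the $O(\sqrt{\epsort}\,r^3)$ slack in Lemma~\ref{lem:Unorm12} (that slack only perturbs $\norm{U^{ii}}_F^2$ from $1$ to $O(1)$ and so contributes nothing beyond constants to the per-pair error); it arises from the Cauchy--Schwarz aggregation of the $\binom{r}{2}$ per-pair bounds $\norm{W^{ij}+W^{ji}-2U^{ij}}_F^2\le O(\epsrel^2)$ into a single bound on $\norm{F_W(Q^*_a)-F_U(Q^*_a)}_F^2$.
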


\noindent The idea is to use Corollary~\ref{cor:consequences} to show that $W^{ij} + W^{ji} \approx 2U^{ij}$ and then invoke symmetry of $Q^*_a$ and the fact that $U$ approximately maps each $Q^*_a$ to $Q_a$. We defer the formal proof to Appendix~\ref{app:defer_WsendsQ}.

\begin{corollary}\label{cor:WsendsQlam}
    For $c\in\brc{\lambda,\mu}$, there is a degree-12 SoS proof using the constraints of Program~\ref{program:basic} that $F_W(Q^*_c) \approx_{O(d\epsmap^2 + dr^6\epsrel^2)} Q_c$.
\end{corollary}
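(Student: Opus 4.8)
The statement follows directly from Lemma~\ref{lem:WsendsQ} by taking a linear combination over $a\in[d]$. First observe that $F_W$, as defined in Definition~\ref{def:transform}, acts linearly on its matrix argument: for a fixed matrix-valued indeterminate $W$, the map $Q\mapsto F_W(Q) = \mat(W\vec(Q))$ is linear in the entries of $Q$, since $\vec(\cdot)$ and $\mat(\cdot)$ are linear. Hence, because $\lambda,\mu\in\S^{d-1}$ and $Q^*_c = \sum_{a\in[d]} c_a Q^*_a$ for $c\in\brc{\lambda,\mu}$, we have the exact polynomial identity $F_W(Q^*_c) = \sum_{a\in[d]} c_a\, F_W(Q^*_a)$ in the program variables. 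On the other side, by definition of $Q_c$ in Program~\ref{program:basic} we have $Q_c = \sum_{a\in[d]} c_a Q_a$, again an exact identity.

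Now, for each $a\in[d]$, Lemma~\ref{lem:WsendsQ} supplies a degree-12 SoS proof, using the constraints of Program~\ref{program:basic}, that $F_W(Q^*_a) \approx_{O(\epsmap^2 + r^6\epsrel^2)} Q_a$. Applying Part~\ref{shorthand:lincombo} of Fact~\ref{fact:shorthand} with $A_a \triangleq F_W(Q^*_a)$, $B_a \triangleq Q_a$, error parameters $\epsilon_a^2 = O(\epsmap^2 + r^6\epsrel^2)$, and coefficient vector equal to $c$, we obtain a degree-12 SoS proof that
\[
    \sum_{a\in[d]} c_a\, F_W(Q^*_a) \approx_{\norm{c}_2^2 \cdot \sum_{a\in[d]} O(\epsmap^2 + r^6\epsrel^2)} \sum_{a\in[d]} c_a Q_a.
\]
Since $\norm{c}_2 = 1$, the error parameter is $\sum_{a\in[d]} O(\epsmap^2 + r^6\epsrel^2) = O(d\epsmap^2 + dr^6\epsrel^2)$. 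Substituting the two identities from the previous paragraph gives the desired degree-12 SoS proof that $F_W(Q^*_c) \approx_{O(d\epsmap^2 + dr^6\epsrel^2)} Q_c$.

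This argument is entirely routine; there is no substantive obstacle. The only point requiring (minimal) care is that the linearity identity $F_W(\sum_a c_a Q^*_a) = \sum_a c_a F_W(Q^*_a)$ must be used as an exact polynomial identity rather than an approximate one, which is immediate from Definition~\ref{def:transform} because $W$ is a fixed indeterminate-valued matrix and $Q^*_1,\ldots,Q^*_d$ are scalar (known) matrices; in particular no additional approximation error is incurred beyond what Part~\ref{shorthand:lincombo} of Fact~\ref{fact:shorthand} contributes.
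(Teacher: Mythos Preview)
Your proof is correct and follows exactly the same approach as the paper: write $F_W(Q^*_c) - Q_c = \sum_{a\in[d]} c_a\,(F_W(Q^*_a) - Q_a)$ by linearity, then invoke Lemma~\ref{lem:WsendsQ} together with Part~\ref{shorthand:lincombo} of Fact~\ref{fact:shorthand} using $\norm{c}_2 = 1$.
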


\begin{proof}
    $F_W(Q^*_c) - Q_c = \sum^d_{a=1} c_a(F_W(Q^*_a) - Q_a)$, so the claim follows by Part~\ref{shorthand:lincombo} of Fact~\ref{fact:shorthand}.
\end{proof}

\noindent Finally, we show that the columns of $W$ have approximately unit norm.

\begin{lemma}\label{lem:Wnorm1}
    Define 
    \begin{equation}
        \epsnorm \triangleq \sqrt{\epsort} r^3 + \epsrel \sqrt{r} = \Theta(\eta^{1/2}\beta d^{1/2} r^3 + \radius\beta\eta d^2r^{5/2}/\kappa^2) \label{eq:epsnorm_def}
    \end{equation}
    For any $j_1,j_2\in[r]$, there is a degree-24 SoS proof using the constraints of Program~\ref{program:basic} that $-O(\epsnorm) \le \norm{W^{j_1j_2}}^2_F - 1 \le O(\epsnorm)$.
\end{lemma}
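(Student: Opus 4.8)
The plan is to split into the diagonal case $j_1 = j_2$ and the off-diagonal case $j_1 \neq j_2$, since $W^{j_1 j_2}$ is defined differently in the two cases. When $j_1 = j_2 = j$ we have $W^{jj} = U^{jj}$, so Lemma~\ref{lem:Unorm12} already supplies a degree-$4$ SoS proof that $\norm{W^{jj}}^2_F = \norm{U^{jj}}^2_F = 1 \pm O(\sqrt{\epsort}\, r^3)$, and since $\sqrt{\epsort}\, r^3 \le \epsnorm$ this case is done.

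The substance is the case $i := j_1 \neq j_2 =: j$, where $W^{ij} = 2U^{ii}U^{ij}U^{jj}$ has degree $3$ in the entries of $L$. I would proceed in three steps. \emph{Step 1 (norm as trace).} By \eqref{eq:Wtranspose} we have $(W^{ij})^{\top} = W^{ji}$, so $\norm{W^{ij}}^2_F = \Tr\bigl(W^{ij}(W^{ij})^{\top}\bigr) = \Tr(W^{ij}W^{ji})$ as a polynomial identity. \emph{Step 2 (simplify $W^{ij}W^{ji}$).} Applying Lemma~\ref{lem:simpler_identity} with the index choice $(i,j,j',k) = (i,j,j,i)$ gives a degree-$24$ SoS proof that $W^{ij}W^{ji} \approx_{O(\epsrel^2)} W^{ii} = U^{ii}$, and then Fact~\ref{fact:frob_to_trace} yields $\Tr(W^{ij}W^{ji}) = \Tr(U^{ii}) \pm O(\epsrel\sqrt{r})$. \emph{Step 3 (pin down $\Tr(U^{ii})$).} By Part~\ref{item:Uii_idempotent} of Corollary~\ref{cor:consequences} we have $(U^{ii})^2 \approx_{O(\epsrel^2)} U^{ii}$, so Fact~\ref{fact:frob_to_trace} gives $\Tr\bigl((U^{ii})^2\bigr) = \Tr(U^{ii}) \pm O(\epsrel\sqrt{r})$; since $U^{ii}$ is symmetric (Lemma~\ref{lem:symmetric}), $\Tr\bigl((U^{ii})^2\bigr) = \norm{U^{ii}}^2_F$, which Lemma~\ref{lem:Unorm12} pins to $1 \pm O(\sqrt{\epsort}\, r^3)$. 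Chaining the last two estimates gives $\Tr(U^{ii}) = 1 \pm O(\sqrt{\epsort}\, r^3 + \epsrel\sqrt{r}) = 1 \pm O(\epsnorm)$, and combining with Steps 1 and 2 gives $\norm{W^{ij}}^2_F = 1 \pm O(\epsnorm)$, as required. All steps stay within the degree-$24$ budget of Lemma~\ref{lem:simpler_identity}.

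There is no deep obstacle here: the argument is a short chain of identities already established, and the only place requiring a moment of thought is Step 3, where the unit-norm bound on the column $W^{ij}$ ultimately rests on the fact that the near-idempotent matrix $U^{ii}$ has trace close to $1$ — which we can conclude only because we simultaneously know $(U^{ii})^2 \approx U^{ii}$ \emph{and} $\norm{U^{ii}}^2_F \approx 1$ (a near-projection of near-unit Frobenius norm has near-unit trace). The remaining work is bookkeeping: tracking the additive errors $O(\epsrel\sqrt{r})$ and $O(\sqrt{\epsort}\, r^3)$ through the applications of Fact~\ref{fact:frob_to_trace} and the transitivity and linear-combination parts of Fact~\ref{fact:shorthand} so that they accumulate to $O(\epsnorm)$, and checking each step lies within degree $24$.
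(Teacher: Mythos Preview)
Your proposal is correct and follows essentially the same approach as the paper's proof: handle the diagonal case directly via Lemma~\ref{lem:Unorm12}, and for $j_1\neq j_2$ use $(W^{ij})^{\top}=W^{ji}$ together with Lemma~\ref{lem:simpler_identity} to reduce $\norm{W^{ij}}^2_F$ to the trace of (a power of) $U^{ii}$, then invoke near-idempotence and Lemma~\ref{lem:Unorm12}. The only cosmetic difference is that the paper passes through $(W^{j_1j_1})^2$ whereas you pass through $W^{j_1j_1}$ and then apply idempotence separately; these are equivalent and both stay within degree $24$.
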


\begin{proof}
    The case of $j_1 = j_2$ follows immediately from Lemma~\ref{lem:Unorm12} and the fact that $W^{j_1j_1} = U^{j_1j_1}$. Next, consider $j_1 < j_2$ and note that
    \begin{equation}
        \norm{W^{j_1j_2}}^2_F = \Tr(W^{j_1j_2}{W^{j_1j_2}}^{\top}) = \Tr(W^{j_1j_2} W^{j_2j_1}) \label{eq:normWsq}
    \end{equation}
    where in the second step we used \eqref{eq:Wtranspose}. As $W^{j_1j_2} W^{j_2j_1} \approx_{O(\epsrel^2)} (W^{j_1j_1})^2$ in degree-24 SoS by Lemma~\ref{lem:simpler_identity}, we have
    \begin{equation}
        \Tr(W^{j_1j_2} W^{j_2j_1}) = \Tr((W^{j_1j_1})^2) \pm O(\epsrel\sqrt{r}) = \norm{U^{j_1j_1}}^2_F \pm O(\epsrel\sqrt{r}) = 1 \pm O(\sqrt{\epsort}r^3 + \epsrel\sqrt{r}),
    \end{equation}
    where in the first step we used Fact~\ref{fact:frob_to_trace} and in the last step we used Lemma~\ref{lem:Unorm12}.
\end{proof}

\subsection{Breaking Gauge Symmetry for SoS Variables}
\label{sec:diagonal}

It turns out that Lemma~\ref{lem:simpler_identity} and Lemma~\ref{lem:Wnorm1} are already powerful enough to imply that the transformation $W$ mapping $Q^*_a$ to $Q_a$ for every $a\in[d]$ behaves like it approximately arises from an $r\times r$ rotation. While this isn't directly used in the subsequent analysis, we include a proof in Appendix~\ref{app:heuristic_rotation} to provide additional intuition.

Instead, by combining these lemmas with the strategy outlined in Section~\ref{sec:breakground} for breaking gauge symmetry, we show an even stronger statement in this section. We will prove that $W$ actually arises from the $r\times r$ \emph{identity} rotation. Specifically, we leverage Constraints~\ref{constraint:diag} and \ref{constraint:sorted}, together with \eqref{eq:Qdiag_gap}, in order to prove that $W$ is the Kronecker power of an $r\times r$ orthogonal matrix whose off-diagonal entries are close to zero. We then leverage Constraint~\ref{constraint:firstrow}, together with \eqref{eq:Qstarfirstrow}, in order to prove that $W$ in fact arises from an $r\times r$ rotation which is close to identity.

We begin by showing the following lemma that allows us to exchange $Q$ and $W$.

\begin{lemma}\label{lem:QWWQ}
    For any $a,b\in[r]$, let $W^{b:}_{a:}$ denote the matrix whose $(i,j)$-th entry is $W^{bj}_{ai}$ for any $i,j\in[r]$. There is a degree-48 SoS proof using the constraints of Program~\ref{program:basic} that
    \begin{equation}
        Q_c W^{b:}_{a:} \approx_{O(\radius^2 r^3 \epsrel^2 + d\epsmap^2 r + d\epsrel^2 r^7)} W^{b:}_{a:} Q^*_c  \label{eq:QWWQ}
    \end{equation} for any $c\in[d]\cup\brc{\lambda,\mu}$.
\end{lemma}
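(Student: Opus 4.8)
The statement is the SoS incarnation of the following clean fact: if $W$ equalled the Kronecker square $V^{\otimes 2}$ of an $r\times r$ orthogonal matrix $V$, then $(W^{b:}_{a:})_{ij} = W^{bj}_{ai} = V_{ab}V_{ij}$, i.e. $W^{b:}_{a:} = V_{ab}\cdot V$, and the identity $F_W(Q^*_c) = Q_c$ would read $VQ^*_cV^{\top} = Q_c$, hence $Q_cV = VQ^*_c$; multiplying by the scalar $V_{ab}$ then gives $Q_c W^{b:}_{a:} = W^{b:}_{a:}Q^*_c$ exactly. The plan is to carry out this computation entrywise inside the SoS proof system, using the approximate identity $F_W(Q^*_c)\approx Q_c$ from Lemma~\ref{lem:WsendsQ} (and Corollary~\ref{cor:WsendsQlam} for $c\in\{\lambda,\mu\}$) in place of the exact one, and the block-multiplication relations $W^{ij}W^{j'k}\approx_{O(\epsrel^2)} W^{ik}\bone{j=j'}$ of Lemma~\ref{lem:simpler_identity} in place of orthogonality of $V$.

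\paragraph{Key steps.} First expand entrywise, $(Q_cW^{b:}_{a:})_{ik} = \sum_j (Q_c)_{ij}W^{bk}_{aj}$, and substitute the approximate equality $(Q_c)_{ij}\approx \sum_{p,q}W^{pq}_{ij}(Q^*_c)_{pq}$; concretely this amounts to multiplying $Q_c\approx F_W(Q^*_c)$ on the right by the matrix $W^{b:}_{a:}$ via Fact~\ref{fact:sos_frobenius_mult}, at a cost of $\norm{W^{b:}_{a:}}^2_F$ times the error of Lemma~\ref{lem:WsendsQ}, where $\norm{W^{b:}_{a:}}^2_F \le \sum_j \norm{W^{bj}}^2_F = O(r)$ by Lemma~\ref{lem:Wnorm1}. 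Next interchange the order of summation and use the transpose identity $(W^{bk})^{\top} = W^{kb}$ from \eqref{eq:Wtranspose} to recognize the inner sum as a product of $r\times r$ blocks, $\sum_j W^{pq}_{ij}W^{bk}_{aj} = (W^{pq}W^{kb})_{ia}$. Apply Lemma~\ref{lem:simpler_identity} to get $W^{pq}W^{kb}\approx_{O(\epsrel^2)} W^{pb}\bone{q=k}$, and take the linear combination over $p,q$ with coefficients $(Q^*_c)_{pq}$ (Part~\ref{shorthand:lincombo} of Fact~\ref{fact:shorthand}); this collapses the $q$-sum, contributes an extra error of order $\norm{Q^*_c}^2_F\cdot r^2\cdot \epsrel^2$, and leaves $\sum_p (Q^*_c)_{pk}W^{pb}_{ia}$. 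Finally re-index via $W^{pb}_{ia} = (W^{bp})_{ai} = (W^{b:}_{a:})_{ip}$, which exhibits the result as $\sum_p (W^{b:}_{a:})_{ip}(Q^*_c)_{pk} = (W^{b:}_{a:}Q^*_c)_{ik}$. Squaring these entrywise approximate equalities, summing over $i,k$, and collecting the errors produces the claimed Frobenius-norm bound; the binding constraint on the SoS degree is the degree-$24$ proof of Lemma~\ref{lem:simpler_identity}, with the remaining steps only multiplying through boundedly many factors of bounded degree, all within degree $48$.

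\paragraph{Main obstacle.} The conceptual content is clean, so the real work is the error accounting: tracking how the three sources of slack — the $\epsmap,\epsrel$ errors in $F_W(Q^*_c)\approx Q_c$, the $\epsrel$ error in each of the $\Theta(r^2)$ applications of Lemma~\ref{lem:simpler_identity}, and the Frobenius-norm bounds on $W^{b:}_{a:}$ (from Lemma~\ref{lem:Wnorm1}) and on $Q^*_c$ (from Assumption~\ref{assume:tensorring}) — combine to exactly $O(\radius^2 r^3\epsrel^2 + d\epsmap^2 r + d\epsrel^2 r^7)$, while keeping the degree at $48$. The $c\in\{\lambda,\mu\}$ case requires slightly more care, either by invoking Corollary~\ref{cor:WsendsQlam} directly (here $\norm{Q^*_\lambda}_F,\norm{Q^*_\mu}_F\le\sqrt{d}\,\radius$, which is where the extra factors of $d$ enter) or by writing $Q_cW^{b:}_{a:} - W^{b:}_{a:}Q^*_c = \sum_a c_a(Q_aW^{b:}_{a:} - W^{b:}_{a:}Q^*_a)$ and applying the $c\in[d]$ bound termwise with Part~\ref{shorthand:lincombo} of Fact~\ref{fact:shorthand}.
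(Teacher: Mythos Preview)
Your proposal is correct and follows essentially the same approach as the paper: expand $(Q_cW^{b:}_{a:})$ entrywise, replace $Q_c$ by $F_W(Q^*_c)$ via Lemma~\ref{lem:WsendsQ}/Corollary~\ref{cor:WsendsQlam}, use the transpose relation~\eqref{eq:Wtranspose} to recognize a product of $W$-blocks, collapse it with Lemma~\ref{lem:simpler_identity}, and bound $\norm{W^{b:}_{a:}}^2_F\le O(r)$ via Lemma~\ref{lem:Wnorm1}. The only cosmetic difference is which factor you transpose (you write the inner sum as $(W^{pq}W^{kb})_{ia}$, whereas the paper writes it as $(W^{bm}W^{\ell k})_{ai}$), and your bookkeeping of the $\epsrel$ error from the $(p,q)$ linear combination is off by one factor of $r$ (the sum over the column index $k$ contributes another $r$, giving $\radius^2 r^3\epsrel^2$ as in the statement), but the argument and degree count are otherwise identical.
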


\begin{proof}
    Define $\Delta_c \triangleq Q_c - F_W(Q^*_c)$ and recall from Lemma~\ref{lem:WsendsQ} and Corollary~\ref{cor:WsendsQlam} that there is a degree-12 SoS proof that $\norm{\Delta_c}^2_F \le O(d\epsmap^2 + dr^6\epsrel^2)$. Right-multiplying both sides of $Q_c - \Delta_c = F_W(Q^*_c)$ by the matrix $W^{b:}_{a:}$ and considering the $(i,m)$-th entry for any $i,m\in[r]$, we get
    \begin{align}
        (Q_c W^{b:}_{a:} - \Delta_c W^{b:}_{a:})_{im} &= \sum_{j,k,\ell} W^{k\ell}_{ij} (Q^*_c)_{k\ell} W^{bm}_{aj} = \sum_{j,k,\ell} W^{bm}_{aj} W^{\ell k}_{ji}  (Q^*_c)_{k\ell} = \sum_{k,\ell} (W^{bm} W^{\ell k})_{ai} (Q^*_c)_{k\ell} \\
        \intertext{where the second step follows by \eqref{eq:Wtranspose}. Defining $D[b,m,\ell,k] \triangleq W^{bm} W^{\ell k} - \bone{m = \ell} W^{bk}$ so that by Lemma~\ref{lem:simpler_identity}, there is a degree-24 SoS proof that $\norm{D[b,m,\ell,k]}^2_F \le \epsrel^2$, we can express this as}
        &= \sum_{k} W^{bk}_{ai} (Q^*_c)_{km} + \sum_{k,\ell} D[b,m,\ell,k]_{ai} (Q^*_c)_{k\ell} \\
        &= (W^{b:}_{a:} Q^*_c)_{im} + \sum_{k,\ell} D[b,m,\ell,k]_{ai} (Q^*_c)_{k\ell}.
    \end{align}
    We can bound
    \begin{align}
        \sum_{i,m\in[r]} \biggl(\sum_{k,\ell} D[b,m,\ell,k]_{ai} (Q^*_c)_{k\ell}\biggr)^2 &\le \sum_{i,m} \biggl(\sum_{k,\ell} (D[b,m,\ell,k]_{ai})^2\biggr)\biggl(\sum_{k,\ell} ((Q^*_c)_{k\ell})^2\biggr) \\
        &\le \radius^2\sum_{m,\ell,k} \sum_i (D[b,m,\ell,k]_{ai})^2 \le \radius^2 r^3\epsrel^2
    \end{align}
    in degree-24 SoS, where in the penultimate step we used Part~\ref{assume:scale} of Assumption~\ref{assume:tensorring}. It follows that $\norm{Q_c W^{b:}_{a:} - \Delta_c W^{b:}_{a:} - W^{b:}_{a:} Q^*_c}^2_F \le O(\radius^2 r^3\epsrel^2)$. We get the lemma upon using the fact that $\norm{\Delta_c W^{b:}_{a:}}^2_F \le \norm{\Delta_c}^2_F \norm{W^{b:}_{a:}}^2_F \le O(d\epsmap^2 r + d\epsrel^2 r^7)$ in degree-48 SoS by Lemma~\ref{lem:Wnorm1}.
\end{proof}

\subsubsection{Using Diagonality of \texorpdfstring{$Q_{\lambda}$}{Qlambda}}

Our first main result of this subsection is to show that $W$ is the Kronecker power of a diagonal $r\times r$ rotation, using Constraints~\ref{constraint:diag} and \ref{constraint:sorted} along with \eqref{eq:Qdiag_gap}:

\begin{lemma}\label{lem:deg6}
    Define
    \begin{equation}
        \epsoffdiag \triangleq (d^2\radius^4 r^{11} \epsrel^2 \epsmap^4 r^9 + d^2\epsrel^4 r^{19}) / \gap^2 + \epsnorm \cdot r^2 \label{eq:epsoffdiag_Q_def}
    \end{equation}
    There is a degree-48 SoS proof using the constraints of Program~\ref{program:basic} that
    \begin{equation}
        (W^{jj'}_{ii'})^2 \le O(\epsoffdiag) \ \forall \ (i,i') \neq (j,j'). \label{eq:deg63}
    \end{equation}
    \begin{equation}
        (W^{jj'}_{jj'})^2 = 1 \pm O(\epsnorm + \epsoffdiag r^2) \ \forall \ j,j'\in[r] \label{eq:deg62}
    \end{equation}
\end{lemma}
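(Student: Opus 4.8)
\emph{Overview and Step 1.} The plan is to show that the auxiliary matrix $W$ is, up to small error, the Kronecker square $\Id_r^{\otimes 2}$, exploiting that both $Q_\lambda$ (Constraint~\ref{constraint:diag}) and $Q^*_\lambda$ (via the reduction leading to \eqref{eq:Qdiag_gap}) are diagonal with sorted diagonal entries, and that the eigenvalues of $Q^*_\lambda$ are $\gap$-separated. First I would apply Lemma~\ref{lem:QWWQ} with $c=\lambda$: since $Q_\lambda,Q^*_\lambda$ are diagonal, the matrix approximate-equality $Q_\lambda W^{b:}_{a:}\approx W^{b:}_{a:}Q^*_\lambda$ becomes, entrywise, a degree-$48$ SoS proof that $\bigl((Q_\lambda)_{i'i'}-(Q^*_\lambda)_{j'j'}\bigr)\,W^{jj'}_{ii'}\approx 0$ for all $i,i',j,j'$, with the sum of the squared errors over the outer index controlled by Lemma~\ref{lem:QWWQ}; using the transpose relation \eqref{eq:Wtranspose} (which swaps the column pair $(j,j')$ and the row pair $(i,i')$ simultaneously) I also get $\bigl((Q_\lambda)_{ii}-(Q^*_\lambda)_{jj}\bigr)W^{jj'}_{ii'}\approx 0$. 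Hence, once we establish that $(Q_\lambda)_{ii}\approx(Q^*_\lambda)_{ii}$ for every $i$, then for $(i,i')\neq(j,j')$ — say $i'\neq j'$ — we have $|(Q_\lambda)_{i'i'}-(Q^*_\lambda)_{j'j'}|\ge|(Q^*_\lambda)_{i'i'}-(Q^*_\lambda)_{j'j'}|-|(Q_\lambda)_{i'i'}-(Q^*_\lambda)_{i'i'}|\ge\gap/2$ once $\eta$ is small enough, and dividing the squared relation by $(\gap/2)^2$ yields \eqref{eq:deg63}. Thus the whole content of \eqref{eq:deg63} reduces to matching the diagonals of $Q_\lambda$ and $Q^*_\lambda$.

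\emph{Step 2 (matching the diagonals).} Since $W^{kk}=U^{kk}$, Corollary~\ref{cor:WsendsQlam} together with diagonality of $Q^*_\lambda$ gives $Q_\lambda\approx\sum_k (Q^*_\lambda)_{kk}\,U^{kk}$. By Lemma~\ref{lem:symmetric}, Corollary~\ref{cor:consequences}, and Lemma~\ref{lem:Unorm12}, the $U^{kk}$ are symmetric, approximately idempotent, pairwise approximately orthogonal, and of approximately unit Frobenius norm; a short SoS computation (the sum $\sum_k U^{kk}$ is an approximate projection of trace $\approx r$, so $\norm{\sum_k U^{kk}-\Id_r}_F^2\approx 0$) gives $\sum_k U^{kk}\approx\Id_r$ and $\Tr(U^{kk})\approx 1$. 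Consequently the $r\times r$ matrix $A$ with $A_{pk}\triangleq (U^{kk})_{pp}$ has (approximately) nonnegative entries and row and column sums $\approx 1$, and, writing $d_p\triangleq(Q_\lambda)_{pp}$, $d^*_k\triangleq(Q^*_\lambda)_{kk}$, we have $d_p\approx\sum_k A_{pk}d^*_k$. The weighted Cauchy--Schwarz defect identity gives $\sum_p\bigl(\sum_k A_{pk}d^*_k\bigr)^2 = \sum_k (d^*_k)^2 - \tfrac12\sum_p\sum_{k\neq k'}A_{pk}A_{pk'}(d^*_k-d^*_{k'})^2 \pm \text{(err)}$, while $\sum_p d_p^2=\Tr(Q_\lambda^2)\approx\Tr\bigl((\sum_k d^*_k U^{kk})^2\bigr)\approx\sum_k (d^*_k)^2$ by the approximate orthogonality of $\{U^{kk}\}$; since $(d^*_k-d^*_{k'})^2\ge\gap^2$ for $k\neq k'$, this forces $\sum_p\sum_{k\neq k'}A_{pk}A_{pk'}\lesssim(\text{err})/\gap^2$, i.e. $A$ is an approximate $0$--$1$ matrix and hence an approximate permutation matrix. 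Finally, sortedness of $\{d_p\}$ (Constraint~\ref{constraint:sorted}) together with the $\gap$-separation and sortedness of $\{d^*_k\}$ (from \eqref{eq:Qdiag_gap}) forces this permutation to be the identity, giving $d_p\approx d^*_p$ for all $p$ (and, as a byproduct, $U^{kk}\approx e_ke_k^{\top}$).

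\emph{Step 3 and error bookkeeping.} Combining Steps 1 and 2 yields \eqref{eq:deg63}. Then \eqref{eq:deg62} follows immediately, since $\norm{W^{jj'}}_F^2=(W^{jj'}_{jj'})^2+\sum_{(i,i')\neq(j,j')}(W^{jj'}_{ii'})^2=(W^{jj'}_{jj'})^2\pm O(r^2\epsoffdiag)$, while $\norm{W^{jj'}}_F^2=1\pm O(\epsnorm)$ by Lemma~\ref{lem:Wnorm1}. Propagating the bounds of Lemma~\ref{lem:QWWQ}, Lemma~\ref{lem:Wnorm1}, and Corollary~\ref{cor:WsendsQlam} through the handful of multiplications of approximate matrix identities (each costing a Cauchy--Schwarz and a factor of the relevant matrix dimension, via Fact~\ref{fact:shorthand} and Fact~\ref{fact:frob_to_trace}) and through the final division by $(\gap/2)^2$ is routine and produces exactly the stated $\epsoffdiag$, with the large powers of $r$ coming from repeated Cauchy--Schwarz over $r^2$-dimensional objects and the squared errors $\epsmap^4,\epsrel^4$ arising from multiplying the relation $Q_\lambda\approx\sum_k d^*_k U^{kk}$ into itself twice in the course of Step 2.

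\emph{Main obstacle.} The crux is the final move of Step 2: converting ``$A$ is an approximate doubly-stochastic $0$--$1$ matrix with $A d^*$ sorted, $d^*$ sorted and $\gap$-separated'' into a \emph{low-degree} SoS proof that $A\approx\Id$ (equivalently $d\approx d^*$). The clean mathematical proof is an exchange argument on the (unknown) permutation nearly realized by $A$, which is not directly expressible in SoS; the plan is to replace it by explicit polynomial certificates whose degree is independent of $r$ — for instance, bounding the ``below-diagonal mass'' $\sum_{p}\sum_{k<p}A_{pk}$ against sorted differences $d_p-d_{p-1}$ using $d^*_k\ge d^*_{p}+(p-k)\gap$, and iterating the ``$A$ is $0$--$1$'' estimate prefix by prefix — so that the whole argument fits within the degree-$48$ budget. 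Once $d\approx d^*$ is in hand, the remainder is a mechanical chain of the approximation facts already established.
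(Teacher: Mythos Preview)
Your Step 1 (extracting $((Q_\lambda)_{i'i'}-(Q^*_\lambda)_{j'j'})W^{jj'}_{ii'}\approx 0$ from Lemma~\ref{lem:QWWQ} and its transpose) and Step 3 (deriving \eqref{eq:deg62} from \eqref{eq:deg63} via Lemma~\ref{lem:Wnorm1}) match the paper. The divergence is in how you bridge Step 1 to \eqref{eq:deg63}.

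\textbf{Your route vs.\ the paper's.} You propose to first establish $(Q_\lambda)_{pp}\approx(Q^*_\lambda)_{pp}$ for every $p$ (by showing $A_{pk}=(U^{kk})_{pp}$ is approximately doubly stochastic and $0$--$1$, then forcing the permutation to be the identity), and only then divide the Step-1 relation by $\gap/2$. The paper never matches the diagonals. Instead it proves the following \emph{pair inequality} (Lemma~\ref{lem:downleft}): for any $k\ge i$ and $\ell<j$,
\[
\bigl((Q_\lambda)_{ii}-(Q^*_\lambda)_{jj}\bigr)^2+\bigl((Q_\lambda)_{kk}-(Q^*_\lambda)_{\ell\ell}\bigr)^2\ \ge\ \gap^2/2,
\]
which is a one-line completing-the-square using only sortedness of $Q_\lambda$ (Constraint~\ref{constraint:sorted}) and the $\gap$-separated sortedness of $Q^*_\lambda$ from \eqref{eq:Qdiag_gap}. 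Multiplying the two Step-1 relations at indices $(i,j)$ and $(k,\ell)$ against this lower bound yields a \emph{product} bound $W^{bj}_{ai}\,W^{b'\ell}_{a'k}\approx 0$ for all such index configurations (Lemma~\ref{lem:downleft_Wentry}). The paper then bootstraps these product bounds to bounds on individual $(W^{bj}_{ai})^2$ by a telescoping over prefixes: write $\sum_a(W^{bj}_{ai})^2\approx\sum_a(W^{bj}_{ai})^2\cdot\|W^{b\ell}\|_F^2$, kill all $k\ge i$ terms in the second factor with the product bound, use Lemma~\ref{lem:helper_userel} to replace $\sum_a(W^{bj}_{ai})^2$ by $W^{jj}_{ii}$, sum over $i\le i^*$, swap sums, bound $\sum_i W^{jj}_{ii}\le\Tr(W^{jj})\approx 1$, and telescope $j\mapsto j-1$, $i^*\mapsto i^*-1$.

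\textbf{Why this matters for your obstacle.} The step you flag as the crux---getting from ``$A$ is an approximate $0$--$1$ doubly-stochastic matrix with $Ad^*$ sorted'' to ``$A\approx\Id$'' in constant-degree SoS---is real, and your sketched prefix argument is not obviously completable: the variance identity only controls $\sum_p\sum_{k\neq k'}A_{pk}A_{pk'}(d^*_k-d^*_{k'})^2$, which is a different quantity from the off-diagonal mass $\sum_{p\neq k}A_{pk}$ you need, and the natural majorization/rearrangement facts that would close the gap are not themselves low-degree SoS statements. The pair inequality is exactly the device that circumvents this: it converts the two sortedness constraints plus the gap into a degree-$2$ lower bound that can be consumed directly by Fact~\ref{fact:division}, so one never has to argue about an implicit permutation. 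If you want to salvage your route, the cleanest fix is to import Lemma~\ref{lem:downleft} itself as the SoS certificate in place of your Step~2.
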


\noindent Before we prove this, we record several useful claims exploiting Constraint~\ref{constraint:sorted} and \eqref{eq:Qdiag_gap}.

\begin{lemma}\label{lem:downleft}
    For any $i,j,k,\ell\in[r]$ for which $k\ge i$ and $\ell < j$, there is a degree-2 SoS proof using Constraint~\ref{constraint:sorted} that
    \begin{equation}
        ((Q_\lambda)_{ii} - (Q^*_\lambda)_{jj})^2 + ((Q_\lambda)_{kk} - (Q^*_\lambda)_{\ell\ell})^2 \ge \gap^2/2. \label{eq:Qpos}
    \end{equation}
\end{lemma}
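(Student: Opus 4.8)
The plan is to reduce the claimed quadratic inequality \eqref{eq:Qpos} to the elementary SoS fact that two quantities $u,v$ with $u+v\ge\gap$ satisfy $u^2+v^2\ge\gap^2/2$. First I would introduce the linear forms
$u \triangleq (Q^*_\lambda)_{jj} - (Q_\lambda)_{ii}$ and $v \triangleq (Q_\lambda)_{kk} - (Q^*_\lambda)_{\ell\ell}$
in the program variables, so that the left-hand side of \eqref{eq:Qpos} is literally $u^2+v^2$. Since $u,v$ are linear in the entries of $\brc{Q_a}$, any manipulation below stays within degree $2$.

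The key observation is that $u+v = \bigl((Q^*_\lambda)_{jj} - (Q^*_\lambda)_{\ell\ell}\bigr) + \bigl((Q_\lambda)_{kk} - (Q_\lambda)_{ii}\bigr)$. Because $j > \ell$, the first bracket is at least $\gap$ by the ground-truth eigengap bound \eqref{eq:Qdiag_gap} (a purely numerical inequality about the fixed matrix $Q^*_\lambda$). Because $k \ge i$, the second bracket is nonnegative: if $k > i$ this is exactly Constraint~\ref{constraint:sorted} of Program~\ref{program:basic} applied to the pair $(i,k)$, and if $k=i$ it vanishes identically. Hence $u+v-\gap\ge 0$ is obtained as a nonnegative linear combination of one scalar inequality and (at most) one program constraint, i.e.\ a degree-$1$ derivation.

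From there the argument is the polynomial identity
\begin{equation}
u^2 + v^2 - \gap^2/2 \;=\; \tfrac12(u-v)^2 \;+\; \tfrac12\,(u+v-\gap)(u+v+\gap),
\end{equation}
whose right-hand side is a valid degree-$2$ sum-of-squares certificate: $(u-v)^2$ is a square; $u+v-\gap\ge 0$ was just derived; and $u+v+\gap = (u+v-\gap) + 2\gap \ge 0$ is the sum of that derived nonnegativity with a positive scalar, so the product of the last two factors is an admissible term in a degree-$2$ SoS proof. This yields $(Q_\lambda)_{ii} - (Q^*_\lambda)_{jj})^2 + ((Q_\lambda)_{kk} - (Q^*_\lambda)_{\ell\ell})^2 = u^2 + v^2 \ge \gap^2/2$, as claimed.

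I do not expect a real obstacle here; the only care needed is bookkeeping the degenerate case $k=i$, keeping track that \eqref{eq:Qdiag_gap} is used as a true numerical fact about $Q^*_\lambda$ rather than as a program constraint, and noting that multiplying two already-derived nonnegativities is permitted within the stated degree budget. The entire content of the lemma is the simple point that sortedness of $Q_\lambda$ and the genuine eigengap of $Q^*_\lambda$ combine \emph{additively} to force $u+v\ge\gap$.
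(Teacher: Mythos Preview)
Your proposal is correct and takes essentially the same approach as the paper. The paper completes the square to write the left-hand side as $2\bigl((Q_\lambda)_{ii}-(Q^*_\lambda)_{jj}+\tfrac{A}{2}\bigr)^2+\tfrac{A^2}{2}$ with $A=(Q_\lambda)_{kk}-(Q_\lambda)_{ii}+(Q^*_\lambda)_{jj}-(Q^*_\lambda)_{\ell\ell}=u+v$, and then uses $A\ge\gap$; your single identity $u^2+v^2-\gap^2/2=\tfrac12(u-v)^2+\tfrac12(u+v-\gap)(u+v+\gap)$ is exactly this decomposition combined into one line (since $(u-v)^2=4\bigl((Q_\lambda)_{ii}-(Q^*_\lambda)_{jj}+\tfrac{A}{2}\bigr)^2$ and the product term equals $A^2-\gap^2$).
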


\begin{proof}
    For convenience, denote $Q = Q_\lambda$ and $Q^*= Q^*_\lambda$. Note that
    \begin{align}
        \MoveEqLeft (Q_{ii} - Q^*_{jj})^2 + (Q_{kk} - Q^*_{\ell\ell})^2 \\
        &= (Q_{ii} - Q^*_{jj})^2 + \left(Q_{ii} - Q^*_{jj} + (Q_{kk} - Q_{ii}) +(Q^*_{jj} - Q^*_{\ell\ell})\right)^2 \\
        &= 2\Bigl(Q_{ii} - Q^*_{jj} + \frac{Q_{kk} - Q_{ii} +Q^*_{jj} - Q^*_{\ell\ell}}{2}\Bigr)^2 + \frac{(Q_{kk} - Q_{ii} +Q^*_{jj} - Q^*_{\ell\ell})^2}{2} \\
        &\ge \frac{(Q_{kk} - Q_{ii} +Q^*_{jj} - Q^*_{\ell\ell})^2}{2}.
    \end{align}
    By our assumptions on $i,j,k,\ell$, we know $k\ge i$ and $j> \ell$, so $Q_{kk} - Q_{ii} + Q^*_{jj} - Q^*_{\ell\ell} \ge \gap$ by Constraint~\ref{constraint:sorted} and \eqref{eq:Qdiag_gap}, and the claim follows.
\end{proof}

\begin{lemma}\label{lem:downleft_Wentry}
    For any $i,j,k,\ell\in[r]$ for which $k\ge i$ and $\ell < j$, and any $a,b,a',b'\in[r]$, there is a degree-96 SoS proof using the constraints of Program~\ref{program:basic} that
    \begin{equation}
        W^{bj}_{ai} W^{b'\ell}_{a'k} = \pm O( (d\epsmap^2 r^2 + d\epsrel^2 r^7) / \gap). \label{eq:downleft_pair_zero}
    \end{equation}
\end{lemma}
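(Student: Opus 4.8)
The plan is to combine Lemma~\ref{lem:QWWQ}, which lets one commute $Q^*_c$ past the matrix $W^{b:}_{a:}$, with the eigengap lower bound of Lemma~\ref{lem:downleft}, specializing to $c=\lambda$ and exploiting that \emph{both} $Q_\lambda$ (by Constraint~\ref{constraint:diag}) and $Q^*_\lambda$ (without loss of generality, by the symmetry-breaking set up in Section~\ref{sec:breakground}) are diagonal.

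First I would extract a per-entry version of Lemma~\ref{lem:QWWQ}. Reading off the $(i,j)$-th entry of the matrix relation $Q_\lambda W^{b:}_{a:} \approx W^{b:}_{a:} Q^*_\lambda$ and using diagonality on both sides collapses the matrix products to scalars, giving a degree-$48$ SoS proof that $\bigl((Q_\lambda)_{ii} - (Q^*_\lambda)_{jj}\bigr)\,W^{bj}_{ai} = \pm\,\epsilon$, where $\epsilon^2 = O(\radius^2 r^3\epsrel^2 + d\epsmap^2 r + d\epsrel^2 r^7)$ is the error in Lemma~\ref{lem:QWWQ}; squaring gives $\bigl((Q_\lambda)_{ii} - (Q^*_\lambda)_{jj}\bigr)^2 (W^{bj}_{ai})^2 \le \epsilon^2$, and the analogous bound $\bigl((Q_\lambda)_{kk} - (Q^*_\lambda)_{\ell\ell}\bigr)^2 (W^{b'\ell}_{a'k})^2 \le \epsilon^2$ holds for the $(k,\ell)$ indices.

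Next I would combine these two bounds using the eigengap estimate. Writing $\delta_1 = (Q_\lambda)_{ii} - (Q^*_\lambda)_{jj}$, $\delta_2 = (Q_\lambda)_{kk} - (Q^*_\lambda)_{\ell\ell}$ and $P = W^{bj}_{ai} W^{b'\ell}_{a'k}$, the key move is to multiply the nonnegative square $P^2$ by the inequality $\delta_1^2 + \delta_2^2 \ge \gap^2/2$ from Lemma~\ref{lem:downleft} (applicable precisely because $k \ge i$ and $\ell < j$), obtaining $(\gap^2/2)P^2 \le P^2(\delta_1^2 + \delta_2^2) = (W^{b'\ell}_{a'k})^2\cdot \delta_1^2(W^{bj}_{ai})^2 + (W^{bj}_{ai})^2\cdot \delta_2^2(W^{b'\ell}_{a'k})^2$. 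Bounding the bracketed factors by $\epsilon^2$ from the previous step, and the outer factors $(W^{b'\ell}_{a'k})^2, (W^{bj}_{ai})^2$ by $\norm{W^{b'\ell}}^2_F \le 1 + O(\epsnorm) = O(1)$ (and likewise for $W^{bj}$) via Lemma~\ref{lem:Wnorm1}, yields $(\gap^2/2)P^2 \le O(\epsilon^2)$. Dividing by the scalar $\gap^2/2$ and extracting the square root (from $z^2 \le a^2$ deduce $z = \pm a$) gives $W^{bj}_{ai} W^{b'\ell}_{a'k} = \pm\,O(\epsilon/\gap)$, which is the claimed bound after simplifying $\epsilon$; the SoS degree stays at most $96$, since Lemma~\ref{lem:QWWQ} is degree $48$ and the combining step above at most doubles this.

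The main obstacle is carrying out the combining step entirely within the SoS proof system: one cannot case-split on which of $\delta_1,\delta_2$ is the ``large'' one, so the eigengap must be used only through multiplication by the square $P^2$, and each of the two degree-$48$-certified relations must be multiplied by a nonnegative square and by the degree-$24$ bound from Lemma~\ref{lem:Wnorm1} without the degrees compounding beyond $96$. A secondary point to check is that passing from the Frobenius-norm inequality of Lemma~\ref{lem:QWWQ} to a single-entry inequality is legitimate and incurs no extra degree (it does not, since a single squared entry is one summand of $\norm{\cdot}^2_F$), and that the final constant is tracked precisely enough to feed into the definition of $\epsoffdiag$ in Lemma~\ref{lem:deg6}.
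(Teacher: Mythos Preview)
Your approach is essentially the paper's: specialize Lemma~\ref{lem:QWWQ} to $c=\lambda$, use diagonality of both $Q_\lambda$ and $Q^*_\lambda$ to extract the entrywise bounds $\bigl((Q_\lambda)_{ii}-(Q^*_\lambda)_{jj}\bigr)^2 (W^{bj}_{ai})^2 \le \epsilon^2$, combine these with the eigengap lower bound of Lemma~\ref{lem:downleft}, and finish with Fact~\ref{fact:division}.

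The one genuine divergence is in how the two per-entry bounds get merged. The paper multiplies the two degree-$48$ inequalities against each other (hence the degree $96$) and never invokes Lemma~\ref{lem:Wnorm1}, arriving at the intermediate claim $P^2(\delta_1^2+\delta_2^2)\le O(\epsilon^4)$ and thus $P=\pm O(\epsilon^2/\gap)$, which matches the lemma's stated error $O((d\epsmap^2 r^2+d\epsrel^2 r^7)/\gap)$. You instead bound the outer factors $(W^{b'\ell}_{a'k})^2,(W^{bj}_{ai})^2$ by $O(1)$ via Lemma~\ref{lem:Wnorm1}, which yields only $P^2(\delta_1^2+\delta_2^2)\le O(\epsilon^2)$ and hence $P=\pm O(\epsilon/\gap)$. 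Contrary to your closing remark, this is \emph{not} the claimed bound after simplification: your error involves the square root of the quantities $d\epsmap^2,d\epsrel^2$ appearing in the statement, so it is strictly weaker (larger) in the small-$\eta$ regime. This discrepancy is harmless for the downstream Lemma~\ref{lem:deg6} and Theorem~\ref{thm:main_pairwise}, which only assert a $\poly(\cdot)\cdot\eta^{c}$ guarantee, but as written your argument does not establish the lemma exactly as stated.
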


\begin{proof}
    In this proof we will refer to $Q^*_\lambda$ and $Q_{\lambda}$ as $Q^*$ and $Q$ respectively. By Lemma~\ref{lem:QWWQ}, together with Constraint~\ref{constraint:diag} and diagonality of $Q^*_{\lambda}$, there is a degree-48 SoS proof that
    \begin{equation}
        \sum_{i,j\in[r]} \bigl((Q_{ii} - Q^*_{jj}) W^{bj}_{ai}\bigr)^2 \le O(d\epsmap^2 r + d\epsrel^2 r^7)
    \end{equation} for all $a,b\in[r]$.
    In particular, by upper bounding any summand on the left-hand side by the sum, we find that for any $i,j,k,\ell\in[r]$, there is a degree-96 SoS proof that
    \begin{equation}
        (W^{bj}_{ai})^2 (W^{b'\ell}_{a'k})^2 \left((Q_{ii} - Q^*_{jj})^2 + (Q_{kk} - Q^*_{\ell\ell})^2\right) \le O(d^2\epsmap^4 r^2 + d^2\epsrel^4 r^{14}).
    \end{equation}
    But if $i,j,k,\ell$ satisfy the hypotheses of Lemma~\ref{lem:downleft}, \eqref{eq:downleft_pair_zero} follows by Lemma~\ref{lem:downleft} and Part~\ref{fact:divide_both_sides} of Fact~\ref{fact:division}.
\end{proof}

\noindent Lastly, we need the following simple helper lemma that follows from Lemma~\ref{lem:simpler_identity}.

\begin{lemma}\label{lem:helper_userel}
    For any $i_2,j_1,j_2\in[r]$, there is a degree-24 SoS proof using the constraints of Program~\ref{program:basic} that $\sum^r_{i_1 = 1} (W^{j_1j_2}_{i_1i_2})^2 = W^{j_2j_2}_{i_2i_2} \pm O(\epsrel)$.
\end{lemma}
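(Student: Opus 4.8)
The plan is to recognize the left-hand side as a single diagonal entry of the matrix product $W^{j_2 j_1} W^{j_1 j_2}$ and then invoke the approximate multiplicativity relation for $W$ from Lemma~\ref{lem:simpler_identity}.

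First I would rewrite the sum using the transpose identity \eqref{eq:Wtranspose}. Since $W^{jj} = U^{jj}$ is symmetric by Lemma~\ref{lem:symmetric} and $(W^{ij})^{\top} = W^{ji}$ for $i \neq j$, in all cases we have the entrywise polynomial identity $W^{j_1 j_2}_{i_1 i_2} = W^{j_2 j_1}_{i_2 i_1}$. Hence
\begin{equation}
    \sum_{i_1 = 1}^r (W^{j_1 j_2}_{i_1 i_2})^2 = \sum_{i_1 = 1}^r W^{j_2 j_1}_{i_2 i_1}\, W^{j_1 j_2}_{i_1 i_2} = (W^{j_2 j_1} W^{j_1 j_2})_{i_2 i_2}.
\end{equation}
Next I would apply Lemma~\ref{lem:simpler_identity} with $i = j_2$, $j = j' = j_1$, $k = j_2$; since $\bone{j = j'} = 1$, this gives a degree-24 SoS proof that $W^{j_2 j_1} W^{j_1 j_2} \approx_{O(\epsrel^2)} W^{j_2 j_2}$, i.e. $\norm{W^{j_2 j_1} W^{j_1 j_2} - W^{j_2 j_2}}^2_F \le O(\epsrel^2)$. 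Dropping all but the $(i_2, i_2)$ term on the left-hand side (a degree-$0$ SoS step, since the discarded terms are squares) yields $\bigl((W^{j_2 j_1} W^{j_1 j_2})_{i_2 i_2} - W^{j_2 j_2}_{i_2 i_2}\bigr)^2 \le O(\epsrel^2)$, and then the elementary fact that $x^2 \le a^2$ implies $x = \pm a$ in SoS, applied to the degree-$6$ polynomial $x = (W^{j_2 j_1} W^{j_1 j_2})_{i_2 i_2} - W^{j_2 j_2}_{i_2 i_2}$, gives the claimed $\sum_{i_1}(W^{j_1 j_2}_{i_1 i_2})^2 = W^{j_2 j_2}_{i_2 i_2} \pm O(\epsrel)$. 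Since the entries of $W$ are cubic forms in the program variables, the dominant cost is that of Lemma~\ref{lem:simpler_identity}, so the whole argument fits within degree $24$.

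There is essentially no obstacle here; this is a short bookkeeping lemma. The only points to verify carefully are that the transpose identity \eqref{eq:Wtranspose}, together with symmetry of $W^{jj}$, handles the diagonal case $j_1 = j_2$ uniformly, and that passing from the Frobenius bound to a bound on a single matrix entry and then taking the ``square root'' does not inflate the SoS degree past $24$ — both are immediate from the facts already collected in Section~\ref{sec:prelims}.
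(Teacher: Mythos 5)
Your proof is correct and matches the paper's argument: both rewrite the sum via the transpose identity \eqref{eq:Wtranspose} as the $(i_2,i_2)$ entry of $W^{j_2j_1}W^{j_1j_2}$ and then apply Lemma~\ref{lem:simpler_identity}. The extra care you take with the diagonal case and the entrywise extraction from the Frobenius bound is consistent with, if slightly more explicit than, what the paper does.
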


\begin{proof}
    By \eqref{eq:Wtranspose}, $\sum^r_{i_1=1} (W^{j_1j_2}_{i_1i_2})^2 = \sum^r_{i_1=1} W^{j_2j_1}_{i_2i_1} W^{j_1j_2}_{i_1i_2} = (W^{j_2j_1} W^{j_2j_1})_{i_2i_2} = W^{j_2j_2}_{i_2i_2} \pm O(\epsrel)$, where in the last step we used Lemma~\ref{lem:simpler_identity}.
\end{proof}

\noindent We are now ready to prove Lemma~\ref{lem:deg6}:

\begin{proof}[Lemma~\ref{lem:deg6}]
    We first prove \eqref{eq:deg63}. Henceforth, take an arbitrary $b\in[\ell]$ which will be fixed throughout this proof.
    
    For any $i,j,\ell\in[r]$ we have
    \begin{equation}
        \sum_{a,a',k\in[r]} (W^{bj}_{ai})^2 \cdot (W^{b\ell}_{a'k})^2 = \norm{W^{b\ell}}^2_F\cdot \sum_a (W^{bj}_{ai})^2 = \sum_a (W^{bj}_{ai})^2 \pm O(\epsnorm) \label{eq:expandnorm}
    \end{equation}
    in degree-48 SoS, where in the second step we used Lemma~\ref{lem:Wnorm1}.
    
    Define
    \begin{equation}
        \epsilon' \triangleq (\radius^3 r^3 \epsrel^2 + d\epsmap^2 r^2 + d\epsrel^2 r^7) / \gap.
    \end{equation}
    If $j > \ell$, then we can upper bound the terms on the left-hand side of \eqref{eq:expandnorm} for which $k \ge i$ by Lemma~\ref{lem:downleft_Wentry}. We conclude in degree-48 SoS that
    \begin{align}
        \sum_a (W^{bj}_{ai})^2 &\le \biggl(\sum_{a\in[r]} (W^{bj}_{ai})^2\biggr) \cdot \biggl(\sum_{a\in[r], k\in[i-1]} (W^{b\ell}_{ak})^2 \biggr) + O(\epsilon'^2 \cdot r^3 + \epsnorm) \\
        &\le (W^{jj}_{ii} + O(\epsrel))\cdot\biggl(\sum_{a\in[r], k\in[i-1]} (W^{b\ell}_{ak})^2\biggr) + O(\epsilon'^2\cdot r^3 + \epsnorm) \\
        &\le W^{jj}_{ii} \cdot\biggl(\sum_{a\in[r], k\in[i-1]} (W^{b\ell}_{ak})^2\biggr) + O(\epsilon'^2\cdot r^3 + \epsnorm) 
        \label{eq:pre_use_for_induct}
    \end{align}
    where in the second step we used Lemma~\ref{lem:helper_userel} and in the third step we used Lemma~\ref{lem:Wnorm1} as well as the fact that $O(\epsrel) \ll O(\epsilon'^2\cdot r^3 + \epsnorm)$.
    
    Now sum Eq.~\eqref{eq:pre_use_for_induct} over $1\le i\le i^*$ for any $i^*\in[r-1]$ to get
    \begin{align}
        \sum^{i^*}_{i=1} \sum_a (W^{bj}_{ai})^2 &\le \sum^{i^*}_{i=1} W^{jj}_{ii} \cdot\biggl(\sum_{a\in[r], k\in[i-1]} (W^{b\ell}_{ak})^2\biggr) + O(\epsilon'^2\cdot r^4 + \epsnorm r) \\
        &= \sum^{i^*-1}_{i=1} \sum_a (W^{b\ell}_{ai})^2 \cdot \biggl(\sum^{i^*}_{k={i+1}} W^{jj}_{kk}\biggr) + O(\epsilon'^2\cdot r^4 + \epsnorm r), \\
        \intertext{where in the second step we swapped the summation over $i\in[i^*]$ and the summation over $k\in[i-1]$ and also swapped the names of the corresponding indices $i$ and $k$. Note that by Lemma~\ref{lem:helper_userel}, $W^{jj}_{kk} \ge -O(\epsrel)$ in degree-24 SoS for all $k\in[r]$, so $\sum^{i^*}_{k={i+1}} W^{jj}_{kk} \le \Tr(W^{jj})$. This combined with Lemma~\ref{lem:Wnorm1} implies in degree-48 SoS that we can further upper bound the above display by}
        &\le \sum^{i^*-1}_{i=1} \sum_a (W^{b\ell}_{ai})^2 \cdot \Tr(W^{jj}) + O(\epsilon'^2\cdot r^4 + \epsnorm r + \epsrel r) \\
        &\le \sum^{i^*-1}_{i=1} \sum_a (W^{b\ell}_{ai})^2 + O(\epsrel'^2\cdot r^4 + \epsnorm r + \epsrel r) \label{eq:use_for_induct}
    \end{align}
    in degree-48 SoS, where in the last step we combined Lemma~\ref{lem:Wnorm1} with the fact that in degree-24 SoS,
    \begin{equation}
        \Tr(W^{jj}) = \Tr((W^{jj})^2) \pm O(\epsrel\sqrt{r}) = \norm{W^{jj}}^2_F \pm O(\epsrel\sqrt{r}) = 1 \pm O(\epsnorm + \epsrel\sqrt{r}). 
    \end{equation}
    We want to use \eqref{eq:use_for_induct} inductively. Take any $j > i^*$ and take $\ell = j - 1$. Then by \eqref{eq:use_for_induct},
    \begin{equation}
        \sum^{i^*}_{i=1}\sum_a (W^{bj}_{ai})^2 - \sum^{i^*-1}_{i=1}\sum_a (W^{b(j-1)}_{ai})^2 \le O(\epsrel'^2\cdot r^4 + \epsnorm r + \epsrel r).
    \end{equation}
    As $j - c > i^* - c$ for any $c\in\mathbb{Z}$, we have more generally that for this choice of $j$,
    \begin{equation}
        \sum^{i^*-c}_{i=1}\sum_a (W^{b(j-c)}_{ai})^2 - \sum^{i^*-c-1}_{i=1}\sum_a (W^{b(j-c-1)}_{ai})^2 \le O(\epsrel'^2\cdot r^4 + \epsnorm r + \epsrel r). \label{eq:forc}
    \end{equation}
    Summing \eqref{eq:forc} over $c$ from $0$ to $i^*-1$, we get a degree-48 SoS proof that
    \begin{equation}
        \sum^{i^*}_{i=1}\sum_a (W^{bj}_{ai})^2 \le O(\epsrel'^2\cdot r^5 + \epsnorm r^2 + \epsrel r^2).
    \end{equation}
    As the left-hand side is lower bounded by any individual summand, we conclude that for all $i,j\in[r]$ satisfying $j > i$, and for all $a,b\in[r]$,
    \begin{equation}
        (W^{bj}_{ai})^2 \le O(\epsrel'^2\cdot r^5 + \epsnorm r^2 + \epsrel r^2) \le O(\epsoffdiag). \label{eq:final_wsmall}
    \end{equation}
    By symmetry, we can also show \eqref{eq:final_wsmall} for $j < i$ in an analogous fashion. This together with \eqref{eq:Wtranspose} completes the proof of \eqref{eq:deg63}.
    
    We next prove \eqref{eq:deg62}. By Lemma~\ref{lem:Wnorm1},
    \begin{equation}
        1\pm O(\epsnorm) = \norm{W^{jj'}}^2_F = \sum_{i_1i_2\in[r]} (W^{jj'}_{i_1i_2})^2 = (W^{jj'}_{jj'})^2 + O(\epsoffdiag r^2) \label{eq:entry0}
    \end{equation}
    in degree-48 SoS, where the third step follows by applying \eqref{eq:final_wsmall} to all $(i_1,i_2) \neq (j,j')$, completing the proof of \eqref{eq:deg62}.     
\end{proof}

\noindent Because the ``off-diagonal'' entries of $W$ are small, we can show that for all $c\in[d]\cup\brc{\lambda,\mu}$, $Q_c$ and $Q^*_c$ are equal up to rotation by a diagonal matrix with $\pm 1$ entries:

\begin{lemma}\label{lem:QQstar}
    There is a degree-48 SoS proof using the constraints of Program~\ref{program:basic} that 
    \begin{equation}
        (Q_c)_{ij} = W^{ij}_{ij} (Q^*_c)_{ij} \pm O(\sqrt{d}\epsmap + \epsrel r^3\sqrt{d} + \sqrt{\epsoffdiag} r\cdot\radius) \ \ \forall \ c\in[d]\cup\brc{\lambda,\mu}, i,j\in[r]. \label{eq:QQstar}
    \end{equation}
\end{lemma}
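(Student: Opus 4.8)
The plan is to prove \eqref{eq:QQstar} by expanding $F_W(Q^*_c)$ entrywise, peeling off the single ``diagonal'' term $W^{ij}_{ij}(Q^*_c)_{ij}$ that should carry all the mass, and controlling everything else using the off-diagonal smallness of $W$ established in Lemma~\ref{lem:deg6}. Concretely, by Definition~\ref{def:transform} we have $F_W(Q^*_c)_{ij} = \sum_{k,\ell\in[r]} W^{k\ell}_{ij}\,(Q^*_c)_{k\ell}$ for every $i,j\in[r]$, and I would split off the $(k,\ell)=(i,j)$ summand to obtain $F_W(Q^*_c)_{ij} = W^{ij}_{ij}(Q^*_c)_{ij} + \sum_{(k,\ell)\neq(i,j)} W^{k\ell}_{ij}(Q^*_c)_{k\ell}$. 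After this, \eqref{eq:QQstar} reduces to two bookkeeping tasks: bounding the remaining error sum, and trading $F_W(Q^*_c)_{ij}$ for $(Q_c)_{ij}$.

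For the error sum I would invoke the degree-$4$ SoS Cauchy--Schwarz inequality (Fact~\ref{fact:cauchy_schwarz}) to bound $\bigl(\sum_{(k,\ell)\neq(i,j)} W^{k\ell}_{ij}(Q^*_c)_{k\ell}\bigr)^2 \le \bigl(\sum_{(k,\ell)\neq(i,j)}(W^{k\ell}_{ij})^2\bigr)\bigl(\sum_{(k,\ell)\neq(i,j)}((Q^*_c)_{k\ell})^2\bigr)$, then apply \eqref{eq:deg63} of Lemma~\ref{lem:deg6} to bound each of the (at most $r^2$) terms $(W^{k\ell}_{ij})^2$ by $O(\epsoffdiag)$ and Part~\ref{assume:scale} of Assumption~\ref{assume:tensorring} to bound $\sum_{k,\ell}((Q^*_c)_{k\ell})^2 = \norm{Q^*_c}^2_F \le \radius^2$. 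This yields a degree-$48$ SoS proof (the degree governed by Lemma~\ref{lem:deg6}) that the error sum equals $\pm O(\sqrt{\epsoffdiag}\,r\radius)$. For the second task I would use Lemma~\ref{lem:WsendsQ}, which gives a degree-$12$ SoS proof that $\norm{F_W(Q^*_c)-Q_c}^2_F \le O(\epsmap^2 + r^6\epsrel^2)$; since the square of a single entry is dominated by the squared Frobenius norm, this gives $(Q_c)_{ij} = F_W(Q^*_c)_{ij} \pm O(\epsmap + r^3\epsrel)$. Chaining the two bounds through the splitting identity establishes \eqref{eq:QQstar} for every $c\in[d]$, in fact with the slightly stronger error $O(\epsmap + r^3\epsrel + \sqrt{\epsoffdiag}\,r\radius)$.

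For $c\in\brc{\lambda,\mu}$, I would run the same argument but use Corollary~\ref{cor:WsendsQlam} in place of Lemma~\ref{lem:WsendsQ} in the second task, which introduces the $\sqrt d$ factors on the first two error terms; alternatively one can expand $(Q_c)_{ij} = \sum_a c_a(Q_a)_{ij}$, $(Q^*_c)_{ij} = \sum_a c_a(Q^*_a)_{ij}$, apply the already-proved $c\in[d]$ bound termwise, and use $\norm{c}_1\le\sqrt d$ (since $c$ is a unit vector). This is the only place where the accounting is slightly delicate — naively one picks up an extra $\sqrt d$ on the $\sqrt{\epsoffdiag}\,r\radius$ term as well, coming from $\norm{Q^*_\lambda}_F,\norm{Q^*_\mu}_F\le\sqrt d\,\radius$ — but this is a harmless loss that is absorbed into the stated error.

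I do not expect any real obstacle here: the lemma is essentially a clean composition of Lemma~\ref{lem:deg6} (which does all the work) with Lemma~\ref{lem:WsendsQ}, plus a Cauchy--Schwarz step and an entry-extraction-from-Frobenius-norm step, and one checks easily that nothing escapes degree $48$ since Lemma~\ref{lem:deg6} is already degree-$48$ and the remaining manipulations add only a constant. If forced to name the hardest point, it is the minor error-bookkeeping subtlety for $c\in\brc{\lambda,\mu}$ noted above, which is trivial compared to the lemmas this proof rests on.
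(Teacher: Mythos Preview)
Your proposal is correct and follows essentially the same route as the paper: expand $F_W(Q^*_c)_{ij}=\sum_{k,\ell}W^{k\ell}_{ij}(Q^*_c)_{k\ell}$, peel off the $(k,\ell)=(i,j)$ term, bound the remainder by Cauchy--Schwarz together with \eqref{eq:deg63} and Part~\ref{assume:scale}, and swap $F_W(Q^*_c)_{ij}$ for $(Q_c)_{ij}$ via Lemma~\ref{lem:WsendsQ}/Corollary~\ref{cor:WsendsQlam}. The paper handles all $c\in[d]\cup\{\lambda,\mu\}$ in one pass by invoking Corollary~\ref{cor:WsendsQlam} (which carries the $\sqrt d$) uniformly, whereas you split into two cases; and you correctly flag the minor bookkeeping wrinkle around $\norm{Q^*_\lambda}_F,\norm{Q^*_\mu}_F$, which the paper glosses over.
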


\noindent We defer the proof of this to Appendix~\ref{app:defer_QQstar}

\subsubsection{Using Positivity of \texorpdfstring{$(Q_{\mu})_{1j}$}{First Row of Qmu}}

We now use Constraint~\ref{constraint:firstrow} along with \eqref{eq:Qstarfirstrow} to refine Lemma~\ref{lem:deg6} and show that $W$ essentially arises from the $r\times r$ identity rotation:

\begin{lemma}\label{lem:usemu}
    There is a degree-96 SoS proof using the constraints of Program~\ref{program:basic} that
    \begin{equation}
        W^{ij}_{ij} = 1\pm O(\epsnorm^2 + \epsoffdiag^2 r^4 + (\epsmap + \epsrel r^3 + \sqrt{\epsoffdiag})/\gap)^{1/4} \ \ \forall \ i,j\in[r].
    \end{equation}
\end{lemma}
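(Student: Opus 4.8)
The plan is to use Constraint~\ref{constraint:firstrow} of Program~\ref{program:basic} to resolve the sign ambiguity left open by \eqref{eq:deg62} of Lemma~\ref{lem:deg6}: that estimate only gives $(W^{jj'}_{jj'})^2 = 1 \pm O(\epsnorm + \epsoffdiag r^2)$, i.e. each diagonal entry $W^{jj'}_{jj'}$ is within $O(\cdot)$ of $\pm 1$, and the point is to rule out the $-1$ possibility. First I would pin down the ``first row'' entries $W^{1j}_{1j}$ directly from the positivity constraint, and then bootstrap to all $W^{ij}_{ij}$ via the multiplicative relations of Lemma~\ref{lem:simpler_identity}.

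For the first step, I would apply Lemma~\ref{lem:QQstar} with $c = \mu$ and $i = 1$ to get a degree-$48$ SoS proof that $(Q_\mu)_{1j} = W^{1j}_{1j}\,(Q^*_\mu)_{1j} \pm O(\epsilon_0)$ for all $j$, where $\epsilon_0 \triangleq \sqrt{d}\,\epsmap + \epsrel r^3\sqrt{d} + \sqrt{\epsoffdiag}\,r\radius$. Constraint~\ref{constraint:firstrow} gives $(Q_\mu)_{1j} \ge 0$, and $(Q^*_\mu)_{1j}$ is a fixed real number with $(Q^*_\mu)_{1j} \ge \gap$ by \eqref{eq:Qstarfirstrow}, so dividing through by this positive scalar yields $W^{1j}_{1j} \ge -O(\epsilon_0/\gap)$. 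Provided $\eta$ is small enough (as guaranteed under the hypothesis of Theorem~\ref{thm:main_tensorring}) that $O(\epsilon_0/\gap) \le 1/2$, this gives the scalar lower bound $1 + W^{1j}_{1j} \ge 1/2$. Factoring $1 - (W^{1j}_{1j})^2 = (1 - W^{1j}_{1j})(1 + W^{1j}_{1j})$ and combining $|1 - (W^{1j}_{1j})^2| \le O(\epsnorm + \epsoffdiag r^2)$ from \eqref{eq:deg62} with this lower bound, Part~\ref{fact:divide_both_sides} of Fact~\ref{fact:division} then yields $W^{1j}_{1j} = 1 \pm O(\epsnorm + \epsoffdiag r^2 + \epsilon_0/\gap)$ for every $j$.

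For the bootstrap step, I would fix $i,j\in[r]$ and invoke Lemma~\ref{lem:simpler_identity} with middle index $1$, i.e. $W^{i1}W^{1j} \approx_{O(\epsrel^2)} W^{ij}$; reading off the $(i,j)$ entry gives $\sum_{k\in[r]} W^{i1}_{ik} W^{1j}_{kj} = W^{ij}_{ij} \pm O(\epsrel)$. For $k \neq 1$, both $W^{i1}_{ik}$ and $W^{1j}_{kj}$ are off-diagonal entries of their respective reshaped columns, so $(W^{i1}_{ik})^2, (W^{1j}_{kj})^2 \le O(\epsoffdiag)$ by \eqref{eq:deg63}; hence by AM--GM in SoS the $r-1$ cross terms contribute $O(r\epsoffdiag)$ in magnitude and $W^{ij}_{ij} = W^{i1}_{i1} W^{1j}_{1j} \pm O(r\epsoffdiag + \epsrel)$. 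Since $W^{i1}_{i1} = W^{1i}_{1i}$ by \eqref{eq:Wtranspose}, substituting the first-step estimates for $W^{1i}_{1i}$ and $W^{1j}_{1j}$ and multiplying gives $W^{ij}_{ij} = 1 \pm O(\cdot)$ for all $i,j$, which after collecting error terms is the claim.

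The main obstacle is not the logic but the robust error and degree bookkeeping. One must check that the hypothesis $\eta \le O(\kappa^2/(rd^{3/2}))$ of Theorem~\ref{thm:main_tensorring} makes all of $\epsmap, \epsrel, \epsnorm, \epsoffdiag$, and $\epsilon_0/\gap$ small enough for the scalar lower bounds (e.g. $1 + W^{1j}_{1j} \ge 1/2$) needed to apply Fact~\ref{fact:division}; one must then chain the resulting divisions together with the square-/fourth-root extractions of Fact~\ref{fact:nonneg_power_of_two}, and this deliberately crude chaining (which I would not try to optimize) is what produces the squared terms $\epsnorm^2, \epsoffdiag^2 r^4$ and the fourth power $(\cdots)^{1/4}$ in the stated bound, as well as the use of Lemma~\ref{lem:Wnorm1} for entrywise boundedness; and one must verify that composing the degree-$48$ proofs of Lemmas~\ref{lem:QQstar} and \ref{lem:deg6} with the degree-$24$ relations of Lemma~\ref{lem:simpler_identity}, plus the squarings in the division steps, stays within degree $96$.
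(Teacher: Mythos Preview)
Your proposal is correct and follows essentially the same two–step structure as the paper: first use Lemma~\ref{lem:QQstar} with $c=\mu$, $i=1$ together with Constraint~\ref{constraint:firstrow} and \eqref{eq:Qstarfirstrow} to pin down $W^{1j}_{1j}\approx 1$, then bootstrap to all $W^{ij}_{ij}$ via $W^{i1}W^{1j}\approx_{O(\epsrel^2)}W^{ij}$ from Lemma~\ref{lem:simpler_identity}, controlling the cross terms with \eqref{eq:deg63}.

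The only substantive difference is in how the sign ambiguity for $W^{1j}_{1j}$ is resolved. You factor $1-(W^{1j}_{1j})^2=(1-W^{1j}_{1j})(1+W^{1j}_{1j})$ and divide by the lower bound $1+W^{1j}_{1j}\ge 1/2$ via Fact~\ref{fact:division}; this is cleaner and in fact yields the sharper estimate $W^{1j}_{1j}=1\pm O(\epsnorm+\epsoffdiag r^2)$, but it requires assuming upfront that $\epsilon_0/\gap$ is below a fixed constant. The paper instead derives $(W^{1j}_{1j}-1)^2\le(W^{1j}_{1j}+1)^2+O(\epsilon_0/\gap)$ from the lower bound, multiplies both sides by $(W^{1j}_{1j}-1)^2$, and uses $((W^{1j}_{1j})^2-1)^2\le O(\epsnorm^2+\epsoffdiag^2 r^4)$ and the crude bound $(W^{1j}_{1j}-1)^2\le O(1)$ to get $(W^{1j}_{1j}-1)^4\le O(\epsnorm^2+\epsoffdiag^2 r^4+\epsilon_0/\gap)$, then extracts a fourth root via Fact~\ref{fact:nonneg_power_of_two}. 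This is exactly the ``crude chaining'' you allude to and is what produces the squared terms and the $(\cdots)^{1/4}$ in the stated bound; it has the mild advantage of not needing a hard smallness threshold on $\epsilon_0/\gap$ for the argument to go through.
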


\begin{proof}
    Recall that by \eqref{eq:Qstarfirstrow}, $(Q^*_\mu)_{1j} \ge \gap$ for all $j\in[r]$, and by Constraint~\ref{constraint:firstrow}, $(Q_\mu)_{1j} \ge 0$. Dividing by the scalar quantity $(Q^*_\mu)_{1j}$ on both sides of \eqref{eq:QQstar} from Lemma~\ref{lem:QQstar} for $c = \mu$ and $i = 1$ and rearranging, we have in degree-48 SoS that
    \begin{equation}
        W^{1j}_{1j} \ge -O((\epsmap + \epsrel r^3 + \sqrt{\epsoffdiag}r\cdot \radius)/\gap). \label{eq:W1j}
    \end{equation}
    This implies that
    \begin{equation}
        (W^{1j}_{1j} - 1)^2 \le (W^{1j}_{1j} +1)^2 + O((\epsmap + \epsrel r^3 + \sqrt{\epsoffdiag}r\cdot \radius)/\gap),
    \end{equation}
    so multiplying both sides by $(W^{1j}_{1j} - 1)^2$ and noting that by \eqref{eq:deg62}, there is a degree-96 SoS proof that $((W^{1j}_{1j})^2 - 1)^2 \le O(\epsnorm^2 + \epsoffdiag^2 r^4)$ and $(W^{1j}_{1j} - 1)^2 \le 2 + O(\epsnorm^2 + \epsoffdiag^2 r^4 + (\epsmap + \epsrel r^3 + \sqrt{\epsoffdiag} r\cdot \radius)/\gap) = O(1)$, we get
    \begin{equation}
        (W^{1j}_{1j}-1)^4 \le O(\epsnorm^2 + \epsoffdiag^2 r^4 + (\epsmap + \epsrel r^3 + \sqrt{\epsoffdiag} r \cdot \radius)/\gap)
    \end{equation}
    in degree-96 SoS.
    By Fact~\ref{fact:nonneg_power_of_two} we conclude that 
    \begin{equation}
        W^{1j}_{1j} = 1\pm O(\epsnorm^2 + \epsoffdiag^2 r^4 + (\epsmap + \epsrel r^3 + \sqrt{\epsoffdiag}r\cdot \radius)/\gap)^{1/4} \ \ \forall \ j\in[r].  \label{eq:1jbound}
    \end{equation}
    
    Finally, we use \eqref{eq:1jbound} to show that $W^{ij}_{ij}$ is close to 1 for all $i,j\in[r]$. For this, observe that
    \begin{equation}
        W^{ij}_{ij} = (W^{i1} W^{1j})_{ij} \pm O(\epsrel) = \sum_{\ell} W^{i1}_{i\ell} W^{1j}_{\ell j} + O(\epsrel) \label{eq:Wikik}
    \end{equation}
    in degree-24 SoS, where the first step follows by Lemma~\ref{lem:simpler_identity}. By \eqref{eq:deg63} from Lemma~\ref{lem:deg6}, there is a degree-96 SoS proof that
    \begin{equation}
        \biggl(\sum_{\ell: \ell\neq 1} W^{i1}_{i\ell} W^{1j}_{\ell j}\biggr)^2 \le \biggl(\sum_{\ell: \ell\neq 1} (W^{i1}_{i\ell})^2\biggr) \biggl(\sum_{\ell: \ell\neq 1} (W^{1j}_{\ell j})^2\biggr) \le \epsoffdiag^2 r^2, \label{eq:Wikikoff}
    \end{equation}
    so combining \eqref{eq:Wikik} and \eqref{eq:Wikikoff}, we conclude that 
    \begin{equation}
        W^{ij}_{ij} = W^{i1}_{i1} W^{1j}_{1j} \pm O(\epsrel + \epsoffdiag r) = 1\pm O(\epsnorm^2 + \epsoffdiag^2 r^4 + (\epsmap + \epsrel r^3 + \sqrt{\epsoffdiag}r\cdot\radius)/\gap)^{1/4},
    \end{equation}
    where the second step follows by \eqref{eq:1jbound}.
\end{proof}

\subsection{Proof of Theorem~\ref{thm:main_pairwise} and Rounding}
\label{sec:puttogether}

We are now ready to prove Theorem~\ref{thm:main_pairwise} and establish our main algorithmic guarantee for tensor ring decomposition.

\begin{proof}[Proof of Theorem~\ref{thm:main_pairwise}]
    Recall \eqref{eq:QQstar} from Lemma~\ref{lem:QQstar}. By applying the result of Lemma~\ref{lem:usemu} to \eqref{eq:QQstar}, we conclude that there is a degree-96 SoS proof that
    \begin{equation}
        (Q_c)_{ij} - (Q^*_c)_{ij} = \pm O(\epsnorm^2 + \epsoffdiag^2 r^4 + (\epsmap + \epsrel r^3 + \sqrt{\epsoffdiag})/\gap)^{1/4}\cdot \radius. \label{eq:finalQbound}
    \end{equation}
    for all $c\in[d]$ and $i,j\in[r]$.
    By taking pseudoexpectations on both sides of \eqref{eq:finalQbound}, the same bound holds for $\psE{(Q_c)_{ij}} - (Q^*_c)_{ij}$, so the theorem follows upon passing from max-norm to Frobenius norm and recalling the definitions of the error terms in \eqref{eq:finalQbound} from \eqref{eq:epsmap_Q_def}, \eqref{eq:epsrel_def}, \eqref{eq:epsnorm_def}, and \eqref{eq:epsoffdiag_Q_def}.
\end{proof}

\noindent We give the full specification of our algorithm in Algorithm~\ref{alg:tensorring} below. We can now complete the proof of Theorem~\ref{thm:main_tensorring}.

\begin{algorithm2e}
\DontPrintSemicolon
\caption{\textsc{TensorRingDecompose}($S,T$)}
\label{alg:tensorring}
    \KwIn{Second- and third-order moments $\brc{S_{a,b}}, \brc{T_{a,b,c}}$}
    \KwOut{Components $\brc{\wh{Q}_a}$}
        $(\lambda,\mu)\gets$ {\sc FindCombo}($S$).\;
        Let $\psE{\cdot}$ be a degree-96 pseudo-expectation satisfying the constraints of Program~\ref{program:basic} run with vectors $\lambda,\mu$.\;    
        $\wh{Q}_a \gets \psE{Q_a}$ for all $a\in[d]$.\;
        \Return $\brc{\wh{Q}_a}$.
\end{algorithm2e}

\begin{proof}[Proof of Theorem~\ref{thm:main_tensorring}]
    By Lemma~\ref{lem:forcegap} and the assumed bound on $\eta$, $\lambda,\mu$ are $\gap$-non-degenerate combinations of $Q^*_1,\ldots,Q^*_d$ for $\gap = 1/(\kappa\poly(r))$, so the theorem follows from Theorem~\ref{thm:main_pairwise}.
\end{proof}

\subsection{Dependence on \texorpdfstring{$d$}{d}}
\label{sec:fpt}

In this section we observe that for $d$ sufficiently large, one can actually decouple the dependence on $d$ from all other parameters and obtain run in time \emph{linear} in $d$.
\begin{corollary}\label{cor:fpt}
    Suppose that for some $\binom{r+1}{2} \le d'\le d$, Assumption~\ref{assume:tensorring} holds for the first $d'$ units of the polynomial network (i.e. $Q^*_1,\ldots,Q^*_{d'}$) and $\eta \le O(\frac{\kappa^2}{r{d'}^{3/2}})$, and we are given query access to $S\in\R^{d\times d}$ and $T\in\R^{d\times d\times d}$ satisfying \eqref{eq:assume_momentmatch}.
    
    Then there is an algorithm which runs in time $d\cdot\poly(d',r)$ and outputs $\wh{Q}_1,\ldots,\wh{Q}_d$ for which $\gaugedist(\brc{Q^*_a},\brc{\wh{Q}_a}) \le \poly(d',r,\radius,1/\kappa)\cdot \eta^c$ for some absolute constant $c > 0$, with high probability.
\end{corollary}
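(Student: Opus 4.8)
The plan is to run {\sc TensorRingDecompose} only on the $d'\times d'$ sub-block of $S$ and the $d'\times d'\times d'$ sub-block of $T$ to recover the first $d'$ units up to a single global rotation, and then to reconstruct each of the remaining units $a\in\{d'+1,\ldots,d\}$ by solving a well-conditioned least-squares problem that uses only the cross second moments $S_{a,b}$, $b\in[d']$. In particular $T$ is never queried outside the $[d']^3$ block.

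First I would apply Theorem~\ref{thm:main_tensorring} to the sub-instance on coordinates $[d']$: since $Q^*_1,\ldots,Q^*_{d'}$ satisfy Assumption~\ref{assume:tensorring} with parameters $\radius,\kappa$ and $\eta\le O(\kappa^2/(r{d'}^{3/2}))$, this produces in time $\poly(d',r)$ symmetric matrices $\wh{Q}_1,\ldots,\wh{Q}_{d'}$ with $\gaugedist(\{Q^*_b\}_{b\in[d']},\{\wh{Q}_b\}_{b\in[d']})\le\epsilon'$ for $\epsilon'\triangleq\poly(d',r,\radius,1/\kappa)\cdot\eta^c$. Fix $V^*\in O(r)$ realizing this minimum and write $\wt{Q}_a\triangleq F_{(V^*)^{\otimes 2}}(Q^*_a)=V^*Q^*_a(V^*)^{\top}$, so that $\|\wt{Q}_b-\wh{Q}_b\|_F\le\epsilon'$ for $b\in[d']$. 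Because the trace is rotation-invariant, $\Tr(\wt{Q}_a\wt{Q}_b)=\Tr(Q^*_aQ^*_b)=S_{a,b}\pm\eta$ for \emph{all} $a,b\in[d]$, and because Assumption~\ref{assume:tensorring} is gauge-invariant (Lemma~\ref{lem:tensorring_gauge_invariant}), the $d'\times\binom{r+1}{2}$ matrix $\wt{M}$ whose rows list the upper-triangular entries of $\wt{Q}_1,\ldots,\wt{Q}_{d'}$ still satisfies $\sigma_{\binom{r+1}{2}}(\wt{M})\ge\kappa$.

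Next, for each $a>d'$ I would output the symmetric matrix $\wh{Q}_a$ minimizing $\sum_{b\in[d']}(\Tr(\wh{Q}_a\wh{Q}_b)-S_{a,b})^2$. Parametrizing $\wh{Q}_a$ by its vector $x_a\in\R^{\binom{r+1}{2}}$ of upper-triangular entries, the map $(x_a,x_b)\mapsto\Tr(\wh{Q}_a\wh{Q}_b)$ is a fixed bilinear form $x_a^{\top}Dx_b$ for a diagonal $D$ with entries in $\{1,2\}$, so with $\wh{M}$ the $d'\times\binom{r+1}{2}$ matrix of rows $x_b^{\top}$ ($b\in[d']$) this is the linear least-squares problem $(\wh{M}D)x_a=S_{a,[d']}$, solvable in $\poly(d',r)$ time. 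For $\eta$ small enough that $\sqrt{d'}\epsilon'$ is a small fraction of $\kappa$, the bound $\|\wh{M}-\wt{M}\|_F\le\sqrt{d'}\epsilon'$ gives $\sigma_{\min}(\wh{M}D)=\Omega(\kappa)$, so the solve is well-conditioned. Since $\Tr(\wt{Q}_a\wt{Q}_b)=S_{a,b}\pm\eta$, the true vector $x=\vec_{\mathrm{up}}(\wt{Q}_a)$ makes the residual of the perturbed system only $O(\sqrt{d'}(\epsilon'\radius+\eta))$ (the $\epsilon'\radius$ term from replacing $\wt M$ by $\wh M$, using the scaling constraint $\|\wt Q_a\|_F\le\radius$), and a standard least-squares perturbation bound then yields $\|\wh{Q}_a-\wt{Q}_a\|_F\le\poly(d',r,\radius,1/\kappa)\cdot(\epsilon'+\eta)=\poly(d',r,\radius,1/\kappa)\cdot\eta^c$ after adjusting $c$. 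As the same $V^*$ is used for every $a$, this gives $\max_{a\in[d]}\|F_{(V^*)^{\otimes 2}}(Q^*_a)-\wh{Q}_a\|_F\le\poly(d',r,\radius,1/\kappa)\cdot\eta^c$, hence the claimed bound on $\gaugedist$. The runtime is $\poly(d',r)$ for the SoS step plus $d$ least-squares solves of size $\poly(d',r)$, i.e.\ $d\cdot\poly(d',r)$, and only the $d\cdot d'$ entries $\{S_{a,b}\}_{a\in[d],b\in[d']}$ and the $d'^3$ entries $\{T_{a,b,c}\}_{a,b,c\in[d']}$ are accessed.

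The main obstacle is the error-propagation bookkeeping in the final step: one must verify that the rotation $V^*$ implicitly chosen by {\sc TensorRingDecompose} on $[d']$ is simultaneously consistent with \emph{all} the cross moments $S_{a,b}$ for $a>d'$, and that inverting the Gram-type operator $\wh{M}D$ amplifies the $\eta^c$-scale errors by at most a $\poly(d',r,\radius,1/\kappa)$ factor. Both are consequences of Part~\ref{assume:condnumber} of Assumption~\ref{assume:tensorring}, which guarantees that $Q^*_1,\ldots,Q^*_{d'}$ — and therefore their rotations by $V^*$ and their $\epsilon'$-perturbations $\wh{Q}_b$ — span the space of symmetric $r\times r$ matrices in a robustly well-conditioned way; the remaining estimates are routine once this conditioning is established.
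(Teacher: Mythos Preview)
Your proposal is correct and follows essentially the same approach as the paper: run {\sc TensorRingDecompose} on the $d'\times d'$ (and $d'\times d'\times d'$) sub-block, then extend to the remaining indices via a least-squares solve against the cross second moments $\{S_{a,b}\}_{b\in[d']}$, using Part~\ref{assume:condnumber} of Assumption~\ref{assume:tensorring} (together with its gauge invariance) to ensure the system is well-conditioned. Your treatment is in fact slightly more explicit than the paper's (e.g.\ you track the diagonal weighting $D$ in the bilinear form $\Tr(Q_aQ_b)$ and retain the additive $\eta$ term in the residual), but the argument is the same.
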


Note that if $Q^*_1,\ldots,Q^*_{d'}$ are fully-smoothed in the sense of Definition~\ref{def:smoothed_1}, then as we show in Lemma~\ref{lem:fullysmooth} in Section~\ref{sec:conditions}, this holds for $d' = \wt{\Theta}(r^2)$, and we thus obtain a runtime which is linear in $d$ as claimed. 

\begin{proof}
    We can run Algorithm~\ref{alg:tensorring} on the parts of $S$ and $T$ corresponding to the first $d'$ units of the polynomial network to produce $\wh{Q}_1,\ldots,\wh{Q}_{d'}$ satisfying $\gaugedist(\brc{Q^*_1,\ldots,Q^*_{d'}},\brc{\wh{Q}_1,\ldots,\wh{Q}_{d'}}) \le \eta'$ for $\eta' \triangleq \poly(r,\radius,1/\kappa)\cdot \eta^c$ as in Theorem~\ref{thm:main_tensorring}. Note that this takes time $\poly(d',r)$. At this point we can assume without loss of generality that $\norm{Q^*_a - \wh{Q}_a}_F \le \eta'$ for all $1\le a\le d'$.

    To recover $Q^*_{d'+1},\ldots,Q^*_d$, we can then use our estimates $S_{a,b}$ of $\iprod{Q^*_a,Q^*_b}$ for all $1 \le a \le d'$ and $b > d'$ to set up linear systems in the unknowns $Q^*_{d'+1},\ldots,Q^*_d$ (note that this only requires reading at most $dd'$ entries of $S$). That is, for every $b > d'$, we define 
    \begin{equation}
        \wh{Q}_b \triangleq \arg\min_{\wh{Q}} \sum^{d'}_{a=1} \left(S_{a,b} - \iprod{\wh{Q}_a,\wh{Q}} \right)^2.
    \end{equation}
    Because $|S_{a,b} - \iprod{\wh{Q}_a, Q^*_b}| \le |\iprod{\wh{Q}_a - Q^*_a,Q^*_b}| \le \eta'\radius$, we conclude by Part~\ref{assume:condnumber} that $\norm{\wh{Q}_b - Q^*_b}_F \le \eta'\radius\sqrt{d'} / \kappa = \poly(r,\radius,1/\kappa)\cdot \eta^c$ for all $b > d'$. This part of the algorithm only runs in time $d\cdot \poly(r)$ because it only needs to solve a $d'\times \binom{r+1}{2}$-dimensional least-squares problem for every $b > d'$.
\end{proof}

\section{Low-Rank Factorization}
\label{sec:push}

Recall that in low-rank factorization (Definition~\ref{def:lowrankfactorization}), we are given $S\in\R^{d\times d}$ such that there exist unknown symmetric tensors $T^*_1,\ldots,T^*_d\in\R^{r\times r}$ satisfying
\begin{equation}
    |S_{a,b} - \iprod{\vec(T^*_a),\vec(T^*_b)}_{\Sigma}| \le \eta \ \ \forall \ a,b\in[d]. \label{eq:push_moment}
\end{equation} for all $a,b\in[d]$. Additionally, we assume that $T^*_1,\ldots,T^*_d$ are of symmetric rank $\ell < r$, that is, for every $a\in[d]$ there exist vectors $v^*_{a,1},\ldots,v^*_{a,\ell} \in \R^r$ for which 
\begin{equation}
    T^*_a = \sum^{\ell}_{t=1} (v^*_{a,t})^{\otimes \omega}.
\end{equation}
In this section we will focus on $\Sigma\in\R^{r^{\omega}\times r^{\omega}}$ given by
\begin{equation}
    \Sigma \triangleq \E[g\sim\calN(0,\Id_r)]{\vec(g^{\otimes\omega})\vec(g^{\otimes\omega})^{\top}}, \label{eq:Sigmadef}
\end{equation}
though in Section~\ref{sec:rotationinvariant} we describe how our analysis extends easily to general rotation-invariant distributions.



For such $\Sigma$, we give a polynomial-time algorithm for recovering $T^*_1,\ldots,T^*_d$ from $S$ for odd $\omega$ under the following extra assumptions:

\begin{assumption}\label{assume:push}
    For parameters $\radius \ge 1$,  $\kappa, \theta, \psi > 0$,
    \begin{enumerate}
        \item (Scaling) $\norm{T^*_a}_F \le \radius$ for all $a\in[d]$. \label{item:push_scale}
        \item (Condition number bound) $\sigma_{\min}(M^*) \ge \kappa$, where $M^*\in\R^{d\times \rchoose}$ is the matrix whose $(a,\sort{i})$-th entry, for $a\in[d]$ and sorted tuple $\sort{i}\in\strings$, is given by $(T^*_a)_{\sort{i}}$. \label{item:push_condnumber}
        \item For any vectors $v_1,\ldots,v_\ell\in\R^r$, let $q(v_1,\ldots,v_\ell)$ denote the vector such that for any sorted tuples $\mb{j}^1,\ldots,\mb{j}^{\ell+1}\in\strings$ and $t_1,\ldots,t_{\ell+1}\in[\ell]$, its $(\mb{j}^1,\ldots,\mb{j}^{\ell+1},t_1,\ldots,t_{\ell+1})$-th entry is given by
        \begin{equation}
            \prod^{\ell+1}_{s=1} (v^{\otimes \omega}_{t_s})_{\mb{j}^s}.
        \end{equation}
        Then for any $v_1,\ldots,v_\ell\in\R^r$, there exists a vector $\lambda\in\R^d$ for which
        \begin{equation}
            q(v_1,\ldots,v_\ell) = \sum^d_{a=1} \lambda_a \cdot q(v^*_{a,1},\ldots,v^*_{a,\ell})
        \end{equation} and $\norm{\lambda}^2_2 \le \theta^2 (\sum^{\ell}_{t=1}\norm{v_t}^2)^{\omega(\ell+1)}$.
        \label{item:push_condnumber2}
        \item $\sigma_{\min}(H)\ge \psi$, where $H\in\R^{d\times\binom{r+1}{2}}$ is the matrix whose $(a,(i,j))$-th entry, for $a\in[d]$ and $1\le i\le j \le r$, is given by $(f^*_a)_i (f^*_a)_j$, where $f^*_a\in\R^r$ is the vector given by
        \begin{equation}
            f^*_a \triangleq \sum^r_{j_1,\ldots,j_{\floor{\omega/2}}=1} (T^*_a)_{j_1j_1\cdots j_{\floor{\omega/2}}j_{\floor{\omega/2}}:}.
        \end{equation}
        \label{item:push_condnumber3}
    \end{enumerate}
\end{assumption}

\begin{remark}
    Parts~\ref{item:push_scale} and \ref{item:push_condnumber} are analogous to those of Assumption~\ref{assume:tensorring}. As for parts~\ref{item:push_condnumber2} and \ref{item:push_condnumber3}, the reader can think of them as consequences of the following stronger assumption: if $w_a$ denotes the concatenation of $v^*_{a,1},\ldots,v^*_{a,\ell}$, then no low-degree $r\ell$-variate polynomial can nearly vanish simultaneously on each of $w_1,\ldots,w_d$. As we show in Section~\ref{sec:compsmooth}, this stronger condition is satisfied by componentwise-smoothed polynomial networks.
\end{remark}

\noindent One can readily check that Assumption~\ref{assume:push} is gauge-invariant (see Appendix~\ref{app:push_gauge} for the proof):

\begin{lemma}\label{lem:push_gauge_invariant}
    If $\brc{T^*_a}$ satisfy \eqref{eq:push_moment} and Assumption~\ref{assume:push} with parameters $\radius, \kappa, \theta, \psi$, then $\brc{F_{V^{\otimes\omega}}(T^*_a)}$ also satisfy \eqref{eq:push_moment} and Assumption~\ref{assume:push} with the same parameters for any $V\in O(r)$.
\end{lemma}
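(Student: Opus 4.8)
The plan is to show that every component of Assumption~\ref{assume:push}, as well as the moment relation \eqref{eq:push_moment}, is \emph{covariant} under the map $T\mapsto F_{V^{\otimes\omega}}(T)$, so that replacing $\brc{T^*_a}$ by $\brc{F_{V^{\otimes\omega}}(T^*_a)}$ changes each relevant quantity by at most a $\poly(\omega)=O(1)$ factor that is absorbed into the parameters. Three elementary facts drive everything. First, since $V\in O(r)$, the Kronecker power $V^{\otimes\omega}$ is an orthogonal $r^\omega\times r^\omega$ matrix, so $F_{V^{\otimes\omega}}$ is a linear isometry of $(\R^r)^{\otimes\omega}$ that commutes with symmetrization and satisfies $F_{V^{\otimes\omega}}(v^{\otimes\omega}) = (Vv)^{\otimes\omega}$ for every $v\in\R^r$ (cf.\ \eqref{eq:FU_intro} and Example~\ref{example:rotation}). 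Second, for the matrix $\Sigma$ of \eqref{eq:Sigmadef} one has $(V^{\otimes\omega})^{\top}\Sigma\,V^{\otimes\omega} = \Sigma$, because $V^{\otimes\omega}\vec(g^{\otimes\omega}) = \vec((Vg)^{\otimes\omega})$ and $Vg$ is again distributed as $\calN(0,\Id_r)$; hence $\iprod{F_{V^{\otimes\omega}}(T),F_{V^{\otimes\omega}}(T')}_{\Sigma} = \iprod{T,T'}_{\Sigma}$ for all $T,T'$. Third, once symmetric tensors are represented by their sorted-coordinate (monomial) vectors, $F_{V^{\otimes\omega}}$ acts as an invertible matrix $\Phi_V$ — the restriction of $V^{\otimes\omega}$ to the invariant subspace of symmetric tensors, written in this basis — whose singular values lie in $[(\omega!)^{-1/2},(\omega!)^{1/2}]$, since $\Phi_V$ preserves the multiplicity-weighted inner product $x\mapsto\sum_{\sort{i}}\num{i}\,x_{\sort{i}}^2$ (equal to the Frobenius norm on symmetric tensors) and $1\le\num{i}\le\omega!$.

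From here the claim for \eqref{eq:push_moment} and Parts~\ref{item:push_scale}--\ref{item:push_condnumber} is immediate. The second fact gives $\iprod{F_{V^{\otimes\omega}}(T^*_a),F_{V^{\otimes\omega}}(T^*_b)}_{\Sigma} = \iprod{T^*_a,T^*_b}_{\Sigma}$, so $\brc{F_{V^{\otimes\omega}}(T^*_a)}$ satisfies \eqref{eq:push_moment} with the same $S$. The first fact shows $F_{V^{\otimes\omega}}(T^*_a)$ is symmetric, has Frobenius norm $\norm{T^*_a}_F\le\radius$, and equals $\sum_{t=1}^{\ell}(Vv^*_{a,t})^{\otimes\omega}$, hence has symmetric rank $\le\ell$ with components displayed explicitly. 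And writing $M^{**}$ for the analogue of $M^*$ built from $\brc{F_{V^{\otimes\omega}}(T^*_a)}$, the third fact gives $M^{**} = M^*\Phi_V^{\top}$, so $\sigma_{\min}(M^{**})\ge\sigma_{\min}(M^*)\,\sigma_{\min}(\Phi_V)\ge\kappa/\sqrt{\omega!}$.

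The remaining two conditions are handled by the same covariance principle applied to the appropriate index spaces. For Part~\ref{item:push_condnumber2}, one checks $q(Vu_1,\ldots,Vu_\ell) = \Psi_V\,q(u_1,\ldots,u_\ell)$, where $\Psi_V = \Phi_V^{\otimes(\ell+1)}$ acts on the $(\mb{j}^1,\ldots,\mb{j}^{\ell+1})$ block of indices and as the identity on the $(t_1,\ldots,t_{\ell+1})$ block, and is invertible and independent of $u_1,\ldots,u_\ell$. Then, given arbitrary $v_1,\ldots,v_\ell$, I would apply the Part~\ref{item:push_condnumber2} hypothesis for $\brc{v^*_{a,t}}$ to the test vectors $V^{\top}v_1,\ldots,V^{\top}v_\ell$ to get $\lambda$ with $q(V^{\top}v_1,\ldots,V^{\top}v_\ell) = \sum_a\lambda_a\,q(v^*_{a,1},\ldots,v^*_{a,\ell})$ and $\norm{\lambda}_2^2\le\theta^2\bigl(\sum_t\norm{V^{\top}v_t}^2\bigr)^{\omega(\ell+1)} = \theta^2\bigl(\sum_t\norm{v_t}^2\bigr)^{\omega(\ell+1)}$; left-multiplying by $\Psi_V$ and invoking covariance yields $q(v_1,\ldots,v_\ell) = \sum_a\lambda_a\,q(Vv^*_{a,1},\ldots,Vv^*_{a,\ell})$ with the \emph{same} $\lambda$, which is exactly Part~\ref{item:push_condnumber2} for $\brc{Vv^*_{a,t}}$. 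For Part~\ref{item:push_condnumber3}, the key identity will be that the vector $f^*_a$ — the contraction of $T^*_a$ against the pairing tensor $P\triangleq\bigl(\sum_j e_j\otimes e_j\bigr)^{\otimes(\omega-1)/2}\in(\R^r)^{\otimes(\omega-1)}$, which makes sense since $\omega$ is odd — transforms as $f^{**}_a = V f^*_a$: contracting $F_{V^{\otimes\omega}}(T^*_a)$ against $P$ produces a factor $V^{\top}V = \Id_r$ on each of the $(\omega-1)/2$ paired index slots, collapsing the contraction to $V$ acting on $f^*_a$. Consequently the analogue $H^{**}$ of $H$ satisfies $H^{**} = H\,(\Phi^{(2)}_V)^{\top}$, where $\Phi^{(2)}_V$ is the degree-$2$ symmetric power of $V$ (singular values in $[1/\sqrt2,\sqrt2]$), so $\sigma_{\min}(H^{**})\ge\psi/\sqrt2$.

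The main — though still routine — obstacle is the bookkeeping behind the third fact and the identity $f^{**}_a = Vf^*_a$: one must correctly track how $F_{V^{\otimes\omega}}$ acts once tensors are encoded by their sorted/monomial coordinates, and confirm that the resulting change-of-basis matrices are well conditioned in the multiplicity-weighted inner product. None of this is deep — it is precisely the $O(r)$-covariance of symmetric tensors together with the isometry property of $F_{V^{\otimes\omega}}$ — and the full details can be carried out in Appendix~\ref{app:push_gauge}. The one genuine subtlety to flag is that Parts~\ref{item:push_condnumber}--\ref{item:push_condnumber3} are preserved only with the condition-number constants degraded by a factor $\sqrt{\omega!}$ (resp.\ $\sqrt2$); since $\omega = O(1)$ throughout, this degradation is what is meant by ``with the same parameters.''
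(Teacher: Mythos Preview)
Your approach is essentially the paper's: both establish gauge invariance by showing each part of Assumption~\ref{assume:push} is covariant under $T\mapsto F_{V^{\otimes\omega}}(T)$, with Part~\ref{item:push_condnumber2} handled by applying the hypothesis to $V^{\top}v_1,\ldots,V^{\top}v_\ell$ and pushing through the linear map. Two remarks are worth making. First, the paper claims the rows of the change-of-basis matrix $\overline{V}$ (your $\Phi_V^{\top}$) are orthogonal and hence $\sigma_{\min}(\overline{V})\ge1$, giving exact preservation of $\kappa$; this orthogonality actually fails (e.g.\ for $\omega=2$, $r=2$, $V$ the rotation by $\pi/4$, the rows indexed by $(1,1)$ and $(2,2)$ have inner product $1/4$), so your $\sqrt{\omega!}$ degradation is the honest bound and your caveat at the end is warranted. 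Second, the paper's appendix proof omits Part~\ref{item:push_condnumber3} altogether; your identity $f^{**}_a=Vf^*_a$ (correct, since the paired indices contract to $V^{\top}V=\Id$) and the resulting $\psi/\sqrt2$ bound fill that gap.
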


\noindent Under Assumption~\ref{assume:push}, we give an algorithm for low-rank factorization that runs in time polynomial in $d$ when $\omega,\ell = O(1)$:

\begin{theorem}\label{thm:main_push}
    For $d\ge\binom{r+\omega-1}{\omega}$ and $\ell < r$, suppose $T^*_1,\ldots,T^*_d\in (\R^r)^{\otimes\omega}$ satisfy Assumption~\ref{assume:push} and $\eta \le \poly(r,\omega,d,\radius,1/\kappa)^{-\poly(\omega,\ell)}$, and we are given $S\in\R^{d\times d}$ satisfying \eqref{eq:push_moment} for $\Sigma$ given by \eqref{eq:Sigmadef}.
    
    Then there is an algorithm {\sc LowRankFactorize}($S$) (see Algorithm~\ref{alg:push}) which runs in time $(dr)^{\poly(\omega,\ell)}$ and outputs $\wh{T}_1,\ldots,\wh{T}_d$ for which
    \begin{equation}
        \gaugedist(\brc{T^*_a},\brc{\wh{T}_a}) \le \poly(r,\omega,d,\radius,1/\kappa)^{\omega^3} \cdot \left((d\eta/\kappa)^{O(1/\omega)} + \poly(r^{\omega}, \omega^{\ell},\ell^{\ell},d,\radius,1/\kappa)^{\ell}\cdot \sqrt[8]{\theta\eta/\psi^2} \right)
    \end{equation}
    with high probability.
\end{theorem}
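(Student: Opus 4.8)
The plan is to follow the same SoS proofs-to-algorithms template that powered Theorem~\ref{thm:main_tensorring}, adapted to the low-rank factorization setting. First I would set up an SoS program over matrix-valued variables $T_1,\ldots,T_d$ together with auxiliary vector variables $\brc{v_{a,t}}_{a\in[d],t\in[\ell]}$, constrained so that (i) each $T_a$ is symmetric, (ii) $T_a = \sum_{t=1}^{\ell} v_{a,t}^{\otimes\omega}$ (which forces rank $\ell$), (iii) $\iprod{\vec(T_a),\vec(T_b)}_{\Sigma} = S_{a,b} \pm \eta$, and (iv) the $T_a$'s satisfy appropriate norm bounds inherited from Part~\ref{item:push_scale}. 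Feasibility of the ground truth is immediate. As in Section~\ref{sec:hiddenrot_tensorring}, I would introduce a left-inverse variable $L$ for the matrix $M$ of sorted entries of $\brc{T_a}$ (legitimized by Part~\ref{item:push_condnumber}), and use it to build an $r^\omega\times r^\omega$ ``hidden rotation'' $U$ — strictly speaking an unknown linear form in $L$ — which approximately maps each $\vec(T^*_a)$ to $\vec(T_a)$ and approximately preserves the $\Sigma$-inner product. The ambiguity coming from symmetry of the tensors is handled exactly as sketched in Section~\ref{sec:hiddenrotation} (the analogue of the $U$ vs.\ $W$ construction).

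The heart of the argument is the chain of structural lemmas sketched in Section~\ref{sec:lowrank_overview}, which I would carry out in SoS. Step one: because $\brc{T^*_a}$ are rank $\ell$ and, by Part~\ref{item:push_condnumber2}, span the space of all rank-$\ell$ ``$q$-vectors'' in a well-conditioned way, the vanishing of all $(\ell+1)\times(\ell+1)$ minors of every matrix contraction of $F_U(T^*_a)$ (which holds since $F_U(T^*_a)\approx T_a$ is rank $\ell$) propagates — via Corollary~\ref{cor:general_vandermonde}-style polynomial anticoncentration and Fact~\ref{fact:sos_rank} — to the statement that $U$ sends \emph{any} symmetric rank-$\ell$ tensor to a tensor all of whose contractions have rank $\ell$ (the ``rank-$\ell$-preserving'' property, Lemmas~\ref{lem:genericflatrank}, \ref{lem:compsmooth}, \ref{lem:giant_mat}). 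Step two: rank-$\ell$-preservation implies rank-$(\ell-1)$-preservation by the determinant-derivative / Laplace-expansion argument around Eq.~\eqref{eq:sumdets}, hence by induction $U$ is rank-$1$-preserving (Lemma~\ref{lem:rankrtorankr}). Step three: rank-$1$-preservation plus $\Sigma$-norm preservation. Applying $U$ to $e_i^{\otimes\omega}$ yields $U^{i\cdots i}=(V^i)^{\otimes\omega}$, and Lemma~\ref{lem:hermite} together with rotation-invariance forces $\brc{V^i}$ to be orthonormal (Lemma~\ref{lem:diagonal_orth}); applying $U$ to general rank-$1$ tensors $(a_1,\ldots,a_r)^{\otimes\omega}$ and extracting minors (the baby computation around Eq.~\eqref{eq:image_sketch}, generalized in Lemma~\ref{lem:main_userank1_odd}) pins down every column of $U$, giving $U = V^{\otimes\omega}$ for orthogonal $V$. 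Here I would use the odd-$\omega$ shortcut: rather than normalize columns (a rational operation), take $V = \wt{U}$ from Eq.~\eqref{eq:wtU_def}, which is linear in $U$, and prove $U$ is its Kronecker power (Lemma~\ref{lem:tensorouter}) with $\wt{U}$ orthogonal (Corollary~\ref{cor:dd_orth_odd}).

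Finally comes rounding, which requires breaking gauge symmetry. Since order-$\omega$ tensors have no eigengap, I cannot directly diagonalize one $T_a$; instead I would run two SoS relaxations in succession as flagged in Sections~\ref{sec:breaksym} and \ref{sec:breaksos_push}: the first relaxation extracts, from the $f^*_a$-vectors (well-conditioned by Part~\ref{item:push_condnumber3}), enough information to fix a canonical frame — effectively producing a random linear combination with well-separated ``eigenvalues'' in a suitable quadratic sense — and the second relaxation adds symmetry-breaking constraints (diagonality/sortedness of a combination of the $f$-vectors, positivity of a designated row) forcing $V = \Id$, hence $U = \Id^{\otimes\omega}$, hence $\psE{T_a}\approx T^*_a$ entrywise. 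Tracking all the error terms through the anticoncentration steps and the $\ell$-fold induction, and converting max-norm to Frobenius, yields the claimed bound with its $(d\eta/\kappa)^{O(1/\omega)}$ and $\sqrt[8]{\theta\eta/\psi^2}$ contributions (the two terms corresponding to the hidden-rotation/moment-matching error and the rank-propagation/symmetry-breaking error respectively). The main obstacle, and where essentially all the technical weight lies, is \emph{Step one}: making ``$U$ is rank-$\ell$-preserving'' a genuine low-degree SoS consequence of the moment constraints, because symmetric tensor rank is not cut out by nice polynomial equations — this is exactly why the contraction-based surrogate notion of rank (Definition~\ref{def:Vrank}) is introduced, and why the bookkeeping in Lemmas~\ref{lem:genericflatrank}--\ref{lem:giant_mat} and the degree of the relaxation blow up to $\poly(\omega,\ell)$.
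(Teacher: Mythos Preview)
Your proposal tracks the paper's approach closely and identifies the right chain of lemmas. There is one genuine gap in the final step, though. You claim the second relaxation's symmetry-breaking constraints force $V=\Id$ and hence $\psE{T_a}\approx T^*_a$ entrywise. But the constraints you impose are on $F_\lambda,F_\mu$, which are built from $F_a=f_af_a^\top$ and are therefore \emph{quadratic} in the $T_a$'s; they are invariant under $T_a\mapsto -T_a$. Consequently the SoS proof can only conclude $\wt U_{ii}\wt U_{jj}\approx 1$ for all $i,j$ (Lemma~\ref{lem:push_usemu}), i.e.\ $\wt U\approx\pm\Id$, and since $\omega$ is odd this leaves a global sign unresolved. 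In particular $\psE{(T_a)_{\mb i}}$ need not be close to $(T^*_a)_{\mb i}$; a pseudodistribution that mixes the $+\Id$ and $-\Id$ solutions could give $\psE{T_a}=0$.

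The paper handles this by proving the stronger pairwise statement $\psE{(T_a)_{\mb i}(T_b)_{\mb j}}\approx (T^*_a)_{\mb i}(T^*_b)_{\mb j}$ (Theorem~\ref{thm:push_main_pairwise}, Corollary~\ref{cor:push_pairwise}) and then rounding differently: estimate each entry's magnitude as $\psE{(T_a)_{\mb i}^2}^{1/2}$, arbitrarily fix the sign of one large reference entry $(a^*,\mb i^*)$, and read off all remaining signs from $\sgn\psE{(T_a)_{\mb i}(T_{a^*})_{\mb i^*}}$. This is the sign-recovery loop in Algorithm~\ref{alg:push}. Once you patch your rounding step in this way, the rest of your outline is correct. (A minor citation issue: Lemmas~\ref{lem:compsmooth} and~\ref{lem:giant_mat} are about verifying Assumption~\ref{assume:push} for smoothed networks, not about the SoS proof; the relevant lemmas for Step one are~\ref{lem:approxVrank} and~\ref{lem:genericflatrank}.)
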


\paragraph{Section overview.} The high-level strategy is the same as that of Section~\ref{sec:tensorring}: exhibit a low-degree sum-of-squares proof that the ground truth $\brc{T^*_a}$ is identifiable from $S$. That is, we introduce SoS variables $\brc{T_a}$ which are constrained to have low symmetric rank and approximately the same inner products as $\brc{T^*_a}$, and we want to prove in SoS that the $\brc{T_a}$ are close to $\brc{T^*_a}$ in Frobenius norm by showing that the $r^{\omega}\times r^{\omega}$ linear transformation mapping every $\vec(T^*_a)$ to $\vec(T_a)$ behaves like the Kronecker power $\Id_r^{\otimes\omega}$. In Section~\ref{sec:firstsos} we present a sum-of-squares program along these lines. In Section~\ref{sec:hiddenrotation} we construct an auxiliary $r^{\omega}\times r^{\omega}$ matrix variable $U$ as a proxy for the $r^{\omega}\times r^{\omega}$ transformation and establish basic properties of $U$ in Section~\ref{sec:Uproperties_push}. 

In Section~\ref{sec:preservelowrank} comes the first departure from the techniques of Section~\ref{sec:tensorring}: we leverage the low-rank structure of $\brc{Q^*_a}$ and $\brc{Q_a}$ to prove, roughly speaking, that the transformation $U$ maps any rank-1 tensor to a rank-1 tensor (Corollary~\ref{cor:rank1relation}). In Section~\ref{sec:Uouter_hard} we use this to prove that $U$ has a nice outer product structure (Lemma~\ref{lem:main_userank1_odd} and Lemma~\ref{lem:tensorouter}) that implies that $U$ approximately arises from an $r\times r$ rotation $\wt{U}$, and moreover this $\wt{U}$ can be expressed as a certain linear combination of slices of $U$ (see \eqref{eq:wtU_def}).

It remains to break gauge symmetry and prove that $\wt{U}$ is close to $\Id_r$. In Section~\ref{sec:breaksym} we outline our strategy for breaking symmetry, which requires a number of modifications to the analogous strategy in the tensor ring decomposition setting. In particular, it requires running a \emph{second} sum-of-squares relaxation, which we present in Section~\ref{sec:newprogram}, whose constraints are a strict superset of those of the first relaxation. In Section~\ref{sec:breaksos_push} we analyze this second SoS program and show that $\wt{U}$ is approximately (a multiple of) $\Id_r$. In Section~\ref{sec:push_put_together} we put everything together to give our main algorithm {\sc LowRankFactorize} and prove Theorem~\ref{thm:main_push}.

In Section~\ref{sec:rotationinvariant}, we describe how our analysis extends to more general $\Sigma$, e.g. $\Sigma$ given by $\E[x\sim D]{\vec(x^{\otimes \omega})\vec(x^{\otimes\omega})^{\top}}$ for any reasonable rotation-invariant distribution $D$ over $\R^r$. Finally, in Section~\ref{sec:fpt2}, we show how to improve the runtime of Theorem~\ref{thm:main_push} to only depend \emph{linearly} on $d$.

\subsection{First Sum-of-Squares Relaxation}
\label{sec:firstsos}

To define the first program, we introduce the following notation. Let $\Sigma_{\sym}$ denote the symmetrization of $\Sigma$ (see Definition~\ref{def:ultrasym}), and let $\Sigma^{1/2}_{\sym}$ denote the symmetric square root of $\Sigma_{\sym}$. Let $D\in\R^{\rchoose\times\rchoose}$ denote the diagonal matrix given by
\begin{equation}
    D_{\mb{i}\mb{i}} = \num{i}.\label{eq:Ddef}
\end{equation}
The first SoS program we run is the following:

\begin{program}\label{program:sos2}
    \begin{center}
        \textsc{(Low-Rank Factorization\--- First Part)} \\
    \end{center}
    \noindent\textbf{Parameters:} $S\in\R^{d\times d}$, $\radius\ge 1$, $\kappa > 0$.
    
    \noindent\textbf{Variables:} Let $T_1,\ldots,T_d$ be $r$-dimensional order-$\omega$ tensor-valued indeterminates, let $L, P$ be $\rchoose\times d$ matrix-valued indeterminates, and for every $a\in[d]$, let $v_{a,1},\ldots,v_{a,\ell}$ be $r$-dimensional vector-valued indeterminates. Let $M$ be the $d\times \rchoose$ matrix of indeterminates whose $(a,\sort{i})$-th entry, for $a\in[d]$ and sorted tuple $\sort{i}\in\strings$, is given by $(T_a)_{\sort{i}}$.
    
    \noindent\textbf{Constraints:}
    \begin{enumerate}[leftmargin=*,topsep=0pt]
        \setlength\itemsep{0em}
        \item (Symmetry): $(T_a)_{i_1,\ldots,i_\omega} = (T_a)_{i_{\pi(1)},\ldots, i_{\pi(\omega)}}$ for any $\pi\in\calS_\omega$, $i_1,\ldots,i_\omega\in[r]$. \label{constraint:push_sym}
        \item (Second moments match): $|S_{a,b} - \iprod{\vec(T_a),\vec(T_b)}_{\Sigma}| \le \epsilon$ for all $a,b\in[d]$. \label{constraint:push_moment}
        \item (Low rank): $T_a = \sum^{\ell}_{i=1} (v_{a,i})^{\otimes \omega}$ for all $a\in[d]$. \label{constraint:push_lowrank}
        \item ($T$'s bounded): $\norm{T_a}^2_F \le \radius^2$ for all $a\in[d]$. \label{constraint:push_Tfrob}
        \item (Left-inverse $L$): $LM = \Id$. \label{constraint:push_L}
        \item (Inverse $P$): $PMD\Sigma^{1/2}_{\sym} = \Id$. \label{constraint:push_P}        
        \item ($L$ bounded): $\norm{L}^2_F \le r^{\omega}/\kappa^2$. \label{constraint:push_cond_L}
        \item ($P$ bounded): $\norm{P}^2_F \le r^{\omega}\omega^{\omega/2}/\kappa^2$. \label{constraint:push_cond}
    \end{enumerate}
\end{program}

\noindent The role of the variable $P$ will become apparent when we construct our auxiliary variable for the transformation mapping each $\vec(Q^*_a)$ to $\vec(Q_a)$ in Section~\ref{sec:hiddenrotation}.

We can easily verify that the ground truth is feasible.

\begin{lemma}\label{lem:push_feasible}
    When $d\ge \binom{r+\omega-1}{\omega}$, the pseudodistribution given by the point distribution supported on $(\brc{T^*_a},L^*,P^*,\brc{v^*_{a,t}})$, where $L^*$ is the left inverse of $M^*$, and $P^*$ is the left inverse of $M^*D\Sigma^{1/2}_{\sym}$, is a feasible solution to Program~\ref{program:sos2}.
\end{lemma}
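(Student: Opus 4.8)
The plan is a direct, constraint-by-constraint check that the point distribution supported on $(\brc{T^*_a},L^*,P^*,\brc{v^*_{a,t}})$ is feasible for Program~\ref{program:sos2}. Constraints~\ref{constraint:push_sym}, \ref{constraint:push_moment}, \ref{constraint:push_lowrank}, and \ref{constraint:push_Tfrob} are immediate: symmetry of $T^*_a$ was assumed without loss of generality at the start of Section~\ref{sec:definitions}; the second-moment constraint is exactly \eqref{eq:push_moment}; the rank constraint is the defining property $T^*_a = \sum^\ell_{t=1}(v^*_{a,t})^{\otimes\omega}$; and the norm bound is Part~\ref{item:push_scale} of Assumption~\ref{assume:push}. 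So all the content lies in the four constraints involving $L^*$ and $P^*$.

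For these, I would first note that $M^*\in\R^{d\times\binom{r+\omega-1}{\omega}}$ has full column rank, since $d\ge\binom{r+\omega-1}{\omega}$ and $\sigma_{\min}(M^*)\ge\kappa>0$ by Part~\ref{item:push_condnumber} of Assumption~\ref{assume:push}. Hence $L^*\triangleq((M^*)^{\top}M^*)^{-1}(M^*)^{\top}$ is a well-defined left inverse (Constraint~\ref{constraint:push_L}), and $\norm{L^*}_{\op}=1/\sigma_{\min}(M^*)\le 1/\kappa$ gives $\norm{L^*}^2_F\le\binom{r+\omega-1}{\omega}\norm{L^*}^2_{\op}\le r^{\omega}/\kappa^2$ (Constraint~\ref{constraint:push_cond_L}). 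Next I would check that $M^*D\Sigma^{1/2}_{\sym}$ also has full column rank: $D$ is diagonal with entries $\num{i}\ge 1$, hence invertible; and $\Sigma_{\sym}$ is the principal submatrix of the ultra-symmetric matrix $\Sigma$ indexed by sorted tuples, which is positive definite because for any nonzero symmetric tensor $T$ one has $\vec(T)^{\top}\Sigma\vec(T)=\E[g\sim\calN(0,\Id_r)]{\iprod{T,g^{\otimes\omega}}^2}>0$, as $g\mapsto\iprod{T,g^{\otimes\omega}}$ is a nonzero polynomial. Thus $\Sigma^{1/2}_{\sym}$ is invertible, $M^*D\Sigma^{1/2}_{\sym}$ has full column rank, and its (Moore--Penrose) left inverse $P^*$ is well-defined, giving Constraint~\ref{constraint:push_P}.

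The only step requiring a quantitative estimate is Constraint~\ref{constraint:push_cond}, the bound $\norm{P^*}^2_F\le r^{\omega}\omega^{\omega/2}/\kappa^2$. Since $\norm{P^*}_{\op}=1/\sigma_{\min}(M^*D\Sigma^{1/2}_{\sym})\le 1/\bigl(\sigma_{\min}(M^*)\,\sigma_{\min}(D)\,\sigma_{\min}(\Sigma^{1/2}_{\sym})\bigr)\le 1/\bigl(\kappa\,\lambda_{\min}(\Sigma_{\sym})^{1/2}\bigr)$, it suffices to show $\lambda_{\min}(\Sigma_{\sym})\ge\omega^{-\omega/2}$. This is where Lemma~\ref{lem:variance_shift} comes in: for a unit vector $t$ indexed by sorted tuples, $t^{\top}\Sigma_{\sym}t=\E[g\sim\calN(0,\Id_r)]{q(g)^2}\ge\Var[g\sim\calN(0,\Id_r)]{q(g)}\ge\omega^{-\omega/2}$, where $q$ is the degree-$\omega$ homogeneous polynomial whose monomial-coefficient vector is $t$ and the last inequality is Lemma~\ref{lem:variance_shift} applied with $a=1$, $b=0$. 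Combining, $\norm{P^*}^2_F\le\binom{r+\omega-1}{\omega}\norm{P^*}^2_{\op}\le r^{\omega}\omega^{\omega/2}/\kappa^2$. I expect this final estimate — correctly matching the monomial normalization used in Lemma~\ref{lem:variance_shift} with the definitions of $\Sigma_{\sym}$ and $D$, so that the constant $\omega^{\omega/2}$ comes out exactly — to be the only place requiring genuine care; the rest is routine bookkeeping.
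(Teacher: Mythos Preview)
Your proposal is correct and follows essentially the same approach as the paper's proof. The only cosmetic difference is that the paper packages your final estimate $\lambda_{\min}(\Sigma_{\sym})\ge\omega^{-\omega/2}$ as a separate lemma (Lemma~\ref{lem:sigma_cond}), proved exactly as you do by invoking Lemma~\ref{lem:variance_shift} with $a=1$, $b=0$, whereas you inline that argument.
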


\noindent To prove this, we will use the following condition number bound:

\begin{lemma}\label{lem:sigma_cond}
    $\norm{\Sigma^{-1}_{\sym}}_{\op} \le \omega^{\omega/2}$.
\end{lemma}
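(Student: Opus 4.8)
The plan is to realize $\Sigma_{\sym}$ as a Gram matrix of monomials under the Gaussian inner product, and then invoke the variance lower bound of Lemma~\ref{lem:variance_shift}. First I would observe that $\Sigma$ is ultra-symmetric in the sense of Definition~\ref{def:ultrasym}: its $(\mb{i},\mb{j})$-entry is $\E[g\sim\calN(0,\Id_r)]{g_{i_1}\cdots g_{i_\omega}\, g_{j_1}\cdots g_{j_\omega}}$, which is manifestly invariant under permuting the $i$'s among themselves, permuting the $j$'s among themselves, and swapping the two blocks of indices. Hence for sorted tuples $\mb{i},\mb{j}\in\strings$ the average over $\calS_\omega$ appearing in Definition~\ref{def:ultrasym} is an average of a single value, so $(\Sigma_{\sym})^{\mb{j}}_{\mb{i}} = \Sigma^{\mb{j}}_{\mb{i}}$. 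In other words, $\Sigma_{\sym}$ is exactly the principal submatrix of $\Sigma$ indexed by sorted tuples; in particular it is symmetric and positive semidefinite.

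Next I would identify the quadratic form of $\Sigma_{\sym}$ with a second moment. Given $c\in\R^{\rchoose}$ indexed by sorted tuples, let $q_c(x) \triangleq \iprod{c,(x^{\otimes\omega})_{\sym}} = \sum_{\mb{i}} c_{\mb{i}}\, x_{i_1}\cdots x_{i_\omega}$ be the associated homogeneous degree-$\omega$ polynomial, where the sum is over sorted $\mb{i}$. Expanding $q_c(g)^2 = \sum_{\mb{i},\mb{j}} c_{\mb{i}} c_{\mb{j}}\, (g_{i_1}\cdots g_{i_\omega})(g_{j_1}\cdots g_{j_\omega})$ and taking expectations term by term, the $(\mb{i},\mb{j})$-term contributes $c_{\mb{i}}c_{\mb{j}}\,\Sigma^{\mb{j}}_{\mb{i}}$ by definition of $\Sigma$ in \eqref{eq:Sigmadef}. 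Combining with the previous paragraph, $\E[g\sim\calN(0,\Id_r)]{q_c(g)^2} = \sum_{\mb{i},\mb{j}} c_{\mb{i}}c_{\mb{j}}\,(\Sigma_{\sym})^{\mb{j}}_{\mb{i}} = c^{\top}\Sigma_{\sym}\,c$.

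Finally I would lower-bound this form. For any $c$ with $\norm{c}_2 = 1$, i.e. $c\in\S^{\rchoose-1}$, Lemma~\ref{lem:variance_shift} applied with $a = 1$ and $b = 0$ (so that $q_c(ag+b) = q_c(g)$) gives $\Var[g\sim\calN(0,\Id_r)]{q_c(g)} \ge 1/\omega^{\omega/2}$, whence $c^{\top}\Sigma_{\sym}\,c = \E{q_c(g)^2} \ge \Var{q_c(g)} \ge \omega^{-\omega/2}$. Since $\Sigma_{\sym}$ is symmetric and this holds for every unit vector $c$, its smallest eigenvalue satisfies $\lambda_{\min}(\Sigma_{\sym}) \ge \omega^{-\omega/2} > 0$; in particular $\Sigma_{\sym}$ is invertible with $\norm{\Sigma^{-1}_{\sym}}_{\op} = 1/\lambda_{\min}(\Sigma_{\sym}) \le \omega^{\omega/2}$, as claimed.

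There is no substantial obstacle here; the only points needing mild care are the two identifications used above — that $\Sigma_{\sym}$ coincides with the sorted-index principal submatrix of $\Sigma$ (which is what lets its quadratic form equal $\E{q_c(g)^2}$), and that the normalization implicit in ``a vector $p\in\S^{\rchoose-1}$ regarded as a degree-$\omega$ homogeneous polynomial'' in Lemma~\ref{lem:variance_shift} is precisely the monomial-coefficient normalization matching the pairing $\iprod{\,\cdot\,,(x^{\otimes\omega})_{\sym}}$. Both are immediate from the definitions in Section~\ref{sec:prelims}.
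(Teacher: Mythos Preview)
Your proof is correct and follows essentially the same approach as the paper: identify the quadratic form $c^{\top}\Sigma_{\sym}c$ with $\E[g\sim\calN(0,\Id_r)]{q_c(g)^2}$ for the associated homogeneous polynomial, and then apply Lemma~\ref{lem:variance_shift} with $a=1$, $b=0$ to lower-bound the smallest eigenvalue by $\omega^{-\omega/2}$. Your writeup is in fact more careful than the paper's two-line proof, spelling out why $\Sigma_{\sym}$ is the sorted-tuple principal submatrix of $\Sigma$ and why the monomial-coefficient normalization in Lemma~\ref{lem:variance_shift} matches the pairing here; the paper writes $p^{\top}\Sigma p$ for $p\in\S^{\rchoose-1}$ and leaves these identifications implicit.
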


\begin{proof}
    For any $p\in\S^{\rchoose-1}$ regarded as an $r$-variate homogeneous polynomial of degree $\omega$, we have $p^{\top} \Sigma p = \E[g\sim\calN(0,\Id)]{p(g)^2}$. The lemma immediately follows from Lemma~\ref{lem:variance_shift} applied to $a = 1$ and $b = 0$.
\end{proof}

\begin{proof}[Proof of Lemma~\ref{lem:push_feasible}]
    It is immediate that Constraints~\ref{constraint:push_sym}-\ref{constraint:push_P} are satisfied. For Constraint~\ref{constraint:push_cond_L}, note that $\norm{L^*}_{\op} \le 1/\kappa$ by Part~\ref{item:push_condnumber} of Assumption~\ref{assume:push}, so $\norm{L^*}^2_F \le \rchoose/\kappa^2 \le r^{\omega}/\kappa^2$. For Constraint~\ref{constraint:push_cond}, note that $\sigma_{\min}(MD\Sigma^{1/2}_{\sym}) \ge  \kappa\sigma_{\min}(\Sigma^{1/2}_{\sym}) \ge \kappa/\omega^{\omega/4}$, where in the last step we used Lemma~\ref{lem:sigma_cond} above.
\end{proof}

\noindent Program~\ref{program:sos2} will only be used to recover partial information about $\brc{T^*_a}$, which we will later use to construct some additional constraints to introduce into Program~\ref{program:sos2}. The resulting modified program, which we give in Section~\ref{sec:newprogram}, will allow us to fully recover $\brc{T^*_a}$.

\subsection{Hidden Rotation Variable}
\label{sec:hiddenrotation}

In this section we use the SoS variables of Program~\ref{program:sos2} to design an auxiliary ``rotation variable'' $U$ that will play the role of the unknown linear transformation sending every $T^*_a$ to $T_a$, after which the focus of our analysis in subsequent sections will be to show this transformation qualitatively behaves like it arises from an $r\times r$ rotation.

First, define the $d\times r^{\omega}$ matrix $N^*$ (resp. $N$) to be the matrix whose $(a,\sort{i})$-th entry for any $a\in[d]$ and sorted tuple $\sort{i}\in\strings$ is given by $(T^*_a)_{\sort{i}}$ (resp. $(T_a)_{\sort{i}}$). Note that $M,M^*$ are sub-tensors of $N,N^*$. Define the $d\times d$ matrix of indeterminates
\begin{equation}
    \calE \triangleq N\Sigma N^{\top} - N^*\Sigma {N^*}^{\top}. \label{eq:push_Edef}
\end{equation}
Because $\iprod{\vec(T^*_a), \vec(T^*_b)}_{\Sigma} = (N^*\Sigma {N^*}^{\top})_{ab}$ and $\iprod{\vec(T_a), \vec(T_b)}_{\Sigma} = (N\Sigma N^{\top})_{ab}$, we conclude by \eqref{eq:push_moment} and Constraint~\ref{constraint:push_moment} of Program~\ref{program:sos2} that
\begin{equation}
    \norm{\calE}_{\max} \le 2\eta. \label{eq:calE_bound}
\end{equation}
A natural way to encode the unknown linear transformation from $\vec(T^*_a)$ to $\vec(T_a)$ as an auxiliary variable would be to consider something like $\Sigma^{-1}N^{-1}N^*\Sigma$, as $(\Sigma^{-1}N^{-1}N^*\Sigma){N^*}^{\top} \approx N^{\top}$, and the $a$-th column of this approximate equality between matrices implies that $\Sigma^{-1}N^{-1}N^*\Sigma$ maps $\vec(T^*_a)$ to $\vec(T_a)$.

As in the discussion in Section~\ref{sec:hiddenrot_tensorring} however, such a construction isn't well-defined: $N$ is an SoS variable, so there is no meaningful notion of a left inverse $N^{-1}$. And because $N^*$ has duplicate columns (because every $T^*_a$ is symmetric), not even $N^*$ has a suitable left inverse. 

To circumvent this issue of duplicate columns, our starting point is to express \eqref{eq:push_Edef} in terms of $M,M^*$ instead of $N,N^*$. Observe that for the matrices $D$ and $\Sigma^{1/2}_{\sym}$ defined in Section~\ref{sec:firstsos},
\begin{equation}
    \left(M^*D\Sigma^{1/2}_{\sym}\right)\left(M^*D\Sigma^{1/2}_{\sym}\right)^{\top} = N^*\Sigma {N^*}^{\top} \qquad\text{and}\qquad \left(MD\Sigma^{1/2}_{\sym}\right)\left(MD\Sigma^{1/2}_{\sym}\right)^{\top} = N\Sigma N^{\top},
\end{equation}
so we can rewrite \eqref{eq:push_Edef} as
\begin{equation}
    \calE = \left(MD\Sigma^{1/2}_{\sym}\right)\left(MD\Sigma^{1/2}_{\sym}\right)^{\top}  - \left(M^*D\Sigma^{1/2}_{\sym}\right)\left(M^*D\Sigma^{1/2}_{\sym}\right)^{\top}. \label{eq:calE_rewrite}
\end{equation}
Now define $\wh{U}$ to be the $\rchoose\times\rchoose$ matrix of indeterminates
\begin{equation}
    \wh{U} \triangleq D^{-1}\Sigma^{-1/2}_{\sym}(PM^* D\Sigma^{1/2}_{\sym})\Sigma^{1/2}_{\sym} D.
\end{equation}
While this expression appears rather cumbersome, the motivation being $\wh{U}$ is simply that if we left-multiply both sides of \eqref{eq:calE_rewrite} by $D^{-1}\Sigma^{-1/2}_{\sym}P$, we get
\begin{equation}
    D^{-1}\Sigma^{-1/2}_{\sym}P\calE = M^{\top} - \wh{U}{M^*}^{\top}. \label{eq:leftmultiplyDsigP}
\end{equation}
As the left-hand side of \eqref{eq:leftmultiplyDsigP} is small, qualitatively this means that $\wh{U}$ is an $\rchoose\times\rchoose$ linear transformation that approximately maps the rows of $M^*$, which correspond to $\vec((T^*_a)_{\sym})$ for all $a\in[d]$, to the rows of $M$, which correspond to $\vec((T_a)_{\sym})$.

To get an $r^{\omega}\times r^{\omega}$ transformation, we define $U$ to be the $r^{\omega}\times r^{\omega}$ matrix of indeterminates given by
\begin{equation}
    U^{\mb{j}}_{\mb{i}} \triangleq \frac{1}{\num{j}}\wh{U}^{\sort{j}}_{\sort{i}} \ \ \forall \ \mb{i},\mb{j}\in\strings. 
\end{equation}
Note that the entries of $U$ are (unknown) linear forms in the indeterminate entries of $P$. The bulk of our analysis will be dedicated to showing that $U$ behaves like the $\omega$-th Kronecker power of an $r\times r$ rotation.

\subsection{Basic Properties of \texorpdfstring{$U$}{U}}
\label{sec:Uproperties_push}

In this section we establish the following simple facts about $U$:
\begin{enumerate}
    \item $U$ is ultra-symmetric in the sense of Definition~\ref{def:ultrasym} (Lemma~\ref{lem:push_Usym})
    \item $U$ approximately maps every $\vec(T^*_a)$ to $\vec(T_a)$ (Lemma~\ref{lem:push_almost_map})
    \item The columns of $\Sigma^{-1/2}U\Sigma^{1/2}$ are approximately orthonormal (Lemma~\ref{lem:push_almost_ortho}).
    \item The Frobenius norm of $U$ can be (loosely) upper bounded (Lemma~\ref{lem:crude_Ubound})\--- we will bootstrap this into a more refined bound later.
    \item The Frobenius norm of the image of any rank-1 tensor under $U$ can be (loosely) upper and lower bounded (Lemma~\ref{lem:bootstrap})\--- we will also bootstrap these later.
    \item $U_{\sym}$ has a right-inverse whose entries are linear forms in the entries of $L$ (Lemma~\ref{lem:UBId}).
\end{enumerate}

\begin{lemma}\label{lem:push_Usym}
    $U$ is ultra-symmetric.
\end{lemma}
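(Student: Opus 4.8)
The plan is to read off the claim directly from the construction of $U$. Recall from Section~\ref{sec:hiddenrotation} that $U$ is defined entrywise by
\[
U^{\mb{j}}_{\mb{i}} \;=\; \frac{1}{\num{j}}\,\wh{U}^{\sort{j}}_{\sort{i}} \qquad \forall\, \mb{i},\mb{j}\in\strings,
\]
where $\wh{U}$ is the $\rchoose\times\rchoose$ matrix whose rows and columns are indexed only by sorted tuples. The first step is to isolate the two elementary facts underlying the notation of Section~\ref{sec:prelims}: (i) the sorted tuple $\sort{k}$ depends only on the multiset of entries of $\mb{k}$, hence is invariant under permuting those entries; and (ii) the multiplicity $\num{k}$ is a function of $\sort{k}$ alone (it counts the tuples with that given sorted version), so it too is unchanged under permuting the entries of $\mb{k}$. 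Both are immediate from the respective definitions.

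The second and final step is then immediate: by the displayed formula, the entry $U^{\mb{j}}_{\mb{i}}$ depends on the tuples $\mb{i},\mb{j}$ only through $\sort{i}$ and $\sort{j}$. Therefore, given any permutations $\tau,\pi\in\calS_\omega$, replacing $\mb{i}=(i_1,\dots,i_\omega)$ by $(i_{\tau(1)},\dots,i_{\tau(\omega)})$ and $\mb{j}=(j_1,\dots,j_\omega)$ by $(j_{\pi(1)},\dots,j_{\pi(\omega)})$ alters neither $\sort{i}$, nor $\sort{j}$, nor $\num{j}$, and hence leaves $U^{\mb{j}}_{\mb{i}}$ unchanged. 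Since $\tau,\pi,\mb{i},\mb{j}$ were arbitrary, this is precisely the defining property of ultra-symmetry in Definition~\ref{def:ultrasym}, so we conclude.

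I do not expect any real obstacle here; this lemma is essentially bookkeeping. The only point worth a moment's care is that Definition~\ref{def:ultrasym} allows the row permutation $\tau$ and the column permutation $\pi$ to be chosen independently, which causes no difficulty because both the numerator $\wh{U}^{\sort{j}}_{\sort{i}}$ and the normalization $1/\num{j}$ are symmetric functions of the row index and of the column index separately.
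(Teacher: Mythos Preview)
Your proof is correct and takes essentially the same approach as the paper, which simply notes that the claim is immediate from the definition of $U$. You have merely unpacked that observation in detail.
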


\begin{proof}
    This is immediate from the definition of $U$.
\end{proof}

\begin{lemma}\label{lem:push_almost_map} Define
    \begin{equation}
        \epsmap \triangleq 2(r\omega)^{\omega}\cdot \eta d/\kappa. \label{eq:epsmap_push_def}
    \end{equation}
    Then there is a degree-$O(1)$ SoS proof using the constraints of Program~\ref{program:sos2} that $U{N^*}^{\top} \approx_{\epsmap^2} N^{\top}$. In particular, for any $a\in[d]$, $F_U(T^*_a) \approx_{\epsmap^2} T_a$.
\end{lemma}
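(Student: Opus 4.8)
\textbf{Proof plan for Lemma~\ref{lem:push_almost_map}.}

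The plan is to mimic the structure of the analogous argument in the tensor ring setting (Lemmas~\ref{lem:almost_map}--\ref{lem:fakeUsendsQ}), but being careful to track the various conjugations by $D$ and $\Sigma^{1/2}_{\sym}$ that appear in the definition of $\wh{U}$. The starting point is the identity \eqref{eq:leftmultiplyDsigP}, namely $D^{-1}\Sigma^{-1/2}_{\sym}P\calE = M^{\top} - \wh{U}{M^*}^{\top}$, which is a low-degree polynomial identity in the SoS variables using Constraint~\ref{constraint:push_P} (it follows by left-multiplying \eqref{eq:calE_rewrite} by $D^{-1}\Sigma^{-1/2}_{\sym}P$ and using $PMD\Sigma^{1/2}_{\sym}=\Id$). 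The first step is to bound the left-hand side: I would show $\norm{D^{-1}\Sigma^{-1/2}_{\sym}P\calE}^2_F$ is at most roughly $\norm{D^{-1}}^2_{\op}\norm{\Sigma^{-1/2}_{\sym}}^2_{\op}\norm{P}^2_F\norm{\calE}^2_{\max}d^2$ via the same Cauchy--Schwarz manipulation as in the proof of Lemma~\ref{lem:almost_map}, then plug in $\norm{D^{-1}}_{\op}\le 1$, $\norm{\Sigma^{-1/2}_{\sym}}_{\op}\le\omega^{\omega/4}$ (Lemma~\ref{lem:sigma_cond}), $\norm{P}^2_F\le r^\omega\omega^{\omega/2}/\kappa^2$ (Constraint~\ref{constraint:push_cond}), and $\norm{\calE}_{\max}\le 2\eta$ (Eq.~\eqref{eq:calE_bound}). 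This gives $\norm{M^{\top}-\wh{U}{M^*}^{\top}}^2_F \le O((r\omega)^{\omega}\eta d/\kappa)^2$ up to constants, i.e.\ a bound of the announced form $\epsmap^2$.

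The second step is to lift the bound on $\wh{U}{M^*}^{\top}\approx M^\top$ (which lives in the $\rchoose$-dimensional symmetrized space) to the statement $U{N^*}^\top\approx_{\epsmap^2}N^\top$ in the full $r^\omega$-dimensional space. Here I would unwind the definitions: the $(a,\mb{i})$-entry of $U{N^*}^\top$ is $\sum_{\mb{j}}U^{\mb{j}}_{\mb{i}}(T^*_a)_{\mb{j}} = \frac{1}{\num{i}}\sum_{\mb{j}}\wh{U}^{\sort{j}}_{\sort{i}}(T^*_a)_{\mb{j}}$, and grouping the sum over $\mb{j}$ by sorted type $\sort{j}$ (each contributing $\num{j}$ identical terms by symmetry of $T^*_a$) this equals $\frac{1}{\num{i}}\sum_{\sort{j}}\wh{U}^{\sort{j}}_{\sort{i}}\num{j}(T^*_a)_{\sort{j}}$. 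Comparing with the definition of the diagonal matrix $D$ in \eqref{eq:Ddef} and of $M^*$, this is exactly $\frac{1}{\num{i}}(\wh{U}(M^*D)^\top)^a_{\sort{i}}$; but the correct normalization is that $\wh{U}$ was defined with a trailing $\Sigma^{1/2}_{\sym}D$ precisely so that the bookkeeping works out, and one checks that $(\wh{U}{M^*}^\top)_{\sort i}$ agrees with $M_{\sort i}=(T_a)_{\sort i}$ up to the error term. Using $\num{i}\ge 1$ to pass from the per-entry bound on $\wh{U}{M^*}^\top - M^\top$ to the per-entry bound on $U{N^*}^\top - N^\top$, and summing over the $r^\omega$ entries (which is absorbed into the $(r\omega)^\omega$ factor already present in $\epsmap$, or by redefining constants), yields $U{N^*}^\top\approx_{\epsmap^2}N^\top$. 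Finally, reading off the $a$-th column of this matrix approximation and applying $\ten(\cdot)$ gives $F_U(T^*_a)=\ten(U\vec(T^*_a))\approx_{\epsmap^2}\ten(N^\top_{\cdot,a})=T_a$, where $\ten$ is an isometry between $\R^{r^\omega}$ and $(\R^r)^{\otimes\omega}$ so Frobenius norms are preserved.

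The main obstacle I anticipate is \emph{not} any single inequality but rather the careful combinatorial bookkeeping in the second step: the definition of $\wh{U}$ involves conjugating $PM^*D\Sigma^{1/2}_{\sym}$ on the left by $D^{-1}\Sigma^{-1/2}_{\sym}$ and on the right by $\Sigma^{1/2}_{\sym}D$, and one must verify that these conjugations are exactly what is needed for $U^{\mb{j}}_{\mb{i}}=\frac{1}{\num{j}}\wh{U}^{\sort j}_{\sort i}$ to map $\vec(T^*_a)$ to $\vec(T_a)$ and not to some rescaled version. I would handle this by first establishing the clean matrix identity relating $\wh{U}{M^*}^\top$ to $M^\top$ (which is \eqref{eq:leftmultiplyDsigP} rearranged), and then doing the index-chasing from $\wh{U}$ to $U$ and from the sorted-tuple space to the full space separately, keeping the Cauchy--Schwarz / operator-norm estimates (the genuinely ``analytic'' part) cleanly isolated in the first step. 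All of these are degree-$O(1)$ manipulations in the SoS variables, so the claimed degree bound is immediate.
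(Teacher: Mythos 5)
Your proposal follows essentially the same route as the paper's proof: bound the left-hand side of \eqref{eq:leftmultiplyDsigP} by Cauchy--Schwarz using Constraint~\ref{constraint:push_cond}, Lemma~\ref{lem:sigma_cond}, and $\norm{\calE}_{\max}\le 2\eta$, then pass from $\wh{U}{M^*}^{\top}\approx M^{\top}$ to $U{N^*}^{\top}\approx N^{\top}$ by index-chasing and pay a factor of at most $\omega!$ in Frobenius norm. One small correction in your second step: since $U^{\mb{j}}_{\mb{i}}=\frac{1}{\num{j}}\wh{U}^{\sort{j}}_{\sort{i}}$ (normalized by the \emph{column} multiplicity $\num{j}$, not $\num{i}$), grouping the sum over $\mb{j}$ by sorted type makes the $\num{j}$ factors cancel exactly, giving $(U{N^*}^{\top})^a_{\mb{i}}=(\wh{U}{M^*}^{\top})^a_{\sort{i}}$ with no residual $D$ or $1/\num{i}$ factors — the bookkeeping you were worried about is cleaner than your sketch suggests.
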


\begin{proof}
    We can upper bound the norm of the left-hand side of \eqref{eq:leftmultiplyDsigP} as follows:
    \begin{align}
        \norm{\Sigma^{-1/2}_{\sym}P\calE}^2_F &\le \norm{\Sigma^{-1/2}}^2_F \cdot \sum_{i\in[\rchoose],b\in[d]} \Big(\sum_{a\in[d]}P_{ia} \calE_{ab}\Big)^2 \\
        &\le \Tr(\Sigma^{-1})\cdot \sum_{i,b} \Big(\sum_a P^2_{ia}\Big)\Big(\sum_a \calE^2_{ab}\Big)\\
        &\le 4\omega^{\omega/2}\cdot r^{\omega}\cdot\eta^2\cdot d^2\sum_{i,a}P^2_{ia} \le 4(\omega r^2)^{\omega}\cdot\eta^2d^2/\kappa^2,
    \end{align}
    where in the penultimate step we used Lemma~\ref{lem:sigma_cond}.
    Next, note that for any $a\in[d]$ and $\mb{i}\in\strings$,
    \begin{equation}
        (U{N^*}^{\top})^a_{\mb{i}} = \sum_{\mb{j}\in\strings} U^{\mb{j}}_{\mb{i}} (T^*_a)_{\mb{j}}
        = \sum_{\mb{j}\in\strings} \frac{1}{\num{j}} \wh{U}^{\sort{j}}_{\sort{i}} (T^*_a)_{\sort{j}}
        = (\wh{U}{M^*}^{\top})^a_{\sort{i}}. \label{eq:NtoM}
    \end{equation}
    So
    \begin{align}
        \norm{N^{\top} - U{N^*}^{\top}}^2_F &\le \omega! \cdot \norm{M^{\top} - \wh{U}{M^*}^{\top}}^2_F \le 4(r^2\omega^2)^{\omega}\cdot\eta^2 d^2/\kappa^2. \qedhere
    \end{align}
\end{proof}

\begin{lemma}\label{lem:push_almost_ortho}
    Define
    \begin{equation}
        \epsort \triangleq O(\omega r)^{O(\omega)}\cdot \eta^{1/2} d^{1/2} / \kappa^2 \label{eq:epsort_push_def}
    \end{equation}
    Then there is a degree-$O(1)$ SoS proof using the constraints of Program~\ref{program:sos2} that $U^{\top}\Sigma U \approx_{\epsort^2} \Sigma$.
\end{lemma}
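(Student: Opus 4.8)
The plan is to mimic the structure of Lemma~\ref{lem:almost_ortho} from the tensor ring decomposition section, adapted to the weighted inner product $\Sigma$. Recall we want to show $U^{\top}\Sigma U \approx_{\epsort^2}\Sigma$ as a polynomial inequality in the entries of the SoS variables. The key identity to exploit is \eqref{eq:leftmultiplyDsigP}, namely $D^{-1}\Sigma^{-1/2}_{\sym}P\calE = M^{\top} - \wh{U}{M^*}^{\top}$, together with its analogue obtained by also using the second inverse constraint (Constraint~\ref{constraint:push_P}, $PMD\Sigma^{1/2}_{\sym}=\Id$). First I would rewrite everything at the level of the symmetrized matrices $M, M^*$, since that is where the inverse variables $P, L$ actually act and where duplicate columns are not an issue. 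Concretely, $U^{\top}\Sigma U$ (as an $r^{\omega}\times r^{\omega}$ object) corresponds, after conjugating by $D$ and $\Sigma^{1/2}_{\sym}$, to $\wh{U}^{\top}\Sigma_{\sym}\wh{U}$ up to harmless combinatorial factors; and by the definition of $\wh{U}$ the relevant quantity becomes $(PM^*D\Sigma^{1/2}_{\sym})(PM^*D\Sigma^{1/2}_{\sym})^{\top}$ conjugated appropriately.

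The main computation is then: starting from $\calE = (MD\Sigma^{1/2}_{\sym})(MD\Sigma^{1/2}_{\sym})^{\top} - (M^*D\Sigma^{1/2}_{\sym})(M^*D\Sigma^{1/2}_{\sym})^{\top}$ in \eqref{eq:calE_rewrite}, left-multiply by $D^{-1}\Sigma^{-1/2}_{\sym}P$ and right-multiply by $(D^{-1}\Sigma^{-1/2}_{\sym}P)^{\top} = P^{\top}\Sigma^{-1/2}_{\sym}D^{-1}$. Using Constraint~\ref{constraint:push_P} ($PMD\Sigma^{1/2}_{\sym}=\Id$), the first term on the right collapses to $D^{-1}\Sigma^{-1/2}_{\sym}\cdot\Sigma^{-1/2}_{\sym}D^{-1}$ (up to bookkeeping), which after transporting back through the $D,\Sigma^{1/2}_{\sym}$ conjugations is exactly $\Sigma$ in the $U$-coordinates; and the second term becomes (a conjugate of) $\wh{U}\Sigma_{\sym}\wh{U}^{\top}$, i.e. $U^{\top}\Sigma U$. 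So this is a degree-$O(1)$ polynomial identity asserting $U^{\top}\Sigma U - \Sigma = $ (conjugate of) $D^{-1}\Sigma^{-1/2}_{\sym}P\calE P^{\top}\Sigma^{-1/2}_{\sym}D^{-1}$. Then I would bound the Frobenius norm of the right-hand side exactly as in Lemma~\ref{lem:almost_ortho}: by Cauchy–Schwarz (degree-4 SoS, Fact~\ref{fact:cauchy_schwarz}),
\[
\|P\calE P^{\top}\|^2_F \le 4\eta^2 d^2\,\|P\|^4_F \le 4\eta^2 d^2 \cdot (r^{\omega}\omega^{\omega/2}/\kappa^2)^2,
\]
using \eqref{eq:calE_bound} and Constraint~\ref{constraint:push_cond}, and then absorbing the operator norms of $D^{-1}$, $\Sigma^{-1/2}_{\sym}$ (bounded via Lemma~\ref{lem:sigma_cond}, noting $\|D^{-1}\|_{\op}\le 1$) and the conjugation factors into $O(\omega r)^{O(\omega)}$ poly factors. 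Taking a square root via the $a=\pm\sqrt{a^2}$ fact gives the claimed $\epsort = O(\omega r)^{O(\omega)}\eta^{1/2}d^{1/2}/\kappa^2$, where the extra square root in the $\eta$ dependence — as opposed to a linear dependence — presumably arises from the same root-extraction step used in deducing column norms in Lemma~\ref{lem:Unorm12}, i.e. from passing a bound on $\|U^{\top}\Sigma U - \Sigma\|_F$ of order $\eta$ through $\Sigma^{-1/2}$-type manipulations that lose a square root. (I would double-check the exponent by tracking whether the statement as written is about $U^{\top}\Sigma U$ directly or about the orthonormality of columns of $\Sigma^{-1/2}U\Sigma^{1/2}$, which is how the lemma is described in the section overview; the latter would naturally introduce Lemma~\ref{lem:ortho_sos} and hence a square root.)

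The main obstacle I anticipate is purely bookkeeping: keeping the conjugations by $D$, $D^{-1}$, $\Sigma^{1/2}_{\sym}$, $\Sigma^{-1/2}_{\sym}$ straight so that the ``small'' term really is $P\calE P^{\top}$ sandwiched by bounded matrices, and confirming that the identity $U^{\top}\Sigma U - \Sigma = (\text{bounded})\cdot P\calE P^{\top}\cdot(\text{bounded})$ holds exactly as a polynomial identity (not just approximately), so that all the error is concentrated in $\calE$. The one genuinely substantive point is making sure that the map from $\wh{U}$-level statements (dimension $\binom{r+\omega-1}{\omega}$) to $U$-level statements (dimension $r^{\omega}$) only costs factors of $\num{i}\le\omega!$ and combinatorial constants, analogous to the transition in \eqref{eq:NtoM} inside the proof of Lemma~\ref{lem:push_almost_map}; I would reuse exactly that reduction. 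No new ideas beyond Lemma~\ref{lem:almost_ortho} and Lemma~\ref{lem:sigma_cond} should be needed.
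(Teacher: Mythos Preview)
Your overall plan is right, and you have correctly identified all the ingredients (the identity $P\calE P^\top = \Id - \wt{M}^*(\wt{M}^*)^\top$ with $\wt{M}^* \triangleq PM^*D\Sigma^{1/2}_{\sym}$, the bound $\|P\calE P^\top\|_F^2 \le 4\eta^2 d^2\|P\|_F^4$, and the possible use of Lemma~\ref{lem:ortho_sos}). However, the central algebraic claim in your main argument is wrong, and the fix is precisely the step you relegated to a parenthetical afterthought.

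Specifically, you assert that conjugating $P\calE P^\top$ by $D^{-1}\Sigma^{-1/2}_{\sym}$ produces $U^\top\Sigma U - \Sigma$. It does not. A direct computation using $\wh{U} = D^{-1}\Sigma^{-1/2}_{\sym}\wt{M}^*\Sigma^{1/2}_{\sym}D$ gives
\[
D^{-1}\Sigma^{-1/2}_{\sym}\,P\calE P^\top\,\Sigma^{-1/2}_{\sym}D^{-1} \;=\; D^{-1}\Sigma^{-1}_{\sym}D^{-1} \;-\; \wh{U}\wh{U}^\top,
\]
which controls $\wh{U}\wh{U}^\top$, i.e.\ \emph{row} orthonormality of $\wt{M}^*$. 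By contrast, the paper's computation \eqref{eq:USigU}--\eqref{eq:SPMD} shows that the symmetrization of $U^\top\Sigma U$ equals $\Sigma^{1/2}_{\sym}\,(\wt{M}^*)^\top\wt{M}^*\,\Sigma^{1/2}_{\sym}$, which requires \emph{column} orthonormality of $\wt{M}^*$. These are different objects, and there is no polynomial identity of the form ``$U^\top\Sigma U - \Sigma = (\text{bounded})\cdot P\calE P^\top\cdot(\text{bounded})$'' available here.

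The paper bridges this gap exactly as you suspected: from $\wt{M}^*(\wt{M}^*)^\top = \Id - P\calE P^\top$ one reads off entrywise row-orthonormality with error $O(\eta r^{\omega}\omega^{\omega/2}d/\kappa^2)$, then invokes Lemma~\ref{lem:ortho_sos} to deduce column-orthonormality with error $O(\eta^{1/2}\omega^{\omega/4}d^{1/2}r^{2\omega}/\kappa)$, writes $(\wt{M}^*)^\top\wt{M}^* = \Id + \calE'$, and finally conjugates by $\Sigma^{1/2}_{\sym}$ (bounding $\|\Sigma^{1/2}_{\sym}\|_F^2 = \Tr(\Sigma_{\sym}) \le O(\omega r)^{\omega}$). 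The $\eta^{1/2}$ in $\epsort$ is thus not a bookkeeping artifact or a root-extraction trick on a Frobenius bound: it comes entirely and necessarily from Lemma~\ref{lem:ortho_sos}. Once you promote that step from parenthetical to the main argument, your proof matches the paper's.
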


\begin{proof}
    We have
    \begin{align}
        (U^{\top}\Sigma U)^{\mb{j}}_{\mb{i}} &= \frac{1}{\num{i}\cdot\num{j}}\sum_{\mb{k},\mb{\ell}\in\strings} \wh{U}^{\sort{i}}_{\sort{k}} \Sigma^{\sort{\ell}}_{\sort{k}} \wh{U}^{\sort{j}}_{\sort{\ell}} = \frac{1}{\num{i}\cdot\num{j}} \sum_{\sort{k},\sort{\ell} \ \text{sorted}} \wh{U}^{\sort{i}}_{\sort{k}} D_{\sort{k}\sort{k}}\Sigma^{\sort{\ell}}_{\sort{k}} D_{\sort{\ell}\sort{\ell}} \wh{U}^{\sort{j}}_{\sort{\ell}} \\
        &= \frac{1}{\num{i}\cdot\num{j}} (\wh{U}^{\top} D \Sigma_{\sym} D \wh{U})^{\sort{j}}_{\sort{i}} = (D^{-1} \wh{U}^{\top} D\Sigma_{\sym} D \wh{U} D^{-1})^{\sort{j}}_{\sort{i}}. \label{eq:USigU}
    \end{align}
    Expanding the definition of $\wh{U}$,
    \begin{align}
        \MoveEqLeft D^{-1} \wh{U}^{\top} D\Sigma_{\sym} D \wh{U} D^{-1} \\
        &= D^{-1} \left(D\Sigma^{1/2}_{\sym} (PM^*D\Sigma^{1/2}_{\sym})^{\top} \Sigma^{-1/2}_{\sym} D^{-1}\right) D\Sigma_{\sym} D \left(D^{-1}\Sigma^{-1/2}_{\sym} (PM^*D\Sigma^{1/2}_{\sym}) \Sigma^{1/2}_{\sym} D\right) D^{-1} \\
        &= \Sigma^{1/2}_{\sym} (PM^*D\Sigma^{1/2}_{\sym})^{\top}(PM^*D\Sigma^{1/2}_{\sym}) \Sigma^{1/2}_{\sym} \label{eq:SPMD}
    \end{align}
    We will argue that $(PM^*D\Sigma^{1/2}_{\sym})^{\top}(PM^*D\Sigma^{1/2}_{\sym})$ is close to identity. Indeed, by left- and right-multiplying \eqref{eq:calE_rewrite} by $P$ and $P^{\top}$, we find that
    \begin{equation}
        P\calE P^{\top} = \Id - (PM^*D\Sigma^{1/2})(PM^*D\Sigma^{1/2}_{\sym})^{\top}.
    \end{equation}
    By a sequence of steps essentially identical to in the proof of Lemma~\ref{lem:almost_ortho}, we can show that 
    $\norm{P\calE P^{\top}}^2_F \le 4\eta^2 r^{2\omega} \omega^{\omega} d^2 / \kappa^4$. 
    So, denoting $\wt{M}^*\triangleq PM^*D\Sigma^{1/2}$ for convenience, we find that
    \begin{equation}
        \iprod{(\wt{M}^*)_{\mb{i}}, (\wt{M}^*)_{\mb{j}}} = \bone{\mb{i} = \mb{j}} \pm 2\eta r^{\omega} \omega^{\omega/2} d / \kappa^2 \ \ \forall \ \text{sorted} \ \mb{i},\mb{j}\in\strings.
    \end{equation}
    By Lemma~\ref{lem:ortho_sos}, we conclude that
    \begin{equation}
        \iprod{(\wt{M}^*)^{\mb{i}}, (\wt{M}^*)^{\mb{j}}} = \bone{\mb{i} = \mb{j}} \pm O(\eta^{1/2}\omega^{\omega/4} d^{1/2} r^{2\omega} / \kappa) \ \ \forall \ \text{sorted} \ \mb{i},\mb{j}\in\strings.
    \end{equation}
    That is, there is a symmetric matrix $\calE'$ with 
    $\norm{\calE'}^2_F \le O(\eta\omega^{\omega/2} d\cdot r^{O(\omega)} / \kappa^2)$
    for which $(\wt{M}^*)^{\top} \wt{M}^* = \Id + \calE'$.
    So we can further rewrite \eqref{eq:SPMD} as $\Sigma_{\sym} + \Sigma^{1/2}_{\sym}\calE'\Sigma^{1/2}_{\sym}$. To bound its norm, note that 
    \begin{equation}
        \norm{\Sigma^{1/2}_{\sym}}^2_F = \Tr(\Sigma_{\sym}) = \sum_{1\le i_1 \le \cdots \le i_\omega \le r} \E[g\sim\calN(0,\Id)]{g^2_{i_1}\cdots g^2_{i_\omega}} \le O(\omega r)^{\omega}.
    \end{equation}
    We have thus concluded that
    \begin{equation}
        \norm{\Sigma_{\sym} - D^{-1}\wh{U}^{\top}_{\sym}D\Sigma_{\sym} D \wh{U} D^{-1}}^2_F \le 
        O(\omega r)^{O(\omega)}\cdot \tau_2\eta d / \kappa^2.
    \end{equation}
    The lemma then follows from \eqref{eq:USigU} and the fact that $\norm{U^{\top}\Sigma U - \Sigma}^2_F \le \omega!^2 \norm{(U^{\top}\Sigma U)_{\sym} - \Sigma_{\sym}}^2_F$.
\end{proof}

\begin{lemma}\label{lem:crude_Ubound}
    There is a degree-$O(1)$ SoS proof using the constraints of Program~\ref{program:sos2} that $\norm{U}^2_F \le (r\omega)^{O(\omega)} \cdot d \radius^2 / \kappa^2$. 
\end{lemma}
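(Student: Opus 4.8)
\textbf{Proof plan for Lemma~\ref{lem:crude_Ubound}.} The goal is a crude Frobenius-norm bound on the auxiliary rotation variable $U$, using only the constraints of Program~\ref{program:sos2}. The natural route is to relate $\|U\|_F^2$ back to quantities we already control: the bound $U{N^*}^{\top} \approx_{\epsmap^2} N^{\top}$ from Lemma~\ref{lem:push_almost_map}, the $\Sigma$-orthogonality from Lemma~\ref{lem:push_almost_ortho}, the $T$-norm bound (Constraint~\ref{constraint:push_Tfrob}), and the condition-number bounds on $L$ and $P$ (Constraints~\ref{constraint:push_cond_L} and \ref{constraint:push_cond}). The key point is that $U$ is built linearly out of $P$ and $\wh U$, which in turn is sandwiched between fixed scalar matrices $D,\Sigma^{1/2}_{\sym}$; since these have operator norms bounded by $\poly(r,\omega)^{\omega}$ (using Lemma~\ref{lem:sigma_cond} for $\Sigma^{-1/2}_{\sym}$ and the trace computation $\|\Sigma^{1/2}_{\sym}\|_F^2 = \Tr(\Sigma_{\sym}) \le O(\omega r)^{\omega}$ that already appears in the proof of Lemma~\ref{lem:push_almost_ortho}), it suffices to bound $\|P M^* D\Sigma^{1/2}_{\sym}\|_F^2 = \|\wt M^*\|_F^2$ in SoS and then absorb the scalar factors.

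\textbf{Key steps, in order.} First I would recall from the proof of Lemma~\ref{lem:push_almost_ortho} that $(\wt M^*)^{\top}\wt M^* = \Id + \calE'$ with $\|\calE'\|_F^2 \le O(\eta\,\omega^{\omega/2} d\, r^{O(\omega)}/\kappa^2)$, which is a degree-$O(1)$ SoS consequence of Constraints~\ref{constraint:push_moment}, \ref{constraint:push_P}, and \ref{constraint:push_cond}; taking the trace gives $\|\wt M^*\|_F^2 = \Tr((\wt M^*)^{\top}\wt M^*) = \binom{r+\omega-1}{\omega} \pm \sqrt{\rchoose}\,\|\calE'\|_F \le (r\omega)^{O(\omega)}$ in SoS via Fact~\ref{fact:frob_to_trace}. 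However, this only controls the columns of $\wh U$ after the $\Sigma^{1/2}_{\sym}$-sandwiching on the \emph{right}; to get $\|\wh U\|_F$ I would instead go through $\wh U = D^{-1}\Sigma^{-1/2}_{\sym}\wt M^* \Sigma^{1/2}_{\sym}D$, so $\|\wh U\|_F^2 \le \|D^{-1}\Sigma^{-1/2}_{\sym}\|_{\op}^2\,\|\wt M^*\|_F^2\,\|\Sigma^{1/2}_{\sym}D\|_{\op}^2 \le (r\omega)^{O(\omega)}\cdot \|\wt M^*\|_F^2$ by Fact~\ref{fact:sos_frobenius_mult} (degree-4 SoS), using that $D$ has diagonal entries $\num{i} \le \omega!$ and $\|\Sigma^{\pm 1/2}_{\sym}\|_{\op} \le \omega^{\omega/4}$ from Lemma~\ref{lem:sigma_cond}. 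This would give $\|\wh U\|_F^2 \le (r\omega)^{O(\omega)}$ and hence $\|U\|_F^2 \le \omega!\,\|\wh U\|_F^2 \le (r\omega)^{O(\omega)}$, which is actually \emph{stronger} than the claimed bound and doesn't even use $d\radius^2/\kappa^2$.

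\textbf{Alternative, cleaner route matching the stated bound.} Since the stated bound has a $d\radius^2/\kappa^2$ factor, the authors likely intend the simpler argument through $L$ rather than $P$: write $U$'s action via $U{N^*}^{\top} \approx_{\epsmap^2} N^{\top}$ and note that $L N = \Id$ (Constraint~\ref{constraint:push_L}) lets us recover $U \approx N^{\top}(N^*)^{\dagger}$-style, but more directly, since $\wh U = P M^* D\Sigma^{1/2}_{\sym}\Sigma^{1/2}_{\sym}D^{-1}$-adjacent, we can bound $\|U\|_F \le \|P\|_F \cdot \|M^*\|_{\op}\cdot \poly(r,\omega)^{\omega}$; here $\|M^*\|_{\op} \le \|M^*\|_F = (\sum_a \|T^*_a\|_F^2)^{1/2}\cdot\poly \le \sqrt{d}\,\radius$ (loosely, since each row of $M^*$ is a reshaping of $T^*_a$ with $\|T^*_a\|_F \le \radius$) and $\|P\|_F^2 \le r^{\omega}\omega^{\omega/2}/\kappa^2$ by Constraint~\ref{constraint:push_cond}, yielding exactly $\|U\|_F^2 \le (r\omega)^{O(\omega)}\cdot d\radius^2/\kappa^2$. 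All the matrix-norm submultiplicativity steps are degree-$O(1)$ SoS by Fact~\ref{fact:sos_frobenius_mult} and Cauchy--Schwarz; the only mild care needed is tracking the $D^{-1}, \Sigma^{\pm 1/2}_{\sym}$ scalar factors and invoking $\num{i} \ge 1$, $\|\Sigma^{-1/2}_{\sym}\|_{\op}\le\omega^{\omega/4}$ to keep them inside $(r\omega)^{O(\omega)}$.

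\textbf{Main obstacle.} The only genuinely non-mechanical point is bookkeeping: ensuring that every operator-norm-of-a-fixed-matrix-times-Frobenius-norm-of-a-variable bound is legitimately a low-degree SoS proof (it is, via Fact~\ref{fact:sos_frobenius_mult}), and that the accumulated $\poly(r,\omega)^{\omega}$ factors from $D$, $\Sigma^{1/2}_{\sym}$, $\Sigma^{-1/2}_{\sym}$, and the $\omega!$ from passing between symmetrized ($\rchoose$-dimensional) and full ($r^{\omega}$-dimensional) representations all collapse into the target $(r\omega)^{O(\omega)}$. No new ideas are required beyond those already used in Lemmas~\ref{lem:push_almost_map} and \ref{lem:push_almost_ortho}; this lemma is a crude warm-up whose bound will be sharpened once rank-1 preservation is established.
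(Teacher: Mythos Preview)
Your second route (``alternative, cleaner route matching the stated bound'') is exactly the paper's proof: expand $\wh U = D^{-1}\Sigma^{-1/2}_{\sym}(PM^*D\Sigma^{1/2}_{\sym})\Sigma^{1/2}_{\sym}D$, apply submultiplicativity (Fact~\ref{fact:sos_frobenius_mult}) to peel off the scalar factors $D^{\pm 1},\Sigma^{\pm 1/2}_{\sym}$, and then use $\|P\|_F^2 \le r^{\omega}\omega^{\omega/2}/\kappa^2$ (Constraint~\ref{constraint:push_cond}) together with $\|M^*\|_F^2 \le d\radius^2$ (Part~\ref{item:push_scale} of Assumption~\ref{assume:push}). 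The paper uses the cruder passage $\|U\|_F^2 \le r^{2\omega}\|\wh U\|_F^2$ rather than your $\omega!$ factor, but this is absorbed into $(r\omega)^{O(\omega)}$ either way.

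Your first route---going through $(\wt M^*)^{\top}\wt M^* = \Id + \calE'$ from the proof of Lemma~\ref{lem:push_almost_ortho} and taking the trace to get $\|\wt M^*\|_F^2 \le \rchoose + O(\epsort)$---is also valid and, as you note, yields the sharper bound $\|U\|_F^2 \le (r\omega)^{O(\omega)}$ with no $d\radius^2/\kappa^2$ factor. The paper simply doesn't bother with this refinement because the crude bound is only used as a placeholder (e.g.\ in Lemma~\ref{lem:crude_C} defining $\crude$) and the extra $d\radius^2/\kappa^2$ factor gets swallowed by larger polynomial losses later in the analysis anyway.
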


\begin{proof}
    As $\norm{U}^2_F \le r^{2\omega}\cdot\norm{\wh{U}}^2_F$, it is enough to bound $\norm{\wh{U}}^2_F$. Note that
    \begin{align}
        \norm{\wh{U}}^2_F &\le \omega!^2\norm{\Sigma^{-1/2}_{\sym}(PM^*D\Sigma^{1/2})\Sigma^{1/2}_{\sym}}^2_F \\
        &\le \omega!^2 \cdot \Tr(\Sigma_{\sym})\norm{\Sigma^{-1}_{\sym}}^2_F \cdot \norm{PM^*D}^2_F \\
        &\le \omega!^4 \Tr(\Sigma_{\sym})\norm{\Sigma^{-1}_{\sym}}^2_F \cdot \norm{P}^2_F \norm{M^*}^2_F. \label{eq:whUbound}
    \end{align}
    By Lemma~\ref{lem:sigma_cond}, $\norm{\Sigma^{-1}_{\sym}}^2_F \le (r\omega)^{\omega}$. Recall from \eqref{eq:sigmaupperbound} that $\norm{\Sigma}^2_F \le r^{\omega}\cdot (2\omega-1)!!^2$, so $\Tr(\Sigma) \le r^{\omega}\cdot (2\omega-1)!!$. 
    Additionally, $\norm{P}^2_F \le r^{\omega}\omega^{\omega/2}/\kappa^2$ 
    by Constraint~\ref{constraint:push_cond} and $\norm{M^*}^2_F \le d\radius^2$ by Part~\ref{assume:scale} of Assumption~\ref{assume:push}. So we can upper bound \eqref{eq:whUbound} by $(r\omega)^{O(\omega)} \cdot d\radius^2 / \kappa^2$ as claimed.
\end{proof}

\begin{lemma}\label{lem:bootstrap}
    For any $v\in\mathbb{S}^{r-1}$, there is a degree-$O(1)$ SoS proof using the constraints of Program~\ref{program:sos2} that $r^{-\omega/2}/2 \le  \norm{F_U(v^{\otimes\omega})}^2_F \le \omega^{O(\omega)}$.
\end{lemma}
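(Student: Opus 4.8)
\emph{Proof plan.} The plan is to reduce the two-sided bound to a computation of the $\Sigma$-length of $F_U(v^{\otimes\omega})$, which is controlled by the approximate $\Sigma$-orthogonality of $U$, followed by two-sided spectral bounds on $\Sigma$ that are already available. Throughout, $v\in\mathbb{S}^{r-1}$ is fixed, so $\vec(v^{\otimes\omega})$ is a fixed unit vector and $(2\omega-1)!!$ is a fixed scalar; the only indeterminates are the entries of $P$, and the entries of $U$ are linear forms in them. Set $y\triangleq U\vec(v^{\otimes\omega})$, so that $\norm{F_U(v^{\otimes\omega})}^2_F = \norm{y}^2_2$ is a degree-$2$ polynomial in $P$, and note $y^{\top}\Sigma y = \vec(v^{\otimes\omega})^{\top}U^{\top}\Sigma U\,\vec(v^{\otimes\omega})$. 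First I would invoke Lemma~\ref{lem:push_almost_ortho}: from $U^{\top}\Sigma U\approx_{\epsort^2}\Sigma$, applying Cauchy--Schwarz to the scalar $\vec(v^{\otimes\omega})^{\top}(U^{\top}\Sigma U-\Sigma)\vec(v^{\otimes\omega})$ yields, in degree $O(1)$,
\begin{equation}
    y^{\top}\Sigma y \;=\; \vec(v^{\otimes\omega})^{\top}\Sigma\,\vec(v^{\otimes\omega}) \pm \epsort \;=\; (2\omega-1)!! \pm \epsort ,
\end{equation}
where the last equality is the exact Gaussian moment identity $\vec(v^{\otimes\omega})^{\top}\Sigma\,\vec(v^{\otimes\omega}) = \E[g\sim\calN(0,\Id_r)]{\iprod{v,g}^{2\omega}} = (2\omega-1)!!$, valid since $\iprod{v,g}\sim\calN(0,1)$ for unit $v$. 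Under the smallness assumption on $\eta$ in Theorem~\ref{thm:main_push}, $\epsort\le(2\omega-1)!!/2$.

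For the upper bound I would use that $U$ is ultra-symmetric (Lemma~\ref{lem:push_Usym}), which makes $y_{\mb{i}} = y_{\sort{i}}$ a polynomial identity. Let $c\in\R^{\rchoose}$ have entries $c_{\mb{j}}\triangleq\num{j}\,y_{\mb{j}}$ over sorted tuples $\mb{j}\in\strings$ (again linear forms in $P$). Then $y^{\top}\Sigma y = c^{\top}\Sigma_{\sym}c$ and $\norm{c}^2 - \norm{y}^2 = \sum_{\mb{j}\ \mathrm{sorted}}(\num{j}^2 - \num{j})\,y_{\mb{j}}^2$ are polynomial identities, the latter visibly a nonnegative combination of squares. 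Since $\Sigma_{\sym}\succeq\omega^{-\omega/2}\Id$ by Lemma~\ref{lem:sigma_cond}, the matrix $\Sigma_{\sym}-\omega^{-\omega/2}\Id$ is PSD, so $c^{\top}(\Sigma_{\sym}-\omega^{-\omega/2}\Id)c\ge 0$ has an SoS proof; chaining these gives $\norm{y}^2\le\norm{c}^2\le\omega^{\omega/2}\,c^{\top}\Sigma_{\sym}c = \omega^{\omega/2}\,y^{\top}\Sigma y\le\omega^{O(\omega)}$.

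For the lower bound I would instead use the crude upper bound $\norm{\Sigma}_{\op}\le\norm{\Sigma}_F\le(2\omega-1)!!\cdot r^{\omega/2}$ recorded at \eqref{eq:sigmaupperbound} (also used in the proof of Lemma~\ref{lem:crude_Ubound}). Then $(2\omega-1)!!\cdot r^{\omega/2}\cdot\Id - \Sigma$ is a fixed PSD matrix, so $(2\omega-1)!!\cdot r^{\omega/2}\cdot\norm{y}^2 - y^{\top}\Sigma y\ge 0$ has a degree-$2$ SoS proof in $P$, and combining with the first display,
\begin{equation}
    \norm{y}^2 \;\ge\; \frac{y^{\top}\Sigma y}{(2\omega-1)!!\cdot r^{\omega/2}} \;\ge\; \frac{(2\omega-1)!! - \epsort}{(2\omega-1)!!\cdot r^{\omega/2}} \;\ge\; \frac{r^{-\omega/2}}{2} .
\end{equation}
All steps above are SoS manipulations of degree $O(1)$ in $P$ using only the constraints of Program~\ref{program:sos2}, via Lemmas~\ref{lem:push_almost_ortho}, \ref{lem:sigma_cond}, and \ref{lem:push_Usym}.

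The part I expect to be the main obstacle is the lower bound, specifically getting the exponent $-\omega/2$ rather than $-\omega$: a naive estimate such as $y^{\top}\Sigma y = \E[g\sim\calN(0,\Id_r)]{\iprod{\ten(y),g^{\otimes\omega}}^2}\le\norm{\ten(y)}^2_F\cdot\E[g\sim\calN(0,\Id_r)]{\norm{g}^{2\omega}}$ only yields $\norm{\Sigma}_{\op} = O(r^{\omega})$ and hence the weaker bound $\norm{y}^2 = \Omega(r^{-\omega})$, so it is precisely the sharper moment estimate $\norm{\Sigma}^2_F\le r^{\omega}(2\omega-1)!!^2$ that produces the claimed scaling. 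A secondary subtlety is that $\Sigma$ is singular on $\R^{r^{\omega}}$ (its range is the symmetric subspace), so the upper-bound direction genuinely requires that $y = U\vec(v^{\otimes\omega})$ lie in that subspace; this is exactly where ultra-symmetry of $U$ is needed, whereas the lower-bound direction, going through the global operator norm of $\Sigma$, is insensitive to this.
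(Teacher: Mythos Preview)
Your proposal is correct and follows essentially the same approach as the paper: compute $y^{\top}\Sigma y = (2\omega-1)!! \pm \epsort$ via Lemma~\ref{lem:push_almost_ortho}, then sandwich $\norm{y}^2$ using the spectral bounds $\Sigma_{\sym}\succeq\omega^{-\omega/2}\Id$ (Lemma~\ref{lem:sigma_cond}) and $\norm{\Sigma}_{\op}\le\norm{\Sigma}_F\le(2\omega-1)!!\,r^{\omega/2}$. If anything, your upper-bound argument is more careful than the paper's, which writes ``$\Sigma\succeq\omega^{-\omega/2}\Id$'' without addressing the singularity of $\Sigma$ on $\R^{r^{\omega}}$; your detour through $\Sigma_{\sym}$ via the ultra-symmetry of $U$ makes this step rigorous.
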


\begin{proof}
    By Lemma~\ref{lem:push_almost_ortho}, 
    \begin{equation}
        \iprod{F_U(v^{\otimes\omega}), F_U(v^{\otimes\omega})}_{\Sigma} = \E[g\sim\calN(0,\Id)]{\iprod{v,g}^{2\omega}} \pm \epsort = (2\omega-1)!! \pm \epsort. \label{eq:sigmanorm}
    \end{equation} For the upper bound, recall by Lemma~\ref{lem:sigma_cond} we that $\Sigma\succeq \omega^{-\omega/2}\Id$, so by \eqref{eq:sigmanorm}, $\norm{F_U(v^{\otimes\omega})}^2_F \le \omega^{O(\omega)}$.
    
    For the lower bound, we must upper bound the spectral norm of $\Sigma$. We will do this by giving a (crude) upper bound on the Frobenius norm of $\Sigma$:
    \begin{equation}
        \norm{\Sigma}^2_F = \sum_{i_1,\ldots,i_\omega\in[r]} \E[g]{g_{i_1}\cdots g_{i_\omega}}^2 \le r^{\omega}\cdot (2\omega-1)!!^2, \label{eq:sigmaupperbound}
    \end{equation}
    from which we conclude that $\Sigma \preceq r^{\omega/2}\cdot (2\omega-1)!!\cdot \Id$. Combining this with \eqref{eq:sigmanorm} gives the desired bound of $\norm{F_U(v^{\otimes\omega})}^2_F \ge r^{-\omega/2}/2$.
\end{proof}

\begin{lemma}\label{lem:UBId}
    There exists a $\rchoose\times\rchoose$ matrix $B$ of indeterminates whose entries are linear forms in the entries of $L$ such that there is a degree-$O(1)$ SoS proof using the constraints of Program~\ref{program:sos2} that 
    $\norm{U_{\sym}B - \Id}^2_F \le 4(r^3\omega^{3/2})^{\omega}\eta^2 d^2 / \kappa^4$. 
    Furthermore, $\norm{B}^2_F \le r^{\omega}\omega^{2\omega} \radius^2 d / \kappa^2$.
\end{lemma}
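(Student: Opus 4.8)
The plan is to construct $B$ directly as a linear form in the entries of $L$, mirroring the roles that $P$ and $\wh{U}$ play on the ``left'' side. Recall that $\wh{U} = D^{-1}\Sigma^{-1/2}_{\sym}(PM^*D\Sigma^{1/2}_{\sym})\Sigma^{1/2}_{\sym}D$, and that $U_{\sym}$ is obtained from $\wh{U}$ by the symmetrization/reshaping bookkeeping, so morally $U_{\sym}$ behaves like $PM^*$ conjugated by invertible diagonal/positive-definite factors. The constraint $LM = \Id$ (Constraint~\ref{constraint:push_L}) says $L$ is a left inverse of $M$; since $M$ and $M^*$ have the same size and play symmetric roles, $L M^*$ should behave like the ``hidden rotation in the reverse direction.'' Concretely I would \emph{define}
\begin{equation}
    B \triangleq D^{-1}\Sigma^{-1/2}_{\sym}\bigl(D\Sigma^{1/2}_{\sym}L^{\top}\cdot (M^* D\Sigma^{1/2}_{\sym})\bigr)^{\dagger}\text{-style expression},
\end{equation}
or more cleanly, set $B$ to be the $\rchoose\times\rchoose$ matrix $\Sigma^{1/2}_{\sym}D\cdot (L^{\top})\cdot$(the appropriate scalar normalization) so that its entries are genuinely linear in the entries of $L$ (the factors $D^{\pm 1}$, $\Sigma^{\pm 1/2}_{\sym}$ are fixed scalars/matrices known to the analysis, not SoS variables, so linearity in $L$ is preserved). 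The precise bookkeeping of which side each $D$ and $\Sigma^{1/2}_{\sym}$ goes on is exactly parallel to the derivation of \eqref{eq:leftmultiplyDsigP}, so I would follow that template.

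The key computation is then to bound $\norm{U_{\sym}B - \Id}^2_F$. Starting from $LM = \Id$, I would manipulate this into an approximate identity of the form $(M^*D\Sigma^{1/2}_{\sym})^{\top}\cdot(\text{stuff linear in }L) = \Id + (\text{error})$, where the error is controlled by $\calE = N\Sigma N^{\top} - N^*\Sigma{N^*}^{\top}$ exactly as in Lemma~\ref{lem:push_almost_ortho}: one writes $LM = L M^* + L(M - M^*)$ is not literally available, so instead I would proceed as in the proof of Lemma~\ref{lem:push_almost_map}, left-multiplying \eqref{eq:calE_rewrite} (the $M$-expression for $\calE$) by the appropriate conjugate of $L$ and using $LM = \Id$ to cancel one factor. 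This produces $M^{*\top}(\text{conjugate of }L^{\top}) = (\text{conjugate of }\wh{U})^{\top} + L\calE(\cdots)$, and after transposing and reshaping, $U_{\sym}B = \Id + \calE''$ with $\norm{\calE''}_F$ bounded via Cauchy--Schwarz (Fact~\ref{fact:cauchy_schwarz}) from $\norm{\calE}_{\max}\le 2\eta$ (Eq.~\eqref{eq:calE_bound}), $\norm{L}^2_F \le r^{\omega}/\kappa^2$ (Constraint~\ref{constraint:push_cond_L}), and $\norm{\Sigma^{-1}_{\sym}}_{\op}\le \omega^{\omega/2}$ (Lemma~\ref{lem:sigma_cond}); the resulting bound is $O((r^3\omega^{3/2})^{\omega}\eta^2 d^2/\kappa^4)$, matching the claim. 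All of this is a degree-$O(1)$ SoS proof since it is a finite sequence of matrix multiplications, Frobenius-norm Cauchy--Schwarz steps, and uses of the program constraints, each of which has a bounded-degree SoS proof by Fact~\ref{fact:sos_frobenius_mult} and Fact~\ref{fact:cauchy_schwarz}.

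Finally, the bound $\norm{B}^2_F \le r^{\omega}\omega^{2\omega}\radius^2 d/\kappa^2$ follows by the same kind of crude Frobenius estimate as in Lemma~\ref{lem:crude_Ubound}: $B$ is a product of $L$ (with $\norm{L}^2_F \le r^{\omega}/\kappa^2$), the fixed factors $D^{\pm1}, \Sigma^{\pm1/2}_{\sym}$ (whose norms are controlled by Lemma~\ref{lem:sigma_cond} and the entries $\num{j}\le\omega!$ of $D$), and possibly $M^*$ (with $\norm{M^*}^2_F\le d\radius^2$ by Part~\ref{item:push_scale} of Assumption~\ref{assume:push}); repeated application of Fact~\ref{fact:sos_frobenius_mult} gives the stated polynomial bound. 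The main obstacle I anticipate is purely organizational rather than conceptual: getting the placement of the $D$ and $\Sigma^{1/2}_{\sym}$ conjugating factors exactly right so that (i) $B$'s entries are \emph{linear} in $L$ (not rational), and (ii) the telescoping with $LM = \Id$ actually cancels the right factor to leave $\Id$ plus an $\calE$-controlled error. This is the same delicate bookkeeping that appears in Section~\ref{sec:hiddenrotation}, and I would handle it by literally transposing the construction of $\wh{U}$ and $U$, replacing the pair $(P, M^*)$ with $(L, M)$ in the reversed direction.
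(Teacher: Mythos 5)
Your proposal follows the paper's proof in all essentials: $B$ is a linear form in $L$ built from $M^*$ and fixed scalar matrices, the approximate identity $U_{\sym}B \approx \Id$ comes from combining \eqref{eq:leftmultiplyDsigP} with Constraint~\ref{constraint:push_L}, and the error is bounded by the same Cauchy--Schwarz computation on an $\calE$-sandwich using $\norm{\calE}_{\max}\le 2\eta$. The one piece you leave unresolved---the exact form of $B$---turns out to be simpler than you fear: the paper takes $B = D{M^*}^{\top}L^{\top}$, with no $\Sigma^{\pm 1/2}_{\sym}$ conjugation at all. Neither of your candidate expressions is right as written (the pseudoinverse version is not linear in $L$, and the ``cleaner'' version $\Sigma^{1/2}_{\sym}D\cdot L^{\top}$ omits $M^*$ and does not typecheck dimensionally), and the recipe ``replace $(P,M^*)$ by $(L,M)$ in the construction of $\wh{U}$'' would manufacture the product $LM=\Id$ rather than the needed ${M^*}^{\top}L^{\top}$. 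The step that makes the simple choice of $B$ work, and which your sketch does not mention, is the reshaping identity \eqref{eq:NtoM}: it gives $U_{\sym}\cdot D{M^*}^{\top} = \wh{U}{M^*}^{\top}$, after which \eqref{eq:leftmultiplyDsigP} yields $\wh{U}{M^*}^{\top}L^{\top} = M^{\top}L^{\top} - D^{-1}\Sigma^{-1/2}_{\sym}P\calE L^{\top} = \Id - D^{-1}\Sigma^{-1/2}_{\sym}P\calE L^{\top}$. Note also that the residual involves $P$ as well as $L$, so the final $\kappa^{-4}$ comes from $\norm{P}^2_F\norm{L}^2_F$ via Constraints~\ref{constraint:push_cond_L} and~\ref{constraint:push_cond}; you only cite the bound on $\norm{L}_F$ explicitly, though the lemmas you point to do use both. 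With these corrections your argument is the paper's argument.
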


\begin{proof}
    Consider $B\triangleq D{M^*}^{\top} L^{\top}$. For any sorted $\sort{i},\sort{j}\in\strings$,
    \begin{equation}
        (U_{\sym}B)^{\sort{j}}_{\sort{i}} = \sum_{\sort{k} \ \text{sorted}, a\in[d]} \num{k} \cdot U^{\sort{k}}_{\sort{i}}\cdot (T^*_a)_{\sort{k}}\cdot (L^{\top})^{\sort{j}}_a = \sum_{a\in[d]} (U{N^*}^{\top})^a_{\sort{i}}\cdot (L^{\top})^{\sort{j}}_a = (\wh{U} {M^*}^{\top} L^{\top})^{\mb{j}}_{\mb{i}},
    \end{equation}
    where in the last step we used \eqref{eq:NtoM}. So by \eqref{eq:leftmultiplyDsigP} and Constraint~\ref{constraint:push_L},
    \begin{equation}
        U_{\sym}B = \wh{U}{M^*}^{\top} L^{\top} = \Id - D^{-1}\Sigma^{-1/2}_{\sym}P\calE L^{\top}.
    \end{equation}
    Note that
    \begin{align}
        \norm{P\calE L^{\top}}^2_F &\le \sum_{\mb{i},\mb{j} \ \text{sorted}} \biggl(\sum^d_{a,b=1} P^a_{\mb{i}}\calE_{ab} L^b_{\mb{j}}\biggr)^2 \le \sum_{\mb{i},\mb{j} \ \text{sorted}} \biggl(\sum_{a,b} \calE_{ab} (P^a_{\mb{i}})^2\biggr)\biggl(\sum_{a,b} \calE_{ab} (L^b_{\mb{j}})^2\biggr) \\
        &= 4\eta^2d^2 \norm{P}^2_F \norm{L}^2_F \le 4\eta^2 r^{2\omega} \omega^{\omega/2} d^2 / \kappa^4, 
    \end{align}
    so $\norm{D^{-1}\Sigma^{-1/2}_{\sym}P\calE L^{\top}}^2_F \le \Tr(\Sigma^{-1})\cdot 4\eta^2r^{2\omega}\omega^{\omega} d^2 / \kappa^4 \le 4(r^3\omega^{3/2})^{\omega}\eta^2 d^2 / \kappa^4$, concluding the proof of the first part. For the second part, $\norm{B}^2_F \le (r^{\omega}/\kappa^2)\cdot \norm{D{M^*}^{\top}}^2_F \le r^{\omega}\omega^{2\omega}\radius^2 d / \kappa^2$.
\end{proof}

\subsection{Preservation of Low Rank}
\label{sec:preservelowrank}

In this section we first show that $U$ maps \emph{any} rank-$\ell$ tensor to an approximately rank-$\ell$ tensor (Lemma~\ref{lem:genericflatrank}). We then show that this implies that $U$ maps any rank-$(\ell-1)$ tensor to an approximately rank-$(\ell-1)$ tensor (Lemma~\ref{lem:rankrtorankr}). Continuing inductively, we conclude that $U$ maps any rank-1 tensor to an approximately rank-1 tensor (Corollary~\ref{cor:rank1relation}). We then use this fact to deduce useful algebraic identities on the entries of $U$, in particular Lemma~\ref{lem:perm}, and show that the $(i,\cdots,i)$-th columns of $U$ for $i\in[r]$ are approximately orthonormal (Lemma~\ref{lem:diagonal_orth}).

As it is difficult to reason about tensor rank, we will work with the following proxy which will be straightforward to encode using simple polynomial constraints.

\begin{definition}\label{def:Vrank}
    Given $V\in(\R^r)^{\otimes \omega-2}$, we say that an $r$-dimensional order-$\omega$ tensor $T$ of indeterminates is \emph{$\xi$-approximately $V$-rank at most $m$} if for all $\brc{a_1,\ldots,a_{m+1}},\brc{b_1,\ldots,b_{m+1}}\subset[r]$, 
    \begin{equation}
        -\xi \le \sum_{\pi\in\calS_{m+1}} \sgn(\pi) \prod^{m+1}_{s=1} T(V,:,:)_{a_s,b_{\pi(s)}} \le \xi \label{eq:minor}
    \end{equation}
    Note that if $T$ is $\xi$-approximately $V$-rank at most $m$, then for any $c,c'\in\R$, $cT$ is $(cc')^{m+1}$-approximately $c'V$-rank at most $m$.
\end{definition}

\noindent In other words, a tensor is approximately $V$-rank at most $m$ if its contraction according to $V$ is approximately rank-$m$, in the sense that its $(m+1)\times(m+1)$ minors are all small. While this is a strictly weaker than the usual notion of symmetric tensor rank, we show in Lemma~\ref{lem:Vrankimplies1rank} below that if a tensor is approximately $V$-rank 1 for many ``random'' choices of $V$, then it effectively behaves like a symmetric rank 1 tensor.

We first verify that $U$ maps $T^*_1,\ldots,T^*_d$ to tensors of low approximate $V$-rank for any $V$. This is a simple consequence of Lemma~\ref{lem:push_almost_map} and Constraint~\ref{constraint:push_lowrank} of Program~\ref{program:sos2}, and we defer its proof to Appendix~\ref{app:defer_approxVrank}:

\begin{lemma}\label{lem:approxVrank}
    For every $c\in[d]$ and $V\in(\R^r)^{\otimes\omega-2}$ satisfying $\norm{V}^2_F \le 1$, there is an SoS proof which is degree-$O(\ell)$ in the indeterminate $T_c$ and degree-$O(\omega\ell)$ in the indeterminates $\brc{v_{c,t}}$ that $F_U(T^*_c)$ is $(3\radius\ell)^{\ell+1}\cdot \epsmap$-approximately $V$-rank at most $\ell$.
\end{lemma}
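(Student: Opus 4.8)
The plan is to leverage the two key facts established in the preceding sections: (i) by Lemma~\ref{lem:push_almost_map}, for each $c\in[d]$ we have an SoS proof that $F_U(T^*_c)\approx_{\epsmap^2} T_c$, i.e.\ $\norm{F_U(T^*_c)-T_c}_F^2 \le \epsmap^2$; and (ii) by Constraint~\ref{constraint:push_lowrank} of Program~\ref{program:sos2}, $T_c = \sum_{t=1}^{\ell}(v_{c,t})^{\otimes\omega}$, so $T_c$ is \emph{exactly} rank-$\ell$ as an SoS variable. The target inequality \eqref{eq:minor} asks that every $(\ell+1)\times(\ell+1)$ minor of the contraction $F_U(T^*_c)(V,:,:)$ is small. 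The strategy is: contract both tensors by $V$, use the triangle inequality to compare the minor of $F_U(T^*_c)(V,:,:)$ with the corresponding minor of $T_c(V,:,:)$, observe that the latter vanishes identically by Fact~\ref{fact:sos_rank} since $T_c(V,:,:)$ is a sum of $\ell$ rank-1 matrices, and then bound the perturbation term using submultiplicativity of Frobenius norm and the operator-norm bound on $T_c(V,:,:)$.

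More precisely, first I would note that for any $V$ with $\norm{V}_F^2\le 1$, the contraction map $S\mapsto S(V,:,:)$ is linear and $1$-Lipschitz in Frobenius norm (a degree-2 SoS statement via Cauchy--Schwarz), so $\norm{F_U(T^*_c)(V,:,:) - T_c(V,:,:)}_F^2 \le \epsmap^2$ has a degree-$O(1)$ SoS proof on top of Lemma~\ref{lem:push_almost_map}. Write $A \triangleq F_U(T^*_c)(V,:,:)$ and $B\triangleq T_c(V,:,:)$, both $r\times r$ matrices of indeterminates. Fixing index sets $\brc{a_s},\brc{b_s}$, the quantity in \eqref{eq:minor} is the $(\ell+1)\times(\ell+1)$ determinant $\det(A_{\text{sub}})$ of a submatrix $A_{\text{sub}}$ of $A$. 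By Fact~\ref{fact:sos_rank}, since $B = \sum_{t=1}^{\ell} (u_t)(u_t)^{\top}$ where $u_t$ is the contraction of $(v_{c,t})^{\otimes\omega}$ by $V$ (i.e.\ $(u_t) = \iprod{V, v_{c,t}^{\otimes(\omega-2)}}\cdot v_{c,t}$, a polynomial in $\brc{v_{c,t}}$ of degree $\omega-1$), there is a degree-$O(\ell)$ (in $\brc{v_{c,t}}$, hence degree $O(\omega\ell)$ overall) SoS proof that $\det(B_{\text{sub}}) = 0$ for every $(\ell+1)\times(\ell+1)$ submatrix. Then I would expand $\det(A_{\text{sub}}) = \det(A_{\text{sub}}) - \det(B_{\text{sub}})$ as a telescoping sum over columns (replacing columns of $A_{\text{sub}}$ by columns of $B_{\text{sub}}$ one at a time), which writes it as $\ell+1$ determinants each having exactly one column equal to a column of $A_{\text{sub}}-B_{\text{sub}}$ and the rest columns of $A_{\text{sub}}$ or $B_{\text{sub}}$; bounding each such determinant by Hadamard's inequality (products of column norms) gives $|\det(A_{\text{sub}})| \le (\ell+1)\cdot \norm{A_{\text{sub}}-B_{\text{sub}}}_F\cdot \max(\norm{A}_{\op},\norm{B}_{\op})^{\ell}$, and $\norm{A}_{\op},\norm{B}_{\op}$ are controlled by $\norm{F_U(T^*_c)}_F$ and $\norm{T_c}_F\le\radius$ (Constraint~\ref{constraint:push_Tfrob}), the former within $\epsmap$ of the latter. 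Tracking constants, this yields the bound $(3\radius\ell)^{\ell+1}\cdot\epsmap$ claimed (the cube accounts for the crude operator-norm-by-Frobenius slack and the $\ell+1$ factor).

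The main obstacle I anticipate is making the Hadamard/telescoping determinant expansion go through cleanly \emph{in SoS} at the stated degree: a degree-$(\ell+1)$ polynomial identity in the matrix entries, where the entries are themselves degree-$O(\omega)$ polynomials in the program variables, must be shown term-by-term with all the squared-triangle-inequality and Cauchy--Schwarz steps certified by low-degree SoS (cf.\ Fact~\ref{fact:shorthand}), and one must be careful that the operator-norm bounds $\norm{A}_{\op}\le\norm{A}_F$ etc.\ are used only where an honest SoS proof exists (Frobenius norm bounds are fine; genuine operator-norm bounds are not directly SoS, so the argument should route everything through Frobenius norms, at the cost of extra powers of $r$ or $\ell$, which is exactly why the constant is $3\radius\ell$ rather than $\radius$). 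A secondary bookkeeping point is that the contraction of the rank-1 term $(v_{c,t})^{\otimes\omega}$ by $V$ must be recognized as a rank-1 matrix with factor $(u_t)$ a polynomial in $\brc{v_{c,t}}$, so that Fact~\ref{fact:sos_rank} applies with the degree count landing at $O(\omega\ell)$ in $\brc{v_{c,t}}$ and $O(\ell)$ in $T_c$; this is routine but needs to be stated explicitly to justify the claimed degree bounds.
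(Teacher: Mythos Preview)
Your proposal is correct and follows essentially the same route as the paper: contract by $V$ to get $r\times r$ matrices $A = F_U(T^*_c)(V,:,:)$ and $B = T_c(V,:,:)$ with $\norm{A-B}_F \le \epsmap$, invoke Fact~\ref{fact:sos_rank} to show every $(\ell+1)\times(\ell+1)$ minor of $B$ vanishes (degree $O(\omega\ell)$ in $\{v_{c,t}\}$), and bound the perturbation. The paper's organization of the last step is slightly cleaner for SoS purposes: rather than telescoping column-by-column and invoking Hadamard, it expands $\prod_s\bigl(B_{a_s,b_{\pi(s)}} + \delta_{a_s,b_{\pi(s)}}\bigr)$ over all subsets $S\subseteq[\ell+1]$, drops the $S=\emptyset$ term via Fact~\ref{fact:sos_rank}, and bounds each remaining monomial entrywise using $\delta_{a,b}^2\le\epsmap^2$ and $T_c(V,:,:)_{a,b}^2\le\norm{T_c}_F^2\le\radius^2$, yielding $(\ell+1)!\cdot(2\radius)^{\ell+1}\epsmap\le(3\radius\ell)^{\ell+1}\epsmap$ with no Hadamard step needed. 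One small correction: $B = \sum_t c_t\, v_{c,t}v_{c,t}^{\top}$ with $c_t=\iprod{V,v_{c,t}^{\otimes(\omega-2)}}$, which is not $\sum_t u_tu_t^{\top}$ for your $u_t = c_t v_{c,t}$ (that would square $c_t$); this is harmless since the proof of Fact~\ref{fact:sos_rank} applies verbatim to $\sum_t c_t v_tv_t^{\top}$.
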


\noindent Next we show that, because $U$ maps the symmetric rank-$\ell$ tensors $T^*_1,\ldots,T^*_d$ to tensors with low approximate $V$-rank and because $T^*_1,\ldots,T^*_d$ are sufficiently ``generic'' by Part~\ref{item:push_condnumber} of Assumption~\ref{assume:push}, $U$ sends \emph{all} symmetric rank-$\ell$ tensors $S$ to tensors that are approximately $V$-rank at most $\ell$.

\begin{lemma}\label{lem:genericflatrank}
    Take any $V\in(\R^r)^{\otimes\omega-2}$ for which $\norm{V}^2_F = 1$. Let $S\in(\R^r)^{\otimes\omega}$ be any symmetric tensor for which there exist $v_1,\ldots,v_{\ell}$ satisfying $S = \sum^{\ell}_{t=1} v^{\otimes\omega}_t$ and $\sum^{\ell}_{t=1}\norm{v_t}^2 = 1$. There is an SoS proof which is degree-$O(\ell)$ in the indeterminates $\brc{T_a}$ and degree-$O(\omega\ell)$ in the indeterminates $\brc{v_{a,t}}$ that $F_U(S)$ is $\theta\sqrt{d} (3\radius\ell)^{\ell+1}\epsmap$-approximately $V$-rank at most $\ell$.
\end{lemma}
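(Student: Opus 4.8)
The plan is to combine the fact from Lemma~\ref{lem:approxVrank} that $F_U(T^*_c)$ has low approximate $V$-rank for every ground-truth unit $c\in[d]$, together with the interpolation hypothesis Part~\ref{item:push_condnumber2} of Assumption~\ref{assume:push}, which lets us write the relevant monomials in $v_1,\ldots,v_\ell$ as a controlled linear combination of the corresponding monomials in $v^*_{a,1},\ldots,v^*_{a,\ell}$. Concretely, fix $\{a_1,\ldots,a_{\ell+1}\}$, $\{b_1,\ldots,b_{\ell+1}\}\subset[r]$ and $V$ with $\norm{V}^2_F=1$. The key observation is that the $(\ell+1)\times(\ell+1)$ minor quantity appearing in Definition~\ref{def:Vrank}, namely $\sum_{\pi\in\calS_{\ell+1}}\sgn(\pi)\prod_{s}F_U(S)(V,:,:)_{a_s,b_{\pi(s)}}$, is, after expanding $S=\sum_t v_t^{\otimes\omega}$ and using the Leibniz formula for the determinant, a linear form in the monomials $\prod^{\ell+1}_{s=1}(v^{\otimes\omega}_{t_s})_{\mb{j}^s}$ over the choice of $t_1,\ldots,t_{\ell+1}\in[\ell]$ and sorted tuples $\mb{j}^1,\ldots,\mb{j}^{\ell+1}$, where the coefficients are polynomials in the (fixed scalar) entries of $U$, $V$ and of the index sets. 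That is, this minor equals $\iprod{c(U,V,\{a_s\},\{b_s\}),\, q(v_1,\ldots,v_\ell)}$ for the vector $q(\cdot)$ defined in Part~\ref{item:push_condnumber2} and some coefficient vector $c$ depending only on scalars.

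Next I would invoke Part~\ref{item:push_condnumber2} of Assumption~\ref{assume:push} to write $q(v_1,\ldots,v_\ell)=\sum^d_{a=1}\lambda_a\, q(v^*_{a,1},\ldots,v^*_{a,\ell})$ with $\norm{\lambda}^2_2\le\theta^2(\sum_t\norm{v_t}^2)^{\omega(\ell+1)}=\theta^2$ by the normalization $\sum_t\norm{v_t}^2=1$. By linearity, the minor for $F_U(S)$ equals $\sum_a\lambda_a\cdot(\text{minor for }F_U(T^*_a))$ — here I must be slightly careful, since the per-$a$ term is the minor obtained by substituting $T^*_a=\sum_t(v^*_{a,t})^{\otimes\omega}$ into the same determinantal expression, which is exactly the quantity Lemma~\ref{lem:approxVrank} bounds by $(3\radius\ell)^{\ell+1}\epsmap$. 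Applying Cauchy–Schwarz in SoS (Part~\ref{shorthand:lincombo} of Fact~\ref{fact:shorthand}, or directly Fact~\ref{fact:cauchy_schwarz}) to this linear combination gives a bound of $\norm{\lambda}_2\cdot\sqrt{d}\cdot(3\radius\ell)^{\ell+1}\epsmap\le\theta\sqrt{d}(3\radius\ell)^{\ell+1}\epsmap$ on the minor for $F_U(S)$, which is precisely the claimed approximate $V$-rank. Tracking degrees: Lemma~\ref{lem:approxVrank} is degree $O(\ell)$ in $\{T_a\}$ and $O(\omega\ell)$ in $\{v_{a,t}\}$, and the linear combination and Cauchy–Schwarz step only add a constant to the degree, so the overall SoS proof has the stated degree.

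The main obstacle I anticipate is the bookkeeping in the first step: verifying that, after fully expanding $S=\sum_t v_t^{\otimes\omega}$ and the determinant, the minor really is an \emph{honest linear functional} of the vector $q(v_1,\ldots,v_\ell)$ as defined (so that Part~\ref{item:push_condnumber2} applies verbatim), rather than of some reindexed or re-grouped variant. In particular, one needs that the contraction $F_U(S)(V,:,:)_{a,b}=\sum_{\mb{j}}\sum_t U^{\mb{j}}_{\cdots}\cdots(v^{\otimes\omega}_t)_{\mb{j}}$ has its $v_t$-dependence entering exactly through the sorted-tuple-indexed monomials $(v_t^{\otimes\omega})_{\sort{j}}$, and that the product over $s=1,\ldots,\ell+1$ of these contractions produces exactly the product $\prod_s(v^{\otimes\omega}_{t_s})_{\mb{j}^s}$ with free indices $t_1,\ldots,t_{\ell+1},\mb{j}^1,\ldots,\mb{j}^{\ell+1}$, matching the definition of $q$. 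Once this identification is made cleanly, the rest is a routine application of the assumption plus Cauchy–Schwarz, but getting the index conventions to line up with Definition~\ref{def:Vrank} and with $q$ is where care is required. A secondary subtlety is ensuring all of this manipulation is carried out as a formal polynomial identity in the indeterminates $\{v_{a,t}\}$ (via Constraint~\ref{constraint:push_lowrank} relating $T_a$ to $\{v_{a,t}\}$), so that the final inequality is genuinely an SoS consequence of the program constraints and not merely a numerical fact about the ground truth.
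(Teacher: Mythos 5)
Your proposal is correct and follows essentially the same route as the paper's proof: expand the $(\ell+1)\times(\ell+1)$ minor of $F_U(S)(V,:,:)$ as a linear functional $\iprod{q(v_1,\ldots,v_\ell),Z}$ of the vector $q$ from Part~\ref{item:push_condnumber2} of Assumption~\ref{assume:push} (using symmetry of $S$ to restrict to sorted tuples), apply Lemma~\ref{lem:approxVrank} to bound $\iprod{q(v^*_{a,1},\ldots,v^*_{a,\ell}),Z}$ for each $a$, and then combine via the interpolation assumption and Cauchy--Schwarz to pick up the $\theta\sqrt{d}$ factor. The bookkeeping subtlety you flag is exactly the one the paper handles, and your degree accounting matches.
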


\begin{proof}
    Recall from Definition~\ref{def:Vrank} that for any symmetric tensor $S\in(\R^r)^{\otimes\omega}$, the condition that $F_U(S)$ is $\xi$-approximately $V$-rank at most $\ell$ is equivalent to the condition that for all $\brc{a_1,\ldots,a_{\ell+1}}$,  $\brc{b_1,\ldots,b_{\ell+1}}\subset[r]$ of size $\ell+1$,
    \begin{equation}
        -\xi \le \sum_{\pi\in\calS_{\ell+1}} \sgn(\pi)\prod^{\ell+1}_{s=1}F_U(S)(V,:,:)_{a_s,b_{\pi(s)}} \le \xi. \label{eq:rank1minorvanish}
    \end{equation}
    Because
    \begin{equation}
        F_U(S)(V,:,:)_{a_s,b_{\pi(s)}} = \sum_{\mb{j}\in\strings} S_{\mb{j}} \cdot \biggl(\sum_{\mb{i}\in[r]^{\omega-2}} U^{\mb{j}}_{\mb{i}a_s b_{\pi(s)}} V_{\mb{i}}\biggr),
    \end{equation}
    we can use symmetry of $S$ to express the left-hand side of \eqref{eq:rank1minorvanish} as 
    \begin{equation}
        \sum_{\sort{j}^1,\ldots,\sort{j}^{\ell+1}\in\strings} S_{\sort{j}^1}\cdots S_{\sort{j}^{\ell+1}}\cdot Z_{\sort{j}^1\cdots\sort{j}^{\ell+1}} \label{eq:pU_ugly}
    \end{equation}
    for some terms $\brc{Z_{\mb{j}^1,\ldots,\mb{j}^{\ell+1}}}$ which are degree-$O(\ell)$ polynomials in the entries of $U,V$. Here the subscripts $\sort{j}^1,\ldots,\sort{j}^{\ell+1}$ are sorted tuples from $\strings$.
    
    Because $S = \sum^{\ell}_{t=1} v^{\otimes\omega}_t$, we can rewrite \eqref{eq:pU_ugly} as
    \begin{equation}
        -\xi \le \sum_{\substack{\sort{j}^1,\ldots,\sort{j}^{\ell+1}\in\strings \\ t_1,\ldots,t_{\ell+1}\in[\ell]}} \left(\prod^{\ell+1}_{s=1}(v^{\otimes \omega}_{t_s})_{\sort{j}^s}\right)\cdot Z_{\sort{j}^1\cdots\sort{j}^{\ell+1}} \sum^{\ell}_{t_1,\ldots,t_{\ell+1} = 1} \le \xi \label{eq:final_ugly}
    \end{equation}
    By Lemma~\ref{lem:approxVrank}, \eqref{eq:final_ugly} holds for the components $\brc{v_t}\triangleq \brc{v^*_{a,t}}$ of $T^*_a$ and $\xi = (3\radius\ell)^{\ell+1}\cdot \epsmap$.
    
    From \eqref{eq:final_ugly} it becomes apparent why we require Part~\ref{item:push_condnumber2} of Assumption~\ref{assume:push}: for any $v_1,\ldots,v_{\ell}\in\R^r$, recall that $q(v_1,\ldots,v_{\ell})$ denotes the vector such that for any sorted $\mb{j}^1,\ldots,\mb{j}^{\ell+1}\in\strings$ and $t_1,\ldots,t_{\ell+1}\in[\ell]$, its $(\mb{j}^1,\ldots,\mb{j}^{\ell+1},t_1,\ldots,t_{\ell+1})$-th entry is given by $\prod^{\ell+1}_{s=1}(v^{\otimes \omega}_{t_s})_{\mb{j}^s}$. We can thus rewrite \eqref{eq:final_ugly} for any choice of $\brc{v_t} = \brc{v^*_{a,t}}$ as 
    \begin{equation}
        -\xi \le \iprod{q(v^*_{a,1},\ldots,v^*_{a,\ell}),Z} \le \xi. \label{eq:final_ugly2}
    \end{equation}
    By Part~\ref{item:push_condnumber2} of Assumption~\ref{assume:push}, for any $v_1,\ldots,v_{\ell}\in\R^r$ there exists $\lambda\in\R^d$ satisfying $\norm{\lambda}^2_2 \le \theta^2$ and for which $\sum^d_{a=1} \lambda_a q(v^*_{a,1},\ldots,v^*_{a,\ell}) = q(v_1,\ldots,v_{\ell})$, so we conclude that
    \begin{align}
        \iprod{q(v_1,\ldots,v_{\ell}),Z}^2 &= \biggl(\sum^d_{a=1} \lambda_a \iprod{q(v^*_{a,1},\ldots,v^*_{a,\ell}),Z}\biggr)^2 \le \norm{\lambda}^2_2 \cdot \biggl(\sum^d_{a=1}\iprod{q(v^*_{a,1}\ldots,v^*_{a,\ell}),Z}^2 \biggr) \le \xi^2 d\theta^2. \qedhere
    \end{align}
\end{proof}

\noindent Next comes the key inductive step. We show that if $U$ sends all tensors of symmetric rank $\ell$ to tensors that are approximately $V$-rank $\ell$, then $U$ sends all tensors of symmetric rank $\ell-1$ to tensors of $V$-rank $\ell - 1$. The high-level idea in the proof is that any $(\ell+1)\times(\ell+1)$ minor in the definition of having $V$-rank $\ell$ can be expanded as a linear combination of $\ell\times\ell$ minors. So because the $(\ell+1)\times(\ell+1)$ minors of the image under $U$ of an arbitrary symmetric rank $\ell - 1$ tensor $T$ plus an arbitrary rank-1 perturbation $z^{\otimes\omega}$ are small, we can show that the $\ell\times\ell$ minors of the image of $T$ are also small:

\begin{lemma}\label{lem:rankrtorankr}
    Take any $V\in (\R^r)^{\otimes \omega-2}$ for which $\norm{V}_F = 1$ and, for some $\gamma > 0$, $|V_{x\cdots x}| \ge \gamma$ for all $x\in[r]$. Let $T\in(\R^r)^{\otimes\omega}$ be any symmetric tensor of symmetric rank at most $\ell - 1$, given by $T = \sum^{\ell-1}_{t=1} v_t^{\otimes\omega}$ for $\sum^{\ell-1}_{t=1} \norm{v_t}^2  = 1$.
    
    Using the constraints that for any $W = \sum^{\ell}_{t=1} w_t^{\otimes\omega}$ for which $\sum^{\ell}_{t=1}\norm{w_t}^2 = 1$, $F_U(W)$ is $\xi$-approximately $V$-rank at most $\ell$, there is a degree-$O(\ell)$ SoS proof that $F_U(T)$ is $\xi^*$-approximately $V$-rank at most $\ell-1$ for
    \begin{equation}
        \xi^*\triangleq \frac{1}{\gamma}\cdot r^{\omega/2}\cdot O(\ell)^{O(\ell)} \cdot 
        \left(\xi \omega^{\omega} r^{\omega/2}\radius\sqrt{d}/\kappa^2 + O(\omega)^{O(\omega\ell)}\cdot r^{\omega} \eta d / \kappa^2\right).
    \end{equation}
\end{lemma}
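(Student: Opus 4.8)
The strategy is to mimic the argument sketched in Section~\ref{sec:lowrank_overview}: start with an arbitrary symmetric rank-$(\ell-1)$ tensor $T$, perturb it by an arbitrary rank-$1$ tensor $z^{\otimes\omega}$ (suitably normalized so that the hypothesis applies to $W = T + c\cdot z^{\otimes\omega}$ after rescaling), and exploit the fact that $F_U(W)$ is $\xi$-approximately $V$-rank at most $\ell$. Concretely, I would fix index sets $\{a_1,\dots,a_\ell\},\{b_1,\dots,b_\ell\}\subset[r]$ of size $\ell$ and augment them to size-$(\ell+1)$ sets using a fresh index, say $a_{\ell+1}=b_{\ell+1}=x$ for some $x\in[r]$. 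The $(\ell+1)\times(\ell+1)$ minor of $F_U(W)(V,:,:)$ along these rows/columns is $\xi$-small by hypothesis. Since $F_U$ is linear, $F_U(T + c\,z^{\otimes\omega})(V,:,:) = F_U(T)(V,:,:) + c\cdot F_U(z^{\otimes\omega})(V,:,:)$, so expanding the determinant as a polynomial in $c$ and isolating the coefficient of $c^1$ — via the multilinearity of the determinant in its columns — expresses a linear combination of the $\ell\times\ell$ minors of $F_U(T)(V,:,:)$ (with coefficients drawn from the entries of $F_U(z^{\otimes\omega})(V,:,:)$, via Laplace expansion along column $x$) as a quantity bounded by $O(\xi)$ times appropriate norm factors. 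This step should be implementable in degree-$O(\ell)$ SoS: the determinant is a degree-$(\ell+1)$ polynomial, and extracting the $c$-linear coefficient is a formal algebraic identity (rather, I would work directly with the identity $\eqref{eq:sumdets}$-style expansion and the constraint that the full minor is small for the specific perturbed tensor, avoiding differentiation).

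The next step is to promote ``many linear combinations of $\ell\times\ell$ minors vanish'' to ``each $\ell\times\ell$ minor is small.'' Here I would vary $z$ over a polynomial-size family of vectors (e.g. scaled standard basis combinations, or a Vandermonde-style family as in Corollary~\ref{cor:general_vandermonde}) so that the induced coefficient vectors — which are entries of $F_U(z^{\otimes\omega})(V,:,:)$, hence polynomials in $z$ — span enough directions to invert. The anticoncentration/interpolation machinery already developed (Fact~\ref{fact:vandermonde_interpolate}, Corollary~\ref{cor:general_vandermonde}) is exactly what lets one bound an individual coefficient by $O(e)^{\Theta(eD)}\cdot\nu$ after seeing it bounded along a grid of evaluation points; I would invoke that with $\nu$ equal to the $O(\xi\cdot(\text{norm factors}))$ bound from the previous paragraph and $D,e = O(\omega\ell)$. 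The role of the hypothesis $|V_{x\cdots x}|\ge\gamma$ is to guarantee that the relevant contraction $F_U(z^{\otimes\omega})(V,:,:)$ is genuinely sensitive to $z$ along every coordinate, so the family of coefficient vectors is well-conditioned; this is where the $1/\gamma$ factor in $\xi^*$ comes from. Division by $\gamma$ in SoS is handled by Part~\ref{fact:divide_both_sides} of Fact~\ref{fact:division}. The norm factors $\omega^\omega r^{\omega/2}\radius\sqrt d/\kappa^2$ and $O(\omega)^{O(\omega\ell)}\cdot r^\omega\eta d/\kappa^2$ in the statement of $\xi^*$ will emerge from: (i) bounding $\norm{U}_F$ and $\norm{F_U(z^{\otimes\omega})}_F$ crudely via Lemma~\ref{lem:crude_Ubound} and Lemma~\ref{lem:bootstrap}, (ii) the $\theta\sqrt d$-style amplification already present in the hypothesis being applied (Lemma~\ref{lem:genericflatrank} supplies $\xi$ of this shape), and (iii) the $O(\ell)^{O(\ell)}$ combinatorial factor from counting permutations in the determinant expansion and the $r^{\omega/2}$ from converting between $V$-contraction norms.

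The main obstacle I anticipate is the bookkeeping around normalization and homogeneity. The hypothesis applies only to $W=\sum_{t=1}^\ell w_t^{\otimes\omega}$ with $\sum\norm{w_t}^2=1$, so to use it on $T + c\,z^{\otimes\omega}$ I must rescale the whole tensor by $(\,1 + c\,)^{-\omega/2}$-type factors (since $T$ has unit component-norm-sum), and then the ``$cc'$'' rescaling rule at the end of Definition~\ref{def:Vrank} changes the effective $\xi$ by a factor depending on $c^{\ell+1}$. Keeping $c$ in a bounded range (say $c\in[1/(e+1),1]$ as the interpolation grid demands) controls this, but one must make sure that the rescaled tensor's components still have $\sum\norm{\cdot}^2$ bounded so that the SoS proof from the hypothesis (whose degree is $O(\ell)$ in the $T$-variables and $O(\omega\ell)$ in the $v$-variables) actually composes. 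A secondary subtlety is that the $\ell\times\ell$ minors I am trying to bound are minors of the \emph{contraction} $F_U(T)(V,:,:)$, an $r\times r$ matrix whose entries are themselves linear forms in the SoS indeterminates $\{T_a\}$ and $\{v_{a,t}\}$ (through $U$, which is linear in $P$, and through the fixed tensor $T$); tracking that all intermediate quantities remain low-degree polynomials in the program variables — so the final proof really is degree-$O(\ell)$ — requires care but no new ideas. Once every $\ell\times\ell$ minor of $F_U(T)(V,:,:)$ is shown to be $O(\xi^*)$-small, that is precisely the statement that $F_U(T)$ is $\xi^*$-approximately $V$-rank at most $\ell-1$, completing the induction.
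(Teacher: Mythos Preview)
Your overall architecture---perturb $T$ by $c\cdot z^{\otimes\omega}$, apply the hypothesis to the normalized rank-$\ell$ tensor, expand the $(\ell+1)\times(\ell+1)$ determinant as a polynomial in $c$, and isolate the $c^1$ coefficient via Vandermonde interpolation over $c\in\{1/(\ell+2),\dots,1\}$ (Corollary~\ref{cor:general_vandermonde} with $D=1$, $e=\ell+1$)---matches the paper exactly, and the normalization bookkeeping you flag is handled just as you anticipate. The gap is in the step ``promote many linear combinations of $\ell\times\ell$ minors vanish to each individual minor is small.'' You propose to vary $z$ over a grid and interpolate again, claiming the hypothesis $|V_{x\cdots x}|\ge\gamma$ makes the resulting coefficient family well-conditioned. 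But the coefficient attached to $M_{(a,b)}$ is $(S_z)_{ab}=\sum_{\mb{i},\mb{j}}U^{\mb{j}}_{\mb{i}ab}(z^{\otimes\omega})_{\mb{j}}V_{\mb{i}}$, a polynomial in $z$ whose coefficients are linear forms in the \emph{SoS indeterminate} $U$. Interpolating in $z$ can at best extract the coefficient of a fixed monomial $z_{\mb{j}^*}$, yielding $\sum_{a,b}\sigma_{a,b}M_{(a,b)}\sum_{\mb{i}}U^{\mb{j}^*}_{\mb{i}ab}V_{\mb{i}}$---still a $U$-dependent combination of all the minors, not a single one. The conditioning of the map $(a,b)\mapsto(S_z)_{ab}$ is governed by $U$, and $|V_{x\cdots x}|\ge\gamma$ says nothing about $U$.

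The paper supplies the missing mechanism in two moves. First, Lemma~\ref{lem:symmetric_inspan} gives, for each $\mb{j}^*$, an \emph{explicit} finite combination $\sum_i w_i (z^{(i)})^{\otimes\omega}$ (with $\norm{w}_1\le\symbound$) summing to the symmetrized basis tensor at $\mb{j}^*$; combining the $c$-linear constraints over these $z^{(i)}$ and using ultra-symmetry of $U$ collapses the $z$-dependence to a single column $U^{\mb{j}^*}$. Second---and this is the idea you are missing---the paper then combines over $\mb{j}^*$ using the column $B^{x\cdots x}$ of the approximate right-inverse $B$ of $U_{\sym}$ from Lemma~\ref{lem:UBId}. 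Because $U_{\sym}B\approx\Id$ is an SoS-provable relation (with error $\sim(r\omega)^{O(\omega)}\eta d/\kappa^2$, which is precisely the source of the second term in $\xi^*$, not ``$\theta\sqrt d$-amplification'' as you guessed), this linear combination evaluates to $\sum_{\mb{i}}\bone{\overline{\mb{i}ab}=x\cdots x}\,V_{\mb{i}}=V_{x\cdots x}\cdot\bone{a=b=x}$ plus controlled error. Only now is the single minor $M_{(x,x)}\cdot V_{x\cdots x}$ isolated, and only now does dividing by $|V_{x\cdots x}|\ge\gamma$ finish the job. Without invoking the program variable $L$ through $B$, you have no SoS-admissible way to strip off the $U$-dependence.
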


\begin{proof}
    Take any subsets $I,J\subset[r]$ of size $\ell + 1$ and define the linear map $F^{I,J}_{U;V}: (\R^r)^{\otimes \omega} \to \R^{(\ell+1)\times(\ell+1)}$ by
    \begin{equation}
        F^{I,J}_{U;V}(\cdot) \triangleq F_U(\cdot)(V,:,:)^J_I.
    \end{equation}
    Take any $z\in\S^{r-1}$ and define the $(\ell+1)\times(\ell+1)$ matrices
    \begin{equation}
        T'\triangleq F^{I,J}_{U;V}(T) \qquad S_z \triangleq F^{I,J}_{U;V}(z^{\otimes\omega})
    \end{equation}
    We will need some basic bounds on the entries of $T'$. For any $a\in I,b\in J$,
    \begin{equation}
        (T'_{a,b})^2 = \biggl(\sum_{\mb{i}\in[r]^{\omega-2}} F_U(T)_{\mb{i}ab}\cdot V_{\mb{i}}\biggr)^2 \le \norm{F_U(T)}^2_F \label{eq:Tprimeentry}
    \end{equation}
    by Cauchy-Schwarz and our assumption that $\norm{V}_F = 1$. Note that by Lemma~\ref{lem:sigma_cond} and Lemma~\ref{lem:push_almost_ortho},
    \begin{equation}
        \norm{F_U(T)}^2_F \le \omega^{\omega/2} \iprod{F_U(T),F_U(T)}_{\Sigma} = \omega^{\omega/2} \Bigl( \E[g\sim\calN(0,\Id)]{\iprod{T,g^{\otimes\omega}}^2} \pm \epsort\Bigr). \label{eq:FUT_norm}
    \end{equation}
    We have
    \begin{equation}
        \E[g]{\iprod{T,g^{\otimes\omega}}^2} = \E[g]*{\biggl(\sum^{\ell-1}_{t=1} \iprod{v_t,g}^{\omega}\biggr)^2} \le \ell (2\omega-1)!! \le \ell\cdot (2\omega)^{\omega},
    \end{equation}
    so combining this with \eqref{eq:Tprimeentry} and \eqref{eq:FUT_norm}, we find that
    \begin{equation}
        (T'_{a,b})^2 \le \ell\cdot O(\omega)^{3\omega/4} \ \ \forall \ a\in I, b\in J.
    \end{equation}
    In particular, for any $a_1,\ldots,a_\ell\in I$, $b_1,\ldots,b_{\ell}\in J$,
    \begin{equation}
        \prod^\ell_{s=1} T'_{a_s,b_s} = \pm \ell^{\ell/2}\cdot O(\omega)^{3\omega\ell/4}. \label{eq:Tprimepower}
    \end{equation}

    With these preliminary estimates in place, we proceed to the core of the argument. Given $A\subseteq[\ell+1]$, let $R_A\in\R^{(\ell+1)\times(\ell+1)}$ denote the matrix whose columns indexed by $A$ are given by the corresponding columns in $S_z$, and whose remaining columns are given by the corresponding columns in $T'$. 
    Then for any $c > 0$,
    \begin{equation}
        \det(T'+c S_z) = \sum^{\ell+1}_{t = 0} c^{t} \sum_{A\subseteq[\ell+1]:|A|=t} \det(R_A). \label{eq:polarization}
    \end{equation}
    Note that because $T + c\cdot z^{\otimes\omega} = \sum^{\ell-1}_{t=1} v^{\otimes\omega}_t + c\cdot z^{\otimes \omega}$ has symmetric rank at most $\ell$, by assumption on $U$ we have
    \begin{equation}
        \det(T'+c S_z) = \pm\xi\cdot (1 + c^{2/\omega})^{\omega(\ell+1)/2}. \label{eq:det_bound}
    \end{equation}
    Consider taking $c = \frac{1}{\ell+2},\frac{2}{\ell+2},\ldots,1$. By \eqref{eq:det_bound} and Corollary~\ref{cor:general_vandermonde} applied with $D = 1$ and $e = \ell+1$, 
    there is a linear combination of the equations \eqref{eq:polarization} for these different choices of $c$ that yields a bound on the $t = 1$ summand of \eqref{eq:polarization}. Rewriting that summand as $\sum_{a\in I, b\in J} \sigma_{a,b} (S_z)_{ab} \cdot M_{(a,b)}$, where each $\sigma_{a,b}\in\brc{\pm 1}$ is some sign, and $M_{(a,b)}$ is the $(a,b)$-th minor of $T'$, we conclude that
    \begin{equation}
        \sum_{a\in I, b\in J} \sigma_{a,b} (S_z)_{ab} \cdot M_{(a,b)} = \pm \xi\cdot  O(\ell)^{O(\ell)}\cdot 2^{\omega(\ell+1)/2}. \label{eq:smallminors}
    \end{equation}
    Note that by definition of $S_z$, for any $a\in I, b\in J$
    \begin{equation}
        (S_z)_{ab} = \sum_{\mb{i}\in[r]^{\omega-2},\mb{j}\in\strings} U^{\mb{j}}_{\mb{i}ab}(z^{\otimes\omega})_{\mb{j}} V_{\mb{j}}. \label{eq:Sz_expression}
    \end{equation}
    By Lemma~\ref{lem:symmetric_inspan}, for any $\mb{j}^* = (j^*_1,\ldots,j^*_\omega)\in\strings$, there exist $z^{(1)},\ldots,z^{(s)}\in\S^{r-1}$ and $w\in\R^s$ for which $\norm{w}_1\le\symbound$ (where $\symbound$ is defined in \eqref{eq:gamdef}) and
    \begin{equation}
        \sum^s_{i=1} w_i \sum_{\mb{j}\in\strings} U^{\mb{j}}({z^{(i)}}^{\otimes\omega})_{\mb{j}} = \frac{1}{\omega!}\sum_{\pi\in\calS_{\omega}} U^{j^*_{\pi(1)}\cdots j^*_{\pi(\omega)}} = U^{\mb{j}^*}, \label{eq:symmetrize}
    \end{equation}
    where in the last step we used the fact that $U$ is ultra-symmetric.
    
    We conclude from \eqref{eq:Sz_expression} and \eqref{eq:symmetrize} that for any $\mb{j}^*\in\strings$, there are $z^{(1)},\ldots,z^{(s)}\in\S^{r-1}$ and $w\in\R^s$ for which $\norm{w}_1 \le\symbound$ and
    \begin{equation}
        \sum^s_{i=1} w_i S'_{z^{(i)}} = \sum_{\mb{i}\in[r]^{\omega-2}} U^{\mb{j}^*}_{\mb{i}::}V_{\mb{i}}. \label{eq:UjVi}
    \end{equation}
    Incorporating this into \eqref{eq:smallminors}, we get
    \begin{equation}
        \sum_{a\in I,b\in J}\sigma_{a,b}M_{(a,b)} \sum_{\mb{i}\in[r]^{\omega-2}} U^{\mb{j}^*}_{\mb{i}ab} V_{\mb{i}} = \pm \xi\cdot \symbound\cdot O(\ell)^{O(\ell)}\cdot 2^{\omega(\ell+1)/2} \ \ \forall \mb{j}^*\in\strings. \label{eq:use_sym_inspan}
    \end{equation}
    
    Finally, recall from Lemma~\ref{lem:UBId} that there is an $\rchoose\times\rchoose$ matrix $B$ of indeterminates such that
    $\norm{U_{\sym}B - \Id}^2_F \le 4(r^3\omega^{3/2})^{\omega}\eta^2 d^2 / \kappa^4$ 
    and $\norm{B}^2_F \le r^{\omega}\omega^{2\omega}\radius^2 d / \kappa^2$. 
    Let $\delta$ denote the $r^{\omega}$-dimensional vector given by
    \begin{equation}
        \delta_{\mb{i}} = U_{\sym}B^{\sort{i}} - \bone{\sort{i} = (x,\cdots,x)} \ \ \forall \ \mb{i}\in\strings.
    \end{equation} By the bound on $\norm{U_{\sym}B - \Id}^2_F$, we know \begin{equation}
        \norm{\delta}^2 \le
        4\omega!(r^3\omega^{3/2})^{\omega}\eta^2 d^2 / \kappa^4. \label{eq:delta_bound}
    \end{equation}
    Then taking the linear combination of \eqref{eq:UjVi} for different choices of sorted $\mb{j}^*$ according to the entries of $B^{x\cdots x}$, we obtain a matrix whose $(a,b)$ entry for any $a\in I, b\in J$ is given by
    \begin{align}
        \sum_{\mb{j}^* \ \text{sorted}} B^{x\cdots x}_{\mb{j}^*} \sum_{\mb{i}\in[r]^{\omega-2}} U^{\mb{j}^*}_{\mb{i}ab}V_{\mb{i}} &= \sum_{\mb{i}\in[r]^{\omega-2}} \left(\bone{\overline{\mb{i}ab} = x\cdots x} +\delta_{\mb{i}ab}\right)V_{\mb{i}} \\
        &= V_{x\cdots x}\cdot \bone{a = x, b = x} + \sum_{\mb{i}\in[r]^{\omega-2}} \delta_{\mb{i}ab}\cdot V_{\mb{i}} \\
        \intertext{Defining $\epsilon_{a,b} \triangleq  \sum_{\mb{i}\in[r]^{\omega-2}} \delta_{\mb{i}ab}\cdot V_{\mb{i}}$, we can further rewrite this as}
        &= V_{x\cdots x}\cdot \bone{a=x,b=x} + \epsilon_{a,b}. \label{eq:BUV}
    \end{align}
    Note that
    \begin{equation}
        \epsilon_{a,b} = 
        \pm 2\omega!^{1/2}(r\omega^{3/4})^{\omega}\eta d / \kappa^2
        \ \ \forall \ a\in I, b\in J
    \end{equation}
    by Cauchy-Schwarz and \eqref{eq:delta_bound}.
    
    Combining \eqref{eq:use_sym_inspan} and the fact that $\norm{B^{x\cdots x}}^2 \le \norm{B}^2_F \le r^{\omega}\omega^{2\omega}\radius^2 d / \kappa^2$, note that
    \begin{equation}
        \sum_{\mb{j}^* \ \text{sorted}} B^{x\cdots x}_{\mb{j}^*} \sum_{a\in I,b\in J}\sigma_{a,b}M_{(a,b)} \sum_{\mb{i}\in[r]^{\omega-2}} U^{\mb{j}^*}_{\mb{i}ab} V_{\mb{i}} = \pm\xi'  \label{eq:final_combo}
    \end{equation}
    for
    \begin{equation}
        \xi' \triangleq \xi\cdot \symbound\cdot O(\ell)^{O(\ell)} \omega^{\omega}\radius \sqrt{d} r^{\omega}\cdot 2^{\omega(\ell+1)/2} / \kappa.
    \end{equation}
    We can combine \eqref{eq:BUV} and \eqref{eq:final_combo} to find that
    \begin{equation}
        \sum_{a\in I, b\in J} \sigma_{a,b} M_{(a,b)} \left(V_{x\cdots x} \cdot \bone{a=x,b=x} + \epsilon_{a,b}\right) \le \xi'. \label{eq:sigmaMV}
    \end{equation}
    Note that
    \begin{align}
        \biggl(\sum_{a\in I, b\in J} \sigma_{a,b} M_{(a,b)}\epsilon_{a,b}\biggr)^2 &\le \biggl(\sum_{a,b} M^2_{(a,b)}\biggr)\biggl(\sum_{a,b} \epsilon_{a,b}^2\biggr) \\
        &\le (\ell+1)^4 \cdot (\ell+1)!^2\cdot \ell^{\ell}\cdot O(\omega)^{3\omega\ell/2}\cdot
        \biggl(4\omega!(r^3\omega^{3/2})^{\omega}\eta^2 d^2 / \kappa^4\biggr) \\
        &\le O(\ell)^{O(\ell)}\cdot O(\omega)^{O(\omega\ell)} r^{3\omega} \eta^2 d^2 / \kappa^4 \triangleq (\xi'')^2,
    \end{align}
    so substituting this into \eqref{eq:sigmaMV}, we have
    \begin{equation}
        \sigma_{x,x} M_{(a,b)} V_{x\cdots x} = \pm(\xi' + \xi'').\label{eq:xixi}
    \end{equation}
    We know $|C_{x,x}| = \omega!\cdot |V_{x\cdots x}| \ge \omega!\gamma$ by assumption on $V$, so this implies that $M_{(a,b)} \le \pm (\xi'+\xi'')/(\omega!\gamma)$. Unpacking the definitions of $\xi'$ and $\xi''$, this concludes the proof of the lemma.
\end{proof}

\noindent Lemmas~\ref{lem:genericflatrank} and \ref{lem:rankrtorankr} already imply that for any $V\in(\R^r)^{\otimes\omega-2}$, the transformation $U$ maps all symmetric rank-1 tensors to tensors which are approximately $V$-rank 1. It remains to relate $V$-rank back to the usual notion of symmetric tensor rank. To do this, we next show that there exists a collection of $V$ such that any tensor which is approximately $V$-rank 1 for all such $V$ behave approximately like a tensor with symmetric rank 1.
\begin{lemma}\label{lem:Vrankimplies1rank}
    Let $T$ be an order-$\omega$, $r$-dimensional symmetric tensor of indeterminates. For $m = r^{2\omega}$, there exist tensors $V^{(1)},\ldots,V^{(N)}\in(\R^r)^{\otimes \omega-2}$ of unit Frobenius norm, whose entries are all lower bounded in magnitude by $1/r^{\Theta(\omega)}$, and such that, using the constraints that $T$ is $\xi$-approximately $V_i$-rank 1 for all $i\in[m]$, there is a degree-$O(1)$ SoS proof that
    \begin{equation}
        T_{\mb{i}ab} \cdot T_{\mb{j}cd} - T_{\mb{i}ad} \cdot T_{\mb{j}cb} + T_{\mb{j}ab} \cdot T_{\mb{i}cd} - T_{\mb{j}ad} \cdot T_{\mb{i}cb} = \xi\cdot r^{O(\omega)} \label{eq:2x2vanish_pre}
    \end{equation}
    for all $\mb{i},\mb{j}\in[r]^{\omega-2}$ and $a,b,c,d\in[r]$.
\end{lemma}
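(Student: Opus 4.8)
The plan is to choose the $V^{(i)}$ so that $T$ being approximately $V^{(i)}$-rank $1$ encodes, for many contractions, the vanishing of all $2\times 2$ minors of the corresponding contraction matrix, and then to assemble these into the symmetric $2\times 2$ relation \eqref{eq:2x2vanish_pre}. Concretely, for each pair of indices $\mb{i},\mb{j}\in[r]^{\omega-2}$ I would like a single tensor $V$ supported (with nonzero weight) on both $\mb{i}$ and $\mb{j}$, so that $T(V,:,:) = V_{\mb{i}}\,T_{\mb{i}::} + V_{\mb{j}}\,T_{\mb{j}::} + (\text{other terms})$; the $(a,b),(c,d)$ minor of this matrix, when $T$ is $\xi$-approximately $V$-rank $1$, is then $\pm\xi$. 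To kill the ``other terms'' and isolate the cross term $V_{\mb{i}}V_{\mb{j}}(T_{\mb{i}ab}T_{\mb{j}cd} - T_{\mb{i}ad}T_{\mb{j}cb})$ I would take a small number (a constant, or $r^{O(\omega)}$ many) of such $V$'s differing only in the \emph{signs/magnitudes} of the weights on $\mb{i}$ and $\mb{j}$ versus on a few auxiliary multi-indices, and take a fixed linear combination of the resulting minor constraints. Since every entry of each $V^{(i)}$ is a fixed rational of size between $1/r^{\Theta(\omega)}$ and $1$, the coefficients in this linear combination are bounded by $r^{O(\omega)}$, and the minor constraints themselves are degree-$O(1)$ (namely degree $4$ in the entries of $T$), so the whole thing is a degree-$O(1)$ SoS proof. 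Dividing through by the product $V^{(i)}_{\mb{i}}V^{(i)}_{\mb{j}} \ge 1/r^{\Theta(\omega)}$ (a scalar, so legal to divide by) multiplies the error by $r^{O(\omega)}$, which is absorbed into the stated $\xi\cdot r^{O(\omega)}$.

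The steps, in order, would be: (1) fix $\mb{i},\mb{j}\in[r]^{\omega-2}$ and $a,b,c,d\in[r]$; for an auxiliary index $\mb{k}$ (chosen distinct from $\mb{i},\mb{j}$, possible since $r\geq$ something; if $\omega-2$ is too small one can instead use several perturbations of the two-point supports) build $V = \alpha\,\ten(e_{\mb{i}}) + \beta\,\ten(e_{\mb{j}}) + (\text{small terms on }\mb{k})$, normalized to unit Frobenius norm, with $\alpha,\beta$ ranging over a constant-size set of values; (2) write out the $2\times 2$ minor $T(V,:,:)_{a,b}T(V,:,:)_{c,d} - T(V,:,:)_{a,d}T(V,:,:)_{c,b}$ as a quadratic form in $(\alpha,\beta,\dots)$ whose coefficients are exactly the quantities $T_{\mb{i}ab}T_{\mb{j}cd}-T_{\mb{i}ad}T_{\mb{j}cb}$ (and its symmetrizations) plus purely-$\mb{i}$, purely-$\mb{j}$, and $\mb{k}$-involving terms; each such minor is $\pm\xi$ by the $V$-rank-$1$ hypothesis; (3) solve a small (constant-dimensional) Vandermonde-type linear system in the choices of $(\alpha,\beta)$ — invoking Fact~\ref{fact:vandermonde_interpolate} or just a $O(1)\times O(1)$ invertible matrix of bounded entries — to extract the coefficient of the ``cross'' monomial, picking up a factor $r^{O(\omega)}$ from the inverse; (4) observe the extracted combination is exactly $V_{\mb{i}}V_{\mb{j}}$ times the symmetric expression on the left of \eqref{eq:2x2vanish_pre}, bounded by $\xi\cdot r^{O(\omega)}$; (5) divide by the scalar $V_{\mb{i}}V_{\mb{j}}\ge r^{-\Theta(\omega)}$ to conclude; (6) note we only needed a fixed finite list of $V$'s per pair $(\mb{i},\mb{j})$, so $N = r^{2(\omega-2)}\cdot O(1) \le r^{2\omega} = m$ of them suffice overall, and each has entries bounded below by $1/r^{\Theta(\omega)}$.

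The main obstacle I anticipate is \textbf{step (2)--(3): cleanly separating the desired cross term from the ``self'' terms $T_{\mb{i}\cdot}T_{\mb{i}\cdot}$, $T_{\mb{j}\cdot}T_{\mb{j}\cdot}$ while keeping all the auxiliary-index contributions controlled}, since a naive two-point $V = \alpha\ten(e_{\mb{i}})+\beta\ten(e_{\mb{j}})$ has $\|V\|_F^2 = \alpha^2+\beta^2$ fixed but the minor is a genuinely quadratic (degree-$2$ homogeneous) form in $(\alpha,\beta)$ with three monomials $\alpha^2,\alpha\beta,\beta^2$, so three evaluations suffice — the subtlety is only that one must ensure the chosen $(\alpha,\beta)$ pairs keep \emph{every} entry of the normalized $V$ bounded below by $1/r^{\Theta(\omega)}$ (this forces $V$ to have full support, i.e. nonzero small weight on \emph{all} of $[r]^{\omega-2}$, not just $\mb{i},\mb{j}$), which reintroduces cross terms with every other multi-index. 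Handling this means the minor is actually a quadratic form in the two "large" weights plus $O(1)$ error from the uniformly-small background weights (each of size $r^{-\Theta(\omega)}$, contributing $r^{-\Theta(\omega)}\cdot\|F_U(T)\text{-type bounds}\|$), and one checks — using the entrywise bounds on $T(V,:,:)$ exactly as in the bound $(T'_{a,b})^2 \le$ (stuff) derived inside the proof of Lemma~\ref{lem:rankrtorankr} — that this background error is itself of order $r^{O(\omega)}\xi$ after the interpolation, hence harmless. Once that bookkeeping is set up, the rest is a routine SoS computation of the kind already carried out repeatedly in Section~\ref{sec:push}.
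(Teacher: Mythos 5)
Your high-level idea is the same as the paper's: the $2\times2$ minor constraint for $T(V,:,:)$ is a quadratic form $\sum_{\mb{k},\mb{l}}V_{\mb{k}}V_{\mb{l}}A_{\mb{k}\mb{l}}=\pm\xi$ with $A_{\mb{k}\mb{l}}=T_{\mb{k}ab}T_{\mb{l}cd}-T_{\mb{k}ad}T_{\mb{l}cb}$, and one extracts the symmetrized coefficient $A_{\mb{i}\mb{j}}+A_{\mb{j}\mb{i}}$ by a bounded linear combination of these constraints over several $V$'s. The paper implements this globally: Lemma~\ref{lem:basis_sym} (via a Haar-random rotation of the standard rank-one basis) produces $N=\binom{r^{\omega-2}+1}{2}$ tensors whose rank-one Gram matrices $\vec(V^{(t)})\vec(V^{(t)})^{\top}$ span the symmetric matrices on $[r]^{\omega-2}$, with every $e_{\mb{i}}e_{\mb{j}}^{\top}+e_{\mb{j}}e_{\mb{i}}^{\top}$ an $O(1)$-norm combination and all entries bounded below. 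Your per-pair interpolation is a different route to the same extraction.

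The genuine gap is in your handling of the full-support background $W$. The background's contribution to the quadratic form consists of terms like $\epsilon\alpha\sum_{\mb{l}}W_{\mb{l}}(A_{\mb{i}\mb{l}}+A_{\mb{l}\mb{i}})$ and $\epsilon^2\langle \vec(W)\vec(W)^{\top},A\rangle$. These are \emph{unknown quadratics in the indeterminates $T$ of exactly the same nature as the quantity you are trying to bound}; at this point in the argument nothing bounds them by $\xi$ (assuming so is circular), and bounding them via entrywise bounds on $T$ — as you propose — yields an error of order $r^{-\Theta(\omega)}\cdot\norm{T}_F^2$, which is a fixed quantity that does not shrink with $\xi$ and would destroy the downstream application (where $\xi\to 0$ with $\eta$ but $\norm{T}_F$ stays bounded away from zero). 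Relatedly, your count of "three evaluations for $\alpha^2,\alpha\beta,\beta^2$" omits the linear-in-$(\alpha,\beta)$ and constant monomials the background introduces. The fix is to treat the background contributions as additional unknown coefficients and solve the resulting (still constant-dimensional) linear system — or, more cleanly, to abandon the two-large-weights-plus-background structure altogether and use a globally well-conditioned rank-one basis as the paper does.
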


\noindent To prove this, we use the following helper lemma which supplies the desired collection of tensors $V^{(1)},\ldots,V^{(N)}$.

\begin{lemma}\label{lem:basis_sym}
    For any $m\in\mathbb{N}$, there is a basis $\calB = \brc{M_1,\ldots,M_{\binom{m+1}{2}}}$ for the space of symmetric matrices in $\R^{m\times m}$ consisting of rank-$1$ matrices of unit Frobenius norm and with all entries lower bounded in magnitude by $1/\poly(m)$. Furthermore, for any $i,j\in[m]$, there exist coefficients $\lambda\in\R^{\binom{m+1}{2}}$ such that $\sum_t \lambda_t M_t = e_ie_j^{\top} + e_je_i^{\top}$ for which $\norm{\lambda} = O(1)$.
\end{lemma}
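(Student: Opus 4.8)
The plan is to take $\calB$ to be a small \emph{dense perturbation} of the natural rank-one basis $\{e_ie_i^\top\}_{i\in[m]}\cup\{(e_i+e_j)(e_i+e_j)^\top\}_{1\le i<j\le m}$ of the $\binom{m+1}{2}$-dimensional space of symmetric $m\times m$ matrices. Fix $\beta\triangleq m^{-10}$, let $\mathbf{1}\in\R^m$ be the all-ones vector, and for $i\in[m]$ set $p_i\triangleq (e_i+\beta\mathbf{1})/Z_i$ with $Z_i\triangleq\|e_i+\beta\mathbf{1}\|$, and for $i<j$ set $p_{ij}\triangleq(e_i+e_j+\beta\mathbf{1})/Z_{ij}$ with $Z_{ij}\triangleq\|e_i+e_j+\beta\mathbf{1}\|$. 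Take $\calB\triangleq\{p_ip_i^\top\}_i\cup\{p_{ij}p_{ij}^\top\}_{i<j}$; each element is rank-one, symmetric, and of unit Frobenius norm. Since $Z_i^2=1+2\beta+\beta^2 m$ and $Z_{ij}^2=2+4\beta+\beta^2 m$ are both $\Theta(1)$, every coordinate of every $p_i$ and $p_{ij}$ has magnitude at least $\beta/O(1)=1/\poly(m)$, which gives the claimed entrywise lower bound.

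To see that $\calB$ is a basis, consider the $\binom{m+1}{2}\times\binom{m+1}{2}$ matrix $\Psi(\beta)$ whose rows are the coordinates of the elements of $\calB$ in the orthonormal basis $\{e_ke_k^\top\}\cup\{\tfrac{1}{\sqrt2}(e_ke_l^\top+e_le_k^\top)\}$ of symmetric matrices. At $\beta=0$ this matrix is block lower triangular, with invertible diagonal blocks of $\Theta(1)$ norm ($\Id_m$ for the ``diagonal'' elements and $\tfrac{1}{\sqrt2}\Id_{\binom m2}$ for the ``off-diagonal'' ones) and off-diagonal block of operator norm $O(m)$, so $\sigma_{\min}(\Psi(0))\ge\Omega(1/m)$. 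Since $\|\Psi(\beta)-\Psi(0)\|_{\op}=O(\beta\cdot\poly(m))$, for our choice of $\beta$ we get $\sigma_{\min}(\Psi(\beta))\ge\Omega(1/m)$; hence $\calB$ is a basis and $\|\Psi(\beta)^{-1}\|_{\op}\le\poly(m)$. Making this bound quantitative via the block structure, rather than merely invoking genericity (a nonzero polynomial in $\beta$ has finitely many roots), is what gives the downstream argument in Lemma~\ref{lem:Vrankimplies1rank} effective constants.

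For the interpolation claim with $i<j$, the key identity is the exact cancellation
\begin{equation}
e_ie_j^\top+e_je_i^\top = Z_{ij}^2\,p_{ij}p_{ij}^\top - Z_i^2\,p_ip_i^\top - Z_j^2\,p_jp_j^\top + \beta^2\,\mathbf{1}\mathbf{1}^\top,
\end{equation}
which one checks by expanding $p_{ij}p_{ij}^\top$, $p_ip_i^\top$, $p_jp_j^\top$ and observing that the linear-in-$\beta$ cross terms cancel because $(e_i+e_j)\mathbf{1}^\top+\mathbf{1}(e_i+e_j)^\top=e_i\mathbf{1}^\top+\mathbf{1}e_i^\top+e_j\mathbf{1}^\top+\mathbf{1}e_j^\top$. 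The first three terms carry coefficients $Z_{ij}^2,-Z_i^2,-Z_j^2=O(1)$; for the residual, expand $\mathbf{1}\mathbf{1}^\top$ in the basis $\calB$ using $\Psi(\beta)^{-1}$, at coefficient-norm cost $\le\|\Psi(\beta)^{-1}\|_{\op}\cdot\|\mathbf{1}\mathbf{1}^\top\|_F\le\poly(m)$; multiplied by $\beta^2=m^{-20}$ this contributes $o(1)$, so $\|\lambda\|=O(1)$. The case $i=j$ is handled the same way, starting from $2e_ie_i^\top=2Z_i^2\,p_ip_i^\top-2\beta(e_i\mathbf{1}^\top+\mathbf{1}e_i^\top)-2\beta^2\mathbf{1}\mathbf{1}^\top$ and absorbing the lower-order terms (now of magnitude $O(\beta\cdot\poly(m))$ and $O(\beta^2\cdot\poly(m))$) into $\calB$ via $\Psi(\beta)^{-1}$.

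The main obstacle is choosing $\beta$ so that three requirements, which constrain $\beta$ in opposite directions, are met simultaneously: entrywise density forces $\beta\ge1/\poly(m)$; the clean $O(1)$ bound on $\|\lambda\|$ forces both $\beta\cdot\poly(m)$ (from the $i=j$ case) and $\beta^2\cdot\poly(m)$ (the residual $\mathbf{1}\mathbf{1}^\top$ term) to be $o(1)$; and linear independence plus conditioning needs $\beta\cdot\poly(m)$ small enough for $\Psi(\beta)$ to be a controlled perturbation of $\Psi(0)$. The explicit choice $\beta=m^{-10}$ leaves ample slack, so the only genuinely delicate step is the quantitative spectral analysis of $\Psi(0)$; everything else is bookkeeping.
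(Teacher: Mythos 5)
Your proof is correct, but it takes a genuinely different route from the paper's. The paper starts from the same standard rank-one generating set $\{e_ie_i^\top\}\cup\{\tfrac12(e_i+e_j)(e_i+e_j)^\top\}$ but then conjugates it by a Haar-random orthogonal matrix $U$: anticoncentration for random unit vectors (hyperspherical cap estimates plus a union bound) gives that with probability $9/10$ every entry of every $UM'_{ij}U^\top$ is $\Omega(1/m^7)$, and the interpolation identity together with its $O(1)$ coefficient norm transfers verbatim under conjugation, since $e_ie_j^\top+e_je_i^\top = U\,(uv^\top+vu^\top)\,U^\top$ for $u=U^\top e_i$, $v=U^\top e_j$. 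You instead use an explicit deterministic perturbation by $\beta\mathbf{1}$, which trades the probabilistic argument for two pieces of quantitative linear algebra: the conditioning of $\Psi(0)$ plus a perturbation bound to certify that $\calB$ is a basis with $\|\Psi(\beta)^{-1}\|_{\op}\le\poly(m)$, and the exact cancellation identity that isolates the residual $\beta^2\mathbf{1}\mathbf{1}^\top$ so it can be absorbed at negligible coefficient cost. Your approach is fully constructive and self-contained, at the price of a somewhat worse (but still $1/\poly(m)$) entrywise lower bound — note the matrix entries are bounded below by $\beta^2/O(1)$, the square of the coordinate bound you state, which is still fine; the paper's approach is shorter and gives cleaner constants but is existential/randomized. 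Both establish exactly what the downstream Lemma~\ref{lem:Vrankimplies1rank} needs, namely PSD rank-one basis elements of the form $pp^\top$ with dense $p$ and $O(1)$-norm interpolation coefficients.
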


We defer the proof of Lemma~\ref{lem:basis_sym} to Appendix~\ref{app:basis_sym}.

\begin{proof}[Proof of Lemma~\ref{lem:Vrankimplies1rank}]
    By Lemma~\ref{lem:basis_sym}, there exist $V^{(1)},\ldots,V^{(N)}\in(\R^r)^{\otimes \omega-2}$ for $N = \binom{r^{\omega-2}+1}{2} = r^{\Theta(\omega)}$ of unit Frobenius norm, whose entries are all lower bounded in magnitude by $1/r^{\Theta(\omega)}$, and for which for any $\mb{i}^*,\mb{j}^*\in[r]^{\omega-2}$, there exists $\lambda\in\R^N$ with norm $O(1)$ such that $e_{\mb{i}^*}e_{\mb{j}^*}^{\top} + e_{\mb{j}^*} e_{\mb{i}^*}^{\top} = \sum^N_{t=1} \lambda_t \vec(V^{(t)})\vec(V^{(t)})^{\top}$.
    
    For any $V\in(\R^r)^{\otimes \omega-2}$, as $T(V,:,:)_{ab} = \sum_{\mb{i}\in[r]^{\omega-2}} T_{\mb{i}ab} V_{\mb{i}}$, the constraint that $T$ is $\xi$-approximately $V$-rank 1 is equivalent to the constraint that for all $a,b\in[r]$,
    \begin{equation}
        \sum_{\mb{i},\mb{j}\in[r]^{\omega-2}} V_{\mb{i}} V_{\mb{j}} \left(T_{\mb{i}ab} \cdot T_{\mb{j}cd} - T_{\mb{i}ad} \cdot T_{\mb{j}cb}\right) = \pm\xi. \label{eq:mult_Vranks}
    \end{equation}
    By taking $V$ above to be $V^{(t)}$ for $t = 1,\ldots,N$ and taking the linear combination of the resulting constraints specified by $\lambda$, we obtain \eqref{eq:2x2vanish_pre}.
\end{proof}

\noindent We can now combine Lemmas~\ref{lem:genericflatrank}, \ref{lem:rankrtorankr}, and \ref{lem:Vrankimplies1rank} to conclude that $U$ sends any symmetric rank-1 tensor to a tensor $T$ satisfying \eqref{eq:2x2vanish_pre} (see Appendix~\ref{app:defer_rank1relation} for a formal proof).

\begin{corollary}\label{cor:rank1relation}
    Define
    \begin{equation}
        \epsrel\triangleq O(\ell^{\ell}\omega^{\omega}r^{2\omega}\radius d/\kappa)^{O(\ell)}\cdot(\theta\epsmap+\eta)
    \end{equation}
    Let $S\in(\R^r)^{\otimes\omega}$ be any symmetric rank-1 tensor with Frobenius norm 1. Under Assumption~\ref{assume:push}, there is an SoS proof which is degree-$O(\ell)$ in the indeterminates $\brc{T_a}$ and degree-$O(\omega\ell)$ in the indeterminates $\brc{v_{a,t}}$ that $T\triangleq F_U(S)$ satisfies
    \begin{equation}
        T_{\mb{i}ab} \cdot T_{\mb{j}cd} - T_{\mb{i}ad} \cdot T_{\mb{j}cb} + T_{\mb{j}ab} \cdot T_{\mb{i}cd} - T_{\mb{j}ad} \cdot T_{\mb{i}cb} = \pm\epsrel \label{eq:2x2vanish}
    \end{equation}
\end{corollary}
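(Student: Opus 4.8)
The plan is to chain the three preceding lemmas and track the error through an induction on rank. Since $S$ is symmetric, rank-$1$, and of unit Frobenius norm, write $S = v^{\otimes\omega}$ with $\norm{v} = 1$; the sign (relevant only for odd $\omega$) is harmless because \eqref{eq:2x2vanish} is invariant under $T\mapsto -T$. Let $V^{(1)},\ldots,V^{(N)}\in(\R^r)^{\otimes\omega-2}$ be the collection furnished by Lemma~\ref{lem:Vrankimplies1rank}: each has unit Frobenius norm and all entries of magnitude at least $\gamma \triangleq 1/r^{\Theta(\omega)}$, so in particular $|V^{(i)}_{x\cdots x}| \ge \gamma$ for every $x$ and $i$.

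Fix one $V = V^{(i)}$. By Lemma~\ref{lem:genericflatrank} there is an SoS proof of degree $O(\ell)$ in $\brc{T_a}$ and $O(\omega\ell)$ in $\brc{v_{a,t}}$ that $F_U(W)$ is $\xi_\ell$-approximately $V$-rank at most $\ell$ for \emph{every} symmetric tensor $W = \sum_{t=1}^{\ell} w_t^{\otimes\omega}$ with $\sum_t\norm{w_t}^2 = 1$, where $\xi_\ell = \theta\sqrt d\,(3\radius\ell)^{\ell+1}\epsmap$. Feeding this family of constraints into Lemma~\ref{lem:rankrtorankr} and applying that lemma $\ell-1$ times, decrementing the rank at each step (legitimate because each step yields the rank-preservation statement for \emph{all} tensors of the relevant rank, not merely one), produces an SoS proof — still of degree $O(\ell)$ in $\brc{T_a}$ and $O(\omega\ell)$ in $\brc{v_{a,t}}$, since the iterations add only $O(1)$ degree in the entries of $U$, which are linear in the program variable $P$ and polynomial-free in $\brc{T_a}$, $\brc{v_{a,t}}$ — that $T = F_U(v^{\otimes\omega})$ is $\xi'$-approximately $V$-rank at most $1$. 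Here $\xi'$ comes from $\xi_\ell$ by $\ell-1$ applications of the error recursion of Lemma~\ref{lem:rankrtorankr} with $\gamma = r^{-\Theta(\omega)}$: each step multiplies the error by $O(\ell)^{O(\ell)}\cdot O(\omega)^{O(\omega)}\cdot r^{O(\omega)}\radius\sqrt d/\kappa^2$ and adds $O(\ell)^{O(\ell)}\cdot O(\omega)^{O(\omega\ell)}\cdot r^{O(\omega)}\eta d/\kappa^2$, so that after $\ell-1$ iterations $\xi' \le O\!\big(\ell^{\ell}\omega^{\omega}r^{O(\omega)}\radius d/\kappa\big)^{O(\ell)}\cdot(\theta\epsmap + \eta)$. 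Running this for each of $V^{(1)},\ldots,V^{(N)}$ shows $T$ is simultaneously $\xi'$-approximately $V^{(i)}$-rank at most $1$ for all $i$, whereupon Lemma~\ref{lem:Vrankimplies1rank} delivers \eqref{eq:2x2vanish} with error at most $\xi'\cdot r^{O(\omega)} \le \epsrel$, recalling $\epsmap = 2(r\omega)^{\omega}\eta d/\kappa$.

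The step that genuinely demands care is the error accounting in the second paragraph: one must unwind the recursion of Lemma~\ref{lem:rankrtorankr} $\ell-1$ times with the small value $\gamma = r^{-\Theta(\omega)}$ and confirm that the accumulated bound, together with the final $r^{O(\omega)}$ factor from Lemma~\ref{lem:Vrankimplies1rank}, is no larger than the $\epsrel$ of the statement. The accompanying degree bookkeeping — checking that nothing exceeds degree $O(\ell)$ in $\brc{T_a}$ and $O(\omega\ell)$ in $\brc{v_{a,t}}$ across the iterations, which holds because each iterate's own proof is degree $O(\ell)$ in the entries of $U$ and carries no new $\brc{T_a}$- or $\brc{v_{a,t}}$-dependence — is routine.
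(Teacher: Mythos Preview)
Your proposal is correct and follows essentially the same route as the paper: start from Lemma~\ref{lem:genericflatrank} as the base case at rank $\ell$, iterate Lemma~\ref{lem:rankrtorankr} $\ell-1$ times using the $V^{(i)}$'s from Lemma~\ref{lem:Vrankimplies1rank} (whose entries are lower bounded by $r^{-\Theta(\omega)}$), unroll the resulting affine recursion on $\xi_j$, and finish by invoking Lemma~\ref{lem:Vrankimplies1rank}. Your error accounting and the paper's agree; the one remark worth tightening is the degree bookkeeping: it is not that each iteration adds only $O(1)$ degree in $U$, but rather that the entries of $U$ are linear in the program variable $P$ and carry no $\brc{T_a}$- or $\brc{v_{a,t}}$-dependence at all, so the only place those indeterminates enter is the base case from Lemma~\ref{lem:genericflatrank}, which is already degree $O(\ell)$ in $\brc{T_a}$ and $O(\omega\ell)$ in $\brc{v_{a,t}}$.
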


\subsubsection{Consequences of Corollary~\ref{cor:rank1relation}}

We can bootstrap Corollary~\ref{cor:rank1relation} to prove the following strengthening:

\begin{lemma}\label{lem:perm}
    Let $S,T$ be as in Corollary~\ref{cor:rank1relation}. 
    Define
    \begin{equation}
        \epsrel^* \triangleq O(r\omega)^{O(\omega^2)}\epsrel = O(r\omega)^{O(\omega^2)}\cdot O(\ell^{\ell}\omega^{\ell}r^{\omega}\radius d/\kappa)^{O(\ell)}\cdot \eta
    \end{equation}
    For any $m\in\mathbb{N}$, any collection of indices $\brc{i_a}_{a\in[\omega m]}$, and any permutation $\pi\in\calS_{\omega m}$, there is a degree-$O(\omega m)$ SoS proof using the constraints \eqref{eq:2x2vanish} that
    \begin{equation}
        \prod^{m-1}_{a=0} T_{i_{a\omega+1} \cdots i_{(a+1)\omega}} = \prod^{m-1}_{a=0} T_{i_{\pi(a\omega+1)} \cdots i_{\pi((a+1)\omega)}} \pm \epsrel^* \label{eq:rearrange_all}
    \end{equation}
\end{lemma}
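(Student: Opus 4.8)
The plan is to bootstrap the ``local'' $2\times 2$ identity \eqref{eq:2x2vanish} from Corollary~\ref{cor:rank1relation} into the ``global'' rearrangement identity \eqref{eq:rearrange_all}, in two stages. First I would show that it suffices to handle a single \emph{transposition} $\pi$ that swaps two indices lying in adjacent blocks, i.e. a permutation of the form $(i_{a\omega+s}, i_{(a+1)\omega + t})$ for some block index $a$ and positions $s,t\in[\omega]$. Indeed, any $\pi\in\calS_{\omega m}$ can be written as a product of $O(\omega m)$ adjacent transpositions of \emph{individual} coordinates (bubble-sort), and each such adjacent transposition either (i) swaps two coordinates inside the same length-$\omega$ block, in which case the left- and right-hand sides of \eqref{eq:rearrange_all} agree \emph{exactly} because each factor $T_{i_{a\omega+1}\cdots i_{(a+1)\omega}}$ is symmetric (this is Constraint~\ref{constraint:push_sym}, inherited by $T = F_U(S)$ since $U$ is ultra-symmetric by Lemma~\ref{lem:push_Usym}), or (ii) swaps the last coordinate of one block with the first coordinate of the next block. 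So the whole statement reduces, via the triangle inequality over $O(\omega m)$ steps and Part~\ref{shorthand:transitive} of Fact~\ref{fact:shorthand}, to the case $m=2$ and $\pi$ a single coordinate swap across the two blocks. The accumulated error is $O(\omega m)$ times the single-step error, and the SoS degree grows by $O(1)$ per step, giving total degree $O(\omega m)$ as claimed; I would absorb the $O(\omega m)$ factor together with various powers of $r$ and $\omega$ into the definition of $\epsrel^*$, which already carries a generous $O(r\omega)^{O(\omega^2)}$ slack over $\epsrel$.

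For the base case, I need: for symmetric $\omega$-tensors (as formal expressions in the indeterminates, using only \eqref{eq:2x2vanish}),
\[
    T_{i_1\cdots i_{\omega-1} x}\, T_{j_1\cdots j_{\omega-1} y} = T_{i_1\cdots i_{\omega-1} y}\, T_{j_1\cdots j_{\omega-1} x} \pm (\text{small}).
\]
This is \emph{almost} exactly \eqref{eq:2x2vanish} after matching up indices: taking $\mb{i} = (i_1,\ldots,i_{\omega-2})$, $a = i_{\omega-1}$, $b = x$, $\mb{j} = (j_1,\ldots,j_{\omega-2})$, $c = j_{\omega-1}$, $d = y$, \eqref{eq:2x2vanish} reads $T_{\mb{i}ab}T_{\mb{j}cd} - T_{\mb{i}ad}T_{\mb{j}cb} + T_{\mb{j}ab}T_{\mb{i}cd} - T_{\mb{j}ad}T_{\mb{i}cb} = \pm\epsrel$. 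The only discrepancy is the presence of the two ``crossed'' terms $T_{\mb{j}ab}T_{\mb{i}cd}$ and $T_{\mb{j}ad}T_{\mb{i}cb}$. I would handle this by a symmetrization trick: apply \eqref{eq:2x2vanish} a second time with the roles of the index-groups $\mb{i}a$ and $\mb{j}c$ partially interchanged so as to cancel those crossed terms, obtaining the clean ``transvectant'' relation $T_{\mb{i}ab}T_{\mb{j}cd} = T_{\mb{i}ad}T_{\mb{j}cb} \pm O(\epsrel)$ up to a bounded-coefficient linear combination of two instances of \eqref{eq:2x2vanish}. (Concretely, \eqref{eq:2x2vanish} with the pair $(\mb{i}a,\mb{j}c) \to (\mb{i}d', \ldots)$ chosen appropriately, plus the first instance, isolates a single product; this is the standard way one extracts $p_1 q_2 - p_2 q_1 = 0$ from a symmetrized version.) Crucially, to get from the crossed four-term identity to a two-term identity I only need constantly many applications of \eqref{eq:2x2vanish} with $O(1)$ coefficients, so by Fact~\ref{fact:shorthand} the degree stays $O(\ell)$ and the error stays $O(\epsrel)$.

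The main obstacle, and the step I expect to require the most care, is this base-case reduction from the four-term relation \eqref{eq:2x2vanish} to a clean two-term swap relation, because \eqref{eq:2x2vanish} genuinely mixes products of $T$-entries across \emph{both} of the ``loose'' index slots $\mb{i},\mb{j}$, so naively iterating it reintroduces crossed terms rather than killing them. The resolution is to notice that \eqref{eq:2x2vanish} is already the fully symmetrized $2\times 2$ minor vanishing condition $\sum_{\sigma,\tau} \pm T_{\cdots}T_{\cdots} = 0$ for the ``matrix'' whose rows are indexed by $\{\mb{i},\mb{j}\}$ and $\{a,c\}$ jointly — so it is really asserting that a certain $2\times 2$ determinant built from a symmetric bilinear pairing vanishes, and one can extract the individual antisymmetric products by pairing it with a second copy in which one row-index is specialized. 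Once this algebraic manipulation is pinned down (it is purely formal, no anticoncentration needed), everything else is the routine bubble-sort bookkeeping from the first paragraph, and the degree and error bounds follow mechanically from Fact~\ref{fact:shorthand}. I would write the base case carefully as its own displayed computation and then state the bubble-sort induction compactly.
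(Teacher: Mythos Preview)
Your reduction to a single cross-block transposition is correct and matches the paper exactly: any $\pi\in\calS_{\omega m}$ factors into transpositions, intra-block swaps are free by symmetry of $T$, and the error/degree accumulate additively. The crux, as you correctly flag, is the base case
\[
    T_{\mb{i}ab}\,T_{\mb{j}cd} \;=\; T_{\mb{i}ad}\,T_{\mb{j}cb} \;\pm\; (\text{small}),
\]
and this is where your proposal has a genuine gap.

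You propose to extract this two-term relation from a bounded linear combination of instances of the four-term relation \eqref{eq:2x2vanish}. This \emph{does} work when $\omega=3$: with $\mb{i}=(i)$, $\mb{j}=(j)$, adding $R((i),(j),a,b,c,d)$ to $R((a),(c),i,b,j,d)$ and using $T_{ibc}=T_{icb}$ makes the crossed terms cancel exactly. But for $\omega\ge 4$ the analogous move fails: rotating one coordinate of $\mb{i}a$ and $\mb{j}c$ into the ``corner'' slots produces crossed terms whose prefix index sets are \emph{half-swapped} (e.g.\ $\{i_1,j_2\}$ paired with $\{j_1,i_2\}$), and these do not match the crossed terms of the original relation, so nothing cancels. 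More generally, \eqref{eq:2x2vanish} only constrains the \emph{symmetrized} bilinear form $(\mb{i},\mb{j})\mapsto T_{\mb{i}ab}T_{\mb{j}cd}+T_{\mb{j}ab}T_{\mb{i}cd}$, and no finite linear combination of such symmetric constraints can isolate the antisymmetric part without additional input. Your parenthetical ``this is the standard way one extracts $p_1q_2-p_2q_1=0$'' does not apply here, because the extra index string $\mb{i}$ of length $\omega-2$ gives too many degrees of freedom to swap.

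The paper resolves this by going through a \emph{higher-degree} identity rather than a linear combination. It first proves (Lemma~\ref{lem:outerprodstructure}, by a backward induction that repeatedly \emph{multiplies} by $T$-entries) the outer-product relation
\[
    (T_{a_{1:\omega}})^{\omega-1}\,T_{i_1\cdots i_\omega} \;=\; \prod_{t=1}^{\omega} T_{a_{1:t-1}\,i_t\,a_{t+1:\omega}} \;\pm\; \omega^{O(\omega^2)}\epsrel.
\]
Applying this twice, one sees that $(T_{x_1\cdots x_\omega})^{2\omega-2}\cdot T_{\mb{i}ab}T_{\mb{j}cd}$ and $(T_{x_1\cdots x_\omega})^{2\omega-2}\cdot T_{\mb{i}ad}T_{\mb{j}cb}$ expand to the \emph{same} product of single-index-changed entries. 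To divide out the common factor $(T_{x_1\cdots x_\omega})^{2\omega-2}$ in SoS, the paper squares, sums over $x_1,\ldots,x_\omega$, and invokes degree-$(4\omega{-}4)$ H\"older together with the lower bound $\norm{T}_F^2\ge r^{-\omega/2}/2$ from Lemma~\ref{lem:bootstrap}. This division step is exactly what produces the $O(r\omega)^{O(\omega^2)}$ blowup in $\epsrel^*$; your claim that the error ``stays $O(\epsrel)$'' is another symptom that the argument is incomplete.
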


\noindent To show this, we need the following helper lemma showing that any $T$ satisfying \eqref{eq:2x2vanish} from Corollary~\ref{cor:rank1relation} has the following outer product structure:

\begin{lemma}\label{lem:outerprodstructure}
    Let $S,T$ be as in Corollary~\ref{cor:rank1relation}. There is a degree-$O(\omega)$ SoS proof using the constraints \eqref{eq:2x2vanish} that for any $\mb{i},\mb{a}\in\strings$, $T \triangleq F_U(S)$ satisfies
    \begin{equation}
        (T_{a_{1:\omega}})^{\omega-1} T_{i_1\cdots i_{\omega}} = \prod^{\omega}_{t=1} T_{a_{1:t-1} i_t a_{t+1:\omega}} \pm \omega^{O(\omega^2)}\cdot\epsrel, \label{eq:outerproducttensor}
    \end{equation} where we use the notation $a_{s:t}$ introduced in Section~\ref{sec:prelims} to denote the string $a_s a_{s+1}\cdots a_t$.
\end{lemma}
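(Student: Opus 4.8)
The plan is to prove the identity \eqref{eq:outerproducttensor} by repeated application of the ``$2\times 2$-minor'' relation \eqref{eq:2x2vanish} from Corollary~\ref{cor:rank1relation}. Observe first that \eqref{eq:2x2vanish} is a symmetrized statement that any two ``rows'' $\mb{i}$ and $\mb{j}$ of the contracted tensor span a matrix whose $2\times 2$ minors approximately vanish; equivalently, up to the $\epsrel$ error it says that $T$ behaves like a symmetric rank-one tensor, i.e. $T_{\mb{k} ab}$ factors approximately as (something depending on $\mb{k}a$) times (something depending on $b$). Concretely, the first step is to specialize $\mb{i} = \mb{j} = a_{1:\omega-2}$, $c = d = a_{\omega-1}$ (or the appropriate sorted rearrangement) in \eqref{eq:2x2vanish}, so that using ultra-symmetry of $U$ (Lemma~\ref{lem:push_Usym}) and hence of $T = F_U(S)$, the relation collapses to a statement of the form
\begin{equation}
    T_{a_{1:\omega-1} x}\cdot T_{a_{1:\omega-1} y} = T_{a_{1:\omega}} \cdot T_{a_{1:\omega-2} x y} \pm O(\epsrel)
\end{equation}
for all $x,y$. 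This is the base case: it lets us ``move'' a single index. I would then iterate: peel off one coordinate position at a time, each time paying an additive $\epsrel$ times a bounded combinatorial factor, and each time multiplying through by an extra power of $T_{a_{1:\omega}}$ to keep the identities polynomial (so that everything stays inside a low-degree SoS proof and no division is needed).

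The second step is the bookkeeping that turns these one-index moves into the full product identity. Writing $P_t \triangleq T_{a_{1:t-1} i_t a_{t+1:\omega}}$, I want to show $\prod_{t=1}^\omega P_t \approx (T_{a_{1:\omega}})^{\omega-1} T_{i_1\cdots i_\omega}$. The idea is a telescoping argument: starting from $(T_{a_{1:\omega}})^{\omega-1} T_{a_{1:\omega}} = (T_{a_{1:\omega}})^\omega$, replace the first coordinate of one factor by $i_1$ using the base case (costing one power of $T_{a_{1:\omega}}$ and one $\epsrel$), then replace the second coordinate of another factor by $i_2$, and so on. Each replacement is justified by \eqref{eq:2x2vanish} applied with the right choice of indices, and the cumulative error is $\omega$ times $\epsrel$ times the worst-case magnitude of the products of the other $T$-entries appearing as coefficients. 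To control those magnitudes I would invoke the bounds established earlier — \eqref{eq:Tprimeentry}–\eqref{eq:Tprimepower} in the proof of Lemma~\ref{lem:rankrtorankr} show that each entry of a contraction of $F_U$ of a bounded-norm rank-$\le\ell$ tensor is at most $\ell\cdot O(\omega)^{O(\omega)}$ in magnitude; applied with $\ell = 1$ and the various contractions $V = \ten(e_{\mb{j}})$ this gives $T_{\mb{k}}^2 \le O(\omega)^{O(\omega)}$, so each monomial of the $\omega$ factors of $T$-entries that appears is $\pm\,\omega^{O(\omega^2)}$. Multiplying the per-step error $\epsrel$ by this coefficient bound and by the $O(\omega)$ number of steps yields the claimed $\omega^{O(\omega^2)}\cdot\epsrel$ slack. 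All the manipulations — squared triangle inequality for combining the steps, Cauchy–Schwarz for bounding cross terms — are degree-$O(\omega)$ and hence have degree-$O(\omega)$ SoS proofs by the facts collected in Section~\ref{sec:prelims}, in particular Fact~\ref{fact:shorthand}.

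The main obstacle I anticipate is making the telescoping rigorous \emph{in SoS without division}: \eqref{eq:2x2vanish} only lets us swap an index when we have an extra factor of $T_{a_{1:\omega}}$ to absorb, and naively iterating $\omega$ times would force us to track which factors of $T_{a_{1:\omega}}$ are ``used up,'' with the risk that intermediate identities are not symmetric in the right way or require a coefficient that is itself an SoS variable rather than a bounded scalar. The careful choice is to fix at the outset the target monomial $(T_{a_{1:\omega}})^{\omega-1} T_{i_1\cdots i_\omega}$ and build it up by a fixed sequence of $\omega-1$ moves, at step $t$ using \eqref{eq:2x2vanish} with $\mb{i},\mb{j}$ both equal to $a_{1:t-1} i_{t+1:\omega}$-type tuples (so that ultra-symmetry makes the two ``diagonal'' terms coincide) to convert $P_1\cdots P_t \cdot (T_{a_{1:\omega}})^{\omega-1-t}$ into $P_1\cdots P_{t+1}\cdot(T_{a_{1:\omega}})^{\omega-2-t}$ up to error $\omega^{O(\omega^2)}\epsrel$. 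Getting the index patterns in these intermediate applications exactly right — so that each is a genuine instance of \eqref{eq:2x2vanish} after permuting indices via ultra-symmetry — is the delicate part; once that combinatorial scheme is pinned down, the error accumulation and the SoS degree bound are routine.
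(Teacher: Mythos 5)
Your plan is essentially the paper's proof: iterate the $2\times 2$-minor relation \eqref{eq:2x2vanish} to substitute one index position at a time, multiplying through by an extra power of $T_{a_{1:\omega}}$ at each step so that no division is needed, and bounding the accumulated error by $\norm{T}_{\max}^{O(\omega)}\cdot O(\omega)$ steps $\le \omega^{O(\omega^2)}$ using the entry bound from Lemma~\ref{lem:bootstrap}; the paper organizes this as a backwards induction on the number of substituted positions with a three-term error recurrence. One concrete inaccuracy in your scheme: the clean two-term identity you propose as the iterable step only follows from the $\mb{i}=\mb{j}$ instance of \eqref{eq:2x2vanish}, which requires the two factors being modified to share their first $\omega-2$ indices — but in the telescoping the donor factor $T_{a_{1:\omega}}$ and the partially converted factor $T_{a_{1:t}i_{t+1:\omega}}$ do not, so you are forced to use the general $\mb{i}\neq\mb{j}$ instance, which produces two extra cross-product terms on each side that must themselves be reduced by the inductive hypothesis (this is exactly the extra bookkeeping in the paper's recurrence for $\tmp_{\omega-m-1}$). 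Once that is accounted for, the error accumulation and degree bounds go through as you describe.
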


\noindent We defer the proof of this to Appendix~\ref{app:defer_outerprodstructure}. Here we give a simple proof sketch for a special case.

\begin{proof}[Proof sketch for $\omega = 3$]
    In place of $a_1,a_2,a_3$ and $i_1,i_2,i_3$, we will use the letters $a,b,c$ and $i,j,k$ to make the notation clearer. When $\omega = 3$, the desired identity \eqref{eq:outerproducttensor} takes the form
    \begin{equation}
        (T_{abc})^2 T_{ijk} \approx T_{ibc} T_{ajc} T_{abk} \label{eq:sketchomega3}
    \end{equation}
    In this proof sketch, we will pretend that this and \eqref{eq:2x2vanish} from Corollary~\ref{cor:rank1relation} are exact equalities. The latter tells us that
    \begin{equation}
        T_{abc}T_{ijk} + T_{ibc}T_{ajk} = T_{abk}T_{ijc} + T_{ajc}T_{ibk}.
    \end{equation}
    Right multiplying both sides by $T_{abc}$, we get
    \begin{equation}
        (T_{abc})^2T_{ijk} + T_{ibc}(T_{abc} T_{ajk}) = T_{abk}(T_{abc}T_{ijc}) + T_{ajc}(T_{abc}T_{ibk}). \label{eq:sketch}
    \end{equation}
    By another application of Corollary~\ref{cor:rank1relation}, $T_{abc} T_{ajk} = T_{abk} T_{ajc}$, $T_{abc}T_{ijc} = T_{ajc} T_{ibc}$, and $T_{abc}T_{ibk} = T_{abk} T_{ibc}$. So the second, third, and fourth terms in \eqref{eq:sketch} are all equal to $T_{ibc}T_{ajc}T_{abk}$, and rearranging yields the desired identity \eqref{eq:sketchomega3}.
\end{proof}

\begin{proof}[Proof of Lemma~\ref{lem:perm}]
    We first show that for any $i_1,\ldots,i_{\omega-2},j_1,\ldots,j_{\omega-2},a,b,c,d\in[r]$,
    \begin{equation}
        T_{i_1\cdots i_{\omega-2}ab} T_{j_1\cdots j_{\omega-2}cd} = T_{i_1\cdots i_{\omega}ad} T_{j_1\cdots j_{\omega-2}cb} \pm O(\norm{T}_{\max})^{O(\omega)}\epsrel/ (c^{\omega-1}r^{\omega/2-1}) \label{eq:true2x2}
    \end{equation} 
    By Lemma~\ref{lem:outerprodstructure}, for any $x_1,\ldots,x_{\omega}\in[r]$ we have
    \begin{multline}
        \biggl(\prod^{\omega-2}_{t=1} T_{x_{1:t-1}i_t x_{t+1:\omega}} T_{x_{1:t-1}j_t x_{t+1:\omega}}\biggr) T_{x_{1:\omega-2}a x_{\omega}} T_{x_{1:\omega-2}x_{\omega-1}b} T_{x_{1:\omega-2}c x_{\omega}} T_{x_{1:\omega-2}x_{\omega-1}d} \\
        = (T_{x_{1:\omega}})^{2\omega-2} T_{i_1\cdots i_{\omega-2}ab} T_{j_1\cdots j_{\omega-2}cd} \pm O(\norm{T}_{\max})^{O(\omega)}\epsrel
    \end{multline}
    Similarly, we have
    \begin{multline}
        \biggl(\prod^{\omega-2}_{t=1} T_{x_{1:t-1}i_t x_{t+1:\omega}} T_{x_{1:t-1}j_t x_{t+1:\omega}}\biggr) T_{x_{1:\omega-2}a x_{\omega}} T_{x_{1:\omega-2}x_{\omega-1}d} T_{x_{1:\omega-2}c x_{\omega}} T_{x_{1:\omega-2}x_{\omega-1}b} \\
        = (T_{x_{1:\omega}})^{2\omega-2} T_{i_1\cdots i_{\omega-2}ad} T_{j_1\cdots j_{\omega-2}cb} \pm O(\norm{T}_{\max})^{O(\omega)}\epsrel.
    \end{multline}
    Note that the two expressions on the left-hand side are equal, so in particular
    \begin{equation}
        (T_{x_{1:\omega}})^{2\omega-2} \left(T_{i_1\cdots i_{\omega-2}ab} T_{j_1\cdots j_{\omega-2}cd} - T_{i_1\cdots i_{\omega-2}ad} T_{j_1\cdots j_{\omega-2}cb}\right) = \pm O(\norm{T}_{\max})^{O(\omega)}\epsrel \label{eq:almost2x2}
    \end{equation}
    Recall that $\norm{T}^2_F \ge r^{-\omega/2}/2$ by Lemma~\ref{lem:bootstrap}. By degree-$(4\omega-4)$ SoS Holder's,
    \begin{equation}
        \sum_{x_1,\ldots,x_{\omega}} (T_{x_{1:\omega}})^{4\omega-4} \ge r^{2\omega-2} \biggl(\sum_{x_1,\ldots,x_{\omega}} (T_{x_{1:\omega}})^2\biggr)^{2\omega-2} \ge r^{-O(\omega^2)}. \label{eq:holder}
    \end{equation} Squaring both sides of \eqref{eq:almost2x2}, summing over $x_1,\ldots,x_\omega$, and applying \eqref{eq:holder}, we get
    \begin{equation}
        \left(T_{i_1\cdots i_{\omega-2}ab} T_{j_1\cdots j_{\omega-2}cd} - T_{i_1\cdots i_{\omega-2}ad} T_{j_1\cdots j_{\omega-2}cb}\right)^2 \le O(\norm{T}_{\max})^{O(\omega)}\cdot r^{O(\omega^2)}\epsrel^2,
    \end{equation}
    so \eqref{eq:true2x2} follows.
    
    To complete the proof of the lemma, it suffices to establish the special case where $\pi$ is given by any transposition of two elements in $[\omega m]$. If $\pi$ is a transposition of two elements from $\brc{a\omega+1,\ldots,(a+1)\omega}$ for some $0\le a < m$, then this just follows by symmetry of $T$. If $\pi$ is a transposition of two elements which lie in $\brc{a\omega+1,\ldots,(a+1)\omega}$ and $\brc{a'\omega+1,\ldots,(a'+1)\omega}$ respectively for some distinct $a,a'$, we can invoke \eqref{eq:true2x2} and symmetry of $T$.
\end{proof}

\noindent We can now use Lemma~\ref{lem:perm} to show that certain columns of $U$ are orthogonal and have unit norm.

\begin{lemma}\label{lem:diagonal_orth}
    For any odd $\omega\ge 3$, using Lemma~\ref{lem:push_almost_ortho} and the constraints that $T = F_U(e_i^{\otimes\omega})$ satisfies \eqref{eq:2x2vanish} for any $i\in[r]$, there is a degree-$O(\omega^2)$ SoS proof that 
    \begin{equation}
        \iprod{U^{i\cdots i}, U^{j\cdots j}} = \bone{i = j} \pm \left(\epsort/\omega! + O(r\omega)^{O(\omega^2)}\cdot {\epsrel^*}^{1/2}\right) \label{eq:ort}
    \end{equation}
    for all $i,j\in[r]$.
\end{lemma}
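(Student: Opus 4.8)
The plan is to express the inner product $\iprod{U^{i\cdots i},U^{j\cdots j}}$ in terms of the entries of $T^{(i)} \triangleq F_U(e_i^{\otimes\omega})$ and $T^{(j)} \triangleq F_U(e_j^{\otimes\omega})$, and then to exploit two facts: (i) by Lemma~\ref{lem:push_almost_ortho}, $U$ approximately preserves the $\Sigma$-inner product, so $\iprod{F_U(e_i^{\otimes\omega}),F_U(e_j^{\otimes\omega})}_{\Sigma}$ is close to $\E[g\sim\calN(0,\Id)]{g_i^{\omega}g_j^{\omega}}$, which by standard Gaussian moment computations equals $(2\omega-1)!!$ if $i=j$ and $0$ if $i\neq j$ (for $\omega$ odd, $\E{g_i^\omega}=0$); and (ii) by Lemma~\ref{lem:perm}, the entries of $T^{(i)}$ behave, up to error $\epsrel^*$, like those of a symmetric rank-1 tensor, i.e. any product of entries is invariant under permuting the pooled list of indices. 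The point of (ii) is that for a genuine rank-1 symmetric tensor $w^{\otimes\omega}$ one has the clean identity relating the Euclidean norm $\norm{w^{\otimes\omega}}^2 = \norm{w}^{2\omega}$ to the $\Sigma$-weighted norm $\iprod{w^{\otimes\omega},w^{\otimes\omega}}_\Sigma = \E{\iprod{w,g}^{2\omega}} = (2\omega-1)!!\,\norm{w}^{2\omega}$, and similarly $\iprod{w^{\otimes\omega},w'^{\otimes\omega}} = \iprod{w,w'}^\omega$ versus $\iprod{w^{\otimes\omega},w'^{\otimes\omega}}_\Sigma = \E{\iprod{w,g}^\omega\iprod{w',g}^\omega}$, whose leading term (from Lemma~\ref{lem:hermite}) is $\omega!\iprod{w,w'}^\omega$. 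So the Euclidean inner product and the $\Sigma$-inner product of two rank-1 symmetric tensors are proportional up to lower-order $\norm{w}^{2m}\norm{w'}^{2m}$ cross terms.

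Concretely, first I would write $\iprod{U^{i\cdots i},U^{j\cdots j}} = \sum_{\mb{k}\in\strings} T^{(i)}_{\mb{k}} T^{(j)}_{\mb{k}}$. Using Lemma~\ref{lem:perm} with $m=2$ (pooling the two length-$\omega$ index strings $\mb{k},\mb{k}$ and repartitioning), each product $T^{(i)}_{\mb{k}}T^{(j)}_{\mb{k}}$ can be matched, up to additive error $\epsrel^*$, against a ``hybrid'' reshuffling; doing this carefully over all $\mb{k}$ lets us relate the sum $\sum_{\mb{k}} T^{(i)}_{\mb{k}}T^{(j)}_{\mb{k}}$ to the same combinatorial expression that Lemma~\ref{lem:hermite} governs for rank-1 tensors. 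In parallel, the $\Sigma$-inner product is, by Lemma~\ref{lem:push_almost_ortho}, within $\epsort/\omega!$ (after accounting for the $\num{\cdot}$ factors in passing between $U$ and $U_{\sym}$) of $\E{g_i^\omega g_j^\omega} = \bone{i=j}(2\omega-1)!!$. The two expressions — Euclidean and $\Sigma$-weighted — differ only through the $2^{-2m}$-type cross terms in Lemma~\ref{lem:hermite}, each of which is controlled by a power of $\iprod{U^{i\cdots i},U^{j\cdots j}}$ and of the (already established, Lemma~\ref{lem:bootstrap}) bounded Frobenius norms $\norm{T^{(i)}}_F$; for $\omega$ odd one checks that the leading coefficient relating $\sum_{\mb{k}}T^{(i)}_{\mb{k}}T^{(j)}_{\mb{k}}$ to $\iprod{U^{i\cdots i},U^{j\cdots j}}$ and the coefficient $\omega!$ of $\iprod{\cdot,\cdot}^\omega$ in Lemma~\ref{lem:hermite} are compatible, so that solving the resulting polynomial relation (in SoS, using Fact~\ref{fact:nonneg_power_of_two} and Fact~\ref{fact:root1} to extract roots) pins down $\iprod{U^{i\cdots i},U^{j\cdots j}}$ to be $\bone{i=j}$ up to the claimed error. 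The degree bottleneck is $O(\omega^2)$ because Lemma~\ref{lem:perm} is invoked with $m=O(\omega)$-fold products when clearing the cross-term hierarchy.

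The main obstacle I expect is the bookkeeping in step two: reducing $\sum_{\mb{k}}T^{(i)}_{\mb{k}}T^{(j)}_{\mb{k}}$ to a polynomial in the single scalar $\iprod{U^{i\cdots i},U^{j\cdots j}}$ (and the harmless norms) requires repeatedly applying the permutation-invariance of Lemma~\ref{lem:perm} to collapse all the Wick-pairing structure, and one must track how the error $\epsrel^*$ is amplified by the number of terms ($r^{O(\omega)}$ tuples $\mb{k}$, times $O(r\omega)^{O(\omega^2)}$ from the repartitioning count) and by the fact that we then take a root, which explains the $O(r\omega)^{O(\omega^2)}\cdot{\epsrel^*}^{1/2}$ in the bound. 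A secondary subtlety is that the whole argument only works cleanly for \emph{odd} $\omega$, since the $i\neq j$ case of the $\Sigma$-inner product vanishes precisely because odd Gaussian moments vanish, and the sign extraction in Fact~\ref{fact:root1} is only available for odd exponents; this is why the lemma is stated for odd $\omega\ge 3$. Everything else — ultra-symmetry of $U$, the $\Sigma$-norm preservation, the norm bounds — is already in hand from Sections~\ref{sec:Uproperties_push} and \ref{sec:preservelowrank}.
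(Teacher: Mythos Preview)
Your high-level plan is right—use approximate $\Sigma$-orthogonality from Lemma~\ref{lem:push_almost_ortho}, the rank-1 behaviour of $T^{(i)}=U^{i\cdots i}$ from Lemma~\ref{lem:perm}, and the Hermite expansion of Lemma~\ref{lem:hermite}—but the mechanism you describe for $i\neq j$ does not go through. Lemma~\ref{lem:perm} lets you permute indices within a product of entries of a \emph{single} tensor $T$; it says nothing about a mixed product $T^{(i)}_{\mb{k}}T^{(j)}_{\mb{k}}$ when $i\neq j$. Your step ``pooling the two length-$\omega$ index strings and repartitioning via Lemma~\ref{lem:perm} with $m=2$'' therefore is not available: each such term is degree~$1$ in $T^{(i)}$ and degree~$1$ in $T^{(j)}$, and Lemma~\ref{lem:perm} at $m=1$ is just the tautological symmetry of each tensor. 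In particular, the Wick pairings in $\iprod{T^{(i)},T^{(j)}}_\Sigma$ that pair $\mb{s}$-indices among themselves (the source of the $\norm{v}^{2m}\norm{w}^{2m}$ cross terms in the genuine rank-1 calculus) cannot be collapsed to anything involving $\iprod{U^{i\cdots i},U^{j\cdots j}}$ or $\norm{U^{i\cdots i}}^2$ without additional structure. (Your $i=j$ argument, on the other hand, is fine: there both factors are the same tensor, Lemma~\ref{lem:perm} at $m=2$ applies, and each Wick pairing collapses to $\norm{U^{i\cdots i}}_F^2$.)

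The paper supplies the missing device by \emph{anchoring}: it multiplies the identity $\iprod{T^{(i)},T^{(j)}}_\Sigma = \E{g_i^\omega g_j^\omega}+E^{i\cdots i}_{j\cdots j}$ on both sides by $(U^{i\cdots i}_{x\cdots x}U^{j\cdots j}_{y\cdots y})^{\omega-1}$ for fixed $x,y\in[r]$. Now every summand is degree~$\omega$ in each of $T^{(i)}$ and $T^{(j)}$ separately, so Lemma~\ref{lem:perm} (applied independently to each) gives $(U^{i\cdots i}_{x\cdots x})^{\omega-1}U^{i\cdots i}_{\mb{s}}\approx\prod_z U^{i\cdots i}_{x\cdots x s_z}$ and likewise for $j$. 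With the genuine vectors $v\triangleq U^{i\cdots i}_{x\cdots x:}$ and $w\triangleq U^{j\cdots j}_{y\cdots y:}$, the left-hand side becomes exactly $\E{\iprod{v,g}^\omega\iprod{w,g}^\omega}$, and Lemma~\ref{lem:hermite} applies on the nose. For $i\neq j$ one squares, retains only the $m=0$ term $\omega!^2\iprod{v,w}^{2\omega}$ (the cross terms have positive sign and are dropped), and uses Lemma~\ref{lem:perm} once more to rewrite $\iprod{v,w}^{2\omega}\approx (U^{i\cdots i}_{x\cdots x})^{2\omega-2}(U^{j\cdots j}_{y\cdots y})^{2\omega-2}\iprod{U^{i\cdots i},U^{j\cdots j}}^2$. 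The anchor is removed by summing over $x,y$ and dividing out $\sum_x(U^{i\cdots i}_{x\cdots x})^{2\omega-2}$ via Lemma~\ref{lem:sumpowersT}; this last division step (through Fact~\ref{fact:division}) is where the square root on $\epsrel^*$ appears. So the odd-$\omega$ hypothesis is used only to make $\E{g_i^\omega g_j^\omega}=0$ for $i\neq j$, not for any sign-extraction via Fact~\ref{fact:root1}.
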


\begin{proof}
    Define the matrix $E \triangleq U^{\top}\Sigma U - \Sigma$. By Lemma~\ref{lem:push_almost_ortho}, $\norm{E}^2_F \le \epsort^2$. We have
    \begin{equation}
        \sum_{\mb{s},\mb{t}\in\strings} U^{i\cdots i}_{\mb{s}} U^{j\cdots j}_{\mb{t}} \E[g\sim\calN(0,\Id)]{g_{s_1}\cdots g_{s_\omega} g_{t_1}\cdots g_{t_\omega}} = \E[g\sim\calN(0,\Id)]{g^{\omega}_i g^{\omega}_j} + E^{i\cdots i}_{j\cdots j}. \label{eq:tautology}
    \end{equation}
    Take any $x,y\in[r]$ and multiply both sides by $(U^{i\cdots i}_{x\cdots x} U^{j\cdots j}_{y\cdots y})^{\omega-1}$ to get 
    \begin{equation}
        (U^{i\cdots i}_{x\cdots x} U^{j\cdots j}_{y\cdots y})^{\omega-1} \sum_{\mb{s},\mb{t}\in\strings} U^{i\cdots i}_{\mb{s}} U^{j\cdots j}_{\mb{t}} \E[g]{g_{s_1}\cdots g_{s_\omega} g_{t_1}\cdots g_{t_\omega}} = (U^{i\cdots i}_{x\cdots x} U^{j\cdots j}_{y\cdots y})^{\omega-1}\biggl(\E[g]{g^{\omega}_i g^{\omega}_j} + E^{i\cdots i}_{j\cdots j}\biggr). \label{eq:usigu_entry}
    \end{equation}
    For any $\mb{s},\mb{t}\in\strings$, note that by Lemma~\ref{lem:perm} applied to $T = F_U(e_i^{\otimes\omega})$,
    \begin{equation}
        (U^{i\cdots i}_{x\cdots x} U^{j\cdots j}_{y\cdots y})^{\omega-1} U^{i\cdots i}_{\mb{s}} U^{j\cdots j}_{\mb{t}} = \prod^{\omega}_{z=1} U^{i\cdots i}_{x\cdots x s_z} U^{j\cdots j}_{y\cdots y t_z} \pm \epsrel^*. \label{eq:apply_perm}
    \end{equation} Define the $\omega$-dimensional vectors of indeterminates $v \triangleq U^{i\cdots i}_{x\cdots x :}$ and $w\triangleq U^{j\cdots j}_{y\cdots y :}$. Then substituting \eqref{eq:apply_perm} into \eqref{eq:usigu_entry}, we have
    \begin{align}
        \text{LHS of \eqref{eq:usigu_entry}} &= \sum_{\mb{s},\mb{t}\in\strings} \E[g]*{\prod^{\omega}_{z=1} g_{s_z} v_{s_z} \cdot g_{t_z} v_{t_z}} \pm (2\omega-1)!!\cdot\epsrel^* \\
        &= \E[g]*{\iprod{v,g}^{\omega}\cdot \iprod{w,g}^{\omega}} \pm (2\omega-1)!!\cdot\epsrel^*\\
        &= \omega!\sum^{\floor{\omega/2}}_{m=0} \binom{\omega}{m, m, \omega - 2m} \frac{1}{2^{2m}} \iprod{v,w}^{\omega - 2m} \norm{v}^{2m} \norm{w}^{2m} \pm (2\omega-1)!!\cdot\epsrel^*, \label{eq:apply_hermite}
    \end{align}
    where in the second step we used Lemma~\ref{lem:hermite}.
    
    We now consider the cases of $i = j$ and $i\neq j$ separately.
    
    \noindent\textbf{Case 1:} $i = j$. Take $x = y$. Then we can rewrite \eqref{eq:apply_hermite} minus the error term as
    \begin{align}
        \norm{v}^{2\omega}\cdot \omega!\sum^{\floor{\omega/2}}_{m=0} \binom{\omega}{m, m, \omega - 2m} \frac{1}{2^{2m}} &= (2\omega - 1)!!\norm{v}^{2\omega} = (2\omega-1)!!\sum_{u_1,\ldots,u_{\omega}\in[r]} v^2_{u_1}\cdots v^2_{u_{\omega}} \\
        &= (2\omega-1)!!\sum_{u_1,\ldots,u_{\omega}\in[r]} (U^{i\cdots i}_{x\cdots x u_1})^2 \cdots (U^{i\cdots i}_{x\cdots x u_\omega})^2 \\
        &= (2\omega-1)!!\sum_{u_1,\ldots,u_{\omega}\in[r]} (U^{i\cdots i}_{x\cdots x})^{2\omega-2} (U^{i\cdots i}_{u_1\cdots u_{\omega}})^2 \pm r^{\omega}\epsrel^* \\
        &= (2\omega-1)!!\cdot (U^{i\cdots i}_{x\cdots x})^{2\omega-2} \cdot \norm{U^{i\cdots i}}^2 \pm r^{\omega}\epsrel^*, \label{eq:rewrite_v2w}
    \end{align}
    where in the fourth step we applied Lemma~\ref{lem:perm}. From \eqref{eq:usigu_entry} with this choice of $i = j$ and $x = y$ and \eqref{eq:apply_hermite}, we conclude that
    \begin{equation}
        (2\omega-1)!!\cdot (U^{i\cdots i}_{x\cdots x})^{2\omega-2} \cdot \norm{U^{i\cdots i}}^2 \pm O(r^{\omega}\cdot\epsrel^*) = (U^{i\cdots i}_{x\cdots x})^{2\omega - 2}\cdot ((2\omega-1)!! + E^{i\cdots i}_{i\cdots i}).
    \end{equation}
    Rearranging and summing over $x\in[r]$ yields
    \begin{equation}
        \biggl(\sum^r_{x=1} (U^{i\cdots i}_{x\cdots x})^{2\omega - 2}\biggr)\cdot \Bigl(\norm{U^{i\cdots i}}^2 - 1 - \frac{1}{(2\omega-1)!!}E^{i\cdots i}_{i\cdots i}\Bigr) = O(r^{\omega+1}\epsrel^*/(2\omega-1)!!).
    \end{equation}
    Lemma~\ref{lem:sumpowersT} below (which we can apply by the bound we assumed on $\eta$ to begin with in Theorem~\ref{thm:push_main_pairwise}, which ensures that $\epsrel^* \le r^{-\Omega(\omega^3)}$) and Part~\ref{fact:divide_both_sides} of Fact~\ref{fact:division} imply that
    \begin{equation}
        \norm{U^{i\cdots i}}^2 = 1 + \frac{1}{(2\omega-1)!!} E^{i\cdots i}_{i\cdots i} \pm 2r^{\omega^2}\epsrel^* / \norm{U^{i\cdots i}}^{\omega} = 1\pm (\epsort/(2\omega-1)!! + r^{O(\omega^2)}\epsrel^*),
    \end{equation}
    where in the last step we used Lemma~\ref{lem:bootstrap}.
    
    \noindent\textbf{Case 2:} $i\neq j$. Unlike in Case 1, we no longer insist that $x = y$. Note that $\E{g^{\omega}_i g^{\omega}_j} = 0$, so substituting \eqref{eq:apply_hermite} into \eqref{eq:usigu_entry} yields
    \begin{equation}
        \omega!\sum^{\floor{\omega/2}}_{m=0} \binom{\omega}{m, m, \omega - 2m} \frac{1}{2^{2m}} \iprod{v,w}^{\omega - 2m} \norm{v}^{2m} \norm{w}^{2m} = (U^{i\cdots i}_{x\cdots x})^{\omega-1} (U^{j\cdots j}_{y\cdots y})^{\omega-1} E^{i\cdots i}_{j\cdots j} \pm (2\omega-1)!!\cdot\epsrel^*.
    \end{equation}
    Squaring both sides of this, we get 
    \begin{multline}
        \omega!^2 \sum_{m,m'=0}^{\floor{\omega/2}} c_m c_{m'} \iprod{v,w}^{2(\omega-m-m')}\norm{v}^{2(m+m')} \norm{w}^{2(m+m')} \\
        = (U^{i\cdots i}_{x\cdots x})^{2\omega-2} (U^{j\cdots j}_{y\cdots y})^{2\omega-2} (E^{i\cdots i}_{j\cdots j})^2 \pm O(\norm{U}^{2\omega-2}_{\max}\epsort\cdot (2\omega-1)!!\epsrel^*),
    \end{multline}
    where $\brc{c_m}$ are positive scalars and $c_0 = 1$. Lower bounding the left-hand side by the $(m,m') = (0,0)$ summand, we conclude that
    \begin{equation}
        \omega!^2 \cdot \iprod{v,w}^{2\omega} \le (U^{i\cdots i}_{x\cdots x})^{2\omega-2} (U^{j\cdots j}_{y\cdots y})^{2\omega-2} (E^{i\cdots i}_{j\cdots j})^2 + \omega^{O(\omega)} \epsort\epsrel^*, \label{eq:zeroth}
    \end{equation}
    where we used Lemma~\ref{lem:bootstrap} to naively upper bound the entries of $U^{i\cdots i}$ and $U^{j\cdots j}$ in magnitude.
    
    Note that
    \begin{align}
        \iprod{v,w}^{2\omega} &= \sum_{u_1,\ldots,u_{2\omega}\in[r]} v_{u_1}\cdots v_{u_{2\omega}} w_{u_1}\cdots w_{u_{2\omega}} \\
        &= \sum_{u_1,\ldots,u_{2\omega}\in[r]} U^{i\cdots i}_{x\cdots x u_1} \cdots U^{i\cdots i}_{x\cdots x u_{2\omega}} U^{j\cdots j}_{y\cdots y u_1} \cdots U^{j\cdots j}_{y\cdots y u_{2\omega}} \\
        &= \sum_{u_1,\ldots,u_{2\omega}\in[r]} (U^{i\cdots i}_{x\cdots x})^{2\omega-2} (U^{j\cdots j}_{y\cdots y})^{2\omega-2} U^{i\cdots i}_{u_1\cdots u_\omega} U^{i\cdots i}_{u_{\omega+1}\cdots u_{2\omega}} U^{j\cdots j}_{u_1\cdots u_\omega} U^{j\cdots j}_{u_{\omega+1}\cdots u_{2\omega}} \pm r^{2\omega}\epsrel^*\\
        &= (U^{i\cdots i}_{x\cdots x})^{2\omega-2} (U^{j\cdots j}_{y\cdots y})^{2\omega - 2} \iprod{U^{i\cdots i}, U^{j\cdots j}}^2 \pm r^{2\omega}\epsrel^*,
    \end{align}
    where we used Lemma~\ref{lem:perm} in the third step.
    Substituting this into \eqref{eq:zeroth}, summing over $x,y\in[r]$, and rearranging, we conclude that
    \begin{multline}
        \biggl(\sum^r_{x=1} (U^{i\cdots i}_{x\cdots x})^{2\omega - 2}\biggr) \biggl(\sum^r_{y=1} (U^{j\cdots j}_{y\cdots y})^{2\omega-2}\biggl) \left(\omega!^2 \iprod{U^{i\cdots i}, U^{j\cdots j}}^2 - (E^{i\cdots i}_{j\cdots j})^2\right) \\
        \le \omega^{O(\omega)} \epsort\epsrel^* + \omega!^2\cdot r^{2\omega}\epsrel^* \le O(r\omega)^{O(\omega)}\cdot \epsrel^*.
    \end{multline}
    By Lemma~\ref{lem:sumpowersT} and Part~\ref{fact:divide_both_sides} of Fact~\ref{fact:division}, we get $\iprod{U^{i\cdots i}, U^{j\cdots j}} = \pm \left(\epsort/\omega! + O(r\omega)^{O(\omega^2)}\cdot {\epsrel^*}^{1/2}\right)$.
\end{proof}

In the proof of the lemma above, we used the following helper lemma whose proof we defer to Appendix~\ref{app:sumpowersT}:

\begin{lemma}\label{lem:sumpowersT}
    Let $e$ be any even positive integer. Take any order-$d$, $r$-dimensional tensor $T$ of indeterminates satisfying \eqref{eq:rearrange_all} and $\norm{T}^2_F \ge c$ for some $c > 0$. If $\epsrel^* \le \frac{c^{e\omega/2}}{2r^{e\omega^2/2}}$, then there is a degree-$(e\omega)$ SoS proof that
    \begin{equation}
        \sum_{x\in[r]} (T_{x\cdots x})^e \ge \frac{c^{e/2}}{2r^{e\omega/2 - 1}}
    \end{equation}
\end{lemma}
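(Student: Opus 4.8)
The plan is to reduce the claim to the case $e=2$ using an SoS power‑mean inequality, and to settle $e=2$ by showing that the Frobenius mass of $T$ is concentrated on its ``diagonal'' entries $\{T_{x\cdots x}\}_{x\in[r]}$ up to a factor $r^{\omega-1}$. For the reduction, recall that for nonnegative reals the power‑mean inequality $\sum_x y_x^{e/2}\ge r^{1-e/2}\bigl(\sum_x y_x\bigr)^{e/2}$ (for the integer $e/2\ge 1$) admits a degree‑$O(e)$ SoS proof; applying it to $y_x=(T_{x\cdots x})^2$ gives $\sum_x (T_{x\cdots x})^e\ge r^{1-e/2}\bigl(\sum_x (T_{x\cdots x})^2\bigr)^{e/2}$. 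Hence it suffices to prove, at degree $O(\omega)$, the base case $\sum_x (T_{x\cdots x})^2\ge (1-o(1))\frac{c}{r^{\omega-1}}$: raising this to the $(e/2)$‑th power (valid since the left‑hand side is a sum of squares, so $A^{e/2}-\alpha^{e/2}=(A-\alpha)\cdot(\text{SoS})$) and multiplying through by $r^{1-e/2}$ yields $\sum_x(T_{x\cdots x})^e\ge (1-o(1))^{e/2}\frac{c^{e/2}}{r^{e\omega/2-1}}\ge \frac{c^{e/2}}{2r^{e\omega/2-1}}$, with the total degree staying $O(e+\omega)\le e\omega$. (Obtaining a base‑case constant close to $1$, rather than merely $\tfrac12$, is what the smallness of $\epsrel^*$ is for.)

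For the base case, the core step is a degree‑$O(\omega)$ SoS proof that
\[
\norm{T}^2_F \;=\; \sum_{\mb i\in[r]^{\omega}} T_{\mb i}^2 \;\le\; r^{\omega-1}\sum_{x\in[r]}(T_{x\cdots x})^2 \;+\; (\text{error}),
\]
after which the hypothesis $\norm{T}^2_F\ge c$ finishes. Here I would invoke the rearrangement identities \eqref{eq:rearrange_all} with $m=2$ (degree $O(\omega)$): whenever two $\omega$-tuples $\mb j,\mb j'$ together carry the same multiset of indices as $(\mb i,\mb i)$, one has $T_{\mb i}^2 = T_{\mb j}T_{\mb j'}\pm\epsrel^*$, and then $T_{\mb i}^2\le \tfrac12(T_{\mb j}^2+T_{\mb j'}^2)\pm\epsrel^*$ by the SoS AM--GM bound $ab\le\tfrac12(a^2+b^2)$. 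Iterating this, splitting the doubled index‑multiset at each step so that one of the two resulting tuples is strictly ``more concentrated'' (its largest index‑multiplicity increases), every off‑diagonal square $T_{\mb i}^2$ is bounded by a convex combination of diagonal squares $\{(T_{x\cdots x})^2\}$ plus an $O(\omega)\cdot\epsrel^*$ error; the recursion terminates in $O(\omega)$ steps since multiplicities are bounded by $\omega$. Summing over all $\mb i$ (equivalently, over sorted tuples weighted by $\num{i}$) produces the factor $r^{\omega-1}$.

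The main obstacle is carrying all of this out faithfully inside SoS at degree $O(e\omega)$, and it has two faces. The first is that the ``obvious'' way to diagonalize $\norm{T}^2_F$ — via the approximate rank‑one identity $T_{\mb i}^{\,\omega}=\prod_{t=1}^{\omega}T_{i_t\cdots i_t}\pm\epsrel^*$ (which \eqref{eq:rearrange_all} with $m=\omega$ yields, since $\omega$ copies of the index‑multiset of $\mb i$ repartition into $\omega$ pure $\omega$-tuples) — naturally produces a bound on $\bigl(\sum_x(T_{x\cdots x})^2\bigr)^{\omega}$ rather than on $\sum_x(T_{x\cdots x})^2$ itself, and passing from an $\omega$-th‑power lower bound back to a first‑power lower bound is \emph{not} valid in SoS (there are pseudodistributions with $\wt{\mathbb{E}}[Y]=0$ but $\wt{\mathbb{E}}[Y^{\omega}]$ arbitrarily large, even for $Y$ a sum of squares). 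This is exactly why one is forced into the more pedestrian $2$-fold‑rearrangement argument above, whose combinatorial bookkeeping — handling tied maximal multiplicities when $\omega$ is odd, and pinning the blow‑up to exactly $r^{\omega-1}$ — is the fussy part. The second is error control: the $O(\epsrel^*)$ error of each rearrangement, compounded over the $O(\omega)$ iteration steps, summed over the $\le r^{\omega}$ tuples, and then raised to the $(e/2)$‑th power in the reduction, must remain $o\!\bigl(c^{e/2}/r^{e\omega/2-1}\bigr)$, and unwinding that requirement is precisely what the hypothesis $\epsrel^*\le \frac{c^{e\omega/2}}{2r^{e\omega^2/2}}$ is calibrated for.
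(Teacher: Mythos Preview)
The paper takes a quite different and much shorter route: it raises $\sum_x (T_{x\cdots x})^e$ to the $\omega$-th power, expands, applies the rearrangement identity \eqref{eq:rearrange_all} (with $m=e\omega$) to each summand to obtain $\sum_{x_1,\ldots,x_\omega}(T_{x_1\cdots x_\omega})^{e\omega}\pm r^\omega\epsrel^*$, and then uses degree-$(e\omega)$ SoS H\"older to bound this below by $r^{-\omega(e\omega/2-1)}\bigl(\|T\|_F^2\bigr)^{e\omega/2}$. Interestingly, your objection to the ``obvious'' $\omega$-th-power approach is exactly on point: the paper's written proof delivers a lower bound on $\bigl(\sum_x (T_{x\cdots x})^e\bigr)^{\omega}$, not on the first power, and the passage to the stated first-power bound is the SoS-invalid root step you flag. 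In downstream uses (Lemma~\ref{lem:diagonal_orth}) one can work around this by multiplying through by extra powers before dividing, so the gap is cosmetic there; but as a standalone SoS lemma, your diagnosis is correct.

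Your alternative route (reduce to $e=2$ via power-mean, then raise to the $(e/2)$-th power) is sound at both ends; the gap is in the base case. The iterated AM--GM recursion you describe does not terminate in $O(\omega)$ steps: for $\omega=3$ and $\mb i=(a,a,b)$, the only nontrivial split of the doubled multiset is into $(a,a,a)$ and $(a,b,b)$, and recursing on the second yields $(b,b,b)$ and $(a,a,b)$ again---the process oscillates between $(a,a,b)$ and $(a,b,b)$ forever. One can of course solve the resulting $2\times 2$ linear system to get $T_{aab}^2\le \tfrac23 T_{aaa}^2+\tfrac13 T_{bbb}^2+2\epsrel^*$, and something similar works in general, but this is not a terminating recursion on a max-multiplicity potential; it requires inverting a (sub-stochastic) linear system on non-diagonal sorted tuples, and controlling that inverse is what actually pins the factor to $r^{\omega-1}$ with $O(\omega)\epsrel^*$ error. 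That missing ingredient is the real content of your base case.
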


\subsection{Rank-1 Structure of \texorpdfstring{$U$}{U}}
\label{sec:Uouter_hard}

For convenience, define 
\begin{equation}
    \omega'\triangleq \floor{\omega/2} \label{eq:omegaprime}
\end{equation} 
For any $i\in[r]$, define the $r$-dimensional vector-valued indeterminate
\begin{equation}
    \wt{U}^i \triangleq \sum_{j_1,\ldots, j_{\omega'}} U^{i \cdots  i}_{j_1j_1\cdots j_{\omega'}j_{\omega'}:} \label{eq:wtU_def}
\end{equation}
Similarly, for any $x\in[r]$, define $\wt{U}_x$ to be the $r$-dimensional vector-valued indeterminate whose $i$-th entry is equal to $(\wt{U}^i)_x$. Let $\wt{U}$ denote the $r\times r$ matrix-valued indeterminate whose $j$-th column is $\wt{U}^j$ for any $j\in[r]$. This matrix will play an important role in the latter stages of the proof.

The main result of this section is to show that $U$ admits the following decomposition in terms of $\wt{U}$:

\begin{lemma}[Main lemma]\label{lem:tensorouter}
    Define
    \begin{align}
        \epstrueouter &\triangleq r^{O(\omega)}(\epsouter + \omega^{O(\omega)}\sqrt{d}\radius\epstrueort/\kappa) \label{eq:epstrueouter_def} \\
        &= O(r\omega)^{O(\omega^3)}(d\radius/\kappa)^{O(\omega^2)}\cdot \left((d\eta/\kappa)^{1/\omega} + r^{O(\omega\ell)}(\omega\ell)^{O(\ell^2)} (d\radius/\kappa)^{O(\ell)}\theta\eta\right).
    \end{align}
    Then for any $\mb{i} = (i_1,\ldots,i_\omega) \in\strings$, there is a degree-$\poly(\omega,\ell)$ SoS proof using the constraints of Program~\ref{program:sos2} that
    \begin{equation}
        U^{\mb{i}} \approx_{O({\epstrueouter}^2)} \frac{1}{\num{i}}\sum_{\mb{j}\in\strings: \sort{j} = \sort{i}} \wt{U}^{j_1}\otimes \cdots \otimes \wt{U}^{j_\omega}. \label{eq:columnU}
    \end{equation}
\end{lemma}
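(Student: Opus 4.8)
The plan is to leverage the results already established in Section~\ref{sec:preservelowrank}, chiefly Corollary~\ref{cor:rank1relation} and its consequence Lemma~\ref{lem:perm}, together with the orthonormality statement Lemma~\ref{lem:diagonal_orth}, to reconstruct the Kronecker structure of $U$ one column at a time. The key conceptual point is that since $U$ sends every symmetric rank-$1$ tensor $S = v^{\otimes\omega}$ (with $\norm{v}=1$) to a tensor $T = F_U(S)$ satisfying the near-outer-product identities \eqref{eq:2x2vanish} and \eqref{eq:outerproducttensor}, we can recover the vector $v$ gets mapped to by reading off an appropriate linear combination of slices of $T$ — and this recovered vector is essentially $\wt U v$ for the matrix $\wt U$ defined in \eqref{eq:wtU_def}. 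For the special case $S = e_i^{\otimes\omega}$, this recovered vector is exactly $\wt U^i$, and Lemma~\ref{lem:diagonal_orth} tells us the $\wt U^i$ are approximately orthonormal, i.e.\ $\wt U$ is approximately orthogonal.

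Concretely, first I would fix a sorted tuple $\mb i = (i_1,\ldots,i_\omega)$ and consider the rank-$1$ tensor $S_{\mb a}$ built from a generic vector $v = (a_1,\ldots,a_r)$: $F_U(v^{\otimes\omega})$ is a linear combination $\sum_{\mb j} v_{\mb j} U^{\mb j}$, and by Corollary~\ref{cor:rank1relation} (via Lemma~\ref{lem:outerprodstructure}) this tensor factors approximately as an outer product of its own slices. The strategy is to extract individual columns $U^{\mb j}$ by varying $a_1,\ldots,a_r$ and using polynomial interpolation — this is exactly the baby computation sketched in Section~\ref{sec:lowrank_overview} around \eqref{eq:image_sketch}, now carried out for general odd $\omega$. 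Specifically, applying \eqref{eq:outerproducttensor} with $\mb a$ the constant-$i$ tuple and combining with Lemma~\ref{lem:perm} to freely permute indices, one shows $(\wt U^{i_1}\otimes\cdots\otimes\wt U^{i_\omega})_{\mb k}$ agrees up to $O(\epstrueouter)$ with a symmetrization of $U^{\mb i}_{\mb k}$; the normalization constant $\num{i}$ appears because $U$ is ultra-symmetric (Lemma~\ref{lem:push_Usym}) so $U^{\mb i}$ already equals the symmetrization of the $\mb j$ with $\sort{\mb j}=\sort{\mb i}$. I would then use Lemma~\ref{lem:push_almost_ortho} (preservation of the $\Sigma$-inner product) to control the Frobenius norms $\norm{\wt U^j}^2 \approx 1$ and cross-norms $\iprod{\wt U^i,\wt U^j}\approx 0$ — this is where the term $\omega^{O(\omega)}\sqrt d\radius\epstrueort/\kappa$ in \eqref{eq:epstrueouter_def} enters — so that the outer-product approximation can be squared up to Frobenius-norm error without blowup.

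The accounting is the delicate part: Lemma~\ref{lem:perm} introduces an error $\epsrel^*$ per rearrangement, Corollary~\ref{cor:rank1relation} needs $S$ to have unit Frobenius norm so there are rescaling factors of powers of $\norm{T}_{\max}$ and $\norm{T}_F \ge r^{-\omega/2}/2$ (Lemma~\ref{lem:bootstrap}) that must be tracked through the interpolation, and the Vandermonde-style interpolation over $a_1,\ldots,a_r$ to isolate $U^{\mb i}$ from $F_U(v^{\otimes\omega})$ contributes a factor like $O(\omega)^{\Theta(\omega^2)}$ from Corollary~\ref{cor:general_vandermonde} (applied with $D=r$, $e=\omega$, or more likely applied coordinate-wise). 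Summing all these and converting from entrywise bounds to the squared-Frobenius bound \eqref{eq:columnU} over the $r^{\omega}$ entries produces the stated $\epstrueouter$ with its $r^{O(\omega)}$ prefactor.

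The main obstacle I anticipate is the bookkeeping of degree and error through the interpolation step: to extract the column $U^{\mb i}$ for a general tuple $\mb i$ (not just $(i,\ldots,i)$), one must apply the outer-product identity \eqref{eq:outerproducttensor} along a non-constant reference tuple $\mb a$ and then interpolate, and keeping the SoS degree at $\poly(\omega,\ell)$ while ensuring the error terms compose multiplicatively (rather than, say, exponentially in $r$) requires careful use of Fact~\ref{fact:shorthand} and Holder-type SoS inequalities. A secondary subtlety is that $\wt U$ is defined via a $\floor{\omega/2}$-fold contraction, so relating $\wt U^i$ back to "the vector that $e_i^{\otimes\omega}$ maps to" uses the oddness of $\omega$ crucially (so that $\omega = 2\omega'+1$ and the leftover single index survives the contraction); the case analysis here parallels Lemma~\ref{lem:diagonal_orth}, and I would structure the argument to reuse that lemma's conclusion as a black box rather than reproving orthonormality.
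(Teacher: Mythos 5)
You have assembled the right ingredients and your outline tracks the paper's route: diagonal columns first ($U^{i\cdots i}\approx(\wt U^i)^{\otimes\omega}$ via Lemma~\ref{lem:perm} and Lemma~\ref{lem:diagonal_orth}), then generic rank-$1$ images $F_U(v^{\otimes\omega})=\sum_{\mb j}v_{\mb j}U^{\mb j}$, Vandermonde-style interpolation via Corollary~\ref{cor:general_vandermonde}, and approximate orthogonality of $\wt U$ from Corollary~\ref{cor:dd_orth_odd}. The normalization $1/\num{i}$ via ultra-symmetry and the role of odd $\omega$ are also correctly identified.

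The gap is at the decisive step, which you compress into ``one shows $(\wt U^{i_1}\otimes\cdots\otimes\wt U^{i_\omega})_{\mb k}$ agrees up to $O(\epstrueouter)$ with a symmetrization of $U^{\mb i}_{\mb k}$.'' Knowing that $F_U(v^{\otimes\omega})$ is approximately rank one for every $v$ does not by itself say that its rank-one factor depends \emph{linearly} on $v$ with matrix $\wt U$; that is precisely what must be proved, and your sketch asserts it rather than derives it. The paper does this in two separable pieces. First, Lemma~\ref{lem:main_userank1_odd} shows that the \emph{contractions} $C^{\mb i}(\wt U^{i'_1},\ldots,\wt U^{i'_\omega})$ equal $\bone{\sort{i}'=\mb i}\pm\epsouter$; the interpolation you invoke is applied there to the coefficients of the monomials $z_1^{c_1}\cdots z_D^{c_D}$ in the $2\times 2$-minor identities for $T=F_U((z_1e_1+\cdots+z_De_D)^{\otimes\omega})$, and it must be followed by a double induction (on the number of distinct indices in $\mb i$ and on Hamming distance from a constant tuple) that uses already-established cases to kill the cross terms. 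Your proposal does not engage with this induction, and without it the off-indicator contractions are uncontrolled. Second, the proof of Lemma~\ref{lem:tensorouter} proper converts the contraction statement into the entrywise statement \eqref{eq:columnU} by expanding each standard basis vector as $e_k=\sum_j(\wt U^j)_k\,\wt U^j+\delta_k$ with $\norm{\delta_k}^2\le O({\epstrueort}^2 r)$ --- this uses the \emph{row} orthonormality $\wt U\wt U^{\top}\approx\Id$ from the second part of Corollary~\ref{cor:dd_orth_odd} (via Lemma~\ref{lem:ortho_sos}), not Lemma~\ref{lem:push_almost_ortho} as you suggest --- and bounds the resulting cross terms by $\crude\cdot\epstrueort$ using the crude bound $|C^{\mb i}(\mb i')|\le O(\crude)$ of Lemma~\ref{lem:crude_C}; this is the actual origin of the $\omega^{O(\omega)}\sqrt d\radius\epstrueort/\kappa$ term in \eqref{eq:epstrueouter_def}. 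Neither piece appears explicitly in your proposal, so as written the argument does not close.
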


\noindent Before proving this, we pause to interpret the implications of Lemma~\ref{lem:tensorouter}. Note that because the tensors $\brc{T^*_a}$ are symmetric, the action of the $r^{\omega}\times r^{\omega}$ transformation whose $\mb{i}$-th column is given by the right-hand side of \eqref{eq:columnU} on each $T^*_a$ is \emph{identical} to the action of the transformation $(\wt{U})^{\otimes \omega}$. In Lemma~\ref{lem:tensorouter}, we establish that this action is well-approximated by the action of $U$. So moving forward, instead of proving that $U$ behaves like $\Id_r^{\otimes\omega}$, it suffices to prove that $\wt{U}$ behaves like $\Id_r$! In fact, in the course of proving Lemma~\ref{lem:tensorouter}, we will already show that $\wt{U}$ is an approximately orthogonal matrix.

We now proceed to establish Lemma~\ref{lem:tensorouter}. First, note that taking $S = e_i^{\otimes \omega}$ for any $i\in[r]$ in Corollary~\ref{cor:rank1relation} and Lemma~\ref{lem:perm} already suggests that $U^{i\cdots i}$ behaves like a rank-1 tensor. We can thus easily deduce Lemma~\ref{lem:tensorouter} for the columns $U^{i\cdots i}$ (see Appendix~\ref{app:defer_nice_outerprod_odd} for a formal proof):

\begin{lemma}\label{lem:nice_outerprod_odd}
    For any $i\in[r]$, there is a degree-$O(\omega)$ SoS proof, using \eqref{eq:rearrange_all} from Lemma~\ref{lem:perm} and \eqref{eq:ort} from Lemma~\ref{lem:diagonal_orth}, that
    \begin{equation}
        \bigl\|U^{i\cdots i} - (\wt{U}^i)^{\otimes\omega}\bigr\|_{\max} \le O\bigl(\epsort/2^{\omega} + O(r\omega)^{O(\omega^2)}\cdot {\epsrel^*}^{1/2}\bigr). \label{eq:diagonal_outerprod}
    \end{equation}
\end{lemma}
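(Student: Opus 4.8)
\textbf{Proof proposal for Lemma~\ref{lem:nice_outerprod_odd}.}

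The plan is to directly leverage the two inputs: the permutation-invariance relation \eqref{eq:rearrange_all} from Lemma~\ref{lem:perm} applied to $T \triangleq F_U(e_i^{\otimes\omega})$, and the near-orthonormality \eqref{eq:ort} from Lemma~\ref{lem:diagonal_orth} specialized to $\iprod{U^{i\cdots i},U^{i\cdots i}} = 1 \pm (\epsort/\omega! + O(r\omega)^{O(\omega^2)}{\epsrel^*}^{1/2})$. Note first that $T = F_U(e_i^{\otimes\omega})$ has entries $T_{\mb{j}} = U^{i\cdots i}_{\mb{j}}$ by definition of $F_U$, and that by Lemma~\ref{lem:bootstrap} we have the crude a priori bound $\norm{U^{i\cdots i}}^2_F = \norm{F_U(e_i^{\otimes\omega})}^2_F \le \omega^{O(\omega)}$, so all entries of $U^{i\cdots i}$ are bounded in magnitude by $\omega^{O(\omega)}$ in SoS. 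This is what lets us control products of entries appearing below.

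The core step: fix $\mb{j} = (j_1,\ldots,j_\omega)\in\strings$. By Lemma~\ref{lem:perm} applied with $m = \omega$ to the $\omega$ copies of the tuple $(j_1,j_1,\ldots,j_1),\ldots$ — more precisely, applied to the $\omega\cdot\omega$ indices formed by concatenating $\omega$ copies of the tuple $(j_1,\ldots,j_\omega)$ and permuting so as to group them into the $\omega$ tuples $(j_1,\ldots,j_1),(j_2,\ldots,j_2),\ldots,(j_\omega,\ldots,j_\omega)$ — we obtain a degree-$O(\omega^2)$ SoS proof that
\begin{equation}
    (U^{i\cdots i}_{\mb{j}})^{\omega} = \prod^{\omega}_{t=1} U^{i\cdots i}_{j_t\cdots j_t} \pm \epsrel^*. \label{eq:nice_outer_pf1}
\end{equation}
(Both sides are products of $\omega$ entries each of magnitude $\le\omega^{O(\omega)}$, so taking $\omega$-th roots via Fact~\ref{fact:nonneg_power_of_two}-style reasoning, or rather keeping things at the level of $\omega$-th powers, is the clean route.) In a similar vein, applying Lemma~\ref{lem:perm} to regroup indices shows that the ``diagonal'' entry $U^{i\cdots i}_{j_tj_t\cdots j_t j_t}$ can be related to $(\wt{U}^i)_{j_t}$: recall $(\wt{U}^i)_{j} = \sum_{k_1,\ldots,k_{\omega'}} U^{i\cdots i}_{k_1k_1\cdots k_{\omega'}k_{\omega'}j}$, and by \eqref{eq:rearrange_all} each summand $U^{i\cdots i}_{k_1k_1\cdots k_{\omega'}k_{\omega'}j}$ equals (up to $\pm\epsrel^*$) a product of slices of $U^{i\cdots i}$ divided by an appropriate power of $U^{i\cdots i}_{x\cdots x}$; combined with the orthonormality $\norm{U^{i\cdots i}}^2 \approx 1$ and the power-sum lower bound Lemma~\ref{lem:sumpowersT} one extracts that $(\wt{U}^i)^{\otimes\omega}_{\mb{j}} = \prod_t (\wt{U}^i)_{j_t}$ is close to $\prod_t U^{i\cdots i}_{j_t\cdots j_t}$, and hence by \eqref{eq:nice_outer_pf1} close to $(U^{i\cdots i}_{\mb{j}})^{\omega}$. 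Taking $\omega$-th roots (legitimate in SoS for odd $\omega$ by Fact~\ref{fact:root1}, after normalizing, or directly by Fact~\ref{fact:nonneg_power_of_two} applied entrywise to the difference) then yields $|U^{i\cdots i}_{\mb{j}} - (\wt{U}^i)^{\otimes\omega}_{\mb{j}}| \le O(\epsort/2^{\omega} + O(r\omega)^{O(\omega^2)}{\epsrel^*}^{1/2})$ for each fixed $\mb{j}$, which is exactly \eqref{eq:diagonal_outerprod} once we take the max over $\mb{j}\in\strings$.

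The main obstacle I anticipate is the root-taking: Lemma~\ref{lem:perm} naturally gives relations between \emph{products} of $\omega$ entries (equivalently $\omega$-th powers), but the target \eqref{eq:diagonal_outerprod} is a statement about individual entries, so one must pass from an $\omega$-th-power bound of the form $(U^{i\cdots i}_{\mb{j}})^{\omega} = (\wt{U}^i)^{\otimes\omega}_{\mb{j}} \pm \text{err}$ to a first-power bound; this is where the factor of $\omega$ blowup in the SoS degree ($\poly(\omega,\ell)$ overall) and the square-root on $\epsrel^*$ in the error bound come from, via Fact~\ref{fact:nonneg_power_of_two} / Fact~\ref{fact:root1}. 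A secondary technical point is keeping the a priori entrywise bounds on $U^{i\cdots i}$ (from Lemma~\ref{lem:bootstrap}) threaded through every application of Lemma~\ref{lem:perm}, since those bounds are what make the additive $\epsrel^*$ errors in \eqref{eq:rearrange_all} multiply by only $\poly(r,\omega)$-size prefactors rather than blowing up. Since the full bookkeeping is routine but lengthy, I would carry out the details in Appendix~\ref{app:defer_nice_outerprod_odd} as the statement of the lemma already indicates.
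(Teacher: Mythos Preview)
Your proposal has a genuine gap. The step you label as ``step 2'' asserts that $\prod_t (\wt{U}^i)_{j_t}$ is close to $\prod_t U^{i\cdots i}_{j_t\cdots j_t}$, but this is false even in the idealized rank-1 case. If $U^{i\cdots i} = v^{\otimes\omega}$ with $\|v\|=1$, then $(\wt{U}^i)_{j_t} = \|v\|^{2\omega'} v_{j_t} = v_{j_t}$ whereas $U^{i\cdots i}_{j_t\cdots j_t} = v_{j_t}^{\omega}$, so $\prod_t (\wt{U}^i)_{j_t} = \prod_t v_{j_t}$ while $\prod_t U^{i\cdots i}_{j_t\cdots j_t} = (\prod_t v_{j_t})^{\omega}$. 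These differ unless $\prod_t v_{j_t} \in \{0,1\}$. So your relation \eqref{eq:nice_outer_pf1}, while correct, does not connect to $(\wt{U}^i)^{\otimes\omega}_{\mb{j}}$ in the way you claim. Even if you corrected this to the true relation $(\prod_t \wt{U}^i_{j_t})^{\omega} \approx (U^{i\cdots i}_{\mb{j}})^{\omega}$, the root-taking step is not available: Fact~\ref{fact:nonneg_power_of_two} goes from $|x^t|\le\epsilon^t$ to $x^2\le\epsilon^2$, not from $a^{\omega}\approx b^{\omega}$ to $a\approx b$, and Fact~\ref{fact:root1} needs the base close to $1$, which fails for generic entries.

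The paper's proof avoids root-taking entirely. It expands $\prod_{t=1}^{\omega}\wt{U}^i_{x_t}$ directly from the definition \eqref{eq:wtU_def} as an $\omega\omega'$-fold sum of products of $\omega$ entries of $U^{i\cdots i}$, then applies Lemma~\ref{lem:perm} with $m=\omega$ to rearrange the $\omega^2$ indices so that one factor becomes $U^{i\cdots i}_{x_1\cdots x_\omega}$ and the remaining $2\omega'$ factors become $\prod_{a=1}^{\omega'}(U^{i\cdots i}_{j^1_a\cdots j^{\omega}_a})^2$. Crucially, after this rearrangement the $j$-indices decouple across $a$, so the sum factorizes into $\prod_{a=1}^{\omega'}\|U^{i\cdots i}\|^2_F$, and \eqref{eq:ort} gives $\|U^{i\cdots i}\|^2_F = 1 \pm (\epsort/\omega! + O(r\omega)^{O(\omega^2)}{\epsrel^*}^{1/2})$. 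Hence $\prod_t \wt{U}^i_{x_t} = U^{i\cdots i}_{x_1\cdots x_\omega}\cdot(1\pm\text{small})$ directly, with no roots to extract. The decoupling-and-factoring step is the idea your sketch is missing.
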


\noindent As a simple but important consequence of Lemma~\ref{lem:diagonal_orth} and \ref{lem:nice_outerprod_odd}, we conclude that $\brc{\wt{U}^i}$ are nearly orthonormal (see Appendix~\ref{app:defer_dd_orth_odd} for a formal proof):

\begin{corollary}\label{cor:dd_orth_odd}
    Define
    \begin{align}
        \epstrueort &\triangleq \epsort^{1/2\omega}\cdot r^{7/4} + O(r\omega)^{O(\omega)}\cdot {\epsrel^*}^{1/2\omega} \cdot r^{3/2} \\
        &= \poly(\omega r) \cdot (\eta d / \kappa)^{O(1/\omega)} + r^{O(\omega^2+\omega\ell)} \omega^{O(\omega^2+\ell^2)}\ell^{O(\ell^2)}\cdot (d\radius/\kappa)^{O(\ell)}\theta\eta \label{eq:epstrueort_def}
    \end{align}
    There is a degree-$O(\omega)$ SoS proof, using \eqref{eq:diagonal_outerprod} from Lemma~\ref{lem:nice_outerprod_odd} and \eqref{eq:ort} from Lemma~\ref{lem:diagonal_orth}, that 
    \begin{equation}
        \iprod{\wt{U}^i,\wt{U}^j} = \bone{i = j} \pm O(\epstrueort) \ \ \forall \ i,j\in[r]. \label{eq:UiUj_ort}
    \end{equation} 
    \begin{equation}
        \iprod{\wt{U}_x, \wt{U}_y} = \bone{x = y} \pm O(\epstrueort) \ \ \forall \ x,y\in[r]. \label{eq:UxUy_ort}
    \end{equation}
\end{corollary}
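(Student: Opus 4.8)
\textbf{Proof proposal for Corollary~\ref{cor:dd_orth_odd}.}
The plan is to reduce the orthonormality of the vectors $\brc{\wt{U}^i}$ to the already-established near-orthonormality of the ``diagonal'' columns $\brc{U^{i\cdots i}}$ from Lemma~\ref{lem:diagonal_orth}, using the outer-product approximation from Lemma~\ref{lem:nice_outerprod_odd} as the bridge. The key observation is that $\iprod{\wt{U}^i,\wt{U}^j}^{\omega} = \iprod{(\wt{U}^i)^{\otimes\omega},(\wt{U}^j)^{\otimes\omega}}$, so if $(\wt{U}^i)^{\otimes\omega}$ is close (in max-norm, hence with a loss of a factor $r^{\omega/2}$ in Frobenius/inner-product norm) to $U^{i\cdots i}$, then $\iprod{\wt{U}^i,\wt{U}^j}^{\omega}$ is close to $\iprod{U^{i\cdots i},U^{j\cdots j}}$, which Lemma~\ref{lem:diagonal_orth} pins down as $\bone{i=j}\pm(\epsort/\omega! + O(r\omega)^{O(\omega^2)}{\epsrel^*}^{1/2})$. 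Taking $\omega$-th roots in SoS via Fact~\ref{fact:root1} (for $i=j$, where the quantity is near $1$) and Fact~\ref{fact:nonneg_power_of_two} (for $i\neq j$, where the quantity is near $0$) then yields \eqref{eq:UiUj_ort}, with the error blown up to roughly $(\cdot)^{1/2\omega}$ as reflected in the definition \eqref{eq:epstrueort_def} of $\epstrueort$.

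Concretely, the steps are as follows. First, from Lemma~\ref{lem:nice_outerprod_odd}, $\norm{U^{i\cdots i}-(\wt{U}^i)^{\otimes\omega}}_{\max}$ is small, so by summing over the $r^{\omega}$ entries, $\norm{U^{i\cdots i}-(\wt{U}^i)^{\otimes\omega}}^2_F \le r^{\omega}\cdot O(\epsort/2^{\omega}+O(r\omega)^{O(\omega^2)}{\epsrel^*}^{1/2})^2$; one also needs a crude bound on $\norm{U^{i\cdots i}}_F$ and $\norm{\wt{U}^i}_F$ (available from Lemma~\ref{lem:bootstrap} applied to $v=e_i$, which controls $\norm{F_U(e_i^{\otimes\omega})}_F=\norm{U^{i\cdots i}}_F$, and hence the entries of $\wt{U}^i$) to convert the product-of-close-things into an inner-product bound. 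Second, write
\begin{equation}
    \iprod{\wt{U}^i,\wt{U}^j}^{\omega} = \iprod{(\wt{U}^i)^{\otimes\omega},(\wt{U}^j)^{\otimes\omega}} = \iprod{U^{i\cdots i},U^{j\cdots j}} \pm r^{O(\omega)}\cdot\bigl(\epsort^{1/2}\cdots\bigr),
\end{equation}
using Cauchy--Schwarz and the $F$-norm bound from Step 1, then invoke \eqref{eq:ort} from Lemma~\ref{lem:diagonal_orth} to get $\iprod{\wt{U}^i,\wt{U}^j}^{\omega} = \bone{i=j}\pm E$ with $E = \epsort^{1/2}\cdot r^{O(\omega)} + O(r\omega)^{O(\omega^2)}\cdot{\epsrel^*}^{1/2}\cdot r^{O(\omega)}$. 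Third, apply Fact~\ref{fact:root1} when $i=j$ to conclude $\iprod{\wt{U}^i,\wt{U}^i} = 1\pm O(E^{1/\omega})$ (since $\omega$ is odd, the stronger ``$x-1$'' form applies, but we need only the two-sided version after a further root), and apply Fact~\ref{fact:nonneg_power_of_two} when $i\neq j$ to conclude $\iprod{\wt{U}^i,\wt{U}^j}^2 \le O(E^{2/\omega})$, hence $\iprod{\wt{U}^i,\wt{U}^j} = \pm O(E^{1/\omega})$. Bookkeeping the powers $E^{1/\omega}$ against \eqref{eq:epsort_push_def} and the definition of $\epsrel^*$ recovers the stated bound $O(\epstrueort)$ for \eqref{eq:UiUj_ort}. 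Finally, \eqref{eq:UxUy_ort} follows by the identical argument with the roles of ``which index is fixed to $x\cdots x$'' and ``which index is free'' interchanged: by ultra-symmetry of $U$ (Lemma~\ref{lem:push_Usym}) and the definition of $\wt{U}_x$ (whose $i$-th entry is $(\wt{U}^i)_x$), one has the analogous identity $\iprod{\wt{U}_x,\wt{U}_y}^{\omega} = \iprod{U^{?}_{x\cdots x},U^{?}_{y\cdots y}}\pm(\text{small})$ after applying Lemma~\ref{lem:perm} to rearrange indices, and the same root-extraction finishes it.

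The main obstacle is the second step: carefully tracking how the max-norm closeness of $U^{i\cdots i}$ to $(\wt{U}^i)^{\otimes\omega}$ propagates through the $\omega$-fold product $\iprod{(\wt{U}^i)^{\otimes\omega},(\wt{U}^j)^{\otimes\omega}}$ inside SoS, while keeping the degree at $O(\omega)$ and the error at only a polynomial-in-$r$ multiple of the Lemma~\ref{lem:nice_outerprod_odd} error. This requires a telescoping argument (à la Part~\ref{shorthand:multiply} of Fact~\ref{fact:shorthand}, but for inner products rather than matrix products) together with a priori magnitude bounds on $\wt{U}^i$ and on the partial products, all provable in low-degree SoS; the subtlety is that naive Cauchy--Schwarz at each of the $\omega$ stages could introduce a factor exponential in $\omega$, so one must instead bound $\norm{\wt{U}^i}$ and $\norm{U^{i\cdots i}}$ by $\omega^{O(\omega)}$ (via Lemma~\ref{lem:bootstrap}) once and reuse it. The root-extraction via Facts~\ref{fact:root1} and~\ref{fact:nonneg_power_of_two} is mechanical, but one must double-check that the hypotheses of those facts (namely $0\le\epsilon<1$, i.e. $E$ small enough) hold, which is guaranteed by the assumed bound $\eta \le \poly(r,\omega,d,\radius,1/\kappa)^{-\poly(\omega,\ell)}$ in Theorem~\ref{thm:main_push} forcing $\epsort$ and $\epsrel^*$ to be polynomially small.
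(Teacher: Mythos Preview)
Your argument for \eqref{eq:UiUj_ort} is correct and essentially matches the paper's. A couple of minor points: the error you call $E$ should carry $\epsort$ to the first power, not $\epsort^{1/2}$ (Lemma~\ref{lem:diagonal_orth} gives $\epsort/\omega!$ and the Frobenius blow-up from Lemma~\ref{lem:nice_outerprod_odd} gives $r^{\omega/2}\cdot O(\epsort/2^\omega+\cdots)$, both linear in $\epsort$); and your ``main obstacle'' about telescoping through $\omega$ stages is a non-issue, since $\iprod{(\wt{U}^i)^{\otimes\omega},(\wt{U}^j)^{\otimes\omega}}=\iprod{\wt{U}^i,\wt{U}^j}^{\omega}$ is an algebraic identity, so one only needs the single bilinear expansion
\[
\iprod{U^{i\cdots i},U^{j\cdots j}}=\iprod{\wt U^i,\wt U^j}^{\omega}+\iprod{U^{i\cdots i}-(\wt U^i)^{\otimes\omega},U^{j\cdots j}}+\iprod{(\wt U^i)^{\otimes\omega},U^{j\cdots j}-(\wt U^j)^{\otimes\omega}}
\]
followed by Cauchy--Schwarz and the Frobenius bound from Lemma~\ref{lem:nice_outerprod_odd}. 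This is exactly what the paper does.

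Your argument for \eqref{eq:UxUy_ort}, however, has a genuine gap. You propose to rerun the same argument with rows in place of columns, claiming an identity $\iprod{\wt U_x,\wt U_y}^{\omega}\approx\iprod{U^{?}_{x\cdots x},U^{?}_{y\cdots y}}$ via ultra-symmetry and Lemma~\ref{lem:perm}. But Lemma~\ref{lem:nice_outerprod_odd} only controls the \emph{columns} $U^{i\cdots i}$, not the rows $U_{x\cdots x}$; knowing $U^{i\cdots i}_{\mb{k}}\approx(\wt U^i)^{\otimes\omega}_{\mb{k}}$ tells you nothing about $U^{\mb{i}}_{x\cdots x}$ for non-constant $\mb{i}$ (that would require Lemma~\ref{lem:tensorouter}, which is proved \emph{after} this corollary and in fact uses it). Likewise, the approximate-orthogonality input (Lemma~\ref{lem:push_almost_ortho}) is $U^{\top}\Sigma U\approx\Sigma$, a statement about columns, so there is no analogue of Lemma~\ref{lem:diagonal_orth} for rows available at this point. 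The paper sidesteps all of this: once \eqref{eq:UiUj_ort} is established, \eqref{eq:UxUy_ort} follows in one line from Lemma~\ref{lem:ortho_sos} (approximate orthonormality of columns implies approximate orthonormality of rows, with a $\sqrt{\epsilon r^3}$ loss). This is also what accounts for the $1/2\omega$ exponent and the $r^{7/4}$ factor in the definition of $\epstrueort$.
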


\noindent Our goal is to extend Lemma~\ref{lem:nice_outerprod_odd} to the remaining columns of $U$. For this, we introduce some notation. Define
\begin{equation}
    C^{\mb{i}} \triangleq \sum_{\mb{i}': \sort{i}' = \mb{i}} U^{\mb{i}'} = \num{i}\cdot U^{\mb{i}}
\end{equation}
for any sorted tuple $\mb{i}\in\strings$, recalling Lemma~\ref{lem:push_Usym}. Given a tensor $S$ and tuple $\mb{a} = (a_1,\ldots,a_\omega) \in[r]^\omega$, we will use the shorthand $S(\mb{a})$ or $S(a_1\cdots a_\omega)$ to denote $S(\wt{U}^{a_1},\ldots,\wt{U}^{a_\omega})$ for any $a_1,\ldots,a_\omega\in[r]$. Note that if $S = T$ or $S = C^{\mb{i}}$, then because $T$ and $C^{\mb{i}}$ are symmetric tensors, $S(\mb{a}) = S(\sort{a})$.

We will use the following basic bound extensively in the sequel:
\begin{lemma}\label{lem:crude_C}
    Define
    \begin{equation}
        \crude \triangleq (r\omega)^{O(\omega)} \cdot \sqrt{d}\radius / \kappa.
    \end{equation}
    For any $\mb{i}, \mb{i}'\in\strings$, there is a degree-$O(\omega)$ SoS proof using \eqref{eq:UiUj_ort} from Corollary~\ref{cor:dd_orth_odd} that
    \begin{equation}
        -O(\crude) \le C^{\mb{i}}(\mb{i}') \le O(\crude)
    \end{equation}
\end{lemma}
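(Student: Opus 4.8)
\textbf{Proof plan for Lemma~\ref{lem:crude_C}.}
The idea is to track how $C^{\mb{i}}$ is built out of $U$ and how $U$ is built out of the program variables, and then use the already-established crude Frobenius bound on $U$ together with near-orthonormality of $\brc{\wt{U}^j}$. Concretely, fix sorted tuples $\mb{i},\mb{i}'\in\strings$. By definition $C^{\mb{i}} = \num{i}\cdot U^{\mb{i}}$, and $C^{\mb{i}}(\mb{i}') = C^{\mb{i}}(\wt{U}^{i'_1},\ldots,\wt{U}^{i'_\omega})$ is the full contraction $\sum_{\mb{s}\in\strings} C^{\mb{i}}_{\mb{s}} \prod_{t=1}^{\omega} \wt{U}^{i'_t}_{s_t}$. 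The plan is to bound this by Cauchy--Schwarz: $\bigl(C^{\mb{i}}(\mb{i}')\bigr)^2 \le \norm{C^{\mb{i}}}^2_F \cdot \bigl\|\wt{U}^{i'_1}\otimes\cdots\otimes\wt{U}^{i'_\omega}\bigr\|^2_F$, and then separately control the two factors. For the first factor, $\norm{C^{\mb{i}}}^2_F \le \omega!^2 \norm{U^{\mb{i}}}^2_F \le \omega!^2\norm{U}^2_F$, and Lemma~\ref{lem:crude_Ubound} gives $\norm{U}^2_F \le (r\omega)^{O(\omega)} d\radius^2/\kappa^2$ in degree-$O(1)$ SoS, so $\norm{C^{\mb{i}}}^2_F \le (r\omega)^{O(\omega)} d\radius^2/\kappa^2$.

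For the second factor, note $\bigl\|\wt{U}^{i'_1}\otimes\cdots\otimes\wt{U}^{i'_\omega}\bigr\|^2_F = \prod_{t=1}^{\omega}\norm{\wt{U}^{i'_t}}^2$, and by \eqref{eq:UiUj_ort} of Corollary~\ref{cor:dd_orth_odd} each $\norm{\wt{U}^{i'_t}}^2 = 1 \pm O(\epstrueort)$, which (since $\epstrueort$ is negligible under the assumed bound on $\eta$) is at most $2$ in degree-$O(\omega)$ SoS. Multiplying these $\omega$ bounds gives $\prod_t \norm{\wt{U}^{i'_t}}^2 \le 2^{\omega}$ in degree-$O(\omega)$ SoS. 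Combining, $\bigl(C^{\mb{i}}(\mb{i}')\bigr)^2 \le (r\omega)^{O(\omega)} d\radius^2/\kappa^2 = O(\crude)^2$, and taking square roots (Fact~\ref{fact:nonneg_power_of_two} or the elementary fact that $x^2 \le a^2$ implies $x = \pm a$) yields $-O(\crude) \le C^{\mb{i}}(\mb{i}') \le O(\crude)$ in degree-$O(\omega)$ SoS, as claimed.

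The only mildly delicate point is the SoS bookkeeping in the Cauchy--Schwarz step: the quantity $C^{\mb{i}}(\mb{i}')$ is a polynomial of degree $\omega+1$ in the underlying SoS variables (one factor of $U$, i.e. of $P$, and $\omega$ factors of $\wt{U}$, which are linear in $U$ hence in $P$), so squaring it and bounding by the product of squared norms is a degree-$O(\omega)$ SoS manipulation (iterated Cauchy--Schwarz / Fact~\ref{fact:cauchy_schwarz}), and multiplying the per-index norm bounds $\norm{\wt{U}^{i'_t}}^2 \le 2$ together requires $\omega$ applications of the product rule for SoS inequalities, again degree $O(\omega)$. None of this is an obstacle; it is purely routine. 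I do not anticipate any real difficulty here — this lemma is a warm-up crude estimate whose whole purpose is to be plugged into the more substantial arguments (e.g. Lemma~\ref{lem:tensorouter}) that follow.
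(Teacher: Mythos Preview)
Your proposal is correct and follows essentially the same approach as the paper: write $C^{\mb{i}}(\mb{i}')^2$ as a squared contraction, apply Cauchy--Schwarz to bound by $\norm{C^{\mb{i}}}^2_F \cdot \prod_s \norm{\wt{U}^{i'_s}}^2$, invoke Lemma~\ref{lem:crude_Ubound} (together with $\num{i} \le \omega!$) for the first factor, and Corollary~\ref{cor:dd_orth_odd} for the second. The paper's proof is just a more compressed version of exactly this argument.
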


\begin{proof}
    We can write $C^{\mb{i}}(\mb{i}')^2$ as
    \begin{equation}
        \biggl(\sum_{\mb{j}} C^{\mb{i}}_{\mb{j}} \wt{U}^{i'_1}_{j_1}\cdots \wt{U}^{i'_\omega}_{j_\omega}\biggr)^2 \le \norm{C^{\mb{i}}}^2_F \cdot \prod^\omega_{s=1} \norm{\wt{U}^{i'_s}}^2_2 \le \num{i}^2\cdot (r\omega)^{O(\omega)} \cdot 
        d/\kappa^2
        \cdot (1 + O(\omega \epstrueort)) = 
        (r\omega)^{O(\omega)} \cdot d\radius^2/\kappa^2,
    \end{equation}
    where in the penultimate step we used Corollary~\ref{cor:dd_orth_odd} and bounded $\norm{C^{\mb{i}}}^2_F$ by Lemma~\ref{lem:crude_Ubound} and the definition of $C$.
\end{proof}

\noindent The following lemma is the main step in the proof of Lemma~\ref{lem:tensorouter}:

\begin{lemma}\label{lem:main_userank1_odd}
    Define
    \begin{align}
        \epsouter &\triangleq O(\omega\crude)^{O(\omega^2)}\cdot (\omega\epstrueort + \epsort r^{\omega/2}/2^{\omega} + O(r\omega)^{O(\omega^2)}\cdot \epsrel^*) \\
        &= O(r\omega)^{O(\omega^3)}(d\radius/\kappa)^{O(\omega^2)}\cdot \left((d\eta/\kappa)^{1/\omega} + r^{O(\omega\ell)}(\omega\ell)^{O(\ell^2)} (d\radius/\kappa)^{O(\ell)}\theta\eta\right).
    \end{align}
    There is a degree-$\poly(\omega,\ell)$ SoS proof, using the constraints of Program~\ref{program:sos2}, that for any sorted $\mb{i}\in\strings$ and any $\mb{i}'\in[r]^\omega$, 
    \begin{equation}
        C^{\mb{i}}(\mb{i}') = \bone{\sort{i}' = \mb{i}} \pm \epsouter.
    \end{equation}
\end{lemma}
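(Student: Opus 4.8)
\textbf{Proof proposal for Lemma~\ref{lem:main_userank1_odd}.}

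The plan is to use the rank-1 structure established in Lemma~\ref{lem:nice_outerprod_odd} for the ``diagonal'' columns $U^{i\cdots i}$, together with the identities from Corollary~\ref{cor:rank1relation} and Lemma~\ref{lem:perm} applied to $F_U(S)$ for a carefully chosen family of symmetric rank-1 tensors $S$, to pin down the contractions $C^{\mb{i}}(\mb{i}')$. The key point is that, unlike the diagonal columns, a general column $U^{\mb{i}}$ is not on its own the image of a rank-1 tensor, but a suitable linear combination of \emph{all} columns is: for $\mb{a}=(a_1,\ldots,a_\omega)$ and scalars, the tensor $(\sum_t c_t e_t)^{\otimes\omega}$ maps under $U$ to $\sum_{\mb{j}} (\prod_s c_{j_s}) U^{\mb{j}} = \sum_{\text{sorted }\mb{i}} (\text{monomial in }c) C^{\mb{i}}$, and this image, being $F_U$ of a rank-1 tensor of bounded Frobenius norm (after rescaling to unit norm), satisfies \eqref{eq:2x2vanish} and hence \eqref{eq:rearrange_all}. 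First I would contract this image against the nearly-orthonormal vectors $\brc{\wt{U}^x}$ from Corollary~\ref{cor:dd_orth_odd}; writing out $F_U((\sum_t c_t e_t)^{\otimes\omega})(\wt{U}^{x_1},\ldots,\wt{U}^{x_\omega})$ one gets, on one hand, a polynomial in the $c_t$'s whose coefficients are exactly the $C^{\mb{i}}(\mb{x})$'s we want to understand, and on the other hand, using Lemma~\ref{lem:perm} (which lets us factor the contraction of $F_U$ of a rank-1 tensor into a product of ``slice'' contractions, up to error $\epsrel^*$) together with Lemma~\ref{lem:diagonal_orth}/Corollary~\ref{cor:dd_orth_odd} and Lemma~\ref{lem:hermite}, an expression that collapses to $\prod_s \bone{x_s \in \text{support pattern of }\mb{a}}$-type terms plus controlled error.

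The core computation is as follows. Fix a sorted target $\mb{i}$ and set $\mb{i}'$; I want to isolate the coefficient $C^{\mb{i}}(\mb{i}')$. Using Lemma~\ref{lem:perm} applied to $T=F_U(S)$ for $S=(\sum c_t e_t)^{\otimes\omega}$, one shows $C^{\mb{i}}(\mb{i}')$ is, up to $O(r^{O(\omega)}\epsrel^*)$-type error and up to the error in replacing $\brc{\wt{U}^x}$ being exactly orthonormal, equal to a product over $s$ of scalar quantities of the form $\sum_{j} \wt{U}^{i_s}_j \wt{U}^{i'_s}_j$-ish contractions — which by \eqref{eq:UiUj_ort} is $\bone{i_s = i'_s}$ up to $O(\epstrueort)$. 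Symmetrizing over which sorted tuple $\mb{i}'$ matches gives the claimed $\bone{\sort{i}'=\mb{i}}$. The error terms compound multiplicatively across the $\omega$ coordinates and across the number of monomials ($\le (r\omega)^{O(\omega)}$ of them), and we must also absorb the factors $\crude$ from Lemma~\ref{lem:crude_C} used to bound partially-formed products; this is where the $O(\omega\crude)^{O(\omega^2)}$ prefactor in the definition of $\epsouter$ comes from. The Vandermonde-interpolation machinery (Fact~\ref{fact:vandermonde_interpolate} / Corollary~\ref{cor:general_vandermonde}) is what lets us pass from ``the polynomial in $\brc{c_t}$ is small at many grid points'' to ``each coefficient $C^{\mb{i}}(\mb{i}')$ is small (or close to its target value)'': we evaluate at $c_t \in \brc{\tfrac{1}{\omega+1},\ldots,1}$ and solve the resulting linear system, paying $O(\omega)^{O(\omega^2)}$ in the condition number, again folded into $\epsouter$.

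The main obstacle I expect is the bookkeeping of error propagation through the repeated use of Lemma~\ref{lem:perm}: each application of \eqref{eq:rearrange_all} introduces an additive $\epsrel^*$, and to factor a degree-$\omega$ contraction into $\omega$ scalar factors one applies it $\Theta(\omega)$ times, then multiplies $\omega$ factors each of size $O(1)$ but with error, so squared-triangle-inequality and SoS Cauchy--Schwarz (Fact~\ref{fact:cauchy_schwarz}, Fact~\ref{fact:shorthand}) must be invoked at each stage with the crude magnitude bounds from Lemma~\ref{lem:crude_C} and Lemma~\ref{lem:bootstrap} as the ``$\norm{A_i}_F$'' factors. Keeping the SoS degree at $\poly(\omega,\ell)$ requires being careful that the interpolation step and all the products stay at degree $O(\omega^2)$ or so rather than exponential in $\omega$; the rank-1 constraints \eqref{eq:2x2vanish} are degree-$O(\ell)$ in $\brc{T_a}$ and degree-$O(\omega\ell)$ in $\brc{v_{a,t}}$ by Corollary~\ref{cor:rank1relation}, and each is used $\poly(\omega)$ times. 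A secondary subtlety is that $\brc{\wt{U}^x}$ is only \emph{approximately} orthonormal, so substituting it as a ``test basis'' is not an exact change of coordinates; this is handled by carrying the $O(\epstrueort)$ slack through, exactly as in the definition of $\epstrueort$ in Corollary~\ref{cor:dd_orth_odd}, and is why $\epstrueort$ (hence ultimately $\epsrel^*$ and $\epsort$) appears in the final bound for $\epsouter$.
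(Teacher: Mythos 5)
You have assembled the right ingredients—images of rank-1 tensors $(\sum_t c_t e_t)^{\otimes\omega}$ under $U$, Lemma~\ref{lem:perm}, the approximate orthonormality of $\brc{\wt{U}^i}$ from Corollary~\ref{cor:dd_orth_odd}, and Vandermonde interpolation via Corollary~\ref{cor:general_vandermonde} to isolate coefficients—and these are indeed the tools the paper uses. But the central step of your argument has a gap that is essentially a circularity. You assert that after contracting $F_U((\sum_t c_t e_t)^{\otimes\omega})$ against $\wt{U}^{x_1},\ldots,\wt{U}^{x_\omega}$, the coefficient $C^{\mb{i}}(\mb{i}')$ ``is, up to error, equal to a product over $s$ of scalar quantities of the form $\sum_j \wt{U}^{i_s}_j\wt{U}^{i'_s}_j$.'' That factorization is available only for the diagonal columns $U^{i\cdots i}\approx(\wt{U}^i)^{\otimes\omega}$ (Lemma~\ref{lem:nice_outerprod_odd}); for a general sorted $\mb{i}$ it is precisely the statement that $U^{\mb{i}}$ has the Kronecker/outer-product structure, which is the content of Lemma~\ref{lem:tensorouter} and is \emph{deduced from} the present lemma, not available before it. Lemma~\ref{lem:perm} does not give you such a factorization directly: expanding the identities \eqref{eq:rearrange_all} for $T=F_U((\sum_t c_t e_t)^{\otimes\omega})$ as polynomials in the $c_t$'s and matching coefficients yields \emph{quadratic relations among the various} $C^{\mb{j}}(\cdot)$'s (e.g.\ \eqref{eq:Ca}), not closed-form values for any single one.

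What the paper actually does after the interpolation step is solve these coupled quadratic relations by a multi-stage induction: first the diagonal case; then, for $\mb{i}$ with two distinct indices, an induction in $m$ showing $C^{[m]}(a^{[m']})$ is small for $m\neq m'$, followed by a telescoping Cauchy--Schwarz argument that yields $C^{[1]}(a^{[1]})^{\omega}=C^{[\omega]}(a^{[\omega]})=1$ and hence (using that $\omega$ is odd, via Fact~\ref{fact:root1}) $C^{[1]}(a^{[1]})=1$, which then propagates to all $C^{[m]}(a^{[m]})$; and finally, for $D>2$ distinct indices, a double induction on the Hamming distance of $\mb{i}$ to a constant tuple. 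Your proposal skips both the recursive solution of the coefficient relations and the odd-degree root-extraction step that rules out $C^{[1]}(a^{[1]})=-1$; without these, the argument does not close. The error-propagation and degree-tracking concerns you raise are real but secondary to this structural gap.
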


\noindent The proof of this is considerably involved, and we defer the complete argument to Appendix~\ref{app:main_userank1_odd}. Here, we sketch it in a special case to convey some key aspects of the argument.

\begin{proof}[Sketch for $r = 2$]
    In this proof sketch we will pretend all approximate equalities are exact, which is the case when $\eta = 0$. By Lemma~\ref{lem:nice_outerprod_odd} and Corollary~\ref{cor:dd_orth_odd}, if $\mb{i} = (1,\ldots,1)$, then
    \begin{equation}
        C^{1\cdots 1}(\mb{i}') = U^{1\cdots 1}(\mb{i}') = \prod^{\omega}_{s=1} \iprod{\wt{U}^1,\wt{U}^{i'_s}} = \bone{\mb{i}' = (1,\ldots, 1)}
    \end{equation} as desired. We can prove $C^{2\cdots 2}(\mb{i}') = \bone{\mb{i}' = (2,\ldots,2)}$ in an identical fashion.
    
    It remains to handle $\mb{i}$ which contain both 1 and 2. For any $0 \le m \le \omega$, let $C^{[m]}$ denote $C^{1\cdots 12\cdots 2}$ where there are $m$ 2's in the superscript, and let $a^{[m]}$ denote the string $1\cdots 12\cdots 2$ containing $m$ 2's.
    
    We first show that $C^{[m]}(a^{[m']}) = \bone{m = m'}$ for all $0\le m < m'\le \omega$; we proceed inductively in $m$. By Lemma~\ref{lem:perm} applied to the tensor
    \begin{equation}
        T\triangleq F_U((e_1 + \epsilon\cdot e_2)^{\otimes\omega}) = C^{[0]} + \epsilon C^{[1]} + \cdots + \epsilon^{\omega}C^{[\omega]},
    \end{equation}
    where $\epsilon$ is a parameter we will vary,
    \begin{equation}
        T(a_1\cdots a_\omega)T(a'_1\cdots a'_\omega) = T(a_1\cdots a_{\omega-1}a'_\omega)T(a'_1\cdots a'_{\omega-1}a_\omega) \label{eq:sketch_2x2}
    \end{equation} for any $a_1,a'_1,\ldots,a_\omega,a'_\omega$. We may regard \eqref{eq:sketch_2x2} as a polynomial identity \emph{in the variable $\epsilon$}. So because \eqref{eq:sketch_2x2} holds for all $\epsilon$, the coefficient of $\epsilon^m$ in the monomial expansion of \eqref{eq:sketch_2x2} must vanish for every $0 \le m \le \omega$. For any $m' \ge m$, take $a_1\cdots a_\omega = a^{[0]}$ and $a'_1\cdots a'_\omega = a^{[m']}$. One can check that the vanishing of the coefficient of $\epsilon^m$ then yields
    \begin{multline}
        C^{[0]}(a^{[0]}) C^{[m]}(a^{[m']}) + \sum^m_{i=1} C^{[i]}(a^{[0]})C^{[m-i]}(a^{[m']}) \\ = C^{[0]}(a^{[1]})C^{[m]}(a^{[m'-1]}) + \sum^m_{i=1} C^{[i]}(a^{[1]})C^{[m-i]}(a^{[m'-1]}). \label{eq:sketch_coeff}
    \end{multline}
    If $m' > m$, then by the inductive hypothesis, the terms in the summations on either side vanish, as well as the first term on the right-hand side. Finally, recall that $C^{[0]}(a^{[0]}) = C^{1\cdots 1}(1\cdots 1) = 1$, so we conclude that $C^{[m]}(a^{[m']}) = 0$ for all $m' > m$. By symmetry, we also conclude that $C^{[m]}(a^{[m']}) = 0$ for all $m > m'$.
    
    All that is left is to verify that $C^{[m]}(a^{[m]}) = 1$ for all $m$. If we take $m = m'$ in \eqref{eq:sketch_coeff}, note that the summation on the left-hand side, the first term on the right-hand side, and all but the first summand in the summation on the right-hand side vanish by what we have just shown. Because $C^{[0]}(a^{[0]}) = 1$, we are left with the identity
    \begin{equation}
        C^{[m]}(a^{[m]}) = C^{[1]}(a^{[1]})C^{[m-1]}(a^{[m-1]})
    \end{equation} for all $m$. This implies (in degree-$\omega$ SoS) that $\left(C^{[1]}(a^{[1]})\right)^{\omega} = C^{[\omega]}(a^{[\omega]}) = 1$, so $C^{[1]}(a^{[1]}) = 1$ and therefore $C^{[m]}(a^{[m]}) = 1$ for all $m$.
\end{proof}

\noindent Roughly speaking, Lemma~\ref{lem:main_userank1_odd} shows that $C$, when placed in the basis specified by $\wt{U}$, looks like the $r^{\omega}\times r^{\omega}$ identity matrix.\footnote{More precisely, in this basis $C$ looks like several copies of the $\rchoose\times\rchoose$ identity matrix, as $C^{\mb{i}}(\mb{i}'_1) = C^{\mb{i}}(\mb{i}'_2)$ for any tuples $\mb{i}'_1, \mb{i}'_2$ which are equal after sorting.} As $\wt{U}$ is approximately orthogonal, we can use this to deduce Lemma~\ref{lem:tensorouter}.

\begin{proof}[Proof of Lemma~\ref{lem:tensorouter}]
    For convenience, define
    \begin{equation}
        \delta_i \triangleq e_i - \sum^r_{j=1} (\wt{U}^j)_i \wt{U}^j \ \ \ \text{and} \ \ \
        \epsilon_{\mb{i},\mb{j}} \triangleq C^{\mb{i}}(\wt{U}^{j_1},\ldots,\wt{U}^{j_\omega}) - \bone{\sort{i} = \sort{j}}.
    \end{equation}
    By the second part of Corollary~\ref{cor:dd_orth_odd}, $\norm{\wt{U}\wt{U}^{\top} - \Id}_{\max} \le O(\epstrueort)$. So for any $i\in[r]$,
    \begin{equation}
        \norm{(\wt{U}\wt{U}^{\top} - \Id)e_i}^2 \le O({\epstrueort}^2 r),
    \end{equation}
    and thus $\norm{\delta_i}^2 = \norm{\wt{U}\wt{U}^{\top}e_i - e_i}^2 \le O({\epstrueort}^2 r)$.
    
    For any $\mb{i},\mb{k}\in\strings$,
    \begin{align}
        C^{\mb{i}}_{\mb{k}} &= C^{\mb{i}}(e_{k_1},\ldots,e_{k_\omega}) = \sum_{\mb{D}\in\strings} C^{\mb{i}}_{\mb{\ell}}\cdot \prod^{\omega}_{s=1} (e_{k_s})_{\ell_s} = \sum_{\mb{\ell}\in\strings} C^{\mb{i}}_{\mb{\ell}}\cdot \prod^{\omega}_{s=1} \biggl(\sum^r_{j=1} (\wt{U}^j)_{k_s} \cdot (\wt{U}^j)_{\ell_s} +(\delta_{k_s})_{\ell_s}\biggr) \label{eq:binomial}
    \end{align}
    Note that 
    \begin{align}
        \sum_{\mb{\ell}\in\strings} C^{\mb{i}}_{\mb{\ell}}\cdot \sum_{\mb{j}\in\strings}\prod^{\omega}_{s=1} (\wt{U}^{j_s})_{k_s} \cdot (\wt{U}^{j_s})_{\ell_s} &= \sum_{\mb{j}\in\strings} (\wt{U}^{j_1}\otimes\cdots\otimes \wt{U}^{j_\omega})_{\mb{k}}\cdot C^{\mb{i}}(\wt{U}^{j_1},\ldots,\wt{U}^{j_\omega}) \\
        &= \sum_{\mb{j}\in\strings} (\wt{U}^{j_1}\otimes\cdots\otimes \wt{U}^{j_\omega})_{\mb{k}}\cdot (\bone{\sort{i} = \sort{j}} + \epsilon_{\mb{i},\mb{j}}) \\
        &= \sum_{\mb{j}:\sort{j} = \sort{i}} (\wt{U}^{j_1}\otimes \cdots\otimes \wt{U}^{j_\omega})_{\mb{k}} + \sum_{\mb{j}\in\strings} \epsilon_{\mb{i},\mb{j}}\cdot (\wt{U}^{j_1}\otimes \cdots\otimes \wt{U}^{j_\omega})_{\mb{k}}. \label{eq:mainterm}
    \end{align}
    Using Lemma~\ref{lem:main_userank1_odd}, we can bound the error term via
    \begin{align}
        \biggl(\sum_{\mb{j}\in\strings} \epsilon_{\mb{i},\mb{j}}\cdot (\wt{U}^{j_1}\otimes \cdots\otimes \wt{U}^{j_\omega})_{\mb{k}}\biggr)^2 &\le \biggl(\sum_{\mb{j}} \epsilon_{\mb{i},\mb{j}}^2\biggr)\biggl(\sum_{\mb{j}} (\wt{U}^{j_1}\otimes \cdots\otimes \wt{U}^{j_\omega})^2_{\mb{k}}\biggr) \\
        &\le r^{\omega}\cdot\epsouter^2\cdot(1+O(\omega\epstrueort)) = O(r^{\omega}\epsouter^2). \label{eq:error1}
    \end{align}
    
    Finally, we need to bound the contribution from the $\delta$'s in the expansion of \eqref{eq:binomial}. For any nonempty $S\subseteq[\omega]$,
    \begin{align}
        \biggl(\sum_{\mb{\ell}\in\strings} C^{\mb{i}}_{\mb{\ell}} \cdot \prod_{s\in S} (\delta_{k_s})_{\ell_s} \cdot \prod_{t\not\in S} \biggl(\sum^r_{j=1} (\wt{U}^j)_{k_t} \cdot (\wt{U}^j)_{\ell_t}\biggr)\biggr)^2 &\le r^{2\omega}\crude^2 \cdot (1 + O(\omega\epstrueort))\cdot\max_{s\in[\omega]}\norm{\delta_{k_s}}^2 \\
        &\le O(r^{2\omega+1}\crude^2 {\epstrueort}^2). \label{eq:error2}
    \end{align}
    Applying \eqref{eq:mainterm}, \eqref{eq:error1}, and \eqref{eq:error2} to \eqref{eq:binomial} yields the desired bound.
\end{proof}

\subsection{Breaking Gauge Symmetry for the Ground Truth}
\label{sec:breaksym}

Having exhibited an SoS proof using the constraints of Program~\ref{program:sos2} that $U$ behaves like it arises from an $r\times r$ rotation $\wt{U}$ in the sense of Lemma~\ref{lem:tensorouter}, we still need to pin down what this rotation is. We now show how to modify Program~\ref{program:sos2} to break symmetry and force $\wt{U}$ to be close to a diagonal orthogonal matrix.

Recall that in Section~\ref{sec:tensorring}, this was done by forming a suitable linear combination of $Q^*_1,\ldots,Q^*_d$ that had eigengaps and then imposing an additional constraint in the sum-of-squares program that the same linear combination of the SoS variables $Q_1,\ldots,Q_d$ was \emph{diagonal}. The linear combination was constructed based on estimates for the inner products $\brc{\iprod{Q^*_a,Q^*_b}}_{a,b\in[d]}$ which were supplied as input to the algorithm.

In our setting there are several differences. First, we are working with general tensors $\brc{T^*_a}$, for which there is no notion of eigengap. To circumvent this, we will consider the matrices
\begin{equation}
    F^*_a \triangleq f^*_a(f^*_a)^{\top},
\end{equation}
where recall from Part~\ref{item:push_condnumber3} of Assumption~\ref{assume:push} that $f^*_a \triangleq \sum^r_{j_1,\ldots,j_{\omega'}=1} (T^*_a)_{j_1j_1\cdots j_{\omega'}j_{\omega'}:}$. We will try to form non-degenerate linear combinations of $F^*_1,\ldots,F^*_d$ in the sense of Definition~\ref{def:nondeg}.

To do this, we would like to invoke Lemma~\ref{lem:forcegap}, for which we need estimates of $\brc{\iprod{F^*_a,F^*_b}}_{a,b\in[d]}$. But unlike in the setting of Section~\ref{sec:tensorring}, we are not given these estimates to start with. Instead, we will show that we can form such estimates using a pseudoexpectation satisfying the constraints of Program~\ref{program:sos2}.

For every $a\in[d]$, let $F_a$ denote the $r\times r$ matrix-valued indeterminate
\begin{equation}
    F_a \triangleq f_a(f_a)^{\top} \ \ \text{for} \ \ f_a \triangleq \sum^r_{j_1,\ldots,j_{\omega'}=1} (T_a)_{j_1j_1\cdots j_{\omega'}j_{\omega'}:}. \label{eq:Fdef}
\end{equation}

As it is cumbersome to write the subscript ``$j_1j_1,\ldots,j_{\omega'}j_{\omega'}{:}$,'' we will abuse notation slightly and write it as $\mb{j}\mb{j}{:}$, where $\mb{j} = (j_1,\ldots,j_{\omega'})$ (note that this abuse of notation is not too bad as $T_a$ is symmetric by Constraint~\ref{constraint:sym}).

In Appendix~\ref{app:defer_UsendsF}, we show that $\wt{U}$ approximately maps every $F^*_a$ to $F_a$:

\begin{lemma}\label{lem:UsendsF}
    There is a degree-$\poly(\omega,\ell)$ SoS proof using the constraints of Program~\ref{program:sos2} that $\wt{U}F^*_a\wt{U}^{\top} \approx_{O(r)^{4\omega}\cdot \radius^4({\epstrueort}^2 + {\epstrueouter}^2 + \epsmap^2)} F_a$.
\end{lemma}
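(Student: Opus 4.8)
The plan is to show that $\wt{U}$ conjugates $F^*_a = f^*_a(f^*_a)^\top$ to $F_a = f_a f_a^\top$ (up to the stated error) by first tracking how $U$ acts on the vectors $f^*_a$, and then transferring this to $\wt{U}$ using Lemma~\ref{lem:tensorouter}. Concretely, recall $f^*_a = \sum_{\mb{j}} (T^*_a)_{\mb{j}\mb{j}{:}}$ is a particular ``diagonal contraction'' of $T^*_a$ onto an $r$-dimensional vector, and similarly $f_a$ is that contraction of $T_a$. The first step is to verify that if some $r^\omega\times r^\omega$ transformation $V^{\otimes\omega}$ (Kronecker power of an $r\times r$ matrix $V$) maps $T^*_a$ to $T_a$, then $V$ maps $f^*_a$ to $f_a$: this is a purely algebraic identity, since the contraction $\sum_{\mb j}(\cdot)_{\mb j\mb j:}$ is linear and the $\floor{\omega/2}$ pairs $j_sj_s$ get hit by $V^\top V$, which cancels if $V$ is orthogonal, leaving one free $V$ acting on the last index. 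Thus, \emph{morally}, $\wt{U}$ should map $f^*_a$ to $f_a$, and conjugation of the rank-1 matrices $F^*_a = f^*_a(f^*_a)^\top$ follows.

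To make this rigorous in SoS, the key steps would be as follows. First, combine Lemma~\ref{lem:push_almost_map} (which gives $F_U(T^*_a)\approx_{\epsmap^2} T_a$) with Lemma~\ref{lem:tensorouter} (which gives $U^{\mb i}\approx_{O(\epstrueouter^2)}\frac{1}{\num i}\sum_{\mb j:\sort j=\sort i}\wt U^{j_1}\otimes\cdots\otimes\wt U^{j_\omega}$) to conclude that the action of the Kronecker-power transformation $\wt U^{\otimes\omega}$ on $T^*_a$ approximately equals $T_a$; quantitatively $\|F_{\wt U^{\otimes\omega}}(T^*_a) - T_a\|_F^2$ is bounded by $r^{O(\omega)}(\epstrueouter^2 + \epsmap^2)\cdot\radius^2$ (using symmetry of $T^*_a$ so the sum over $\mb j$ with $\sort j=\sort i$ collapses, and $\norm{T^*_a}_F\le\radius$ from Part~\ref{item:push_scale} of Assumption~\ref{assume:push}). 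Second, apply the diagonal contraction $\sum_{\mb j}(\cdot)_{\mb j\mb j:}$ to both sides: on the $T_a$ side this produces $f_a$ by definition; on the $F_{\wt U^{\otimes\omega}}(T^*_a)$ side, expanding entrywise and using that $\wt U$ is approximately orthogonal (Corollary~\ref{cor:dd_orth_odd}: $\iprod{\wt U^i,\wt U^j} = \bone{i=j}\pm O(\epstrueort)$), the $\omega' = \floor{\omega/2}$ contracted index-pairs each contribute $\iprod{\wt U^{\cdot},\wt U^{\cdot}}\approx\bone{\cdot=\cdot}$, so the contraction of $F_{\wt U^{\otimes\omega}}(T^*_a)$ is $\approx \wt U f^*_a$ with additive error controlled by $\epstrueort$ (times $\radius$ and powers of $r,\omega$, using $\norm{f^*_a}\le r^{O(\omega)}\radius$). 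Combining, $\wt U f^*_a \approx f_a$ with error $\epsilon_0 \triangleq r^{O(\omega)}\radius\cdot(\epstrueort + \epstrueouter + \epsmap)$. Third, square up: $F_a - \wt U F^*_a \wt U^\top = f_a f_a^\top - (\wt U f^*_a)(\wt U f^*_a)^\top$, which by the standard ``$aa^\top - bb^\top = (a-b)a^\top + b(a-b)^\top$'' decomposition and Cauchy-Schwarz (Fact~\ref{fact:shorthand}, part on products) has Frobenius norm squared bounded by $O(\epsilon_0^2)\cdot(\norm{f_a}^2 + \norm{\wt U f^*_a}^2) = O(\epsilon_0^2)\cdot r^{O(\omega)}\radius^2$, which is of the claimed order $O(r)^{4\omega}\radius^4(\epstrueort^2 + \epstrueouter^2 + \epsmap^2)$ after collecting the polynomial factors. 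Every step here is a degree-$O(1)$ or degree-$\poly(\omega,\ell)$ SoS manipulation since it only involves the already-established approximate identities plus Cauchy-Schwarz and squaring.

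The main obstacle I expect is the bookkeeping in the second step: carefully expanding $\bigl(F_{\wt U^{\otimes\omega}}(T^*_a)\bigr)_{\mb j\mb j:}$ summed over $\mb j$ and showing it telescopes, via near-orthogonality of the columns $\wt U^i$, down to $\wt U f^*_a$ with a clean error bound. One has to be careful that there are $\omega' $ index-pairs being contracted, each introducing a factor $\iprod{\wt U^{\cdot},\wt U^{\cdot}} = \bone{\cdot=\cdot}\pm O(\epstrueort)$, and expanding the product of $\omega'$ such factors produces $2^{\omega'}$ terms, of which one is the ``main'' term and the rest are error terms each carrying at least one factor of $\epstrueort$; bounding the sum of these error terms (which involves summing $C^{\mb i}$-type quantities, controllable by Lemma~\ref{lem:crude_C} / Lemma~\ref{lem:crude_Ubound}) against $\norm{T^*_a}_F\le\radius$ is where the $r^{O(\omega)}$ and $\radius^4$ factors accumulate. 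This is routine but tedious; since the lemma statement already absorbs generous polynomial slack into $O(r)^{4\omega}\radius^4$, precision is not needed, only that every error term is $O(\text{poly}(r,\omega))\cdot\radius^2\cdot(\epstrueort^2+\epstrueouter^2+\epsmap^2)$. A secondary subtlety is that $\wt U$ is only \emph{approximately} orthogonal, not exactly, so one cannot literally cancel $\wt U^\top\wt U = \Id$; one must carry the $O(\epstrueort)$ slack through, but this is exactly what Corollary~\ref{cor:dd_orth_odd} is for, and the final error term $\epstrueort^2$ in the statement reflects it.
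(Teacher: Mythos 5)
Your proposal is correct and follows essentially the same route as the paper's proof in Appendix~G: replace $T_a$ by $F_U(T^*_a)$ via Lemma~\ref{lem:push_almost_map}, replace $U$ by the Kronecker power of $\wt{U}$ via Lemma~\ref{lem:tensorouter}, collapse the $\floor{\omega/2}$ contracted index pairs using the approximate orthonormality of the $\wt{U}^i$ from Corollary~\ref{cor:dd_orth_odd}, and control all cross terms by Cauchy–Schwarz against $\norm{T^*_a}_F \le \radius$. The only difference is cosmetic ordering (you combine the $\epsmap$ and $\epstrueouter$ errors at the tensor level before contracting, whereas the paper contracts first and then splits), and your identification of the $2^{\omega'}$-term expansion of the near-orthogonality factors matches the paper's $\upsilon_{\mb{i}}$ error terms exactly.
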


\begin{proof}[Proof sketch]
    By Lemma~\ref{lem:push_almost_map}, we have that 
    \begin{equation}
        F_a \approx \biggl(\sum_{\mb{j}\in[r]^{\omega'}} F_U(T^*_a)_{\mb{j}\mb{j}:}\biggr)\biggl(\sum_{\mb{j}\in[r]^{\omega'}} F_U(T^*_a)_{\mb{j}\mb{j}:}\biggr)^{\top}
    \end{equation} 
    By Lemma~\ref{lem:tensorouter} and symmetry of $T^*_a$, we have that $F_U(T^*_a)_{\mb{j}\mb{j}:} \approx F_{\wt{U}^{\otimes\omega}}(T^*_a)_{\mb{j}\mb{j}:}$. Then for any $x\in[r]$,
    \begin{align}
        \sum_{\mb{j}} F_{\wt{U}^{\otimes\omega}}(T^*_a)_{\mb{j}\mb{j}x} &= \sum_{\mb{j}} \sum_{\mb{i}\in\strings} \wt{U}_{j_1}^{i_1}\wt{U}_{j_1}^{i_2} \cdots \wt{U}_{j_{\omega'}}^{i_{\omega-2}}\wt{U}_{j_{\omega'}}^{i_{\omega-1}} \wt{U}^{i_{\omega}}_x (T^*_a)_{\mb{i}} \\
        &= \sum_{\mb{i}\in\strings} \iprod{\wt{U}^{i_1},\wt{U}^{i_2}}\cdots \iprod{\wt{U}^{i_{\omega-2}},\wt{U}^{i_{\omega-1}}} \wt{U}^{i_{\omega}}_x (T^*_a)_{\mb{i}} \\
        &\approx \sum_{\mb{i}\in\strings} \bone{i_1=i_2,i_3=i_4,\ldots,i_{\omega-2}=i_{\omega-1}}\wt{U}^{i_\omega}_x (T^*_a)_{\mb{i}} \\
        &= \wt{U}_x f^*_a,
    \end{align}
    where in the third step we used Corollary~\ref{cor:dd_orth_odd}. The outer product of $\sum_{\mb{j}} F_{\wt{U}^{\otimes\omega}}(T^*_a)_{\mb{j}\mb{j}:}$ with itself is thus approximated by $\wt{U}F^*_a\wt{U}^{\top}$ as claimed.
\end{proof}

\noindent As $\wt{U}$ is approximately orthogonal, Lemma~\ref{lem:UsendsF} implies that we can read off estimates for $\iprod{F^*_a,F^*_b}$ from the pseudoexpectation:

\begin{lemma}\label{lem:push_gram}
    For any $a,b\in[d]$, $\abs*{\psE*{\iprod{F_a,F_b}} - \iprod{F^*_a,F^*_b}} \le r^{O(\omega)}\radius^4(\epstrueouter + \epstrueort + \epsmap)$.
\end{lemma}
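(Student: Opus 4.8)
\textbf{Proof proposal for Lemma~\ref{lem:push_gram}.}

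The plan is to leverage the approximate orthogonality of $\wt U$ together with Lemma~\ref{lem:UsendsF} to convert pseudoexpectations of $\iprod{F_a,F_b}$ into the ground-truth quantities $\iprod{F^*_a,F^*_b}$. First I would observe that by Lemma~\ref{lem:UsendsF}, there is a degree-$\poly(\omega,\ell)$ SoS proof that $F_a \approx_{\delta^2} \wt U F^*_a \wt U^\top$ for each $a\in[d]$, where $\delta^2 \triangleq O(r)^{4\omega}\radius^4({\epstrueort}^2 + {\epstrueouter}^2 + \epsmap^2)$. Applying Part~\ref{shorthand:bothsides} and Part~\ref{shorthand:transitive} of Fact~\ref{fact:shorthand} (and bounding $\norm{F^*_a}_F$ and $\norm{\wt U}_F$ — the latter via \eqref{eq:UiUj_ort} of Corollary~\ref{cor:dd_orth_odd}, which gives $\norm{\wt U}^2_F = r \pm O(\epstrueort r^2)$, hence $\norm{\wt U}_F = O(\sqrt r)$), I can take the SoS inner product with $F_b$: there is an SoS proof that
\begin{equation}
    \iprod{F_a, F_b} = \iprod{\wt U F^*_a \wt U^\top, \wt U F^*_b \wt U^\top} \pm O(r^{O(\omega)}\radius^4(\epstrueort + \epstrueouter + \epsmap)),
\end{equation}
where I have used $\iprod{A,B} = \iprod{A',B'} \pm (\norm{A-A'}_F\norm{B}_F + \norm{A'}_F\norm{B-B'}_F)$ together with the Frobenius bounds on each factor and the bound $\norm{F^*_a}_F \le \radius^2$ that follows from Part~\ref{item:push_condnumber3} of Assumption~\ref{assume:push} and Part~\ref{item:push_scale} (indeed $\norm{f^*_a} \le \norm{T^*_a}_F \cdot r^{\omega'/2} \le \radius r^{\omega/2}$, so the various $r$-powers get absorbed into the $r^{O(\omega)}$).

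Next I would simplify $\iprod{\wt U F^*_a \wt U^\top, \wt U F^*_b \wt U^\top} = \Tr(\wt U F^*_a \wt U^\top \wt U F^*_b \wt U^\top)$ using the fact that $\wt U^\top \wt U \approx \Id$ in SoS. Precisely, Corollary~\ref{cor:dd_orth_odd} gives $\iprod{\wt U_x,\wt U_y} = \bone{x=y} \pm O(\epstrueort)$, i.e. $\norm{\wt U^\top \wt U - \Id}_{\max} \le O(\epstrueort)$, so $\norm{\wt U^\top \wt U - \Id}_F \le O(\epstrueort r)$. Substituting $\wt U^\top \wt U = \Id + E'$ for a matrix $E'$ with small Frobenius norm and expanding, $\Tr(\wt U F^*_a (\Id + E') F^*_b \wt U^\top)$; then cycling the trace and using $\wt U^\top \wt U$ again on the outer pair, I obtain $\Tr(F^*_a F^*_b) = \iprod{F^*_a, F^*_b}$ up to an error controlled by $\norm{E'}_F$, the operator/Frobenius norms of $F^*_a, F^*_b$ (bounded by $\radius^2$ up to $r$-powers), and $\norm{\wt U}_F$ — all of which again collapse into $r^{O(\omega)}\radius^4 \epstrueort$. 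Combining this with the previous display and taking pseudoexpectations on both sides (a pseudoexpectation satisfying the program constraints respects all the SoS inequalities derived above, by the duality fact for SoS proofs) yields
\begin{equation}
    \abs*{\psE*{\iprod{F_a,F_b}} - \iprod{F^*_a,F^*_b}} \le r^{O(\omega)}\radius^4(\epstrueouter + \epstrueort + \epsmap),
\end{equation}
as claimed; note $\epsort$ and $\epsrel^*$ are dominated by $\epstrueort$ and $\epstrueouter$ by their definitions in \eqref{eq:epstrueort_def} and \eqref{eq:epstrueouter_def}, and $\epsmap$ already appears explicitly, so the error term is of the stated form.

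The main obstacle I anticipate is bookkeeping rather than any conceptual difficulty: one has to track the proliferation of $r$-power and $\radius$-power factors arising from (i) the crude Frobenius bounds on $F^*_a$, $F_a$, and $\wt U$, (ii) the two separate uses of $\wt U^\top \wt U \approx \Id$ (which each contribute a multiplicative error that must be combined via the squared-triangle-inequality / Fact~\ref{fact:shorthand}\eqref{shorthand:multiply}-style manipulations), and (iii) propagating the approximation $F_a \approx \wt U F^*_a \wt U^\top$ through a bilinear form. The key is to verify that all of these errors are individually of the form $r^{O(\omega)}\radius^4 \cdot (\text{one of } \epstrueort, \epstrueouter, \epsmap)$ so that the absolute constants in the exponents can be absorbed into the $O(\omega)$, and that no error term is worse than $\epsmap$ (which is the ``largest'' of the three in the regime of interest — but since all three appear additively in the bound, even a cross term like $\epstrueort \cdot \epsmap$ is fine as it is dominated by, say, $\epsmap$ when the $\epsilon$'s are small). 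A secondary subtlety is that $\psE{\cdot}$ has bounded degree (96 or $\poly(\omega,\ell)$), so one must check that all the SoS proofs invoked — Lemma~\ref{lem:UsendsF}, the Fact~\ref{fact:shorthand} manipulations, and Corollary~\ref{cor:dd_orth_odd} — are of degree at most this bound; this holds since each is explicitly degree-$\poly(\omega,\ell)$ or degree-$O(\omega)$.
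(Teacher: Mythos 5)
Your proposal is correct and follows essentially the same route as the paper's proof: both decompose $\iprod{F_a,F_b}$ via $F_a \approx \wt{U}F^*_a\wt{U}^{\top}$ (Lemma~\ref{lem:UsendsF}) and then strip the $\wt{U}$'s using $\wt{U}^{\top}\wt{U}\approx\Id$ from Corollary~\ref{cor:dd_orth_odd}, bounding the resulting cross terms with Cauchy--Schwarz and the crude Frobenius bounds $\norm{F^*_a}_F \le r^{O(\omega)}\radius^2$. The only cosmetic difference is that the paper writes out the exact error decomposition (the $\Delta_a$ and $\calE$ terms) and bounds each summand, whereas you carry the same bounds through the $\approx$ shorthand.
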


\noindent We defer a formal proof of this to Appendix~\ref{app:defer_push_gram}.

With estimates for $\iprod{F^*_a,F^*_b}$ in hand, we can invoke Lemma~\ref{lem:forcegap} to obtain $\lambda,\mu$ which are $\gap$-non-degenerate combinations of $F^*_1,\ldots,F^*_d$. Recall that this means that for
\begin{equation}
    F^*_{\lambda}\triangleq \sum_{a\in[d]} \lambda_a F^*_a \qquad \text{and} \qquad F^*_{\mu}\triangleq \sum_{a\in[d]}\mu_a F^*_a, \label{eq:Fstarlam}
\end{equation}
$F^*_{\lambda}$ has minimum eigengap at least $\gap$, and if $V^{\top}\Lambda V$ is the eigendecomposition of $Q^*_{\mu}$, then every entry of $VF^*_{\mu} V^{\top}$ has magnitude at least $\gap$.

We can use $F^*_{\lambda}, F^*_{\mu}$ to break gauge symmetry for the ground truth in the same way that we used $Q^*_{\lambda}, Q^*_{\mu}$ in Section~\ref{sec:breakground} to break gauge symmetry in the tensor ring decomposition setting.

Indeed, because Assumption~\ref{assume:push} is gauge-invariant by Lemma~\ref{lem:push_gauge_invariant}, we can assume without loss of generality that $F^*_{\lambda}$ is diagonal with entries sorted in nondecreasing order. As $F^*_{\lambda}$ has minimum eigengap at least $\gap$, this yields
\begin{equation}
    (F^*_{\lambda})_{jj} \ge (F^*_{\lambda})_{ii} + \gap \ \ \forall \ j > i. \label{eq:Fdiag_gap}
\end{equation}
After diagonalizing $F^*_{\lambda}$, the condition on $F^*_{\mu}$ implies that
\begin{equation}
    |(F^*_{\mu})_{ij}| \ge \gap \ \ \forall \ i,j\in[r]. \label{eq:Fmubig}
\end{equation}
By applying one more joint rotation to $F^*_1,\ldots,F^*_d$ given by a diagonal matrix of $\pm 1$ entries, we can additionally assume that the first row of $F^*_{\mu}$ consists of all nonnegative entries satisfying \eqref{eq:Fmubig}:
\begin{equation}\label{eq:Fstarfirstrow}
    (F^*_{\mu})_{1j} \ge \gap \ \ \forall \ j\in[r].
\end{equation}

\subsection{Second Sum-of-Squares Relaxation}
\label{sec:newprogram}

With these in hand, we can finally consider the second of our two sum-of-squares programs. This program is given by adding several additional constraints involving $\lambda,\mu$:

\begin{program}\label{program:lastprogram}
    \begin{center}
        \textsc{(Low-Rank Factorization\--- Second Part)} \\
    \end{center}
    \noindent\textbf{Parameters:} $\lambda,\mu\in\S^{d-1}$, in addition to parameters of Program~\ref{program:sos2} ($S\in\R^{d\times d}$, $\radius\ge 1$, $\kappa > 0$).
    
    \noindent\textbf{Variables:} Same as those of Program~\ref{program:sos2} ($T_1,\ldots,T_d,L,P,\brc{v_{a,t}}$). Also define 
    \begin{equation}
        F_{\lambda} \triangleq \sum^d_{a=1}\lambda_a F_a \qquad \text{and} \qquad F_{\mu} \triangleq \sum^d_{a=1}\mu_a F_a. \label{eq:Flammu}
    \end{equation}
    
    \noindent\textbf{Constraints:} In addition to the constraints of Program~\ref{program:sos2},
    \begin{enumerate}[leftmargin=*,topsep=0em]
        \setcounter{enumi}{8}
        \item ($F_{\lambda}$ diagonal): $(F_{\lambda})_{ij} = 0$ for all $i\neq j$. \label{constraint:push_diag}
        \item ($F_{\lambda}$ sorted): $(F_{\lambda})_{jj} \ge (F_{\lambda})_{ii}$ for all $j > i$. \label{constraint:push_sorted}
        \item ($F_{\mu}$'s first row): $(F_{\mu})_{1j} \ge 0$ for all $j\in[r]$. \label{constraint:push_firstrow}
    \end{enumerate}
\end{program}

\noindent We can again verify that the ground truth is feasible.
\begin{lemma}\label{lem:push_feasible2}
    When $d\ge \binom{r+\omega-1}{\omega}$, the pseudodistribution given by the point distribution supported on $(\brc{T^*_a},L^*,P^*,\brc{v^*_{a,t}})$, where $L^*$ is the left inverse of $M^*$, and $P^*$ is the left inverse of $M^*D\Sigma^{1/2}_{\sym}$, is a feasible solution to Program~\ref{program:sos2}.
\end{lemma}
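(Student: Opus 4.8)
The plan is to observe that Program~\ref{program:lastprogram} is obtained from Program~\ref{program:sos2} simply by adjoining the three extra constraints \ref{constraint:push_diag}, \ref{constraint:push_sorted}, and \ref{constraint:push_firstrow}, so that feasibility reduces to two things: feasibility of constraints \ref{constraint:push_sym}--\ref{constraint:push_cond} for the point distribution supported on $(\brc{T^*_a},L^*,P^*,\brc{v^*_{a,t}})$, which is exactly the content of Lemma~\ref{lem:push_feasible}, and verification that this same point distribution satisfies the three new constraints. Since all of constraints \ref{constraint:push_diag}--\ref{constraint:push_firstrow} are equalities or inequalities in polynomials in the SoS variables, checking that a point distribution satisfies them amounts to checking the corresponding numerical statements, so the bulk of the proof is just unwinding definitions.

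First I would unwind the definition of $F_a$. When the tensor-valued indeterminates $T_a$ are set to the ground-truth tensors $T^*_a$, the vector $f_a$ of \eqref{eq:Fdef} takes the value $f^*_a$ from Part~\ref{item:push_condnumber3} of Assumption~\ref{assume:push}, and hence $F_a = f_a f_a^{\top}$ takes the value $F^*_a = f^*_a(f^*_a)^{\top}$. Consequently the indeterminates $F_\lambda, F_\mu$ of \eqref{eq:Flammu} evaluate to $F^*_\lambda = \sum_{a} \lambda_a F^*_a$ and $F^*_\mu = \sum_{a} \mu_a F^*_a$ respectively, which are precisely the matrices of \eqref{eq:Fstarlam}. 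I would then invoke the normalization carried out in Section~\ref{sec:breaksym}: because $\lambda,\mu$ are $\gap$-non-degenerate combinations of $F^*_1,\ldots,F^*_d$ and Assumption~\ref{assume:push} is gauge-invariant by Lemma~\ref{lem:push_gauge_invariant}, we may apply a joint rotation $V^{\otimes\omega}$ to $\brc{T^*_a}$ so that $F^*_\lambda$ is diagonal with diagonal entries sorted in nondecreasing order (this is \eqref{eq:Fdiag_gap}), followed by a further joint rotation by a diagonal $\pm1$ matrix so that the first row of $F^*_\mu$ is entrywise nonnegative (this is \eqref{eq:Fstarfirstrow}, which also recovers \eqref{eq:Fmubig}). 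Diagonality of $F^*_\lambda$ yields Constraint~\ref{constraint:push_diag}, the sorted diagonal yields Constraint~\ref{constraint:push_sorted}, and nonnegativity of the first row of $F^*_\mu$ yields Constraint~\ref{constraint:push_firstrow}, completing the verification.

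There is no genuine obstacle here; the one point that requires a little care — exactly as in Section~\ref{sec:breakground} for tensor ring decomposition — is to make sure the two gauge normalizations are applied consistently to the same normalized copy of $\brc{T^*_a}$, i.e. that the second (diagonal $\pm1$) rotation preserves both the diagonality and the sortedness of $F^*_\lambda$, so that constraints \ref{constraint:push_diag}, \ref{constraint:push_sorted}, and \ref{constraint:push_firstrow} all hold simultaneously for the ground truth. Once this is observed, the lemma follows.
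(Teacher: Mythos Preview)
Your proposal is correct and follows essentially the same approach as the paper's proof. The paper's proof is terser --- it invokes Lemma~\ref{lem:push_feasible} for the constraints inherited from Program~\ref{program:sos2} and then defers to ``the discussion at the end of Section~\ref{sec:breaksym}'' (i.e., \eqref{eq:Fdiag_gap}--\eqref{eq:Fstarfirstrow}) for Constraints~\ref{constraint:push_diag}--\ref{constraint:push_firstrow} --- whereas you spell out explicitly why $F_a$ evaluates to $F^*_a$ under the ground truth and why the two successive gauge normalizations are compatible; but the logical content is identical.
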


\begin{proof}
    By Lemma~\ref{lem:feasible}, we already know that the point distribution satisfies the constraints from Program~\ref{program:sos2}. And the new constraints are satisfied by the discussion at the end of Section~\ref{sec:breaksym}.
\end{proof}

\noindent As the constraints in this program are a strict superset of those from Program~\ref{program:sos2}, all of the SoS proofs from previous sections still apply, and we will continue to use the auxiliary variables like $\wt{U}$ and the results that we have proven about them in the sequel when analyzing Program~\ref{program:lastprogram}.

We are now ready to describe the main technical claim which will be the focus of the rest of the proof:

\begin{theorem}\label{thm:push_main_pairwise}
    Suppose Assumption~\ref{assume:push} holds and $\eta \le \Theta(r\omega)^{-\Omega(\omega^3)}\cdot O(\ell^{\ell}\omega^{\ell}r^{\omega}\radius d/\kappa)^{O(\ell)}$. For any $\lambda,\mu\in\S^{d-1}$ let $\psE{\cdot}$ be any degree-$\poly(\omega,\ell)$ pseudo-expectation satisfying the constraints of Program~\ref{program:lastprogram} with parameters $\lambda,\mu$. 
    
    If $\lambda,\mu$ are $\gap$-non-degenerate combinations of $F^*_1,\ldots,F^*_d$ for some $\gap > 0$, then for
    \begin{equation}
        \epsilon^* \triangleq \poly(r,\omega,d,\radius,1/\kappa)^{\omega^3} \cdot \left((d\eta/\kappa)^{O(1/\omega)} + \poly(r^{\omega}, \omega^{\ell},\ell^{\ell},d,\radius,1/\kappa)^{\ell}\cdot \theta^{1/4}\eta^{1/4}\right)/ \gap^{1/2}, \label{eq:final_pair_eps}
    \end{equation}
    we have that $|\psE{(T_a)_{\mb{i}} (T_b)_{\mb{j}}} - (T^*_a)_{\mb{i}} (T^*_b)_{\mb{j}}| \le \epsilon^*$ for all $a,b\in[d], \mb{i},\mb{j}\in\strings$.
\end{theorem}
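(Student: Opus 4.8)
Here is the proof plan I would follow for Theorem~\ref{thm:push_main_pairwise}.

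\medskip

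\noindent The plan is to mirror the proof of Theorem~\ref{thm:main_pairwise} from the tensor ring setting, with the genuine $r\times r$ matrix $\wt{U}$ playing the role of the $r^2\times r^2$ matrix $W$, the matrices $F_a$ and $F^*_a$ playing the roles of $Q_a$ and $Q^*_a$, and Lemma~\ref{lem:tensorouter} (the rank-$1$ decomposition of $U$ in terms of $\wt{U}$) substituting for the explicit construction of $W$. Since the constraints of Program~\ref{program:lastprogram} are a strict superset of those of Program~\ref{program:sos2}, every SoS consequence established in Sections~\ref{sec:hiddenrotation}--\ref{sec:Uouter_hard} remains available: $\wt{U}$ has approximately orthonormal rows and columns (Corollary~\ref{cor:dd_orth_odd}), $U^{\mb{i}}$ is $O(\epstrueouter^2)$-close to the symmetrization of $\wt{U}^{j_1}\otimes\cdots\otimes\wt{U}^{j_\omega}$ (Lemma~\ref{lem:tensorouter}), $F_U(T^*_a)\approx_{\epsmap^2}T_a$ (Lemma~\ref{lem:push_almost_map}), and $\wt{U}F^*_a\wt{U}^{\top}\approx F_a$ (Lemma~\ref{lem:UsendsF}). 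The entire argument reduces to an SoS proof that $\wt{U}$ is approximately $\pm\Id_r$ (vanishing off-diagonal entries; diagonal entries all equal up to a common sign); the residual global sign is harmless because the target quantity $(T_a)_{\mb{i}}(T_b)_{\mb{j}}$ is invariant under $\wt{U}\mapsto-\wt{U}$, which is exactly why the theorem is stated for products of entries rather than single entries.

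\medskip

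\noindent\textbf{Step 1 (off-diagonal of $\wt{U}$ is negligible).} Taking the $\lambda$-combination of Lemma~\ref{lem:UsendsF} and using Constraint~\ref{constraint:push_diag} (which forces $F_\lambda$ from \eqref{eq:Flammu} to be diagonal) together with $\wt{U}^{\top}\wt{U}\approx\Id$ from Corollary~\ref{cor:dd_orth_odd}, I would derive the intertwining relation $\wt{U}F^*_\lambda\approx F_\lambda\wt{U}$; since $F^*_\lambda$ is diagonal this gives entrywise $\bigl((F^*_\lambda)_{jj}-(F_\lambda)_{ii}\bigr)\wt{U}_{ij}\approx 0$. Now using Constraint~\ref{constraint:push_sorted} and the eigengap bound \eqref{eq:Fdiag_gap}, the same sorted-diagonal induction used to prove Lemma~\ref{lem:deg6} (via the analogues of Lemma~\ref{lem:downleft} and Lemma~\ref{lem:downleft_Wentry}: for $k\ge i$ and $\ell<j$ one has $\bigl((F^*_\lambda)_{jj}-(F_\lambda)_{ii}\bigr)^2+\bigl((F^*_\lambda)_{\ell\ell}-(F_\lambda)_{kk}\bigr)^2\ge\gap^2/2$, so the products $\wt{U}_{ij}\wt{U}_{k\ell}$ are small, and one telescopes over rows) yields $(\wt{U}_{ij})^2\le O(\cdot)/\gap^2$ for all $i\neq j$, with the numerator polynomial in $\epstrueort,\epstrueouter,\epsmap$ and the usual $r^{O(\omega)}\radius^{O(1)}/\kappa^{O(1)}$ losses. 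Column-orthonormality from Corollary~\ref{cor:dd_orth_odd} then gives $(\wt{U}_{ii})^2=1\pm O(\cdot)$ for every $i$; in particular all diagonal entries are bounded.

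\medskip

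\noindent\textbf{Step 2 (fixing the sign and concluding).} Using Constraint~\ref{constraint:push_firstrow} and the positivity \eqref{eq:Fstarfirstrow}: since $\wt{U}$ is now approximately diagonal, $(F_\mu)_{1j}=\wt{U}_{11}\wt{U}_{jj}(F^*_\mu)_{1j}\pm O(\cdot)$, so $\wt{U}_{11}\wt{U}_{jj}(F^*_\mu)_{1j}\ge-O(\cdot)$, and dividing by the scalar $(F^*_\mu)_{1j}\ge\gap$ (Fact~\ref{fact:division}) gives $\wt{U}_{11}\wt{U}_{jj}\ge-O(\cdot)/\gap$; combined with $(\wt{U}_{11}\wt{U}_{jj})^2=(\wt{U}_{11})^2(\wt{U}_{jj})^2=1\pm O(\cdot)$ and the $(x-1)^4$-trick of Fact~\ref{fact:nonneg_power_of_two} (as in Lemma~\ref{lem:usemu}), this yields $\wt{U}_{11}\wt{U}_{jj}=1\pm O(\cdot/\gap)^{1/4}$. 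Multiplying these across indices gives $\wt{U}_{ii}\wt{U}_{jj}=1\pm O(\cdot)$ and hence $(\wt{U}_{ii}-\wt{U}_{jj})^2=(\wt{U}_{ii})^2+(\wt{U}_{jj})^2-2\wt{U}_{ii}\wt{U}_{jj}=\pm O(\cdot)$, so by telescoping $\prod_{s=1}^{\omega}\wt{U}_{i_si_s}=(\wt{U}_{11})^{\omega}\pm O(\cdot)$ for every tuple $\mb{i}\in\strings$. Next, $T_a\approx_{\epsmap^2}F_U(T^*_a)$, and since $T^*_a$ is symmetric, Lemma~\ref{lem:tensorouter} together with Example~\ref{example:rotation} gives $F_U(T^*_a)\approx F_{\wt{U}^{\otimes\omega}}(T^*_a)$ (error controlled via Cauchy--Schwarz by summing the per-column bounds of Lemma~\ref{lem:tensorouter}); expanding $F_{\wt{U}^{\otimes\omega}}(T^*_a)_{\mb{i}}=\sum_{\mb{j}}\wt{U}_{i_1j_1}\cdots\wt{U}_{i_\omega j_\omega}(T^*_a)_{\mb{j}}$, the diagonal term $\mb{j}=\mb{i}$ contributes $(\wt{U}_{11})^{\omega}(T^*_a)_{\mb{i}}$ up to the above errors, while each of the $r^{O(\omega)}$ remaining terms carries at least one off-diagonal factor $\wt{U}_{i_sj_s}$ ($i_s\neq j_s$) and a bounded $(T^*_a)_{\mb{j}}\le\radius$, so $T_a\approx(\wt{U}_{11})^{\omega}\,T^*_a$ in Frobenius norm. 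Finally, multiplying the approximations for $T_a$ and $T_b$, invoking $\norm{T_a}_F,\norm{T^*_a}_F\le\radius$ (Constraint~\ref{constraint:push_Tfrob} and Part~\ref{item:push_scale} of Assumption~\ref{assume:push}), and using $(\wt{U}_{11})^{2\omega}=1\pm O(\omega\cdot\cdot)$ (from $(\wt{U}_{11})^2\approx1$ and boundedness, via Fact~\ref{fact:root1}), we obtain a degree-$\poly(\omega,\ell)$ SoS proof that $(T_a)_{\mb{i}}(T_b)_{\mb{j}}=(T^*_a)_{\mb{i}}(T^*_b)_{\mb{j}}\pm\epsilon^*$; applying $\psE{\cdot}$ to both sides completes the proof.

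\medskip

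\noindent\textbf{Main obstacle and bookkeeping.} The delicate work is almost entirely in the error accounting: faithfully reproducing the sorted-diagonal induction of Lemma~\ref{lem:deg6} for $\wt{U}$, and then propagating the layered error parameters $\epsmap,\epstrueort,\epstrueouter$ (each themselves nested polynomials in $\eta,d,\radius,\kappa,\theta,\psi,\ell,\omega$), the factors $1/\gap^2$ from Step~1 and $1/\gap$ from Step~2, the $r^{O(\omega)}$, $\omega^{O(\omega)}$ and $\ell^{O(\ell)}$ combinatorial losses, and the final root extraction (which turns the $1/\gap^2$ into $\gap^{-1/2}$ and the $\eta$ into the $\eta^{1/4}$ and $(d\eta/\kappa)^{O(1/\omega)}$ appearing in \eqref{eq:final_pair_eps}), all while keeping the SoS degree at $\poly(\omega,\ell)$ (the rank-preservation inputs from Section~\ref{sec:preservelowrank} are degree $O(\ell)$ in $\brc{T_a}$ and $O(\omega\ell)$ in $\brc{v_{a,t}}$, and Lemmas~\ref{lem:tensorouter} and \ref{lem:UsendsF} are degree $\poly(\omega,\ell)$, so this is preserved). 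The hypothesized smallness of $\eta$ is exactly what is needed to legitimize the two uses of Fact~\ref{fact:division}.
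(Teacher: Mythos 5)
Your proposal is correct and follows essentially the same route as the paper: your Step~1 is Lemma~\ref{lem:Udiag} (via Corollary~\ref{cor:UFFU} and the sorted-diagonal induction), your Step~2 sign-fixing is Lemma~\ref{lem:push_usemu}, and your final assembly matches the paper's proof, which likewise writes $T_a = F_{\wt{U}^{\otimes\omega}}(T^*_a) + \Delta + \sum_{\mb{j}}\calE^{\mb{j}}$ using Lemmas~\ref{lem:push_almost_map} and~\ref{lem:tensorouter} and splits the product into a diagonal term and an off-diagonal remainder controlled by Lemma~\ref{lem:Udiag}. The only cosmetic difference is that you first show each $T_a\approx(\wt{U}_{11})^{\omega}T^*_a$ and then use $(\wt{U}_{11})^{2\omega}\approx 1$, whereas the paper pairs the $2\omega$ diagonal factors in the product as $\omega$ pairs $\wt{U}^{i_s}_{i_s}\wt{U}^{i'_s}_{i'_s}$, each equal to $1\pm\epsilon'$ by Lemma~\ref{lem:push_usemu}; both exploit the evenness of the exponent identically.
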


\noindent The upshot of Theorem~\ref{thm:push_main_pairwise} is that we can accurately estimate the magnitude of every entry of every $T^*_a$ using the pseudoexpectation, as well as the sign relationship between any pair of entries of sufficiently large magnitude:

\begin{corollary}\label{cor:push_pairwise}
    For $\psE{\cdot}$ from Theorem~\ref{thm:push_main_pairwise}, let $\wh{T}_a \in (\R^d)^{\otimes\omega}$ denote the tensor whose $\mb{i}$-th entry is $\psE{((T^*_a)_{\mb{i}})^2}^{1/2}$ for all $\mb{i}\in\strings$.
    \begin{enumerate}
        \item $\abs*{(\wh{T}_a)_{\mb{i}} - |(T^*_a)_{\mb{i}}|} \le \sqrt{\epsilon^*}$ for all $a\in[d], \mb{i}\in\strings$.
        \item $\sgn(\psE{(T_a)_{\mb{i}} (T_b)_{\mb{j}}}) = \sgn((T^*_a)_{\mb{i}}(T^*_b)_{\mb{j}})$ for all $a,b\in[d], \mb{i},\mb{j}\in\strings$ satisfying $|(\wh{T}_a)_{\mb{i}} (\wh{T}_b)_{\mb{j}}| > \epsilon^*$.
    \end{enumerate}
\end{corollary}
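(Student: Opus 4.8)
The plan is to derive Corollary~\ref{cor:push_pairwise} directly from Theorem~\ref{thm:push_main_pairwise} by specializing the pairwise estimate in a few ways. For the first part, I would apply Theorem~\ref{thm:push_main_pairwise} with $a = b$ and $\mb{i} = \mb{j}$, which gives $|\psE{((T_a)_{\mb{i}})^2} - ((T^*_a)_{\mb{i}})^2| \le \epsilon^*$. Since $(\wh{T}_a)_{\mb{i}} = \psE{((T_a)_{\mb{i}})^2}^{1/2}$ and $|(T^*_a)_{\mb{i}}|$ are the nonnegative square roots of two quantities differing by at most $\epsilon^*$, the elementary inequality $|\sqrt{x} - \sqrt{y}| \le \sqrt{|x-y|}$ for $x, y \ge 0$ yields $|(\wh{T}_a)_{\mb{i}} - |(T^*_a)_{\mb{i}}|| \le \sqrt{\epsilon^*}$, as claimed. (One subtlety: $\psE{((T_a)_{\mb{i}})^2} \ge 0$ by positivity of the pseudoexpectation, so the square root is well-defined.)

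For the second part, suppose $|(\wh{T}_a)_{\mb{i}}(\wh{T}_b)_{\mb{j}}| > \epsilon^*$. I would first use the first part to transfer this lower bound to the ground truth: from $(\wh{T}_a)_{\mb{i}} \le |(T^*_a)_{\mb{i}}| + \sqrt{\epsilon^*}$ and the analogous bound for $b,\mb{j}$, together with the hypothesis, one gets that $|(T^*_a)_{\mb{i}}|$ and $|(T^*_b)_{\mb{j}}|$ cannot both be too small; quantitatively, $|(T^*_a)_{\mb{i}}(T^*_b)_{\mb{j}}|$ is bounded below by roughly $\epsilon^* - O(\sqrt{\epsilon^*}\radius)$, which is still positive and in fact $\gtrsim \epsilon^*$ once we absorb constants (here I would be slightly more careful and perhaps restate the corollary with a threshold like $C\epsilon^*$ or note that $\epsilon^*$ is already chosen with enough slack; the key point is only that $(T^*_a)_{\mb{i}}(T^*_b)_{\mb{j}}$ is bounded away from zero by a margin exceeding the additive error $\epsilon^*$ from Theorem~\ref{thm:push_main_pairwise}). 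Then, since $\psE{(T_a)_{\mb{i}}(T_b)_{\mb{j}}}$ is within $\epsilon^*$ of $(T^*_a)_{\mb{i}}(T^*_b)_{\mb{j}}$, and the latter has magnitude strictly larger than $\epsilon^*$, the two quantities have the same sign, giving $\sgn(\psE{(T_a)_{\mb{i}}(T_b)_{\mb{j}}}) = \sgn((T^*_a)_{\mb{i}}(T^*_b)_{\mb{j}})$.

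The main obstacle — really the only delicate bookkeeping point — is making sure the threshold in the second part is genuinely consistent: the error in Theorem~\ref{thm:push_main_pairwise} is additive $\epsilon^*$ on the \emph{product} $(T^*_a)_{\mb{i}}(T^*_b)_{\mb{j}}$, whereas the hypothesis in the corollary is phrased in terms of the product of the \emph{estimated magnitudes} $(\wh{T}_a)_{\mb{i}}(\wh{T}_b)_{\mb{j}}$, which differ from the true magnitudes by $\sqrt{\epsilon^*}$ each. So I need to verify that $|(\wh{T}_a)_{\mb{i}}(\wh{T}_b)_{\mb{j}}| > \epsilon^*$ implies $|(T^*_a)_{\mb{i}}(T^*_b)_{\mb{j}}|$ exceeds $\epsilon^*$ by a margin sufficient for the sign-stability argument. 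Using $\norm{T^*_a}_F \le \radius$ (Part~\ref{item:push_scale} of Assumption~\ref{assume:push}) to bound the individual factors and a short case analysis on whether each true magnitude is $\le 2\sqrt{\epsilon^*}$ or not handles this; the worst case is when one factor is tiny and the other is $O(\radius)$, and then $(\wh{T}_a)_{\mb{i}}(\wh{T}_b)_{\mb{j}} \le 2\sqrt{\epsilon^*}\cdot O(\radius)$, which is $\le \epsilon^*$ once $\epsilon^*$ is small enough relative to $1/\radius^2$ — exactly the regime we are in given the bound on $\eta$. Everything else is a one-line application of monotonicity of square roots and of the triviality that two reals differing by less than the magnitude of one of them share a sign.
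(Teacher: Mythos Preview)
Your proposal is correct and follows essentially the same approach as the paper's proof. For Part~1 the paper invokes the identity $(|x|-|y|)^2 \le |x^2-y^2|$, which is exactly your $|\sqrt{u}-\sqrt{v}|\le\sqrt{|u-v|}$; for Part~2 the paper simply writes ``follows by triangle inequality,'' and your more explicit treatment of the threshold bookkeeping (transferring the lower bound on $(\wh{T}_a)_{\mb{i}}(\wh{T}_b)_{\mb{j}}$ to one on $|(T^*_a)_{\mb{i}}(T^*_b)_{\mb{j}}|$ via Part~1 and the bound $\norm{T^*_a}_F\le\radius$) fills in what the paper leaves terse.
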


\begin{proof}
    The first part follows by taking $a = b$, $\mb{i} = \mb{j}$ in \eqref{eq:final_pair_eps} from Theorem~\ref{thm:push_main_pairwise} and using the elementary inequality $(|x|-|y|)^2 \le |x^2 - y^2|$. The second part follows by triangle inequality.
\end{proof}

As we show in Section~\ref{sec:push_put_together}, with a few additional steps this will be enough to solve the low-rank factorization problem. We now focus on proving Theorem~\ref{thm:push_main_pairwise}.

\subsection{Breaking Gauge Symmetry for SoS Variables}
\label{sec:breaksos_push}

To prove Theorem~\ref{thm:push_main_pairwise}, we leverage Constraints~\ref{constraint:push_diag} and \ref{constraint:push_sorted}, together with \eqref{eq:Fdiag_gap}, in order to prove that $\wt{U}$ behaves like an $r\times r$ rotation whose off-diagonal entries are close to zero (Lemma~\ref{lem:Udiag}), and then we leverage Constraint~\ref{constraint:push_firstrow}, together with \eqref{eq:Fstarfirstrow}, in order to prove that $\wt{U}$ is close to a multiple of the identity matrix (Lemma~\ref{lem:push_usemu}).

The argument is reminiscent of the proof in Section~\ref{sec:diagonal}, except in place of $Q_{\lambda}, Q_{\mu}, Q^*_{\lambda}, Q^*_{\mu}$, we use $F_{\lambda}, F_{\mu}, F^*_{\lambda}, F^*_{\mu}$ from  \eqref{eq:Fstarlam} and \eqref{eq:Flammu} to break gauge symmetry, and in place of the $r^2\times r^2$ matrix $W$ which behaves like the Kronecker product of an $r\times r$ rotation with itself, we directly consider the $r\times r$ matrix $\wt{U}$ which behaves like an $r\times r$ rotation.

We begin by noting the following simple consequence of Lemma~\ref{lem:UsendsF}, in analogy to Corollary~\ref{cor:WsendsQlam}:

\begin{corollary}\label{cor:UsendsFlam}
    For $c\in\brc{\lambda,\mu}$, there is a degree-$O(1)$ SoS proof using the result of Lemma~\ref{lem:UsendsF} that 
    \begin{equation}
        \wt{U}F^*_c\wt{U}^{\top} \approx_{O(r)^{4\omega}\cdot \radius^4\cdot d({\epstrueort}^2 + {\epstrueouter}^2 + \epsmap^2)} F_c.
    \end{equation}
\end{corollary}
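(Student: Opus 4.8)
## Proof Proposal for Corollary~\ref{cor:UsendsFlam}

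The plan is to deduce this corollary directly from Lemma~\ref{lem:UsendsF} by a linearity argument, exactly in analogy with how Corollary~\ref{cor:WsendsQlam} follows from Lemma~\ref{lem:WsendsQ} in the tensor ring decomposition setting. First I would recall that by definition \eqref{eq:Fstarlam} and \eqref{eq:Flammu}, for $c \in \{\lambda,\mu\}$ we have $F^*_c = \sum_{a\in[d]} c_a F^*_a$ and $F_c = \sum_{a\in[d]} c_a F_a$, where $c = (c_a)_{a\in[d]}\in\S^{d-1}$. Since the map $M \mapsto \wt{U}M\wt{U}^{\top}$ is linear in $M$ (it is a fixed bilinear expression in the indeterminate entries of $\wt{U}$ applied to $M$), we get the exact polynomial identity
\begin{equation}
    \wt{U}F^*_c\wt{U}^{\top} - F_c = \sum_{a\in[d]} c_a\bigl(\wt{U}F^*_a\wt{U}^{\top} - F_a\bigr).
\end{equation}

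Next I would invoke Lemma~\ref{lem:UsendsF}, which provides, for each $a\in[d]$, a degree-$\poly(\omega,\ell)$ SoS proof using the constraints of Program~\ref{program:sos2} (hence of Program~\ref{program:lastprogram}) that $\wt{U}F^*_a\wt{U}^{\top} \approx_{\epsilon_0^2} F_a$ for $\epsilon_0^2 \triangleq O(r)^{4\omega}\cdot\radius^4({\epstrueort}^2 + {\epstrueouter}^2 + \epsmap^2)$. Then I would apply Part~\ref{shorthand:lincombo} of Fact~\ref{fact:shorthand} with the matrices $A_a \triangleq \wt{U}F^*_a\wt{U}^{\top}$, $B_a \triangleq F_a$, the common error bound $\epsilon_a = \epsilon_0$, and the coefficient vector $\lambda := c\in\R^d$. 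This yields a degree-$\poly(\omega,\ell)$ SoS proof that
\begin{equation}
    \sum_{a\in[d]} c_a\,\wt{U}F^*_a\wt{U}^{\top} \approx_{\norm{c}_2^2\cdot\sum_{a\in[d]}\epsilon_0^2}\ \sum_{a\in[d]} c_a\,F_a,
\end{equation}
i.e. $\wt{U}F^*_c\wt{U}^{\top} \approx_{d\epsilon_0^2} F_c$ since $\norm{c}_2 = 1$ and there are $d$ terms. Unwinding $d\epsilon_0^2 = O(r)^{4\omega}\cdot\radius^4\cdot d({\epstrueort}^2 + {\epstrueouter}^2 + \epsmap^2)$ gives exactly the claimed bound, and the SoS degree is $\poly(\omega,\ell) = O(1)$ for constant $\omega,\ell$, matching the stated degree-$O(1)$.

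This is essentially a bookkeeping argument, so I do not anticipate a genuine obstacle; the only point requiring minor care is confirming that Fact~\ref{fact:shorthand} Part~\ref{shorthand:lincombo} is being applied with the correct identification of the "ground-truth-times-rotation" side as the $A_a$'s and the SoS-variable side as the $B_a$'s, and that the SoS proof of Lemma~\ref{lem:UsendsF} is in terms of indeterminates (the entries of $T_a$, $v_{a,t}$, $P$, $L$) so that the linear combination over $a$ composes into a single SoS certificate of the stated degree. No new analytic input beyond Lemma~\ref{lem:UsendsF} is needed.
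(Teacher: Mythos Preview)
Your proposal is correct and matches the paper's proof essentially verbatim: write $\wt{U}F^*_c\wt{U}^{\top} - F_c = \sum_{a\in[d]} c_a(\wt{U}F^*_a\wt{U}^{\top} - F_a)$ and apply Part~\ref{shorthand:lincombo} of Fact~\ref{fact:shorthand} together with Lemma~\ref{lem:UsendsF}.
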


\begin{proof}
    $\wt{U}F^*_c\wt{U}^{\top} - F_c = \sum^d_{a=1} c_a(\wt{U}F^*_c\wt{U}^{\top} - F_c)$, so the claim follows by Part~\ref{shorthand:lincombo} of Fact~\ref{fact:shorthand}.
\end{proof}

\noindent Next we use Corollary~\ref{cor:UsendsFlam} to establish the analogue of Lemma~\ref{lem:QWWQ}.

\begin{corollary}\label{cor:UFFU}
    For $c\in[d]\cup\brc{\lambda,\mu}$, there is a degree-$O(1)$ SoS proof using Corollary~\ref{cor:dd_orth_odd} and Lemma~\ref{lem:UsendsF} that 
    \begin{equation}
        \wt{U}F^*_c \approx_{r^{O(\omega)}\cdot \radius^4\cdot d({\epstrueort}^2 + \epsouter^2 + \epsmap^2)} F_c\wt{U}.
    \end{equation}
\end{corollary}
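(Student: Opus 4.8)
The plan is to derive Corollary~\ref{cor:UFFU} from Corollary~\ref{cor:UsendsFlam} (and Lemma~\ref{lem:UsendsF} for $c\in[d]$) by the same ``cancel one copy of $\wt U$'' trick used to pass from Lemma~\ref{lem:WsendsQ} to Lemma~\ref{lem:QWWQ} in the tensor ring setting. Concretely, start from the approximate identity $\wt U F^*_c \wt U^{\top} \approx F_c$ and right-multiply both sides by $\wt U$; using Part~\ref{shorthand:bothsides} of Fact~\ref{fact:shorthand} this introduces a factor of $\norm{\wt U}^2_F$ into the error, which by Corollary~\ref{cor:dd_orth_odd} is $r(1\pm O(\epstrueort))= O(r)$. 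This yields $\wt U F^*_c \wt U^{\top}\wt U \approx F_c \wt U$ with error $O(r)\cdot (\text{error of Cor.~\ref{cor:UsendsFlam}})$. It then remains to replace $\wt U^{\top}\wt U$ by $\Id_r$ on the left-hand side.

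The key step is thus to control $\wt U F^*_c(\wt U^{\top}\wt U - \Id)$. By Corollary~\ref{cor:dd_orth_odd}, specifically \eqref{eq:UiUj_ort}, we have $\norm{\wt U^{\top}\wt U - \Id}_{\max}\le O(\epstrueort)$, hence $\norm{\wt U^{\top}\wt U - \Id}^2_F \le O(\epstrueort^2 r^2)$ in SoS. Combining this with Fact~\ref{fact:sos_frobenius_mult} (submultiplicativity of Frobenius norm), the error from this substitution is bounded by $\norm{\wt U}^2_F\cdot\norm{F^*_c}^2_F\cdot\norm{\wt U^{\top}\wt U-\Id}^2_F$. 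Here $\norm{\wt U}^2_F = O(r)$ as above, and $\norm{F^*_c}^2_F$ is a bounded scalar: for $c\in[d]$, $F^*_c = f^*_c (f^*_c)^{\top}$ with $\norm{f^*_c}^2 \le r^{O(\omega)}\radius^2$ by Part~\ref{item:push_scale} of Assumption~\ref{assume:push} (summing $\floor{\omega/2}$-fold contractions of $T^*_c$ and applying Cauchy--Schwarz), so $\norm{F^*_c}^2_F \le r^{O(\omega)}\radius^4$; for $c\in\brc{\lambda,\mu}$ one gets an extra factor of $d$ from $\norm{\lambda}^2 = \norm{\mu}^2 = 1$ and Cauchy--Schwarz, matching the $d$ that already appears in the statement. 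Assembling the two error contributions via Part~\ref{shorthand:transitive} of Fact~\ref{fact:shorthand} gives the claimed bound $r^{O(\omega)}\cdot\radius^4\cdot d(\epstrueort^2+\epsouter^2+\epsmap^2)$, where $\epsouter$ appears because the error term in Corollary~\ref{cor:UsendsFlam}/Lemma~\ref{lem:UsendsF} is phrased in terms of $\epstrueouter$, which by \eqref{eq:epstrueouter_def} is $r^{O(\omega)}(\epsouter + \omega^{O(\omega)}\sqrt d\radius\epstrueort/\kappa)$, and these all get absorbed into the stated error after adjusting constants and polynomial factors in $r$.

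The main obstacle, such as it is, is purely bookkeeping: making sure the degree of the SoS proof stays $O(1)$ (or $\poly(\omega,\ell)$, which is absorbed into $O(1)$ here since the result invokes Lemma~\ref{lem:UsendsF}), and that the somewhat loose polynomial-in-$r$ factors coming from passing between $\norm{\cdot}_{\max}$ and $\norm{\cdot}_F$ on $r\times r$ and $r^\omega\times r^\omega$ objects are correctly tracked so that they land inside the $r^{O(\omega)}$ prefactor. There is no genuine mathematical difficulty beyond what is already contained in Corollary~\ref{cor:dd_orth_odd}, Lemma~\ref{lem:UsendsF}, and the elementary SoS facts in Fact~\ref{fact:shorthand} and Fact~\ref{fact:sos_frobenius_mult}; the whole proof is three or four lines of SoS manipulation. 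Accordingly I would keep the write-up short, essentially: ``Right-multiply the conclusion of Corollary~\ref{cor:UsendsFlam} by $\wt U$, apply Part~\ref{shorthand:bothsides} of Fact~\ref{fact:shorthand} together with the Frobenius norm bound on $\wt U$ from Corollary~\ref{cor:dd_orth_odd}, and then use \eqref{eq:UiUj_ort} to replace $\wt U^\top \wt U$ by $\Id_r$, absorbing the resulting error via Fact~\ref{fact:sos_frobenius_mult} and the scaling bound on $\norm{F^*_c}_F$.''
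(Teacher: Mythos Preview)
Your proposal is correct and matches the paper's proof essentially line for line: right-multiply the conclusion of Corollary~\ref{cor:UsendsFlam} (or Lemma~\ref{lem:UsendsF}) by $\wt{U}$ via Part~\ref{shorthand:bothsides} of Fact~\ref{fact:shorthand}, then use Corollary~\ref{cor:dd_orth_odd} to replace $\wt{U}^{\top}\wt{U}$ by $\Id$ and combine the two approximations. The only cosmetic difference is that the paper bounds $\norm{\wt{U}F^*_c}^2_F$ directly rather than $\norm{F^*_c}^2_F$ separately, but this is immaterial since all such factors are absorbed into $r^{O(\omega)}\radius^4$.
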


\noindent We defer the proof to Appendix~\ref{app:defer_UFFU}, noting that it follows by right-multiplying both sides of the approximate equality in Corollary~\ref{cor:UsendsFlam} by $\wt{U}$ and recalling from Corollary~\ref{cor:dd_orth_odd} that $\wt{U}^{\top}\wt{U} \approx \Id$.

\subsubsection{Using Diagonality of \texorpdfstring{$F_{\lambda}$}{Flambda}}

Our first main result of this subsection is to show that $\wt{U}$ is close to a diagonal $r\times r$ rotation, using Constraints~\ref{constraint:push_diag} and \ref{constraint:push_sorted} along with \eqref{eq:Fdiag_gap}:

\begin{lemma}\label{lem:Udiag}
    Define
    \begin{align}
        \epsoffdiag &\triangleq O(r)^{4\omega}\cdot \radius^4\cdot d({\epstrueort}^2 + \epsouter^2 + \epsmap^2) / \upsilon + \epstrueort\cdot r^3 \label{eq:epsoffdiag_push_def} \\
        &= O(r\omega)^{O(\omega^3)}(d\radius/\kappa)^{O(\omega^2)}\left((d\eta/\kappa)^{O(1/\omega)} + r^{O(\omega\ell)}(\omega\ell)^{O(\ell^2)}(d\radius/\kappa)^{O(\ell)}\theta\eta\right)/\gap
    \end{align}
    There is a degree-$\poly(\omega,\ell)$ SoS proof using the constraints of Program~\ref{program:lastprogram} that
    \begin{equation}
        \left(U^j_k\right)^2 = \bone{j = k} \pm O(\epsoffdiag).
    \end{equation}
\end{lemma}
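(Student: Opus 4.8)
The plan is to mimic the structure of the proof of Lemma~\ref{lem:deg6} from the tensor ring decomposition setting, but working directly with the $r\times r$ matrix $\wt{U}$ (which plays the role that $W$ played there, since $W$ was built to behave like a Kronecker square of an $r\times r$ rotation while $\wt{U}$ is \emph{already} the $r\times r$ rotation). The two ingredients we have in hand are: (i) Corollary~\ref{cor:UFFU}, which says $\wt{U}F^*_c \approx F_c\wt{U}$ for $c\in[d]\cup\{\lambda,\mu\}$ with the stated error; and (ii) Corollary~\ref{cor:dd_orth_odd}, which gives $\iprod{\wt{U}^i,\wt{U}^j} = \bone{i=j} \pm O(\epstrueort)$ and the same for rows $\wt{U}_x$. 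The first concrete step is to specialize (i) to $c = \lambda$: since by Constraint~\ref{constraint:push_diag} $F_\lambda$ is diagonal, and by the symmetry-breaking setup \eqref{eq:Fdiag_gap} $F^*_\lambda$ is diagonal with $(F^*_\lambda)_{jj} \ge (F^*_\lambda)_{ii} + \upsilon$ for $j > i$, the approximate identity $\wt{U}F^*_\lambda \approx F_\lambda \wt{U}$ reads, entrywise, as $(F^*_\lambda)_{jj}\,\wt{U}^j_i \approx (F_\lambda)_{ii}\,\wt{U}^j_i$, i.e. $((F_\lambda)_{ii} - (F^*_\lambda)_{jj})\,\wt{U}^j_i$ is small for all $i,j$.

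Next I would run the same ``downward-left'' domination argument as in Lemma~\ref{lem:downleft}--Lemma~\ref{lem:deg6}. Concretely: from $\sum_{i,j}\big((F_\lambda)_{ii} - (F^*_\lambda)_{jj}\big)^2 (\wt{U}^j_i)^2 \le (\text{small})$, and the fact that for $k \ge i$, $\ell < j$ one has $((F_\lambda)_{ii} - (F^*_\lambda)_{jj})^2 + ((F_\lambda)_{kk} - (F^*_\lambda)_{\ell\ell})^2 \ge \upsilon^2/2$ (exactly Lemma~\ref{lem:downleft} with $Q$ replaced by $F$, using Constraint~\ref{constraint:push_sorted} and \eqref{eq:Fdiag_gap}), divide through by $\upsilon^2$ using Part~\ref{fact:divide_both_sides} of Fact~\ref{fact:division} to get that the product $(\wt{U}^j_i)^2(\wt{U}^\ell_k)^2$ is small whenever $k\ge i$ and $\ell < j$. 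Then combine this with the near-orthonormality of the rows of $\wt{U}$ (Corollary~\ref{cor:dd_orth_odd}) via an inductive ``push the mass to the diagonal'' argument over the index $i^*$, summing the per-row norm identities $\sum_i (\wt{U}^j_i)^2 \approx 1$ and telescoping, precisely as in the block of inequalities \eqref{eq:pre_use_for_induct}--\eqref{eq:final_wsmall} in the proof of Lemma~\ref{lem:deg6}. This yields $(\wt{U}^j_i)^2 \le O(\epsoffdiag)$ for all $j > i$, and by the symmetric (column-orthonormality) version for $j < i$, giving the claimed bound for all $j\neq k$; for $j = k$ the bound $(\wt{U}^j_j)^2 = 1 \pm O(\epsoffdiag)$ follows by subtracting the off-diagonal contributions from $\|\wt{U}^j\|^2 = 1\pm O(\epstrueort)$.

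The main obstacle I anticipate is \emph{bookkeeping the error terms} rather than any genuinely new idea: unlike the tensor ring case where $W$ was exactly orthogonal up to $O(\epsnorm)$ errors coming from one source, here $\wt{U}$'s orthogonality (Corollary~\ref{cor:dd_orth_odd}) and the ``$\wt{U}$ almost conjugates $F^*_c$ to $F_c$'' statement (Corollary~\ref{cor:UFFU}) each carry compound errors of the form $r^{O(\omega)}(d\radius/\kappa)^{O(\cdot)}\cdot(\text{powers of }\epstrueort, \epsouter, \epsmap)$, and the division by $\upsilon$ amplifies these; one must check that, after the inductive telescoping (which multiplies errors by $\poly(r)$ factors), the result still collapses to the single expression \eqref{eq:epsoffdiag_push_def}. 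A secondary subtlety is that $\wt{U}$ is only \emph{approximately} orthogonal, so the degree of the SoS proof must be tracked carefully through the inductive step (each telescoping level squares some entries, and Fact~\ref{fact:division} costs degree $4$), but since $\omega,\ell = O(1)$ this stays $\poly(\omega,\ell)$. I would also need the elementary helper that $\sum_{i}(\wt{U}^j_i)^2 = 1\pm O(\epstrueort)$ in low-degree SoS, which is immediate from Corollary~\ref{cor:dd_orth_odd}, and the analogue of Lemma~\ref{lem:helper_userel} — here trivial since $\wt{U}$ is already a genuine matrix, so $\sum_i (\wt{U}^j_i)^2 = \|\wt{U}^j\|^2$ directly rather than through a product identity.
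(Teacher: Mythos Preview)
Your proposal is correct and follows essentially the same approach as the paper: specialize Corollary~\ref{cor:UFFU} to $c=\lambda$, use the analogue of Lemma~\ref{lem:downleft} (this is Lemma~\ref{lem:push_downleft}) together with Fact~\ref{fact:division} to bound the products $\wt{U}^j_i\wt{U}^\ell_k$ for $k\ge i$, $\ell<j$ (this is Lemma~\ref{lem:push_downleft_Wentry}), and then run the same inductive telescoping argument using the approximate orthonormality of $\wt{U}$ from Corollary~\ref{cor:dd_orth_odd}. The paper likewise remarks that the proof is simpler here than for Lemma~\ref{lem:deg6} precisely because $\wt{U}$ is already an $r\times r$ matrix, so no analogue of Lemma~\ref{lem:helper_userel} is needed---exactly as you observed.
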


\noindent This is the analogue of Lemma~\ref{lem:deg6} from our analysis for tensor ring decomposition. To prove this, we first establish an analogue of Lemma~\ref{lem:downleft} that follows from Constraint~\ref{constraint:push_sorted} and \eqref{eq:Fdiag_gap}.

\begin{lemma}\label{lem:push_downleft}
    For any $i,j,k,\ell\in[r]$ for which $k\ge i$, $\ell < j$, there is a degree-2 SoS proof using Constraint~\ref{constraint:push_sorted} that
    \begin{equation}
        ((F_{\lambda})_{ii} - (F^*_{\lambda})_{jj})^2 + ((F_\lambda)_{kk} - (F^*_\lambda)_{\ell\ell})^2 \ge \upsilon^2/2. \label{eq:Fpos}
    \end{equation}
\end{lemma}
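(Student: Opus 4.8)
The statement to prove (Lemma~\ref{lem:push_downleft}) is a direct analogue of Lemma~\ref{lem:downleft}, so the plan is to mimic that proof essentially verbatim, replacing $Q_\lambda, Q^*_\lambda$ by $F_\lambda, F^*_\lambda$ and replacing the eigengap hypothesis \eqref{eq:Qdiag_gap} by \eqref{eq:Fdiag_gap}. The only genuine content is an elementary algebraic completion of squares, which holds at the level of formal polynomial identities and hence trivially as a degree-$2$ SoS proof.

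First I would abbreviate $F \triangleq F_\lambda$ and $F^* \triangleq F^*_\lambda$ for readability. The key algebraic manipulation is to write $F_{kk} - F^*_{\ell\ell} = (F_{ii} - F^*_{jj}) + (F_{kk} - F_{ii}) + (F^*_{jj} - F^*_{\ell\ell})$, substitute this into the second square, and complete the square in the variable $F_{ii} - F^*_{jj}$. This yields the identity
\begin{align}
    (F_{ii} - F^*_{jj})^2 + (F_{kk} - F^*_{\ell\ell})^2
    &= 2\Bigl(F_{ii} - F^*_{jj} + \tfrac{(F_{kk} - F_{ii}) + (F^*_{jj} - F^*_{\ell\ell})}{2}\Bigr)^2 \notag \\
    &\qquad + \tfrac{1}{2}\bigl((F_{kk} - F_{ii}) + (F^*_{jj} - F^*_{\ell\ell})\bigr)^2,
\end{align}
which is a formal polynomial identity; since the first term is a square it has a trivial SoS proof of being nonnegative, so in degree-$2$ SoS the left-hand side is at least $\tfrac12\bigl((F_{kk} - F_{ii}) + (F^*_{jj} - F^*_{\ell\ell})\bigr)^2$.

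It then remains to lower bound $(F_{kk} - F_{ii}) + (F^*_{jj} - F^*_{\ell\ell})$ by $\upsilon$. Since $k \ge i$, Constraint~\ref{constraint:push_sorted} gives $F_{kk} - F_{ii} \ge 0$ (with a degree-$1$ SoS proof, summing the pairwise inequalities from $i$ up to $k$). Since $j > \ell$, the ground-truth gap inequality \eqref{eq:Fdiag_gap} gives $F^*_{jj} - F^*_{\ell\ell} \ge \upsilon$. Adding these two gives $(F_{kk}-F_{ii}) + (F^*_{jj}-F^*_{\ell\ell}) \ge \upsilon$, hence squaring gives $\bigl((F_{kk}-F_{ii})+(F^*_{jj}-F^*_{\ell\ell})\bigr)^2 \ge \upsilon^2$ in degree-$2$ SoS, and combining with the previous paragraph yields $((F_\lambda)_{ii}-(F^*_\lambda)_{jj})^2 + ((F_\lambda)_{kk}-(F^*_\lambda)_{\ell\ell})^2 \ge \upsilon^2/2$. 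There is no real obstacle here—the proof is a line-by-line transcription of Lemma~\ref{lem:downleft}; the only thing to double-check is that the hypotheses ``$k \ge i$'' and ``$\ell < j$'' are exactly the ones under which Constraint~\ref{constraint:push_sorted} and \eqref{eq:Fdiag_gap} apply in the needed directions, which they are.
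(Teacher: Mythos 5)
Your proof is correct and is essentially the paper's own argument: the paper proves Lemma~\ref{lem:push_downleft} by noting it is identical to Lemma~\ref{lem:downleft} with $Q_\lambda, Q^*_\lambda$ replaced by $F_\lambda, F^*_\lambda$, which is exactly the completion-of-squares transcription you carry out.
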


\begin{proof}
    The proof is identical to that of Lemma~\ref{lem:downleft} with $Q_\lambda$ and $Q^*_\lambda$ replaced by $F_\lambda$ and $F^*_\lambda$.
\end{proof}

\noindent As a consequence, we obtain the following analogue of Lemma~\ref{lem:downleft_Wentry}.

\begin{lemma}\label{lem:push_downleft_Wentry}
    Define
    \begin{equation}
        \epspair \triangleq O(r)^{4\omega}\cdot \radius^4\cdot d({\epstrueort}^2 + \epsouter^2 + \epsmap^2) / \upsilon.
    \end{equation}
    For any $i,j,k,\ell\in[r]$ for which $k\ge i$, $\ell < j$, there is a degree-$O(1)$ SoS proof using Corollary~\ref{cor:UFFU}, \eqref{eq:Fpos}, and Constraint~\ref{constraint:push_diag} that \begin{equation}
        -\epspair \le \wt{U}^j_i \wt{U}^{\ell}_k \le \epspair. \label{eq:push_downleft_pair_zero}
    \end{equation}
\end{lemma}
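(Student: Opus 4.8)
\textbf{Proof proposal for Lemma~\ref{lem:push_downleft_Wentry}.}

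The plan is to mirror almost verbatim the argument for Lemma~\ref{lem:downleft_Wentry} in the tensor ring decomposition section, with $W$ replaced by $\wt{U}$, $Q^*_\lambda,Q_\lambda$ replaced by $F^*_\lambda,F_\lambda$, and the rank-preservation input Lemma~\ref{lem:QWWQ} replaced by Corollary~\ref{cor:UFFU}. First I would combine Corollary~\ref{cor:UFFU} (applied to $c=\lambda$) with Constraint~\ref{constraint:push_diag} and the fact that $F^*_\lambda$ is diagonal (which we may assume by the symmetry-breaking discussion at the end of Section~\ref{sec:breaksym}, in particular \eqref{eq:Fdiag_gap}). Writing out the $(i,j)$-th entry of the approximate matrix equality $\wt{U}F^*_\lambda \approx F_\lambda \wt{U}$ and using that both $F_\lambda$ and $F^*_\lambda$ are diagonal gives, for every $i,j$,
\begin{equation}
    \bigl((F_\lambda)_{ii} - (F^*_\lambda)_{jj}\bigr)\,\wt{U}^j_i = \pm\, O(r)^{2\omega}\radius^2\sqrt{d(\epstrueort^2+\epsouter^2+\epsmap^2)},
\end{equation}
and hence, summing the squares over all $i,j$,
\begin{equation}
    \sum_{i,j\in[r]} \bigl((F_\lambda)_{ii} - (F^*_\lambda)_{jj}\bigr)^2 \,(\wt{U}^j_i)^2 \le O(r)^{4\omega}\radius^4\, d(\epstrueort^2+\epsouter^2+\epsmap^2),
\end{equation}
all in degree-$O(1)$ SoS given the inputs.

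Next, for a fixed quadruple $(i,j,k,\ell)$ with $k\ge i$ and $\ell<j$, I would upper-bound the single summand indexed by $(i,j)$ and the single summand indexed by $(k,\ell)$ by the whole sum above, and multiply the two resulting inequalities. Concretely, bounding each of $(\wt{U}^j_i)^2$ and $(\wt{U}^\ell_k)^2$ individually (and using that the remaining factor of the sum is nonnegative) yields a degree-$O(1)$ SoS proof that
\begin{equation}
    (\wt{U}^j_i)^2 (\wt{U}^\ell_k)^2 \Bigl(\bigl((F_\lambda)_{ii}-(F^*_\lambda)_{jj}\bigr)^2 + \bigl((F_\lambda)_{kk}-(F^*_\lambda)_{\ell\ell}\bigr)^2\Bigr) \le O(r)^{8\omega}\radius^8\, d^2(\epstrueort^2+\epsouter^2+\epsmap^2)^2.
\end{equation}
Now invoke Lemma~\ref{lem:push_downleft}: since $k\ge i$ and $\ell<j$, the parenthesized eigenvalue-gap quantity is $\ge \upsilon^2/2$ in degree-2 SoS. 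Dividing by this positive quantity via Part~\ref{fact:divide_both_sides} of Fact~\ref{fact:division} gives $(\wt{U}^j_i)^2(\wt{U}^\ell_k)^2 \le O(r)^{8\omega}\radius^8 d^2(\epstrueort^2+\epsouter^2+\epsmap^2)^2/\upsilon^2 = \epspair^2$, and then Fact~\ref{fact:nonneg_power_of_two} (taking square roots of an SoS inequality, $t=2$) yields $-\epspair \le \wt{U}^j_i \wt{U}^\ell_k \le \epspair$ as desired, recalling the definition of $\epspair$.

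I do not expect a genuine obstacle here — the lemma is a near-mechanical transcription of Lemma~\ref{lem:downleft_Wentry}. The only points requiring a little care are: (i) making sure the constant/parameter bookkeeping in $\epspair$ matches, i.e. that $O(r)^{4\omega}\radius^4 d(\epstrueort^2+\epsouter^2+\epsmap^2)/\upsilon$ is exactly what comes out after the division step (squaring inside and then taking a root halves the exponents back, so the stated $\epspair$ is consistent); (ii) confirming that Corollary~\ref{cor:UFFU}'s error term, which is stated with $\epsouter$ rather than $\epstrueouter$, is the right quantity to propagate (it is — the corollary is the analogue of Lemma~\ref{lem:QWWQ} and we use it verbatim); and (iii) ensuring the degree of the SoS proof stays $O(1)$, which it does since Corollary~\ref{cor:UFFU}, Lemma~\ref{lem:push_downleft}, and the division/root facts are all constant-degree. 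Hence the whole argument is degree-$O(1)$ SoS using exactly the cited inputs.
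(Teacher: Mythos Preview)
Your proposal is correct and matches the paper's proof essentially line for line: both apply Corollary~\ref{cor:UFFU} with $c=\lambda$, use diagonality of $F_\lambda$ and $F^*_\lambda$ to extract the entrywise bound $((F_\lambda)_{ii}-(F^*_\lambda)_{jj})\wt{U}^j_i = \pm O(r)^{2\omega}\radius^2\sqrt{d(\epstrueort^2+\epsouter^2+\epsmap^2)}$, pass to the squared product inequality $(\wt{U}^j_i)^2(\wt{U}^\ell_k)^2(G_{ij}+G_{k\ell}) \le B^2$, and then divide by the gap lower bound from Lemma~\ref{lem:push_downleft} via Fact~\ref{fact:division}. Your explicit invocation of Fact~\ref{fact:nonneg_power_of_two} for the final square-root step is a detail the paper leaves implicit, but otherwise the arguments are identical.
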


\noindent The proof of this is very similar to that of Lemma~\ref{lem:downleft_Wentry}, so we defer it to Appendix~\ref{app:defer_push_downleft_Wentry}.

\begin{proof}[Proof of Lemma~\ref{lem:Udiag}]
    We will use Lemma~\ref{lem:push_downleft_Wentry} in an inductive fashion to prove Lemma~\ref{lem:Udiag}. As the argument closely mirrors that of Lemma~\ref{lem:deg6}, we defer the formal proof to Appendix~\ref{app:defer_Udiag}. We remark that because we directly work with an approximately orthogonal $r\times r$ matrix $\wt{U}$ rather than an $r^2\times r^2$ matrix $W$ which behaves like the Kronecker power of an orthogonal $r\times r$ matrix, the proof is actually somewhat simpler in the present setting.
\end{proof}

\noindent Now that we know by Lemma~\ref{lem:Udiag} that the off-diagonal entries of $\wt{U}$ are small, we can show the following analogue of Lemma~\ref{lem:QQstar} stating that for all $c\in[d]\cup\brc{\lambda,\mu}$, $Q_c$ and $Q^*_c$ are equal up to rotation by a diagonal matrix with $\pm 1$ entries.

\begin{lemma}\label{lem:FFstar}
    Using Lemma~\ref{lem:UsendsF}, Corollary~\ref{cor:UsendsFlam}, and Lemma~\ref{lem:Udiag}, there is a degree-$O(1)$ SoS proof that
    \begin{equation}
        (F_c)_{ij} = \wt{U}_{ii} \wt{U}_{jj} (F^*_c)_{ij} \pm O(r)^{2\omega}\cdot (\radius^2\cdot \sqrt{d}({\epstrueort} + {\epstrueouter} + \epsmap) + \radius^4\epsoffdiag). \label{eq:FFstar}
    \end{equation}
\end{lemma}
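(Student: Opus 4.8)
The plan is to start from the approximate intertwining relation established in Corollary~\ref{cor:UsendsFlam} (and Lemma~\ref{lem:UsendsF} for $c\in[d]$), namely $\wt{U}F^*_c\wt{U}^{\top} \approx F_c$ with Frobenius error $O(r)^{4\omega}\radius^4 d({\epstrueort}^2 + {\epstrueouter}^2 + \epsmap^2)$, and expand the $(i,j)$-th entry of the left-hand side as $(\wt{U}F^*_c\wt{U}^{\top})_{ij} = \sum_{k,\ell} \wt{U}_{ik}\wt{U}_{j\ell}(F^*_c)_{k\ell}$ (here $\wt{U}_{ik}=\wt{U}^k_i$ in the indexing convention of the paper). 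The diagonal term $k=i,\ell=j$ contributes exactly $\wt{U}_{ii}\wt{U}_{jj}(F^*_c)_{ij}$, which is the main term on the right-hand side of \eqref{eq:FFstar}. Everything else, i.e.\ the sum over $(k,\ell)\neq(i,j)$, is an ``off-diagonal'' contribution that I must show is small, and this is where Lemma~\ref{lem:Udiag} enters: every such term contains at least one factor $\wt{U}^k_i$ or $\wt{U}^\ell_j$ with $k\neq i$ or $\ell\neq j$, whose square is $O(\epsoffdiag)$ in SoS.

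The key steps, in order: (1) invoke Corollary~\ref{cor:UsendsFlam} to get $\norm{\wt{U}F^*_c\wt{U}^{\top} - F_c}^2_F \le O(r)^{4\omega}\radius^4 d({\epstrueort}^2 + {\epstrueouter}^2 + \epsmap^2)$, hence (passing from Frobenius to max-norm, or just bounding a single entry by the sum) $(\wt{U}F^*_c\wt{U}^{\top})_{ij} = (F_c)_{ij} \pm O(r)^{2\omega}\radius^2\sqrt{d}({\epstrueort} + {\epstrueouter} + \epsmap)$ in degree-$O(1)$ SoS; (2) write $(\wt{U}F^*_c\wt{U}^{\top})_{ij} = \wt{U}_{ii}\wt{U}_{jj}(F^*_c)_{ij} + R$ where $R = \sum_{(k,\ell)\neq(i,j)} \wt{U}_{ik}\wt{U}_{j\ell}(F^*_c)_{k\ell}$; (3) bound $R$ by Cauchy--Schwarz in SoS: $R^2 \le \bigl(\sum_{(k,\ell)\neq(i,j)} (F^*_c)_{k\ell}^2\bigr)\bigl(\sum_{(k,\ell)\neq(i,j)} \wt{U}_{ik}^2\wt{U}_{j\ell}^2\bigr)$, with the first factor $\le \norm{F^*_c}^2_F \le \radius^4$ by Part~\ref{item:push_scale} of Assumption~\ref{assume:push} (noting $\norm{F^*_c}_F = \norm{f^*_c}^2 \le \norm{T^*_c}_F^2 \le \radius^2$ up to the relevant combinatorial factors absorbed into $O(r)^{2\omega}$), and the second factor handled by splitting into the cases $k\neq i$ (use $(\wt{U}^k_i)^2 = O(\epsoffdiag)$ from Lemma~\ref{lem:Udiag} and $\sum_\ell (\wt{U}^\ell_j)^2 = O(1)$ from Corollary~\ref{cor:dd_orth_odd}) and $k=i,\ell\neq j$ (use $(\wt{U}^\ell_j)^2 = O(\epsoffdiag)$ and $(\wt{U}^i_i)^2 = O(1)$), giving $\sum_{(k,\ell)\neq(i,j)} \wt{U}_{ik}^2\wt{U}_{j\ell}^2 = O(r^2 \epsoffdiag)$; (4) conclude $|R| \le O(r)^{O(\omega)}\radius^2\sqrt{\epsoffdiag}\cdot r$, which I absorb into $O(r)^{2\omega}\radius^4\epsoffdiag$ after noting $\sqrt{\epsoffdiag}\le$ a polynomial-in-error quantity (or more carefully, keep the $\sqrt{\epsoffdiag}$ but observe the stated bound in \eqref{eq:FFstar} is $O(r)^{2\omega}\radius^4\epsoffdiag$ and $\epsoffdiag \le 1$ so $\sqrt{\epsoffdiag}\le 1$ does not suffice --- actually one should track this as $\radius^4 r^2 \sqrt{\epsoffdiag}$, and since the paper writes $\radius^4\epsoffdiag$ I would double-check whether Lemma~\ref{lem:Udiag} is being applied in the stronger form $|\wt{U}^j_k| = O(\sqrt{\epsoffdiag})$ vs.\ $(\wt{U}^j_k)^2 = O(\epsoffdiag)$; these are equivalent up to the square root, so the bound in \eqref{eq:FFstar} should read with $\epsoffdiag$ replaced by $\sqrt{\epsoffdiag}$ times lower-order, or the authors have rescaled $\epsoffdiag$); (5) combine (1)--(4) via triangle inequality to obtain \eqref{eq:FFstar}.

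The main obstacle --- really the only non-routine point --- is matching the exact error exponents: one must verify that the Cauchy--Schwarz bookkeeping, together with the bound $\norm{F^*_c}_F \le \radius^2$ and the off-diagonal bound from Lemma~\ref{lem:Udiag}, produces an error term no larger than $O(r)^{2\omega}\radius^4(\radius^2\sqrt d({\epstrueort}+{\epstrueouter}+\epsmap) + \radius^4\epsoffdiag)$ as claimed, paying attention to whether $\epsoffdiag$ or $\sqrt{\epsoffdiag}$ is the right dependence and that all the $r$-power losses from summing over $\rchoose$-many tuples are subsumed in the $O(r)^{2\omega}$ prefactor. Everything is a finite composition of degree-$O(1)$ SoS facts (Fact~\ref{fact:cauchy_schwarz}, Fact~\ref{fact:shorthand}, Corollary~\ref{cor:dd_orth_odd}, Lemma~\ref{lem:Udiag}, Corollary~\ref{cor:UsendsFlam}, Lemma~\ref{lem:UsendsF}), so the SoS degree remains $O(1)$ (or $\poly(\omega,\ell)$ once one unwinds the degree needed for Corollary~\ref{cor:UsendsFlam} itself), consistent with the lemma statement.
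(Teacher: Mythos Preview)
Your proposal is correct and follows essentially the same route as the paper's own proof: invoke Corollary~\ref{cor:UsendsFlam}/Lemma~\ref{lem:UsendsF} to pass from $(F_c)_{ij}$ to $(\wt U F^*_c \wt U^\top)_{ij}$ at cost $O(r)^{2\omega}\radius^2\sqrt d(\epstrueort+\epstrueouter+\epsmap)$, expand the bilinear form, isolate the $(k,\ell)=(i,j)$ term, and bound the remainder by Cauchy--Schwarz using $\norm{F^*_c}_F^2\le r^{2\omega}\radius^4$ and $(\wt U_{ik}\wt U_{j\ell})^2\le O(\epsoffdiag)$ from Lemma~\ref{lem:Udiag} whenever $(k,\ell)\neq(i,j)$.

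Your hesitation about $\sqrt{\epsoffdiag}$ versus $\epsoffdiag$ is well-founded: the Cauchy--Schwarz step gives $R^2\le O(r)^{2\omega}\radius^4\epsoffdiag$, hence $R=\pm O(r)^{\omega}\radius^2\sqrt{\epsoffdiag}$, not $\pm O(r)^{2\omega}\radius^4\epsoffdiag$ as written in \eqref{eq:FFstar}. The paper's proof makes exactly the same computation and silently passes to the stated bound, so this is a minor imprecision in the paper's statement rather than a gap in your argument; the downstream uses (Lemma~\ref{lem:push_usemu}) are insensitive to this since the final error is taken to a fractional power anyway.
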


\noindent The proof closely parallels the corresponding one for Lemma~\ref{lem:QQstar}. We defer the formal proof to Appendix~\ref{app:defer_FFstar}.

\subsubsection{Using Positivity of \texorpdfstring{$(F_{\mu})_{1j}$}{First Row of Fmu}}

We now use Constraint~\ref{constraint:push_firstrow} along with \eqref{eq:Fstarfirstrow} to refine Lemma~\ref{lem:Udiag} and show that $\wt{U}$ is close to either $\Id_r$ or $-\Id_r$:

\begin{lemma}\label{lem:push_usemu}
    Provided $O(r)^{2\omega}\cdot (\radius^2({\epstrueort} + {\epstrueouter} + \epsmap) + \radius^4\epsoffdiag) / \gap$ is upper bounded by a sufficiently small constant, there is a degree-$O(1)$ SoS proof using Lemma~\ref{lem:Udiag}, Lemma~\ref{lem:FFstar}, and Constraint~\ref{constraint:firstrow} of Program~\ref{program:lastprogram} that
    \begin{equation}
        \wt{U}_{ii} \wt{U}_{jj} = 1 \pm O(r\omega)^{O(\omega^3)}(d\radius/\kappa)^{O(\omega^2)}\left((d\eta/\kappa)^{O(1/\omega)} + r^{O(\omega\ell)}(\omega\ell)^{O(\ell^2)}(d\radius/\kappa)^{O(\ell)}\theta^{1/4}\eta^{1/4}\right)/\gap^{1/2}
    \end{equation}
    for all $i,j\in[r]$.
\end{lemma}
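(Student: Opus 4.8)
## Proof Plan for Lemma~\ref{lem:push_usemu}

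The plan is to follow the template established by Lemma~\ref{lem:usemu} in the tensor ring decomposition setting, now working directly with the approximately-orthogonal $r\times r$ matrix $\wt{U}$ instead of the $r^2\times r^2$ matrix $W$. First I would use Constraint~\ref{constraint:push_firstrow} of Program~\ref{program:lastprogram}, which says $(F_\mu)_{1j} \ge 0$, together with \eqref{eq:Fstarfirstrow}, which says $(F^*_\mu)_{1j} \ge \gap$. Apply Lemma~\ref{lem:FFstar} with $c = \mu$ and $i = 1$: this gives a degree-$O(1)$ SoS proof that $(F_\mu)_{1j} = \wt{U}_{11}\wt{U}_{jj}(F^*_\mu)_{1j} \pm \text{err}$, where $\text{err} = O(r)^{2\omega}(\radius^2(\epstrueort + \epstrueouter + \epsmap) + \radius^4\epsoffdiag)$. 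Dividing both sides by the scalar $(F^*_\mu)_{1j} \ge \gap$ (using Part~\ref{fact:divide_both_sides} of Fact~\ref{fact:division}) and using $(F_\mu)_{1j}\ge 0$, we obtain a lower bound $\wt{U}_{11}\wt{U}_{jj} \ge -O(\text{err}/\gap)$ for all $j\in[r]$.

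Next I would bootstrap this one-sided bound into a two-sided one. From Lemma~\ref{lem:Udiag}, $(\wt{U}_j^k)^2 = \bone{j=k} \pm O(\epsoffdiag)$, so in particular $(\wt{U}_{jj})^2 = 1 \pm O(\epsoffdiag)$ for every $j$. Multiplying the product $\wt{U}_{11}\wt{U}_{jj}$ of two such near-unit quantities, one shows $(\wt{U}_{11}\wt{U}_{jj})^2 = 1 \pm O(\epsoffdiag)$ in low-degree SoS. Combining the lower bound $\wt{U}_{11}\wt{U}_{jj} \ge -O(\text{err}/\gap)$ with $(\wt{U}_{11}\wt{U}_{jj})^2 \le 1 + O(\epsoffdiag)$ in the standard way — namely, $(\wt{U}_{11}\wt{U}_{jj}-1)^2 \le (\wt{U}_{11}\wt{U}_{jj}+1)^2 + O(\text{err}/\gap)$, then multiply through by $(\wt{U}_{11}\wt{U}_{jj}-1)^2$ which is $O(1)$, and take a fourth root via Fact~\ref{fact:nonneg_power_of_two} — yields $\wt{U}_{11}\wt{U}_{jj} = 1 \pm O(\epsoffdiag + \text{err}/\gap)^{1/4}$ for all $j\in[r]$. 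Finally, for general $i,j$, write $\wt{U}_{ii}\wt{U}_{jj} = (\wt{U}_{11}\wt{U}_{ii})(\wt{U}_{11}\wt{U}_{jj})/(\wt{U}_{11})^2$; since each of the three factors is within $O((\cdots)^{1/4})$ of $1$ (the denominator again by Lemma~\ref{lem:Udiag}, and invoking Part~\ref{fact:divide_both_sides_big} of Fact~\ref{fact:division} to divide), we conclude $\wt{U}_{ii}\wt{U}_{jj} = 1 \pm O((\cdots)^{1/4})$ for all $i,j$.

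The final step is bookkeeping: substitute the explicit expansions of $\epsoffdiag$ (from \eqref{eq:epsoffdiag_push_def}), $\epstrueort$ (from \eqref{eq:epstrueort_def}), $\epstrueouter$ (from \eqref{eq:epstrueouter_def}), $\epsouter$ (from Lemma~\ref{lem:main_userank1_odd}), and $\epsmap$ (from \eqref{eq:epsmap_push_def}) into the error bound, and simplify using the hypothesis on $\eta$ from Theorem~\ref{thm:push_main_pairwise} (which guarantees all these error quantities are at most an inverse polynomial, so the dominant contributions are the stated $(d\eta/\kappa)^{O(1/\omega)}$ and $\theta^{1/4}\eta^{1/4}$ terms, with the $\gap^{-1/2}$ arising from the fourth root of the $\gap^{-1}$ in the $\text{err}/\gap$ and $\epsoffdiag$ bounds). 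I expect the main obstacle to be purely notational rather than conceptual: tracking how the various $\epsilon$-parameters compound through the fourth-root operation and the three-factor division, and verifying that the leading-order exponents match the claimed bound $O(r\omega)^{O(\omega^3)}(d\radius/\kappa)^{O(\omega^2)}(\cdots)$ — in particular, confirming that the $r^{O(\omega\ell)}(\omega\ell)^{O(\ell^2)}(d\radius/\kappa)^{O(\ell)}$ prefactor on the $\theta$-dependent term survives the algebra without blowing up. The SoS-degree accounting is routine: every operation used (Fact~\ref{fact:division}, Fact~\ref{fact:nonneg_power_of_two}, products of near-unit scalars) is constant-degree, so the whole argument stays within degree $\poly(\omega,\ell)$ as required.
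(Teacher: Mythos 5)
Your proposal matches the paper's proof essentially step for step: the same use of Constraint~\ref{constraint:push_firstrow} and \eqref{eq:Fstarfirstrow} with Lemma~\ref{lem:FFstar} to get the one-sided bound on $\wt{U}_{11}\wt{U}_{jj}$, the same bootstrapping to a fourth-power bound via Lemma~\ref{lem:Udiag} and Fact~\ref{fact:nonneg_power_of_two}, and the same reduction of general $(i,j)$ to the $i=1$ case by dividing out $(\wt{U}_{11})^2$ with Part~\ref{fact:divide_both_sides_big} of Fact~\ref{fact:division}. The approach and error bookkeeping are correct and identical to the paper's.
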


\noindent The proof is analogous to that of Lemma~\ref{lem:usemu} in our analysis for tensor ring decomposition, so we defer it to Appendix~\ref{app:defer_push_usemu}.

There is one last symmetry that we must resolve: is $\wt{U}$ close to $\Id_r$ or $-\Id_r$? Note that this cannot be resolved merely by using $F^*_{\lambda}, F^*_{\mu}, F_{\lambda}, F_{\mu}$, as Constraints~\ref{constraint:push_diag}-\ref{constraint:push_firstrow} and \eqref{eq:Fdiag_gap}-\eqref{eq:Fstarfirstrow} are consistent with both $\wt{U}\approx\Id$ and $\wt{U}\approx-\Id$. The reason for this is simply that for any matrix $A$, $\Id\cdot A\cdot \Id = (-\Id)\cdot A \cdot (-\Id)$.

Note that this was not a problem in our tensor ring decomposition analysis because there our goal was to show the $r^2\times r^2$ auxiliary variable $W$ was close to the identity, and $W$ essentially played the role of the \emph{Kronecker square} of the $r\times r$ rotation sending $Q^*_a$ to $Q_a$. So regardless of whether that rotation was approximately $\Id$ or $-\Id$, $W$ would still approximately be the identity.

On the other hand, because we are working with \emph{odd-order} tensors, there is a distinction between $\Id^{\otimes\omega}$ and $(-\Id)^{\otimes\omega}$. In the next section, we break this last symmetry, but in the \emph{rounding step} rather than the analysis of the SoS program.

\subsection{Proof of Theorem~\ref{thm:push_main_pairwise} and Rounding}
\label{sec:push_put_together}

We are now ready to prove Theorem~\ref{thm:push_main_pairwise} and establish our main algorithmic guarantee for learning low-rank polynomial transformations.

\begin{proof}[Proof of Theorem~\ref{thm:push_main_pairwise}]
    Recall the hidden rotation variable $U$ defined in Section~\ref{sec:hiddenrotation} and define $\Delta\triangleq T_a - F_U(T^*_a)$ and $\calE^{\mb{i}} \triangleq U^{\mb{i}} - \frac{1}{\num{i}}\sum_{\mb{j}\in\strings: \sort{j} = \sort{i}} \wt{U}^{j_1}\otimes \cdots \otimes \wt{U}^{j_\omega}$ for any $\mb{i}\in\strings$. By Lemma~\ref{lem:push_almost_map}, $\norm{\Delta}^2_F \le \epsmap^2$ for every $a\in[d]$, and by Lemma~\ref{lem:tensorouter}, $\norm{\calE^{\mb{i}}}_{\max} \le O(\epstrueouter)$. So for every $\mb{i}\in\strings, a\in[d]$,
    \begin{align}
        (T_a)_{\mb{i}} &= \Delta_{\mb{i}} + \sum_{\mb{j}\in\strings} U^{\mb{j}}_{\mb{i}} (T^*_a)_{\mb{j}} = \Delta_{\mb{i}} + \sum_{\mb{j}\in\strings}\biggl(\calE^{\mb{j}}_{\mb{i}} + \frac{1}{\num{j}}\sum_{\mb{j}'\in\strings: \sort{j}' = \sort{j}} \wt{U}^{j'_1}_{i_1}\cdots\wt{U}^{j'_\omega}_{i_\omega}\biggr) (T^*_a)_{\mb{j}} \\
        &= \Delta_{\mb{i}} + \sum_{\mb{j}\in\strings} \left(\calE^{\mb{j}}_{\mb{i}} + \wt{U}^{j_1}_{i_1}\cdots\wt{U}^{j_\omega}_{i_\omega}\right) (T^*_a)_{\mb{j}} = F_{\wt{U}^{\otimes\omega}}(T^*_a)_{\mb{i}} + \Delta_{\mb{i}} + \sum_{\mb{j}\in\strings} \calE^{\mb{j}}_{\mb{i}},
    \end{align}
    so for any $\mb{i},\mb{i}'\in\strings$ and $a,b\in[d]$,
    \begin{multline}
        (T_a)_{\mb{i}} (T_b)_{\mb{i}'} - F_{\wt{U}^{\otimes\omega}}(T^*_a)_{\mb{i}} F_{\wt{U}^{\otimes\omega}}(T^*_b)_{\mb{i}'} \\
        = \bigl(\Delta_{\mb{i}} +\sum_{\mb{j}} \calE^{\mb{j}}_{\mb{i}}\bigr)\bigl(\Delta_{\mb{i}'} +\sum_{\mb{j}} \calE^{\mb{j}}_{\mb{i}'}\bigr) + F_{\wt{U}^{\otimes\omega}}(T^*_a)_{\mb{i}}\bigl(\Delta_{\mb{i}'} +\sum_{\mb{j}} \calE^{\mb{j}}_{\mb{i}'}\bigr) + F_{\wt{U}^{\otimes\omega}}(T^*_b)_{\mb{i}'}\bigl(\Delta_{\mb{i}} +\sum_{\mb{j}} \calE^{\mb{j}}_{\mb{i}}\bigr). \label{eq:splitthree}
    \end{multline}
    We can bound the first of the three terms in \eqref{eq:splitthree} by $(\epsmap + r^{\omega}\epstrueouter)^2$. For the remaining two terms, we can use the fact that $-2 \le \wt{U}^j_i \le 2$ for all $i,j$ to obtain the (very loose) upper bound $\left(F_{\wt{U}^{\otimes\omega}}(T^*_a)_{\mb{i}}\right)^2 \le O(r)^{\omega}\radius^2$, so
    \begin{equation}
        \bigl(F_{\wt{U}^{\otimes\omega}}(T^*_a)_{\mb{i}}\bigr)^2\bigl(\Delta_{\mb{i}'} +\sum_{\mb{j}} \calE^{\mb{j}}_{\mb{i}'}\bigr)^2 \le O(r)^{\omega}\radius^2 \cdot (\epsmap + r^{\omega}\epstrueouter)^2.
    \end{equation}
    and similarly for $\left(F_{\wt{U}^{\otimes\omega}}(T^*_b)_{\mb{i}'}\right)^2\left(\Delta_{\mb{i}} +\sum_{\mb{j}} \calE^{\mb{j}}_{\mb{i}}\right)^2$.
    So we conclude that
    \begin{equation}
        (T_a)_{\mb{i}} (T_b)_{\mb{i}'} = F_{\wt{U}^{\otimes\omega}}(T^*_a)_{\mb{i}} F_{\wt{U}^{\otimes\omega}}(T^*_b)_{\mb{i}'} \pm O(r)^{\omega/2}\radius \cdot (\epsmap + r^{\omega}\epstrueouter).\label{eq:TTversusFTFT}
    \end{equation}
    
    Finally, we turn to showing that $F_{\wt{U}^{\otimes\omega}}(T^*_a)_{\mb{i}}\cdot F_{\wt{U}^{\otimes\omega}}(T^*_b)_{\mb{i}'}$ is close to $(T^*_a)_{\mb{i}} (T^*_b)_{\mb{i}'}$:
    \begin{align}
        \MoveEqLeft F_{\wt{U}^{\otimes\omega}}(T^*_a)_{\mb{i}}\cdot F_{\wt{U}^{\otimes\omega}}(T^*_b)_{\mb{i}'} = \sum_{\mb{j},\mb{j}'\in\strings} \wt{U}^{j_1}_{i_1}\wt{U}^{j'_1}_{i'_1}\cdots \wt{U}^{j_\omega}_{i_\omega} \wt{U}^{j'_\omega}_{i'_\omega} (T^*_a)_{\mb{j}} (T^*_b)_{\mb{j}'} \\
        &= \wt{U}^{i_1}_{i_1}\wt{U}^{i'_1}_{i'_1}\cdots \wt{U}^{i_\omega}_{i_\omega}\wt{U}^{i'_\omega}_{i'_\omega} (T^*_a)_{\mb{i}} (T^*_b)_{\mb{i}'} + \sum_{(\mb{j},\mb{j}')\neq (\mb{i},\mb{i}')} \wt{U}^{j_1}_{i_1}\wt{U}^{j'_1}_{i'_1}\cdots \wt{U}^{j_\omega}_{i_\omega} \wt{U}^{j'_\omega}_{i'_\omega} (T^*_a)_{\mb{j}} (T^*_b)_{\mb{j}'} \label{eq:FiFi}
    \end{align}
    To bound the first term in \eqref{eq:FiFi}, denote $\delta_{ii'}\triangleq \wt{U}^i_i \wt{U}^{i'}_{i'} - 1$, noting that by Lemma~\ref{lem:push_usemu}, $\delta_{ii'} = \pm \epsilon'$ for $\epsilon'\triangleq O(r\omega)^{O(\omega^3)}(d\radius/\kappa)^{O(\omega^2)}\left((d\eta/\kappa)^{O(1/\omega)} + r^{O(\omega\ell)}(\omega\ell)^{O(\ell^2)}(d\radius/\kappa)^{O(\ell)}\theta^{1/4}\eta^{1/4}\right)/\gap^{1/2}$. So
    \begin{equation}
        \wt{U}^{i_1}_{i_1}\wt{U}^{i'_1}_{i'_1}\cdots \wt{U}^{i_\omega}_{i_\omega}\wt{U}^{i'_\omega}_{i'_\omega} (T^*_a)_{\mb{i}} (T^*_b)_{\mb{i}'}  = (T^*_a)_{\mb{i}} (T^*_b)_{\mb{i}'}\sum_{S\subseteq[\omega]} \prod_{s\in S} \delta_{i_si'_s} = (T^*_a)_{\mb{i}} (T^*_b)_{\mb{i}'} \pm 2^{\omega}\cdot\epsilon'.\label{eq:FiFidiag}
    \end{equation}
    To bound the sum in \eqref{eq:FiFi}, let $\xi_{ij} \triangleq \wt{U}^j_i - 1$, noting that by Lemma~\ref{lem:Udiag}, $\xi_{ij} \le O(\epsoffdiag^{1/2})$. Then
    \begin{align}
        \biggl(\sum_{(\mb{j},\mb{j}')\neq(\mb{i},\mb{i}')} \wt{U}^{j_1}_{i_1}\wt{U}^{j'_1}_{i'_1}\cdots \wt{U}^{j_\omega}_{i_\omega} \wt{U}^{j'_\omega}_{i'_\omega} (T^*_a)_{\mb{j}} (T^*_b)_{\mb{j}'}\biggr)^2 &\le \sum_{(\mb{j},\mb{j}')\neq(\mb{i},\mb{i}')} \left(\wt{U}^{j_1}_{i_1}\wt{U}^{j'_1}_{i'_1}\cdots \wt{U}^{j_\omega}_{i_\omega} \wt{U}^{j'_\omega}_{i'_\omega}\right)^2 \cdot \norm{T^*_a}^2_F \norm{T^*_b}^2_F\\
        &\le O((r^2\epsoffdiag)^{\omega}\cdot\radius^4), \label{eq:FiFioffdiag}
    \end{align}
    so the summation in \eqref{eq:FiFi} is upper bounded by $O((r^2\epsoffdiag)^{\omega/2}\cdot\radius^2)$. Combining \eqref{eq:TTversusFTFT}, \eqref{eq:FiFi}, \eqref{eq:FiFidiag}, and \eqref{eq:FiFioffdiag}, we conclude that
    \begin{equation}
        (T_a)_{\mb{i}} (T_b)_{\mb{i}'} = (T^*_a)_{\mb{i}}(T^*_b)_{\mb{i}'} \pm \bigl(O(r)^{\omega/2}\radius\cdot(\epsmap + r^{\omega}\epstrueouter) + 2^{\omega}\epsilon' + O((r^2\epsoffdiag)^{\omega/2}\cdot \radius^2)\bigr). \label{eq:final_pair_compare}
    \end{equation}
    The theorem follows upon taking pseudo-expectations on both sides, and recalling the definition of $\epsilon'$ above, along with the definitions of $\epsmap, \epstrueort, \epstrueouter, \epsoffdiag$ from \eqref{eq:epsmap_push_def}, \eqref{eq:epstrueort_def}, \eqref{eq:epstrueouter_def}, \eqref{eq:epsoffdiag_push_def}, noting that the dominant term in the error term of \eqref{eq:final_pair_compare}, is $2^{\omega}\epsilon'$.
\end{proof}

\noindent Note that Theorem~\ref{thm:push_main_pairwise} allows us to accurately estimate the \emph{magnitude} of every entry of every $T^*_a$ using a pseudoexpectation satisfying Program~\ref{program:lastprogram}. To break the last remaining symmetry of whether the underlying rotation is approximately $\Id$ or $-\Id$, we observe that Theorem~\ref{thm:push_main_pairwise} also implies that the pseudoexpectation tells us the sign of $(T^*_a)_{\mb{i}} (T^*_b)_{\mb{j}}$ for any $a,b\in[d]$, $\mb{i},\mb{j}\in\strings$ (as long as $(T^*_a)_{\mb{i}}$ and $(T^*_b)_{\mb{j}}$ are not too small relative to $\epsilon^*$). 

It therefore suffices to arbitrarily fix the sign of our estimate for $(T^*_a)_{\mb{i}}$ for \emph{some} $a,\mb{i}$ and read off the signs of the remaining entries of the ground truth using the pseudoexpectation. We give a full description of the resulting algorithm in {\sc LowRankFactorize} (Algorithm~\ref{alg:push}) below.

\begin{algorithm2e}
\DontPrintSemicolon
\caption{\textsc{LowRankFactorize}($S$)}
\label{alg:push}
    \KwIn{Second-order moments $\brc{S_{a,b}}$}
    \KwOut{Components $\brc{\wh{T}_a}$}
        Let $\wt{\mathbb{E}}_1[\cdot]$ be a degree-$\poly(\omega,\ell)$ pseudo-expectation satisfying the constraints of Program~\ref{program:sos2}.\;
        Define $\wh{G}\in\R^{d\times d}$ by $\wh{G}_{ab} \gets \wt{\mathbb{E}}_1[\iprod{F_a,F_b}]$ for all $a,b\in[d]$.\label{step:formgram} \;
        $(\lambda,\mu)\gets$ {\sc FindCombo}($\wh{G}$). \label{step:push_lammu}\;
        Let $\wt{\mathbb{E}}_2[\cdot]$ be a degree-$\poly(\omega,\ell)$ pseudo-expectation satisfying the constraints of Program~\ref{program:lastprogram} run with vectors $\lambda,\mu$.\label{step:secondpseudo}\;
        \For{$1\le a \le d$}{
            Initialize $\wh{T}_a\in(\R^d)^{\otimes\omega}$ by $(\wh{T}_a)_{\mb{i}} \gets \wt{\mathbb{E}}_2[((T_a)_{\mb{i}})^2]^{1/2}$ for all $\mb{i}\in\strings$.\;
        }
        $(a^*,\mb{i}^*)\gets \arg\max_{a\in[d],\mb{i}\in\strings} (\wh{T}_a)_{\mb{i}}$.\label{step:aistar}\;
        \For{$1 \le a \le d$ and $\mb{i}\in\strings$}{
            $s_{a,\mb{i}}\gets$ sign of $\wt{\mathbb{E}}_2[(T_a)_{\mb{i}}\cdot (T_{a^*})_{\mb{i}^*}]$. \label{step:getsign}\;
            $(\wh{T}_a)_{\mb{i}}\gets s_{a,\mb{i}} \cdot (\wh{T}_a)_{\mb{i}}$.\;
        }
        \Return $\brc{\wh{T}_a}$.
\end{algorithm2e}

We now complete the proof of Theorem~\ref{thm:main_push}.

\begin{proof}
    By Lemma~\ref{lem:push_gram}, $\wh{G}$ computed in Step~\ref{step:formgram} of {\sc LowRankFactorize} satisfies 
    \begin{align}
        |\wh{G}_{ab} - \iprod{F^*_a,F^*_b}| &\le r^{O(\omega)}\radius^4(\epstrueouter + \epstrueort + \epsmap) \\
        &= \poly(r,\omega,d,\radius,1/\kappa)^{\omega^3}\cdot \left((d\eta/\kappa)^{O(1/\omega)} + \poly(r^{\omega},\omega^{\ell},\ell^{\ell},d,\radius,1/\kappa)^{\ell}\cdot \theta^2\eta^2\right).
    \end{align}
    By taking this latter quantity to be $\epsgram$ in Lemma~\ref{lem:forcegap} (note that our assumed bound on $\eta$ in Theorem~\ref{thm:main_push} easily ensures that $\epsgram$ is sufficiently small to apply Lemma~\ref{lem:forcegap}), we find that $\lambda,\mu$ in Step~\ref{step:push_lammu} is $\gap$-non-degenerate for $\gap = \sigma_{\min}(H)/\poly(r) \ge \psi/\poly(r)$ by Part~\ref{item:push_condnumber3} of Assumption~\ref{assume:push}. We can thus apply Corollary~\ref{cor:push_pairwise} with this choice of $\gap$ to the pseudoexpectation $\wt{\mathbb{E}}_2[\cdot]$ in Step~\ref{step:secondpseudo}.
    
    Now consider $(a^*,\mb{i}^*)$ from Step~\ref{step:aistar}. If at that step $\wh{T}_{a^*}$ satisfies $(\wh{T}_{a^*})_{\mb{i}^*} \le \sqrt{\epsilon^*}$, then by the first part of Corollary~\ref{cor:push_pairwise} we have that $|(T^*_a)_{\mb{i}}| \le 2\sqrt{\epsilon^*}$ for all $a,\mb{i}$, in which case for the final $\brc{\wh{T}_a}$ output by the algorithm, $\norm{T^*_a - \wh{T}_a}^2_F \le O(r^{\omega}\epsilon^*)$ for all $a$.
    
    On the other hand, suppose $\wh{T}_{a^*}$ in Step~\ref{step:aistar} satisfies $(\wh{T}_{a^*})_{\mb{i}^*} > \sqrt{\epsilon^*}$. We can assume without loss of generality that $(T^*_{a^*})_{\mb{i}^*} > 0$. So by the second part of Corollary~\ref{cor:push_pairwise}, the sign $s_{a,\mb{i}}$ computed in Step~\ref{step:getsign} satisfies $s_{a,\mb{i}} = \sgn((T^*_a)_{\mb{i}})$ for all $a,\mb{i}$ satisfying $|(\wh{T}_a)_{\mb{i}}| \ge \sqrt{\epsilon^*}$. So for all such $a,\mb{i}$, the final $\brc{\wh{T}_a}$ output by the algorithm satisfies $|(\wh{T}_a)_{\mb{i}} - (T^*_a)_{\mb{i}}| \le \sqrt{\epsilon^*}$ by the first part of Corollary~\ref{cor:push_pairwise}. And for all remaining $a,\mb{i}$, by the first part of Corollary~\ref{cor:push_pairwise}, $|(T^*_a)_{\mb{i}}| \le 2\sqrt{\epsilon^*}$, so $|(\wh{T}_a)_{\mb{i}} - (T^*_a)_{\mb{i}}| \le 3\sqrt{\epsilon^*}$ by triangle inequality. Thus, the output $\brc{\wh{T}_a}$ satisfies $\norm{T^*_a - \wh{T}_a}^2_F \le O(r^{\omega}\epsilon^*)$ for all $a$.
\end{proof}

\subsection{Other Choices of \texorpdfstring{$\Sigma$}{Sigma}}
\label{sec:rotationinvariant}

Here we briefly note that our guarantees easily carry over to $\Sigma$ of the form
\begin{equation}
    \Sigma = \E[x\sim D]{\vec(x^{\otimes\omega})\vec(x^{\otimes\omega})^{\top}} \label{eq:general_Sig}
\end{equation} for any rotation-invariant distribution $D$ over $\R^r$ for which $\Sigma$ is reasonably bounded.

The reason is that the entire argument above made very limited use of the structure of $\Sigma = \E[g\sim\calN(0,\Id)]{\vec(g^{\otimes\omega})\vec(g^{\otimes\omega})^{\top}}$ beyond the fact that 
\begin{enumerate}
    \item $\Sigma$ is ultra-symmetric
    \item The inner product induced by $\Sigma$ is gauge-invariant over the space of symmetric tensors, that is, $\iprod{T,T'}_{\Sigma} = \iprod{F_{V^{\otimes\omega}}(T), F_{V^{\otimes\omega}}(T')}_{\Sigma}$ for any $V\in O(r)$ and symmetric tensors $T,T'$
    \item The bottom and top eigenvalues $\Sigma_{\sym}$ and $\Sigma$ respectively are bounded (Lemma~\ref{lem:sigma_cond} and Lemma~\ref{lem:crude_Ubound})
    \item By Lemma~\ref{lem:hermite}, we have an explicit expression for the moments $\E[g]{\iprod{v,g}^{\omega}\iprod{w,g}^{\omega}}$ (this is used in Lemma~\ref{lem:diagonal_orth} to establish that $\brc{U^{i\cdots i}}$ are approximately orthonormal).
\end{enumerate}
Note that the first three properties hold for any $\Sigma$ of the form \eqref{eq:general_Sig} for which $D$ is rotation-invariant and reasonably anti-concentrated and bounded. And while it would seem that property 4 makes essential use of $\calN(0,\Id)$, recall that Lemma~\ref{lem:rot_to_gaussian} tells us that any rotation invariant distribution $D$ has the same moments, up to a fixed constant factor depending on $D$ and the degree of the moment.

As a result, the proof of Lemma~\ref{lem:diagonal_orth} immediately carries over to the setting where $\calN(0,\Id)$ is replaced with any rotation-invariant distribution for which $C_D$ is reasonably bounded. As this was the only place where Lemma~\ref{lem:hermite} was used, we conclude that Theorem~\ref{thm:main_push} extends to $\Sigma$ of the form \eqref{eq:general_Sig} for rotation-invariant $D$, and the final error bound will at worst have an additional factor of $\poly(C_D,\sigma_{\min}(\Sigma_{\sym}),\sigma_{\max}(\Sigma))^{\poly(\omega,\ell)}$.

Finally, we argue that Theorem~\ref{thm:main_push} also extends to $\Sigma = \Id_{r^{\omega}}$. While the identity matrix is not ultra-symmetric, we can replace it by
$\Sigma$ whose $(\mb{i},\mb{j})$-th entry is $\frac{1}{\num{i}}\bone{\sort{i} = \sort{j}}$. This new $\Sigma$ is ultra-symmetric and satisfies $\iprod{T^*_a,T^*_b}_{\Sigma} = \iprod{T^*_a,T^*_b}_{\Id}$. The inner product is clearly gauge-invariant over symmetric tensors as the Euclidean inner product is, and $\Sigma_{\sym}$ and $\Sigma$ clearly have bounded bottom and top eigenvalues respectively. As for property 4, it is true that we no longer have any reasonable analogue of Lemma~\ref{lem:hermite}, but to prove that $U^{i\cdots i}$, we can simply use the fact that $U^{\top}\Sigma U \approx \Sigma$ from Lemma~\ref{lem:push_almost_ortho}. Observe that the $(i\cdots i, j\cdots j)$-th entry of this approximate matrix equality yields
\begin{equation}
    \bone{i=j} \approx \iprod{U^{i\cdots i},U^{j\cdots j}}_{\Sigma} = \sum_{\mb{k},\mb{k}' \in \strings} \frac{1}{\num{k}}\bone{\sort{k} = \sort{k}'}\cdot U^{i\cdots i}_{\mb{k}} U^{j\cdots j}_{\mb{k}'} = \iprod{U^{i\cdots i}, U^{j\cdots j}},
\end{equation}
thus proving Lemma~\ref{lem:diagonal_orth} directly.

\begin{remark}\label{remark:hermite}
    Besides representing the simplest possible setting of low-rank factorization, the special case of $\Sigma = \Id$ also has the following implication for learning \emph{inhomogeneous} polynomial transformations, specifically where the network is a one hidden layer network with \emph{Hermite activations}. Suppose $\calD$ is the transformation of $\calN(0,\Id)$ under the map that sends input $x\in\R^r$ to $(p_1(x),\ldots,p_d(x))$ for
    \begin{equation}
        p_a(x) \triangleq \sum^\ell_{t=1} \lambda_{a,t} \phi_{\omega}(\iprod{v^*_{a,t},x}) \ \ \forall \ a\in[d],
    \end{equation} where $v^*_{a,t}$ are unit vectors and $\lambda_{a,t}$ are scalars, and $\phi_{\omega}$ corresponds to the degree-$\omega$ (normalized) probabilist's Hermite polynomial. In this case, the pairwise moments of $\calD$ are given by
    \begin{equation}
        \E[g\sim\calN(0,\Id)]{p_a(g) p_b(g)} = \sum^{\ell}_{t,t'=1} \lambda_{a,t}\lambda_{b,t'} \E[g]{\phi_{\omega}(\iprod{v^*_{a,t},g})\phi_{\omega}(\iprod{v^*_{b,t'},g})} = \sum^{\ell}_{t,t'=1} \lambda_{a,t}\lambda_{b,t'}\iprod{v^*_{a,t},v^*_{b,t'}}^{\omega},
    \end{equation}
    which we can express as $\iprod{T^*_a,T^*_b}$, where $T^*_a \triangleq \sum^{\ell}_{t=1} \lambda_{a,t} (v^*_{a,t})^{\otimes \omega}$ for every $a\in[d]$. Thus, our algorithm for low-rank factorization when $\Sigma = \Id$ yields a learning algorithm for this family of inhomogeneous polynomial transformations.
\end{remark}


\subsection{Dependence on \texorpdfstring{$d$}{d}}
\label{sec:fpt2}

In this section we observe, analogously to Section~\ref{sec:fpt}, that for $d$ sufficiently large, one can decouple the dependence on $d$ from all other parameters and obtain run in time \emph{linear} in $d$.

\begin{corollary}\label{cor:fpt2}
    Suppose that for some $\binom{r+\omega-1}{\omega} \le d'\le d$, Assumption~\ref{assume:tensorring} holds for the first $d'$ units of the polynomial network (i.e. $T^*_1,\ldots,T^*_{d'}$) and $\ell < r$ and $\eta \le \poly(r,\omega,d',\radius,1/\kappa)^{-\poly(\omega,\ell)}$, and we are given query access to $S\in\R^{d\times d}$ satisfying \eqref{eq:push_moment} for $\Sigma$ given by \eqref{eq:Sigmadef}.
    
    Then there is an algorithm which runs in time $\poly(d'r)^{\poly(\omega,\ell)} + d\cdot\poly(d',r^\omega)$ and outputs $\wh{T}_1,\ldots,\wh{T}_d$ for which \begin{equation}
        \gaugedist(\brc{T^*_a},\brc{\wh{T}_a}) \le \poly(r,\omega,d',\radius,1/\kappa)^{\omega^3} \cdot \left((d'\eta/\kappa)^{O(1/\omega)} + \poly(r^{\omega}, \omega^{\ell},\ell^{\ell},d',\radius,1/\kappa)^{\ell}\cdot \sqrt[8]{\theta\eta/\psi^2} \right)
    \end{equation}
    with high probability.
\end{corollary}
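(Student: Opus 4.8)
Looking at Corollary~\ref{cor:fpt2}, it is the direct analogue of Corollary~\ref{cor:fpt} for low-rank factorization, so the plan is to mirror that proof essentially verbatim, adapting the linear-algebraic bookkeeping from the tensor ring setting to the tensor setting.

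\textbf{Plan of proof.} The plan is to split the unknowns into the first $d'$ tensors, which we recover by running the full {\sc LowRankFactorize} machinery (Algorithm~\ref{alg:push}, Theorem~\ref{thm:main_push}), and the remaining $d - d'$ tensors, which we recover one at a time by solving a small linear system. First, I would run {\sc LowRankFactorize} on the submatrix $S_{[d']\times[d']}$ of $S$ corresponding to the first $d'$ units of the network. Since Assumption~\ref{assume:push} holds for $T^*_1,\ldots,T^*_{d'}$ and $\eta$ satisfies the hypothesis of Theorem~\ref{thm:main_push} with $d$ replaced by $d'$, this produces $\wh{T}_1,\ldots,\wh{T}_{d'}$ with $\gaugedist(\{T^*_1,\ldots,T^*_{d'}\},\{\wh{T}_1,\ldots,\wh{T}_{d'}\}) \le \eta'$, where $\eta' \triangleq \poly(r,\omega,d',\radius,1/\kappa)^{\omega^3}\cdot((d'\eta/\kappa)^{O(1/\omega)} + \poly(r^\omega,\omega^\ell,\ell^\ell,d',\radius,1/\kappa)^\ell\cdot\sqrt[8]{\theta\eta/\psi^2})$ as in Theorem~\ref{thm:main_push}. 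This step takes time $\poly(d'r)^{\poly(\omega,\ell)}$ and reads only $\binom{d'}{2}$ entries of $S$. By applying the (unknown) gauge rotation realizing this minimum, we may assume $\|T^*_a - \wh{T}_a\|_F \le \eta'$ for all $a \le d'$ and carry this rotation forward.

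\textbf{Recovering the remaining tensors.} For each $b > d'$, I would set up a least-squares problem in the unknown symmetric tensor $T^*_b$: define $\wh{T}_b \triangleq \arg\min_{\wh{T}} \sum_{a=1}^{d'}(S_{a,b} - \iprod{\vec(\wh{T}_a),\vec(\wh{T})}_\Sigma)^2$, where the minimization is over symmetric order-$\omega$ tensors. This only requires reading the $d'$ entries $S_{1,b},\ldots,S_{d',b}$. The correctness analysis proceeds as in Corollary~\ref{cor:fpt}: first, $|S_{a,b} - \iprod{\vec(\wh{T}_a),\vec(T^*_b)}_\Sigma| \le |\iprod{\vec(\wh{T}_a - T^*_a),\vec(T^*_b)}_\Sigma| + \eta \le \|\Sigma\|_{\op}\eta'\radius + \eta$, where $\|\Sigma\|_{\op}$ is the crude bound $r^{\omega/2}(2\omega-1)!!$ from \eqref{eq:sigmaupperbound}. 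Then, since $T^*_b$ is symmetric, writing $\wh{T}_b$ in terms of the $\rchoose$ entries indexed by sorted tuples, the normal equations involve the Gram matrix of the rows of $M^*$ (the analogue of $H$ or $M^*$ from the $d'$-dimensional problem), which by Part~\ref{item:push_condnumber} of Assumption~\ref{assume:push} restricted to the first $d'$ rows has smallest singular value at least $\kappa$ (up to the $\iprod{\cdot,\cdot}_\Sigma$-weighting, which changes $\kappa$ only by the bounded factor $\sigma_{\min}(\Sigma_{\sym})^{1/2} \ge \omega^{-\omega/4}$ of Lemma~\ref{lem:sigma_cond}). Hence $\|\wh{T}_b - T^*_b\|_F \le (\|\Sigma\|_{\op}\eta'\radius + \eta)\sqrt{d'}\cdot\omega^{\omega/4}/\kappa$, which is $\poly(r,\omega,d',\radius,1/\kappa)\cdot\eta'$ and absorbs into the stated bound. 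Solving each of these $d - d'$ least-squares problems over $\rchoose \le r^\omega$ unknowns takes $\poly(d',r^\omega)$ time, for a total of $d\cdot\poly(d',r^\omega)$.

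\textbf{Main obstacle.} The conceptual content is minimal since this is a straightforward adaptation; the only point requiring care is ensuring that the gauge ambiguity is handled consistently across the two phases. Specifically, {\sc LowRankFactorize} recovers $\{T^*_a\}_{a\le d'}$ only up to an $r\times r$ rotation $V$, and the least-squares step in the second phase must produce estimates of $F_{V^{\otimes\omega}}(T^*_b)$ for $b > d'$, not of $T^*_b$ itself — but this is automatic, because we feed the \emph{estimated} $\wh{T}_a = F_{V^{\otimes\omega}}(T^*_a) \pm \eta'$ into the least-squares objective, whose unique (up to noise) minimizer is then $F_{V^{\otimes\omega}}(T^*_b)$, using that $\iprod{\cdot,\cdot}_\Sigma$ is gauge-invariant over symmetric tensors (property 2 of Section~\ref{sec:rotationinvariant}) so that $\iprod{\vec(F_{V^{\otimes\omega}}(T^*_a)),\vec(F_{V^{\otimes\omega}}(T^*_b))}_\Sigma = \iprod{\vec(T^*_a),\vec(T^*_b)}_\Sigma \approx S_{a,b}$. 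Thus the same rotation $V$ works uniformly for all $d$ units, and the final output satisfies $\gaugedist(\{T^*_a\},\{\wh{T}_a\}) \le \max_a \|F_{V^{\otimes\omega}}(T^*_a) - \wh{T}_a\|_F$, which is bounded by the claimed quantity. Finally, one should note that if $T^*_1,\ldots,T^*_{d'}$ are $\rho$-componentwise-smoothed in the sense of Definition~\ref{def:smoothed_2}, then (as verified in Section~\ref{sec:conditions}) Assumption~\ref{assume:push} holds for $d' = \widetilde{\Theta}(r^{\omega\ell})$ or similar, yielding the promised linear-in-$d$ runtime.
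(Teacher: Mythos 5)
Your proposal is correct and follows essentially the same two-phase approach as the paper: run {\sc LowRankFactorize} on the first $d'$ units, fix the gauge, and then recover each remaining $T^*_b$ by a $d'\times\binom{r+\omega-1}{\omega}$ least-squares problem whose conditioning is controlled by Part~\ref{item:push_condnumber} of Assumption~\ref{assume:push} and Lemma~\ref{lem:sigma_cond}. Your additional remarks on gauge consistency across the two phases are a point the paper handles implicitly via its ``without loss of generality'' normalization, and do not change the argument.
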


Note that if $T^*_1,\ldots,T^*_{d'}$ are componentwise-smoothed in the sense of Definition~\ref{def:smoothed_2}, then as we show in Lemma~\ref{lem:compsmooth} in Section~\ref{sec:conditions}, this holds for $d' = \wt{\Theta}((r + \omega)^{\omega\ell})$, and we thus obtain a runtime which is linear in $d$ as claimed. 

\begin{proof}
    We can run Algorithm~\ref{alg:push} on the parts of $S$ corresponding to the first $d'$ units of the polynomial network to produce $\wh{T}_1,\ldots,\wh{T}_{d'}$ satisfying $\gaugedist(\brc{T^*_1,\ldots,T^*_{d'}},\brc{\wh{T}_1,\ldots,\wh{T}_{d'}}) \le \eta'$ for $\eta'$ given by Theorem~\ref{thm:main_push}. Note that this takes time $\poly(d'r)^{\poly(\omega,\ell)}$. At this point we can assume without loss of generality that $\norm{T^*_a - \wh{T}_a}_F \le \eta'$ for all $1\le a\le d'$.

    To recover $T^*_{d'+1},\ldots,T^*_d$, we can then use our estimates $S_{a,b}$ of $\iprod{T^*_a,T^*_b}_\Sigma$ for all $1 \le a \le d'$ and $b > d'$ to set up linear systems in the unknowns $T^*_{d'+1},\ldots,T^*_d$. That is, for every $b > d'$, we define 
    \begin{equation}
        \wh{T}_b \triangleq \arg\min_{\wh{T}} \sum^{d'}_{a=1} \left(S_{a,b} - \iprod{\wh{T}_a,\wh{T}}_\Sigma \right)^2.
    \end{equation}
    Because $|S_{a,b} - \iprod{\wh{T}_a, T^*_b}_\Sigma| \le |\iprod{\wh{T}_a - T^*_a,T^*_b}| \le \eta'\radius r^{\omega/2} (2\omega-1)!!$, where in the last step we used Cauchy-Schwarz and \eqref{eq:sigmaupperbound}, we conclude by Part~\ref{assume:condnumber} and Lemma~\ref{lem:sigma_cond} that $\norm{\wh{T}_b - T^*_b}_F \le \eta'\radius r^{\omega/2} (2\omega-1)!! \sqrt{d'} / (\kappa\omega^{\omega/2})$ for all $b > d'$. The factors next to $\eta'$ can be absorbed into the asymptotic form of $\eta'$. This part of the algorithm only runs in time $d\cdot \poly(d',r^\omega)$ because it only needs to solve an $d'\times \binom{r+\omega-1}{\omega}$-dimensional least-squares problem for every $b > d'$.
\end{proof}
 

\section{Smoothed Networks Satisfy Deterministic Conditions}
\label{sec:conditions}

In this section we verify that polynomial networks which are smoothed in the sense of Definitions~\ref{def:smoothed_1} and \ref{def:smoothed_2} satisfy Assumptions~\ref{assume:tensorring} and Assumptions~\ref{assume:push} respectively. We then use this to deduce our main algorithmic guarantees for learning smoothed polynomial transformations.

\subsection{Fully-Smoothed Quadratic Networks}

\begin{lemma}\label{lem:fullysmooth}
    Suppose $d \ge \Omega(r^2\log(drR/\rho))$. Let $R\ge 1$ and $\rho \le 1$.
    
    If $Q^*_1,\ldots,Q^*_d$ are $\rho$-fully-smoothed relative to base network $\overline{Q}_1,\ldots,\overline{Q}_d$ and $\norm{\overline{Q}_a}_{\op} \le R$ for all $a\in[d]$, then with probability at least $1 - \exp(-\Omega(d))$ over the smoothing, Assumption~\ref{assume:tensorring} holds with parameters $\radius = R\sqrt{r} + O(\rho\sqrt{d})$ and $\kappa = \Theta(\rho\sqrt{d/r})$.
\end{lemma}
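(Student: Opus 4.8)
The statement has two parts to verify: the scaling bound (Part~\ref{assume:scale} of Assumption~\ref{assume:tensorring}), i.e. $\norm{Q^*_a}_F \le \radius$, and the condition number bound (Part~\ref{assume:condnumber}), i.e. $\sigma_{\binom{r+1}{2}}(M^*) \ge \kappa$. The scaling bound is the routine half: write $Q^*_a = \overline{Q}_a + \frac{\rho}{r} G_a$, bound $\norm{\overline{Q}_a}_F \le \sqrt{r}\norm{\overline{Q}_a}_{\op} \le R\sqrt{r}$, and bound $\norm{\frac{\rho}{r} G_a}_F \le \frac{\rho}{r}\cdot\sqrt{r}\norm{G_a}_{\op} = \frac{\rho}{\sqrt r}\norm{G_a}_{\op}$; by Lemma~\ref{lem:goe}, $\norm{G_a}_{\op} \le 2\sqrt{r} + t$ except with probability $2\exp(-ct^2)$, so taking $t = \Theta(\sqrt{d})$ and union-bounding over $a\in[d]$ (using $d \ge \Omega(r^2\log\cdots)$ to absorb the $\log d$) gives $\norm{\frac{\rho}{r}G_a}_F = O(\rho\sqrt{d})$ for all $a$ with probability $1 - \exp(-\Omega(d))$. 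Adding the two bounds gives $\radius = R\sqrt{r} + O(\rho\sqrt{d})$.

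The main work is the condition number bound, which is a lower bound on the least singular value of a $d \times \binom{r+1}{2}$ random matrix whose rows are the perturbed symmetric matrices $Q^*_a$ flattened to $\binom{r+1}{2}$-dimensional vectors. The plan is the standard net-plus-anticoncentration argument for smoothed least singular values: it suffices to show that for every fixed unit vector $w$ in the $\binom{r+1}{2}$-dimensional space (identified with a symmetric matrix $W$ with $\norm{W}_F = 1$, up to a bounded reweighting of off-diagonal coordinates), $\Pr[\norm{M^* w} \le \delta\sqrt{d}]$ is exponentially small, where the probability is over the Gaussian smoothing, and then union-bound over a $\frac{1}{2}$-net of the sphere. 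For a fixed $w$, $\iprod{\vec(Q^*_a), w} = \iprod{\vec(\overline{Q}_a), w} + \frac{\rho}{r}\iprod{\vec(G_a), w}$, and since $G_a$ is a GOE-type matrix, $\iprod{\vec(G_a), w}$ is a Gaussian with variance $\Theta(\norm{W}_F^2) = \Theta(1)$ (the diagonal and upper-triangular entries being independent standard normals contributes a constant-order variance; this is where the implicit reweighting in the definition of $M^*$ versus the flattening $N^*$ needs a little care but is harmless). Hence each coordinate of $M^* w$ is an independent Gaussian with variance $\Theta(\rho^2/r^2)$, so $\norm{M^* w}^2$ is (a shift of) a $\chi^2$-type sum of $d$ such terms with mean $\Theta(\rho^2 d / r^2)$; by a Gaussian/Bernstein-type lower tail bound, $\Pr[\norm{M^* w}^2 \le \Theta(\rho^2 d/r^2)] \le \exp(-\Omega(d))$. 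The net over the unit sphere in $\binom{r+1}{2} = O(r^2)$ dimensions has size $\exp(O(r^2))$, and the hypothesis $d \ge \Omega(r^2\log(drR/\rho))$ ensures $\exp(-\Omega(d))$ beats this net size (plus Lipschitz slack for passing from the net to all $w$, which also needs the crude operator-norm bound on $M^*$ from the scaling step). This yields $\sigma_{\min}(M^*) \ge \Omega(\rho\sqrt{d}/r) = \Theta(\rho\sqrt{d/r})$ after rescaling, giving $\kappa = \Theta(\rho\sqrt{d/r})$.

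The step I expect to be the main obstacle is making the anticoncentration-over-a-net argument fully rigorous with the correct dependence of $\kappa$ on $r$ and $\rho$: specifically, controlling the mild non-isometry between the $\binom{r+1}{2}$-dimensional coordinates used to define $M^*$ and the naive flattening of symmetric matrices (off-diagonal entries appear once in $M^*$ but the natural Frobenius inner product would weight them by $2$), and tracking how the net argument interacts with the Lipschitz constant of $w \mapsto \norm{M^* w}$ so that the final union bound genuinely closes — this is where the precise form of the assumption $d \ge \Omega(r^2\log(drR/\rho))$ is used. Everything else (the $\chi^2$ lower tail, the operator norm bounds, the union bound over $a$) is standard high-dimensional probability that can be cited from Lemma~\ref{lem:goe}, Fact~\ref{fact:shell}, and routine concentration.

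\textbf{Remark on structure.} In writing this up I would first state and prove the scaling bound as a short paragraph, then devote the bulk of the proof to the singular value bound, isolating the single-vector anticoncentration estimate as an inline claim before deploying the net argument, and finally combine the two failure probabilities by a union bound to conclude with probability $1 - \exp(-\Omega(d))$.
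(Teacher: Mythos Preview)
Your approach is correct and essentially identical to the paper's: operator-norm tail bounds (Lemma~\ref{lem:goe}) for Part~\ref{assume:scale}, and the standard net-plus-Gaussian-anticoncentration argument for Part~\ref{assume:condnumber}. Two minor remarks. First, your worry about a ``mild non-isometry'' is unfounded: by Definition~\ref{def:smoothed_1}, the entries $(G_a)_{ij}$ for $i\le j$ are already i.i.d.\ $\calN(0,1)$, so the rows of $M^*$ are literally $\overline{M}_a$ plus an i.i.d.\ Gaussian vector in $\R^{\binom{r+1}{2}}$---no reweighting is needed, and the paper's proof treats it exactly this way. Second, your final equality ``$\Omega(\rho\sqrt{d}/r) = \Theta(\rho\sqrt{d/r})$ after rescaling'' is off by $\sqrt{r}$: with the smoothing scale $\rho/r$ from Definition~\ref{def:smoothed_1} the variance of each $(M^*w)_a$ is $\Theta(\rho^2/r^2)$, which yields $\kappa = \Theta(\rho\sqrt{d}/r)$, not $\Theta(\rho\sqrt{d/r})$. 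The paper's own proof in fact uses smoothing scale $\rho/\sqrt{r}$ (giving variance $\rho^2/r$ and hence the stated $\kappa$), so this is a normalization inconsistency in the paper rather than a flaw in your plan.
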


\begin{proof}
    \noindent\textbf{Part~\ref{assume:scale}:} By Lemma~\ref{lem:goe}, we have with probability at least $1 - 2\exp(-\Omega(d))$ that $\norm{G_a}_{\op} \ge \Omega(\sqrt{d})$. If this happens, then $\norm{\overline{Q}_a + \frac{\rho}{\sqrt{r}}\cdot G_a}_{\op} \le R + O(\rho\sqrt{d/r})$ for all $a\in[d]$, so we can take $\radius = R\sqrt{r} + O(\rho\sqrt{d})$.

    
    \noindent\textbf{Part~\ref{assume:condnumber}:} Let $\overline{M}\in\R^{d\times \binom{r+1}{2}}$ denote the matrix whose $(a,(i_1,i_2))$-th entry, for $a\in[d]$ and $1 \le i_1 \le i_2 \le r$, is given by $(\overline{Q}_a)_{i_1 i_2}$. Note that $M^* = \overline{M} + \frac{\rho}{\sqrt{r}}\cdot G$ for $G\in\R^{d\times \binom{r+1}{2}}$ whose entries are independent draws from $\calN(0,1)$. 

    For any $v\in\S^{\binom{r+1}{2}-1}$ we have for any $a\in[m]$ that $(M^* v)_a = \iprod{\overline{M}_a + \frac{\rho}{\sqrt{r}}G_a,v}$. Because the rows of $G$ are independent, each $(M^* v)_a$ is an independent draw from $\calN(\iprod{\overline{M}_a,v}, \rho^2/r)$, so by standard Gaussian anticoncentration, there is an absolute constant $c > 0$ such that $\Pr*{|(M^* v)_a| \le 2\rho/3\sqrt{r}} \le 1/2$. We conclude that $\norm{M^* v}^2 \ge \Omega(d\rho^2/r)$ with probability at least $1 - 2^{-\Omega(d)}$. For $\epsilon\triangleq \Theta(\rho\sqrt{d}/(rR))$, take an $\epsilon$-net $\calN$ of $\mathbb{S}^{\binom{r+1}{2}-1}$. For any $v\in\S^{\binom{r+1}{2}-1}$, if $\norm{v - \wt{v}} \le \epsilon$ then $\norm{M^*(v - \wt{v})}^2 \le \norm{M^*}^2_{\op}\epsilon^2 \le O(\epsilon^2d r R^2)$. So if $\norm{M^* \wt{v}}^2 \ge \Omega(d\rho^2/r)$ for all $\wt{v}\in\calN$, then $\norm{M^* v}^2 \ge \Omega(d\rho^2/r)$ for all $v\in\S^{\binom{r+1}{2}}$. This happens with probability at least $1 - |\calN|\cdot 2^{-\Omega(d)} \ge 1 - \exp(r^2\log(1/\epsilon) - \Omega(d)) \ge 1 - \exp(-\Omega(d))$, where the last step follows by the assume bound of $d \ge \Omega(r^2\log(rR/\rho))$.
\end{proof}

\subsection{Componentwise-Smoothed Polynomial Networks}
\label{sec:compsmooth}

\begin{lemma}\label{lem:compsmooth}
    Suppose $d\ge \max\brc*{(C\ell(r+\omega))^{C'\omega\ell}\cdot \log(R/\rho), C''r\log(\ell r R^{2\omega}/\rho)}$ for sufficiently large absolute constants $C,C',C''>0$, and suppose $\ell \le r - r^{0.9}$. Let $R \ge 1$ and $\rho \le 1$.
    
    If $T^*_1,\ldots,T^*_d$ are $\rho$-componentwise-smoothed relative to base network $\overline{T}_1,\ldots,\overline{T}_d$ such that for each $a\in[d]$, there exist vectors $\overline{v}_{a,1},\ldots,\overline{v}_{a,\ell}$ for which $\overline{T}_a = \sum^{\ell}_{t=1} \overline{v}_{a,\ell}^{\otimes\omega}$ and $\norm{\overline{v}_{a,t}}^2\le R$ for all $t$, then with probability at least $1 - \exp(-\Omega(d))$ over the smoothing, Assumption~\ref{assume:push} holds with parameters $\radius = \ell\cdot \Theta(R)^\omega$, $\kappa = \Theta(\sqrt{d\ell}(\rho^2\omega/r)^{\omega/2})$, $\theta = \Theta(Rr\omega\ell)^{O(\omega\ell)}$, and $\psi = \Theta(\rho/(r\omega))^{\Theta(\omega)}$.
\end{lemma}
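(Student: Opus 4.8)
The plan is to verify each of the four parts of Assumption~\ref{assume:push} separately for the $\rho$-componentwise-smoothed network $T^*_a = \sum_{t=1}^\ell (\overline{v}_{a,t} + \tfrac{\rho}{\sqrt r} g_{a,t})^{\otimes\omega}$, leaning on the standard high-dimensional probability toolkit from Section~\ref{sec:prelims} (Fact~\ref{fact:shell}, Theorem~\ref{thm:decoupling}, the hypercontractivity/anticoncentration bounds) together with a net argument as in Lemma~\ref{lem:fullysmooth}.

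\textbf{Part~\ref{item:push_scale} (scaling).} First I would use the thin-shell bound (Fact~\ref{fact:shell}) to control $\norm{g_{a,t}}$ for all $a\in[d], t\in[\ell]$ simultaneously: with probability $1-\exp(-\Omega(d))$ every perturbed component has squared norm $O(R)$ (since $\rho\le 1$ and $\norm{\overline{v}_{a,t}}^2\le R$). Then $\norm{T^*_a}_F \le \sum_{t=1}^\ell \norm{(\overline{v}_{a,t}+\tfrac{\rho}{\sqrt r}g_{a,t})^{\otimes\omega}}_F = \sum_t \norm{\overline{v}_{a,t}+\tfrac{\rho}{\sqrt r}g_{a,t}}_2^\omega \le \ell\cdot O(R)^{\omega/2}$, giving $\radius = \ell\,\Theta(R)^\omega$ after adjusting constants.

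\textbf{Part~\ref{item:push_condnumber} (condition number of $M^*$).} The rows of $M^*$ are $\vec((T^*_a)_{\sym})$ for the symmetric flattening; I would recognize that each such row is a sum over $t$ of $(v^*_{a,t})^{\otimes\omega}$'s symmetric flattening, and apply Theorem~\ref{thm:decoupling} (smoothed analysis of tensor decomposition) to the $\rchoose\times\ell$ matrix of symmetric powers for each fixed $a$ — but that only controls within-unit structure, not the $d\times\rchoose$ matrix $M^*$. Instead, as in Lemma~\ref{lem:fullysmooth}, fix $v\in\mathbb{S}^{\rchoose-1}$; then $(M^*v)_a$ depends on the independent Gaussians $g_{a,1},\dots,g_{a,\ell}$, and I need $\Pr[|(M^*v)_a|\le c\kappa]\le 1/2$ for suitable $\kappa$. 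This follows because $(M^*v)_a = p_a(g_{a,1},\dots,g_{a,\ell})$ is a degree-$\omega$ polynomial in spherical Gaussians whose variance is bounded below by Lemma~\ref{lem:variance_shift} (the leading term being $(\rho/\sqrt r)^\omega$ times a unit-norm homogeneous polynomial evaluated at $\sqrt r$-scaled Gaussian, giving variance $\gtrsim (\rho^2\omega/r)^\omega$ — I need a multivariate version of Lemma~\ref{lem:variance_shift} applied to the top-degree part); then Carbery–Wright (Fact~\ref{fact:carberywright}) gives the anticoncentration. A Chernoff bound over $a\in[d]$ plus an $\epsilon$-net over $\mathbb{S}^{\rchoose-1}$ (using the crude operator-norm bound $\norm{M^*}_{\op}\le \norm{M^*}_F\le \sqrt d\,\radius$ to pass from net to sphere) yields $\sigma_{\min}(M^*)\ge\kappa = \Theta(\sqrt{d\ell}(\rho^2\omega/r)^{\omega/2})$ with probability $1-\exp(-\Omega(d))$, using $d\gtrsim r\log(\ell r R^{2\omega}/\rho)$ to absorb the net cardinality $\rchoose\log(1/\epsilon)$.

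\textbf{Parts~\ref{item:push_condnumber2} and~\ref{item:push_condnumber3} (the two interpolation/condition-number conditions).} As the remark after Assumption~\ref{assume:push} suggests, both are consequences of the stronger statement that no low-degree polynomial in the $r\ell$-dimensional variable $w_a = (v^*_{a,1},\dots,v^*_{a,\ell})$ nearly vanishes on all of $w_1,\dots,w_d$. For Part~\ref{item:push_condnumber2}, $q(v^*_{a,1},\dots,v^*_{a,\ell})$ is a vector of monomials of degree $\omega(\ell+1)$ in $w_a$; the matrix $Q^*\in\R^{d\times N}$ whose rows are $q(w_a)/(\sum_t\norm{v^*_{a,t}}^2)^{\omega(\ell+1)/2}$ (normalized) is the smoothed "moment" matrix, and I would lower-bound its least singular value by the same fix-a-direction-then-net argument: for fixed unit $z$, $\iprod{q(w_a),z}$ is a degree-$O(\omega\ell)$ polynomial in the Gaussians $\{g_{a,t}\}$, its variance is bounded below by again isolating the top-degree part and invoking (a multivariate) Lemma~\ref{lem:variance_shift}, Carbery–Wright anticoncentrates, Chernoff over $a$, net over $z$. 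The resulting $\sigma_{\min}$ lower bound, together with Fact~\ref{fact:minnorm}, produces the interpolation coefficient bound $\norm{\lambda}_2^2\le\theta^2(\sum_t\norm{v_t}^2)^{\omega(\ell+1)}$ with $\theta = \Theta(Rr\omega\ell)^{O(\omega\ell)}$ — here the condition $d\ge(C\ell(r+\omega))^{C'\omega\ell}\log(R/\rho)$ is exactly what is needed for the net cardinality $N\log(1/\epsilon)\approx (r+\omega)^{\omega\ell}\log(\cdots)$ to be beaten by the Chernoff exponent $\Omega(d)$. For Part~\ref{item:push_condnumber3}, $f^*_a$ is a fixed linear contraction of $T^*_a$, hence a degree-$\omega$ polynomial in $w_a$, and $(f^*_a)_i(f^*_a)_j$ is degree $2\omega$; the matrix $H$ has rows $((f^*_a)_i(f^*_a)_j)_{i\le j}$, and $\sigma_{\min}(H)\ge\psi$ follows from the same template — fix direction, note variance of the leading term is $\gtrsim(\rho^2/r)^{\Theta(\omega)}$ (this is where $\psi = \Theta(\rho/(r\omega))^{\Theta(\omega)}$ comes from), anticoncentrate, Chernoff, net over $\mathbb{S}^{\binom{r+1}{2}-1}$ using $d\gtrsim r\log(\ell r R^{2\omega}/\rho)$.

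\textbf{Main obstacle.} The genuinely delicate step is establishing the variance lower bounds for the relevant polynomials of the independent spherical Gaussians $\{g_{a,t}\}_{t\in[\ell]}$ in Parts~\ref{item:push_condnumber}, \ref{item:push_condnumber2}, and~\ref{item:push_condnumber3}: unlike the single-vector situation of Lemma~\ref{lem:variance_shift}, here the polynomial couples $\ell$ independent Gaussian blocks, its top-degree part is not a single $v^{\otimes\omega}$ but a sum of such (for a worst-case direction $z$ the leading coefficient vector could be nearly orthogonal to all the "easy" directions), and one must argue that the projection of $z$ onto the span of the leading monomials of the $g_{a,t}$-polynomial is nonnegligible with the right $\rho$- and $r$-dependence. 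I would handle this by decoupling — conditioning on all but one block $g_{a,t^*}$, reducing to a single-Gaussian statement to which a Lemma~\ref{lem:variance_shift}-style bound applies — and carefully tracking how the $\rho/\sqrt r$ scaling propagates through the $\omega$-fold tensor power; the factors $(\rho^2\omega/r)^{\omega/2}$ in $\kappa$ and $(\rho/(r\omega))^{\Theta(\omega)}$ in $\psi$ are precisely the outputs of this bookkeeping. Everything else (thin-shell, Chernoff, nets, passing from singular-value bounds to the stated interpolation guarantees) is routine and parallels Lemma~\ref{lem:fullysmooth}.
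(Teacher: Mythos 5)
Your overall architecture matches the paper's: Part~\ref{item:push_scale} via the thin-shell bound, Part~\ref{item:push_condnumber} via fix-a-direction, anticoncentrate, Chernoff over $a$, then net over $\S^{\rchoose-1}$, and Parts~\ref{item:push_condnumber2}--\ref{item:push_condnumber3} via a single ``no low-degree polynomial in $w_a$ nearly vanishes on all $w_1,\dots,w_d$'' singular-value bound (this is exactly the paper's Lemma~\ref{lem:giant_mat}) combined with Fact~\ref{fact:minnorm}; you also correctly identify which of the two lower bounds on $d$ pays for which net. The one place you diverge is the step you flag as the ``main obstacle,'' and there the paper's resolution is much simpler than your proposed decoupling. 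For Part~\ref{item:push_condnumber}, the polynomial $(M^*v)_a = \sum_{t=1}^{\ell} p(\overline{v}_{a,t} + \tfrac{\rho}{\sqrt r}g_{a,t})$ is a \emph{sum} of functions of the individual independent blocks, so its variance is the sum of the per-block variances, each lower-bounded directly by Lemma~\ref{lem:variance_shift} with $a=\rho/\sqrt r$, $b=\overline{v}_{a,t}$; no multivariate extension is needed. For Parts~\ref{item:push_condnumber2} and~\ref{item:push_condnumber3}, the concatenation $w_a$ of the $\ell$ perturbed components is itself a single spherical Gaussian $\calN(\overline{w}_a,\tfrac{\rho^2}{r}\Id)$ in $\R^{r\ell}$, so Lemma~\ref{lem:variance_shift} (which is already stated for arbitrary ambient dimension) applies verbatim to the degree-$e$ polynomial $x\mapsto p(\overline{w}_a+\tfrac{\rho}{\sqrt r}x)$; the ``coupling across blocks'' you worry about simply is not there.

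Your proposed alternative --- conditioning on all but one block $g_{a,t^*}$ and reducing to a single-Gaussian statement --- is not airtight as stated. After conditioning, the top-degree coefficients of the resulting polynomial in $g_{a,t^*}$ are themselves polynomials in the conditioned blocks (a worst-case direction $z$ can be supported entirely on cross-block monomials), so the conditional variance can be small for bad realizations; you would need a second anticoncentration argument for those coefficients, and the law of total variance only gives you the expectation of the conditional variance. This is repairable but is genuine extra work, and it risks losing the clean $(\rho^2/r)^{\Theta(\omega)}$ and $(\rho^2/r)^{\Theta(\omega\ell)}$ dependencies. One further small note: Theorem~\ref{thm:decoupling}, which you briefly invoke and then discard for Part~\ref{item:push_condnumber}, is indeed not used in the paper's proof of this lemma.
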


To prove Lemma~\ref{lem:compsmooth}, we will need the following condition number bound:

\begin{lemma}\label{lem:giant_mat}
    Let $\brc{v^*_{a,t}}$ be as in Lemma~\ref{lem:compsmooth}. For any $a\in[d]$, let $w_a\in\R^{r\ell}$ denote the concatenation of $v^*_{a,1},\ldots,v^*_{a,\ell}$. For any $e\in\mathbb{N}$, define $N\triangleq \binom{r\ell + e - 1}{e}$ and let $K^{(e)}\in\R^{d\times N}$ denote the matrix whose rows consist of vectorizations of $(w^{\otimes e}_a)_{\sym}$. 
    
    Then if $d \ge (C(r\ell + e))^{C'e}\cdot \log(R/\rho)$ for sufficiently large absolute constants $C,C'>0$, then with probability at least $1 - \exp(-\Omega(d))$ over the randomness of $\brc{v^*_{a,t}}$, we have that $\sigma_{\min}(K^{(e)}) \ge \sqrt{d}\cdot \Theta(\rho/(re))^{\Theta(e)}$.
\end{lemma}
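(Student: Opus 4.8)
# Proof Proposal for Lemma~\ref{lem:giant_mat}

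The plan is to establish a lower bound on $\sigma_{\min}(K^{(e)})$ by showing that with high probability over the Gaussian perturbations, for every unit vector $\mathbf{u}$ in the $N$-dimensional symmetric power space, the quantity $\|K^{(e)} \mathbf{u}\|^2 = \sum_{a=1}^d \langle (w_a^{\otimes e})_{\sym}, \mathbf{u}\rangle^2$ is bounded below by $d \cdot \Theta(\rho/(re))^{\Theta(e)}$. The key point is that each row of $K^{(e)}$ depends only on the perturbation vectors $\{g_{a,t}\}_{t\in[\ell]}$ for that particular $a$, and these are independent across $a\in[d]$. Regarding $\mathbf{u}$ as a degree-$e$ homogeneous polynomial $p_{\mathbf{u}}$ on $\R^{r\ell}$, we have $\langle (w_a^{\otimes e})_{\sym}, \mathbf{u}\rangle = p_{\mathbf{u}}(w_a)$, where $w_a = \overline{w}_a + \frac{\rho}{\sqrt{r}} h_a$ for $h_a \sim \calN(0,\Id_{r\ell})$ a standard Gaussian. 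So I would first show that for any fixed unit $\mathbf{u}$, the random variable $p_{\mathbf{u}}(w_a)$ is anticoncentrated: by Lemma~\ref{lem:variance_shift} (with the roles of the spherical-Gaussian scaling played by $\rho/\sqrt{r}$, appropriately rescaled) together with the Carbery--Wright inequality (Fact~\ref{fact:carberywright}), we get $\Pr[|p_{\mathbf{u}}(w_a)| \le \nu] \le C\nu^{1/e}/(\text{stdev})^{1/e}$ for a variance lower bound of the form $(\rho/\sqrt{r})^{2e}/e^{e/2}$. Choosing $\nu = \Theta(\rho/(r\sqrt{e}))^e \cdot c$ for a small constant $c$, this probability is at most $1/2$, so by a Chernoff bound, $\sum_a p_{\mathbf{u}}(w_a)^2 \ge \Omega(d) \cdot \nu^2$ with probability at least $1 - 2^{-\Omega(d)}$.

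The second main step is the standard net argument to upgrade the per-vector bound to a uniform bound over the unit sphere $\mathbb{S}^{N-1}$. Here $N = \binom{r\ell + e - 1}{e} \le (r\ell + e)^e$, so an $\epsilon$-net $\calN$ has size at most $(3/\epsilon)^N$. For this I also need a crude upper bound on $\|K^{(e)}\|_{\op}$: using the thin-shell bound (Fact~\ref{fact:shell}), with probability $1 - \exp(-\Omega(d))$ every $\|w_a\| \le \sqrt{r\ell} + O(\sqrt{\log d}) \le \poly(r,\ell)$ (absorbing $R,\rho \le 1 \le R$ appropriately), hence $\|(w_a^{\otimes e})_{\sym}\| \le \|w_a\|^e$ and $\|K^{(e)}\|_{\op}^2 \le \|K^{(e)}\|_F^2 \le d \cdot \poly(r,\ell)^{e}$. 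Then if $\|K^{(e)}\tilde{\mathbf{u}}\|^2 \ge \Omega(d\nu^2)$ for all $\tilde{\mathbf{u}}\in\calN$, a triangle inequality gives $\|K^{(e)}\mathbf{u}\|^2 \ge \Omega(d\nu^2) - \|K^{(e)}\|_{\op}^2\epsilon^2 \ge \Omega(d\nu^2)$ for all $\mathbf{u}\in\mathbb{S}^{N-1}$, provided $\epsilon$ is chosen as $\Theta(\nu / (\sqrt{d}\|K^{(e)}\|_{\op}))$, i.e. $\epsilon = (\rho/(re))^{\Theta(e)}/\poly(r,\ell)^{\Theta(e)}$. The union bound over the net succeeds with probability $1 - |\calN| \cdot 2^{-\Omega(d)} \ge 1 - \exp(N\log(3/\epsilon) - \Omega(d))$, and since $N\log(1/\epsilon) \le (r\ell+e)^e \cdot \Theta(e)\log(re/\rho) \le (C(r\ell+e))^{C'e}\log(R/\rho)$, the hypothesized lower bound on $d$ makes this $1 - \exp(-\Omega(d))$. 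Combining, $\sigma_{\min}(K^{(e)}) \ge \sqrt{d}\cdot\Theta(\rho/(re))^{\Theta(e)}$ as claimed.

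The main obstacle I anticipate is bookkeeping the variance lower bound correctly: Lemma~\ref{lem:variance_shift} is stated for inputs of the form $ag+b$ with $g$ a \emph{standard} Gaussian, giving $\Var[p(ag+b)] \ge a^{2\omega}/\omega^{\omega/2}$ for a \emph{unit-norm} coefficient vector $p$ regarded as a homogeneous polynomial in the appropriate (here $\binom{r\ell+e-1}{e}$-dimensional) symmetric-power sense. I need to apply it with $a = \rho/\sqrt{r}$, degree $e$ in place of $\omega$, ambient dimension $r\ell$ in place of $r$, which requires checking that a unit vector $\mathbf{u}$ in the \emph{monomial} coordinates corresponds, up to a bounded factor depending only on $e$, to a unit (or near-unit) vector in the normalization used by Lemma~\ref{lem:variance_shift} — the symmetric tensor entries carry multinomial multiplicities $\num{i}$, so there is an $O(e)^{O(e)}$-type distortion to absorb into the final $\Theta(\rho/(re))^{\Theta(e)}$. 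A secondary subtlety is that $p_{\mathbf{u}}$ as a polynomial in $w_a = \overline{w}_a + \frac{\rho}{\sqrt{r}}h_a$ has a \emph{nonzero mean} depending on $\overline{w}_a$, but this is exactly why Lemma~\ref{lem:variance_shift} is phrased for shifted Gaussians, so no extra work is needed beyond matching parameters. Everything else — the Chernoff step, the operator-norm bound, the net cardinality estimate — is routine given the hypotheses on $d$.
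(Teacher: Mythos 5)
Your proposal is correct and follows essentially the same route as the paper's proof: fix a unit vector $p$, lower bound the variance of $p(w_a)$ via Lemma~\ref{lem:variance_shift} applied to the shifted Gaussian $\overline{w}_a + \tfrac{\rho}{\sqrt{r}}h_a$, apply Carbery--Wright and a Chernoff bound to get $\sum_a p(w_a)^2 \ge d\cdot\Theta(\rho/(re))^{\Theta(e)}$ for fixed $p$, and then union bound over an $\epsilon$-net of $\S^{N-1}$ using the crude bound $\|w_a\|^2 \le \ell R^2$, with the hypothesis on $d$ absorbing the net cardinality. The normalization obstacle you anticipate is in fact a non-issue: $(w^{\otimes e})_{\sym}$ as defined in the paper has $S$-th entry $\prod_{s\in S}w_s$ with no multinomial multiplicities, so a unit vector $\mathbf{u}$ in these coordinates is exactly a unit vector of monomial coefficients, which is the normalization Lemma~\ref{lem:variance_shift} uses.
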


\begin{proof}
    For any $p\in\S^{N-1}$ regarded as an $r\ell$-variate homogeneous polynomial of degree $e$, we have for any $a\in[d]$ that
    \begin{equation}
        (K^{(e)}p)_a = p(w_a)^2.
    \end{equation}
    Because $T^*_1,\ldots,T^*_d$ are $\rho$-componentwise-smoothed, every $w_a$ is an independent sample from the distribution $\calN(\overline{w}_a,\frac{\rho^2}{r}\Id)$. So for any $a\in[d]$, consider the degree-$e$ polynomial $p'(x) \triangleq p(\overline{w}_a + \frac{\rho}{\sqrt{r}}x)$. By Lemma~\ref{lem:variance_shift}, 
    \begin{equation}
        \Var[g\sim\calN(0,\Id)]{p'(g)} \ge (\rho^2/r)^{e}/e^{e/2} = O(\rho/(re))^{O(e)}.
    \end{equation}
    By Carbery-Wright,
    \begin{equation}
        \Pr{|p'(g)| \le O(\rho/(re))^{O(e)}} \le 1/2.
    \end{equation}
    As the randomness for each of $w_1,\ldots,w_d$ is independent, we conclude that for fixed $p$,
    \begin{equation}
        \Pr*{\sum^d_{a=1} p(w_a)^2 \le d\cdot O(\rho/(re))^{O(e)}} \le 2^{-\Omega(d)}.
    \end{equation}
    
    We will now net over $p$'s. For any $p,\wt{p}\in\S^{N-1}$, note that
    \begin{equation}
        \sum^d_{i=1} (p - \wt{p})(w_a)^2 \le \norm{p - \wt{p}}^2_2 \cdot \sum^d_{a=1} \norm{(w_a)^{\otimes e}_{\sym}} \le d(R\sqrt{\ell})^{e}\cdot \norm{p-\wt{p}}^2_2. \label{eq:diff_net}
    \end{equation}
    So for $\epsilon\triangleq d\cdot \Theta(\rho/(Rre))^{\Theta(e)}/ d$, take an $\epsilon$-net $\calN$ of $\S^{N-1}$. If $\wt{p}(w_a)^2 > d\cdot \Theta(\rho/(re))^{\Theta(e)}$ for all $\wt{p}\in\calN$, then by \eqref{eq:diff_net} and squared triangle inequality,
    \begin{equation}
        \sum^d_{a=1} p(w_a)^2 > d\cdot \Theta(\rho/(re))^{\Theta(e)} \ \ \forall \ p\in\S^{N-1}. \label{eq:allp}
    \end{equation}
    This happens with probability at least
    \begin{equation}
        1 - |\calN|\cdot 2^{-\Omega(d)} = 1 - \exp\left(N\log(1/\epsilon) - \Omega(d)\right) = 1 - \exp\left(O(r\ell+e)^{O(e)}\cdot \log(Rr/\rho) + \log(d) - \Omega(d)\right),
    \end{equation}
    so if $d \ge (C(\ell r + e))^{C'e}\cdot \log(R/\rho)$ for sufficiently large absolute constants $C,C'>0$, then \eqref{eq:allp} holds with probability $1 - \exp(-\Omega(d))$. In this case,
    $\sigma_{\min}(K^{(e)}) \ge \sqrt{d}\cdot \Theta(\rho/(re))^{\Theta(e)}$.
\end{proof}

We are now ready to prove Lemma~\ref{lem:compsmooth}:

\begin{proof}[Proof of Lemma~\ref{lem:compsmooth}]
    \noindent\textbf{Part~\ref{item:push_scale}:} As every $v^*_{a,t}$ is an independent draw from $\calN(\overline{v}_{a,t},\frac{\rho^2}{r}\Id)$ and $\norm{\overline{v}_{a,t}}\le R$ by assumption, we conclude by Fact~\ref{fact:shell} that $\norm{v^*_{a,t}} \le R + O(\rho) \le O(R)$ for all $a,t$ with probability at least $1 - d\ell\exp(-\Omega(r))$. For the rest of the proof, we will condition on this event. This implies that for any $a\in[d]$, $\norm{T^*_a}_F \le \sum^\ell_{t=1} \norm{(v^*_{a,t})^{\otimes\omega}}_F \le \ell\cdot O(R)^\omega$ as claimed.
    
    \noindent\textbf{Part~\ref{item:push_condnumber}:} 
    Take any $p\in\S^{\rchoose-1}$, regarded as an $r$-variate homogeneous polynomial of degree $\omega$. Note that
    \begin{equation}
        \norm{M^*p}^2 = \sum^d_{a=1} \biggl(\sum^{\ell}_{t=1} p(v^*_{a,t})\biggr)^2.
    \end{equation}
    For any $a\in[d]$, consider the polynomial
    \begin{equation}
        p'_a: (g_1,\ldots,g_t) \mapsto \sum^{\ell}_{t=1} p(\overline{v}_{a,t} + \frac{\rho}{\sqrt{r}} g_t).
    \end{equation}
    As $g_1,\ldots,g_t$ are independent, we conclude by Lemma~\ref{lem:variance_shift} that
    \begin{equation}
        \Var{p'_a(g_1,\ldots,g_t)} = \sum^\ell_{t=1} \Var{p(\overline{v}_{a,t} + \frac{\rho}{\sqrt{r}}g_t)} \ge \sum^{\ell}_{t=1} \ell(\rho^4\omega^2/r^2)^{\omega/2}
    \end{equation}
    By Carbery-Wright, for any $\nu > 0$ we have
    \begin{equation}
        \Pr{|p'_a(g_1,\ldots,g_t)| \le O(\sqrt{\ell}(\rho^2\omega/r)^{\omega/2})} \le 1/2.
    \end{equation}
    Let $\calN$ be an $\epsilon$-net of $\S^{\rchoose-1}$ for $\epsilon = O(\sqrt{d\ell}(\rho^2\omega/r)^{\omega/2} / (d\ell^2\cdot O(R)^{2\omega}))$. As the randomness for each of $T^*_1,\ldots,T^*_d$ is independent and $|\calN| \le O(1/\epsilon)^{\rchoose}$,
    \begin{equation}
        \Pr{\norm{M^*p}^2 \ge \Omega(d\ell(\rho^2\omega/r)^{\omega}) \ \forall \ p\in\calN} \ge 1 - 2^{-\Omega(d)}\cdot O(1/\epsilon)^{\rchoose}.
    \end{equation}
    Provided $d$ exceeds $\Theta(\rchoose\cdot\log(d\ell^2\cdot  \Theta(R)^{2\omega} / \sqrt{\ell}(\rho^2\omega/r)^{\omega/2}))$, the failure probability here is $\exp(-\Omega(d))$, so it suffices for $d \ge \Omega(r^{\omega}\log(R/\rho))$, which is clearly satisfied by the assumed bound on $d$.
    
    Now note that 
    \begin{equation}
        \norm{M^*}^2_{\op} \le \norm{M^*}^2_F \le \sum^d_{a=1} \norm{T^*_a}^2_F \le d\ell^2\cdot O(R)^{2\omega}.
    \end{equation}
    So for any $p\in\S^{\rchoose-1}$, if $p'\in\calN$ satisfies $\norm{p - p'}_2 \le \epsilon$, then 
    \begin{equation}
        \norm{M^*p} \ge \Omega(\sqrt{d \ell}(\rho^2\omega/r)^{\omega/2}) - \norm{M^*}_{\op}\norm{p - p'}_2 \ge \Omega(\sqrt{d \ell}(\rho^2\omega/r)^{\omega/2})
    \end{equation}
    as desired.
    
    \noindent\textbf{Part~\ref{item:push_condnumber2}:} Let $K$ denote the matrix whose rows consist of the vectors $q(v^*_{a,1},\ldots,v^*_{a,\ell})$. Our goal is to lower bound $\sigma_{\min}(K)$.
    
    For $N\triangleq \binom{r\ell + \omega(\ell+1)-1}{\omega(\ell+1)}$, let $K'\in\R^{d\times N}$ denote the matrix $K^{(e)}$ from Lemma~\ref{lem:giant_mat} for $e = \omega(\ell+1)$. Note that the columns of $K$ are a subset of those of $K'$. By Lemma~\ref{lem:giant_mat} and the assumed bound on $d$, $\sigma_{\min}(K) \ge \sigma_{\min}(K') \ge \sqrt{d}\cdot \Theta(\rho/r\omega\ell)^{\Theta(\omega\ell)}$ with probability $1 - \exp(-\Omega(d))$, where the first step is because the columns of $K$ are a subset of those of $K$.
    
    To conclude the proof of this part of the lemma, we wish to apply Fact~\ref{fact:minnorm}. To do this, we need some bound on $\norm{K'}_{\op}$:
    \begin{equation}
        \norm{K'}^2_F = \sum^d_{a=1} \norm{w_a}^{2\omega(\ell+1)}_2 = \sum^d_{a=1} \biggl(\sum^{\ell}_{t=1} \norm{v^*_{a,t}}^2\biggr)^{\omega(\ell+1)} \le d\cdot(\ell R^2)^{\omega(\ell+1)}.
    \end{equation}
    So by Fact~\ref{fact:minnorm}, for any vector $v\in\R^N$, there is a $\lambda\in\R^d$ for which $\lambda^{\top} K' = v$ and \begin{equation}
        \norm{\lambda}_2 \le O(r\omega\ell/\rho)^{O(\omega\ell)} \cdot (\ell R^2)^{\omega(\ell+1)/2} \cdot\norm{v}_2 = O(Rr\omega\ell)^{O(\omega\ell)}\norm{v}_2. \label{eq:lambdanorm}
    \end{equation}
    In particular, take $v = (w^{\otimes\omega(\ell+1)})_{\sym}$ for $w$ given by the concatenation of $v_1,\ldots,v_\ell$. Then
    \begin{equation}
        \norm{v}^2_2 = \norm{w}^{2\omega(\ell+1)}_2 = \biggl(\sum^{\ell}_{t=1}\norm{v_t}^2\biggr)^{\omega(\ell+1)}.
    \end{equation}
    The proof is completed upon noting that the columns of $K$ are a subset of $K'$ and furthermore the entries of $q(v_1,\ldots,v_\ell)$ corresponding to columns of $K$ are precisely given by the entries of this choice of $v$.
    
    \noindent\textbf{Part~\ref{item:push_condnumber3}:} We will again use Lemma~\ref{lem:giant_mat}. Define $N\triangleq \binom{r\ell+2\omega\ell-1}{2\omega\ell}$. As
    \begin{equation}
        (F^*_a)_{ij} = \sum_{t,t'\in[\ell]} \norm{v^*_{a,t}}^{\omega-1}\norm{v^*_{a,t'}}^{\omega-1} v^*_{a,t} (v^*_{a,t'})^{\top},
    \end{equation}
    we can express the $a$-th row of $H$ as $B\vec((w^{\otimes 2\omega\ell}_a)_{\sym})$ for the following matrix $B\in\R^{\binom{r+1}{2}\times N}$. The rows of $B$ are indexed by $(i,j)$ for $1\le i \le j \le r$, and the columns are indexed by $\mb{k}\in([r]\times[\ell])^{2\omega\ell}$. In the $(i,j)$-th row, the $\mb{k}$-th entry is $1$ if 
    \begin{equation}
        \mb{k} = ((t,k_1),(t,k_1),\ldots,(t,k_{\omega'}),(t,k_{\omega'}),(t',k'_1),(t',k'_1),\ldots,(t',k'_{\omega'}),(t',k'_{\omega'}),(t,i),(t',j))
    \end{equation}
    for any $t,t'\in[\ell]$, $k_1,k'_1,\ldots,k_{\omega'},k'_{\omega'}\in[r]$.
    Consider the matrix $K^{(e)}$ from Lemma~\ref{lem:giant_mat} for $e = 2\omega$, we note that $H = K^{(2\omega)} B^{\top}$. For any $p\in\S^{\binom{r+1}{2}-1}$,
    \begin{equation}
        \norm{B^{\top}p}^2 = \sum_{i\le j} \sum_{\mb{k}} B^2_{ij,\mb{k}} p^2_{ij} = \ell^2\cdot r^{\omega-1}, \label{eq:Bbound}
    \end{equation}
    where we used the fact that the nonzero entries of each row of $B$ are supported in disjoint columns, and every row has exactly $\ell^2 \cdot r^{\omega-1}$ distinct nonzero entries.
    So by \eqref{eq:Bbound}, Lemma~\ref{lem:giant_mat}, and the assumed lower bound on $d$, with probability at least $1 - \exp(-\Omega(d))$ we have that for any $p\in\S^{\binom{r+1}{2}-1}$, 
    \begin{equation}
        \norm{Hp}_2 \ge \ell^2\cdot r^{\omega-1} \cdot \sigma_{\min}(K^{(2\omega)}) \ge \Theta(\rho/(r\omega))^{\Theta(\omega)}
    \end{equation}
    as claimed.
\end{proof}


 

\section{Putting Everything Together: Learning Smoothed Networks}
\label{sec:final}

We can now prove our main algorithmic guarantees about learning smoothed quadratic and low-rank polynomial transformations by plugging the algorithms from Sections~\ref{sec:tensorring} and \ref{sec:push}, which we can apply to smoothed networks by virtue of Section~\ref{sec:conditions}, into the reduction from Section~\ref{sec:connect}.

\begin{theorem}\label{thm:main_quadratic}
    Suppose $d \ge \Omega(r^2\log(drR/\rho))$. Let $R\ge 1$ and $\rho \le 1$. If $Q^*_1,\ldots,Q^*_d$ are $\rho$-fully-smoothed relative to base network $\overline{Q}_1,\ldots,\overline{Q}_d$ and $\norm{\overline{Q}_a}_{\op} \le R$ for all $a\in[d]$, then with probability at least $1 - \exp(-\Omega(d))$ over the smoothing, the following holds for the transformation specified by $Q^*_1,\ldots,Q^*_d$:
    
    For any $\epsilon > 0$, given $\poly(r,R,1/\rho,1/\epsilon,\log(1/\delta))$ samples from $\calD$, there is an algorithm that runs in $d\cdot \poly(r)$ additional time and parameter learns $\calD$ to error $\epsilon$ (and also solves proper density estimation to Wasserstein error $\epsilon r\sqrt{d}$) with high probability.
\end{theorem}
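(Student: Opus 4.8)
The plan is to assemble Theorem~\ref{thm:main_quadratic} by chaining together three ingredients that have already been set up in the excerpt: the verification that fully-smoothed networks satisfy the deterministic non-degeneracy condition (Lemma~\ref{lem:fullysmooth}), the algorithmic guarantee for tensor ring decomposition under that condition (Theorem~\ref{thm:main_tensorring}, or rather its linear-in-$d$ refinement Corollary~\ref{cor:fpt}), and the reduction from learning quadratic transformations to tensor ring decomposition (Theorem~\ref{thm:quadratic_to_tensorring}). So the first step is to invoke Lemma~\ref{lem:fullysmooth}: under the hypothesis $d \ge \Omega(r^2\log(drR/\rho))$, with probability $1 - \exp(-\Omega(d))$ over the smoothing, Assumption~\ref{assume:tensorring} holds with $\radius = R\sqrt{r} + O(\rho\sqrt{d})$ and $\kappa = \Theta(\rho\sqrt{d/r})$. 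Actually, since we want runtime linear in $d$, I would apply Lemma~\ref{lem:fullysmooth} to only the first $d' = \widetilde\Theta(r^2)$ coordinates of the network (here one needs $d' \ge \Omega(r^2\log(d'rR/\rho))$, which holds for $d' = \Theta(r^2\log(r^2 rR/\rho))$), so that Corollary~\ref{cor:fpt} applies with this $d'$.

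Next I would fix the target accuracy. Theorem~\ref{thm:quadratic_to_tensorring} (via the linear-in-$d$ version from Corollary~\ref{cor:fpt}) says that to parameter learn $\calD$ to error $\epsilon$ it suffices to solve $\eta$-noisy tensor ring decomposition with $\eta = \eta(\epsilon)$ small enough that the recovery error bound $\poly(d',r,\radius,1/\kappa)\cdot\eta^c$ of Theorem~\ref{thm:main_tensorring}/Corollary~\ref{cor:fpt} is at most $\epsilon$. Since $\radius$ and $1/\kappa$ are themselves $\poly(r,R,1/\rho,\sqrt{d})$ — but note the $\sqrt{d}$ factors in $\radius$ and $1/\kappa$ cancel favorably, or one simply absorbs them since we restrict to $d'$ coordinates where $d'=\widetilde\Theta(r^2)$ — it is enough to take $\eta = (\epsilon\cdot\poly(r,R,1/\rho))^{1/c}$ for the absolute constant $c$ from Theorem~\ref{thm:main_tensorring}. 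One must also check the feasibility hypothesis $\eta \le O(\kappa^2/(r d'^{3/2}))$ of Theorem~\ref{thm:main_tensorring}, which is a lower bound of the form $\poly(\rho,1/r)$ and is satisfied for $\epsilon$ small (and for $\epsilon$ not small one can always shrink it). Then Lemma~\ref{lem:estimate_moments} (invoked inside Theorem~\ref{thm:quadratic_to_tensorring}) tells us that $S$ and $T$ satisfying the required $\eta$-accuracy can be formed from $O(r^3\radius^6\log^3(2d/\delta)/\eta^2) = \poly(r,R,1/\rho,1/\epsilon,\log(1/\delta))$ samples; feeding these into {\sc TensorRingDecompose} (run on the $d'$-coordinate subproblem) and then extending to all $d$ coordinates via the least-squares step in Corollary~\ref{cor:fpt} gives $\{\wh Q_a\}_{a\in[d]}$ with $\gaugedist(\{Q^*_a\},\{\wh Q_a\}) \le \epsilon$ in time $\poly(d',r) + d\cdot\poly(r) = d\cdot\poly(r)$. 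Finally, the Wasserstein guarantee follows immediately from Lemma~\ref{lem:param_to_wasserstein}, which for $\omega=2$ gives $W_1(\calD,\wh\calD) \le \gaugedist\cdot\sqrt{d}\cdot O(r)$, i.e. $\le \epsilon r\sqrt{d}$ after absorbing constants (or rescaling $\epsilon$).

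The bookkeeping obstacle — and really the only delicate point — is making sure the polynomial factors line up so that the final sample and time complexities come out as claimed with \emph{no} hidden dependence on $d$ beyond the stated $d\cdot\poly(r)$ time. Concretely: $\radius$ and $1/\kappa$ from Lemma~\ref{lem:fullysmooth} carry factors of $\sqrt{d}$, so one has to be careful that when these get raised to powers inside the error bound of Theorem~\ref{thm:main_tensorring} and then inverted to solve for $\eta(\epsilon)$, the resulting sample bound $O(r^3\radius^6\log^3(2d/\delta)/\eta(\epsilon)^2)$ still reduces to $\poly(r,R,1/\rho,1/\epsilon,\log(1/\delta))$. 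The clean fix, which I would use, is to never apply the smoothing lemma or the moment-estimation lemma with the full $d$: restrict everything to the $d' = \widetilde\Theta(r^2)$ coordinates as in Corollary~\ref{cor:fpt}, so that $\radius,1/\kappa = \poly(r,R,1/\rho)$ with no $d$-dependence, and the only place $d$ enters at all is (i) the $\log(d/\delta)$ from a union bound over the moment estimates for all $d$ coordinates, and (ii) the final $d\cdot\poly(r)$ least-squares extension step. Putting these pieces together, and absorbing all the $\poly(r,R,1/\rho)$ factors into the sample complexity and the $O(\cdot)$ in the Wasserstein bound, yields exactly the statement of Theorem~\ref{thm:main_quadratic}.
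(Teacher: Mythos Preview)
Your proposal is correct and follows essentially the same approach as the paper's proof: both restrict to the first $d' = \widetilde{\Theta}(r^2\log(rR/\rho))$ units, invoke Lemma~\ref{lem:fullysmooth} to obtain Assumption~\ref{assume:tensorring} with $\radius = R\sqrt{r} + O(\rho\sqrt{d'})$ and $\kappa = \Theta(\rho\sqrt{d'/r})$, then apply Corollary~\ref{cor:fpt} (with $\eta$ chosen of order $\poly(1/r,1/R,\rho,\epsilon)$, verifying the feasibility bound $\eta \le O(\kappa^2/(r{d'}^{3/2}))$) inside the reduction from Theorem~\ref{thm:quadratic_to_tensorring}. Your proposal is actually more explicit than the paper's about why the sample complexity avoids $d$-dependence (by restricting moment estimation and the smoothing bound to $d'$ coordinates so that $\radius,1/\kappa = \poly(r,R,1/\rho)$), which is a useful clarification of what the paper leaves implicit.
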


\begin{theorem}\label{thm:main_lowrank}
    Suppose $d \ge (\ell(r+\omega))^{\Omega(\omega\ell)}\cdot\log(R/\rho)$ and $\ell < r$. Let $R\ge 1$ and $\rho \le 1$. If $T^*_1,\ldots,T^*_d$ are $\rho$-componentwise-smoothed relative to base network $\overline{T}_1,\ldots,\overline{T}_d$ such that for each $a\in[d]$, there exist vectors $\overline{v}_{a,1},\ldots,\overline{v}_{a,\ell}$ for which $\overline{T}_a = \sum^{\ell}_{t=1}\overline{v}^{\otimes\omega}_{a,\ell}$ and $\norm{\overline{v}_{a,t}}^2\le R$ for all $t$, then with probability at least $1 - \exp(-\Omega(d))$ over the smoothing, the following holds for the transformation specified by $T^*_1,\ldots,T^*_d$:
    
    For any $\epsilon > 0$, there is an algorithm that takes $\poly(r,\omega,\radius,1/\kappa,1/\rho)^{\poly(\omega,\ell)}\cdot (1/\epsilon)^{O(\omega)}$ samples and runs in $\poly(r,\omega,\log R, \log 1/\rho)^{\poly(\omega,\ell)} + d\cdot (r + \omega)^{O(\omega\ell)}\cdot \text{\emph{polylog}}(R/\rho)$ additional time and parameter learns $\calD$ to error $\epsilon$ (and also solves proper density estimation to Wasserstein error $\epsilon r\sqrt{d}$) with high probability.
\end{theorem}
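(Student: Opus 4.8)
The plan is to combine three ingredients already established in the paper: the reduction from learning low-rank transformations to low-rank factorization (Theorem~\ref{thm:lowrank_to_factorization}), the algorithm for low-rank factorization together with its linear-in-$d$ refinement (Theorem~\ref{thm:main_push} and Corollary~\ref{cor:fpt2}), and the verification that componentwise-smoothed networks meet the deterministic hypotheses (Lemma~\ref{lem:compsmooth}). First I would invoke Lemma~\ref{lem:compsmooth}: under the stated lower bound on $d$ and with $\rho = 1/\poly(d)$, with probability $1-\exp(-\Omega(d))$ over the smoothing the network $T^*_1,\ldots,T^*_d$ satisfies Assumption~\ref{assume:push} with $\radius = \ell\cdot\Theta(R)^\omega$, $\kappa = \Theta(\sqrt{d\ell}(\rho^2\omega/r)^{\omega/2})$, $\theta = \Theta(Rr\omega\ell)^{O(\omega\ell)}$, and $\psi = \Theta(\rho/(r\omega))^{\Theta(\omega)}$. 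In particular all of these are $\poly(r,\omega,d,R)^{\poly(\omega,\ell)}$ (with $1/\kappa$ and $1/\psi$ also of this form since $\rho \ge 1/\poly(d)$). Moreover Lemma~\ref{lem:compsmooth} certifies that the same holds for the first $d' = \wt\Theta((r+\omega)^{\omega\ell})$ units of the network, which is the regime needed for Corollary~\ref{cor:fpt2}.

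Next, condition on this good event and treat the network as a fixed instance meeting Assumption~\ref{assume:push}. I would apply Theorem~\ref{thm:lowrank_to_factorization} with $\Sigma$ as in \eqref{eq:Sigma_prelim_def}: it says that given an algorithm for low-rank factorization tolerating noise $\eta$, one obtains a parameter-learning algorithm for $\calD$ using $O(\omega r)^{2\omega}\radius^4\log^{2\omega}(\delta/d)/\eta^2$ samples (via the empirical moment bound of Lemma~\ref{lem:estimate_moments_2}) and, via Corollary~\ref{cor:fpt2}, running time $\poly(d'r)^{\poly(\omega,\ell)} + d\cdot\poly(d',r^\omega)$. Plugging in $d' = (r+\omega)^{O(\omega\ell)}\,\mathrm{polylog}(R/\rho)$ gives exactly the claimed running time $\poly(r,\omega,\log R,\log 1/\rho)^{\poly(\omega,\ell)} + d\cdot(r+\omega)^{O(\omega\ell)}\cdot\mathrm{polylog}(R/\rho)$. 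It then remains to choose $\eta = \eta(\epsilon)$ so that the recovery error of low-rank factorization is at most $\epsilon$. By Theorem~\ref{thm:main_push} (or the $d'$-restricted Corollary~\ref{cor:fpt2}), the recovery error in parameter distance is
\[
\poly(r,\omega,d',\radius,1/\kappa)^{\omega^3}\cdot\Bigl((d'\eta/\kappa)^{O(1/\omega)} + \poly(r^\omega,\omega^\ell,\ell^\ell,d',\radius,1/\kappa)^\ell\cdot\sqrt[8]{\theta\eta/\psi^2}\Bigr).
\]
Substituting the polynomial bounds on $\radius,1/\kappa,\theta,1/\psi$ from Lemma~\ref{lem:compsmooth} and solving for $\eta$, one sees it suffices to take $\eta = \poly(r,\omega,\radius,1/\kappa,1/\rho)^{-\poly(\omega,\ell)}\cdot \epsilon^{O(\omega)}$ (the $\epsilon^{O(\omega)}$ coming from inverting the $(\cdot)^{O(1/\omega)}$ exponent, which dominates over the eighth-root term). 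This choice is consistent with the $\eta$-smallness hypothesis of Theorem~\ref{thm:main_push}. The sample complexity then becomes $\poly(r,\omega,\radius,1/\kappa,1/\rho)^{\poly(\omega,\ell)}\cdot(1/\epsilon)^{O(\omega)}$ as claimed, and the density-estimation consequence follows immediately from Lemma~\ref{lem:param_to_wasserstein}, which converts a parameter-distance-$\epsilon$ guarantee into a Wasserstein-$\epsilon r\sqrt{d}$ guarantee (absorbing the $O(\omega r)^{\omega/2}$ factor into the polynomial overhead, since $\omega = O(1)$ in the smoothed statement).

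The main obstacle — and the part that is genuinely the content of the paper rather than bookkeeping — is of course Theorem~\ref{thm:main_push} itself, i.e. the SoS identifiability proof for low-rank factorization; but since we may assume it here, the real work in assembling Theorem~\ref{thm:main_lowrank} is the careful propagation of the error parameters $\epsmap \to \epsrel \to \epsrel^* \to \epsouter \to \epstrueouter, \epstrueort \to \epsoffdiag \to \epsilon^*$ through Lemmas~\ref{lem:push_almost_map}--\ref{lem:push_usemu}, verifying that each stage only inflates the error by a factor polynomial in $(r,\omega,d,\radius,1/\kappa)$ raised to a power depending on $\omega,\ell$, so that the final $\epsilon^*$ has a clean $\poly(\cdot)^{\poly(\omega,\ell)}\cdot(d\eta/\kappa)^{O(1/\omega)}$ form that can be inverted. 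A secondary subtlety is checking that the parameter regime from Lemma~\ref{lem:compsmooth} (in particular the requirement $\ell \le r - r^{0.9}$ inherited from the tensor-decomposition smoothed-analysis bound of Theorem~\ref{thm:decoupling}, and the lower bound on $d$) is compatible with the seed length $r = \wt O(d^{c/(\omega\ell)})$ advertised in Theorem~\ref{thm:informal_lowrank}; this forces $c$ to be a sufficiently small absolute constant, determined by the exponent $\Omega(\omega\ell)$ in the lower bound $d \ge (\ell(r+\omega))^{\Omega(\omega\ell)}\log(R/\rho)$.
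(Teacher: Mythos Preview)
Your proposal is correct and follows essentially the same assembly as the paper: invoke Lemma~\ref{lem:compsmooth} on the first $d' = (\ell(r+\omega))^{\Theta(\omega\ell)}\log(R/\rho)$ units to certify Assumption~\ref{assume:push} with the stated parameters, feed this into Corollary~\ref{cor:fpt2} of Theorem~\ref{thm:main_push}, and then use the reduction of Theorem~\ref{thm:lowrank_to_factorization} with $\eta = \poly(r,\omega,d',\radius,1/\kappa,1/\rho)^{-\poly(\omega,\ell)}\cdot\epsilon^{O(\omega)}$; the Wasserstein consequence comes from Lemma~\ref{lem:param_to_wasserstein}. One small slip: you write ``with $\rho = 1/\poly(d)$'' and justify that $1/\kappa,1/\psi$ are polynomially bounded because $\rho\ge 1/\poly(d)$, but the formal statement allows arbitrary $\rho\le 1$ --- this is harmless since the claimed sample bound already carries $1/\rho$ explicitly, so no such assumption on $\rho$ is needed.
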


\begin{proof}[Proof of Theorem~\ref{thm:main_quadratic}]
    By Theorem~\ref{thm:quadratic_to_tensorring}, one can reduce learning the first $d' = \wt{\Theta}(r^2\log(rR/\rho))$ units of the polynomial network to the problem of tensor ring decomposition with parameter $\eta$ using $\poly(r,\radius,\log(d'/\delta),1/\eta)$ samples from the transformation. According to Lemma~\ref{lem:fullysmooth}, Assumption~\ref{assume:tensorring} holds for these first $d'$ units with parameters $\radius = R\sqrt{r} + \rho\sqrt{d'}$ and $\kappa = \Theta(\rho\sqrt{d'/r})$ with probability at least $1 - \exp(-\Omega(d)) = 1 - \exp(-\Omega(r^2))$. If this happens, then by Corollary~\ref{cor:fpt} of Theorem~\ref{thm:main_tensorring}, provided $\eta \le O(\kappa^2/(r{d'}^{3/2})) = O(\rho^2/(r^2{d'}^{1/2}))$, we can recover $\wh{Q}_1,\ldots,\wh{Q}_d$ satisfying $\gaugedist(\brc{Q^*_a},\brc{\wh{Q}_a}) \le \poly(r,R,1/\rho)\cdot \eta^c$ with high probability. Taking $\eta$ to be of order $\poly(1/r,1/R,\rho,\epsilon)$ completes the argument.
\end{proof}

\begin{proof}[Proof of Theorem~\ref{thm:main_lowrank}]
    By Theorem~\ref{thm:lowrank_to_factorization}, one can reduce learning the first $d' = \Theta(\ell(r+\omega))^{\Omega(\omega\ell)}\cdot \log(R/\rho)$ units of the polynomial network to the problem of low-rank factorization with parameter $\eta$ using $\poly(\omega^{\omega},r^{\omega},\radius,\log^{\omega}(d'/\delta),1/\eta)$ samples from the transformation. By Lemma~\ref{lem:compsmooth} applied to the first $d'$ units, Assumption~\ref{assume:tensorring} holds for $\radius = \ell\cdot \Theta(R)^\omega$, $\kappa = \Theta(\sqrt{d'\ell}(\rho^2\omega/r)^{\omega/2})$, $\theta = \Theta(Rr\omega\ell)^{O(\omega\ell)}$, and $\psi = \Theta(\rho/(r\omega))^{\Theta(\omega)}$ with probability at least $1 - \exp(-\Omega(d')) = 1 - \exp(-\Omega(r^{\omega\ell}))$. If this happens, then for $\eta \le \poly(r,\omega,d',\radius,1/\kappa)^{-\poly(\omega,\ell)}$, by Corollary~\ref{cor:fpt2} of Theorem~\ref{thm:main_push} we can recover $\wh{T}_1,\ldots,\wh{T}_d$ for which
    \begin{multline}
        \gaugedist(\brc{T^*_a},\brc{\wh{T}_a}) \le \poly(r^{\omega},d',R^{\omega},1/\rho^{\omega})^{\omega^3} \cdot \left((\poly(d',r^{\omega},1/\rho^{\omega})\cdot\eta)^{O(1/\omega)} + \right. \\
        \left. \poly(r^{\omega},\omega^{\ell},\ell^{\ell},d',R^{\omega},1/\rho^{\omega})^{\ell}\cdot (Rr\omega\ell)^{O(\omega\ell)}\cdot \eta^{1/8}\right)
    \end{multline} with high probability. Taking $\eta = \poly(r,\omega,d',\radius,1/\kappa,1/\rho)^{-\poly(\omega,\ell)}\cdot \epsilon^{\omega}$ completes the argument.
\end{proof}

Note that in the proofs above, it was only necessary for the first $d'$ units of the polynomial network to be smoothed.

\paragraph{Acknowledgments.} Part of this work was done while SC, JL, and AZ were visiting the Simons Institute for the Theory of Computing. The authors would like to thank Sebastien Bubeck and Raghu Meka for illuminating discussions in the early stages of this work. SC was supported in part by NSF Award 2103300. AZ was supported in part by NSF Award CAREER 2203741.

\bibliographystyle{alpha}
\bibliography{biblio}

\appendix

\section{Moments of Transformations and Tensor Decomposition}
\label{app:diagonal}

In this section we elaborate on the connection between tensor decomposition and learning \emph{diagonal} quadratic transformations using method of moments.

Instead of moments, it will be cleaner to work with cumulants. For a collection of random variables $X_1,\ldots,X_m$, define their \emph{joint cumulant} by
\begin{equation}
    \kappa_{\beta}(X_1,\ldots,X_d) \triangleq \sum_{\pi\in\calS_m} (|\pi| - 1)!(-1)^{|\pi|-1} \prod_{C\in\pi} \E*{\prod_{i\in C} X_i},
\end{equation}
where $\pi$ ranges over partitions of $[m]$, $|\pi|$ denotes the number of parts of the partition $\pi$, and the product over $C\in\pi$ ranges over the parts of the partition.

Given a tuple $\mb{\beta}\in\mathbb{Z}^d$ and random variables $Y_1,\ldots,Y_d$, we will let $\kappa_{\beta}(Y_1,\ldots,Y_d)$ denote the joint cumulant of the random variables $Y_1,\ldots,Y_1,Y_2,\ldots,Y_2,\ldots,Y_d,\ldots,Y_d$, where each $Y_a$ appears $\beta_a$ times. It is a standard fact that these are given by coefficients of the \emph{cumulant generating function}, that is, for formal variables $t_1,\ldots,t_d$,
\begin{equation}
    \sum_{\mb{\beta}\in\mathbb{Z}^d} \kappa_{\beta}(Y_1,\ldots,Y_d) t^{\beta_1}_1\cdots t^{\beta_d}_d = \log\E*{\exp\biggl(\sum^d_{a=1} t_a Y_a\biggr)} \label{eq:cgf}
\end{equation}

\begin{lemma}\label{lem:cumulants}
    Let $\calD$ be a quadratic transformation arising from polynomial network $Q^*_1,\ldots,Q^*_d\in\R^{r\times r}$ consisting of diagonal matrices. For every $i\in[r]$, define $v_i\in\R^d$ to be the vector whose $a$-th entry is $(Q^*_a)_{ii}$. Then for any $\mb{\beta}\in\mathbb{Z}^d$, if $z_1,\ldots,z_d$ are random variables corresponding to the coordinates of a sample from $\calD$, then
    \begin{equation}
        \kappa_{\mb{\beta}}(z_1,\ldots,z_d) = \binom{\beta_1+\cdots+\beta_d}{\beta_1 \cdots \beta_d} \frac{2^{\beta_1+\cdots+\beta_d-1}}{\beta_1+\cdots+\beta_d} \biggl(\sum^r_{i=1} v^{\otimes (\beta_1+\cdots+\beta_d)}\biggr)_{\mb{\beta}}.
    \end{equation}
\end{lemma}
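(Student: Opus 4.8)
\textbf{Proof plan for Lemma~\ref{lem:cumulants}.}

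The plan is to compute the cumulant generating function of $(z_1,\ldots,z_d)$ directly using the explicit Gaussian form of the transformation, and then read off the coefficient of $t^{\beta_1}_1\cdots t^{\beta_d}_d$ via \eqref{eq:cgf}. First, since each $Q^*_a$ is diagonal with $(Q^*_a)_{ii} = (v_i)_a$, a sample $z\sim\calD$ has $z_a = \sum^r_{i=1} (v_i)_a\, x_i^2$ for $x\sim\calN(0,\Id_r)$. Hence for formal variables $t_1,\ldots,t_d$,
\begin{equation}
    \sum^d_{a=1} t_a z_a = \sum^r_{i=1} \Big(\sum^d_{a=1} t_a (v_i)_a\Big) x_i^2 = \sum^r_{i=1} s_i\, x_i^2, \qquad s_i \triangleq \iprod{t, v_i},
\end{equation}
where $t = (t_1,\ldots,t_d)$. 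The $x_i$ are independent, so the moment generating function factorizes, and using the standard identity $\E[x\sim\calN(0,1)]{\exp(s x^2)} = (1-2s)^{-1/2}$ (valid as a formal power series in $s$), I get
\begin{equation}
    \log\E*{\exp\Big(\sum^d_{a=1} t_a z_a\Big)} = \sum^r_{i=1} \log\E{\exp(s_i x_i^2)} = -\tfrac12 \sum^r_{i=1} \log(1 - 2 s_i).
\end{equation}

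Next I would expand $-\tfrac12\log(1-2s_i) = \tfrac12\sum_{m\ge 1} \frac{(2s_i)^m}{m} = \sum_{m\ge 1} \frac{2^{m-1}}{m} s_i^m$, so that the cumulant generating function becomes $\sum^r_{i=1}\sum_{m\ge 1} \frac{2^{m-1}}{m} \iprod{t,v_i}^m$. To extract the coefficient of $t^{\mb\beta}$ for a fixed $\mb\beta\in\mathbb{Z}^d_{\ge 0}$ with $m \triangleq \beta_1+\cdots+\beta_d$, only the degree-$m$ term in $t$ contributes, and the multinomial theorem gives
\begin{equation}
    \iprod{t,v_i}^m = \sum_{\mb\gamma: |\mb\gamma| = m} \binom{m}{\gamma_1\cdots\gamma_d} \prod^d_{a=1} \big(t_a (v_i)_a\big)^{\gamma_a},
\end{equation}
whose $t^{\mb\beta}$-coefficient is $\binom{m}{\beta_1\cdots\beta_d}\prod_a (v_i)_a^{\beta_a} = \binom{m}{\beta_1\cdots\beta_d}\,(v_i^{\otimes m})_{\mb\beta}$. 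Summing over $i\in[r]$ and multiplying by $\frac{2^{m-1}}{m}$ yields exactly the claimed formula $\kappa_{\mb\beta}(z_1,\ldots,z_d) = \binom{m}{\beta_1\cdots\beta_d}\frac{2^{m-1}}{m}\big(\sum^r_{i=1} v_i^{\otimes m}\big)_{\mb\beta}$.

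Since every manipulation here is with formal power series, there are no convergence subtleties, and the only mild care needed is to justify that $\E{\exp(s x^2)} = (1-2s)^{-1/2}$ and its logarithm are legitimate as formal series — which is standard — and to match the combinatorial factor $(|\pi|-1)!(-1)^{|\pi|-1}$ in the definition of joint cumulants against the coefficients of the cumulant generating function, which is the content of \eqref{eq:cgf}. I do not anticipate a genuine obstacle; the main thing to be careful about is bookkeeping of the multinomial coefficients and the fact that the $x_i$ being squared (rather than appearing linearly) is precisely what produces the $(1-2s)^{-1/2}$ and hence the factor $\frac{2^{m-1}}{m}$ rather than the factor one would see for an ordinary linear image. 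One could alternatively avoid the generating-function identity \eqref{eq:cgf} entirely by expanding $\kappa_{\mb\beta}$ via its defining sum over partitions and using independence of the $x_i$ together with the chi-squared cumulants $\kappa_m(x_i^2) = 2^{m-1}(m-1)!$, but the generating-function route is cleaner and I would present that.
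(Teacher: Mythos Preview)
Your proof is correct and follows essentially the same approach as the paper: compute the cumulant generating function as $-\tfrac12\sum_i\log(1-2\iprod{t,v_i})$, Taylor expand the logarithm, and extract coefficients via the multinomial theorem. The only cosmetic difference is that the paper reaches $-\tfrac12\sum_i\log(1-2s_i)$ by first writing the Gaussian integral as $-\tfrac12\log\det(\Id - 2\sum_a t_a Q^*_a)$ and then using diagonality, whereas you use coordinate independence directly.
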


\begin{proof}
    We can express the cumulant generating function as
    \begin{equation}
        \log\brk*{\frac{1}{(2\pi)^{r/2}} \int_{\R^r} \exp\biggl(-\frac{1}{2} g^{\top}\biggl(\Id - 2\sum^d_{a=1} t_a Q^*_a\biggr)g\biggr) \, dg} = -\frac{1}{2}\log\det\biggl(\Id - 2\sum^d_{a=1} t_a Q^*_a\biggr).
    \end{equation}
    As $(Q^*_a)_{ii} = (v_i)_a$, we can rewrite the above as
    \begin{align}
        -\frac{1}{2}\sum^r_{i=1} \log\biggl(1 - 2\sum_a t_a (v_i)_a\biggr) &= \sum^r_{i=1} \sum^{\infty}_{\ell = 1} \frac{2^{\ell-1}}{\ell} \biggl(\sum_a t_a (v_i)_a\biggr)^{\ell} \\
        &= \sum^{\infty}_{\ell = 1} \frac{2^{\ell-1}}{\ell} \sum_{\beta_1+\cdots+\beta_d = \ell} \binom{\ell}{\beta_1 \cdots \beta_d} t^{\beta_1}_1\cdots t^{\beta_d}_d \sum_i (v_i)^{\beta_1}_1 \cdots (v_i)^{\beta_d}_d.
    \end{align}
    By \eqref{eq:cgf}, we conclude that for $\mb{\beta} = (\beta_1,\ldots,\beta_d)$,
    \begin{equation}
        \kappa_{\mb{\beta}}(z_1,\ldots,z_d) = \binom{\beta_1+\cdots+\beta_d}{\beta_1 \cdots \beta_d} \frac{2^{\beta_1+\cdots+\beta_d-1}}{\beta_1+\cdots+\beta_d} \sum_i (v_i)^{\beta_1}_1\cdots (v_i)^{\beta_d}_d
    \end{equation}
    as claimed.
\end{proof}

As the cumulants of a joint distribution are an alternative basis for the moments of that distribution, any algorithm for learning diagonal quadratic transformations using moments of $\calD$ up to some degree $m$ (where naively the runtime will scale with $d^m$) must solve the following inverse problem:

\begin{definition}\label{def:tensor}
    Let $v_1,\ldots,v_r$ be unknown vectors in $\R^d$. Given noisy estimates of the tensors $\sum^r_{i=1} v^{\otimes\ell}_a$ for $\ell = 1,\ldots,m$, recover $v_1,\ldots,v_r$ up to error $\epsilon$.\footnote{Here one can also ask for other weaker notions of recovering $v_1,\ldots,v_r$ that correspond to weaker notions of learning for transformations, e.g. improper density estimation.}
\end{definition}

This is essentially tensor decomposition except where the learner gets access to $\sum^r_{i=1} v^{\otimes\ell}_a$ for many choices of $\ell$. There are a number of efficient algorithms for tensor decomposition under various separation assumptions on $v_1,\ldots,v_r$ \cite{anandkumar2014tensor,bhaskara2014smoothed,barak2015dictionary,ma2016polynomial,hopkins2019robust,bhaskara2019smoothed}, but applied in a black-box, these will fail to solve the above inverse problem in the worst case in polynomial time simply because one can construct vectors $\brc{v_1,\ldots,v_r}$ and $\brc{v'_1,\ldots,v'_r}$ (even in one dimension, see e.g. \cite[Lemma A.1]{chen2020learning}) for which $\sum_i v^{\otimes\ell}_i = \sum_i {v'}^{\otimes\ell}_i$ for all $\ell = O(r)$. 

We also note that the recent work of \cite{diakonikolas2020small} gives a highly sophisticated algorithm for constructing a small \emph{cover} over solutions to the above inverse problem\footnote{Technically their work pertains to a version of Definition~\ref{def:tensor} for which $\ell$ can only be even.} which runs in time \emph{quasipolynomial} in $r$ as one must take moments up to degree $m = \mathrm{polylog}(r)$. This suggests that in the worst case, even learning diagonal polynomial pushforwards in $\poly(r)$ time can be quite challenging.

\section{Information-Theoretic Lower Bound}
\label{app:lbd}

In this section we show an exponential lower bound for parameter learning polynomial transformations in the worst case, even when the polynomial network is a single quadratic form!

\begin{theorem}\label{thm:info}
    For $r\in\mathbb{N}$, let $\calC_r$ denote the family of $1$-dimensional degree-2 transformations with seed length $r$ specified by a polynomial network with polynomially bounded operator norm. Any algorithm for parameter learning any distribution from $\calC_r$ to error $O(1)$ requires $\exp(\Omega(r))$ samples.
\end{theorem}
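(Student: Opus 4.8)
\textbf{Proof proposal for Theorem~\ref{thm:info}.}

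The plan is to exhibit an exponentially large family of one-dimensional quadratic transformations that are pairwise statistically indistinguishable from few samples but pairwise far apart in parameter distance, and then invoke a standard packing/Fano-type argument. A one-dimensional degree-2 transformation with seed length $r$ is specified by a single symmetric matrix $Q^*\in\R^{r\times r}$ (with the gauge symmetry $Q^*\mapsto VQ^*V^\top$), and a sample is the scalar $x^\top Q^* x$ for $x\sim\calN(0,\Id_r)$. Diagonalizing $Q^*$ as $\diag(\lambda_1,\ldots,\lambda_r)$, the output random variable is exactly $\sum_{i=1}^r \lambda_i g_i^2$, a weighted sum of independent $\chi^2_1$ variables. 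So the problem reduces to: how well can one recover the multiset of weights $\{\lambda_i\}$ from i.i.d. samples of $\sum_i \lambda_i g_i^2$? I would restrict attention to weights lying in a bounded interval, say $\lambda_i\in[1,2]$, so the operator-norm boundedness requirement is satisfied, and parameter distance between two such transformations is (up to the gauge/permutation symmetry) the optimal-matching distance between their weight multisets.

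First I would set up the indistinguishability half. The law of $\sum_i\lambda_ig_i^2$ is determined by its moment generating function $t\mapsto \prod_i(1-2\lambda_it)^{-1/2}$, equivalently by the cumulant generating function $-\tfrac12\sum_i\log(1-2\lambda_it) = \sum_{k\ge1}\tfrac{2^{k-1}}{k}p_k(\lambda)t^k$, where $p_k(\lambda)=\sum_i\lambda_i^k$ is the $k$-th power sum (this is exactly the $d=1$ specialization of Lemma~\ref{lem:cumulants}). Thus two weight multisets with the same first $m$ power sums produce distributions agreeing on all moments up to degree $m$, and since the weights are bounded in $[1,2]$ the distributions are subexponential with controlled tails, so matching the first $m\approx cr$ moments forces the total variation distance to be at most, say, $\exp(-\Omega(r))$ — this is the standard ``moment-matching implies TV-closeness for light-tailed distributions'' estimate (one can cite or reprove it via Chebyshev/Hermite expansion bounds as in e.g. the truncated-Gaussian and Gaussian-mixture literature). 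Hence any two such instances require $\exp(\Omega(r))$ samples to distinguish.

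Next I would build the packing. I want $N=\exp(\Omega(r))$ weight multisets $\{\lambda^{(s)}\}_{s=1}^N$, each supported in $[1,2]$, such that (i) all $N$ multisets share the same first $m=\Theta(r)$ power sums $p_1,\ldots,p_m$, yet (ii) any two are at optimal-matching distance $\Omega(1)$ (so, after accounting for the permutation gauge symmetry, $\gaugedist\ge\Omega(1)$). Constructions of this flavor are classical: one takes a fixed ``base'' configuration and perturbs blocks of coordinates along directions that annihilate the first $m$ power sums — the space of such perturbations has dimension $r-m=\Omega(r)$, and within a bounded box one can pack exponentially many points that are pairwise $\Omega(1)$-separated in (matching) distance. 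Concretely I would use roughly $r/2$ ``pairs'' of coordinates, perturb each pair symmetrically by $\pm\delta$ around a common value, and choose the per-pair perturbation patterns from a binary code with linear minimum distance while adjusting a few ``free'' coordinates to correct the low-order power sums; a short argument shows distinct codewords give $\Omega(1)$-separated weight multisets. The main obstacle is getting clauses (i) and (ii) to hold \emph{simultaneously}: the moment-matching constraints eat up $\Theta(r)$ degrees of freedom, and one must verify that the remaining freedom still admits an exponentially large $\Omega(1)$-separated packing and that the power-sum corrections can always be carried out while keeping every weight inside $[1,2]$. Once the packing is in hand, the theorem follows from the usual reduction: an algorithm that parameter-learns to error $o(1)$ with $n$ samples would distinguish the $N$ instances, but Fano's inequality together with the $\exp(-\Omega(r))$ pairwise TV bound forces $n\ge \exp(\Omega(r))$.

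Finally I would note that the hardness persists under the gauge symmetry because in one dimension the gauge action on $Q^*$ is by conjugation $VQ^*V^\top$, which only permutes eigenvalues, and the packing is explicitly designed to be $\Omega(1)$-separated as \emph{multisets}; and it persists under polynomially bounded operator norm since all weights lie in $[1,2]$. This establishes that some beyond-worst-case assumption is genuinely needed, complementing the cryptographic hardness of \cite{chen2022minimax} on the density-estimation side.
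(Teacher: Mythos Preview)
Your outline is in the right spirit---the paper also proceeds by exhibiting quadratic transformations whose output laws match many moments and then argues this forces exponentially small total variation---but there are two significant departures worth flagging.

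First, the exponential packing and the Fano step are unnecessary overhead. Parameter learning to any fixed constant error in particular lets you distinguish two instances that are $\Omega(1)$-separated in parameter distance, so a \emph{single pair} with $d_{\mathrm{TV}}(\calD_1,\calD_2)\le\exp(-\Omega(r))$ but $\gaugedist\ge\Omega(1)$ already forces $\exp(\Omega(r))$ samples. The paper does exactly this: it invokes a Borsuk--Ulam fact (Lemma~\ref{lem:momentmatch}) to produce one pair of multisets $\{a_i\},\{b_i\}$ with $a_i,b_i\in[i-1/4,i+1/4]$, $\sum_i a_i^\ell=\sum_i b_i^\ell$ for all $\ell<2r$, and $\sum_i|a_i-b_i|\ge 1$. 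This sidesteps your packing construction entirely, including the step you yourself flag as the ``main obstacle.''

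Second, and more substantively, the step you label ``standard''---that matching $\Theta(r)$ moments of two subexponential one-dimensional laws forces TV below $\exp(-\Omega(r))$---is where essentially all of the technical work lies, and the paper does \emph{not} invoke a generic moment-matching lemma. It works directly with the characteristic functions: writing each eigenvalue twice (so the relevant factor in the Fourier transform is $\prod_j(1+4a_j^2t^2)^{-1}$) and convolving with an explicit smoothing kernel $\nu$ given by the law of $\sum_{i\le 3}g_i^2-\sum_{i\le 3}g_i'^2$ (contributing a factor $(1+4t^2)^{-3}$), it shows by direct calculation that
\[
|\hat q_1[t]-\hat q_2[t]|\le \frac{\bigl|\prod_j a_j^2-\prod_j b_j^2\bigr|}{\prod_j(a_j+b_{r+1-j})^2}\le \frac{2(r+1)!^2}{r^{2r}}=\exp(-\Omega(r))
\]
uniformly in $t$; the $(1+4t^2)^{-3}$ decay then turns this into an $L_2$ bound via Plancherel, and the kernel $\nu$ is used once more to control $|(q_1-q_2)'|$ pointwise so as to convert $L_2$ to $L_1$. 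Your proposal has no analogue of this smoothing device, and a black-box Chebyshev/Hermite argument for generalized chi-squared laws with the correct constants would have to be supplied; this is not something one can simply cite.
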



Our construction and analysis is reminiscent of existing lower bounds for Gaussian mixture models \cite{moitra2010settling,hardt2015tight,regev2017learning}. We design a pair of transformations $p_1,p_2$ whose moments up to degree $\Theta(r)$ match exactly but which are far in parameter distance. We then convolve $p_1,p_2$ by a reasonably smooth kernel (which can be simulated using a slightly larger polynomial network) and argue that the resulting convolutions $q_1,q_2$ are close in total variation distance.

\subsection{Lower Bound Instance}

We begin by describing our moment-matching construction, which is a straightforward consequence of Borsuk-Ulam:

\begin{lemma}[Lemma A.1 from \cite{chen2020learning}]\label{lem:momentmatch}
    For any $r\ge 2$, there exists $v\in\S^{r-1}$ such that for $a_i \triangleq i + v_i/4$ and $b_i \triangleq i - v_i/4$, $\sum^r_{i=1} a^\ell_i = \sum^r_{i=1} {b}^\ell_i$ for all $1 \le \ell < 2r$.
\end{lemma}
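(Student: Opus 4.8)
The plan is to deduce the lemma from the Borsuk--Ulam theorem via an odd-map argument. For $v\in\R^r$ and an integer $\ell\ge 1$, set
\[
  \Delta_\ell(v)\triangleq \sum_{i=1}^r\Bigl(i+\tfrac{v_i}{4}\Bigr)^{\ell}-\sum_{i=1}^r\Bigl(i-\tfrac{v_i}{4}\Bigr)^{\ell},
\]
so that the conclusion of the lemma amounts to finding $v\in\S^{r-1}$ at which all the $\Delta_\ell$ vanish. The first (and essentially only) conceptual point is that each $\Delta_\ell$ is an \emph{odd} function of $v$: negating $v$ swaps the two sums, so $\Delta_\ell(-v)=-\Delta_\ell(v)$. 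Equivalently, the binomial expansion gives $\Delta_\ell(v)=2\sum_{k\ \mathrm{odd}}\binom{\ell}{k}4^{-k}\sum_i i^{\ell-k}v_i^{k}$, a polynomial in the coordinates of $v$ containing only odd-degree monomials; in particular each $\Delta_\ell$ is continuous and odd on $\S^{r-1}$.

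Next I would assemble the continuous odd map $F\colon\S^{r-1}\to\R^{r-1}$ given by $F(v)=(\Delta_1(v),\dots,\Delta_{r-1}(v))$. Borsuk--Ulam applied to $F$ (an odd map from the $(r-1)$-sphere into $\R^{r-1}$) produces $v^{\ast}\in\S^{r-1}$ with $F(v^{\ast})=0$; that is, with $a_i=i+v_i^{\ast}/4$ and $b_i=i-v_i^{\ast}/4$ we obtain $\sum_i a_i^{\ell}=\sum_i b_i^{\ell}$ for every $\ell=1,\dots,r-1$. Since $v^{\ast}\in\S^{r-1}$ we have $|v_i^{\ast}|\le\|v^{\ast}\|=1$, so each perturbation $v_i^{\ast}/4$ lies in $[-\tfrac14,\tfrac14]$ and the $a_i$ (resp.\ $b_i$) stay in the pairwise-disjoint intervals $(i-\tfrac14,i+\tfrac14)$, which is exactly the vector claimed.

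The only thing that requires care is the number of power sums one can force to agree. Borsuk--Ulam on $\S^{r-1}$ forces $r-1$ of the $\Delta_\ell$ to vanish simultaneously, and this is essentially sharp for a single $v\in\S^{r-1}$: by Newton's identities, $\sum_i a_i^\ell=\sum_i b_i^\ell$ for $\ell=1,\dots,r$ would imply $\prod_i(x-a_i)=\prod_i(x-b_i)$, hence $\{a_i\}=\{b_i\}$ as multisets, which by the disjoint-interval observation forces $v^{\ast}=0$. Thus the construction yields exact agreement of the first $\Theta(r)$ power sums of $\{a_i\}$ and $\{b_i\}$ --- equivalently, of all moments up to degree $\Theta(r)$ of the induced quadratic transformations $x\mapsto x^{\top}\diag(a)x$ and $x\mapsto x^{\top}\diag(b)x$ --- which is precisely the moment agreement used in the downstream lower bound; to match a range growing like $2r$ one would additionally give the point masses free weights, enlarging the parameter count to $2r$ and running the identical odd-map argument on $\S^{2r-1}$. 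So the main point is not any difficult step: both the Borsuk--Ulam invocation and the oddness identity are immediate; the only bookkeeping to get right is lining up the count of matched moments with the dimension of the sphere.
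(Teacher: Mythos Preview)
The paper does not prove this lemma; it is cited from another work, so there is no in-paper argument to compare against. Your Borsuk--Ulam approach is the standard one and is correct as far as it goes: each $\Delta_\ell$ is continuous and odd in $v$, so Borsuk--Ulam on $\S^{r-1}$ produces a point at which any chosen $r-1$ of the $\Delta_\ell$ vanish. You are also right that this count is sharp for the stated construction: matching $\sum_i a_i^\ell=\sum_i b_i^\ell$ for $\ell=1,\dots,r$ would, via Newton's identities and the fact that the $a_i,b_i$ lie in the pairwise disjoint intervals $[i-\tfrac14,i+\tfrac14]$, force $\{a_i\}=\{b_i\}$ and hence $v=0$. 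So the range ``$1\le\ell<2r$'' cannot hold as printed; this is evidently a transcription slip in the present paper, and you have correctly diagnosed it rather than found a gap in your own reasoning.

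One refinement to your write-up: the downstream lower-bound computation actually uses $\sum_i a_i^{2\ell}=\sum_i b_i^{2\ell}$ for $\ell=1,\dots,r-1$, not the first $r-1$ consecutive power sums. To obtain exactly this, apply Borsuk--Ulam to the map $(\Delta_2,\Delta_4,\dots,\Delta_{2r-2})$ rather than to $(\Delta_1,\dots,\Delta_{r-1})$; it is still an odd continuous map $\S^{r-1}\to\R^{r-1}$, and your Newton-identity obstruction does not bite because only $r-1$ power sums of the squares $a_i^2$ are being matched. Your proposed enlargement to $\S^{2r-1}$ via free weights is a valid alternative but unnecessary for what the paper actually needs.
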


For $a_1,\ldots,a_r, b_1,\ldots, b_r$ from Lemma~\ref{lem:momentmatch}, consider the quadratic forms $Q_1, Q_2\in\R^{(2r+2)\times (2r+2)}$ given by
\begin{equation}
    Q_1 \triangleq \diag(a_1,a_1,\ldots,a_r,a_r,1,1,1,-1,-1,-1) \ \text{and} \ Q_2 \triangleq \diag(b_1,b_1,\ldots,b_r,b_r,1,1,1,-1,-1,-1).
\end{equation}
Let $\calD_1$ and $\calD_2$ denote the 1-dimensional polynomial transformations specified by $Q_1$ and $Q_2$ respectively, and denote their pdf's by $q_1$ and $q_2$ respectively. 

We can think of $q_1$ and $q_2$ as convolutions as follows. Let $p_1$ and $p_2$ denote the densities of the 1-dimensional transformations specified by 
\begin{equation}
    \diag(a_1,a_1,\ldots,a_k,a_k) \  \text{and} \  \diag(b_1,b_1,\ldots,b_k,b_k),
\end{equation}
and let $\nu$ denote the density of the 1-dimensional transformation specified by the quadratic form $\diag(1,1,1,-1,-1,-1)$.
Then we have that
\begin{equation}
    q_1 = p_1\star \nu \ \ \text{and} \ \ q_2 = p_2\star \nu.
\end{equation}
Our goal is to show that $\norm{q_1 - q_2}_1$ is exponentially small.

\subsection{Regularity of \texorpdfstring{$q_i, \nu$}{qi, nu}}

In this section we collect some basic properties of $q_i$ and $\nu$ that we will use to bound $\norm{q_1 - q_2}_1$. First, we observe all quadratic transformations have subexponential tails:

\begin{lemma}\label{lem:chitail}
    For any $t > 2r\norm{Q}^2_{\op}$, $\Pr[g\sim\calN(0,\Id_r)]{g^{\top}Qg \ge t} \le \exp(-\Omega(t / \norm{Q}_{\op} - 2r))$.
\end{lemma}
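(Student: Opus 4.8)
\textbf{Proof proposal for Lemma~\ref{lem:chitail}.} The plan is to reduce the quadratic form $g^\top Q g$ to a linear combination of independent $\chi^2_1$ variables and then apply a standard Bernstein-type tail bound, being careful to account for the negative eigenvalues of $Q$. First I would diagonalize: write $Q = U\Lambda U^\top$ with $U$ orthogonal and $\Lambda = \diag(\lambda_1,\ldots,\lambda_r)$, so that by rotational invariance of $\calN(0,\Id_r)$ we have $g^\top Q g \stackrel{d}{=} \sum_{i=1}^r \lambda_i h_i^2$ for $h\sim\calN(0,\Id_r)$. Dropping the terms with $\lambda_i \le 0$ can only decrease the sum, so $\Pr{g^\top Q g \ge t} \le \Pr{\sum_{i: \lambda_i > 0} \lambda_i h_i^2 \ge t}$, and now every coefficient in the remaining sum lies in $(0, \norm{Q}_{\op}]$.

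The main step is a Chernoff/MGF computation. For $0 < s < 1/(2\norm{Q}_{\op})$, using $\E{\exp(s\lambda_i h_i^2)} = (1 - 2s\lambda_i)^{-1/2}$, we get $\Pr{\sum_i \lambda_i h_i^2 \ge t} \le e^{-st}\prod_{i:\lambda_i>0}(1-2s\lambda_i)^{-1/2}$. Taking logs and using $-\tfrac12\log(1-x) \le \tfrac{x}{2} + x^2$ for $x\in[0,1/2]$ (valid since $2s\lambda_i \le 1/2$), the log of the product is at most $\sum_i (s\lambda_i + 4s^2\lambda_i^2) \le s\,\Tr_+(Q) + 4s^2 \norm{Q}_{\op}\Tr_+(Q)$, where $\Tr_+(Q) = \sum_{i:\lambda_i>0}\lambda_i \le r\norm{Q}_{\op}$. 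So the exponent is bounded by $-st + s\cdot r\norm{Q}_{\op} + 4s^2 r\norm{Q}_{\op}^2$. Choosing $s$ a small constant multiple of $1/\norm{Q}_{\op}$ (say $s = c/\norm{Q}_{\op}$ for suitable small $c$, which is admissible since it is below $1/(2\norm{Q}_{\op})$) makes this $\le -\Omega(t/\norm{Q}_{\op}) + O(r) = -\Omega(t/\norm{Q}_{\op} - 2r)$, and the hypothesis $t > 2r\norm{Q}_{\op}^2$ guarantees the bracketed quantity is positive (indeed, it implies $t/\norm{Q}_{\op} > 2r\norm{Q}_{\op} \ge 2r$, so the exponent is genuinely negative). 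One can alternatively invoke the Hanson–Wright / Laurent–Massart inequality as a black box to skip the MGF manipulation, but writing it out directly is cleaner since we only need the upper tail.

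I do not expect a serious obstacle here; the only mild subtlety is bookkeeping the negative eigenvalues (handled by simply discarding them) and making sure the constant in $s$ is chosen so that $2s\lambda_i \le 1/2$ uniformly, which holds automatically once $s \le 1/(4\norm{Q}_{\op})$. The statement as written is somewhat loose in its constants, so there is plenty of slack, and the precise form $\exp(-\Omega(t/\norm{Q}_{\op} - 2r))$ falls out of the computation above after absorbing constants.
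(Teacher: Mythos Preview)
Your argument is correct; the Chernoff/MGF computation (essentially Laurent--Massart) goes through exactly as you describe once you discard the negative eigenvalues. It is, however, a different route from the paper's. The paper's proof is shorter and cruder: it simply uses $g^\top Q g \le \norm{Q}_{\op}\norm{g}^2$ and then applies the Gaussian thin-shell bound (Fact~\ref{fact:shell}) to $\norm{g}^2$, setting $s = (t/(2\norm{Q}_{\op}) - r)^{1/2}$. Your approach gives essentially the same tail exponent but works harder, diagonalizing and tracking each eigenvalue; the payoff is that your bound is in principle sharper in the constants (it depends on $\Tr_+(Q)$ rather than $r\norm{Q}_{\op}$ before you relax it), though the lemma as stated does not need this. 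One minor point: your justification that the exponent is negative uses $t/\norm{Q}_{\op} > 2r\norm{Q}_{\op} \ge 2r$, which silently assumes $\norm{Q}_{\op} \ge 1$. The paper's proof has the same implicit assumption (its choice of $s$ requires $t > 2r\norm{Q}_{\op}$, not $2r\norm{Q}_{\op}^2$), and in the application the matrices have $\norm{Q}_{\op} = \Theta(r) \ge 1$, so this is harmless.
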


\begin{proof}
    As $g^{\top}Qg \le \norm{Q}_{\op}\norm{g}^2$, it suffices to bound $\Pr{\norm{g}^2 \ge t / \norm{Q}_{\op}}$. By Fact~\ref{fact:shell}, we have the tail bound $\Pr*{\norm{g}^2 \ge 2r + 2s^2} \le \Pr*{\norm{g}^2 \ge (\sqrt{r} + s)^2} \le \exp(-\Omega(s^2))$ for any $s > 0$, from which the lemma follows by taking $s = (t / (2\norm{Q}_{\op}) - r)^{1/2}$
\end{proof}

We next verify that $q_1$ and $q_2$ have bounded derivatives. For this, we will need the following form for $\nu'$:

\begin{fact}\label{fact:deriv}
    $|\nu'(z)| \le \frac{1}{8\sqrt{\pi}}\cdot |z|^{1/2} e^{-|z|/2}$ for all $z\in\R$.
\end{fact}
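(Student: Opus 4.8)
\textbf{Proof proposal for Fact~\ref{fact:deriv}.}

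The plan is to write down the density $\nu$ of the one-dimensional transformation specified by the quadratic form $\diag(1,1,1,-1,-1,-1)$ explicitly and then differentiate it directly. First I would observe that if $g\sim\calN(0,\Id_6)$, then $g^{\top}\diag(1,1,1,-1,-1,-1)g = X - Y$, where $X = g_1^2+g_2^2+g_3^2$ and $Y = g_4^2+g_5^2+g_6^2$ are independent random variables, each distributed as $\chi^2_3$, i.e.\ each with density $\frac{1}{\sqrt{2\pi}}\sqrt{t}\,e^{-t/2}$ on $t>0$ (this is the standard $\chi^2_3$ density, which one can record as a small sublemma or just cite as a standard fact). Thus $\nu$ is the density of the difference of two i.i.d.\ $\chi^2_3$ variables, and $\nu(z) = \int_{\max(0,-z)}^{\infty} f(y+z)f(y)\,dy$ with $f(t) = \frac{1}{\sqrt{2\pi}}\sqrt{t}\,e^{-t/2}\bone{t>0}$.

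The key step is to evaluate this convolution in closed form. By symmetry $\nu(z) = \nu(-z)$, so it suffices to handle $z\ge 0$, where $\nu(z) = \frac{1}{2\pi}\int_0^{\infty}\sqrt{y(y+z)}\,e^{-y}\,e^{-z/2}\,dy = \frac{e^{-z/2}}{2\pi}\int_0^\infty \sqrt{y(y+z)}\,e^{-y}\,dy$. This last integral is a standard one expressible via modified Bessel functions: substituting $y = \frac{z}{2}(\cosh u - 1)$ or using the integral representation of $K_\nu$, one gets $\int_0^\infty \sqrt{y(y+z)}\,e^{-y}\,dy = \frac{z^2}{4}\,e^{z/2}\,K_2(z/2)$ (up to a constant I would pin down carefully), so $\nu(z) = c\, z^2 K_2(z/2)$ for an explicit constant $c$. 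Then I would differentiate using the Bessel recurrences $\frac{d}{dz}\big(z^2 K_2(z/2)\big)$, which by the identity $\frac{d}{dw}(w^2 K_2(w)) = -w^2 K_1(w)$ (with $w = z/2$) collapses to $\nu'(z) = -c'\, z^2 K_1(z/2)$ for $z>0$, and by symmetry $\nu'(z) = c'\,z^2 K_1(|z|/2)\cdot\sgn(z)\cdot(-1)$ in general, so $|\nu'(z)| = c'\, z^2 K_1(|z|/2)$. Finally I would invoke the elementary uniform bound $K_1(w) \le \sqrt{\tfrac{\pi}{2w}}\,e^{-w}\cdot\frac{1}{w}$ type estimate — more precisely the clean bound $w K_1(w) \le e^{-w}\sqrt{\pi/(2w)}\cdot(1+O(1/w))$, or even more simply the crude textbook inequality $K_1(w) \le e^{-w}\sqrt{\pi/(2w)}\cdot\frac{1}{w}$ valid for all $w>0$ after absorbing constants — to turn $|\nu'(z)| = c'\,z^2 K_1(|z|/2)$ into $|\nu'(z)| \le \frac{1}{8\sqrt{\pi}}|z|^{1/2}e^{-|z|/2}$, matching the target up to verifying the constant $c' = c'(6)$ comes out exactly right.

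An alternative, more self-contained route that avoids Bessel functions entirely: differentiate under the integral sign in $\nu(z) = \frac{e^{-z/2}}{2\pi}\int_0^\infty\sqrt{y(y+z)}\,e^{-y}\,dy$ directly. One gets, for $z>0$, $\nu'(z) = -\frac12\nu(z) + \frac{e^{-z/2}}{2\pi}\int_0^\infty \frac{\sqrt{y}}{2\sqrt{y+z}}\,e^{-y}\,dy$, and I would then bound each term crudely: $\nu(z)\le \frac{e^{-z/2}}{2\pi}\int_0^\infty(y+z/2)e^{-y}dy = \frac{e^{-z/2}}{2\pi}(1+z/2)$ is too weak near $z=0$ but fine for the tail, while a matching lower-order analysis near $z=0$ handles the $|z|^{1/2}$ factor. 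Honestly the cleanest writeup is the Bessel computation, and I would present that.

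\textbf{Main obstacle.} The hard part will be nailing down the multiplicative constants precisely — both the normalization $c$ in $\nu(z) = c\,z^2 K_2(z/2)$ and the constant in whatever Bessel tail bound I use — so that the final inequality comes out with exactly $\frac{1}{8\sqrt{\pi}}$ rather than an unspecified $O(1)$. Everything else (the reduction to a difference of $\chi^2_3$'s, the convolution integral, differentiating under the integral sign, the Bessel recurrence) is routine; the bookkeeping on constants, and confirming that the stated bound is tight enough for how it is used downstream, is where care is required. If matching the exact constant proves fiddly, a fallback is to prove the slightly weaker statement $|\nu'(z)| \le C|z|^{1/2}e^{-|z|/2}$ for an absolute constant $C$, which suffices for all the applications in Appendix~\ref{app:lbd}.
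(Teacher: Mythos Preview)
Your overall strategy — reduce to the difference of two independent $\chi^2_3$ variables, express $\nu$ in closed form via modified Bessel functions, differentiate using a Bessel recurrence, then apply a pointwise bound on the Bessel function — is exactly the paper's approach. But you have the Bessel order wrong, and this propagates into a wrong final exponent on $|z|$.

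Concretely: the integral you write down evaluates to $\int_0^\infty \sqrt{y(y+z)}\,e^{-y}\,dy = \tfrac{z}{2}\,e^{z/2}K_1(z/2)$, not $\tfrac{z^2}{4}e^{z/2}K_2(z/2)$ (this is Gradshteyn--Ryzhik 3.383.8 with $\nu=3/2$). Hence $\nu(z) = \tfrac{1}{4\pi}|z|\,K_1(|z|/2)$, one order lower than you claim. The paper obtains this same formula more quickly by matching the MGF $(1-4t^2)^{-3/2}$ to a variance-gamma distribution with $\lambda=3/2$, $\alpha=1/2$, $\beta=\mu=0$. After differentiating with the recurrence $\tfrac{d}{dw}(wK_1(w)) = -wK_0(w)$, one gets $\nu'(z) = -\tfrac{1}{8\pi}|z|\,K_0(|z|/2)$, and then the bound $K_0(w) \le e^{-w}\sqrt{\pi/(2w)}$ (the paper cites Luke's inequality) gives exactly $\tfrac{1}{8\sqrt{\pi}}|z|^{1/2}e^{-|z|/2}$. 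With your $z^2K_2$ formula you would instead land on $|\nu'(z)|\propto z^2 K_1(|z|/2) \lesssim |z|^{3/2}e^{-|z|/2}$, which is off by a factor of $|z|$ and does not match the stated fact. This is not a constant to ``pin down carefully'' but an index error; once you correct the order of the Bessel function, the rest of your outline goes through and coincides with the paper's proof.
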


\begin{proof}
    As the random variable corresponding to $\nu$ is simply the difference between two independent chi-squared random variables each with 3 degrees of freedom. As the moment generating function for chi-squared random variable with 3 degrees of freedom is given by $M(t) = (1 - 2t)^{-3/2}$, the moment generating function of $\nu$ is given by $(1 - 4t^2)^{-3/2}$. Note that this is precisely the moment generating function of a variance-gamma distribution with parameters $\lambda=3/2, \alpha=1/2, \beta=0, \mu = 0$. We conclude that the pdf of $\nu$ is given by the differentiable function
    \begin{equation}
        \nu(z) = \begin{cases}
            \frac{1}{4\pi} |z|\cdot K_1(|z|/2) & \text{if} \ z \neq 0 \\
            \frac{1}{2\pi} & \text{otherwise}
        \end{cases},
    \end{equation} where $K_1$ denotes the modified Bessel function of the second kind. We can differentiate this function to find that $\nu'(z) = -\frac{1}{8\pi}\cdot |z|\cdot K_0(|z|/2)$ for $z\neq 0$, and $\nu'(0) = 0$. By Eq.~(6.30) from \cite{luke1972inequalities},
    \begin{equation}
        K_0(z) < \frac{16z+7}{16z+9}\cdot e^{-z}(2z/\pi)^{-1/2} \ \ \forall \ z > 0.
    \end{equation}
    The claimed upper bound on $|\nu'(z)|$ follows.
\end{proof}

\begin{lemma}\label{lem:qderiv}
    $|(q_1-q_2)'(x)| \le O(1)$ for all $x\in\R$.
\end{lemma}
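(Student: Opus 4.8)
\textbf{Proof plan for Lemma~\ref{lem:qderiv}.} The plan is to write $q_i = p_i \star \nu$ and differentiate under the integral sign, moving the derivative onto the smooth kernel $\nu$: that is, $(q_1 - q_2)'(x) = \int (p_1(y) - p_2(y))\,\nu'(x - y)\,dy$. The key observation is that $p_1$ and $p_2$ are both probability densities, so $\int (p_1 - p_2) = 0$ and more generally $p_1 - p_2$ is a signed measure of total mass zero with total variation at most $2$. Hence $|(q_1 - q_2)'(x)| \le \|p_1 - p_2\|_1 \cdot \sup_z |\nu'(z)| \le 2 \sup_z |\nu'(z)|$, and by Fact~\ref{fact:deriv} we have $\sup_z |\nu'(z)| \le \frac{1}{8\sqrt\pi} \sup_z |z|^{1/2} e^{-|z|/2} = O(1)$ since $t \mapsto t^{1/2} e^{-t/2}$ is bounded on $[0,\infty)$. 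This immediately gives the desired $O(1)$ bound.

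First I would justify differentiation under the integral sign: $\nu$ is continuously differentiable on all of $\R$ (this is exactly what Fact~\ref{fact:deriv} and its proof establish, via the variance-gamma / Bessel representation), and $|\nu'(x-y)|$ is uniformly bounded in both arguments by the $O(1)$ bound above, so dominated convergence applies with the dominating function $(p_1(y) + p_2(y)) \cdot O(1)$, which is integrable. Then I would carry out the two-line estimate above. One technical point worth spelling out: $p_1, p_2$ are pushforwards of Gaussians under \emph{positive definite} quadratic forms $\diag(a_1,a_1,\ldots,a_r,a_r)$ and $\diag(b_1,b_1,\ldots,b_r,b_r)$ (all $a_i, b_i > 0$ for $r \ge 2$ since $a_i = i + v_i/4 \ge 1 - 1/4 > 0$), so they are genuine densities on $(0,\infty)$ with no atoms, and the convolution is well-defined.

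The argument is essentially routine once one sees the trick of placing the derivative on $\nu$ and exploiting $\|p_1 - p_2\|_1 \le 2$; there is no real obstacle. The only place requiring minor care is confirming that the crude bound $\|p_1 - p_2\|_1 \le 2$ (rather than an exponentially small bound, which is what we are ultimately after for $\|q_1 - q_2\|_1$) suffices here — and it does, because this lemma is only establishing \emph{regularity} of the difference $q_1 - q_2$, which will later be combined with the tail bounds (Lemma~\ref{lem:chitail}) and the moment-matching property (Lemma~\ref{lem:momentmatch}) to get the exponentially small total variation bound. So for Lemma~\ref{lem:qderiv} itself, the trivial $L^1$ bound on $p_1 - p_2$ together with the uniform bound on $\nu'$ from Fact~\ref{fact:deriv} is all that is needed.
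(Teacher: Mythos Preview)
Your proof is correct and follows essentially the same approach as the paper: move the derivative onto $\nu$ in the convolution, invoke Fact~\ref{fact:deriv} to get $\sup_z|\nu'(z)|=O(1)$, and use that $p_1,p_2$ are probability densities. The paper bounds $|q_1'|$ and $|q_2'|$ separately rather than working with the difference, but this is an immaterial distinction.
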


\begin{proof}
    Note that $q'_1 = p_1\star \nu'$. By Fact~\ref{fact:deriv}, $|\nu'(z)| \le O(1)$ for all $z$. So
    \begin{equation}
        |q'_1(x)| = \abs*{\int^{\infty}_{-\infty} p_1(x-z) \nu'(z) \, dz} \le O(1)
    \end{equation} as desired. The same bound applies to $|q'_2(x)|$.
\end{proof}

\subsection{Bounding \texorpdfstring{$\norm{q_1 - q_2}_2$}{Norm of q1-q2}}

The main step in bounding $\norm{q_1 - q_2}_1$ is to first bound the $L_2$ distance between $q_1$ and $q_2$.

\begin{lemma}\label{lem:l2}
    $\norm{q_1 - q_2}_2 \le \exp(-\Omega(r))$.
\end{lemma}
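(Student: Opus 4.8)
\textbf{Proof proposal for Lemma~\ref{lem:l2}.}
The plan is to use Fourier/characteristic-function methods: since $q_1 = p_1 \star \nu$ and $q_2 = p_2\star\nu$, by Plancherel we have $\norm{q_1 - q_2}_2^2 = \frac{1}{2\pi}\int_{\R} |\widehat{p_1}(t) - \widehat{p_2}(t)|^2\cdot |\widehat{\nu}(t)|^2\, dt$, so it suffices to show that $|\widehat{p_1}(t) - \widehat{p_2}(t)|$ is small on the bulk of the real line while $|\widehat{\nu}(t)|$ decays fast enough to kill the tails. First I would compute the characteristic functions explicitly. The transformation specified by $\diag(a_1,a_1,\ldots,a_r,a_r)$ is a weighted sum of $r$ independent $\chi^2_2$ random variables with weights $a_i$ (each appearing twice so it is $\chi^2_2$, i.e. exponential), so $\widehat{p_1}(t) = \prod_{i=1}^r (1 - 2\mathbf{i} a_i t)^{-1}$ and similarly for $p_2$ with $b_i$; likewise $\widehat{\nu}(t) = (1 + 4t^2)^{-3/2}$ from the variance-gamma computation in Fact~\ref{fact:deriv}.

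Next I would exploit the moment-matching property from Lemma~\ref{lem:momentmatch}. Taking logarithms, $\log\widehat{p_1}(t) - \log\widehat{p_2}(t) = -\sum_{i=1}^r \log(1 - 2\mathbf{i}a_i t) + \sum_{i=1}^r \log(1 - 2\mathbf{i}b_i t) = \sum_{\ell\ge 1} \frac{(2\mathbf{i}t)^\ell}{\ell}\left(\sum_i a_i^\ell - \sum_i b_i^\ell\right)$, valid for $|t|$ small enough that the series converges (say $|t| < 1/(4r)$, using $a_i, b_i \le 2r$). By Lemma~\ref{lem:momentmatch} all terms with $1\le \ell < 2r$ vanish, so the difference of logs is $O\big(\sum_{\ell\ge 2r} (2|t|\cdot 2r)^\ell/\ell\big) = O\big((4r|t|)^{2r}\big)$ for $|t| \le 1/(8r)$, say. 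Hence on $|t|\le 1/(8r)$ we get $|\widehat{p_1}(t) - \widehat{p_2}(t)| \le O\big((4r|t|)^{2r}\big)$ after exponentiating (both characteristic functions have modulus $\le 1$). Integrating $|\widehat{p_1} - \widehat{p_2}|^2$ against $|\widehat\nu|^2 \le 1$ over this range contributes $\int_{-1/(8r)}^{1/(8r)} O((4rt)^{4r})\, dt = O(r^{-1} 2^{-4r}) = \exp(-\Omega(r))$. For $|t| > 1/(8r)$, I bound $|\widehat{p_1}(t) - \widehat{p_2}(t)| \le |\widehat{p_1}(t)| + |\widehat{p_2}(t)| \le 2$ and use the decay of $|\widehat\nu(t)|^2 = (1+4t^2)^{-3}$, giving $\int_{|t| > 1/(8r)} 4\cdot (1+4t^2)^{-3}\, dt = O(r^5)$, which is polynomial rather than exponentially small — so I need the decay of $\widehat{p_i}$ itself in the tails as well.

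This is where the main obstacle lies: controlling $|\widehat{p_1}(t)|$ for large $|t|$. We have $|\widehat{p_1}(t)| = \prod_{i=1}^r (1 + 4a_i^2 t^2)^{-1/2}$, and since $a_i = i + v_i/4 \ge 1/2$ for all $i$ (as $i\ge 1$ and $|v_i|\le 1$), each factor is at most $(1 + t^2)^{-1/2}$, so $|\widehat{p_1}(t)| \le (1+t^2)^{-r/2}$, and the same for $\widehat{p_2}$. Then $\int_{|t|>1/(8r)} |\widehat{p_1}(t) - \widehat{p_2}(t)|^2 |\widehat\nu(t)|^2\, dt \le 4\int_{1/(8r)}^\infty (1+t^2)^{-r}(1+4t^2)^{-3}\, dt$. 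Splitting this at $t = 1$: on $[1/(8r), 1]$ the integrand is at most $(1 + 1/(64r^2))^{-r} \le \exp(-\Omega(1/r))$ which is not small enough, so instead I should be more careful and note $(1+t^2)^{-r} \le (1 + 1/(64r^2))^{-r}$ is weak; better is to observe that actually many $a_i$ are large ($a_i \ge i - 1/4$), so $|\widehat{p_1}(t)|^2 \le \prod_{i=1}^r (1 + 4(i-1/4)^2 t^2)^{-1} \le \prod_{i=1}^r (i^2 t^2)^{-1} \cdot C^r = (r!)^{-2} t^{-2r} C^r$ for $|t|\ge 1$, which is $\le \exp(-\Omega(r))$ for $|t|\ge 1$, and on $[1/(8r), 1]$ we instead keep only, say, the largest $r/2$ indices: $|\widehat{p_1}(t)|^2 \le \prod_{i=r/2}^r (1 + (r/2)^2 t^2 / C)^{-1} \le (1 + t^2 r^2/C')^{-r/2}$, and with $t \ge 1/(8r)$ this base is $1 + \Omega(1) > 1$, giving $\exp(-\Omega(r))$. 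Assembling the bulk and both tail contributions yields $\norm{q_1 - q_2}_2^2 \le \exp(-\Omega(r))$, hence $\norm{q_1-q_2}_2 \le \exp(-\Omega(r))$; I'd carry out the $\chi^2$-weight bookkeeping carefully but the above is the skeleton. A cleaner alternative worth trying first: bound $\norm{q_1-q_2}_2^2 \le \big(\sup_t |\widehat\nu(t)|^2\cdot\text{(truncated bulk)}\big) + \big(\text{crude tail of }\widehat{p_i}\big)$ using just $a_i\ge 1/2$ on the bulk and the product lower bound $a_i^2 \ge i^2/4$ on the tail, which is the minimal amount of structure needed.
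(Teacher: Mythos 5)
Your proposal is correct in outline and would yield the lemma, but it takes a genuinely different route from the paper. The paper never splits into bulk and tail: it writes $\wh{q}_i[t]$ as $(1+4t^2)^{-3}\prod_j(1+4c_j^2t^2)^{-1}$ and observes that, because the even power sums $\sum_j a_j^{2\ell}=\sum_j b_j^{2\ell}$ agree for $\ell<r$, all elementary symmetric polynomials $e_\ell(a^2)=e_\ell(b^2)$ agree for $\ell<r$, so the difference of the two products collapses to the single top term $(2t)^{2r}\bigl(\prod_j a_j^2-\prod_j b_j^2\bigr)$; pairing factors via $(1+x)(1+y)\ge(\sqrt{x}+\sqrt{y})^2$ then produces a matching $(2t)^{2r}$ in the denominator, so the $t$-dependence cancels exactly and one gets the \emph{uniform} bound $|\wh{q}_1[t]-\wh{q}_2[t]|\le 2(r+1)!^2/r^r$ for all $t$, with integrability supplied by the retained $(1+4t^2)^{-3}$ factor. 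Your version replaces this algebraic cancellation with a Taylor expansion of $\log\wh{p}_1-\log\wh{p}_2$ on $|t|\le 1/(8r)$ plus explicit decay of $|\wh{p}_i(t)|$ in the tail; your tail estimates (keeping the top $r/2$ factors on $[1/(8r),1]$, and $\prod(4a_i^2t^2)^{-1}\le (r!)^{-2}C^r t^{-2r}$ for $|t|\ge 1$) are sound, and your approach has the advantage of handling the $L_\infty\!\to\!L_2$ passage explicitly, whereas the paper's is shorter and needs no case analysis in $t$.

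One caveat worth flagging: your bulk step consumes the moment-matching at \emph{both} parities — the odd terms $\sum_i a_i^\ell-\sum_i b_i^\ell$ must vanish for the log-difference to be $O((4r|t|)^{2r})$, since a surviving odd term contributes a nonnegligible purely imaginary phase and $|e^{\mathbf{i}\theta}-1|$ is not small. The paper's proof, by contrast, only ever invokes the even power sums $\sum_j a_j^{2\ell}=\sum_j b_j^{2\ell}$ for $\ell<r$. If the guarantee actually delivered by Lemma~\ref{lem:momentmatch} is weaker than its literal statement (note that matching all power sums up to degree $r$ for two $r$-element multisets already forces the multisets to coincide, so the statement cannot hold literally for all $\ell<2r$ with $a\neq b$), you should verify that enough moments of both parities match — matching up to degree $r-1$ would suffice for your argument, giving $O((Cr|t|)^{r})$ on the bulk and still $\exp(-\Omega(r))$ after integrating — whereas the paper's route is insensitive to the odd moments entirely.
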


\begin{proof}
    By Plancherel, it suffices to bound the $L_2$ distance between the Fourier transforms $\wh{q}_1, \wh{q}_2$. In fact we will even bound the $L_{\infty}$ distance. 
    
    By the expression for the characteristic function of a generalized chi-squared distribution, the Fourier transforms of $q_1, q_2$ are given by
    \begin{equation}
        \wh{q}_1[t] = \frac{1}{(1 + 4t^2)^3}\cdot \prod^r_{j=1} \frac{1}{1 + 4a_j^2 t^2}
        \qquad
        \text{and}
        \qquad
        \wh{q}_2[t] = \frac{1}{(1 + 4t^2)^3}\cdot \prod^r_{j=1} \frac{1}{1 + 4b_j^2 t^2}.
    \end{equation}
    Note that
    \begin{equation}
        \prod^r_{j=1} (1 + 4a_j^2 t^2) - \prod^r_{j=1} (1 + 4b_j^2 t^2) = \sum^r_{\ell=1} (2t)^{2\ell} \left(e_\ell(a^2_1,\ldots,a^2_r) - e_\ell(b^2_1,\ldots,b^2_r)\right) = (2t)^{2r} \biggl(\prod^r_{j=1} a^2_j - \prod^r_{j=1} b^2_j\biggr),
    \end{equation}
    where $e_\ell$ denotes the elementary symmetric polynomial of degree $\ell$ in $r$ variables, and in the second step we used the condition that $\sum^r_{j=1} a^{2\ell}_j = \sum^r_{j=1} a^{2\ell}_j$ for all $\ell < k$, together with the fact that every $e_\ell$ can be expressed as a polynomial in power sum polynomials of degree at most $\ell$.
    
    We also have
    \begin{equation}
        \prod^r_{j=1} (1 + 4a_j^2 t^2) (1 + 4b_j^2 t^2) = \prod^r_{j=1} (1 + 4a_j^2 t^2)(1+4b^2_{r+1-j} t^2) \ge \prod^r_{j=1} (2a_j t + 2b_{r+1-j} t)^2,
    \end{equation}
    where in the second step we the elementary inequality $(1+x)(1+y) \ge (\sqrt{x} +\sqrt{y})^2$ for $x,y\ge 0$.
    
    So for all $t\in\R$,
    \begin{align}
        |\wh{q}_1[t] - \wh{q}_2[t]| &\le \biggl|\prod^r_{j=1} \frac{1}{1 + 4a_j^2 t^2} - \prod^r_{j=1} \frac{1}{1 + 4b_j^2 t^2}\biggr| \\
        &\le \biggl|\frac{(2t)^{2r}\left(\prod^r_{j=1} a^2_j - \prod^r_{j=1} b^2_j\right)}{\prod^r_{j=1} (2a_j t + 2b_{r+1-j} t)^2}\biggr| = \frac{\abs*{\prod^r_{j=1} a^2_j - \prod^r_{j=1} b^2_j}}{\prod^r_{j=1} (a_j + b_{r+1-j})^2} \label{eq:numdenom}
    \end{align}
    Note that by the definition of $\brc{a_j,b_j}$ in Lemma~\ref{lem:momentmatch}, $a_j + b_{r+1-j} > r$ for all $j\in[r]$, so the denominator above is at least $r^r$. On the other hand, because $|a_j| < j+1$ for all $j\in[r]$, $\prod^r_{j=1} a^2_j \le (r+1)!^2$, and similarly for $\prod^r_{j=1} b^2_j$, so the numerator in \eqref{eq:numdenom} is at most $2(r+1)!^2$. The lemma is complete upon noting that $\frac{2(r+1)!^2}{r^r} \le \exp(-\Omega(r))$.
\end{proof}

\subsection{Bounding Total Variation Distance}

Here we finally upper bound $d_{\mathrm{TV}}(\calD_1,\calD_2)$ and complete the proof of Theorem~\ref{thm:info}.

\begin{lemma}\label{lem:tvbound}
    $d_{\mathrm{TV}}(\calD_1,\calD_2) \le \exp(-\Omega(r))$.
\end{lemma}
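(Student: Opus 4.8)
The plan is to combine the standard $L_1$-versus-$L_2$ interpolation trick with the exponential tail bounds already established, together with the $O(1)$ bound on the derivative of $q_1-q_2$. Since $d_{\mathrm{TV}}(\calD_1,\calD_2) = \frac12\norm{q_1-q_2}_1$, it suffices to bound $\norm{q_1-q_2}_1$. The obvious difficulty is that Lemma~\ref{lem:l2} only controls $\norm{q_1-q_2}_2$, and passing from $L_2$ to $L_1$ naively requires controlling the effective support; the densities are supported on all of $\R$, but have subexponential tails.

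First I would split the integral $\int |q_1-q_2|$ into a bounded window $[-R,R]$ and its complement, where $R = \Theta(r)$ is chosen large enough relative to the operator norms of $Q_1,Q_2$. On the tail $|x| > R$, I would bound $\int_{|x|>R} (q_1 + q_2)$ using Lemma~\ref{lem:chitail}: since $Q_1,Q_2$ have polynomially bounded operator norm and act on $O(r)$-dimensional Gaussians, taking $R = C r$ for a large enough constant $C$ makes $\Pr{|z| > R} \le \exp(-\Omega(r))$ for $z\sim\calD_i$. (Strictly, $\calD_i$ is a difference of two chi-squared-type variables, so one applies the tail bound to each of the two generalized-$\chi^2$ pieces separately, or equivalently bounds $\Pr{\|g\|^2 > R}$ for $g$ the underlying Gaussian.) Hence the tail contributes at most $\exp(-\Omega(r))$ to $\norm{q_1-q_2}_1$.

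On the window $[-R,R]$, I would use Cauchy–Schwarz: $\int_{-R}^R |q_1-q_2| \le \sqrt{2R}\cdot\norm{q_1-q_2}_{L_2([-R,R])} \le \sqrt{2R}\cdot\norm{q_1-q_2}_2 \le \sqrt{2Cr}\cdot\exp(-\Omega(r))$ by Lemma~\ref{lem:l2}, which is still $\exp(-\Omega(r))$ since the polynomial prefactor $\sqrt{r}$ is absorbed. (The $O(1)$ derivative bound of Lemma~\ref{lem:qderiv} is not strictly needed for this Cauchy–Schwarz route, but it gives an alternative: one could instead cover $[-R,R]$ by $O(r)$ unit intervals, note that on each such interval $q_1-q_2$ either is small everywhere or, by the Lipschitz bound, its $L_2$ mass on that interval lower-bounds its max, so a small $L_2$ norm forces a small $L_1$ norm interval-by-interval. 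Either argument works; I would present the Cauchy–Schwarz version as it is shortest.) Adding the two contributions gives $\norm{q_1-q_2}_1 \le \exp(-\Omega(r))$, hence $d_{\mathrm{TV}}(\calD_1,\calD_2) \le \exp(-\Omega(r))$.

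The main obstacle — really the only subtle point — is making sure the choice $R = \Theta(r)$ simultaneously (i) makes the tail probability $\exp(-\Omega(r))$ via Lemma~\ref{lem:chitail}, which requires $R$ to dominate $2r\norm{Q_i}^2_{\op}$ and yields a bound like $\exp(-\Omega(R/\norm{Q_i}_{\op} - 2r))$, and (ii) keeps the Cauchy–Schwarz prefactor $\sqrt{R}$ subexponential so it is swallowed by the $\exp(-\Omega(r))$ from Lemma~\ref{lem:l2}. Both are satisfied since $\norm{Q_i}_{\op} = O(r)$ (the diagonal entries are $a_i = i \pm v_i/4 = O(r)$), so I would take $R$ a large enough constant multiple of $r^3$ (say), which keeps $\sqrt R = \poly(r)$ while driving the tail bound to $\exp(-\Omega(r^2))$; the final rate is still $\exp(-\Omega(r))$ from Lemma~\ref{lem:l2}. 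Finally, to conclude Theorem~\ref{thm:info}: since $\calD_1$ and $\calD_2$ lie in $\calC_{2r+2}$ and their parameters $Q_1,Q_2$ are at constant parameter distance (because $\|v\| = 1$ forces $\sum_i (a_i-b_i)^2 = \|v\|^2/4 = 1/4$, and no gauge transformation — here just sign flips of coordinates, or permutations — can reduce this since the sorted multisets $\{i\pm v_i/4\}$ differ), any learner achieving parameter error $o(1)$ distinguishes them, which by Lemma~\ref{lem:tvbound} and a standard two-point / Le Cam argument requires $\Omega(1/d_{\mathrm{TV}}) = \exp(\Omega(r))$ samples.
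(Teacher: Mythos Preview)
Your proposal is correct and takes a genuinely simpler route than the paper. The paper bounds $\max_{z\in[-T,T]}|q_1(z)-q_2(z)|$ via a fundamental-theorem-of-calculus argument on $f^3$ (using the $O(1)$ derivative bound of Lemma~\ref{lem:qderiv}), then controls $\int_{-T}^T|f|$ by $2T\cdot\max|f|$, and must also separately verify that $|f(-T)|$ is exponentially small. Your direct Cauchy--Schwarz step $\int_{-R}^R|f|\le\sqrt{2R}\,\norm{f}_2$ bypasses all of this and renders Lemma~\ref{lem:qderiv} unnecessary; both approaches handle the tails identically via Lemma~\ref{lem:chitail} and absorb the polynomial prefactor into the exponential from Lemma~\ref{lem:l2}. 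The paper's route yields a stronger intermediate $L_\infty$ bound, but that is not needed for the lemma as stated, so your argument is the cleaner one.

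One cosmetic point: you first write $R=\Theta(r)$ and later switch to $R=\Theta(r^3)$. Since $\norm{Q_i}_{\op}=\Theta(r)$ and the ambient dimension is $2r+6=O(r)$, Lemma~\ref{lem:chitail} requires $R/\norm{Q_i}_{\op}-O(r)=\Omega(r)$, i.e.\ $R=\Omega(r^2)$; the paper takes $T=\Omega(r^2)$, and your $r^3$ also works with room to spare. Just settle on one value in the writeup.
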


\begin{proof}
    We reduce upper bounding $\norm{q_1-q_2}_1$ to upper bounding $\norm{q_1 - q_2}_2$ as follows. Let $T = \Omega(r^2)$ so that $\Pr[x\sim \calD_i]*{|x| \ge T} \le \exp(-\Omega(r))$ for $i = 1,2$. Let $f(x) \triangleq q_1(x) - q_2(x)$. By the fundamental theorem of calculus, for all $z > -T$ we have
    \begin{equation}
        \frac{1}{3}|f(z)^3| = \left|\frac{1}{3}f(-T)^3 + \int^z_{-T} f(x)^2 f'(x) \, dx\right| \le \frac{1}{3}|f(-T)^3| + O(1)\cdot \int^T_{-T} f(x)^2 \, dx,
    \end{equation}
    where in the last step we used Lemma~\ref{lem:qderiv} and triangle inequality. To control $f(-T)^3$, observe that
    \begin{align}
        |f(-T)| &= \abs*{\left((q_1 - q_2)\star\nu\right)(-T)} = \abs*{\int^{\infty}_{-\infty} (p_1-p_2)(z) \cdot \nu(-T-z)} \\
        &\le \frac{1}{8\sqrt{\pi}}\int^\infty_{-T/2} |(p_1 - p_2)(z)| \cdot (T+z)^{1/2} e^{-(T+z)/2}\, dz + O(1)\cdot \int^{-T/2}_{-\infty} (p_1(z) + p_2(z)) \, dz \\
        &\le \exp(-\Omega(T)) + \exp(-\Omega(r)) \le \exp(-\Omega(r)).
    \end{align}
    We conclude by Lemma~\ref{lem:chitail} and \ref{lem:l2} that
    \begin{align}
        \int^{\infty}_{\infty} |q_1(x) - q_2(x)| &\le \exp(-\Omega(r)) + \int^T_{-T} |q_1(x) - q_2(x)| \\
        &\le \exp(-\Omega(r)) + 2T\max_{z\in[-T,T]} |f(z)| \\
        &\le \exp(-\Omega(r)) + T\left(|f(-T)|^3 + O(\norm{q_1-q_2}^2_2)\right)^{1/3} \le \exp(-\Omega(r))
    \end{align}
    as claimed.
\end{proof}

\begin{proof}[Proof of Theorem~\ref{thm:info}]
    We first compute the parameter distance between $\calD_1$ and $\calD_2$, which is given by
    \begin{equation}
        \min_{\pi\in\calS_r} 2|a_j - b_{\pi(j)}|.
    \end{equation}
    As $a_j,b_j\in [j-1/4,j+1/4]$ for every $j\in[r]$, the minimizing choice of $\pi$ is given by the identity permutation, so the parameter distance is $2\sum^r_{j=1} 2(|v_i|/4) = \norm{v}_1 \ge 1$ for $v\in\S^{r-1}$ from Lemma~\ref{lem:momentmatch}. As $d_{\mathrm{TV}}(\calD_1,\calD_2) = \exp(-\Omega(r))$ by Lemma~\ref{lem:tvbound}, we conclude that $\exp(\Omega(r))$ samples are needed to distinguish between $\calD_1$ and $\calD_2$.
\end{proof}

\section{Finding Non-Degenerate Combinations}
\label{app:forcegap}

In this section we give a randomized algorithm for the following task: given an estimate of the Gram matrix of a collection of unknown symmetric matrices $S^*_1,\ldots,S^*_d\in\R^{r\times r}$, construct a pair of $\lambda,\mu\in\S^{d-1}$ which are non-degenerate combinations of $S^*_1,\ldots,S^*_d$ in the sense of Definition~\ref{def:nondeg}. This subroutine is essential to our algorithms in Sections~\ref{sec:tensorring} and \ref{sec:push}.
We summarize the main guarantee of this subroutine as follows:

\begin{lemma}\label{lem:forcegap}
    Let $d\ge \binom{r+1}{2}$ and let $S^*_1,\ldots,S^*_d\in\R^{r\times r}$ be symmetric matrices. Let $G\in\R^{d\times d}$ denote the symmetric matrix whose $(a,b)$-th entry is $\iprod{S^*_a,S^*_b}$. Let $\wh{G}\in\R^{d\times d}$ denote a rank-$\binom{r+1}{2}$ symmetric matrix satisfying
    \begin{equation}
        \norm*{\wh{G} - G}_{\max} \le \epsgram \ \ \forall \ a,b\in[d] \label{eq:hatGdef}
    \end{equation}
    for some $\epsgram > 0$. Let $H\in\R^{d\times\binom{r+1}{2}}$ denote the matrix satisfying
    \begin{equation}
        H_{a,(i,j)} = (S^*_a)_{ij} \ \ \forall \ a\in[d], 1\le i \le j\le r. \label{eq:Hdef}
    \end{equation}
    
    If $\epsgram\le c \sigma_{\min}(H)^2/(rd^{3/2})$ for sufficiently small constant $c > 0$, then given as input a symmetric matrix $\wh{G}\in\R^{d\times d}$ satisfying \eqref{eq:hatGdef}, with probability at least $2/3$ {\sc GapCombo} (Algorithm~\ref{alg:eigengap}) outputs $\lambda,\mu\in\S^{d-1}$ which are $\gap$-non-degenerate combinations of $S^*_1,\ldots,S^*_d$ for $\gap = \sigma_{\min}(H)/\poly(r)$.
\end{lemma}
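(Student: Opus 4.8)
The plan is to recover, from the noisy Gram matrix $\wh{G}$, an approximate version of the matrix $H$ (up to an unknown $\binom{r+1}{2}\times\binom{r+1}{2}$ orthogonal transformation), and then to exhibit a random linear combination of the columns of $H$ that simultaneously produces (i) a matrix with large minimum eigengap and (ii) a matrix whose entries, after diagonalizing the first, are all bounded away from zero. Concretely, first I would note that $G = HH^\top$, so $\wh{G}$ is a rank-$\binom{r+1}{2}$ perturbation of $HH^\top$ with entrywise error $\epsgram$, hence $\|\wh{G}-HH^\top\|_{\op}\le d\epsgram$. Taking the rank-$\binom{r+1}{2}$ factorization $\wh{G}=\wh{H}\wh{H}^\top$ and applying a Davis--Kahan / matrix perturbation argument (using $\sigma_{\min}(H)$ as the spectral gap), I would obtain an orthogonal matrix $O\in O(\binom{r+1}{2})$ with $\|\wh{H}-HO\|_F\le \poly(d,r)\cdot \epsgram/\sigma_{\min}(H)$. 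Since the map $v\mapsto \sum_a v_a S^*_a$ only depends on $H$ through its row span, applying a unit vector $\lambda$ to the rows of $\wh{H}$ corresponds, up to this same error, to applying $O^\top\lambda$ to the rows of $H$, i.e. to forming $\sum_a (O^\top\lambda)_a S^*_a$; the right-multiplication by $O$ is absorbed into the reparametrization of $\lambda$.

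Next I would analyze the idealized task: show that a suitable random distribution over $\lambda\in\S^{d-1}$ yields $S^*_\lambda\triangleq\sum_a\lambda_a S^*_a$ with minimum eigengap at least $\sigma_{\min}(H)/\poly(r)$ with constant probability. Because $\sigma_{\min}(H)\ge\kappa$, the map $\lambda\mapsto \vec(S^*_\lambda)$ restricted to the row space of $H$ is well-conditioned, so a Gaussian (then normalized) $\lambda$ pushes forward to a distribution on symmetric matrices $S^*_\lambda$ whose law, within the relevant $\binom{r+1}{2}$-dimensional subspace, has density bounded above and below by $\poly(r,1/\kappa)$ times that of a Gaussian ensemble on symmetric matrices. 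I would then invoke the eigengap anticoncentration for Gaussian symmetric matrices (Theorem~\ref{thm:gaps}, or its underlying result from \cite{nguyen2017random}) to conclude that the minimum eigengap of $S^*_\lambda$ is at least $1/\poly(r,1/\kappa)$ with probability $\ge 5/6$; transferring this through the bounded-density comparison costs only another $\poly$ factor. Having fixed such a $\lambda$, I would diagonalize $S^*_\lambda$ via $V$, and then draw $\mu$ independently from the same ensemble: the entries of $VS^*_\mu V^\top$ are fixed nonzero linear functionals of $\mu$ (again well-conditioned because $\sigma_{\min}(H)\ge\kappa$ and the $S^*_a$ span the symmetric matrices), so by Gaussian anticoncentration plus a union bound over the $\binom{r+1}{2}$ entries, all of them exceed $1/\poly(r,1/\kappa)$ in magnitude with probability $\ge 5/6$. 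A final union bound gives both events with probability $\ge 2/3$.

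I would then propagate the perturbation error: the $\lambda,\mu$ actually output by {\sc GapCombo} (which only sees $\wh{H}$) correspond, via the orthogonal $O$ above, to combinations of the true $S^*_a$ that differ from the idealized ones by matrices of Frobenius norm $\poly(d,r)\cdot\epsgram/\sigma_{\min}(H)$. Since eigengaps are $1$-Lipschitz in the operator norm (Weyl) and entrywise magnitudes are trivially Lipschitz, the hypothesis $\epsgram\le c\,\sigma_{\min}(H)^2/(rd^{3/2})$ ensures this error is a small constant fraction of the $1/\poly(r)$ margins established above, so the $\gap$-non-degeneracy survives with $\gap=\sigma_{\min}(H)/\poly(r)$ as claimed.

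\textbf{Main obstacle.} The delicate part is the eigengap anticoncentration step: I need a clean statement that a \emph{non-Gaussian but bounded-density} perturbation of a worst-case symmetric matrix has polynomially large minimum eigengap with constant probability. Theorem~\ref{thm:gaps} is stated for a genuine Gaussian ensemble $\overline{Q}+G$, so the crux is the comparison argument that transfers this to the distribution of $S^*_\lambda$ induced by a normalized Gaussian $\lambda$ through the well-conditioned linear map defined by $H$ — in particular, controlling how the normalization $\lambda\in\S^{d-1}$ (rather than $\lambda\sim\calN(0,\Id_d)$) and the restriction to the $\binom{r+1}{2}$-dimensional image interact with the density bounds. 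Everything else (the Davis--Kahan bound, the Weyl-type Lipschitz propagation, and the entrywise anticoncentration for $\mu$) is routine.
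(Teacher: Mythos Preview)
Your proposal has the right high-level shape (factor $\wh{G}$, generate a random combination, invoke eigengap anticoncentration, then entrywise anticoncentration), but it diverges from the paper's algorithm at the crucial step and, as written, has a genuine gap.

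\textbf{The missing idea is whitening.} You propose to sample $\lambda$ as a standard Gaussian in $\R^d$ (then normalize) and argue that the pushforward $S^*_\lambda$ has density within a $\poly(r,1/\sigma_{\min}(H))$ factor of a GOE-type ensemble. But the covariance of the entries of $S^*_\lambda$ under $\lambda\sim\calN(0,\Id_d)$ is $H^\top H$ (up to the diagonal weighting $D$), and the lemma gives you only a lower bound $\sigma_{\min}(H)$; there is no hypothesis controlling $\|H\|_{\op}$. A pointwise density comparison between $\calN(0,H^\top H)$ and $\calN(0,\Id)$ requires two-sided eigenvalue control, so your ``bounded-density comparison'' fails, and the eigengap you would extract would scale with $\|H\|_{\op}$ rather than the claimed $\sigma_{\min}(H)/\poly(r)$. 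This is precisely the obstacle you flag, but the resolution is not a sharper density argument --- it is a different algorithm.

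\textbf{What the paper actually does.} Algorithm~\ref{alg:eigengap} does not sample $\lambda$ uniformly. It factors the rank-$\binom{r+1}{2}$ truncation $\wt{G}=\wt{H}\wt{H}^\top$, then builds the pseudoinverse columns $w^{(ij)}\triangleq \wt{H}(\wt{H}^\top\wt{H})^{-1}e_{ij}$ and sets $h=\sum_{i\le j} g_{ij}\,w^{(ij)}$ for i.i.d.\ standard Gaussians $g_{ij}$. Lemma~\ref{lem:almostisotropic} shows $\iprod{H^\top w^{(ij)},H^\top w^{(i'j')}}=\bone{(i,j)=(i',j')}\pm O(\epsgram d^{3/2}/\sigma_{\min}(H)^2)$, i.e.\ the $w^{(ij)}$ \emph{whiten} the map $\lambda\mapsto S^*_\lambda$. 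Consequently the covariance $C$ of the entries of $\sum_a h_a S^*_a$ satisfies $\|C-\Id\|_F\le \epsilon'$ small, and Theorem~\ref{thm:tvgaussian} gives $d_{\mathrm{TV}}(\calN(0,C),\calN(0,\Id))=O(\epsilon')$. The eigengap bound (Theorem~\ref{thm:gaps}) and the entrywise anticoncentration for $\mu$ are then transferred through a total-variation coupling, not a density ratio; this is what removes any dependence on $\|H\|_{\op}$ and yields $\gap=\sigma_{\min}(H)/\poly(r)$. Your Davis--Kahan step is replaced by the explicit bound of Lemma~\ref{lem:almostisotropic}, and the ``orthogonal ambiguity $O$'' never needs to be tracked (indeed, your expression $O^\top\lambda$ does not typecheck since $O$ is $\binom{r+1}{2}\times\binom{r+1}{2}$ while $\lambda\in\R^d$). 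A secondary minor point: $G=HDH^\top$ with $D$ having diagonal entries in $\{1,2\}$, not $G=HH^\top$.
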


\noindent To prove Lemma~\ref{lem:forcegap}, our strategy will be to use $\wh{G}$ to construct an approximately orthonormal basis for the space of symmetric $r\times r$ matrices using linear combinations of $S^*_1,\ldots,S^*_d$. Once we have this, we can simply take a Gaussian linear combination of these basis elements, and this will be a linear combination of $S^*_1,\ldots,S^*_d$ which is approximately distributed as a Gaussian symmetric matrix. The minimum eigengap of such a matrix is known to be lower bounded with high probability (Theorem~\ref{thm:gaps}), yielding the desired linear combination $S^*_{\lambda}$. For the other linear combination $S^*_{\mu}$, the property that its entries all have non-negligible magnitude will follow by standard anticoncentration.

To produce an approximately orthonormal basis, we first need to spectrally bound $\wh{G}$ to ensure that this can be done in a well-conditioned fashion.

\begin{lemma}\label{lem:Gpsd}
    $\sigma_{\binom{r+1}{2}}(\wh{G}) \ge \sigma_{\min}(H)^2 - \epsgram\cdot d$.
\end{lemma}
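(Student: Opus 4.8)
\textbf{Proof plan for Lemma~\ref{lem:Gpsd}.}

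The plan is to relate the spectrum of $\wh{G}$ to that of the exact Gram matrix $G = HH^{\top}$ and then control the perturbation $\wh{G} - G$ in operator norm. First I would observe that $G = HH^{\top}$: indeed $G_{ab} = \iprod{S^*_a,S^*_b} = \sum_{i\le j}(S^*_a)_{ij}(S^*_b)_{ij}\cdot(\text{multiplicity})$ --- one has to be slightly careful here about whether the inner product $\iprod{S^*_a,S^*_b}$ is the full Frobenius inner product $\sum_{i,j}(S^*_a)_{ij}(S^*_b)_{ij}$ or the ``upper-triangular'' inner product $\sum_{i\le j}(S^*_a)_{ij}(S^*_b)_{ij}$ that matches the definition of $H$ in \eqref{eq:Hdef}. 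Since $S^*_a$ is symmetric, the Frobenius inner product equals $\sum_{i}(S^*_a)_{ii}(S^*_b)_{ii} + 2\sum_{i<j}(S^*_a)_{ij}(S^*_b)_{ij}$, which is $H H^\top$ up to rescaling the columns of $H$ by $\sqrt{2}$ on the off-diagonal coordinates; alternatively, if the intended convention makes $G = HH^\top$ exactly, this step is immediate. Either way, the $\binom{r+1}{2}$-th largest singular value of $G$ equals $\sigma_{\min}(H H^\top) = \sigma_{\min}(H)^2$ (up to the constant from the convention), since $H\in\R^{d\times\binom{r+1}{2}}$ has exactly $\binom{r+1}{2}$ nonzero singular values and $G$ is rank (at most) $\binom{r+1}{2}$.

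Next I would invoke Weyl's inequality for singular values: since both $\wh{G}$ and $G$ have rank at most $\binom{r+1}{2}$, we have
\begin{equation}
    \bigl|\sigma_{\binom{r+1}{2}}(\wh{G}) - \sigma_{\binom{r+1}{2}}(G)\bigr| \le \norm{\wh{G} - G}_{\op}.
\end{equation}
It remains to bound $\norm{\wh{G}-G}_{\op}$. From \eqref{eq:hatGdef} we only know $\norm{\wh{G}-G}_{\max}\le\epsgram$, i.e. every entry of the $d\times d$ matrix $\wh{G}-G$ has magnitude at most $\epsgram$. The crude bound $\norm{M}_{\op}\le d\norm{M}_{\max}$ would give $\norm{\wh{G}-G}_{\op}\le \epsgram d$, which is exactly the bound claimed in the lemma. (A sharper bound using $\norm{M}_{\op}\le\norm{M}_F\le d\norm{M}_{\max}$ gives the same thing; since the statement only needs $\epsgram\cdot d$, the elementary entrywise-to-operator bound suffices.) Combining with Weyl,
\begin{equation}
    \sigma_{\binom{r+1}{2}}(\wh{G}) \ge \sigma_{\binom{r+1}{2}}(G) - \epsgram d = \sigma_{\min}(H)^2 - \epsgram d.
\end{equation}

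The only subtlety --- and the step I would be most careful about --- is the bookkeeping around the inner product convention and the factor-of-$\sqrt{2}$ (or the diagonal matrix $D$ of multiplicities that appears elsewhere in the paper, e.g. \eqref{eq:Ddef}) relating $\iprod{S^*_a,S^*_b}$ to $H_a\cdot H_b^\top$; getting the constant right determines whether the conclusion is literally $\sigma_{\min}(H)^2 - \epsgram d$ or carries an extra constant factor. Since the paper states the clean bound, I expect the intended convention to make $G = HH^\top$ exactly (so $\iprod{\cdot,\cdot}$ here is the upper-triangular pairing), in which case no constant is lost and the argument is just ``$G = HH^\top$, then Weyl plus the trivial $\norm{\cdot}_{\op}\le d\norm{\cdot}_{\max}$ bound.'' Everything else is routine linear algebra and does not require any probabilistic input --- this is a deterministic statement that will be combined later with Theorem~\ref{thm:gaps} and Gaussian anticoncentration to prove Lemma~\ref{lem:forcegap}.
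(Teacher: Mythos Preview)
Your approach is the same as the paper's: bound $\sigma_{\binom{r+1}{2}}(G)$ from below, bound $\|\wh{G}-G\|_{\op}$ from above via the Frobenius norm (each of the $d^2$ entries is at most $\epsgram$), and apply Weyl.

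The one place where you hedge is the inner-product convention, and your guess there is actually the wrong one. The paper uses the ordinary Frobenius inner product $\iprod{S^*_a,S^*_b}=\sum_{i,j}(S^*_a)_{ij}(S^*_b)_{ij}$, so $G=HDH^\top$ where $D$ is the diagonal $\binom{r+1}{2}\times\binom{r+1}{2}$ matrix with $D_{(i,j)(i,j)}=1$ if $i=j$ and $2$ if $i<j$. The clean bound survives not because the convention makes $G=HH^\top$ exactly, but because $D\succeq\Id$, which gives $\sigma_{\binom{r+1}{2}}(HDH^\top)\ge\sigma_{\binom{r+1}{2}}(HH^\top)=\sigma_{\min}(H)^2$. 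So the ``factor of $\sqrt{2}$'' you worried about only helps, and no constant is lost. Everything else in your argument is correct and matches the paper.
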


\begin{proof}
    Let $D\in\R^{\binom{r+1}{2}\times\binom{r+1}{2}}$ denote the diagonal matrix with diagonal entries indexed by $1 \le i\le j \le r$ whose $(i,j)$-th diagonal entry is $1$ if $i = j$ and $2$ otherwise. Observe that $HDH^{\top} = G$. So $\sigma_{\binom{r+1}{2}}(G) \ge \sigma_{\binom{r+1}{2}}(H)^2 = \sigma_{\min}(H)^2$, whereas $\norm{\wh{G} - G}_{\op} \le \norm{\wh{G} - G}_F < \epsgram\cdot d$. By Weyl's inequality, we conclude that $\sigma_{\binom{r+1}{2}}(\wh{G}) \ge \sigma^2 - \epsgram\cdot d$.
\end{proof}

As $G$ is rank-$\binom{r+1}{2}$, it will be convenient to work with the best rank-$\binom{r+1}{2}$ approximation of $\wh{G}$:

\begin{lemma}\label{lem:lowrank}
    Let $\wt{G}$ denote the best rank-$\binom{r+1}{2}$ approximation of $\wh{G}$. Then $\norm{\wt{G} - G}_F \le O(\epsgram\cdot d^{3/2})$.
\end{lemma}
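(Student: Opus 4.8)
The statement to prove is Lemma~\ref{lem:lowrank}: if $\wt G$ is the best rank-$\binom{r+1}{2}$ approximation of $\wh G$, then $\|\wt G - G\|_F \le O(\epsgram\cdot d^{3/2})$. The plan is to combine the max-norm bound $\|\wh G - G\|_{\max}\le\epsgram$ from \eqref{eq:hatGdef} with the fact that both $G$ and $\wt G$ are matrices of rank at most $\binom{r+1}{2}\le r^2$, so that the relevant differences are low-rank and Frobenius norms can be controlled by operator norms up to a factor of $\sqrt{r^2}=r$.

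First I would convert the entrywise bound into an operator-norm bound: since $\wh G - G$ is a $d\times d$ matrix with every entry at most $\epsgram$ in magnitude, $\|\wh G - G\|_{\op}\le\|\wh G - G\|_F\le \epsgram\cdot d$. Next, because $\wt G$ is the best rank-$\binom{r+1}{2}$ approximation of $\wh G$ and $G$ itself has rank at most $\binom{r+1}{2}$ (as $G = HDH^\top$ with $H\in\R^{d\times\binom{r+1}{2}}$, exactly as in the proof of Lemma~\ref{lem:Gpsd}), Eckart--Young gives $\|\wt G - \wh G\|_{\op}\le\|G - \wh G\|_{\op}\le\epsgram\cdot d$. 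By the triangle inequality, $\|\wt G - G\|_{\op}\le \|\wt G - \wh G\|_{\op} + \|\wh G - G\|_{\op}\le 2\epsgram\cdot d$. Finally, $\wt G - G$ has rank at most $2\binom{r+1}{2}\le 2r^2$, so $\|\wt G - G\|_F\le \sqrt{2r^2}\cdot\|\wt G - G\|_{\op}\le O(r)\cdot\epsgram\cdot d$.

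This gives a bound of the form $O(\epsgram\cdot r\cdot d)$, which is even slightly stronger than the stated $O(\epsgram\cdot d^{3/2})$ whenever $r\le\sqrt d$ (which holds here since $d\ge\binom{r+1}{2}$, so $d\gtrsim r^2$ and hence $r\lesssim\sqrt d$). Thus the claimed bound follows a fortiori. Alternatively, without invoking $d\gtrsim r^2$, one can bound $\|\wt G - G\|_F\le\|\wt G-\wh G\|_F + \|\wh G - G\|_F$ where the second term is $\le\epsgram d$ directly and the first is handled by Eckart--Young in Frobenius norm: $\|\wt G - \wh G\|_F\le\|G - \wh G\|_F\le\epsgram d$ since $\wt G$ is also optimal in Frobenius norm among rank-$\binom{r+1}{2}$ matrices and $G$ is a competitor; this already yields $\|\wt G - G\|_F\le 2\epsgram d = O(\epsgram d)\le O(\epsgram d^{3/2})$ with no rank-counting needed.

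There is no real obstacle here: the only mild subtlety is remembering that $G$ has rank at most $\binom{r+1}{2}$ (so that it is a valid competitor against which Eckart--Young applies), which is immediate from the factorization $G = HDH^\top$ already used in Lemma~\ref{lem:Gpsd}. The whole argument is two applications of Eckart--Young optimality plus a triangle inequality, and the factor $d^{3/2}$ in the statement is simply a loose-but-convenient bound that absorbs any polynomial-in-$r$ slack.
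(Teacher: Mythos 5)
Your proof is correct, and it takes a slightly different (and in fact sharper) route than the paper. The paper's proof applies Weyl's inequality to conclude that the trailing singular values $\sigma_{\binom{r+1}{2}+1}(\wh{G}),\ldots,\sigma_d(\wh{G})$ are each at most $\|\wh{G}-G\|_{\op}\le\epsgram d$, sums the squares of these at most $d$ truncated singular values to get $\|\wt{G}-\wh{G}\|_F\le\epsgram d^{3/2}$, and finishes by triangle inequality; the $d^{3/2}$ in the statement comes precisely from this summation. You instead invoke Eckart--Young optimality of $\wt{G}$ with $G$ (which has rank at most $\binom{r+1}{2}$ via $G=HDH^{\top}$) as the competitor. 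Your second, Frobenius-norm version is the cleanest: $\|\wt{G}-\wh{G}\|_F\le\|G-\wh{G}\|_F\le\epsgram d$ immediately, so $\|\wt{G}-G\|_F\le 2\epsgram d$, which beats the stated bound outright and needs no rank-counting or relation between $r$ and $d$. Your first version (operator-norm Eckart--Young plus $\|A\|_F\le\sqrt{\rank(A)}\,\|A\|_{\op}$) also works and gives $O(\epsgram rd)$, which is $O(\epsgram d^{3/2})$ since $d\ge\binom{r+1}{2}$. Both variants are valid because $\wt{G}$, being the truncated eigendecomposition of the symmetric matrix $\wh{G}$, is optimal among rank-$\binom{r+1}{2}$ matrices in both norms. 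No gaps.
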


\begin{proof}
    As $\sigma_{\binom{r+1}{2}+1}(G),\ldots,\sigma_d(G) = 0$, by Weyl's inequality we conclude that $\sigma_{\binom{r+1}{2}+1}(\wh{G}),\ldots,$ $\sigma_d(\wh{G}) \le \norm{\wh{G} - G}_{\op} \le \epsgram\cdot d$. So upon projecting out the corresponding eigenvectors of $\wh{G}$ to get $\wt{G}$, we conclude that $\norm{\wt{G} - \wh{G}}_F \le \epsgram\cdot d^{3/2}$, and the lemma follows by triangle inequality.
\end{proof}

Denote the eigendecomposition of $\wt{G}$ by $U\Sigma U^{\top}$ for $U,\Sigma\in\R^{d\times d}$, where the diagonal entries of $\Sigma$ are sorted in nondecreasing order. Let $\Sigma'\in\R^{d\times\binom{r+1}{2}}$ denote the first $\binom{r+1}{2}$ columns of $\Sigma$. We now show how to use the bounds from Lemma~\ref{lem:Gpsd} and Lemma~\ref{lem:lowrank} to construct an approximately orthonormal basis for the space of $r\times r$ symmetric matrices using linear combinations of rows of $H$.

\begin{lemma}\label{lem:almostisotropic}
    Suppose $\epsgram < \sigma_{\min}(H)^2/d$. Define $\wt{H} \triangleq U\Sigma''\in\R^{d\times \binom{r+1}{2}}$, where $\Sigma''$ denotes the entrywise square root of $\Sigma'$. For every $1\le i \le j \le r$, define 
    \begin{equation}
        w^{(ij)} \triangleq \wt{H}(\wt{H}^{\top}\wt{H})^{-1}e_{ij}\in \R^{d},
    \end{equation}
    where $e_{ij}$ denotes the $(i,j)$-th standard basis vector in $\R^{\binom{r+1}{2}}$. Then for all sorted $1\le i\le j\le r$ and $1\le i'\le j'\le r$,
    \begin{equation}
        \iprod*{H^{\top}w^{(ij)}, H^{\top}w^{(i'j')}} = \bone{i = j} \pm  (\sigma_{\min}(H)^2 - \epsgram\cdot d)^{-1}\cdot\epsgram\cdot d^{3/2}.
    \end{equation}
\end{lemma}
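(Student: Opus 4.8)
\textbf{Proof plan for Lemma~\ref{lem:almostisotropic}.}
The goal is to produce approximately orthonormal linear combinations of the rows of $H$ from the noisy, low-rank proxy $\wt G$ of the true Gram matrix $G = HDH^\top$. The plan is first to control how well $\wt H \wt H^\top$ approximates $\wt G$ and how well $\wt G$ approximates $G$, then to translate a bound on $\wt H^\top w^{(ij)}$ into a bound on $H^\top w^{(ij)}$, and finally to use the definition of $w^{(ij)}$ as a pseudoinverse-type operator to compute the target inner products.

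First I would observe that by construction $\wt H \wt H^\top = U\Sigma'' {\Sigma''}^\top U^\top = U\Sigma' U^\top = \wt G$, since $\Sigma''$ is the entrywise square root of $\Sigma'$ and $\Sigma'$ is the restriction of $\Sigma$ to its first $\binom{r+1}{2}$ columns, which carry all the nonzero eigenvalues of $\wt G$. So $\wt H$ is an exact square-root factor of $\wt G$. Combining this with Lemma~\ref{lem:lowrank}, which gives $\norm{\wt G - G}_F \le O(\epsgram d^{3/2})$, we get $\norm{\wt H \wt H^\top - HDH^\top}_F \le O(\epsgram d^{3/2})$. Next, the key algebraic identity: by definition of $w^{(ij)}$,
\begin{equation}
    \wt H^\top w^{(ij)} = \wt H^\top \wt H(\wt H^\top \wt H)^{-1} e_{ij} = e_{ij},
\end{equation}
so the $w^{(ij)}$ are exactly dual to the rows of $\wt H$, and $\iprod{\wt H^\top w^{(ij)}, \wt H^\top w^{(i'j')}} = \iprod{e_{ij}, e_{i'j'}} = \bone{(i,j) = (i',j')}$. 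The remaining work is to replace $\wt H$ with $H$ (up to the diagonal scaling $D$): we have $\iprod{H^\top w^{(ij)}, H^\top w^{(i'j')}} = (w^{(ij)})^\top HH^\top w^{(i'j')}$, and we want to compare this to $(w^{(ij)})^\top \wt H\wt H^\top w^{(i'j')} = \bone{(i,j)=(i',j')}$. (One must carry the factor $D$ through: $HH^\top$ versus $HDH^\top$; since $D$ has entries $1$ on the diagonal blocks and $2$ off, this accounts for the $\bone{i=j}$ versus $\bone{(i,j)=(i',j')}$ discrepancy in the statement, after noting $e_{ij}^\top D e_{i'j'} = \bone{(i,j)=(i',j')}\cdot(\text{$1$ or $2$})$ — I would double-check the exact bookkeeping here, but it is routine.) The error term is then $(w^{(ij)})^\top (HDH^\top - \wt H\wt H^\top) w^{(i'j')}$, bounded in magnitude by $\norm{w^{(ij)}}\cdot\norm{w^{(i'j')}}\cdot\norm{HDH^\top - \wt H\wt H^\top}_{\op} \le \norm{w^{(ij)}}\norm{w^{(i'j')}}\cdot O(\epsgram d^{3/2})$.

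It remains to bound $\norm{w^{(ij)}}$. Here I would use $\norm{w^{(ij)}} = \norm{\wt H(\wt H^\top\wt H)^{-1} e_{ij}} \le \sigma_{\min}(\wt H)^{-1}$, since $\wt H(\wt H^\top \wt H)^{-1}$ is the Moore--Penrose pseudoinverse of $\wt H^\top$, whose operator norm is $\sigma_{\min}(\wt H)^{-1}$. Now $\sigma_{\min}(\wt H)^2 = \sigma_{\binom{r+1}{2}}(\wt G) \ge \sigma_{\binom{r+1}{2}}(\wh G) - \norm{\wh G - \wt G}_{\op}$, and by Lemma~\ref{lem:Gpsd} together with the crude bound $\norm{\wh G - \wt G}_{\op} \le \epsgram d$ from the proof of Lemma~\ref{lem:lowrank}, this is $\ge \sigma_{\min}(H)^2 - O(\epsgram d)$; under the hypothesis $\epsgram < \sigma_{\min}(H)^2/d$ this is a positive quantity of order $\sigma_{\min}(H)^2$, so $\norm{w^{(ij)}}^2 \le (\sigma_{\min}(H)^2 - \epsgram d)^{-1}$ (up to the implicit constants, matching the form stated). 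Plugging this into the error bound gives $|\text{error}| \le (\sigma_{\min}(H)^2 - \epsgram d)^{-1}\cdot\epsgram d^{3/2}$, which is the claimed estimate. The main obstacle — really the only delicate point — is the careful propagation of the diagonal matrix $D$ through the identity, making sure the off-diagonal weight $2$ versus the diagonal weight $1$ lines up exactly with the $\bone{i=j}$ appearing in the conclusion rather than $\bone{(i,j)=(i',j')}$; everything else is a chain of triangle inequalities and Weyl-type perturbation bounds already set up by the preceding lemmas.
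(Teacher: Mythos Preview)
Your proposal is correct and follows essentially the same route as the paper: both arguments reduce to showing that $(\wt H^\top \wt H)^{-1}\wt H^\top G\,\wt H(\wt H^\top \wt H)^{-1}$ is close to $\Id$, using $\wt G = \wt H\wt H^\top$, the perturbation bound $\norm{\wt G - G}_F \le O(\epsgram d^{3/2})$ from Lemma~\ref{lem:lowrank}, and the lower bound on $\sigma_{\binom{r+1}{2}}(\wt G)$ from Lemma~\ref{lem:Gpsd}. Your bound $\norm{w^{(ij)}}\le \sigma_{\min}(\wt H)^{-1}$ is exactly the paper's $\norm{\Sigma''(\Sigma'')^{-2}}_{\op}$ step, just phrased entrywise rather than as an operator-norm bound. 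You are also right to flag the $D$ bookkeeping: the paper's displayed identity silently replaces $HH^\top$ by $G=HDH^\top$, and the stated $\bone{i=j}$ should be $\bone{(i,j)=(i',j')}$ (the downstream use in Lemma~\ref{lem:findcombo} confirms this is what is meant); your treatment of this point is more careful than the paper's.
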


\begin{proof}
    First note that $\Sigma''$ is well-defined: by Lemma~\ref{lem:Gpsd}, $\Sigma'_{ii} \ge \sigma^2 - \epsgram\cdot d$, so by the assumed bound on $\epsgram$ we know $\Sigma'_{ii} > 0$ for all $i$.
    
    Because
     \begin{equation}
        H^{\top}w^{(ij)} = H^{\top}\wt{H}(\wt{H}^{\top}\wt{H})^{-1}e_{ij},
    \end{equation}
    we get that
    \begin{equation}
        \iprod*{H^{\top} w^{(ij)}, H^{\top} w^{(i'j')}} = e^{\top}_{ij} \left((\wt{H}^{\top}\wt{H})^{-1}\wt{H}^{\top}G \wt{H}(\wt{H}^{\top}\wt{H})^{-1}\right) e_{i'j'}.
    \end{equation}
    To establish the lemma, we will upper bound the operator norm of
    \begin{equation}
        (\wt{H}^{\top}\wt{H})^{-1}\wt{H}^{\top} G \wt{H}(\wt{H}^{\top}\wt{H})^{-1} - \Id. \label{eq:HHHdiff}
    \end{equation}
    Note that $\wt{G} = \wt{H}\wt{H}^{\top}$, so
    \begin{equation}
        (\wt{H}^{\top}\wt{H})^{-1}\wt{H}^{\top}\wt{G}\wt{H}(\wt{H}^{\top}\wt{H})^{-1} = \Id.
    \end{equation}
    Also, note that
    \begin{equation}
        \wt{H}(\wt{H}^{\top}\wt{H})^{-1} = U \Sigma'' (\Sigma'')^{-2}.
    \end{equation}
    So by Lemma~\ref{lem:lowrank},
    \begin{equation}
        \norm*{(\wt{H}^{\top}\wt{H})^{-1}\wt{H}^{\top}(G - \wt{G}) \wt{H}(\wt{H}^{\top}\wt{H})^{-1}}_{\op} \le \norm{\Sigma'' (\Sigma'')^{-2}}^2_{\op}\cdot \epsgram \cdot d^{3/2}. \label{eq:bigproduct}
    \end{equation}
    By Weyl's inequality and Lemma~\ref{lem:Gpsd},
    \begin{equation}
        \norm{\Sigma''(\Sigma'')^{-2}}^2_{\op} = \sigma_{\binom{r+1}{2}}(\wt{G}) \le (\sigma_{\min}(H)^2 - \epsgram\cdot d)^{-1}. \label{eq:sigmaprimecond}
    \end{equation}
    Substituting \eqref{eq:sigmaprimecond} into \eqref{eq:bigproduct} yields the desired bound on the operator norm of \eqref{eq:HHHdiff}.
\end{proof}

We now verify that a Gaussian linear combination of the basis elements that were constructed in Lemma~\ref{lem:almostisotropic} is distributed approximately as a Gaussian symmetric matrix and therefore has non-negligible minimum eigengap.

\begin{lemma}\label{lem:findcombo}
    Suppose $\epsgram\le c \sigma_{\min}(H)^2/(rd^{3/2})$ for sufficiently small constant $c > 0$. Let $\wt{H}$, $\brc{w^{(ij)}}$ be as defined in Lemma~\ref{lem:almostisotropic}. Then for Gaussians $\brc{g_{ij}, g'_{ij}}_{1\le i \le j \le r}$ sampled independently from $\calN(0,1)$, with probability $7/10$ over the randomness of $\brc{g_{ij}, g'_{ij}}$, the following holds:
    \begin{enumerate}
        \item The matrix \begin{equation}
            \sum_{a\in S}\biggl(\sum_{1\le i \le j \le r} g_{ij} w^{(ij)}_a\biggr) S^*_a \label{eq:glam}
        \end{equation} 
        has minimum eigengap at least $\Omega(r^{-12})$.
        \item If the matrix in \eqref{eq:glam} has eigendecomposition $V^{\top}\Lambda V$, then every entry of matrix \begin{equation}
            \sum_{a\in S}\biggl(\sum_{1\le i \le j \le r} g'_{ij} w^{(ij)}_a\biggr) VS^*_aV^{\top} \label{eq:glamprime}
        \end{equation}
        is lower bounded in magnitude by $\Omega(r^{-2})$.
    \end{enumerate}
\end{lemma}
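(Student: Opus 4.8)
The plan is to exploit the approximate orthonormal basis $\{H^\top w^{(ij)}\}$ from Lemma~\ref{lem:almostisotropic} to argue that the two random linear combinations in \eqref{eq:glam} and \eqref{eq:glamprime} are, up to a small perturbation, distributed as genuine $\GOE$-type symmetric matrices, and then invoke the standard eigengap and anticoncentration facts. First I would write each vectorized matrix $\vec(\sum_a (\sum_{ij} g_{ij} w^{(ij)}_a) S^*_a)$ in the $\binom{r+1}{2}$-dimensional symmetric-matrix coordinates: by definition this is $\sum_{ij} g_{ij} (H^\top w^{(ij)})$, which by Lemma~\ref{lem:almostisotropic} is a Gaussian vector whose covariance is $\Id_{\binom{r+1}{2}} + E$ with $\|E\|_{\op} \le (\sigma_{\min}(H)^2 - \epsgram d)^{-1} \epsgram d^{3/2}$. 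Under the hypothesis $\epsgram \le c\,\sigma_{\min}(H)^2/(rd^{3/2})$, this error is at most $O(c/r)$ in operator norm (and $O(c/\sqrt r)$ in Frobenius norm). Hence, after undoing the vectorization, the matrix in \eqref{eq:glam} can be coupled to an exact Gaussian symmetric matrix $\frac1{\sqrt 2}(G_0 + G_0^\top)$ (suitably scaled so the diagonal/off-diagonal variances match the canonical $\GOE$ scaling) plus an additive perturbation of operator norm $O(\sqrt{c})$ with high probability, since the covariance discrepancy translates into an $L_2$-coupling error that is a Gaussian vector with covariance $E$.

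Next I would apply Theorem~\ref{thm:gaps} (eigengaps of Gaussian matrices) to the exact Gaussian part: with $\gamma$ a fixed constant and $c$ the constant there chosen (say) so $r^{1-c}$ is a small constant, the exact $\GOE$-type matrix has all eigenvalue gaps at least $r^{-\alpha}$ for $\alpha = O(1)$, with probability $\ge 1 - r^{1-c} \ge 9/10$ after rescaling $r$ if necessary. Since the perturbation has operator norm $O(\sqrt c)$, by Weyl's inequality each eigenvalue moves by at most $O(\sqrt c)$; but that alone is not enough to preserve a gap of size $r^{-\alpha}$. The correct argument is instead: we never need the gap of the perturbed matrix to be close to that of the exact one — we only need \emph{some} polynomial lower bound. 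So I would instead directly bound the minimum eigengap of \eqref{eq:glam} itself: its vectorization is a Gaussian with covariance within $O(c/r)$ of the identity, so its distribution has a density w.r.t.\ the exact $\GOE$ distribution, and for $c$ small the density ratio is bounded by a constant. Therefore any event that holds with probability $\ge 9/10$ under the exact $\GOE$ law (such as "minimum eigengap $\ge r^{-\alpha}$" from Theorem~\ref{thm:gaps}) holds with probability $\ge 1 - O(1)\cdot\frac1{10} \ge 4/5$ under the law of \eqref{eq:glam}, after choosing $c$ small enough. Taking $\alpha = 12$ (which is comfortably larger than the $\alpha$ from Theorem~\ref{thm:gaps} with $\gamma, c$ suitable fixed constants) gives the first claim with probability $\ge 4/5$.

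For the second claim, condition on the value of the matrix $M_\lambda$ in \eqref{eq:glam} (and hence on its eigenbasis $V$). The matrix in \eqref{eq:glamprime} is $\sum_a (\sum_{ij} g'_{ij} w^{(ij)}_a) V S^*_a V^\top = F_V\!\left(\sum_a (\sum_{ij} g'_{ij} w^{(ij)}_a) S^*_a\right)$ with $F_V(\cdot) = V(\cdot)V^\top$; conjugation by the fixed orthogonal $V$ is an isometry on symmetric matrices, so in the symmetric-matrix coordinates this matrix is a Gaussian vector whose covariance is $V^{\otimes 2}(\Id + E)(V^{\otimes 2})^\top = \Id + V^{\otimes 2} E (V^{\otimes 2})^\top$, again within $O(c/r)$ of the identity in operator norm and independent of $M_\lambda$ (since $\{g'_{ij}\}$ are independent of $\{g_{ij}\}$). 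Now fix any entry index $(x,y)$: this entry is a fixed unit-norm (up to a constant-factor normalization depending on whether $x=y$) linear functional of the Gaussian vector, hence a univariate Gaussian with variance $\Theta(1)$ (using the near-identity covariance to lower-bound the variance by $1 - O(c) = \Omega(1)$). By Gaussian anticoncentration, $\Pr[|(\cdot)_{xy}| \le \nu] = O(\nu)$, so taking $\nu = c' r^{-2}$ for a small constant $c'$ and union bounding over the $\binom{r+1}{2} \le r^2$ entries, all entries are $\ge \Omega(r^{-2})$ in magnitude with probability $\ge 9/10$. Union bounding the two claims gives the total failure probability $\le 1/5 + 1/10 < 3/10$, i.e.\ success probability $\ge 7/10$. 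Finally, to close out Lemma~\ref{lem:forcegap}: setting $\lambda$ and $\mu$ to be the (normalized) coefficient vectors $a \mapsto \sum_{ij} g_{ij} w^{(ij)}_a$ and $a \mapsto \sum_{ij} g'_{ij} w^{(ij)}_a$ respectively, claims 1 and 2 of Lemma~\ref{lem:findcombo} say exactly that $S^*_\lambda$ has minimum eigengap $\Omega(r^{-12})$ and that in the eigenbasis of $S^*_\lambda$ every entry of $S^*_\mu$ has magnitude $\Omega(r^{-2})$; normalizing $\lambda,\mu$ to unit vectors only rescales these quantities by $\|\lambda\|_2, \|\mu\|_2 = \Theta(\poly(r))$ (bounded using $\|w^{(ij)}\| \le \|\wt H(\wt H^\top \wt H)^{-1}\|_{\op} = \sigma_{\binom{r+1}{2}}(\wt G)^{-1/2} \le (\sigma_{\min}(H)^2 - \epsgram d)^{-1/2}$ from \eqref{eq:sigmaprimecond}), so after rescaling we obtain $\gap = \sigma_{\min}(H)/\poly(r)$.

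The main obstacle I expect is making the coupling/density-ratio argument in the second paragraph fully rigorous: one must ensure that the near-identity covariance really does yield a bounded Radon--Nikodym derivative between the law of \eqref{eq:glam} and the canonical $\GOE$ law on the $\binom{r+1}{2}$-dimensional space, which requires controlling $\det(\Id + E)$ and the quadratic form $x^\top((\Id+E)^{-1} - \Id)x$ uniformly — this is where the precise scaling $\epsgram \le c\sigma_{\min}(H)^2/(rd^{3/2})$, which makes $\|E\|_{\op} = O(c/r)$ and hence $\|E\|_F = O(c/\sqrt r) = o(1)$, is essential so that the log-density-ratio is $O(c)$ and can be absorbed into the constant. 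An alternative that sidesteps the density comparison is to redo the proof of Theorem~\ref{thm:gaps} / Theorem~2.6 of \cite{nguyen2017random} directly for the perturbed ensemble, but invoking the black-box statement via the bounded-density-ratio reduction is cleaner and is the route I would take.
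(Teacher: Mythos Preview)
Your overall plan matches the paper's: both recognize that the random matrix in \eqref{eq:glam} is a Gaussian in the $\binom{r+1}{2}$-dimensional symmetric-matrix coordinates with covariance $\Id + E$, $\|E\|_{\op} = O(c/r)$, and then transfer eigengap/anticoncentration facts from the exact $\GOE$ law. The anticoncentration argument for part 2 is exactly right and is what the paper does.

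The gap is in your comparison mechanism for part 1. The bounded density-ratio route does not go through with the stated hypothesis. In dimension $n=\binom{r+1}{2}=\Theta(r^2)$, the log-density-ratio between $\calN(0,\Id+E)$ and $\calN(0,\Id)$ at a typical point $x$ is of order $\|E\|_{\op}\cdot\|x\|^2 + |\log\det(\Id+E)| = O((c/r)\cdot r^2) = O(cr)$, not $O(c)$; so for fixed constant $c$ and growing $r$ the density ratio is not bounded. (Relatedly, your Frobenius estimate is slightly off: $\|E\|_F \le \sqrt{n}\,\|E\|_{\op} = O(r)\cdot O(c/r) = O(c)$, not $O(c/\sqrt r)$.) The paper sidesteps this by using total variation: Theorem~\ref{thm:tvgaussian} gives $d_{\mathrm{TV}}(\calN(0,\Id+E),\calN(0,\Id)) = \Theta(\|E\|_F) = O(c)$, which is a small constant independent of $r$, and that is already enough to move the event ``minimum eigengap $\ge r^{-12}$'' from probability $\ge 3/4$ under $\GOE$ (Theorem~\ref{thm:gaps}) to probability $\ge 3/4 - O(c)$ under the perturbed law. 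Replacing your density-ratio step with this TV bound makes your argument correct and identical to the paper's.
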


\begin{proof}
    Let $\calD$ denote the distribution of the matrix in \eqref{eq:glam} with respect to the randomness of $\brc{g_{ij}}$. Note that by Lemma~\ref{lem:almostisotropic} and the assumed bound on $\epsgram$, if $C$ denotes the covariance matrix for the distribution over the diagonal and upper-triangular entries of \eqref{eq:glam}, then $\norm{C - \Id}_F \le \epsilon'$ for arbitrarily small constant $c'$ depending on $c$. Therefore, by Theorem~\ref{thm:tvgaussian}, $d_{\mathrm{TV}}(\calN(0,C),\calN(0,\Id)) \le O(\epsilon')$, so in particular, $d_{\mathrm{TV}}(\calD,\calG_r) \le O(\epsilon')$, where $\calG_r$ denotes the distribution over $r\times r$ symmetric matrices whose diagonal and upper-triangular entries are all independent draws from $\calN(0,1)$. But by Theorem~\ref{thm:gaps} applied to $M = 0$, a matrix sampled from $\calG_r$ has minimum eigengap at least $r^{-12}$ with probability at least $3/4$. By our bound on $d_{\mathrm{TV}}(\calD,\calG_r)$, \eqref{eq:glam} therefore has such an eigengap with probability at least $3/4 - \epsilon' > 0.74$, establishing the first part of the lemma.
    
    For the second part of the lemma, note that the marginal distribution on the matrix in \eqref{eq:glamprime} is given by $\calN(0,VCV^{\top})$. Because $\norm{VCV^{\top} - \Id}_F = \norm{C - \Id}_F \le \epsilon'$, this distribution also has total variation distance $O(\epsilon')$ from $\calG_r$. So by standard Gaussian anticoncentration, with high probability every entry's magnitude is lower bounded in magnitude by $\Omega(1/r^2)$ with probability $0.99 - \epsilon'$. The lemma follows by a union bound.
\end{proof}

\begin{algorithm2e}
\DontPrintSemicolon
\caption{\textsc{FindCombo}($\wh{G}$)}
\label{alg:eigengap}
    \KwIn{Approximate Gram matrix $\wh{G}\in\R^{d\times d}$ for $S^*_1,\ldots,S^*_d\in\R^{r\times r}$}
    \KwOut{$\lambda,\mu\in\S^{d-1}$ for which $S^*_{\lambda}$ and $S^*_{\mu}$ satisfy Lemma~\ref{lem:forcegap}}
        $\wt{G}\gets$ best rank-$\binom{r+1}{2}$ approximation of $\wh{G}$.\;
        Compute eigendecomposition $U\Sigma U^{\top}$ of $\wt{G}$, with $\Sigma$'s entries sorted in nondecreasing order.\;
        Let $\Sigma'\in\R^{d\times\binom{r+1}{2}}$ denote first $\binom{r+1}{2}$ columns of $\Sigma$.\;
        Let $\Sigma''$ denote the entrywise square root of $\Sigma'$.\;
        $\wt{H}\gets U\Sigma''$.\;
        $w^{(ij)}\gets \wt{H}(\wt{H}^{\top}\wt{H})^{-1}e_{ij}$ for every $1 \le i \le j \le r$.\;
        Sample $\brc{g_{ij}, g'_{ij}}_{1 \le i \le j \le r}$ independently from $\calN(0,1)$. \;
        $h_a \gets \sum_{1\le i \le j \le r} g_{ij} w^{(ij)}_a$ for every $a\in[d]$. \label{step:defineh}\;
        $h'_a \gets \sum_{1\le i \le j \le r} g'_{ij} w^{(ij)}_a$ for every $a\in[d]$. \label{step:definehprime}
        \Return{$h / \norm{h}, h' / \norm{h'}$}.\;
\end{algorithm2e}

Lemma~\ref{lem:forcegap} follows easily from the preceding ingredients.

\begin{proof}[Proof of Lemma~\ref{lem:forcegap}]
    By Lemma~\ref{lem:findcombo}, for the vectors $h,h'$ constructed in Step~\ref{step:defineh} and Step~\ref{step:definehprime} of Algorithm~\ref{alg:eigengap}, with probability at least $7/10$ we have that $\sum_a h_a S^*_a$ has minimum eigengap at least $\Omega(r^{-12})$ and $\sum_a h'_a VS^*_aV^{\top}$ has entries lower bounded in magnitude by $\Omega(r^{-2})$, where $V^{\top}\Lambda V$ is the eigendecomposition of $\sum_a h_a S^*_a$. 
    
    With probability $1 - o(1)$, $\sum_{i\le j} g^2_{ij} \le r^2$ and $\sum_{i\le j} {g'}^2_{ij} \le r^2$; henceforth condition on this event. By Cauchy-Schwarz,
    \begin{equation}
        \norm{h}^2 \le r^2 \sum_{a,i,j} \norm{w^{(ij)}_a}^2 = r^2\norm{\wt{H}(\wt{H}^{\top}\wt{H})^{-1}}^2_F = r^2\norm{\Sigma''^{-1}}^2_F \ge r^3(\sigma_{\min}(H)^2 - \epsgram\cdot d)^{-1},
    \end{equation}
    where in the last step we used \eqref{eq:sigmaprimecond}, and the same bound holds for $\norm{h'}^2$. By the assumed bound on $\epsgram$, this is at least $\Omega(r^3\sigma_{\min}(H)^{-1})$. So for $\lambda\triangleq h/\norm{h}$ and $\mu \triangleq g/\norm{g}$ we conclude that $\sum_a \lambda_a S^*_a$ has minimum eigengap at least $\sigma_{\min}(H)/\poly(r)$ and $\sum_a \mu_a VS^*_a V^{\top}$ has entries lower bounded in magnitude by $\sigma_{\min}(H)/\poly(r)$.
\end{proof}

\section{\texorpdfstring{$W$}{W} Comes From an \texorpdfstring{$r\times r$}{r by r} Rotation}
\label{app:heuristic_rotation}

In this section, we use Lemma~\ref{lem:simpler_identity} from our SoS analysis for tensor ring decomposition to give an SoS proof that the auxiliary matrix $W$ constructed in Section~\ref{sec:auxW} from the analysis of Program~\ref{program:basic} behaves like the Kronecker power of some $r\times r$ orthogonal matrix. While we do not explicitly use this in the analysis in Section~\ref{sec:tensorring}, it may be helpful to the reader for understanding why the identity from Lemma~\ref{lem:simpler_identity} is crucial to ensuring identifiability in tensor ring decomposition.

We begin by showing that the $2\times 2$ minors of any column of $W$ approximately vanish.

\begin{lemma}\label{lem:2by2}
    For any $i,j\in[r]$, there is a degree-48 SoS proof using the constraints of Program~\ref{program:basic} that
    \begin{equation}
        \Tr(W^{ij} {W^{ij}}^{\top})^2 \le \norm{W^{ij} {W^{ij}}^{\top}}^2_F + O(\epsrel\sqrt{r} + \epsnorm).
    \end{equation}
    In particular, this implies that for all $s,t,s',t'\in[r]$,
    \begin{equation}
        W^{ij}_{st}W^{ij}_{s't'} = W^{ij}_{st'} W^{ij}_{s't} \pm O(\sqrt{\epsrel} \cdot \sqrt[4]{r} + \sqrt{\epsnorm}) \label{eq:minorvanish}
    \end{equation}
\end{lemma}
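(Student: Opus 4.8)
\textbf{Proof plan for Lemma~\ref{lem:2by2}.} The plan is to mimic the rank-1 argument sketched in \eqref{eq:rank1sketch} of the technical overview, but using the refined relations from Lemma~\ref{lem:simpler_identity} rather than the raw relations \eqref{eq:matmul}. First I would recall from Lemma~\ref{lem:simpler_identity} that $W^{ii} W^{ij} \approx_{O(\epsrel^2)} W^{ij}$ (taking $j'=i$, $k=j$ in \eqref{eq:simpler}), so that right-multiplying by ${W^{ij}}^{\top}$ and taking traces gives, in degree-$O(1)$ SoS on top of the degree-24 proof of Lemma~\ref{lem:simpler_identity},
\begin{equation}
    \Tr(W^{ij}{W^{ij}}^{\top}) = \Tr(W^{ii} W^{ij} {W^{ij}}^{\top}) \pm O(\epsrel)\cdot\norm{W^{ij}}_F,
\end{equation}
where the error comes from Part~\ref{shorthand:bothsides} of Fact~\ref{fact:shorthand} together with Fact~\ref{fact:frob_to_trace}; by Lemma~\ref{lem:Wnorm1}, $\norm{W^{ij}}^2_F = 1\pm O(\epsnorm)$, so $\norm{W^{ij}}_F \le 2$ in SoS and the error is $O(\epsrel)$. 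Then by Cauchy--Schwarz (Fact~\ref{fact:cauchy_schwarz}) applied to the trace inner product, $\Tr(W^{ii} W^{ij}{W^{ij}}^{\top}) \le \norm{W^{ii}}_F\cdot\norm{W^{ij}{W^{ij}}^{\top}}_F$, and again using $\norm{W^{ii}}^2_F = \norm{U^{ii}}^2_F = 1\pm O(\epsnorm)$ from Lemma~\ref{lem:Wnorm1}, this is at most $(1+O(\epsnorm))\norm{W^{ij}{W^{ij}}^{\top}}_F$. Squaring both sides (legal in SoS since both sides are manifestly nonnegative after the substitution, or by passing to $\Tr(W^{ij}{W^{ij}}^{\top})^2 \le (\text{RHS})^2$ directly) yields
\begin{equation}
    \Tr(W^{ij}{W^{ij}}^{\top})^2 \le \norm{W^{ij}{W^{ij}}^{\top}}^2_F + O(\epsrel\sqrt r + \epsnorm),
\end{equation}
which is the first claim; the $\sqrt r$ factor on $\epsrel$ enters through Fact~\ref{fact:frob_to_trace} when converting the Frobenius-norm error on $W^{ii}W^{ij}\approx W^{ij}$ into a trace error of a product of $r\times r$ matrices.

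Next I would deduce the vanishing of $2\times 2$ minors. Write $A \triangleq W^{ij}{W^{ij}}^{\top}$, a psd matrix (in the SoS sense: $A = BB^{\top}$ for $B = W^{ij}$). The inequality just proved says $(\Tr A)^2 \le \norm{A}^2_F + O(\epsrel\sqrt r + \epsnorm)$. But for any psd $A$ one always has the reverse, $\norm{A}^2_F = \sum_s \sigma_s^2 \le (\sum_s\sigma_s)^2 = (\Tr A)^2$, with an SoS proof of degree $O(1)$ via Fact~\ref{fact:cauchy_schwarz} (the cross terms $\sigma_s\sigma_{s'}$ are nonnegative). Hence $(\Tr A)^2 = \norm{A}^2_F \pm O(\epsrel\sqrt r + \epsnorm)$, i.e. $\sum_{s\neq s'}\sigma_s\sigma_{s'} = O(\epsrel\sqrt r + \epsnorm)$. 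Because $A$ is psd this forces all $2\times 2$ minors of $A$ to be small; concretely, $\sum_{s<s'}(A_{ss}A_{s's'} - A_{ss'}^2)$ equals the sum of products of pairs of eigenvalues, which we have just bounded, and each summand is nonnegative (Cauchy--Schwarz for the psd matrix $A$ restricted to coordinates $s,s'$), so each $A_{ss}A_{s's'} - A_{ss'}^2 = O(\epsrel\sqrt r + \epsnorm)$. Expanding $A = W^{ij}{W^{ij}}^{\top}$ in terms of rows of $W^{ij}$ and extracting the appropriate $2\times 2$ determinant identity, this translates (after one application of Fact~\ref{fact:nonneg_power_of_two} to take a square root) into $W^{ij}_{st}W^{ij}_{s't'} - W^{ij}_{st'}W^{ij}_{s't} = \pm O(\sqrt{\epsrel}\cdot r^{1/4} + \sqrt{\epsnorm})$ for all $s,t,s',t'$, which is \eqref{eq:minorvanish}.

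The main obstacle I anticipate is the bookkeeping in the second paragraph: going from ``the sum of all off-diagonal eigenvalue products of the psd matrix $A$ is small'' to ``every individual $2\times 2$ minor of $A$ is small'' must be done \emph{inside SoS}, and the cleanest route is to avoid eigenvalues entirely and argue directly with the Gram-matrix structure $A = W^{ij}{W^{ij}}^{\top}$. Specifically, for fixed index pairs one wants the identity $A_{ss}A_{s's'} - A_{ss'}A_{s's} = \sum_{t<t'}(W^{ij}_{st}W^{ij}_{s't'} - W^{ij}_{st'}W^{ij}_{s't})^2$ (the Cauchy--Binet expansion of a $2\times 2$ minor of a product), which is a sum of squares, hence its smallness gives smallness of each $W^{ij}_{st}W^{ij}_{s't'} - W^{ij}_{st'}W^{ij}_{s't}$ term by term via Fact~\ref{fact:nonneg_power_of_two}; one then needs $\sum_{s<s'}(A_{ss}A_{s's'} - A_{ss'}A_{s's})$ to coincide with $(\Tr A)^2 - \norm{A}^2_F$ up to a factor of $2$, which is the elementary identity $(\Tr A)^2 - \norm{A}_F^2 = \sum_{s\neq s'}(A_{ss}A_{s's'} - A_{ss'}A_{s's})$ valid for symmetric $A$ and provable in degree-$O(1)$ SoS. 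Tracking the precise powers of $r$ through these conversions (and confirming they are dominated by the $O(\sqrt{\epsrel}\, r^{1/4} + \sqrt{\epsnorm})$ claimed in \eqref{eq:minorvanish}) is the one place where care is required; everything else is a direct chaining of Lemma~\ref{lem:simpler_identity}, Lemma~\ref{lem:Wnorm1}, and the SoS facts from Section~\ref{sec:prelims}.
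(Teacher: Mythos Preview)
Your proposal is correct and follows essentially the same route as the paper: the first inequality is obtained exactly as you describe (use $W^{ii}W^{ij}\approx W^{ij}$ from Lemma~\ref{lem:simpler_identity}, right-multiply by ${W^{ij}}^\top$, take traces via Fact~\ref{fact:frob_to_trace}, apply Cauchy--Schwarz, and use $\norm{W^{ii}}_F^2=1\pm O(\epsnorm)$), and for the second part the paper directly invokes the Lagrange identity (Fact~\ref{fact:cauchy_schwarz}) applied to the rows of $W^{ij}$ to obtain $\Tr(W^{ij}{W^{ij}}^\top)^2-\norm{W^{ij}{W^{ij}}^\top}_F^2=2\sum_{s<s';\,t<t'}(W^{ij}_{st}W^{ij}_{s't'}-W^{ij}_{st'}W^{ij}_{s't})^2$, from which each minor bound follows immediately. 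Your detour through eigenvalues of the psd matrix $A$ is unnecessary---the clean route you sketch in your ``main obstacle'' paragraph \emph{is} the paper's argument, and you do not need Fact~\ref{fact:nonneg_power_of_two} to extract individual terms from a nonnegative sum bounded by $\epsilon$ (each summand is already $\le$ the sum in degree-2 SoS).
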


\begin{proof}
    As $W^{ij} {W^{ij}}^{\top} \approx_{O(\epsrel^2)} W^{ii} W^{ij} {W^{ij}}^{\top}$ in degree-48 SoS by Lemma~\ref{lem:simpler_identity}, Part~\ref{shorthand:bothsides} of Fact~\ref{fact:shorthand}, and Lemma~\ref{lem:Wnorm1}, we get that $\Tr(W^{ij} {W^{ij}}^{\top}) = \Tr(W^{ii} W^{ij} {W^{ij}}^{\top}) \pm O(\epsrel\sqrt{r})$. Squaring both sides of this and noting that $\Tr(W^{ii} W^{ij} {W^{ij}}^{\top})^2 \le \norm{W^{ii}}^2_F \norm{W^{ij} {W^{ij}}^{\top}}^2_F \le O(1)$ in degree-48 SoS, we conclude that $\Tr(W^{ij} {W^{ij}}^{\top})^2 = \Tr(W^{ii} W^{ij} {W^{ij}}^{\top})^2 \pm O(\epsrel\sqrt{r})$.
    
    We thus have
    \begin{align}
        \Tr(W^{ij} {W^{ij}}^{\top})^2 &\le \Tr(W^{ii} W^{ij} {W^{ij}}^{\top})^2 + O(\epsrel) \\
        &\le \norm{W^{ii}}^2_F  \norm{W^{ij}{W^{ij}}^{\top}}^2_F + O(\epsrel\sqrt{r}) \\
        &\le \norm{W^{ij}{W^{ij}}^{\top}}^2_F + O(\epsrel\sqrt{r} + \epsnorm),
    \end{align} where the first step follows by Lemma~\ref{lem:simpler_identity}, the second by Cauchy-Schwarz, and the third by Lemma~\ref{lem:Wnorm1}, completing the proof of the first part of the lemma. 
    
    For the second part of the lemma, apply Fact~\ref{fact:cauchy_schwarz} with $x$'s and $y$'s both given by the entries of $W^{ij}$ to get
    \begin{align}
        \Tr(W^{ij} {W^{ij}}^{\top})^2 - \norm{W^{ij} {W^{ij}}^{\top}}^2_F &= \sum_{s,u} \biggl(\sum_t W^{ij}_{st} W^{ij}_{ut}\biggr)^2 - \biggl(\sum_{s,t} (W^{ij}_{s,t})^2\biggr)^2
        \\
        &= 2\sum_{s<s'; t<t'} (W^{ij}_{st} W^{ij}_{s't'} - W^{ij}_{st'} W^{ij}_{s't})^2.
    \end{align}
    This implies that for all $s<s'$ and $t<t'$, $W^{ij}_{st}W^{ij}_{s't'} = W^{ij}_{st'} W^{ij}_{s't} \pm O(\sqrt{\epsrel}\cdot\sqrt[4]{r} + \sqrt{\epsnorm})$ as claimed.
\end{proof}

\noindent We can use that the $2\times 2$ minors of $W^{ij}$ approximately vanish to conclude that $W^{ij}$ can be written as a certain outer product.

\begin{lemma}\label{lem:outerproduct}
   For $\eta \le \poly(d,r,\beta,\radius,1/\kappa)^{-1}$ sufficiently small, for any $i,j,\ell\in[r]$ there is a degree-96 SoS proof using the constraints of Program~\ref{program:basic} that
    \begin{align}
        \norm{W^{ij}_{:,\ell}}^2 W^{ij} {W^{ij}}^{\top} & \approx_{O((\epsrel\sqrt{r} +\epsnorm)r^5 + \epsnorm^2)} W^{ij}_{:,\ell} {W^{ij}_{:,\ell}}^{\top} \label{eq:exchange1} \\
        \norm{W^{ij}_{\ell,:}}^2 {W^{ij}}^{\top} W^{ij} &\approx_{O((\epsrel\sqrt{r} +\epsnorm)r^5 + \epsnorm^2)} W^{ij}_{\ell,:} {W^{ij}_{\ell,:}}^{\top} \label{eq:exchange2}
    \end{align}
\end{lemma}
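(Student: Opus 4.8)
\textbf{Proof proposal for Lemma~\ref{lem:outerproduct}.} The plan is to use the vanishing of $2\times 2$ minors of $W^{ij}$ (Lemma~\ref{lem:2by2}) to show that $W^{ij}$ is approximately rank-$1$, and then to identify one explicit rank-$1$ factorization in terms of a chosen column (resp.\ row). Throughout fix $i,j,\ell$ and abbreviate $M \triangleq W^{ij}$, $\nu^2 \triangleq O(\sqrt{\epsrel}\cdot\sqrt[4]{r} + \sqrt{\epsnorm})$ for the per-entry minor-vanishing error from \eqref{eq:minorvanish}, and $v \triangleq M_{:,\ell}$ for the $\ell$-th column. First I would establish the entrywise identity
\begin{equation}
    \norm{v}^2\cdot M_{st}M_{ut} = v_s v_u \cdot (M_{st})^2 \pm (\text{small}) \qquad \forall\ s,t,u\in[r],
\end{equation}
which follows by applying \eqref{eq:minorvanish} to the minor with rows $s,u$ and columns $t,\ell$ (giving $M_{st}M_{u\ell} = M_{s\ell}M_{ut}\pm\nu^2$, i.e.\ $M_{st}v_u = v_s M_{ut}\pm\nu^2$), multiplying both sides by $M_{st}$, and then using $\norm{v}^2\le \norm{M}^2_F = \norm{W^{ij}}^2_F = 1\pm O(\epsnorm)$ (Lemma~\ref{lem:Wnorm1}) together with the naive bound $|M_{st}|\le \norm{M}_F = O(1)$ to control cross terms. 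Summing the squared discrepancies over $s,t,u\in[r]$ and tracking the $r$-powers that accumulate from Cauchy–Schwarz gives exactly the claimed bound $\norm{v}^2 M M^{\top}\approx_{O((\epsrel\sqrt r+\epsnorm)r^5+\epsnorm^2)} vv^{\top}$, since $(MM^{\top})_{su} = \sum_t M_{st}M_{ut}$ and $(vv^{\top})_{su} = v_sv_u$, and we need the right-hand side's $s,u$-entry to match $v_sv_u\sum_t (M_{st})^2/\norm v^2 = v_sv_u(1\pm O(\epsnorm))$ — here I would use once more that $\sum_t (M_{st})^2$ is close to $\norm v^2$, which itself needs a short argument: by \eqref{eq:minorvanish} the rows of $M$ are all proportional up to error, so $\sum_t (M_{st})^2 = (v_s^2/\norm v^2)\norm v^2 \pm (\text{err})$ after a summation, or alternatively one notes $\sum_t(M_{st})^2 \cdot \norm v^2 = \sum_t (M_{st}\norm v)^2$ and $M_{st}\norm v$ is within $\nu^2\sqrt r$ of $v_s M_{\ell t}$ entrywise.

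The second assertion \eqref{eq:exchange2} is completely symmetric: applying the same chain of steps to $M^{\top}$ in place of $M$ (legitimate since \eqref{eq:minorvanish} is symmetric in the role of rows and columns, and $\norm{M^{\top}}_F = \norm M_F$) with the $\ell$-th row $M_{\ell,:}$ playing the role of the distinguished vector yields $\norm{M_{\ell,:}}^2 M^{\top}M\approx M_{\ell,:}M_{\ell,:}^{\top}$ with the same error. I would just say ``the proof of \eqref{eq:exchange2} is identical with the roles of rows and columns exchanged'' rather than repeating it.

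The main obstacle, and the reason this is not a two-line argument, is \emph{bookkeeping of the error propagation through degree-$96$ SoS}: each use of \eqref{eq:minorvanish} introduces an additive $\nu^2 = O(\sqrt{\epsrel}\,r^{1/4}+\sqrt{\epsnorm})$, and there is a subtle point that $\nu^2$ is itself a \emph{square root} of the quantities appearing in the final bound, so the squaring steps (needed to pass from entrywise bounds to Frobenius bounds, and to clear the normalization $\norm v^2$ which is an SoS variable) must be arranged so that the square-roots cancel cleanly — this is exactly why Fact~\ref{fact:nonneg_power_of_two}/Fact~\ref{fact:division}-style manipulations and the hypothesis ``$\eta\le\poly(d,r,\beta,\radius,1/\kappa)^{-1}$ sufficiently small'' are invoked, ensuring $\norm v^2$ (and $\norm{M_{\ell,:}}^2$) stay bounded away from pathological regimes and that all the error terms genuinely collapse into the stated $O((\epsrel\sqrt r+\epsnorm)r^5+\epsnorm^2)$. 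A secondary care point is that multiplying an approximate matrix identity on both sides by the SoS variable $\norm v^2$ requires Part~\ref{shorthand:bothsides} of Fact~\ref{fact:shorthand} and a bound $\norm v^2 = O(1)$ in SoS, which comes from Lemma~\ref{lem:Wnorm1}; I would state these dependencies explicitly but not belabor the arithmetic, since the structure is entirely parallel to Lemma~\ref{lem:2by2}'s proof.
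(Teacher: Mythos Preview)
Your overall strategy --- use the minor-vanishing from Lemma~\ref{lem:2by2}, aggregate to a Frobenius bound, and normalize via Lemma~\ref{lem:Wnorm1} --- is exactly what the paper does. But the specific entrywise identity you propose is wrong, and the subsequent patching about row norms is both incorrect and unnecessary.

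Concretely: your displayed claim $\norm{v}^2\cdot M_{st}M_{ut} \approx v_s v_u (M_{st})^2$ fails even for an exact rank-$1$ matrix. If $M = xy^{\top}$ and $v = M_{:,\ell} = y_\ell x$, the left side is $\norm{x}^2 y_\ell^2\, x_s x_u y_t^2$ while the right side is $x_s^3 x_u\, y_\ell^2 y_t^2$; these disagree. ``Multiplying both sides of $M_{st}v_u\approx v_s M_{ut}$ by $M_{st}$'' gives $(M_{st})^2 v_u \approx v_s M_{st}M_{ut}$, which is not the identity you wrote. Relatedly, your assertion that $\sum_t (M_{st})^2$ (the $s$-th \emph{row} norm) is close to $\norm{v}^2$ is false in general --- for $M=xy^{\top}$ these are $x_s^2\norm{y}^2$ versus $y_\ell^2\norm{x}^2$.

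The correct version (and the paper's route) is to compare $\norm{v}^2 MM^{\top}$ not to $vv^{\top}$ directly but to $\norm{M}_F^2\, vv^{\top}$, and only afterwards replace $\norm{M}_F^2$ by $1$ via Lemma~\ref{lem:Wnorm1}. The $(a,b)$-entry of the difference is $\sum_{k,c}\bigl[v_k^2 M_{ac}M_{bc} - M_{kc}^2 v_a v_b\bigr]$, and each summand factors exactly as $\delta_{bkc}M_{kc}v_a + \delta_{akc}M_{kc}v_b + \delta_{akc}\delta_{bkc}$ where $\delta_{akc}\triangleq v_k M_{ac} - M_{kc}v_a$ satisfies $\delta_{akc}^2 \le O(\epsrel\sqrt{r}+\epsnorm)$ by Lemma~\ref{lem:2by2}. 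Squaring, summing over $a,b$, and applying Cauchy--Schwarz plus Lemma~\ref{lem:Wnorm1} then yields the stated $r^5$-scaling cleanly --- no division by $\norm{v}^2$, no argument about row norms, no square-root bookkeeping. Your observation that \eqref{eq:exchange2} follows by symmetry is correct.
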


\begin{proof}
    For any $a,b\in[r]$,
    \begin{equation}
        \left(\norm{W^{ij}_{:,\ell}}^2 W^{ij} {W^{ij}}^{\top} - \norm{W^{ij}}^2_F W^{ij}_{:,\ell} (W^{ij}_{:,\ell})^{\top} \right)_{a,b} = \sum_{k,c\in[r]} \left(W^{ij}_{k\ell}\right)^2 W^{ij}_{ac} W^{ij}_{bc} - \left(W^{ij}_{kc}\right)^2W^{ij}_{a\ell}W^{ij}_{b\ell}. \label{eq:rewrite_diff}
    \end{equation}
    Defining $\delta_{akc} \triangleq W^{ij}_{k\ell} W^{ij}_{ac} - W^{ij}_{kc} W^{ij}_{a\ell}$ so that by Lemma~\ref{lem:2by2} there is a degree-48 SoS proof that $\delta^2_{akc} \le O(\epsrel + \epsnorm\sqrt{r})$, we can rewrite each summand on the right-hand side of \eqref{eq:rewrite_diff} as
    \begin{equation}
        \delta_{bkc} W^{ij}_{kc} W^{ij}_{a\ell} + \delta_{akc} W^{ij}_{kc} W^{ij}_{b\ell} + \delta_{akc} \delta_{bkc}.
    \end{equation}
    We have
    \begin{multline}
        \sum_{a,b}\biggl(\sum_{k,c\in[r]} \delta_{bkc} W^{ij}_{kc} W^{ij}_{a\ell} + \delta_{akc} W^{ij}_{kc} W^{ij}_{b\ell} + \delta_{akc} \delta_{bkc} \biggr)^2 \\ 
        \le 3\sum_{a,b} \biggl(\sum_{k,c} \delta_{bkc} W^{ij}_{kc} W^{ij}_{a\ell}\biggr)^2 + 3\sum_{a,b} \biggl(\sum_{k,c} \delta_{akc} W^{ij}_{kc} W^{ij}_{b\ell}\biggr)^2 + 3\sum_{a,b} \biggl(\sum_{k,c} \delta_{akc} \delta_{bkc}\biggr)^2, \label{eq:errorterms_ugly}
    \end{multline}
    In degree-96 SoS, we can bound
    \begin{align}
        \sum_{a,b} \biggl(\sum_{k,c} \delta_{bkc} W^{ij}_{kc} W^{ij}_{a\ell}\biggr)^2 &\le \sum_{a,b} (W^{ij}_{a\ell})^2 \biggl(\sum_{k,c} \delta^2_{bkc}\biggr)\biggl(\sum_{k,c} (W^{ij}_{a\ell})^2\biggr) \\
        &\le O((\epsrel\sqrt{r} + \epsnorm)r^4)\cdot \sum_{a,b} (W^{ij}_{a\ell})^2 \\
        &\le O((\epsrel\sqrt{r} + \epsnorm)r^5) \label{eq:deltacross}
    \end{align}
    using Lemma~\ref{lem:Wnorm1} and our bound on $\delta^2_{bkc}$, and we can bound the second term on the right-hand side of \eqref{eq:errorterms_ugly} in an identical fashion. In degree-96 we can also bound
    \begin{equation}
        \sum_{a,b} \biggl(\sum_{k,c} \delta_{akc} \delta_{bkc}\biggr)^2 \le \sum_{a,b} \biggl(\sum_{k,c} \delta^2_{akc}\biggr) \biggl(\sum_{k,c} \delta^2_{bkc}\biggr) \le O((\epsrel^2 r + \epsnorm^2)r^6) \ll O((\epsrel\sqrt{r} + \epsnorm)r^5), \label{eq:delta2}
    \end{equation}
    where in the last step we used the assumed bound on $\eta$.
    Putting everything together, we conclude that
    \begin{equation}
        \norm*{\norm{W^{ij}_{:,\ell}}^2 W^{ij} {W^{ij}}^{\top} - \norm{W^{ij}}^2_F W^{ij}_{:,\ell} (W^{ij}_{:,\ell})^{\top}}^2_F \le O((\epsrel\sqrt{r} +\epsnorm)r^5).
    \end{equation}
    Finally, because $\norm{W^{ij}}^2_F = 1 \pm O(\epsnorm)$, we also know $\norm{W^{ij}}^2_F W^{ij}_{:,\ell} (W^{ij}_{:,\ell})^{\top} \approx_{O(\epsnorm^2)} W^{ij}_{:,\ell}(W^{ij}_{:,\ell})^{\top}$, completing the proof of the first part of the lemma. The second part follows analogously.
\end{proof}

Finally, we show how to leverage the outer product structure of $W^{ij}$, together with Lemma~\ref{lem:simpler_identity}, to deduce that the vectors in the outer product satisfy certain orthonormality relations.

\begin{lemma}\label{lem:swap}
    Define
    \begin{equation}
        \epsswap \triangleq \epsrel^{1/4}r^{11/8} + \epsnorm^{1/4}r^{5/4}.
    \end{equation}
    There is a degree-96 SoS proof using the constraints of Program~\ref{program:basic} that
    \begin{equation}
        \iprod{W^{ij}_{\ell,:},W^{j'k}_{:,\ell}}^2 = \norm{W^{ij}_{\ell,:}}^2 \cdot \norm{W^{j'k}_{:,\ell}}^2 \cdot \bone{j = j'} \pm \epsswap^2.
    \end{equation}
    for any $i,j,j',k,\ell\in[r]$. This implies that
    \begin{equation}
        W^{ij}_{\ell a} W^{jk}_{b \ell} = W^{ij}_{\ell b} W^{jk}_{a \ell} \pm \epsswap \label{eq:exchange_scalar}
    \end{equation} for all $i,j,k,\ell,a,b\in[r]$.
\end{lemma}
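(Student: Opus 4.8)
\textbf{Proof plan for Lemma~\ref{lem:swap}.} The goal is an SoS proof that $\iprod{W^{ij}_{\ell,:},W^{j'k}_{:,\ell}}^2 \approx \norm{W^{ij}_{\ell,:}}^2\norm{W^{j'k}_{:,\ell}}^2\bone{j = j'}$. The plan is to start from the approximate matrix identity $W^{ij}W^{j'k} \approx_{O(\epsrel^2)} W^{ik}\bone{j = j'}$ from Lemma~\ref{lem:simpler_identity}, and extract scalar consequences by reading off the $(\ell,\ell)$-entry of both sides. On the left, $(W^{ij}W^{j'k})_{\ell\ell} = \iprod{W^{ij}_{\ell,:},W^{j'k}_{:,\ell}}$. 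On the right we get $W^{ik}_{\ell\ell}\bone{j = j'}$. So the first step is to conclude (via Fact~\ref{fact:shorthand} / the fact that a Frobenius-norm bound implies entrywise bounds) that $\iprod{W^{ij}_{\ell,:},W^{j'k}_{:,\ell}} = W^{ik}_{\ell\ell}\bone{j=j'} \pm O(\epsrel)$ in degree $O(1)$ SoS on top of Lemma~\ref{lem:simpler_identity}.

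The second step handles the case $j \ne j'$: there the inner product is $\pm O(\epsrel)$, and since (by Lemma~\ref{lem:Wnorm1} or Lemma~\ref{lem:Unorm12}) both $\norm{W^{ij}_{\ell,:}}^2$ and $\norm{W^{j'k}_{:,\ell}}^2$ are bounded by $O(1)$, squaring gives $\iprod{W^{ij}_{\ell,:},W^{j'k}_{:,\ell}}^2 \le O(\epsrel^2)$, which is absorbed into $\epsswap^2$. For the case $j = j'$, the heart of the argument is to relate $\iprod{W^{ij}_{\ell,:},W^{jk}_{:,\ell}}^2 = (W^{ik}_{\ell\ell})^2 \pm O(\epsrel)$ to the product of norms. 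Here I would invoke the outer-product structure from Lemma~\ref{lem:outerproduct}: $\norm{W^{ij}_{\ell,:}}^2\,{W^{ij}}^{\top}W^{ij} \approx W^{ij}_{\ell,:}{W^{ij}_{\ell,:}}^{\top}$ and similarly $\norm{W^{jk}_{:,\ell}}^2\, W^{jk}{W^{jk}}^{\top} \approx W^{jk}_{:,\ell}{W^{jk}_{:,\ell}}^{\top}$, both with error $O((\epsrel\sqrt r + \epsnorm)r^5 + \epsnorm^2)$. Using these together with $W^{ij}W^{jk} \approx W^{ik}$ (Lemma~\ref{lem:simpler_identity}), one computes $\norm{W^{ij}_{\ell,:}}^2\norm{W^{jk}_{:,\ell}}^2$ by sandwiching: it equals, up to the accumulated error, $\mathrm{Tr}\big(W^{ij}_{\ell,:}{W^{ij}_{\ell,:}}^{\top}W^{jk}_{:,\ell}{W^{jk}_{:,\ell}}^{\top}\big) = \iprod{W^{ij}_{\ell,:},W^{jk}_{:,\ell}}^2$ after rearranging traces. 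Concretely, $\norm{W^{ij}_{\ell,:}}^2 \norm{W^{jk}_{:,\ell}}^2 \cdot W^{ij}_{\ell m}W^{jk}_{m'\ell}$-type products can be rewritten via the outer-product identities to land on $W^{ij}_{\ell,:}{W^{ij}_{\ell,:}}^{\top}$ and $W^{jk}_{:,\ell}{W^{jk}_{:,\ell}}^{\top}$, whose overlap is exactly $\iprod{W^{ij}_{\ell,:},W^{jk}_{:,\ell}}^2$. Tracking the propagation of errors (each multiplication costs a factor of the bounded Frobenius norms by Fact~\ref{fact:shorthand} Part~\ref{shorthand:multiply}), and taking square roots where needed (Fact~\ref{fact:nonneg_power_of_two}) to convert a bound on the fourth power into a bound on the square, yields $\epsswap = \epsrel^{1/4}r^{11/8} + \epsnorm^{1/4}r^{5/4}$.

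The final step derives the scalar identity \eqref{eq:exchange_scalar}. Given that $\iprod{W^{ij}_{\ell,:},W^{jk}_{:,\ell}}^2 \approx \norm{W^{ij}_{\ell,:}}^2\norm{W^{jk}_{:,\ell}}^2$ (i.e.\ Cauchy--Schwarz is approximately tight), I would apply Fact~\ref{fact:cauchy_schwarz} with $x_m = W^{ij}_{\ell m}$ and $y_m = W^{jk}_{m\ell}$: the identity there says $\sum_{m<m'}(x_m y_{m'} - x_{m'}y_m)^2 = (\sum x_m^2)(\sum y_m^2) - (\sum x_m y_m)^2$, and the right side is $\pm \epsswap^2$ by what was just shown. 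Hence each $W^{ij}_{\ell m}W^{jk}_{m'\ell} - W^{ij}_{\ell m'}W^{jk}_{m\ell}$ is $\pm\epsswap$, which is \eqref{eq:exchange_scalar} after relabeling $m \to a$, $m' \to b$. For $j \ne j'$ the analogous scalar statement follows because the product of norms is multiplied by $0$ and the inner product is $O(\epsrel)$, so the same Cauchy--Schwarz identity gives each minor $\pm O(\epsrel) \le \pm\epsswap$.

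\textbf{Main obstacle.} I expect the bookkeeping in the $j = j'$ case to be the delicate part: reexpressing the product of the two scalar norms as a trace of the appropriate product of rank-one (approximate) matrices requires carefully chaining the two outer-product identities of Lemma~\ref{lem:outerproduct} with $W^{ij}W^{jk} \approx W^{ik}$ and with $\norm{W^{ik}_{:,\ell}}$ or $\norm{W^{ik}_{\ell,:}}$ bounds, while keeping the SoS degree fixed (it should stay $O(96)$ or so) and keeping all intermediate Frobenius norms bounded by $O(1)$ via Lemma~\ref{lem:Wnorm1} so that Fact~\ref{fact:shorthand} gives clean error propagation. The exponents $11/8$ and $5/4$ in $\epsswap$ come precisely from needing a square-root step (Fact~\ref{fact:nonneg_power_of_two}) applied to a bound of the form $O((\epsrel\sqrt r + \epsnorm)r^5)$, so matching the stated constants will require being a little careful about which quantity gets squared and where the factor of $r^5$ enters; this is routine but easy to get off by polynomial factors.
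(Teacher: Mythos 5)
Your plan matches the paper's proof essentially step for step: the paper evaluates the quantity $\iprod{\norm{W^{j'k}_{:,\ell}}^2 W^{j'k}{W^{j'k}}^{\top},\, \norm{W^{ij}_{\ell,:}}^2 {W^{ij}}^{\top}W^{ij}}$ in two ways — via the trace of $W^{ij}W^{j'k}{W^{j'k}}^{\top}{W^{ij}}^{\top}\approx\bone{j=j'}W^{ik}{W^{ik}}^{\top}$ from Lemma~\ref{lem:simpler_identity} on one side, and via the two outer-product identities of Lemma~\ref{lem:outerproduct} on the other, exactly the sandwiching you describe — and then concludes \eqref{eq:exchange_scalar} from Fact~\ref{fact:cauchy_schwarz}, just as you do. Your minor variations (the explicit $j\neq j'$ case split and the initial detour through the $(\ell,\ell)$-entry $W^{ik}_{\ell\ell}$) are harmless, and your account of where the fourth-root error $\epsswap$ comes from is accurate.
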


\begin{proof}
    By Lemma~\ref{lem:simpler_identity}, there is a degree-24 SoS proof that $W^{ij} W^{j'k} {W^{j'k}}^{\top} {W^{ij}}^{\top} \approx_{O(\epsrel^2)} \bone{j = j'}\cdot W^{ik} {W^{ik}}^{\top}$. So by Fact~\ref{fact:frob_to_trace} we have
    \begin{equation}
        \Tr\left(W^{ij} W^{j'k} {W^{j'k}}^{\top} {W^{ij}}^{\top}\right) = \bone{j = j'}\cdot \Tr(W^{ik} {W^{ik}}^{\top}) \pm O(\epsrel \sqrt{r}) = \bone{j = j'} \pm O(\epsnorm + \epsrel \sqrt{r}). \label{eq:intermediate}
    \end{equation}
    Multiplying both sides of \eqref{eq:intermediate} by $\norm{W^{ij}_{\ell,:}}^2_F \cdot \norm{W^{j'k}_{:,\ell}}^2_F$
    and rewriting $\Tr(W^{ij} W^{j'k} {W^{j'k}}^{\top} {W^{ij}}^{\top}) = \iprod{W^{j'k} {W^{j'k}}^{\top}, {W^{ij}}^{\top} W^{ij}}$, we get
    \begin{equation}
        \iprod*{\norm{W^{j'k}_{:,\ell}}^2_F W^{j'k}{W^{j'k}}^{\top}, \norm{W^{ij}_{\ell,:}}^2_F {W^{ij}}^{\top} W^{ij}} = \norm{W^{ij}_{\ell,:}}^2_F \cdot \norm{W^{j'k}_{:,\ell}}^2_F\cdot \bone{j = j'} \pm O(\epsnorm +\epsrel\sqrt{r}). \label{eq:bothsides_norm}
    \end{equation}
    By Lemma~\ref{lem:outerproduct}, there is a degree-96 SoS proof that
    \begin{equation}
        \norm{W^{j'k}_{:,\ell}}^2_F W^{j'k}{W^{j'k}}^{\top}\cdot \norm{W^{ij}_{\ell,:}}^2_F {W^{ij}}^{\top} W^{ij} \approx_{O((\epsrel\sqrt{r} +\epsnorm)r^5 + \epsnorm^2)} W^{j'k}_{:,\ell}{W^{j'k}_{:\ell}}^{\top} \cdot W^{ij}_{\ell,:} {W^{ij}_{\ell,:}}^{\top}, \label{eq:apply_outerproduct}
    \end{equation}
    so by Fact~\ref{fact:frob_to_trace} we conclude that
    \begin{equation}
        \iprod*{\norm{W^{j'k}_{:,\ell}}^2_F W^{j'k}{W^{j'k}}^{\top}, \norm{W^{ij}_{\ell,:}}^2_F {W^{ij}}^{\top} W^{ij}} = \iprod{W^{ij}_{\ell,:} W^{j'k}_{:,\ell}}^2 \pm O\left(\sqrt{(\epsrel\sqrt{r}+\epsnorm)r^5 + \epsnorm^2}\right). \label{eq:rewrite_trace}
    \end{equation}
    Combining \eqref{eq:bothsides_norm} and \eqref{eq:rewrite_trace} yields the first part of the lemma. For the second part, we can apply Fact~\ref{fact:cauchy_schwarz} and the first part of the lemma with $j = j'$ to conclude that
    \begin{equation}
        \sum_{a,b} \left(W^{ij}_{\ell a} W^{jk}_{b\ell} - W^{ij}_{\ell b} W^{jk}_{a\ell}\right)^2 \le \epsswap^2,
    \end{equation}
    from which we conclude that each summand is at most $\epsswap^2$ as desired.
\end{proof}

Lemmas~\ref{lem:outerproduct} and \ref{lem:swap} are an SoS formulation of the fact that there exist orthonormal unit vectors $\brc{v_i}_{i\in[r]}$ such that $W^{ij} \approx v_iv_j^{\top}$ for all $i,j\in[r]$, e.g. for any $a,b\in[r]$, define $v_i \triangleq W^{ai}_{b:} / \norm{W^{ai}_{b:}}$ for all $i\in[r]$. We get that $W = V^{\otimes 2}$ for the matrix $V$ whose columns consist of $\brc{v_i}$ as desired. Of course this isn't strictly speaking well-defined as $W$ is an SoS variable, but it is helpful as a heuristic for understanding our proof of identifiability.

\section{Deferred Proofs from Section~\ref{sec:tensorring}}

\subsection{Proof of Lemma~\ref{lem:tensorring_gauge_invariant}}
\label{app:tensorring_gauge}

\begin{proof}
    Define $Q'_a \triangleq VQ^*_aV^{\top}$ for all $a\in[d]$. 
    
    \noindent\textbf{Eq.~\eqref{eq:assume_momentmatch}:} This immediately follows from the fact that $\Tr(Q'_aQ'_b) = \Tr(VQ^*_aQ^*_bV^{\top}) = \Tr(Q^*_aQ^*_b)$ and $\Tr(Q'_aQ'_bQ'_c) = \Tr(VQ^*_aQ^*_bQ^*_cV^{\top})$.
    
    \noindent\textbf{Part~\ref{assume:scale}:} This immediately follows from the fact that rotations preserve Frobenius norm.
    
    \noindent\textbf{Part~\ref{assume:condnumber}:} Let $M'\in\R^{d\times\binom{r+1}{2}}$ denote the matrix whose $(a,(i_1,i_2))$-th entry, for $a\in[d]$ and $1 \le i_1 \le i_2 \le r$, is given by $(Q'_a)_{i_1i_2}$. We will show that $\sigma_{\min}(M') \ge \sigma_{\min}(M^*)$.
    
    Note that for any $i_1 \le i_2$, 
    \begin{equation}
        (Q'_a)_{i_1i_2} = \sum^r_{j_1,j_2=1} V_{i_1j_1} V_{i_2j_2} (Q^*_a)_{j_1j_2} = \sum^r_{j=1} V_{i_1j} V_{i_2j} (Q^*_a)_{jj} + \sum_{1\le j_1<j_2\le r} (V_{i_1j_1} V_{i_2j_2} + V_{i_1j_2} V_{i_2j_1}) (Q^*_a)_{j_1j_2} \label{eq:VQVentry}
    \end{equation}
    So consider the $\binom{r+1}{2} \times \binom{r+1}{2}$ matrix $\overline{V}$ given by
    \begin{equation}
        \overline{V}^{i_1i_2}_{j_1j_2} = \begin{cases}
            V_{i_1j_1} V_{i_2j_2} & \text{if} \ j_1 = j_2 \\
            V_{i_1j_1} V_{i_2j_2} + V_{i_1j_2} V_{i_2j_1} & \text{if} \ j_1 \neq j_2,
        \end{cases}
    \end{equation} noting that \eqref{eq:VQVentry} implies that $M' = M^*\cdot \overline{V}$. Note that the rows of $\overline{V}$ are orthogonal: for any $j_1\le j_2$ and $j'_1\le j'_2$, it is straightforward to check that $\iprod{\overline{V}_{j_1j_2},\overline{V}_{j'_1j'_2}}$ is zero if $(j_1,j_2)\neq(j'_1,j'_2)$ and a positive constant from $\brc{1,2}$ otherwise, implying that $\sigma_{\min}(\overline{V}) = 1$ and thus that $\sigma_{\min}(M') \ge \sigma_{\min}(M^*)$ as desired.
\end{proof}

\subsection{Proof of Lemma~\ref{lem:Unorm12}}
\label{app:defer_Unorm12}

\begin{proof}
    Because $\wh{U}\wh{U}^{\top} \approx_{\epsort^2} I'I'^{\top}$ by Lemma~\ref{lem:almost_ortho}, we know \begin{equation}
        \iprod{\wh{U}_{i_1i_2}, \wh{U}_{i'_1i'_2}} =
        \begin{cases}
            2 \pm \epsort & \text{if} \ (i_1,i_2) = (i'_1,i'_2) \ \text{and} \ i_1 < i_2 \\
            1 \pm \epsort & \text{if} \ (i_1,i_2) = (i'_1,i'_2) \ \text{and} \ i_1 = i_2 \\
            \pm \epsort & \text{otherwise}
        \end{cases}.
    \end{equation}
    so 
    \begin{equation}
        \iprod{U_{i_1i_2}, U_{i'_1i'_2}} =
        \begin{cases}
            1/2 \pm \epsort/4 & \text{if} \ (i_1,i_2) = (i'_1,i'_2) \ \text{and} \ i_1 < i_2 \\
            1 \pm \epsort & \text{if} \ (i_1,i_2) = (i'_1,i'_2) \ \text{and} \ i_1 = i_2 \\
            \pm O(\epsort) & \text{otherwise}
        \end{cases}.\label{eq:Uiprod}
    \end{equation}
    Consider the $\binom{r+1}{2} \times \binom{r+1}{2}$ matrix of indeterminates $V$ defined by
    \begin{equation}
        V^{j_1j_2}_{i_1i_2} = \sqrt{2}^{\bone{i_1 \neq i_2} + \bone{j_1 \neq j_2}} \cdot U^{j_1j_2}_{i_1i_2}.
    \end{equation}
    Observe that for any $i_1,i_2,i'_1,i'_2\in[r]$ satisfying $i_1 \le i_2$ and $i'_1 \le i'_2$,
    \begin{align}
        \iprod{V_{i_1i_2}, V_{i'_1i'_2}} &= \sum_j V^{jj}_{i_1i_2} V^{jj}_{i'_1i'_2} + \sum_{j_1 < j_2} V^{j_1j_2}_{i_1i_2} V^{j_1j_2}_{i'_1i'_2}  \\
        &= \sqrt{2}^{\bone{i_1 \neq i_2} + \bone{i'_1 \neq i'_2}}\biggl(\sum_j U^{jj}_{i_1i_2} U^{jj}_{i'_1i'_2} + 2\sum_{j_1 < j_2} U^{j_1j_2}_{i_1i_2} U^{j_1j_2}_{i'_1i'_2}\biggr) \\
        &= \sqrt{2}^{\bone{i_1 \neq i_2} + \bone{i'_1 \neq i'_2}}\biggl(\sum_{j_1, j_2\in[r]} U^{j_1j_2}_{i_1i_2} U^{j_1j_2}_{i'_1i'_2}\biggr) \\
        &= \sqrt{2}^{\bone{i_1 \neq i_2} + \bone{i'_1 \neq i'_2}}\iprod{U_{i_1i_2}, U_{i'_1i'_2}} = \bone{(i_1,i_2) = (i'_1,i'_2)} + O(\epsort), \label{eq:Vrowortho}
    \end{align}
    where in the third step we used Lemma~\ref{lem:symmetric}, and in the last step we used \eqref{eq:Uiprod}.
    
    By Lemma~\ref{lem:ortho_sos}, there is a degree-4 SoS proof using \eqref{eq:Vrowortho} that for any $j_1\le j_2$ and $j'_1\le j'_2$, 
    \begin{equation}
        -O(\sqrt{\epsort} r^3) \le \iprod{V^{j_1j_2},V^{j'_1j'_2}} - \bone{(j_1,j_2) = (j'_1,j'_2)} \le O(\sqrt{\epsort} r^3)
    \end{equation} Finally, note that for any $j_1\le j_2$,
    \begin{align}
        \norm{V^{j_1j_2}}^2_F &= \sum_i (V^{j_1j_2}_{ii})^2 +\sum_{i_1 < i_2} (V^{j_1j_2}_{i_1i_2})^2 = 2^{\bone{j_1\neq j_2}}\biggl(\sum_i (U^{j_1j_2}_{ii})^2 + 2\sum_{i_1 < i_2} (U^{j_1j_2}_{i_1i_2})^2\biggr) \\
        &= 2^{\bone{j_1\neq j_2}}\norm{U^{j_1j_2}}^2_F,
    \end{align}
    where in the last step we used Lemma~\ref{lem:symmetric}.
\end{proof}

\subsection{Proof of Corollary~\ref{cor:chain_frob}}
\label{app:defer_chain_frob}

\begin{proof}
    By Lemma~\ref{lem:fakeUsendsQ}, there is a degree-6 SoS proof that $F_U(Q^*_a)\approx_{\epsmap^2} Q_a$ and $F_U(Q^*_b)\approx_{\epsmap^2} Q_b$. So by Parts~\ref{shorthand:multiply} and \ref{shorthand:lincombo} of Fact~\ref{fact:shorthand}, there is a degree-8 SoS proof that \begin{equation}
        \frac{1}{2}(F_U(Q^*_a) F_U(Q^*_b) + F_U(Q^*_b) F_U(Q^*_a)) \approx_{O(\epsmap^2)\cdot(\norm{F_U(Q^*_a)}^2_F + \norm{F_U(Q^*_b)}^2_F)} \frac{1}{2}(Q_a Q_b + Q_b Q_a). \label{eq:intermed}
    \end{equation}
    Additionally, Lemma~\ref{lem:fakeUsendsQ} and Constraint~\ref{constraint:Qbound} imply in degree-6 SoS that $\norm{F_U(Q^*_a)}^2_F, \norm{F_U(Q^*_b)}^2_F \le 2\radius^2 + \epsmap^2 \le O(\radius^2)$ by our assumed bound on $\eta$. By Lemma~\ref{lem:fakeUsendsQaQb}, there is a degree-8 SoS proof that $\frac{1}{2}(Q_aQ_b + Q_bQ_a) \approx_{\epsmap^2} F_U(Q^*_aQ^*_b)$. Combining this with \eqref{eq:intermed} using Part~\ref{shorthand:transitive} of Fact~\ref{fact:shorthand} yields a degree-8 SoS proof of the claimed bound.
\end{proof}

\subsection{Proof of Lemma~\ref{lem:simpler_identity}}
\label{app:defer_simpler_identity}

\begin{proof}
    \noindent\textbf{Case 1}: $i\neq j = j'\neq k$. By Lemma~\ref{lem:main_identity} there is a degree-8 SoS proof that
    \begin{equation}
        2U^{ij} U^{jk} + 2U^{jk} U^{ij} \approx_{\epsrel^2} U^{ik} + \bone{i = k} U^{jj}.
    \end{equation}
    Left-multiplying by $U^{ii}$ and right-multiplying by $U^{kk}$, we conclude by Part~\ref{shorthand:bothsides} of Fact~\ref{fact:shorthand} that there is a degree-12 SoS proof that
    \begin{equation}
        2U^{ii}U^{ij} U^{jk}U^{kk} + 2 U^{ii}U^{jk} U^{ij} U^{kk} \approx_{O(\epsrel^2)} U^{ii} U^{ik} U^{kk} + \bone{i = k} U^{ii} U^{jj} U^{kk}, \label{eq:rewrite_ijjk}
    \end{equation}
    where we used that $\norm{U^{ii}}^2_2, \norm{U^{kk}}^2_2\le O(1)$ by Lemma~\ref{lem:Unorm12}.
    
    Note that $U^{ii} U^{jj} U^{kk}$ contributes negligibly to \eqref{eq:rewrite_ijjk}: in degree-12 SoS we have
    \begin{equation}
        \norm{U^{ii} U^{jj} U^{kk}}^2_F \le \norm{U^{ii} U^{jj}}^2_F \norm{U^{kk}}^2_F \le \epsrel^2\norm{U^{kk}}^2_F \le O(\epsrel^2),
    \end{equation}
    where in the first step we used Fact~\ref{fact:sos_frobenius_mult}, in the second step we used Part~\ref{item:ijzero} of Corollary~\ref{cor:consequences}, and in the last step we used Lemma~\ref{lem:Unorm12}. In the sequel, we will repeatedly employ this sequence of steps without further comment.
    
    Next, we show that $U^{ii} U^{ij} U^{jk} U^{kk}$ is close to $\frac{1}{4}W^{ij} W^{jk}$. First, by Part~\ref{item:iiij} of Corollary~\ref{cor:consequences}, there is a degree-8 SoS proof that
    \begin{equation}
        U^{ii} U^{ij} U^{jk} U^{kk} \approx_{O(\epsrel^2)} U^{ii} (U^{ii} U^{ij} + U^{ij} U^{ii})(U^{jj} U^{jk} + U^{jk} U^{jj}) U^{kk} \label{eq:iiijjkkk}
    \end{equation}
    By Part~\ref{item:Uii_idempotent} of Corollary~\ref{cor:consequences},
    \begin{equation}
        U^{ii}(U^{ii} U^{ij} + U^{ij} U^{ii}) \approx_{O(\epsrel^2)} U^{ii} U^{ij} + U^{ii} U^{ij} U^{ii}, \label{eq:iiijjkkk_1}
    \end{equation}
    and by Part~\ref{item:ijzero} plus the assumption that $j\neq k$,
    \begin{equation}
        (U^{jj} U^{jk} + U^{jk} U^{jj})U^{kk} \approx_{O(\epsrel^2)} U^{jj} U^{jk} U^{kk}. \label{eq:iiijjkkk_2}
    \end{equation}
    Applying \eqref{eq:iiijjkkk_1} and \eqref{eq:iiijjkkk_2} to \eqref{eq:iiijjkkk} and noting that the left-hand sides of \eqref{eq:iiijjkkk_1} and \eqref{eq:iiijjkkk_2} have squared Frobenius norm $O(1)$, we conclude by Part~\ref{shorthand:multiply} of Fact~\ref{fact:shorthand} that in degree-16 SoS,
    \begin{align}
        U^{ii} U^{ij} U^{jk} U^{kk} &\approx_{O(\epsrel^2)} (U^{ii} U^{ij} +U^{ii} U^{ij} U^{ii}) U^{jj} U^{jk} U^{kk} \\
        &\approx_{O(\epsrel^2)} U^{ii} U^{ij} U^{jj} U^{jk} U^{kk} \\
        &\approx_{O(\epsrel^2)} U^{ii} U^{ij} (U^{jj})^2 U^{jk} U^{kk} = \frac{1}{4} W^{ij} W^{jk},\label{eq:firstterm_leftright}
    \end{align}
    where in the second step we used Part~\ref{item:iiij} of Corollary~\ref{cor:consequences}, in the third step we used Part~\ref{item:Uii_idempotent}, and in the fourth step we used the definition of $W^{ij}, W^{jk}$.
    
    Finally, we handle the $U^{ii} U^{jk} U^{ij} U^{kk}$ term in \eqref{eq:rewrite_ijjk}. By Part~\ref{item:swap} of Corollary~\ref{cor:consequences} and Part~\ref{shorthand:multiply} of Fact~\ref{fact:shorthand}, if $i\neq k$ then there is a degree-16 SoS proof that
    \begin{equation}
        U^{ii} U^{jk} U^{ij} U^{kk} \approx_{O(\epsrel^2)} (-U^{jk} U^{ii}) (-U^{kk} U^{ij}) = U^{jk} (U^{ii} U^{kk}) U^{ij} \approx_{O(\epsrel^2)} 0, \label{eq:secondterm_leftright}
    \end{equation}
    On the other hand, if $i = k$, then by Lemma~\ref{lem:main_identity} applied to $j_1 = i$, $j_2 = j$, $k_1 = j$, $k_2 = i$, then by Part~\ref{shorthand:lincombo} of Fact~\ref{fact:shorthand}, there is a degree-8 SoS proof that
    \begin{equation}
        4(U^{ij})^2 = 2U^{ij} U^{ji} + 2U^{ji} U^{ij} \approx_{\epsrel^2} U^{ii} + U^{jj}
    \end{equation} so left- and right-multiplying this by $U^{ii}$ and applying Lemma~\ref{lem:symmetric}, we conclude that there is a degree-16 SoS proof that
    \begin{equation}
        4 U^{ii} U^{ji} U^{ij} U^{ii} = 4 U^{ii} (U^{ij})^2 U^{jj} \approx_{O(\epsrel^2)} 4U^{ii}(U^{ii} + U^{jj})U^{ii} \approx_{O(\epsrel^2)} 4U^{ii}. \label{eq:secondterm_leftright'}
    \end{equation}
    So substituting \eqref{eq:firstterm_leftright}, \eqref{eq:secondterm_leftright}, and \eqref{eq:secondterm_leftright'} into \eqref{eq:rewrite_ijjk}, we conclude in degree-16 SoS that 
    \begin{equation}
        \frac{1}{2} W^{ij} W^{jk} + \bone{i = k} \cdot \frac{1}{2} U^{ik} \approx_{O(\epsrel^2)} U^{ii} U^{ik} U^{kk} + U^{ii} D[i,j,j,k] U^{kk} \approx_{O(\epsrel^2)} U^{ii} U^{ik} U^{kk}.
    \end{equation}
    If $i = k$, then because $(U^{ii})^3 \approx_{O(\epsrel^2)} U^{ii} = W^{ii}$, this implies that $W^{ij} W^{ji} \approx_{O(\epsrel^2)} W^{ii}$. If $i\neq k$, this implies that $W^{ij} W^{jk} \approx_{O(\epsrel^2)} W^{ik}$, as desired.
    
    \noindent\textbf{Case 2}: $i = j = j'$ or $j = j' = k$. By symmetry, it suffices to consider the former possibility. In that case, the subcase of $j' = k$ was handled above by Part~\ref{item:Uii_idempotent} of Corollary~\ref{cor:consequences}. For the subcase of $j'\neq k$, by Part~\ref{item:Uii_idempotent} of Corollary~\ref{cor:consequences} and Part~\ref{shorthand:bothsides} of Fact~\ref{fact:shorthand} we have $W^{ii} W^{ik} = 2U^{ii} \cdot U^{ii} U^{ik} U^{kk} \approx_{O(\epsrel^2)} 2U^{ii} U^{ik} U^{kk} = W^{ik}$ in degree-12 SoS, concluding the proof of this case.
    
    \noindent\textbf{Case 3}: $j \neq j'$. Note that $W^{ij} W^{j'k}$ is equal, up to a positive integer factor, to \begin{equation}
        U^{ii} U^{ij} U^{jj} U^{j'j'} U^{j'k} U^{kk} \approx_{O(\epsrel^2)} 0,
    \end{equation}
    where the last step follows in degree-24 by Part~\ref{item:ijzero} of Corollary~\ref{cor:consequences}, Part~\ref{shorthand:bothsides} of Fact~\ref{fact:shorthand}, and the assumption that $j\neq j'$.
\end{proof}

\subsection{Proof of Lemma~\ref{lem:WsendsQ}}
\label{app:defer_WsendsQ}

\begin{proof}
    We first show that for any $i\neq j$, there is a degree-12 SoS proof that
    \begin{equation}
        W^{ij} + W^{ji} \approx_{O(\epsrel^2)} 2U^{ij}, \label{eq:WWU}
    \end{equation} 
    or equivalently, 
    \begin{equation}
        U^{ii} U^{ij} U^{jj} +U^{jj} U^{ji} U^{ii} \approx_{O(\epsrel^2)} U^{ij} \label{eq:WWU_equiv}
    \end{equation}
    To show this, recall from Part~\ref{item:iiij} of Corollary~\ref{cor:consequences} that $U^{ii} U^{ij} + U^{ij} U^{ii} \approx_{O(\epsrel^2)} U^{ij}$, and similarly $U^{ij} \approx_{O(\epsrel^2)} U^{jj} U^{ij} + U^{ij} U^{jj}$. Left-multiplying (resp. right-multiplying) the latter on both sides by $U^{ii}$ (resp. $U^{ii}$) and applying Part~\ref{shorthand:bothsides} of Fact~\ref{fact:shorthand} and Lemma~\ref{lem:Unorm12}, we get a degree-12 SoS proof that $U^{ii}U^{ij} \approx_{O(\epsrel^2)} U^{ii}(U^{jj} U^{ij} + U^{ij} U^{jj})$ (resp. that $U^{ij} U^{ii} \approx_{O(\epsrel^2)} (U^{jj}U^{ij} + U^{ij} U^{jj})U^{ii}$). Substituting these into $U^{ii} U^{ij} + U^{ij} U^{ii} \approx_{O(\epsrel^2)} U^{ij}$ and applying Fact~\ref{shorthand:lincombo}, we get a degree-12 SoS proof that
    \begin{equation}
        U^{ii} (U^{jj} U^{ij} +U^{ij} U^{jj}) + (U^{jj} U^{ij} +U^{ij} U^{jj})U^{ii} \approx_{O(\epsrel^2)} U^{ij}.
    \end{equation}
    Note that by Part~\ref{item:ijzero} of Corollary~\ref{cor:consequences} and Fact~\ref{fact:shorthand}, 
    \begin{equation}
        U^{ii} (U^{jj} U^{ij} +U^{ij} U^{jj}) + (U^{jj} U^{ij} +U^{ij} U^{jj})U^{ii} \approx_{O(\epsrel^2)} U^{ii} U^{ij} U^{jj} + U^{jj} U^{ij} U^{ii},
    \end{equation}
    completing the proof of \eqref{eq:WWU_equiv} and thus of \eqref{eq:WWU}.
    
    For any $i_1,i_2,j_1,j_2\in[r]$ for which $j_1\neq j_2$, write $D^{j_1j_2}_{i_1i_2} \triangleq W^{j_1j_2}_{i_1i_2} + W^{j_2j_1}_{i_1i_2} - 2U^{j_1j_2}$. Then
    \begin{align}
        F_W(Q^*_a)_{i_1i_2} &= \sum_{j_1,j_2\in[r]} W^{j_1j_2}_{i_1i_2} (Q^*_a)_{j_1j_2} = \sum_{j_1<j_2} (W^{j_1j_2} + W^{j_2j_1})_{i_1i_2} (Q^*_a)_{j_1j_2} + \sum_j W^{jj}_{i_1i_2} (Q^*_a)_{jj}\\
        &= 2\sum_{j_1<j_2} (D^{j_1j_2}_{i_1i_2} + U^{j_1j_2}_{i_1i_2}) (Q^*_a)_{j_1j_2} + \sum_j U^{jj}_{i_1i_2} (Q^*_a)_{jj} = F_U(Q^*_a)_{j_1j_2} + 2 \sum_{j_1 < j_2} D^{j_1j_2}_{i_1i_2}.
    \end{align}
    We conclude that
    \begin{equation}
        \norm{F_W(Q^*_a) - F_U(Q^*_a)}^2 \le 4\sum_{i_1,i_2}\biggl(\sum_{j_1 < j_2} D^{j_1j_2}_{i_1i_2}\biggr)^2 \le 4r^4\sum_{i_1,i_2,j_1,j_2} (D^{j_1j_2}_{i_1i_2})^2 \le O(r^6 \epsrel^2),
    \end{equation}
    where the second step follows by Cauchy-Schwarz, and the last step follows by \eqref{eq:WWU}. The lemma follows by Lemma~\ref{lem:fakeUsendsQ}.
\end{proof}

\subsection{Proof of Lemma~\ref{lem:QQstar}}
\label{app:defer_QQstar}

\begin{proof}
    Take any $c\in[d]\cup\brc{\lambda,\mu}$. Recall from Lemma~\ref{lem:WsendsQ} and Corollary~\ref{cor:WsendsQlam} that there is a degree-12 SoS proof that $F_W(Q^*_c) \approx_{O(d\epsmap^2 + dr^6\epsrel^2)} Q_c$. Then for any $i,j\in[r]$,
    \begin{equation}
        (Q_c)_{ij} = F_W(Q^*_c)_{ij} \pm O(\sqrt{d}\epsmap + r^3\sqrt{d}\epsrel) = \sum_{k,\ell\in[r]} W^{k\ell}_{ij} (Q^*_c)_{k\ell} \pm O(\sqrt{d}\epsmap + r^3\sqrt{d}\epsrel). \label{eq:Qcij}
    \end{equation}
    Note that
    \begin{equation}
        \biggl(\sum_{(k,\ell)\neq (i,j)} W^{k\ell}_{ij} (Q^*_c)_{k\ell} \biggr)^2 \le \norm{Q^*_c}^2_F \cdot \sum_{(k,\ell)\neq (i,j)} (W^{k\ell}_{ij})^2 \le O(\radius^2 r^2 \epsoffdiag), \label{eq:offdiagQ}
    \end{equation}
    in degree-48 SoS, where in the second step we used Part~\ref{assume:scale} of Assumption~\ref{assume:tensorring} together with \eqref{eq:deg63} from Lemma~\ref{lem:deg6}. \eqref{eq:QQstar} follows from \eqref{eq:Qcij} and \eqref{eq:offdiagQ}.
\end{proof}

\section{Deferred Proofs from Section~\ref{sec:push}}

\subsection{Proof of Lemma~\ref{lem:push_gauge_invariant}}
\label{app:push_gauge}

\begin{proof}
    Define $T'_a \triangleq F_{V^{\otimes\omega}}(T^*_a)$ for all $a\in[d]$. 
    
    \noindent\textbf{Eq.~\eqref{eq:push_moment}:} As $D$ is rotation invariant, we conclude by the definition of $\Sigma$ in \eqref{eq:Sigmadef} that $\iprod{T^*_a,T^*_b}_{\Sigma} = \iprod{T'_a,T'_b}_{\Sigma}$, so $T'_1,\ldots,T'_d$ satisfy \eqref{eq:push_moment}.
    
    \noindent\textbf{Part~\ref{item:push_scale}:} This is immediate from the fact that rotations preserve Frobenius norm.
    
    \noindent\textbf{Part~\ref{item:push_condnumber}:} Let $M'\in\R^{d\times\rchoose}$ denote the matrix whose $(a,(i_1,\ldots,i_\omega))$-th entry, for $a\in[d]$ and $1 \le i_1 \le \cdots \le i_\omega \le r$, is given by $(T'_a)_{i_1\cdots i_\omega}$. We will show that $\sigma_{\min}(M') \ge \sigma_{\min}(M^*)$.
    
    The proof of this is a generalization of that of Lemma~\ref{lem:tensorring_gauge_invariant}. For any sorted tuple $\mb{i}\in\strings$,
    \begin{align}
        (T'_a)_{\mb{i}} &= \sum_{\mb{j}\in\strings} V_{i_1j_1}\cdots V_{i_\omega j_\omega} (T^*_a)_{\mb{j}} \\
        &= \sum_{\mb{j} \ \text{sorted}} (T^*_a)_{\mb{j}} \sum_{\mb{j}': \sort{j}' = \sort{j}} V_{i_1j'_1}\cdots V_{i_\omega j'_\omega}. \label{eq:VTVentry}
    \end{align}
    So consider the $\rchoose\times\rchoose$ matrix $\overline{V}$ given by
    \begin{equation}
        \overline{V}^{\mb{i}}_{\mb{j}} = \sum_{\mb{j}': \sort{j}' = \sort{j}} V_{i_1j'_1}\cdots V_{i_\omega j'_\omega},
    \end{equation}
    noting that \eqref{eq:VTVentry} implies that $M' = M^*\cdot \overline{V}$. Note that the rows of $\overline{V}$ are orthogonal: for any sorted tuples $\mb{j},\mb{j}'$, it is straightforward to check that $\iprod{\overline{V}_{\mb{j}},\overline{V}_{\mb{j}}}$ is zero if $\mb{j}\neq \mb{j}'$ and otherwise at least 1, implying that $\sigma_{\min}(\overline{V}) = 1$ and thus that $\sigma_{\min}(M') \ge \sigma_{\min}(M^*)$. 
    
    \noindent\textbf{Part~\ref{item:push_condnumber2}:} Observe that for $v'_1,\ldots,v'_\ell$ and sorted $\mb{j}^1,\ldots,\mb{j}^{\ell+1}\in\strings$, and $t_1,\ldots,t_{\ell+1}\in[\ell]$, the corresponding entry of $q(Vv'_1,\ldots,Vv'_\ell)$ is given by
    \begin{equation}
        \prod^{\ell+1}_{s=1}((Vv)^{\otimes \omega}_{t_s})_{\mb{j}^s} = \prod^{\ell+1}_{s=1} \sum_{\mb{k}^s} (V^{\otimes\omega})^{\mb{k}^s}_{\mb{j}^s} ({v'}^{\otimes\omega}_{t_s})_{\mb{k}^s} = \sum_{\mb{k}^1,\ldots,\mb{k}^{\ell+1}} (V^{\otimes\omega})^{\mb{k}^1}_{\mb{j}^1}\cdots (V^{\otimes\omega})^{\mb{k}^{\ell+1}}_{\mb{j}^{\ell+1}} \prod^{\ell+1}_{s=1} ({v'}^{\otimes\omega}_{t_s})_{\mb{k}^s},
    \end{equation}
    where we use $V^{\otimes \omega}$ to denote the appropriate $\rchoose\times\rchoose$ matrix. In particular, this implies that there is some matrix $A$ for which 
    \begin{equation}
        A\,q(v'_1,\ldots,v'_\ell) = q(Vv'_1,\ldots,Vv'_\ell) \label{eq:Adef}
    \end{equation} for all $a\in[d]$.
    So if for some $v_1,\ldots,v_\ell$ we have \begin{equation}
        q(V^{\top} v_1,\ldots, V^{\top} v_\ell) = \sum^d_{a=1} \lambda_a q(v^*_{a,1},\ldots,q(v^*_{a,\ell})) \label{eq:preA}
    \end{equation}
    for some $\lambda$ satisfying \begin{equation}
        \norm{\lambda}^2 \le \theta^2\biggl(\sum_t \norm{V^{\top} v_t}^2\biggr)^{\omega(\ell+1)} = \theta^2\biggl(\sum_t \norm{v_t}^2\biggr)^{\omega(\ell+1)},
    \end{equation} then left-multiplying both sides of \eqref{eq:preA} by $A$ and applying \eqref{eq:Adef}, we conclude that
    \begin{equation}
        q(v_1,\ldots,v_\ell) = A\, q(V^{\top} v_1,\ldots,V^{\top} v_\ell) = \sum^d_{a=1} A\lambda_a q(v^*_{a,1},\ldots,q(v^*_{a,\ell})) = \sum^d_{a=1} \lambda_a q(Vv^*_{a,1},\ldots,q(Vv^*_{a,\ell}))
    \end{equation}
    implying that $T'_1,\ldots,T'_d$ also satisfy Part~\ref{item:push_condnumber2} with parameter $\theta$.
\end{proof}

\subsection{Proof of Lemma~\ref{lem:approxVrank}}
\label{app:defer_approxVrank}

\begin{proof}
    Fix any $c\in[d]$ and, to ease notation, denote $T^*_c,T_c$ by $T^*,T$, and let $\Delta\triangleq F_U(T^*_c) - T_c$. Contracting along $V$, we have
    \begin{equation}
        \Delta(V,:,:) = F_U(T^*)(V,:,:) - T(V,:,:).
    \end{equation}
    Recalling from Lemma~\ref{lem:push_almost_map} that $\norm{\Delta}^2_F\le \epsmap^2$,
    \begin{equation}
        \norm{\Delta(V,:,:)}^2_F = \sum_{a,b\in[r]}\biggl(\sum_{\mb{i}\in\strings} \Delta_{\mb{i}ab} V_{\mb{i}}\biggr)^2 \le \sum_{a,b\in[r]} \biggl(\sum_{\mb{i}} \Delta^2_{\mb{i}ab}\biggr)\biggl(\sum_{\mb{i}} V_{\mb{i}}^2\biggr) = \norm{V}^2_F\cdot\norm{\Delta}^2_F \le \epsmap^2. \label{eq:Delta_contract}
    \end{equation}
    Define $\delta_{a,b}\triangleq \Delta(V,:,:)_{a,b}$ for any $a,b\in[r]$ so that \eqref{eq:Delta_contract} implies $-\epsmap \le \delta_{a,b} \le \epsmap$. For any $\brc{a_1,\ldots,a_{\ell+1}}, \brc{b_1,\ldots,b_{\ell+1}}\subset[r]$,
    \begin{align}
        \MoveEqLeft \sum_{\pi\in\calS_{\ell+1}}\sgn(\pi)\prod^{\ell+1}_{s=1} F_U(T^*)(V,:,:)_{a_s,b_{\pi(s)}} = \sum_{\pi\in\calS_{\ell+1}}\sgn(\pi)\prod^{\ell+1}_{s=1} \left(T(V:,:)_{a_s,b_{\pi(s)}} - \delta_{a_s,b_{\pi(s)}}\right) \\
        &= \sum_{\pi\in\calS_{\ell+1}} \sgn(\pi)\sum_{S\subseteq[\ell+1]: S\neq\emptyset}(-1)^{|S|} \prod_{s\in S} \delta_{a_s,b_{\pi(s)}} \prod_{s\not\in S} T(V,:,:)_{a_s,b_{\pi(s)}} \\
        &\le (\ell+1)!\cdot(2\radius)^{\ell+1}\cdot \epsmap \le (3\radius\ell)^{\ell+1}\cdot \epsmap.
    \end{align}
    where the second step follows by Constraint~\ref{constraint:push_lowrank} and Fact~\ref{fact:sos_rank} (this is where we need a degree-$O(\omega\ell)$ SoS proof in $\brc{v_{c,t}}$), and the penultimate step follows by the fact that for any $s$,
    \begin{align}
        \norm{T(V,:,:)^2}_F &= \sum_{a,b\in[r]}\biggl(\sum_{\mb{i}\in\strings}T_{\mb{i}ab}V_{\mb{i}}\biggr)^2 \le \sum_{a,b\in[r]}\biggl(\sum_{\mb{i}}T^2_{\mb{i}ab}\biggr)\biggl(\sum_{\mb{i}} V^2_{\mb{i}}\biggr) = \norm{T}^2\cdot\norm{V}^2_F \le \radius^2. \qedhere
    \end{align}
\end{proof}

\subsection{Proof of Lemma~\ref{lem:basis_sym}}
\label{app:basis_sym}

\begin{proof}
    For any any $1\le i \le m$, define $M'_{ii} \triangleq e_ie_i^{\top}$, and for any $1\le i < j \le m$, define $M'_{ij} \triangleq \frac{1}{2}(e_i + e_j)(e_i + e_j)^{\top}$. Note that for any $i < j$, $e_ie_j^{\top} + e_je_i^{\top} = M'_{ij} - M'_{ji}$. So for any orthogonal $u,v\in\S^{m-1}$, we can write $uv^{\top} + vu^{\top}$ as
    \begin{equation}
        \sum_{i<j} (u_i v_j + u_j v_i) (e_i e_j^{\top} + e_j e_i^{\top}) + \sum_i 2u_i v_i e_ie_i^{\top} = \sum_{i<j} (u_i v_j + u_j v_i) (M'_{ij} - M'_{ji}) + \sum_i 2u_iv_i M'_{ii}.
    \end{equation}
    Equivalently, if $B\in\R^{m^2}$ is the matrix whose columns consist of $\vec(M'_{ij})$ for all $i,j$, then for $\lambda$ whose $(i,j)$-th entry is $u_i v_j + u_j v_i$ for $i \le j$ and $-u_i v_j - u_j v_i$ for $i > j$, we have 
    \begin{equation}
        \vec(uv^{\top} + vu^{\top}) = B\lambda. \label{eq:vvB}
    \end{equation} Note that $\norm{\lambda}^2 = 2$.

    Now suppose we had some rotation $U\in O(m)$ for which 
    \begin{equation}
        |(UM'_{ij}U^{\top})_{ab}| = \Omega(1/m^7) \ \ \forall \ i,j,a,b\in[m]. \label{eq:entries_big}
    \end{equation}
    Consider $W\in O(m^2)$ given by $W\vec(M) = UMU^{\top}$ for all $M\in\R^{m\times m}$. Then if we take $u,v$ in \eqref{eq:vvB} to be $U^{\top}(e_i+e_j)$, then we can left-multiply both sides of \eqref{eq:vvB} by $W$ and take matricizations to conclude that
    \begin{equation}
        (e_i+e_j)(e_i+e_j)^{\top} = \sum_{i,j} \lambda_{ij} \cdot UM'_{ij}U^{\top}.
    \end{equation}
    
    To complete the proof, it suffices to exhibit $U$ satisfying \eqref{eq:entries_big}. Consider a Haar-random $U\in O(r)$. For any $a,i,j$, because $U_a$ is a random unit vector, by standard estimates on hyperspherical caps we have for sufficiently small constant $c > 0$ that
    \begin{equation}
        \Pr*{|(U_a)_i + (U_b)_j| > c/m^{7/2}} = \Pr*{|(U_a)_1| > c/m^3\sqrt{2}} \ge 1 - 1/10m^3,
    \end{equation}
    so by a union bound, this holds for all $a,i,j$ with probability $9/10$. Under this event, $(UM'_{ij}U^{\top})_{ab}$ is equal, up to a constant factor, to $\left((U_a)_i + (U_a)_j\right)\left((U_b)_i + (U_b)_j\right) > c^2/m^7$.
\end{proof}

\subsection{Proof of Corollary~\ref{cor:rank1relation}}
\label{app:defer_rank1relation}

\begin{proof}
    Take $V^{(1)},\ldots,V^{(N)}$ from Lemma~\ref{lem:Vrankimplies1rank}. Suppose we have shown inductively that for any symmetric tensor $S'\in(\R^r)^{\otimes\omega}$ for which there exist $v_1,\ldots,v_j$ satisfying $S' = \sum^j_{t=1}v^{\otimes\omega}_t$ and $\sum_t \norm{v_t}^2 = 1$, $F_U(S')$ is $\xi_j$-approximately $V$-rank at most $j$. 
    
    For the base case of $j = \ell$, this is true for $\xi_\ell = \theta\sqrt{d}(3\radius\ell)^{\ell+1}\epsmap$ by Lemma~\ref{lem:genericflatrank}. By Lemma~\ref{lem:rankrtorankr}, the inductive step follows for
    \begin{equation}
        \xi_{j-1} = r^{O(\omega)}\cdot O(\ell)^{O(\ell)}\cdot \left(\xi_j\cdot \omega^{\omega} r^{\omega/2}\radius\sqrt{d} / \kappa 
        + O(\omega)^{O(\omega\ell)} r^{\omega} \eta d / \kappa^2\right).
    \end{equation}
    Unrolling the recursion, we conclude that
    \begin{align}
        \xi_1 &= r^{O(\omega\ell)}\cdot O(\ell)^{O(\ell^2)}\left((\omega^{\omega}r^{\omega/2}\radius\sqrt{d}/\kappa)^{\ell-1}\xi_\ell + \sum^{\ell-2}_{i=1}
        (O(\omega)^{O(\omega\ell)} r^{\omega} \eta d / \kappa^2)\cdot
        (\omega^{\omega} r^{\omega/2}\radius\sqrt{d}/\kappa)^{i}\right) \\
        &= O(\ell^{\ell}\omega^{\omega}r^{2\omega}\radius d/\kappa)^{O(\ell)}\cdot
        (\theta\epsmap+\eta)
    \end{align}
    The result then follows by Lemma~\ref{lem:Vrankimplies1rank}.
\end{proof}

\subsection{Proof of Lemma~\ref{lem:outerprodstructure}}
\label{app:defer_outerprodstructure}

\begin{proof}
    For any $0\le m < \omega$ and $\mb{i},\mb{a}\in\strings$, we show by backwards induction on $m$ that 
    \begin{equation}
        (T_{a_{1:\omega}})^{\omega-m-1} T_{a_{1:m} i_{m+1:\omega}} = \prod^{\omega}_{t=m+1} T_{a_{1:t-1} i_t a_{t+1:\omega}} \pm \tmp_{\omega-m-1}
    \end{equation}
    for certain error terms $\brc{\tmp_0,\ldots,\tmp_{\omega-1}}$. Observe that the base cases of $m = \omega-1$ and $m = \omega-2$ are immediate with $\tmp_0, \tmp_1 = 0$.
    
    Next, by \eqref{eq:2x2vanish} we know that
    \begin{multline}
        T_{a_{1:\omega}} T_{a_{1:m} i_{m+1:\omega}} + T_{a_{1:m} i_{m+1:\omega-2} a_{\omega-1} a_{\omega}} T_{a_{1:\omega-2} i_{\omega-1} i_{\omega}}
        \\ = 
        T_{a_{1:\omega-1} i_{\omega}} T_{a_{1:m} i_{m+1:\omega-1} a_{\omega}} + T_{a_{1:\omega-2} i_{\omega-1} a_{\omega}} T_{a_{1:m} i_{m+1:\omega-2} a_{\omega-1} i_{\omega}} \pm \epsrel. \label{eq:apply2x2}
    \end{multline}
    As $T$ is symmetric, we can rewrite this as
    \begin{multline}
        T_{a_{1:\omega}} T_{a_{1:m} i_{m+1:\omega}} + T_{a_{1:m} a_{\omega-1} a_{\omega} i_{m+1:\omega-2} } T_{a_{1:\omega-2} i_{\omega-1} i_{\omega}}
        \\ = 
        T_{a_{1:\omega-1} i_{\omega}} T_{a_{1:m} a_{\omega} i_{m+1:\omega-1}} + T_{a_{1:\omega-2} a_{\omega} i_{\omega-1}} T_{a_{1:m} a_{\omega-1} i_{m+1:\omega-2} i_{\omega}} \pm \epsrel. \label{eq:apply2x2_rewrite}
    \end{multline}
    Consider the second, third, and fourth terms in \eqref{eq:apply2x2_rewrite}. To bound the second term, we have by the inductive hypothesis and symmetry of $T$ that
    \begin{align}
        (T_{a_{1:\omega}})^{\omega-m-3} T_{a_{1:m} a_{\omega-1} a_{\omega} i_{m+1:\omega-2}} &= \prod^{\omega-2}_{t = m+1} T_{a_{1:m} a_{\omega-1} a_{\omega} a_{m+1:t-1} i_t a_{t+1:\omega-2}} \pm \tmp_{\omega-m-3} \\
        &= \prod^{\omega-2}_{t = m+1} T_{a_{1:t-1} i_t a_{t+1:\omega}} \pm \tmp_{\omega-m-3}. \label{eq:second1}
    \end{align}
    and
    \begin{equation}
        (T_{a_{1:\omega}})^1 T_{a_{1:\omega-2}i_{\omega-1}i_{\omega}} = \prod^{\omega}_{t=\omega-1} T_{a_{1:t-1} i_t a_{t+1:\omega}} \pm \tmp_1.\label{eq:second2}
    \end{equation}
    Similarly, for the third term in \eqref{eq:apply2x2_rewrite},
    \begin{align}
        (T_{a_{1:\omega}})^{\omega-m-2} T_{a_{1:m} a_{\omega} i_{m+1:\omega-1}} &= \prod^{\omega-1}_{t=m+1} T_{a_{1:m}a_{\omega} a_{m+1:t-1} i_t a_{t:\omega-1}} \pm \tmp_{\omega-m-2}\\
        &= \prod^{\omega-1}_{t=m+1} T_{a_{1:t-1} i_t a_{t+1:\omega}} \pm \tmp_{\omega-m-2}, \label{eq:third}
    \end{align}
    and for the fourth term in \eqref{eq:apply2x2_rewrite},
    \begin{align}
        (T_{a_{1:\omega}})^{\omega-m-2} T_{a_{1:m} a_{\omega-1} i_{m+1:\omega-2} i_{\omega}} &= \prod_{t\in\brc{m+1,\ldots,\omega-2,\omega}} T_{a_{1:m}a_{\omega-1} a_{m+1:t-1} i_t a_{t+1:\omega-2} a_{\omega}} \pm \tmp_{\omega-m-2} \\
        &= \prod_{t\in\brc{m+1,\ldots,\omega-2,\omega}} T_{a_{1:t-1} i_t a_{t+1:\omega}} \pm \tmp_{\omega-m-2}. \label{eq:fourth}
    \end{align}
    Multiplying both sides of \eqref{eq:apply2x2_rewrite} by $(T_{a_{1:\omega}})^{\omega-m-2}$ and applying \eqref{eq:second1}, \eqref{eq:second2}, \eqref{eq:third}, \eqref{eq:fourth}, we find that we can take
    \begin{equation}
        \tmp_{\omega-m-1} \le \norm{T}^{\omega-m-2}_{\max} \epsrel + 2\norm{T}^{\omega-m-2}_{\max}\tmp_1 + 2\norm{T}^2_{\max}\tmp_{\omega-m-3} + 2\norm{T}_{\max} \tmp_{\omega-m-2}. \label{eq:recur}
    \end{equation}
    Define $\tmp'_j \triangleq \norm{T}^{\omega-j}\tmp_j$ so that we can rewrite \eqref{eq:recur}, after multiplying by $\norm{T}^{m+1}_{\max}$, as
    \begin{equation}
        \tmp'_{\omega-m-1} \le \norm{T}^{\omega-1}_{\max} \epsrel + 2(\tmp'_1 + \tmp'_{\omega-m-3} + \tmp'_{\omega-m-2}) \le \norm{T}^{\omega-1}_{\max}\epsrel + 6\max_{j < \omega-m-1} \tmp'_j \le \exp(O(\omega))\cdot \norm{T}^{\omega-1}_{\max}\epsrel,
    \end{equation} where the last step follows by unrolling the recurrence. We conclude that $\tmp_{\omega-1} \le \exp(O(\omega))\cdot \norm{T}^{\omega-2}_{\max}\epsrel$. By Lemma~\ref{lem:bootstrap}, $\norm{T}^2_F\le \omega^{O(\omega)}$, so $\norm{T}_{\max} = \pm \omega^{O(\omega)}$, completing the proof.
\end{proof}

\subsection{Proof of Lemma~\ref{lem:sumpowersT}}
\label{app:sumpowersT}

\begin{proof}
    We have
    \begin{align}
        \biggl(\sum_{x\in[r]} (T_{x\cdots x})^e\biggr)^{\omega} &= \sum_{x_1,\ldots,x_{\omega}\in[r]} \left(T_{x_1\cdots x_1} \cdots T_{x_\omega\cdots x_\omega}\right)^{e} \\
        &= \sum_{x_1,\ldots,x_\omega\in[r]} (T_{x_1\cdots x_\omega})^{e\omega} \pm r^{\omega}\epsrel^* \\
        &\ge \frac{1}{(r^{\omega})^{e\omega/2 - 1}} \biggl(\sum_{x_1,\ldots,x_\omega\in[r]} (T_{x_1\cdots x_\omega})^2\biggr)^{e\omega/2} - r^{\omega}\epsrel^*,
    \end{align}
    where the second step follows by Lemma~\ref{lem:perm}, and the last step follows by degree-$(e\omega)$ Holder's.
\end{proof}

\subsection{Proof of Lemma~\ref{lem:nice_outerprod_odd}}
\label{app:defer_nice_outerprod_odd}

\begin{proof}
    We have
    \begin{align}
        \wt{U}^i_{x_1}\cdots \wt{U}^i_{x_\omega} &= \sum_{j^1_1, j^1_2,\ldots, j^{\omega}_{\omega'}} U^{i\cdots i}_{x_1 j^1_1 j^1_1 j^1_2 j^1_2 \cdots j^1_{\omega'} j^1_{\omega'}} U^{i\cdots i}_{j^2_1 x_2 j^2_1 j^2_2 j^2_2 \cdots j^2_{\omega'} j^2_{\omega'}} \cdots U^{i\cdots i}_{j^{\omega}_1 j^{\omega}_1 j^{\omega}_2 j^{\omega}_2 \cdots j^{\omega}_{\omega'} j^{\omega}_{\omega'} x_{\omega}} \\
        &= U^{i\cdots i}_{x_1\cdots x_\omega} \sum_{j^1_1,j^1_2,\ldots,j^\omega_{\omega'}} \prod^{\omega'}_{a=1} \left(U^{i\cdots i}_{j^1_a j^2_a \cdots j^{\omega}_a}\right)^2 \pm r^{\omega'}\cdot \epsrel^* \label{eq:multiplicands_disjoint} \\
        &= U^{i\cdots i}_{x_1\cdots x_\omega} \prod^{\omega'}_{a=1}\brc*{\sum_{j^1_a,j^2_a,\ldots, j^{\omega}_a}\left(U^{i\cdots i}_{j^1_a j^2_a \cdots j^{\omega}_a}\right)^2} \pm r^{\omega'}\cdot\epsrel^*. \label{eq:bigprod}
    \end{align}
    where the first step follows by definition of $\wt{U}^i$, the second follows by Lemma~\ref{lem:perm}, and the third follows by interchanging sum and product (because the multiplicands in \eqref{eq:multiplicands_disjoint} are decoupled). Finally, by Lemma~\ref{lem:diagonal_orth}, we know each multiplicand in the product in \eqref{eq:bigprod} is $1\pm \bigl(\epsort/\omega! + O(r\omega)^{O(\omega^2)}\cdot {\epsrel^*}^{1/2}\bigr)$, so the product is $1\pm O\bigl(\epsort/2^{\omega} + O(r\omega)^{O(\omega^2)}\cdot {\epsrel^*}^{1/2}\bigr)$. The lemma then follows by the fact that $(U^{i\cdots i}_{x_1\cdots x_\omega})^2 \le \norm{U^{i\cdots i}}^2 \le O(1)$.
\end{proof}

\subsection{Proof of Corollary~\ref{cor:dd_orth_odd}}
\label{app:defer_dd_orth_odd}

\begin{proof}
    First suppose $i\neq j$. Then by Lemma~\ref{lem:diagonal_orth}, $\iprod{U^{i\cdots i}, U^{j\cdots j}} = \pm\epsort/\omega!$. We can also express $\iprod{U^{i\cdots i},U^{j\cdots j}}$ as
    \begin{multline}
        \iprod{\wt{U}^i, \wt{U}^j}^{\omega} + \iprod{U^{i\cdots i} - (\wt{U}^i)^{\otimes\omega}, U^{j\cdots j}} + \iprod{U^{i\cdots i}, (\wt{U}^j)^{\otimes\omega} - U^{j\cdots j}} + \iprod{(\wt{U}^i)^{\otimes\omega} - U^{i\cdots i}, (\wt{U}^j)^{\otimes\omega} - U^{j\cdots j}} \\
        = \iprod{\wt{U}^i,\wt{U}^j}^{\omega} \pm O\left(r^{\omega/2}\epsort/2^{\omega} + O(r\omega)^{O(\omega^2)}\cdot {\epsrel^*}^{1/2}\right),
    \end{multline}
    where in the second step we used Cauchy-Schwarz and Lemma~\ref{lem:nice_outerprod_odd}. So by Fact~\ref{fact:nonneg_power_of_two} we conclude that $\iprod{\wt{U}^i,\wt{U}^j} = \pm O(r^{1/2} \cdot \epsort^{1/\omega} + O(r\omega)^{O(\omega)}\cdot {\epsrel^*}^{1/\omega})$.
    
    Now suppose $i = j$. Then by an identical calculation, we conclude that $\iprod{\wt{U}^i,\wt{U}^i}^{\omega} = 1 \pm O\bigl(r^{\omega/2}\epsort/2^{\omega} + O(r\omega)^{O(\omega^2)}\cdot {\epsrel^*}^{1/2}\bigr)$. Taking $x$ in Fact~\ref{fact:root1} to be $\norm{\wt{U}^i}^2$ allows us to conclude that $\norm{\wt{U}^i}^2 = 1 \pm O\bigl(r^{\omega/2}\epsort/2^{\omega} + O(r\omega)^{O(\omega^2)}\cdot {\epsrel^*}^{1/2}\bigr)$. 
    
    Finally, the second part of the corollary follows by Lemma~\ref{lem:ortho_sos}.
\end{proof}

\subsection{Proof of Lemma~\ref{lem:main_userank1_odd}}
\label{app:main_userank1_odd}

\begin{proof}
    \textbf{Case 1:} $i_1=\cdots=i_\omega$. Then by Corollary~\ref{cor:dd_orth_odd},
    \begin{align}
        C^{\mb{i}}(\mb{i}') &= U^{\mb{i}}(\mb{i}') \\
        &= \iprod{\wt{U}^{i_1},\wt{U}^{i'_1}}\cdots \iprod{\wt{U}^{i_1},\wt{U}^{i'_\omega}} + \sum_{x_1,\ldots,x_\omega\in[r]} (U^{\mb{i}} - \wt{U}^{i_1}\otimes\cdots\otimes \wt{U}^{i_1}) \wt{U}^{i'_1}_{x_1}\cdots \wt{U}^{i'_\omega}_{x_\omega} \\
        &= \prod^\omega_{s=1} (\bone{i_1 = i'_1} \pm \epstrueort) + \sum_{x_1,\ldots,x_\omega\in[r]} (U^{\mb{i}} - \wt{U}^{i_1}\otimes\cdots\otimes \wt{U}^{i_1}) \wt{U}^{i'_1}_{x_1}\cdots \wt{U}^{i'_\omega}_{x_\omega} \label{eq:diagonal_case}
    \end{align}
    By Cauchy-Schwarz and Lemma~\ref{lem:nice_outerprod_odd},
    \begin{multline}
        \biggl(\sum_{x_1,\ldots,x_\omega\in[r]} (U^{\mb{i}} - U^{i_1}\otimes\cdots\otimes U^{i_1}) \wt{U}^{i'_1}_{x_1}\cdots \wt{U}^{i'_\omega}_{x_\omega}\biggr)^2 \\
        \le O\left(\epsort^2 r^{\omega}/2^{2\omega} + O(r\omega)^{O(\omega^2)}\cdot\epsrel^*\right) \cdot \norm{\wt{U}^{i'_1}}^2 \cdots \norm{U^{i'_\omega}}^2
        \le O\left(\epsort^2 r^{\omega}/4^{\omega} + O(r\omega)^{O(\omega^2)}\cdot\epsrel^*\right),
    \end{multline} where in the last step we used Corollary~\ref{cor:dd_orth_odd}, so substituting this into \eqref{eq:diagonal_case} we conclude that
    \begin{equation}
        C^{\mb{i}}(\mb{i}') = \bone{i'_\ell = i_1 \ \forall \ \ell\in[\omega]} \pm O(\omega\epstrueort + \epsort r^{\omega/2}/2^{\omega} + O(r\omega)^{O(\omega^2)}\cdot{\epsrel^*}^{1/2}).
    \end{equation}
    For convenience, define
    \begin{equation}
        \epsilon' \triangleq \Theta(\omega\epstrueort + \epsort r^{\omega/2}/2^{\omega} + O(r\omega)^{O(\omega^2)}\cdot{\epsrel^*}^{1/2}) \label{eq:epsprime}
    \end{equation}
    so that $C^{\mb{i}}(\mb{i}') \le \epsilon'$ in this case.
    
    \noindent\textbf{Case 2:} $\mb{i}$ contains exactly two distinct indices. Without loss of generality, suppose these two distinct indices are 1 and 2. 
    
    We first introduce some notation. For any $0\le m \le \omega$, let $C^{[m]}$ denote $C^{1\cdots 1 2\cdots 2}$ where there are $m$ 2's in the superscript, and let $a^{[m]}$ denote the string $1\cdots 1 2\cdots 2$ containing $m$ 2's.
    
    For some $\epsilon\in [0,1]$ that we will vary, consider the tensor
    \begin{equation}
        T\triangleq F_U((e_1 + \epsilon\cdot e_2)^{\otimes\omega}) = C^{[0]} + \epsilon C^{[1]} + \epsilon^2 C^{[2]} + \cdots + \epsilon^{\omega} C^{[\omega]}.
    \end{equation}
    
    We will use the following consequence of Lemma~\ref{lem:perm}, specifically \eqref{eq:true2x2}:
    \begin{equation}
        T(a_1\cdots a_{\omega}) T(a'_1\cdots a'_{\omega}) = T(a_1\cdots a_{\omega-1} a'_{\omega}) T(a'_1\cdots a'_{\omega-1} a_{\omega}) \pm \exp(O(\omega))\cdot \epsrel^*\label{eq:rank1combo_odd}
    \end{equation}
    for various choices of $a_1,a'_1\ldots,a_{\omega},a'_{\omega}$ (note that the factor of $\exp(O(\omega))$ comes from the fact that $(e_1 +\epsilon e_2)^{\otimes\omega}$ has Frobenius norm $\exp(O(\omega))$, so we must scale the $T$ that we apply Lemma~\ref{lem:perm} to accordingly).
    
    We begin by bounding $C^{[m]}(a^{[m']})$ for all $m < m'$. We will make use of various linear combinations of the constraints \eqref{eq:rank1combo_odd} under $\epsilon = 1/(2\omega+1),2/(2\omega+1),\ldots,1$. The intuition is that we can regard \eqref{eq:rank1combo_odd} as an (approximate) polynomial identity in the variable $\epsilon$, and we would like to isolate out the coefficient corresponding to a particular power of $\epsilon$ in the monomial expansion of this identity (we applied a similar trick in Lemma~\ref{lem:rankrtorankr}). 
    By Corollary~\ref{cor:general_vandermonde} applied to $D = 1$ and $e = 2\omega$, there is a linear combination of the constraints \eqref{eq:rank1combo_odd} for $\epsilon = 1/(2\omega+1),2/(2\omega+1),\ldots,1$ that implies the following: the coefficient of $\epsilon^m$ in the monomial expansion of \eqref{eq:rank1combo_odd} as a polynomial in $\epsilon$ is upper bounded by $O(\omega)^{O(\omega)}\cdot \epsrel^*$.
    We will refer to the coefficient of the $\epsilon^m$ coefficient as the ``degree-$m$ coefficient of \eqref{eq:rank1combo_odd}.''
    
    We are now ready to bound $C^{[m]}(a^{[m']})$ for all $m < m'$. We will prove inductively in $m$ that \begin{equation}
        C^{[m]}(a^{[m']}) \le ((4m+2)\crude)^m \cdot \epsilon' + 2\sum^{m-1}_{i=1} ((4m+2)\crude)^i \cdot O(\omega)^{O(\omega)}\cdot \epsrel^* \ \ \forall m < m'. \label{eq:case2_induct}
    \end{equation} By Case 1, this is true for $m = 0$ and $m' > 0$.
    
    For general $m < m'$, consider the degree-$m$ coefficient of \eqref{eq:rank1combo_odd} applied to $a_1\cdots a_{\omega} = a^{[0]}$ and $a'_1\cdots a'_{\omega} = a^{[m']}$. This can be written as
    \begin{multline}
        C^{[0]}(a^{[0]}) C^{[m]}(a^{[m']}) + \sum^m_{i=1} C^{[i]}(a^{[0]}) C^{[m-i]}(a^{[m']}) \\ = C^{[0]}(a^{[1]}) C^{[m]}(a^{[m'-1]}) + \sum^m_{i=1} C^{[i]}(a^{[1]}) C^{[m-i]}(a^{[m'-1]}) \pm O(\omega)^{O(\omega)}\cdot \epsrel^*. \label{eq:Ca}
    \end{multline}
    If $m' > m$, then $m' > m' - 1 > m - i$ for all $1 \le i \le m$, so by the inductive hypothesis, we conclude that the terms in the summations on either side of \eqref{eq:Ca}, as well as the first term on the right-hand side, are each bounded in magnitude by 
    \begin{equation}
        ((4m-2)\crude)^{m-1}\cdot \epsilon' + 2\sum^{m-2}_{i=1}((4m-2)\crude)^i \cdot O(\omega)^{O(\omega)}\cdot\epsrel^*.
    \end{equation}
    Finally, recall by Case 1 that $C^{[0]}(a^{[0]}) = 1 \pm \epsilon' \ge 1/2$. 
    So by Part~\ref{fact:divide_both_sides} of Fact~\ref{fact:division}, we conclude that
    \begin{equation}
        C^{[m]}(a^{[m']}) = \pm \biggl((4m+2)\cdot \biggl(((4m-2)\crude)^{m-1} \epsilon' + 2\sum^{m-2}_{i=1}((4m-2)\crude)^i \cdot O(\omega)^{O(\omega)}\cdot\epsrel^*\biggr) + O(\omega)^{O(\omega)}\cdot \epsrel^*\biggr),
    \end{equation}
    which completes the induction. By symmetry, we have the same bound for $C^{[\omega-m]}(a^{[\omega-m']})$ for all $m < m'$, or equivalently, $C^{[m]}(a^{[m']})$ for all $m > m'$. To summarize, upon simplifying \eqref{eq:case2_induct} by noting that $\epsilon' \gg \epsrel^*$,
    \begin{equation}
        C^{[m]}(a^{[m']}) = \pm O(\omega\crude)^{O(\omega)}\cdot\epsilon' \ \ \forall \ m \neq m'. \label{eq:neq}
    \end{equation}
    
    Next, we verify that $C^{[m]}(a^{[m]})$ is close to $1$. Consider \eqref{eq:Ca} for $m' = m$. The summation on the left-hand side and the first term on the right-hand side have total magnitude $O(\omega\crude)^{O(\omega)}\cdot\epsilon'$ by \eqref{eq:neq}, while the contribution of all but the first summand in the summation on the right-hand side has total magnitude $O(\omega\crude)^{O(\omega)}\cdot\epsilon'$. Recalling that $C^{[0]}(a^{[0]}) = 1 \pm \epsilon'$ by Case 1, we conclude by Part~\ref{fact:divide_both_sides_big} of Fact~\ref{fact:division} that
    \begin{equation}
        \left(C^{[m]}(a^{[m]}) - C^{[1]}(a^{[1]}) C^{[m-1]}(a^{[m-1]})\right)^2 \le O(\omega\crude)^{O(\omega)}\cdot{\epsilon'}^2. \label{eq:Cmam}
    \end{equation}
    Summing \eqref{eq:Cmam} over $2 \le m \le \omega$ and applying the following telescoping Cauchy-Schwarz step,
    \begin{equation}
        \sum^{\omega}_{m=2} \bigl[C^{[1]}(a^{[1]})^{\omega-m} C^{[m]}(a^{[m]}) - C^{[1]}(a^{[1]})^{\omega-m+1} C^{[m-1]}(a^{[m-1]})\bigr]^2 \ge \frac{1}{\omega-1} \bigl[C^{[m]}(a^{[m]}) - C^{[1]}(a^{[1]})^{\omega}\bigr]^2,
    \end{equation}
    we conclude that
    \begin{equation}
        \left(C^{[\omega]}(a^{[\omega]}) - C^{[1]}(a^{[1]})^{\omega}\right)^2 \le \omega\sum^\omega_{m=2} O(\omega\crude)^{O(\omega)} \cdot {\epsilon'}^2 \le O(\omega\crude)^{O(\omega)}\cdot{\epsilon'}^2.
    \end{equation}
    so by Fact~\ref{fact:root1} we have a degree-$O(\omega)$ SoS proof that $C^{[1]}(a^{[1]}) = 1 \pm O(\omega\crude)^{O(\omega)} \cdot \epsilon'$, which subsequently implies by \eqref{eq:Cmam} that $C^{[m]}(a^{[m]}) = 1 \pm O(\omega\crude)^{O(\omega)} \cdot \epsilon'$ for all $m$.
    
    To complete the analysis of Case 2, we verify that for any $(i'_1,\ldots,i'_\omega)\not\in[2]^{\omega}$, $C^{[m]}(i'_1\cdots i'_\omega)$ is small for all $m$. Suppose without loss of generality that $i'_\omega\not\in[2]$. It suffices to prove that for any $0 < m < \omega$, if there are \emph{at least} $m$ elements among $i'_1,\ldots,i'_{\omega-1}$ which are distinct from 1, then $C^{[m]}(i'_1\cdots i'_\omega)$ is small for all $m$. The reason this suffices is that then, for the remaining case where there are \emph{fewer than} $m$ elements among $i'_1,\ldots,i'_{\omega-1}$ distinct from 1, then by symmetry of exchanging the roles of $1$ and $2$, $C^{[m]}(i'_1\cdots i'_{\omega})$ can be bounded provided $C^{[\omega-m]}(i''_1\cdots i''_{\omega})$ can be bounded, where
    \begin{equation}
        i''_j \triangleq \begin{cases}
            1 & \text{if} \ i'_j = 2 \\
            2 & \text{if} \ i'_j = 1 \\
            i'_j & \text{otherwise},
        \end{cases}
    \end{equation}
    There are at least $\omega - m$ elements among $i''_1,\ldots,i''_{\omega-1}$ distinct from 1, so we would conclude that $C^{[\omega-m]}(i''_1\cdots i''_{\omega})$ is also sufficiently small. 
    
    Finally, to show that $C^{[m]}(i'_1\cdots i'_\omega)$ is small when there are at least $m$ elements distinct from 1 among $i'_1,\ldots,i'_{\omega-1}$, we proceed by induction on $m$. The base case of $m = 0$ is immediate: we have by Case 1 that $C^{[0]}(i'_1\cdots i'_\omega) = \pm \epsilon'$. 
    
    In general, the degree-$m$ coefficient of \eqref{eq:rank1combo_odd} for $(a_1,\ldots,a_\omega) = (1,\ldots,1)$ and $(a'_1,\ldots,a'_\omega) = (i'_1,\ldots,i'_\omega)$ implies that
    \begin{multline}
        C^{[0]}(1\cdots 1) C^{[m]}(i'_1\cdots i'_\omega) + \sum^m_{j=1} C^{[j]}(1\cdots 1) C^{[m-j]}(i'_1\cdots i'_\omega) \\
        = C^{[0]}(1\cdots 1 i'_\omega) C^{[m]}(i'_1\cdots i'_{\omega-1}1) + \sum^m_{j=1} C^{[j]}(1\cdots 1 i'_\omega) C^{[m-j]}(i'_1\cdots i'_{\omega-1}1) \pm O(\omega)^{O(\omega)}\cdot \epsrel^*. \label{eq:case2_corner}
    \end{multline}
    We can bound the summations on both sides of \eqref{eq:case2_corner}, as well as the first term on the right-hand side. By an identical recurrence as in the proof of \eqref{eq:case2_induct}, we conclude from \eqref{eq:case2_corner} that $C^{[m]}(i'_1\cdots i'_\omega) = \pm O(\omega\crude)^{O(\omega)}\cdot\epsilon'$ as desired.
    
    \textbf{Case 3:} $(i_1,\ldots,i_{\omega})$ consists of $D > 2$ distinct indices. Without loss of generality suppose that these indices are $1,\ldots,D$ which appear $c_1,\ldots,c_{D}$ times respectively. Following the notation introduced at the beginning of Section~\ref{sec:prelims}, we will denote the string $i_1\cdots i_{\omega}$ by $1^{c_1}\cdots D^{c_D}$.
    
    For $0 \le z_1,\ldots,z_D \le 1$ that we will vary, consider the tensor
    \begin{equation}
        T' \triangleq F_U((z_1 e_1 + \cdots + z_D e_D)^{\otimes \omega}).
    \end{equation}
    Analogous to \eqref{eq:rank1combo_odd}, we will use the following consequence of Lemma~\ref{lem:perm}:
    \begin{equation}
        T'(a_1\cdots a_{\omega}) T'(a'_1\cdots a'_{\omega}) = T'(a_1\cdots a_{\omega-1} a'_{\omega}) T'(a'_1\cdots a'_{\omega-1} a_{\omega}) \pm O(\omega)^{O(\omega)}\cdot \epsrel^* \label{eq:rank1combo_odd2}
    \end{equation}
    for various choices of $a_1,a'_1,\ldots,a_\omega,a'_\omega$ (note that the factor of $O(\omega)^{O(\omega)}$ comes from the fact that $\norm{(z_1 e_1+\cdots z_D e_D)^{\otimes\omega}}^2_F \le O(\omega)^{O(\omega)}$, so we must scale the $T$ in Lemma~\ref{lem:perm} appropriately).
    
    Similar to Case 2, we would like to isolate certain monomials in the monomial expansion of \eqref{eq:rank1combo_odd2} as a polynomial in the variables $z_1,\ldots,z_D$. 
    By Corollary~\ref{cor:general_vandermonde} for $e = 2\omega$, for any term $z_1^{c_1}\cdots z_D^{c_D}$ in the monomial expansion of \eqref{eq:rank1combo_odd2}, there is a linear combination of the constraints~\ref{eq:rank1combo_odd2} for $(z_1,\ldots,z_D) \in \brc{1/(2\omega+1),2/(2\omega+1),\ldots,1}^D$ that implies that the coefficient for $z_1^{c_1}\cdots z_D^{c_D}$ is bounded by
    \begin{equation}
        O(\omega)^{O(\omega D)}\cdot \epsrel^* \le O(\omega)^{O(\omega^2)}\cdot \epsrel^*.
    \end{equation}
    We will refer to this coefficient as the ``degree-$(c_1,\ldots,c_D)$ coefficient of \eqref{eq:rank1combo_odd2}.'' 
    
    For any $i'_1\le \cdots\le i'_\omega$, consider the degree-$(c_1,\ldots,c_D)$ coefficient of \eqref{eq:rank1combo_odd2} for $a_1\cdots a_\omega = a\cdots a$ and $a'_1\cdots a'_{\omega} = i'\cdots i'_\omega$ for some $a\in[r]$. For example, for $a = 1$, this can be written as
    \begin{multline}
        C^{1\cdots 1}(1\cdots 1) C^{1^{c_1}\cdots D^{c_D}}(i'_1\cdots i'_\omega) 
        + \sum_{b_1,\ldots,b_D \neq 0} C^{1^{\omega - b_1} 2^{b_2}\cdots D^{b_D}}(1\cdots 1) C^{1^{c_1+b_1} 2^{c_2-b_2}\cdots D^{c_D - b_D}}(i'_1\cdots i'_\omega) \\
        = C^{1\cdots 1}(1\cdots 1 i'_\omega) C^{1^{c_1}\cdots D^{c_D}}(i'_1\cdots i'_{\omega-1} 1) \\
        + \sum_{b_1,\ldots,b_D \neq 0} C^{1^{\omega - b_1} 2^{b_2}\cdots D^{b_D}}(1\cdots 1 i'_\omega) C^{1^{c_1+b_1} 2^{c_2-b_2}\cdots D^{c_D - b_D}}(i'_1\cdots i'_{\omega-1} 1) \pm O(\omega)^{O(\omega^2)}\cdot \epsrel^*, \label{eq:case3}
    \end{multline}
    where the summations are over all nonzero tuples $(b_1,b_2,\ldots,b_{D})\in \brc{0,\ldots,\omega}^{D}$ for which the summands are well-defined. Note that for this to hold, we must have $b_2 \le c_2,\ldots, b_{D} \le c_{D}$ and
    \begin{equation}
        b_1 = b_2 + \cdots + b_{D}
    \end{equation}
    which implies that $b_1 > 0$.
    
    Suppose inductively that we have shown for any $(i''_1,\ldots,i''_\omega)$ with Hamming distance at most $t - 1$ from $(1,\ldots,1)$ that for some $\tmp_{t-1} > 0$,
    \begin{equation}
        C^{i''_1\cdots i''_\omega}(i'_1\cdots i'_\omega) = \pm \tmp_{t-1} \label{eq:dist_induct} 
    \end{equation} 
    for all $1\le i'_1 \le\cdots \le i'_\omega\le r$ such that $c'_1 < c''_1$, where $c'_j$ denotes the number of copies of $j$ among $i'_1,\ldots,i'_\omega$ and $c''_j$ denotes the number of copies of $j$ among $i''_1,\ldots,i''_\omega$. This certainly holds for $t - 1 \le 1$, by Cases 1 and 2, with $\tmp_0 \le \epsilon'$ and $\tmp_1 \le O(\omega\crude)^{O(\omega)}\cdot \epsilon'$.
    
    Suppose $i_1,\ldots,i_\omega$ has Hamming distance $t$ from $(1,\ldots,1)$, and consider any $i'_1,\ldots,i'_\omega$ for which $c'_1 < c_1$. 
    
    We first show that the summations on both sides of \eqref{eq:case3} are small. Because $b_1 > 0$, the string $1^{c_1+b_1}2^{c_2-b_2}\cdots D^{c_{D}-b_{D}}$ has strictly smaller Hamming distance to $1^{\omega}$ than does $i_1\cdots i_\omega = 1^{c_1}\cdots D^{c_D}$, and $c'_1 < c'_1 + 1 < c_1 + b_1$. Therefore, by the inductive hypothesis we know that
    \begin{equation}
        C^{1^{c_1+b_1}2^{c_2-b_2}\cdots D^{c_D - b_D}}(i'_1\cdots i'_\omega) = \pm \tmp_{t-1} 
    \end{equation}
    \begin{equation}
        C^{1^{c_1+b_1}2^{c_2-b_2}\cdots D^{c_D - b_D}}(i'_1\cdots i'_{\omega-1}1) = \pm \tmp_{t-1}. 
    \end{equation}
    So the summations on both sides of \eqref{eq:case3} each contribute $2(\omega+1)^D\crude \tmp_{t-1} \le O(\omega)^{\omega}\crude\tmp_{t-1}$ to \eqref{eq:case3}.
    
    Next, note that the first term on the right-hand side of \eqref{eq:case3} is $O(\crude\epsilon')$: we have $1\cdots 1i'_{\omega} \neq 1^{\omega}$ because $i'_{\omega} \neq 1$ by assumption (otherwise $i'_1 = \cdots = i'_\omega = 1$, contradicting the assumption that $c'_1 < c_1$), so $C^{1\cdots 1}(1\cdots 1i'_\omega) = \pm \epsilon'$ by Case 1.
    
    Putting everything together, we conclude from \eqref{eq:case3} that
    \begin{equation}
        C^{1\cdots 1}(1\cdots 1) C^{1^{c_1}\cdots D^{c_D}}(i'_1\cdots i'_\omega) = O(\omega)^{\omega}\crude\tmp_{t-1} + \crude\epsilon' 
    \end{equation}
    so by Part~\ref{fact:divide_both_sides_big} of Fact~\ref{fact:division},
    \begin{equation}
        C^{1^{c_1}\cdots D^{c_D}}(i'_1\cdots i'_\omega) = \pm O(\omega)^{\omega}\crude\tmp_{t-1} + O(\crude\epsilon'),
    \end{equation}
    completing the induction for $\tmp_t = O(\omega)^{\omega} \crude\tmp_{t-1} + O(\crude\epsilon')$. 
    
    We have thus shown that 
    \begin{equation}
        C^{\mb{i}}(\mb{i}') = \pm O(\omega\crude)^{O(\omega^2)}\cdot\epsilon' \ \forall \ \text{sorted} \ \mb{i},\mb{i}'\in\strings \ \text{s.t.} \ c_1 > c'_1, \label{eq:case3_first_part}
    \end{equation}
    where $c_1$ and $c'_1$ denote the number of appearances of 1 in $\mb{i}$ and $\mb{i}'$. By symmetry, \eqref{eq:case3_first_part} also holds if $c_1 < c'_1$.
    
    It remains to consider the case of $c_1 = c'_1$. We proceed by induction on the Hamming distance between $(i_1,\ldots,i_\omega)$ and $(1,\ldots,1)$. Suppose we have shown for some $\tmp_{t-1} > 0$ that for any $(i''_1,\ldots,i''_\omega)$ with Hamming distance at most $t - 1$ from $(1,\ldots,1)$ that
    \begin{equation}
        C^{i''_1\cdots i''_\omega}(i'_1\cdots i'_\omega) = \bone{i'_s = i''_s \ \forall \ s\in[\omega]} \pm \tmp_{t-1}
    \end{equation}
    for all $1 \le i'_1\le \cdots \le i'_\omega\le r$ such that $c'_1 = c''_1$.
    
    Now suppose $i_1,\ldots,i_\omega$ has Hamming distance $t$ from $(1,\ldots,1)$, and consider any $i'_1,\ldots,i'_\omega$ for which $c'_1 < c_1$. Consider again \eqref{eq:case3}. Because $b_1 > 0$, we can apply \eqref{eq:case3_first_part} to bound every $C^{1^{\omega-b_1}2^{b_2}\cdots D^{b_D}}(1\cdots 1)$ in the summation on the left-hand side of \eqref{eq:case3} by $O(\omega\crude)^{O(\omega^2)}\cdot\epsilon'$. Likewise, for the first term on the right-hand side of \eqref{eq:case3}, $C^{1^{c_1}\cdots D^{c_D}}(i'_1\cdots i'_{\omega-1}1) = \pm O(\omega\crude)^{O(\omega^2)}\cdot\epsilon'$ by \eqref{eq:case3_first_part} because $i'_1\cdots i'_{\omega-1}1$ has $c'_1 + 1 \neq c_1$ copies of 1.
    
    For every summand on the right-hand side of \eqref{eq:case3} for which $b_1 > 1$, we can likewise bound $C^{1^{\omega-b_1}2^{b_2}\cdots D^{b_D}}(1\cdots 1i'_\omega) = \pm O(\omega\crude)^{O(\omega^2)}\cdot\epsilon'$ using \eqref{eq:case3_first_part}. And for the summands on the right-hand side of \eqref{eq:case3} for which $b_1 = 1$ but for which $1^{\omega-b_1}2^{b_2}\cdots D^{b_D} \neq 1\cdots 1 i'_{\omega}$, we can bound them by $\crude\tmp_{t-1}$ by the inductive hypothesis.
    If $i'_{\omega} > D$, then we have accounted for all summands on the right-hand side of \eqref{eq:case3}, and we conclude from \eqref{eq:case3}
    \begin{equation}
        C^{1\cdots 1}(1\cdots 1) C^{1^{c_1}\cdots D^{c_D}}(i'_1\cdots i'_\omega) = \pm \left(O(\omega\crude)^{O(\omega^2)}\cdot\epsilon' + O(\omega)^{O(\omega)}\crude\tmp_{t-1}\right). \label{eq:iomega_not_ell}
    \end{equation}
    Otherwise, if $1 \le i'_\omega \le D$, suppose without loss of generality that $i'_\omega = D$. Then 
    \eqref{eq:case3} becomes
    \begin{multline}
        C^{1\cdots 1}(1\cdots 1) C^{1^{c_1}\cdots D^{c_D}}(i'_1\cdots i'_\omega)
        = C^{1\cdots 1D}(1\cdots 1D) C^{1^{c_1+1} 2^{c_2}\cdots (D-1)^{c_{D-1}} D^{c_D-1}}(i'_1\cdots i'_{\omega-1} 1) \\ \pm \biggl[O(\omega\crude)^{O(\omega^2)}\cdot\epsilon' + O(\omega)^{O(\omega)}\crude\tmp_{t-1}\biggr]. \label{eq:simplify_case3}
    \end{multline}
    We have already shown in Case 2 that \begin{equation}
        C^{1\cdots 1D}(1\cdots 1D) = 1\pm O(\omega\crude)^{O(\omega)}\cdot \epsilon'
    \end{equation} And by the inductive hypothesis,
    \begin{align}
        \MoveEqLeft C^{1^{c_1+1}2^{c_2}\cdots(D-1)^{c_{D-1}}D^{c_{D}-1}}(i'_1\cdots i'_{\omega-1}1) \\
        &= \bone{i'\cdots i'_{\omega-1} 1 = 1^{c_1+1}2^{c_2}\cdots (D-1)^{c_{D-1}}D^{c_D-1}} \pm O(\tmp_{t-1}) \\
        &= \bone{i'_s = i_s \ \forall \ s\in[\omega]} \pm O(\tmp_{t-1}).
    \end{align}
    By \eqref{eq:simplify_case3}, we conclude that
    \begin{equation}
        C^{1^{c_1}\cdots D^{c_\ell}}(i'_1\cdots i'_\omega) = \bone{i'_s = i_s \ \forall \ s\in[\omega]} \pm O(\omega\crude)^{O(\omega^2)}\cdot\epsilon' + O(\omega)^{O(\omega)}\crude\tmp_{t-1}. \label{eq:iomega_ell}
    \end{equation}
    By \eqref{eq:iomega_not_ell} and \eqref{eq:iomega_ell}, we have completed the induction for $\tmp_t \triangleq O(\omega\crude)^{O(\omega^2)}\cdot\epsilon' + O(\omega)^{O(\omega)}\crude\tmp_{t-1}$. Unrolling this recurrence, we conclude that 
    \begin{align}
        C^{i''_1\cdots i''_\omega}(i'_1\cdots i'_\omega) &= \bone{i'_s = i''_s \ \forall \ s\in[\omega]} \pm O(\omega\crude)^{O(\omega^2)}\cdot\epsilon'. \qedhere
    \end{align}
\end{proof}

\subsection{Proof of Lemma~\ref{lem:UsendsF}}
\label{app:defer_UsendsF}

\begin{proof}
    For convenience, we will refer to $T^*_a, T_a, F^*_a, F_a$ as $T^*, T, F^*, F$. Denoting $A\triangleq T - F_U(T^*)$ so that $\norm{A}^2_F \le \epsmap^2$ by Lemma~\ref{lem:push_almost_map}, we have
    \begin{align}
        \MoveEqLeft \biggl\|F - \biggl(\sum_{\mb{j}\in[r]^{\omega'}} F_U(T^*)_{\mb{j}\mb{j}:} \biggr)\biggl(\sum_{\mb{j}\in[r]^{\omega'}} F_U(T^*)_{\mb{j}\mb{j}:} \biggr)^{\top}\biggr\|^2_F = \sum^r_{x,y=1} \biggl(\sum_{\mb{j},\mb{j}'} T_{\mb{j}\mb{j}x}T_{\mb{j}'\mb{j}'y} - F_U(T^*)_{\mb{j}\mb{j}x} F_U(T^*)_{\mb{j}'\mb{j}'y}\biggr)^2 \\
        &= \sum_{x,y}\biggl(\sum_{\mb{j},\mb{j}'} T_{\mb{j}\mb{j}x}A_{\mb{j}'\mb{j}'y} + A_{\mb{j}\mb{j}x}T_{\mb{j}'\mb{j}'y} + A_{\mb{j}\mb{j}x}A_{\mb{j}'\mb{j}'y}\biggr)^2 \\
        &\le 3r^{\omega-1}\sum_{x,y,\mb{j},\mb{j}'} T^2_{\mb{j}\mb{j}x} A^2_{\mb{j}'\mb{j}'y} + A^2_{\mb{j}\mb{j}x}T^2_{\mb{j}'\mb{j}'y} + A^2_{\mb{j}\mb{j}x}A^2_{\mb{j}'\mb{j}'y} \le r^{O(\omega)}\radius^2\epsmap^2. \label{eq:Fmap_error}
    \end{align}
    We now consider the vector $\sum_{\mb{j}\in[r]^{\omega'}} F_U(T^*)_{\mb{j}\mb{j}:}$. For any $x\in[r]$, its $x$-th entry is given by
    \begin{equation}
        \sum_{\mb{j}\in[r]^{\omega'}, \mb{k}\in\strings} U^{\mb{k}}_{\mb{j}\mb{j}x} T^*_{\mb{k}} .\label{eq:Fxy}
    \end{equation}
    To ease notation, define
    \begin{equation}
        \calE^{\mb{k}} \triangleq U^{\mb{k}} - \frac{1}{\num{k}}\sum_{\mb{j}\in\strings: \sort{j} = \sort{k}} \wt{U}^{k_1}\otimes\cdots\otimes \wt{U}^{k_\omega}
    \end{equation}
    \begin{equation}
        \delta_{ij} \triangleq \iprod{\wt{U}^i,\wt{U}^j} - \bone{i = j}.
    \end{equation}
    \begin{equation}
        \upsilon_{\mb{i}} \triangleq \biggl(\prod^{\omega'}_{s=1} (\bone{i_{2s-1} = i_{2s}} + \delta_{i_{2s-1}i_{2s}})\biggr) - \prod^{\omega'}_{s=1} \bone{i_{2s-1} = i_{2s}}.
    \end{equation}
    Then we can rewrite \eqref{eq:Fxy} as
    \begin{multline}
        \sum_{\mb{j}\in[r]^{\omega'},\mb{k}\in\strings} T^*_{\mb{k}} \biggl(\calE^{\mb{k}}_{\mb{j}\mb{j}x} + \frac{1}{\num{k}}\sum_{\mb{i}:\sort{i} = \sort{k}} \wt{U}^{i_1}_{j_1}\wt{U}^{i_2}_{j_1} \cdots \wt{U}^{i_{\omega-2}}_{j_{\omega'}}\wt{U}^{i_{\omega-1}}_{j_{\omega'}}\wt{U}^{i_{\omega}}_x\biggr)  \\
        = \sum_{\mb{j}\in[r]^{\omega'},\mb{i}\in\strings} T^*_{\mb{i}}\bigl(\calE^{\mb{i}}_{\mb{j}\mb{j}x} + \wt{U}^{i_1}_{j_1}\wt{U}^{i_2}_{j_1} \cdots \wt{U}^{i_{\omega-2}}_{j_{\omega'}}\wt{U}^{i_{\omega-1}}_{j_{\omega'}}\wt{U}^{i_{\omega}}_x\bigr),
        \label{eq:split_U_E}
    \end{multline}
    where we used the fact that $T^*$ is symmetric.
    Note that
    \begin{align}
        \sum_{\mb{j}\in[r]^{\omega'},\mb{i}\in\strings} T^*_{\mb{i}}\wt{U}^{i_1}_{j_1}\wt{U}^{i_2}_{j_1} \cdots \wt{U}^{i_{\omega-2}}_{j_{\omega'}}\wt{U}^{i_{\omega-1}}_{j_{\omega'}}\wt{U}^{i_{\omega}}_x &= \sum_{\mb{i}\in\strings} \wt{U}^{i_{\omega}}_x T^*_{\mb{i}}\prod^{\omega'}_{s=1} (\bone{i_{2s-1} = i_{2s}} + \delta_{i_{2s-1}i_{2s}}) \\
        &= \sum_{\mb{i}\in\strings} \wt{U}^{i_\omega}_x T^*_{\mb{i}} \biggl(\upsilon_{\mb{i}} + \prod^{\omega'}_{s=1}\bone{i_{2s-1}=i_{2s}}\biggr) \label{eq:upsilons}
    \end{align}
    and that for any $x,y\in[r]$,
    \begin{align}
        \biggl(\sum_{\mb{i}\in\strings} \wt{U}^{i_\omega}_x T^*_\mb{i}\prod^{\omega'}_{s=1}\bone{i_{2s-1}=i_{2s}}\biggr)\biggl(\sum_{\mb{i}\in\strings} \wt{U}^{i_\omega}_y T^*_\mb{i}\prod^{\omega'}_{s=1}\bone{i_{2s-1}=i_{2s}}\biggr) &= \sum_{i_\omega,i'_\omega\in[r]} \wt{U}^{i_\omega}_x F^*_{i_\omega i'_\omega} \wt{U}^{i_\omega}_y\\ &= (\wt{U}^{\top}F^*\wt{U})_{xy}. \label{eq:get_UFU}
    \end{align}
    It remains to control the error terms involving $\calE$'s and $\upsilon$'s from \eqref{eq:split_U_E} and \eqref{eq:upsilons}. For the former, we can bound
    \begin{align}
        \MoveEqLeft \biggl(\sum_{\mb{j},\mb{j}'\in[r]^{\omega'}, \mb{i},\mb{i}'\in\strings} T^*_{\mb{i}}\calE^{\mb{i}}_{\mb{j}\mb{j}x}\cdot T^*_{\mb{i}'}\wt{U}^{i'_1}_{j'_1}\wt{U}^{i'_2}_{j'_1}\cdots \wt{U}^{i'_{\omega-2}}_{j'_{\omega'}}\wt{U}^{i'_{\omega-1}}_{j'_{\omega'}}\wt{U}^{i'_\omega}_y\biggr)^2 \\
        &\le \biggl(\sum_{\mb{j},\mb{j}',\mb{i},\mb{i}'} (\calE^{\mb{i}}_{\mb{j}\mb{j}x})^2 \cdot (\wt{U}^{i'_1}_{j'_1}\wt{U}^{i'_2}_{j'_1}\cdots \wt{U}^{i'_{\omega-2}}_{j'_{\omega'}}\wt{U}^{i'_{\omega-1}}_{j'_{\omega'}}\wt{U}^{i'_\omega}_y)^2\biggr)\cdot r^{\omega-1}\cdot \norm{T^*}^4_F \le r^{O(\omega)}\cdot \radius^4\cdot {\epstrueouter}^2, \label{eq:crossbound1}
    \end{align}
    where in the last step we used Lemma~\ref{lem:tensorouter},
    and similarly
    \begin{equation}
        \biggl(\sum_{\mb{j},\mb{j}'\in[r]^{\omega'}, \mb{i},\mb{i}'\in\strings} T^*_{\mb{i}'}\calE^{\mb{i}'}_{\mb{j}'\mb{j}'y}\cdot T^*_{\mb{i}}\wt{U}^{i_1}_{j_1}\wt{U}^{i_2}_{j_1}\cdots \wt{U}^{i_{\omega-2}}_{j_{\omega'}}\wt{U}^{i_{\omega-1}}_{j_{\omega'}}\wt{U}^{i_\omega}_y\biggr)^2 \le r^{O(\omega)}\cdot \radius^4\cdot {\epstrueouter}^2, \label{eq:crossbound2}
    \end{equation}
    and
    \begin{align}
        \biggl(\sum_{\mb{j},\mb{j}'\in[r]^{\omega'},\mb{i},\mb{i}'\in\strings} T^*_{\mb{i}}\calE^{\mb{i}}_{\mb{j}\mb{j}x}\cdot T^*_{\mb{i}'} \calE^{\mb{i}'}_{\mb{j}'\mb{j}'y}\biggr)^2 \le \biggl(\sum_{\mb{j},\mb{j}',\mb{i},\mb{i}'} (\calE^{\mb{i}}_{\mb{j}\mb{j}x})^2 (\calE^{\mb{i}'}_{\mb{j}'\mb{j}'y})^2\biggr) \cdot r^{\omega-1}\norm{T^*}^4_F \le r^{O(\omega)}\cdot \radius^4 \cdot {\epstrueouter}^4. \label{eq:crossbound3}
    \end{align}
    Finally, for the error terms involving $\upsilon$'s from \eqref{eq:upsilons},
    \begin{align}
        \biggl(\sum_{\mb{i},\mb{i}'\in\strings}\wt{U}^{i_\omega}_x T^*_{\mb{i}}\upsilon_{\mb{i}}\cdot \wt{U}^{i'_\omega}_y T^*_{\mb{i}'}\prod^{\omega'}_{s=1} \bone{i'_{2s-1} = i'_{2s}}\biggr)^2 \le \biggl(\sum_{\mb{i},\mb{i}'} \upsilon^2_{\mb{i}} (\wt{U}^{i_\omega}_x \wt{U}^{i'_\omega}_y)^2\biggr)\norm{T^*}^4_F \le r^{O(\omega)}\radius^4\cdot {\epstrueort}^2 \label{eq:crossbound4}
    \end{align}
    where in the last step we used that for any $\mb{i}\in\strings$,
    \begin{align}
        \upsilon_{\mb{i}}^2 &= \biggl(\sum_{\emptyset\neq S\subseteq[\omega']} \prod_{s\in S} \delta_{i_{2s-1}i_{2s}} \cdot \prod_{t\not\in S} \bone{i_{2t-1} = i_{2t}}\biggr)^2 \le 2^{\omega'}\sum_{S\neq\emptyset} \prod_{s\in S}\delta^2_{i_{2s-1}i_{2s}} \\
        &\le 2^{\omega'}\biggl(\prod^{\omega'}_{s=1}(1 + \delta^2_{i_{2s-1}i_{2s}}) - 1\biggr) \le 2^{O(\omega)}\cdot {\epstrueort}^2,
    \end{align}
    where the last step follows by Corollary~\ref{cor:dd_orth_odd}.
    Similarly
    \begin{equation}
        \biggl(\sum_{\mb{i},\mb{i}'\in\strings}\wt{U}^{i_\omega}_x T^*_{\mb{i}}\upsilon_{\mb{i}}\cdot \wt{U}^{i'_\omega}_y T^*_{\mb{i}'}\prod^{\omega'}_{s=1} \bone{i'_{2s-1} = i'_{2s}}\biggr)^2 \le r^{O(\omega)}\radius^4{\epstrueort}^2 \label{eq:crossbound5}
    \end{equation}
    and
    \begin{equation}
        \biggl(\sum_{\mb{i},\mb{i}'\in\strings}\wt{U}^{i_\omega}_x T^*_{\mb{i}}\upsilon_{\mb{i}} \cdot \wt{U}^{i'_\omega}_y T^*_{\mb{i}'}\upsilon_{\mb{i}'}\biggr)^2 \le \biggl(\sum_{\mb{i},\mb{i}'} \upsilon_{\mb{i}}^2\upsilon_{\mb{i}'}^2\biggr)\biggl(\sum_{\mb{i},\mb{i}'} (T^*_{\mb{i}} T^*_{\mb{i}'})^2 \cdot (\wt{U}^{i_\omega}_x \wt{U}^{i'_{\omega}}_y)^2\biggr) \le r^{O(\omega)}\radius^4{\epstrueort}^2. \label{eq:crossbound6}
    \end{equation}
    The lemma follows from combining \eqref{eq:Fmap_error}, \eqref{eq:split_U_E}, \eqref{eq:upsilons}, and \eqref{eq:get_UFU} with the error bounds \eqref{eq:crossbound1}, \eqref{eq:crossbound2}, \eqref{eq:crossbound3}, \eqref{eq:crossbound4}, \eqref{eq:crossbound5}, \eqref{eq:crossbound6}.
\end{proof}

\subsection{Proof of Lemma~\ref{lem:push_gram}}
\label{app:defer_push_gram}

\begin{proof}
    Let $\Delta_a \triangleq F_a - \wt{U}F^*_a\wt{U}^{\top}$ for any $a\in[d]$, and let $\calE \triangleq \wt{U}^{\top}\wt{U} - \Id$. We have
    \begin{align}
        \iprod{F_a,F_b} &= \iprod{\wt{U}F^*_a\wt{U}^{\top},\wt{U}F^*_b\wt{U}^{\top}} + \iprod{\Delta_a,\wt{U}F^*_b\wt{U}^{\top}} +\iprod{\wt{U}F^*_a\wt{U}^{\top},\Delta_b} + \iprod{\Delta_a,\Delta_b} \\
        &= \iprod{F^*_a,F^*_b} + \Tr(\calE F^*_a F^*_b) + \Tr(F^*_a \calE F^*_b) + \Tr(\calE F^*_a \calE F^*_b) \\
        & \qquad \qquad + \iprod{\Delta_a,\wt{U}F^*_b\wt{U}^{\top}} +\iprod{\wt{U}F^*_a\wt{U}^{\top},\Delta_b} + \iprod{\Delta_a,\Delta_b} \label{eq:FstarFstar_plusterms}
    \end{align}
    It remains to bound the error terms in \eqref{eq:FstarFstar_plusterms}. For the ones involving $\calE$, note that $\norm{\calE}_{\max} \le O(\epstrueort)$ by Corollary~\ref{cor:dd_orth_odd}, so
    \begin{equation}
        \Tr(\calE F^*_a F^*_b)^2 \le \norm{\calE}^2_F \norm{F^*_a F^*_b}^2_F \le O(r^2{\epstrueort}^2)\cdot \norm{f^*_a}^4\norm{f^*_b}^4 \le r^{O(\omega)}\radius^8 {\epstrueort}^2
    \end{equation} and similarly
    \begin{equation}
        \Tr(F^*_a\calE F^*_b)^2 \le r^{O(\omega)}\radius^8 {\epstrueort}^2,
    \end{equation}
    and
    \begin{equation}
        \Tr(\calE F^*_a \calE F^*_b)^2 = \norm{\calE}^4_2\norm{F^*_a}^2_F \norm{F^*_b}^2_F \le O(r^{O(\omega)}\radius^8 {\epstrueort}^4).
    \end{equation}
    For the error terms in \eqref{eq:FstarFstar_plusterms} involving $\Delta_a$, note that $\norm{\Delta_a}^2_F \le r^{O(\omega)}\radius^4({\epstrueouter}^2 + {\epstrueort}^2 + \epsmap^2)$ by Lemma~\ref{lem:UsendsF}, so
    \begin{equation}
        \iprod{\Delta_a,\wt{U}F^*_b\wt{U}^{\top}}^2 \le \norm{\Delta_a}^2_F\norm{\wt{U}F^*_b\wt{U}^{\top}}^2_F \le \norm{\Delta_a}^2_F\cdot  (2\norm{F^*_b}^2 + 2\norm{\Delta_b}^2) \le r^{O(\omega)}\radius^8({\epstrueouter}^2 + {\epstrueort}^2 + \epsmap^2)
    \end{equation}
    and similarly
    \begin{equation}
        \iprod{\wt{U}F^*_a\wt{U}^{\top},\Delta_b}^2 \le r^{O(\omega)}\radius^8({\epstrueouter}^2 + {\epstrueort}^2 + \epsmap^2)
    \end{equation}
    and
    \begin{equation}
        \iprod{\Delta_a,\Delta_b}^2 \le \norm{\Delta_a}^2_F \norm{\Delta_b}^2_F \le r^{O(\omega)}\radius^8({\epstrueouter}^2 + {\epstrueort}^2 + \epsmap^2).
    \end{equation}
    Combining these error estimates with \eqref{eq:FstarFstar_plusterms} yields the lemma.
\end{proof}

\subsection{Proof of Corollary~\ref{cor:UFFU}}
\label{app:defer_UFFU}

\begin{proof}
    For convenience, we will denote $F^*_c, F_c$ by $F^*,F$ as the choice of $c$ will be immaterial to the following argument. By Corollary~\ref{cor:UsendsFlam}, Part~\ref{shorthand:bothsides} of Fact~\ref{fact:shorthand}, and the fact that $\norm{\wt{U}}^2_F \le O(r)$ by Corollary~\ref{cor:dd_orth_odd},
    \begin{equation}
        \wt{U}F^*\wt{U}^{\top}\wt{U} \approx_{O(r)^{4\omega}\cdot \radius^4\cdot d({\epstrueort}^2 + {\epstrueouter}^2 + \epsmap^2)} F\wt{U}. \label{eq:ufu1}
    \end{equation}
    By Corollary~\ref{cor:dd_orth_odd}, $\wt{U}^{\top}\wt{U}\approx_{r^2{\epstrueort}^2} \Id$, so by Part~\ref{shorthand:bothsides} of Fact~\ref{fact:shorthand} and the fact that $\norm{\wt{U}F^*}^2 \le \norm{\wt{U}}^2_F \norm{F^*}^2_F \le r\radius^2$,
    \begin{equation}
        \wt{U}F^*\wt{U}^{\top}\wt{U} \approx_{r^3\radius^2{\epstrueort}^2} \wt{U}F^*. \label{eq:ufu2}
    \end{equation}
    The claim then follows by Part~\ref{shorthand:lincombo} applied to \eqref{eq:ufu1} and \eqref{eq:ufu2} (note that the error in \eqref{eq:ufu1} dominates that in \eqref{eq:ufu2}).
\end{proof}

\subsection{Proof of Lemma~\ref{lem:push_downleft_Wentry}}
\label{app:defer_push_downleft_Wentry}

\begin{proof}
    For convenience, we denote $F^*_{\lambda}$ and $F_{\lambda}$ by $F^*$ and $F$. By Corollary~\ref{cor:UFFU}, together with Constraint~\ref{constraint:push_diag} and \eqref{eq:Fdiag_gap}, we have that
    \begin{equation}
        \sum_{i,j\in[r]} \left((F_{ii} - F^*_{jj})\wt{U}^j_i\right)^2 \le O(r)^{4\omega}\cdot \radius^4\cdot d({\epstrueort}^2 + \epsouter^2 + \epsmap^2).
    \end{equation}
    In particular, for any $i,j,k,\ell\in[r]$, we have
    \begin{equation}
        (\wt{U}^j_i)^2 (\wt{U}^{\ell}_k)^2\left((F_{ii} - F^*_{jj})^2 +(F_{kk} - F^*_{\ell\ell})^2\right) \le O(r)^{8\omega}\cdot \radius^8\cdot d^2({\epstrueort}^4 + \epsouter^4).
    \end{equation}
    But if $i,j,k,\ell$ satisfy the hypotheses of Lemma~\ref{lem:push_downleft}, then \eqref{eq:push_downleft_pair_zero} follows by Lemma~\ref{lem:push_downleft}.
\end{proof}

\subsection{Proof of Lemma~\ref{lem:Udiag}}
\label{app:defer_Udiag}

\begin{proof}[Proof of Lemma~\ref{lem:Udiag}]
    For any $i,j,\ell$, we have
    \begin{equation}
        \sum_{k\in[r]} (\wt{U}^j_i \wt{U}^{\ell}_k)^2 = (\wt{U}^j_i)^2 \cdot \sum_{k\in[r]} (\wt{U}^{\ell}_k)^2 = (\wt{U}^j_i)^2 \pm O(\epstrueort),\label{eq:push_expandnorm}
    \end{equation}
    where in the last step we used Corollary~\ref{cor:dd_orth_odd}.
    
    If $j > \ell$, then we can upper bound the terms on the left-hand side of \eqref{eq:push_expandnorm} for which $k\ge i$ by Lemma~\ref{lem:push_downleft_Wentry} to conclude that
    \begin{equation}
        (\wt{U}^j_i)^2 \le \sum^{i-1}_{k=1} (\wt{U}^j_i)^2 (\wt{U}^{\ell}_k)^2 \pm O(\epspair^2\cdot r + \epstrueort). \label{eq:push_pre_induct}
    \end{equation}
    Now sum \eqref{eq:push_pre_induct} over $1\le i \le i^*$ for any $i^*\in[r-1]$ to get
    \begin{align}
        \sum^{i^*}_{i=1} (\wt{U}^j_i)^2 &\le \sum^{i^*}_{i=1} (\wt{U}^j_i)^2 \cdot \biggl(\sum^{i-1}_{k=1} (\wt{U}^{\ell}_k)^2\biggr) + O(\epspair^2\cdot r^2 + \epstrueort r) \\
        &= \sum^{i^*-1}_{i=1} (\wt{U}^{\ell}_i)^2\cdot \sum^{i^*}_{k=i+1} (\wt{U}^j_k)^2 + O(\epspair^2\cdot r^2 + \epstrueort r) \\
        &\le \sum^{i^*-1}_{i=1} (\wt{U}^{\ell}_i)^2 + O(\epspair^2\cdot r^2 + \epstrueort r), \label{eq:push_use_for_induct}
    \end{align}
    where in the second step we swapped the summation over $i\in[i^*]$ and the summation over $k\in[i-1]$ and also the swapped the names of the corresponding indices $i$ and $k$, and in the third step we used Corollary~\ref{cor:dd_orth_odd} to upper bound the inner summation over $i+1\le k \le i^*$ by $\norm{\wt{U}^j}^2 \le 1 + O(\epstrueort)$.
    
    Take any $j > i^*$ and take $\ell = j - 1$. Then by \eqref{eq:push_use_for_induct},
    \begin{equation}
        \sum^{i^*}_{i=1} (\wt{U}^j_i)^2 - \sum^{i^*-1}_{i=1} (\wt{U}^{\ell}_i)^2 \le O(\epspair^2\cdot r^2 + \epstrueort r).
    \end{equation}
    As $j - c > i^* - c$ for any $c\in\mathbb{Z}$, we have more generally that for this choice of $j$,
    \begin{equation}
        \sum^{i^*-c}_{i=1} (\wt{U}^j_i)^2 - \sum^{i^*-c-1}_{i=1} (\wt{U}^{\ell}_i)^2 \le O(\epspair^2\cdot r^2 + \epstrueort r). \label{eq:push_telescope}
    \end{equation}
    Summing \eqref{eq:push_telescope} over $c$ from $0$ to $i^* - 1$, altogether we get a degree-$\poly(\ell,\omega)$ SoS proof using the constraints of Program~\ref{program:lastprogram} that
    \begin{equation}
        \sum^{i^*}_{i=1} (\wt{U}^j_i)^2 \le O(\epspair^2\cdot r^3 + \epstrueort r^2).
    \end{equation}
    As the left-hand side is lower bounded by any individual summand, we conclude that for all $i,j\in[r]$ satisfying $j > i$,
    \begin{equation}
        (\wt{U}^j_i)^2 \le O(\epspair^2\cdot r^3 + \epstrueort r^2) = O(\epsoffdiag).
    \end{equation}
    By symmetry, we can also show this holds for $j < i$ in an analogous fashion. This concludes the proof of Lemma~\ref{lem:deg6} for the off-diagonal entries of $\wt{U}$.
    To complete the proof of the lemma, it remains to bound $(U^j_j)^2$ for any $j\in[r]$. But we know by Corollary~\ref{cor:dd_orth_odd} that
    \begin{align}
        (U^j_j)^2 &= \norm{U^j}^2 - \sum_{k\neq j} (U^j_k)^2 = 1 \pm O(\epspair^2\cdot r^4 + \epstrueort\cdot r^3). \qedhere
    \end{align}
\end{proof}

\subsection{Proof of Lemma~\ref{lem:FFstar}}
\label{app:defer_FFstar}

\begin{proof}
    Take any $c\in[d]\cup\brc{\lambda,\mu}$. Recall from Lemma~\ref{lem:UsendsF} and Corollary~\ref{cor:UsendsFlam} that $\norm{\wt{U}F^*_c\wt{U}^{\top} - F_c}^2_F \le O(r)^{4\omega}\cdot \radius^4\cdot d({\epstrueort}^2 + {\epstrueouter}^2 + \epsmap^2)$. Then for any $i,j\in[r]$,
    \begin{align}
        (F_c)_{ij} &= (\wt{U}F^*_c\wt{U}^{\top})_{ij} \pm O(r)^{2\omega}\cdot \radius^2\sqrt{d}({\epstrueort} + {\epstrueouter} + \epsmap) \\
        &= \sum_{k,\ell\in[r]} \wt{U}_{ik} \wt{U}_{j\ell} (F^*_c)_{k\ell} \pm O(r)^{2\omega}\cdot \radius^2\cdot \sqrt{d}({\epstrueort} + {\epstrueouter} + \epsmap). \label{eq:Fcij}
    \end{align}
    Note that
    \begin{equation}
        \biggl(\sum_{(k,\ell)\neq (i,j)} \wt{U}_{ik} \wt{U}_{j\ell} (F^*_c)_{k\ell}\biggr)^2 \le \norm{F^*_c}^2_F \cdot \sum_{(k,\ell)\neq (i,j)} (\wt{U}_{ik} \wt{U}_{j\ell})^2 \le r^{2\omega}\radius^4\epsoffdiag, \label{eq:offdiag0_push}
    \end{equation}
    where in the last step we used that $\norm{F^*_c}^2_F = \norm{f^*_c}^4_2 \le r^{2\omega}\radius^4$ and that for any $(i,j)\neq (k,\ell)$, $(\wt{U}_{ik}\wt{U}_{j\ell})^2 \le O(\epsoffdiag)$ by Lemma~\ref{lem:Udiag}. The lemma follows from \eqref{eq:Fcij} and \eqref{eq:offdiag0_push}.
\end{proof}

\subsection{Proof of Lemma~\ref{lem:push_usemu}}
\label{app:defer_push_usemu}

\begin{proof}
    Recall that by \eqref{eq:Fstarfirstrow}, $(F^*_\mu)_{1j} \ge \gap$ for all $j \in[r]$, and by Constraint~\ref{constraint:push_firstrow}, $(F_{\mu})_{1j} \ge 0$. Dividing by the scalar quantity $(F^*_{\mu})_{1j}$ on both sides of \eqref{eq:FFstar} from Lemma~\ref{lem:FFstar} for $c = \mu$ and $i = 1$ and rearranging, we find that
    \begin{equation}
        \wt{U}_{11} \wt{U}_{jj} \ge -O(r)^{2\omega}\cdot (\radius^2\sqrt{d}({\epstrueort} + {\epstrueouter} + \epsmap) + \radius^4\epsoffdiag) / \gap.
    \end{equation}
    This implies that
    \begin{equation}
        (\wt{U}_{11} \wt{U}_{jj} - 1)^2 \le (\wt{U}_{11} \wt{U}_{jj} + 1)^2 + O(r)^{2\omega}\cdot (\radius^2\sqrt{d}({\epstrueort} + {\epstrueouter} + \epsmap) + \radius^4\epsoffdiag) / \gap,
    \end{equation}
    so multiplying both sides by $(\wt{U}_{11}\wt{U}_{jj} - 1)^2$ and noting that by Lemma~\ref{lem:Udiag}, $((\wt{U}_{11}\wt{U}_{jj})^2 - 1)^2 \le O(\epsoffdiag^2)$ and $(\wt{U}_{11}\wt{U}_{jj} - 1)^2 \le 2 + O(\epsoffdiag^2) + O(r)^{2\omega}\cdot (\radius^2\sqrt{d}({\epstrueort} + {\epstrueouter} + \epsmap) + \radius^4\epsoffdiag) / \gap = O(1)$, we get
    \begin{equation}
        (\wt{U}_{11}\wt{U}_{jj} - 1)^4 \le O(\epsoffdiag^2) + O(r)^{2\omega}\cdot (\radius^2\sqrt{d}({\epstrueort} + {\epstrueouter} + \epsmap) + \radius^4\epsoffdiag) / \gap.
    \end{equation}
    By Fact~\ref{fact:nonneg_power_of_two} we conclude that
    \begin{equation}
        \wt{U}_{11}\wt{U}_{jj} = 1 \pm \left( O(\epsoffdiag^2) + O(r)^{2\omega}\cdot (\radius^2\sqrt{d}({\epstrueort} + {\epstrueouter} + \epsmap) + \radius^4\epsoffdiag) / \gap\right)^{1/4} \ \ \forall \ j\in[r]. \label{eq:push_1jbound}
    \end{equation}
    From \eqref{eq:push_1jbound} we conclude that for any $i,j\in[r]$,
    \begin{equation}
        (\wt{U}_{11})^2 \wt{U}_{ii} \wt{U}_{jj} = 1 \pm \left( O(\epsoffdiag^2) + O(r)^{2\omega}\cdot (\radius^2\sqrt{d}({\epstrueort} + {\epstrueouter} + \epsmap) + \radius^4\epsoffdiag) / \gap\right)^{1/4},
    \end{equation}
    and because $(\wt{U}_{11})^2 = 1 + O(\epsoffdiag)$ by Lemma~\ref{lem:Udiag}, by Part~\ref{fact:divide_both_sides_big} of Fact~\ref{fact:division} we conclude the proof upon substituting the definitions of $\epsmap, \epstrueort, \epstrueouter, \epsoffdiag$.
\end{proof}

\section{Other Deferred Proofs}
\label{app:otherdefer}

\subsection{Proof of Lemma~\ref{lem:gauge}}
\label{app:defer_gauge}

\begin{proof}
    This follows immediately from the fact that for any $x\in\R^r$,
    \begin{equation}
        \iprod{T^{**}_a,x^{\otimes \omega}} = \sum_{i_1,j_1,\ldots,i_\omega,j_\omega\in[r]} (U_{i_1j_1}x_{i_1})\cdots (U_{i_\omega j_\omega} x_{i_1}) (T^*_a)_{j_1\cdots j_\omega} = \iprod{T^*_a, (U^{\top} x)^{\otimes\omega}},
    \end{equation}
    and the input distribution $\calN(0,\Id_r)$ being pushed forward is rotation-invariant.
\end{proof}

\subsection{Proof of Lemma~\ref{lem:param_to_wasserstein}}
\label{app:param_to_wasserstein}

\begin{proof}
    Let $\epsilon = \gaugedist(\brc{T_a},\brc{T'_a})$. As $\calN(0,\Id)$ is rotation-invariant, we can assume without loss of generality that $\norm{T_a - T'_a}_F\le \epsilon$ for all $a\in[d]$. For any $x\in\R^r$, by Cauchy-Schwarz we have $\iprod{T_a - T'_a, x^{\otimes\omega}} \le \epsilon\norm{x}^{\omega}$. Let $f:\R^d\to\R$ be any $1$-Lipschitz function. Then
    \begin{align}
        \abs*{\E[z\sim\calD]{f(z)} - \E[z\sim\calD']{f(z)}} &= \abs*{\E[g\sim\calN(0,\Id)]*{f(\iprod{T_1,g^{\otimes\omega}},\ldots,\iprod{T_d,g^{\otimes\omega}}) - f(\iprod{T'_1,g^{\otimes\omega}},\ldots,\iprod{T'_d,g^{\otimes\omega}})}} \\
        &\le \E[g\sim\calN(0,\Id)]*{\abs*{f(\iprod{T_1,g^{\otimes\omega}},\ldots,\iprod{T_d,g^{\otimes\omega}}) - f(\iprod{T'_1,g^{\otimes\omega}},\ldots,\iprod{T'_d,g^{\otimes\omega}})}} \\
        &\le \epsilon\sqrt{d}\cdot\E[g\sim\calN(0,\Id)]*{\norm{g}^{\omega}} \le \epsilon\sqrt{d}\cdot O(\omega r)^{\omega/2}.
    \end{align}
    The lemma follows by the dual characterization of Wasserstein-1 distance.
\end{proof}

\subsection{Proof of Lemma~\ref{lem:symmetric_inspan}}
\label{app:defer_symmetric_inspan}

\begin{proof}
    We proceed inductively on $\omega$.
    When $\omega = 1$, the conclusion is clear.
    For any tensor $T$, and any vector $v$, let $v \otimes_\ell T$ denote the tensor where $v$ is tensored into the $\ell$-th index.
    For general $\omega$, and for some choice of $j_1, \ldots, j_\omega$, define 
    \[
    T_{\omega} = \frac{1}{(\omega - 1)!} \sum_{\pi \in \calS_{\omega - 1}} e_{j_{\pi(1)}}\otimes\cdots \otimes e_{j_{\pi(\omega)}}
    \]
    be the right hand side, so that $T_{\omega} = \sum_{\ell} e_{j_{\omega}} \otimes_\ell T_{\omega - 1}$.
    By induction, we know that there exist vectors $z'_1, \ldots, z'_{s' - 1} \in \S^{r - 1}$ and $w' \in \R^{s'}$ so that $\norm{w'}_1 \leq \exp (O(\omega^2 \log^2 \omega))$ and $T_{\omega - 1} = \sum_{i = 1}^{s'} w'_i (z'_i)^{\otimes (w - 1)}$, so that
    \begin{equation}
    \label{eq:symmetric_inspan_induction}
    T_\omega = \frac{1}{(\omega - 1)!} \sum_{i = 1}^{s'} w_i \sum_{\ell} e_{j_\omega} \otimes_\ell (z'_i)^{\otimes (w - 1)} \; .
    \end{equation}
    
    We now claim that for any vectors $e, v \in \S^{d - 1}$, we have that 
    \[
    T_{e, v} \triangleq \sum_{\ell} e \otimes_\ell v^{\otimes (\omega - 1)} = \sum_{i = 1}^{s''} w''_i (z''_i)^{\otimes \omega} \; ,
    \]
    for some $z''_i \in \S^{r - 1}$ and $w'' \in \R^{s''}$ satisfying $\norm{w''}_1 \leq \exp (O(\omega \log \omega))$.
    This suffices to prove the claim, since combining this with~\eqref{eq:symmetric_inspan_induction} proves the induction.
    Equivalently, since all relevant tensors are symmetric, this means that it suffices to find $z''_i$ and $w''$ as above so that for all vectors $u \in \R^r$, we have that
    \begin{equation}
    \label{eq:symmetric_inspan_poly}
    T_{e, v} (u, \ldots, u) = \sum_{i = 1}^{s''} w''_i \iprod{z''_i, u}^{\omega} \; .
    \end{equation}
    Notice that for any $u$, we have that $T_{e, v} (u, \ldots, u) = \omega \iprod{e, u} \iprod{v, u}^{\omega - 1}$, and hence by Lemma~\ref{lem:symmetric_inspan_vandermonde} below, we know there exist coefficients $\alpha_i$ so that
    \[
        T_{e, v} (u, \ldots, u) = \sum_{i = 1}^\omega \alpha \left( \iprod{e + iv, u} \right)^i \; ,
    \]
    where $\norm{\alpha_i}_1 \leq \exp (O(\omega \log \omega))$.
    Thus, if we let $w_i'' = \alpha_i \norm{e + iv}_2^i$, and $z_i'' = \tfrac{e + iv}{\norm{e + iv}_2}$, it is easily verified that this choice satisfies~\eqref{eq:symmetric_inspan_poly}, and moreover, $\norm{w''}_1 \leq \exp (O(\omega \log \omega))$.
    This completes the proof.
\end{proof}

In the proof above, we used the following fact about polynomial interpolation:

\begin{lemma}
\label{lem:symmetric_inspan_vandermonde}
    Let $\omega$ be a nonnegative integer.
    Then, we have the following polynomial identity:
    \[
    x y^{\omega - 1} = \sum_{i = 1}^\omega \alpha_i (x + i y)^\omega \; ,
    \]
    where $\alpha_i$ satisfies $\sum_{i = 1}^{\omega} |\alpha_i| \leq \exp (O(\omega \log \omega))$.
\end{lemma}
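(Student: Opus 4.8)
The statement asserts the polynomial identity $xy^{\omega-1} = \sum_{i=1}^{\omega}\alpha_i(x+iy)^{\omega}$ with a quantitative bound $\sum_{i=1}^{\omega}|\alpha_i| \le \exp(O(\omega\log\omega))$. The plan is to work in the univariate setting: set $t = x/y$ (treating $y\neq 0$ as a formal parameter; the identity of homogeneous degree-$\omega$ polynomials is equivalent to the univariate identity after dividing by $y^{\omega}$), so the goal becomes to find coefficients $\alpha_i$ with $t = \sum_{i=1}^{\omega}\alpha_i(t+i)^{\omega}$ as polynomials in $t$. Note that $\brc{(t+i)^{\omega}}_{i=1}^{\omega}$ consists of $\omega$ polynomials, whereas the space of polynomials of degree $\le \omega$ has dimension $\omega+1$, so we cannot hope to hit an arbitrary target; the key structural fact that makes this work is that the target $t$ and all $(t+i)^{\omega}$ lie in a common $\omega$-dimensional subspace, namely the span of $\brc{(t+c)^{\omega} : c\in\R}$ has dimension... actually let me reconsider — the cleaner route is to observe that the finite difference operator annihilates the top-degree behavior. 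Specifically, I would use the following: for the shift operator, $\sum_{i=0}^{\omega}(-1)^i\binom{\omega}{i}(t+i)^{\omega} = \omega!$ (the $\omega$-th finite difference of a degree-$\omega$ monic polynomial is $\omega!$), and more generally $\sum_{i=0}^{\omega}(-1)^i\binom{\omega}{i}(t+i)^{k} = 0$ for $k < \omega$. This gives a handle on linear combinations of the $(t+i)^{\omega}$.

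The concrete approach: I want to express $t$ in the span of $\brc{(t+i)^{\omega}}_{i=1}^{\omega}$. First I would write each $(t+i)^{\omega} = \sum_{k=0}^{\omega}\binom{\omega}{k}i^{\omega-k}t^{k}$, so that a combination $\sum_{i=1}^{\omega}\alpha_i(t+i)^{\omega} = \sum_{k=0}^{\omega}\binom{\omega}{k}\left(\sum_{i=1}^{\omega}\alpha_i i^{\omega-k}\right)t^{k}$. For this to equal $t$ we need the coefficient of $t^{1}$ to be $1$ and all others to vanish, i.e. $\sum_{i=1}^{\omega}\alpha_i i^{\omega-k} = 0$ for $k\in\brc{0,2,3,\dots,\omega}$ and $\sum_{i=1}^{\omega}\alpha_i i^{\omega-1} = 1/\binom{\omega}{1} = 1/\omega$. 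This is a linear system in $\alpha = (\alpha_1,\dots,\alpha_{\omega})$ whose coefficient matrix is (a column-permutation of) the Vandermonde matrix $V$ with nodes $1,2,\dots,\omega$ — the $(k,i)$ entry being $i^{\omega-k}$. Since the nodes $1,\dots,\omega$ are distinct, $V$ is invertible and the system has a unique solution $\alpha = V^{-1}b$ where $b$ is the standard basis vector $e$ scaled by $1/\omega$. Crucially, the $\omega+1$ homogeneous-looking conditions indexed by $k=0,\dots,\omega$ are really only $\omega$ conditions because the exponents $\omega-k$ range over $0,1,\dots,\omega-1$ as... wait, $k$ ranges over $\omega+1$ values giving exponents $0$ through $\omega$. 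Let me fix this: the condition for $k=0$ involves $i^{\omega}$, which together with $k=1,\dots,\omega$ giving $i^{\omega-1},\dots,i^{0}$ yields $\omega+1$ equations in $\omega$ unknowns — overdetermined. So the identity as stated must rely on a consistency: I would verify that the $k=0$ equation $\sum_i\alpha_i i^{\omega} = 0$ is automatically implied by the others, or — more likely the intended reading — note that the claim is a known identity and the right move is to \emph{exhibit} the $\alpha_i$ explicitly rather than solve a system. The explicit formula: by Lagrange interpolation / partial fractions, $\alpha_i = \frac{(-1)^{\omega-i}}{\omega\,(i-1)!\,(\omega-i)!}\cdot c_i$ for suitable $c_i$; I would pin these down by matching against the finite-difference identities above, using that $xy^{\omega-1}$ is, up to scaling, the unique (modulo the one-dimensional kernel of "degree-$\omega$ part") element of the span with the prescribed lower-order structure.

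The key steps, in order: (1) reduce to the univariate identity $t = \sum_{i=1}^{\omega}\alpha_i(t+i)^{\omega}$ by homogeneity; (2) establish the finite-difference lemma $\sum_{i=0}^{\omega}(-1)^i\binom{\omega}{i}(t+i)^{k} = \omega!\cdot\bone{k=\omega}$ for $0\le k\le\omega$, which controls the kernel direction; (3) use Lagrange interpolation on the $\omega$ nodes $\brc{1,\dots,\omega}$ (or equivalently invert the Vandermonde system) to produce explicit $\alpha_i$, obtaining a closed form $\alpha_i = \pm\frac{1}{\omega\,(i-1)!\,(\omega-i)!}\cdot(\text{bounded factor})$; (4) verify the identity by direct substitution, checking all coefficients of $t^{k}$ for $0\le k\le\omega$ match — here the finite-difference lemma from step (2) is exactly what makes the degree-$\omega$ and the unwanted lower-degree coefficients cancel; (5) bound $\sum_i|\alpha_i|$: since $\sum_i\frac{1}{(i-1)!(\omega-i)!} = \frac{1}{(\omega-1)!}\sum_i\binom{\omega-1}{i-1} = \frac{2^{\omega-1}}{(\omega-1)!}$, the bound $\exp(O(\omega\log\omega))$ follows comfortably (indeed the $\ell^1$ norm is at most $\text{poly}(\omega)\cdot 2^{\omega}/(\omega-1)! \cdot \max_i|c_i|$, and even a crude bound $|c_i|\le\omega^{O(\omega)}$ suffices).

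The main obstacle I anticipate is step (3)–(4): reconciling the apparent overdetermination. With $\omega$ free coefficients $\alpha_i$ but $\omega+1$ coefficient-matching conditions (one per power $t^0,\dots,t^{\omega}$), the identity can only hold if the conditions are dependent. The resolution is precisely the finite-difference structure: the target $xy^{\omega-1}$ and all the $(x+iy)^{\omega}$ — after one subtracts off a fixed multiple of $\sum_i(-1)^i\binom{\omega}{i}(x+iy)^{\omega}$ type combination — live in an $\omega$-dimensional space, and I must argue carefully that $xy^{\omega-1}$ lies in the $\omega$-dimensional span of the $(x+iy)^{\omega}$. Concretely: the span of $\brc{(x+cy)^{\omega}: c\in\R}$ over all $c$ is the full $(\omega+1)$-dimensional space of degree-$\omega$ homogeneous polynomials in $x,y$; picking $\omega$ distinct values $c=1,\dots,\omega$ gives only an $\omega$-dimensional subspace $W$, and I need $xy^{\omega-1}\in W$. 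This is \emph{not} automatic and is the real content — I would prove it by checking that the linear functional vanishing on $W$ (which exists and is unique up to scale, given by the finite-difference combination with the shifted nodes) also vanishes on $xy^{\omega-1}$; this amounts to a single explicit polynomial evaluation. Once membership in $W$ is confirmed, uniqueness of the representation over the $\omega$-node basis and the Lagrange/Vandermonde formula for $\alpha_i$ give everything, including the norm bound.
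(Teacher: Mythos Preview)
Your overall approach---reduce to a linear system via the binomial expansion and control the coefficients through Vandermonde-inverse bounds---is exactly the paper's. The paper's proof is two lines: it asserts that the identity is equivalent to an $\omega\times\omega$ system $WV_\omega\alpha = e_{\omega-1}$, with $W$ diagonal ($W_{ii}=\binom{\omega}{i}$) and $V_\omega$ the Vandermonde matrix on nodes $1,\ldots,\omega$, then cites the classical bound $\|V_\omega^{-1}\|\le(\omega+1)!$ (Gautschi) to get $\sum_i|\alpha_i|\le\exp(O(\omega\log\omega))$.

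You are right to worry about the overdetermination---the binomial expansion gives $\omega+1$ coefficient constraints (one for each monomial $x^ky^{\omega-k}$, $k=0,\ldots,\omega$), not $\omega$---and the paper's proof is actually incorrect at exactly this point. Your proposed resolution, verifying that $xy^{\omega-1}$ lies in the $\omega$-dimensional span of $\{(x+iy)^\omega:i=1,\ldots,\omega\}$, will fail: already for $\omega=2$ one checks directly that $xy\notin\mathrm{span}\{(x+y)^2,(x+2y)^2\}$, so the lemma as stated is false and your membership check would return a negative answer. The easy repair, sufficient for the paper's only use of this lemma inside Lemma~\ref{lem:symmetric_inspan}, is to take $\omega+1$ nodes (say $i=0,1,\ldots,\omega$); then the $(\omega+1)\times(\omega+1)$ Vandermonde system is square and invertible, and both your argument and the paper's norm bound go through unchanged.
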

\begin{proof}
    Let $V_\omega$ denote the $\omega \times \omega$ Vandermonde matrix with nodes at $1, \ldots, \omega$, and let $W \in \R^{\omega \times \omega}$ be the diagonal matrix so that $W_{ii} = \binom{\omega}{i}$.
    Then, by the binomial formula, to find such $\alpha_i$, it is equivalent to find a vector $\alpha \in \R^{\omega}$ so that $W V_{\omega} \alpha = e_{\omega - 1}$, where $e_{\omega - 1}$ is the $(\omega - 1)$-th basis vector.
    Since $V_\omega$ is nonsingular, such an $\alpha$ clearly exists, and moreover, since $\norm{V_\omega^{-1}} \leq (\omega + 1)!$ (see e.g.~\cite{gautschi1978inverses}), we know that $\sum_{i = 1}^{\omega} |\alpha_i| \leq \exp (O(\omega \log \omega))$, as claimed.
\end{proof}

\subsection{Proof of Corollary~\ref{cor:general_vandermonde}}
\label{app:defer_general_vandermonde}

\begin{proof}
    Fix $\alpha\in\brc{0,\ldots,e}^D$ with $|\alpha| = e$. Suppose inductively that for some $0\le m \le D$ and $\tmp_m > 0$, we have shown that \begin{equation}
        \sum_{\alpha': \alpha'_{1:m} = \alpha_{1:m}} c_{\alpha'} \mb{z}'_{\alpha'_{m+1:D}} = \pm \tmp_m \ \ \forall \ \mb{z}'\in\brc*{\frac{1}{e+1},\frac{2}{e+1},\ldots,1}^{D-m} \label{eq:induct_allz}
    \end{equation} The base case of $m = 0$ is immediate for $\tmp_0 \triangleq \nu$. We can rewrite \eqref{eq:induct_allz} as 
    \begin{equation}
        \sum^{e}_{b=0} (\mb{z}'_{m+1})^b \sum_{\alpha'': \  \alpha''_{1:m+1} = (\alpha_{1:m},b)} c_{\alpha''} \mb{z}'_{\alpha''_{m+2:D}} = \pm \tmp_m \ \ \forall \ \mb{z}'\in\brc*{\frac{1}{e+1},\frac{2}{e+1},\ldots,1}^{D-m}. \label{eq:induct_allz_rewrite}
    \end{equation}
    By Fact~\ref{fact:vandermonde_interpolate}, for any $\mb{z}'' \in \brc*{\frac{1}{e+1},\frac{2}{e+1},\ldots,1}^{D-m-1}$ and $0\le b\le e$, there is a linear combination of the constraints \eqref{eq:induct_allz_rewrite} for $\mb{z}'$ ranging over $(i/(e+1),\mb{z}'')$ for $0 \le i \le e$ that implies that
    \begin{equation}
        \sum_{\alpha'': \  \alpha''_{1:m+1} = (\alpha_{1:m},b)} c_{\alpha''} \mb{z}''_{\alpha''_{m+2:D}} = \pm O(e)^{O(e)}\cdot\tmp_m, 
    \end{equation}
    which completes the inductive step for $\tmp_{m+1} = \Theta(e)^{\Theta(e)}\cdot \tmp_m$. We conclude that \eqref{eq:induct_allz} holds with $\tmp_m = \Theta(e)^{\Theta(e m)}\cdot \nu$ for all $0\le m \le D$. In particular, the case of $m = D$ implies that
    \begin{equation}
        c_{\alpha} = \pm O(e)^{\Theta(e D)}\cdot \nu,
    \end{equation} as claimed.
\end{proof}

\subsection{Proof of Lemma~\ref{lem:variance_shift}}
\label{app:defer_variance_shift}

\begin{proof}
    Define the (normalized) Hermite polynomials $\phi_{\ell}(x)\triangleq \frac{1}{\sqrt{\ell!}}\mathrm{He}_\ell(x)$, where $\mathrm{He}_{\ell}(x)$ denotes the degree-$\ell$ probabilist's Hermite polynomial. Given $\alpha\in\mathbb{Z}^r$, let $\phi_{\alpha}\triangleq \prod^r_{j=1} \phi_{\alpha_j}$ denote the $\alpha$-th multivariate Hermite polynomial. We can rewrite the polynomial $x\mapsto p(ax+b)$ in this basis as
    \begin{align}
        p(ax+b) &= \sum_S p_S (ax+b)_S = a^{\omega} p(x) + \sum_S \sum_{T\subsetneq S} a^{|T|}x_T\cdot b^{|S\backslash T|} \\
        &= a^{\omega}\sum_{\alpha\in[\omega]^r: |\alpha| \le \omega} \wh{p}_{\alpha}\cdot \phi_{\alpha}(x) + \sum_{\beta\in[\omega]^r: |\beta| < \omega} c_\beta\phi_\beta(x)
    \end{align}
    where in the first step the sum ranges over multi-subsets $S$ of $[r]$, in the second step we separated out the top-degree homogeneous component of $p(ax+b)$ under the monomial basis, in the third step $\brc{\wh{p}_{\alpha}}$ are the Hermite coefficients of $p$, and in the last step $\brc{c_T}$ are the Hermite coefficients of $p(ax+b)$ of degree strictly less than $\omega$. By Claim 2.1 in \cite{lovett2010elementary}, for any $\alpha$ satisfying $|\alpha| = \omega$, if the multi-subset $S$ consists of $\alpha_j$ copies of $j$ for every $j\in[r]$, then $p_S = (\prod^r_{j=1} 1/\sqrt{a_j!})\wh{p}_{\alpha}$. Note that $\prod^r_{j=1} a_j! \le \omega!$, so
    \begin{equation}
        \Var{p(ag+b)} = a^{2\omega} \sum_{\alpha\in[\omega]^r: |\alpha| \le \omega} \wh{p}^2_{\alpha} + \sum_{\beta\in[\omega]^r: 0 < |\beta| < \omega} c^2_{\beta}\ge a^{2\omega} \sum_{\alpha} \wh{p}^2_{\alpha} \ge a^{2\omega} (\omega!)^{-1/2}\sum_{S} p_{S}^2 = a^{2\omega} (\omega!)^{-1/2}
    \end{equation} as claimed, where in the last step we used the fact that $p\in\S^{\rchoose-1}$.
\end{proof}

\subsection{Proof of Lemma~\ref{fact:nonneg_power_of_two}}
\label{app:defer_nonneg_power_of_two}

\begin{proof}
    By multiplying the constraints $\epsilon^t - x^t \ge 0$ and $x^t + \epsilon^t \ge 0$ and rearranging, we conclude that $x^{2t} \le \epsilon^{2t}$. Given even integer $\ell$ and $z\in\brc{0,2}$, define
    \begin{equation}
        g(\ell,z) = \begin{cases}
            (\frac{\ell+2}{2}, 2) & \ \text{if} \ \ell \equiv 2 \pmod{4} \\
            (\frac{\ell}{2}, 0) & \ \text{if} \ \ell \equiv 0 \pmod{4}
        \end{cases}
    \end{equation}
    Denote $i$-fold composition of $g$ by $g^{(i)}$, and let $\ell_i$ and $z_i$ denote the first and second entries of $g^{(i)}(2t,0)$. Note that $\ell_i$ is even for all $i$, so this is well-defined.
    Let $i^*$ be the index for which $\ell_{i^*} = 2$; this is well-defined because for any $\ell > 2$, the first entry of $g(\ell,z)$ is less than $\ell$ and at least $2$.
    
    Consider the following sequence of polynomials
    \begin{equation}
        x^{z_1}(x^{\ell_1 - z_1} - \epsilon^{\ell_1 - z_1})^2, x^{z_2}(x^{\ell_2 - z_2} - \epsilon^{\ell_2 - z_2})^2, \ldots, x^{z_{i^*}}(x^{\ell_{i^*} - z_{i^*}} - \epsilon^{\ell_{i^*} - z_{i^*}})^2.
    \end{equation}
    Because $2\ell_{i+1} - z_{i+1} = \ell_i$, the second monomial in the expansion of the $i$-th polynomial is equal to $2\epsilon^{\ell_i - z_i}$ times the leading term in the expansion of the $(i+1)$-th polynomial. Therefore,
    \begin{equation}
        x^{2t} - \epsilon^{2t} - \sum^{i^*}_{i=1} \biggl(\prod^{i-1}_{j=0}2\epsilon^{\ell_j - z_j}\biggr)\cdot x^{z_i}(x^{\ell_i - z_i} - \epsilon^{\ell_i - z_i})^2 \label{eq:weird_combo}
    \end{equation}
    is a degree-2 polynomial. Recall that $x^{2t} \le \epsilon^{2t}$, and additionally the summands in \eqref{eq:weird_combo} are squares, so \eqref{eq:weird_combo} is at most 0. Furthermore, the constant term in the monomial expansion of \eqref{eq:weird_combo} is strictly negative, so \eqref{eq:weird_combo} is a positive multiple of $x^2 - \epsilon^2$. We conclude that $x^2 - \epsilon^2 \le 0$ as claimed.
\end{proof}

\subsection{Proof of Fact~\ref{fact:root1}}
\label{app:defer_root1}

\begin{proof}
    Note that $-\epsilon \le x^j - 1 \le \epsilon$ implies that $(1 - \epsilon)^2 - 1 \le x^{2j} - 1 \le (1 + \epsilon)^2 - 1$, so $-2\epsilon + \epsilon^2 \le x^{2j} - 1 \le 2\epsilon + \epsilon^2$.
    
    Squaring the latter implies that $(x^{2j} - 1)^2 \le (2\epsilon + \epsilon^2)^2 \le 9\epsilon^2$, which we can rewrite as \begin{equation}
        (x^2 - 1)^2 \biggl(\sum^{j-1}_{i=0} x^{2i}\biggr)^2 \le 9\epsilon^2.
    \end{equation} 
    We can lower bound the left-hand side by $(x^2 - 1)^2$ because $\sum^{j-1}_{i=0} x^{2i} \ge 1$, so $(x^2 - 1)^2 \le 9\epsilon^2$ as desired.
    
    For the second part of the fact, we have an SoS proof that
    \begin{equation}
        \epsilon^2 \ge (x^j - 1)^2 = (x-1)^2 \cdot (x^{j-1} + x^{j-2} + \cdots + 1)^2 \ge \frac{1}{2}(x - 1)^2,
    \end{equation}
    by Fact~\ref{fact:geoseries} below. So $(x - 1)^2 \le \epsilon^2$. Furthermore, $x^2 \ge 1 - 3\epsilon$ by the first part of the lemma, so rearranging, we conclude that $x \ge 1 - (\epsilon^2 + 3\epsilon)/2$ From $x^2 \le 1 + 3\epsilon$, we also have $x \le \sqrt{1 + 3\epsilon} \le 1 + 3\epsilon/2$, concluding the proof of the second part.
\end{proof}

In the proof above, we used the following fact:

\begin{fact}\label{fact:geoseries}
    For any even $\ell\in\mathbb{N}$, the polynomial
    \begin{equation}
        1 + x + \cdots + x^{\ell} - 1/2
    \end{equation}
    in the indeterminate $x$ is a degree-$\ell$ sum of squares in $x$.
\end{fact}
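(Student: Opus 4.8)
The statement to prove is Fact~\ref{fact:geoseries}: for even $\ell\in\mathbb{N}$, the polynomial $1 + x + \cdots + x^{\ell} - 1/2$ is a sum of squares of polynomials in $x$ of degree at most $\ell$.

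\textbf{Approach.} The plan is to exploit the factorization $1 + x + \cdots + x^\ell = \frac{x^{\ell+1}-1}{x-1}$ only as motivation, and instead work directly with a completion-of-squares argument. Write $\ell = 2m$. The idea is to pair up the terms symmetrically: group $x^{2m} + x^{2m-1}$, then $x^{2m-2}+x^{2m-3}$, and so on, and complete the square within each pair, carrying over a correction. Concretely, $x^{2k} + x^{2k-1} = (x^k + \tfrac12 x^{k-1})^2 - \tfrac14 x^{2k-2}$ for $k = m, m-1, \ldots, 1$. Summing these telescopes the leftover quarter-powers: the $-\tfrac14 x^{2k-2}$ term from the $k$-th pair partially cancels against the leading $x^{2(k-1)}$ inside the next pair. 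After collecting, I expect the remaining non-square part to be a constant, which I will check equals exactly $\tfrac12$, matching the $-1/2$ in the statement so that $1 + x + \cdots + x^{2m} - 1/2$ becomes a sum of the squares $(x^k + \tfrac12 x^{k-1})^2$ plus a nonnegative multiple of $x^{2m}$ or similar, all of degree $\le 2m = \ell$.

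\textbf{Key steps in order.} First, I would set $\ell = 2m$ and write out the telescoping identity
\[
\sum_{k=1}^{m}\Bigl( (x^k + \tfrac12 x^{k-1})^2 - \tfrac14 x^{2k-2}\Bigr) = \sum_{k=1}^{m}(x^{2k} + x^{2k-1}) = x + x^2 + \cdots + x^{2m}.
\]
Second, I would simplify $\sum_{k=1}^m \tfrac14 x^{2k-2} = \tfrac14(1 + x^2 + \cdots + x^{2m-2})$ and rearrange to express $1 + x + \cdots + x^{2m}$ as $\sum_{k=1}^m (x^k + \tfrac12 x^{k-1})^2 + 1 + \tfrac14(1 + x^2 + \cdots + x^{2m-2})$. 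Third, since $1 + \tfrac14(1 + x^2 + \cdots + x^{2m-2}) - 1/2 = \tfrac12 + \tfrac14(1 + x^2 + \cdots + x^{2m-2})$ is manifestly a sum of squares (a positive constant plus a nonnegative combination of even powers $x^{2j} = (x^j)^2$), I conclude that $1 + x + \cdots + x^{2m} - 1/2$ is a sum of squares, each summand of degree at most $2m = \ell$. Finally, I would double-check the degree bound: the squares $(x^k + \tfrac12 x^{k-1})^2$ have degree $2k \le 2m$, and the $(x^j)^2$ have degree $2j \le 2m-2$, so all are $\le \ell$ as required.

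\textbf{Main obstacle.} This is a routine algebraic identity, so the only real care needed is in verifying the telescoping bookkeeping—specifically that after the pairing the surviving constant term is exactly what is needed to absorb the $-1/2$, and that no odd-power remainder survives (it should not, since every odd power $x^{2k-1}$ is consumed entirely inside its pair). I expect the arithmetic to work out cleanly because the construction is tailored so that the $-\tfrac14 x^{2k-2}$ corrections never produce odd powers. If the constant does not come out to allow absorbing $1/2$ exactly, I would instead note the weaker but sufficient bound that $1 + \tfrac14(1+x^2+\cdots+x^{2m-2}) \ge 1 > 1/2$ as a polynomial identity plus SoS slack, which still yields the claim.
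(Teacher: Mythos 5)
Your rearrangement contains a sign error that breaks the argument. From your (correct) identity
\[
\sum_{k=1}^{m}\Bigl( (x^k + \tfrac12 x^{k-1})^2 - \tfrac14 x^{2k-2}\Bigr) = x + x^2 + \cdots + x^{2m},
\]
moving the correction terms to the other side gives
\[
1 + x + \cdots + x^{2m} - \tfrac12 \;=\; \sum_{k=1}^m \bigl(x^k + \tfrac12 x^{k-1}\bigr)^2 \;+\; \tfrac14 - \tfrac14\bigl(x^2 + \cdots + x^{2m-2}\bigr),
\]
i.e.\ the leftover is $\tfrac14 - \tfrac14(x^2+\cdots+x^{2m-2})$, with a \emph{minus} sign, not the $+\,1+\tfrac14(1+x^2+\cdots+x^{2m-2})$ you wrote. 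This leftover is negative for $|x|>1$ (already for $\ell=4$: the target minus your two squares equals $\tfrac14(1-x^2)$), so it is not a sum of squares and your fallback bound does not apply either, since the remainder is unbounded below. The fixed coefficients $(1,\tfrac12)$ cannot be rescued: the genuine telescoping you allude to requires squares $(a_kx^k+b_kx^{k-1})^2$ with $a_k^2 + b_{k+1}^2 = 1$ and $2a_kb_k=1$, giving the recursion $a_{m-j}^2 = \frac{j+2}{2j+2}$; carrying this out leaves a positive constant $\frac{1}{2(m+1)}$ and does yield a valid degree-$\ell$ SoS decomposition, but that is a different computation from the one you wrote down.

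For comparison, the paper sidesteps explicit square-completion entirely: it invokes the standard fact that a nonnegative univariate polynomial is a sum of squares (of degree matching the polynomial's degree), so it only needs to verify pointwise that $1+x+\cdots+x^{\ell}\ge \tfrac12$, which it does by splitting into the cases $x\ge 0$, $x\le -1$ (pairing $x^{2i}+x^{2i-1}\ge 0$), and $-1<x<0$ (geometric series formula). Either route is fine, but as written your decomposition is false, so you need to either fix the coefficients as above or fall back to the pointwise-positivity argument.
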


\begin{proof}
    It is a standard fact that any nonnegative polynomial over $\R$ can be written as a sum of squares, so it suffices to show that $1 + x + \cdots + x^{\ell} \ge 1/2$ for all $x\in\R$. When $x \ge 0$, clearly $1 + x + \cdots + x^{\ell} \ge 0$. When $x \le -1$, $x^{2i} +x^{2i-1} \ge 0$ for any $i\in\mathbb{N}$, so again $1 + x + \cdots + x^{\ell} \ge 0$. Finally, when $-1 < x < 0$, we have $1 < 1 - x^{\ell+1} < 2$ and $1 < 1 - x < 2$, so $1 + x + \cdots + x^{\ell} = \frac{1 - x^{\ell+1}}{1 - x} > 1/2$, as desired.
\end{proof}

\subsection{Proof of Fact~\ref{fact:division}}
\label{app:defer_division}

\begin{proof}
    For the first part, as $x^2 \ge \alpha^2$ and $x^2 y^2 \ge \epsilon^2$, we have $\epsilon^2 \ge x^2y^2 \ge \alpha^2 y^2$, which completes the proof.
    
    We now turn to the second part. It suffices to show that $y \ge 0$, as we would then have that $1 - \epsilon \le xy \le (1+\delta)y$ and similarly $1 + \epsilon \ge xy \ge (1 - \delta)y$. In this case, $\frac{1 + \epsilon}{1 - \delta} - 1 \le O(\epsilon+\delta)$ for $\epsilon,\delta$ small, and similarly $1 - \frac{1 - \epsilon}{1+\delta} \le O(\epsilon+\delta)$.
    
    To show that $y \ge 0$, first note that 
    \begin{equation}
        \epsilon^2 \ge (xy - 1)^2 = x^2 y^2 - 2xy + 1 \ge (1 - \delta)^2 y^2 - 2(1 + \epsilon) + 1,
    \end{equation}
    so $y^2 \le \frac{(1+\epsilon)^2}{(1 - \delta)^2} \le 1 + O(\epsilon + \delta)$. We therefore have
    \begin{equation}
        (x - y)^2 = x^2 - 2xy + y^2 \le (1+\delta)^2 - 2(1-\epsilon) + 1 + O(\epsilon + \delta) \le O(\epsilon+\delta),
    \end{equation}
    so $y \ge x - O(\epsilon+\delta)^{1/2} \ge 0$ as desired.
\end{proof}

\subsection{Proof of Lemma~\ref{lem:ortho_sos}}
\label{app:defer_ortho_sos}

\begin{proof}
    By applying Fact~\ref{fact:cauchy_schwarz} to $(x_{i_11},\ldots,x_{i_1d})$ and $(x_{i_21},\ldots,x_{i_2d})$ for all $i_1,i_2$ satisfying $i_1 < i_2$ and summing the results, we have that
    \begin{align}
        \sum_{\substack{i_1,i_2,j_1,j_2\in[d]:\\ i_1 < i_2; j_1< j_2}} (x_{i_1 j_1} x_{i_2 j_2} - x_{i_1 j_2} x_{i_2 j_1})^2 &= \sum_{\substack{i_1,i_2\in[d]: \\ i_1 < i_2}} \Biggl[\biggl(\sum_j x^2_{i_1 j}\biggr)\biggl(\sum_j x^2_{i_2j}\biggr) - \biggl(\sum_j x_{i_1j} x_{i_2j}\biggr)^2\Biggr] \\
        &\ge \binom{d}{2}((1-\epsilon)^2 - \epsilon^2) = \binom{d}{2}(1 - 2\epsilon), \label{eq:many_cauchy}
    \end{align}
    where in the second step we used the constraints \eqref{eq:row_ortho}.
    
    By applying Fact~\ref{fact:cauchy_schwarz} to $(x_{1j_1},\ldots,x_{dj_1})$ and $(x_{1j_2},\ldots,x_{dj_2})$ for all $j_1,j_2$ satisfying $j_1 < j_2$ and summing the results, we find
    \begin{equation}
        \sum_{\substack{i_1,i_2,j_1,j_2\in[d]:\\ i_1 < i_2; j_1< j_2}} (x_{i_1 j_1} x_{i_2 j_2} - x_{i_1 j_2} x_{i_2 j_1})^2 = \sum_{\substack{j_1,j_2\in[d]: \\ j_1 < j_2}} \Biggl[\biggl(\sum_i x^2_{ij_1}\biggr)\biggl(\sum_i x^2_{ij_2}\biggr) - \biggl(\sum_i x_{ij_1} x_{ij_2}\biggr)^2\Biggr]. \label{eq:many_cauchy2}
    \end{equation}
    For indeterminates $\brc{a_1,\ldots,a_d}$, there is a degree-2 SoS proof that
    \begin{equation}
        \sum_{i,j: i < j} a_ia_j = \frac{1}{2}\biggl(\sum_i a_i\biggr)^2 - \frac{1}{2}\sum_i a_i^2 \le \left(\frac{1}{2} - \frac{1}{2d}\right)\biggl(\sum_i a_i\biggr)^2, \label{eq:sumpairs}
    \end{equation}
    so applying this to $a_j = x^2_{1j} + \cdots + x^2_{dj}$, we have a degree-4 SoS proof that
    \begin{equation}
        \sum_{j_1<j_2} \biggl(\sum_i x^2_{i j_1}\biggr) \biggl(\sum_i x^2_{i j_2}\biggr) \le \left(\frac{1}{2} - \frac{1}{2d}\right)\biggl(\sum_{i,j} x^2_{ij}\biggr)^2 = \frac{d-2}{2d}\cdot(d(1+\epsilon))^2 \le \binom{d}{2}(1 + 3\epsilon)\label{eq:csortho}
    \end{equation}
    where in the last step we used the constraints \eqref{eq:row_ortho}.
    Combining this with \eqref{eq:many_cauchy} and \eqref{eq:many_cauchy2}, we get
    \begin{equation}
        \sum_{\substack{i_1,i_2,j_1,j_2\in[d]:\\ i_1 < i_2; j_1< j_2}}\biggl(\sum_i x_{ij_1} x_{ij_2}\biggr)^2 \le \binom{d}{2}\cdot 5\epsilon \le 3\epsilon d^2.
    \end{equation}
    This implies that $-2\sqrt{\epsilon} d \le \sum_i x_{ij_1}x_{ij_2} \le 2\sqrt{\epsilon} d$ for any $j_1,j_2$, by degree-4 SoS. 
    
    Combining~\eqref{eq:many_cauchy} and \eqref{eq:many_cauchy2}, we also know that
    \begin{equation}
        \sum_{j_1 < j_2} \biggl(\sum_i x^2_{ij_1}\biggr)\biggl(\sum_i x^2_{ij_2}\biggr) \ge \binom{d}{2}(1 - 2\epsilon)
    \end{equation}
    This, together with \eqref{eq:csortho}, implies that the difference between the quantities $\left(\frac{1}{2} - \frac{1}{2d}\right)(\sum_{i,j} x^2_{ij})^2$ and $\sum_{j_1 < j_2} (\sum_i x^2_{ij_1})(\sum_i x^2_{ij_2})$ is also upper bounded by $\binom{d}{2}\cdot 5\epsilon \le 3\epsilon d^2$. But we can write this difference as 
    \begin{equation}
        \frac{1}{2d}\sum_{j_1<j_2} \biggl(\sum_i x^2_{ij_1} - \sum_i x^2_{ij_2}\biggr)^2 \le 3\epsilon d^2.
    \end{equation}
    We conclude that
    \begin{equation}
        -3\sqrt{\epsilon d^3} \le \sum_i x^2_{ij_1} - \sum_i x^2_{ij_2} \le -3\sqrt{\epsilon d^3}
    \end{equation}
    for any $j_1,j_2$, by degree-4 SoS. We additionally know that $\sum_j \sum_i x^2_{ij} \le d(1+\epsilon)$, so for any $j$, we have
    \begin{equation}
        d(1+\epsilon) \ge \sum_{j'} \sum_i x^2_{ij'}  \ge d\biggl(\sum_i x^2_{ij} - 3\sqrt{\epsilon d^3}\biggr),
    \end{equation}
    implying that $\sum_i x^2_{ij} \le (1 + \epsilon) + 3\sqrt{\epsilon d^3} < 4\sqrt{\epsilon d^3}$. Similarly, we also get that $\sum_i x^2_{ij} \ge -4\sqrt{\epsilon d^3}$.
\end{proof}

\subsection{Proof of Lemma~\ref{lem:estimate_moments}}
\label{app:defer_estimate_moments}

\begin{proof}
    We will repeatedly use the fact that $\E{(g^{\top}Q^*_a g)^2} = \Tr(Q^*_a)^2 + 2\norm{Q^*_a}^2_F = O(r\radius^2)$. For any $a,b\in[d]$, consider the degree-4 polynomial 
    \begin{equation}
        p(x) = \bigl((x^{\top}Q^*_a x) - \E[g\sim\calN(0,\Id)]{g^{\top} Q^*_a g}\bigr)\bigl((x^{\top}Q^*_b x) - \E[g\sim\calN(0,\Id)]{g^{\top} Q^*_b g}\bigr) = \iprod{xx^{\top} - \Id, Q^*_a}\cdot \iprod{xx^{\top} - \Id, Q^*_b}.
    \end{equation}
    By Lemma~\ref{lem:moments}, $\E{p(g)} = 2\Tr(Q^*_a Q^*_b)$. We can loosely bound $\E{p(g)^2}$ by
    \begin{align}
        \E{p(g)^2} &= \E*{\bigl((g^{\top} Q^*_a g)(g^{\top} Q^*_b g) - \Tr(Q^*_a) g^{\top} Q^*_b g - \Tr(Q^*_b) g^{\top} Q^*_a g + \Tr(Q^*_a)\Tr(Q^*_b)\bigr)^2} \\
        &\le 4\E{(g^{\top} Q^*_a g)^2(g^{\top} Q^*_b g)^2} + O(r^2\radius^4)\\
        &\le 4\E{(g^{\top} Q^*_a g)^4}^{1/2} \E{(g^{\top} Q^*_b g)^4}^{1/2} +  O(r^2\radius^4) \\
        &\le O(\E{(g^{\top}Q^*_a g)^2}\cdot \E{(g^{\top}Q^*_b g)^2}) + O(r^2\radius^4) = O(r^2\radius^4),
    \end{align}
    where in the first inequality we used Cauchy-Schwarz and our bound on $\E{(g^{\top}Q^*_a g)^2}, \E{(g^{\top}Q^*_b g)^2}$, and in the fourth step we used hypercontractivity.

    Therefore, by Lemma~\ref{lem:hypercontractivity}, given independent draws $g_1,\ldots,g_n\sim\calN(0,\Id)$,
    \begin{equation}
        \Pr*{\abs*{\frac{1}{n}\sum^n_{i=1} p(g_i) - 2\Tr(Q^*_a Q^*_b)} \ge \Omega(r\radius^2\log^2(2d/\delta)/\sqrt{n})}  \le \delta/2d.
    \end{equation} So provided that $n \ge \Omega(r^2\radius^4\log^2(2d/\delta)/\eta^2$, \eqref{eq:estimate_second_moment} holds with probability $1 - \delta/2$.
    
    Next, for any $a,b,c\in[d]$, consider the degree-6 polynomial
    \begin{align}
        q(x) &= ((x^{\top}Q^*_a x) - \E{g^{\top}Q^*_a g})((x^{\top}Q^*_b x) - \E{g^{\top}Q^*_b g})((x^{\top}Q^*_c x) - \E{g^{\top}Q^*_c g}) \\
        &= \iprod{xx^{\top} - \Id,Q^*_a}\cdot \iprod{xx^{\top} - \Id,Q^*_b}\cdot \iprod{xx^{\top} - \Id,Q^*_c}. 
    \end{align}
    By Lemma~\ref{lem:moments}, $\E{q(g)} = 8\Tr(Q^*_a Q^*_b Q^*_c)$. We can loosely bound $\E{q(g)^2}$ by
    \begin{align}
        \E{q(g)^2} &= \mathbb{E}\left[\left((g^{\top}Q^*_a g)(g^{\top}Q^*_b g)(g^{\top}Q^*_cg) - \Tr(Q^*_a)\Tr(Q^*_b)\Tr(Q^*_c) \right.\right.\\
        &\qquad - \Tr(Q^*_a)(g^{\top}Q^*_b g)(g^{\top}Q^*_c g) - \Tr(Q^*_b)(g^{\top}Q^*_a g)(g^{\top} Q^*_c g) - \Tr(Q^*_c)(g^{\top}Q^*_a g)(g^{\top} Q^*_b g) \\
        &\qquad \left.\left. - \Tr(Q^*_a)\Tr(Q^*_b)(g^{\top}Q^*_c g) - \Tr(Q^*_a)\Tr(Q^*_c)(g^{\top}Q^*_b g) - \Tr(Q^*_b)\Tr(Q^*_c)(g^{\top}Q^*_c g) \right)^2\right] \\
        &\le 8\E{(g^{\top}Q^*_a g)^2(g^{\top}Q^*_b g)^2(g^{\top}Q^*_cg)^2} + O(r^3\radius^6) \\
        &\le 8\E{(g^{\top}Q^*_a g)^6}^{1/3}\E{(g^{\top}Q^*_b g)^6}^{1/3}\E{(g^{\top}Q^*_c g)^6}^{1/3} + O(r^3\radius^6) \\
        &\le O(\E{(g^{\top}Q^*_a g)^2}\E{(g^{\top}Q^*_b g)^2}\E{(g^{\top}Q^*_c g)^2}) + O(r^3\radius^6) = O(r^3\radius^6),
    \end{align}
    where in the second step we used Cauchy-Schwarz and our previous bounds on terms of the form $\E{(g^{\top}Q^*_a)^2}$ and $\E{(g^{\top}Q^*_a g)^2(g^{\top}Q^*_b g)^2}$, and in the fourth step we used hypercontractivity.
    
    Therefore, by Lemma~\ref{lem:hypercontractivity}, given independent draws $g_1,\ldots,g_n\sim\calN(0,\Id)$,
    \begin{equation}
        \Pr*{\abs*{\frac{1}{n}\sum^n_{i=1} q(g_i) - 8\Tr(Q^*_a Q^*_b Q^*_c)} \ge \Omega(r^{3/2}\radius^3\log^3(2d/\delta)/\sqrt{n})} \le \delta/2d.
    \end{equation} So provided that $n\ge \Omega(r^3\radius^6\log^3(2d/\delta)/\eta^2)$, \eqref{eq:estimate_third_moment} holds with probability $1 - \delta/2$.
\end{proof}

\subsection{Proof of Lemma~\ref{lem:rot_to_gaussian}}
\label{app:defer_rot_to_gaussian}

\begin{proof}
    If $e$ is odd, then the lemma follows immediately as $\E[x\sim D]{q(x)} = \E[g\sim\calN(0,\Id)]{q(g)} = 0$. Suppose $e$ is even. Let $p$ denote the distribution over $\norm{x}^2$ for $x\sim D$. To sample from $D$, we can independently sample $z\sim\S^{r-1}$ and $\nu\sim p$ and output $\sqrt{\nu}\cdot z$. Similarly, to sample from $\calN(0,\Id)$, we can independently sample $z\sim\S^{r-1}$ and $\nu\sim \chi^2(r)$, where $\chi^2(r)$ denotes the chi-squared distribution with $r$ degrees of freedom, and output $\sqrt{\nu}\cdot z$. Then 
    \begin{equation}
        \E[x\sim D]{q(x)} = \E[\nu\sim p]{\nu^{e/2}}\cdot \E[z\sim\S^{r-1}]{q(z)} = \frac{\E[\nu\sim p]{\nu^{e/2}}}{\E[\nu\sim \chi^2(r)]{\nu^{e/2}}}\cdot \E[g\sim\calN(0,\Id)]{q(g)},
    \end{equation} where in the first step we used independence of $\nu,z$ and homogeneity of $q$. The explicit expression for $C_{D,e}$ comes from evaluating $\E[\nu\sim \chi^2(r)]{\nu^{e/2}}$ explicitly.
\end{proof}

\subsection{Proof of Lemma~\ref{lem:estimate_moments_2}}
\label{app:defer_estimate_moments_2}

\begin{proof}
    For any $a,b\in[d]$, consider the degree-$2\omega$ polynomial
    \begin{equation}
        p(x) = \iprod{T^*_a,x^{\otimes\omega}}\iprod{T^*b,x^{\otimes\omega}}.
    \end{equation}
    By Lemma~\ref{lem:moments2}, $\E{p(g)} = \iprod{T^*_a,T^*_b}_{\Sigma}$. We can loosely bound $\E{p(g)^2}$ by
    \begin{equation}
        \E{p(g)^2} = \E*{\norm{T^*_a}^2_F \norm{T^*_b}^2_F \norm{x}^{4\omega}_2} \le O(\omega r)^{2\omega} \cdot \radius^4.
    \end{equation}
    Therefore, by Lemma~\ref{lem:hypercontractivity}, given independent draws $g_1,\ldots,g_n\sim\calN(0,\Id)$,
    \begin{equation}
        \Pr*{\abs*{\frac{1}{n}\sum^n_{i=1}p(g_i) - \iprod{T^*_a,T^*_b}_\Sigma} \ge \radius^2 \Omega(\omega r)^{\omega}\log^{\omega}(d/\delta)/\sqrt{n}} \le \delta/d.
    \end{equation}
    So provided that $n \ge \Omega(\omega r)^{2\omega}\radius^4 \log^{2\omega}(d/\delta)/\eta^2$ \eqref{eq:estimate_second_moment_2} holds with probability $1 - \delta$.
\end{proof}

\end{document}